\renewcommand{\chaptermark}[1]{\markboth{#1}{}}
\newcommand{\py}[1]{{\color{brown}\mintinline{python}{#1}}}
\tikzstyle{triangle}=[fill=white, draw=black, regular polygon, regular polygon sides=3, scale=0.5]
\tikzstyle{black node}=[fill=black, draw=black, shape=circle, scale=0.3]
\tikzstyle{white node}=[fill=white, draw=black, shape=circle, scale=0.6]
\tikzstyle{hadamard}=[fill=yellow, draw=black, shape=rectangle, scale=0.5]
\tikzstyle{big white node}=[fill=white, draw=black, shape=circle, scale=0.6]
\tikzstyle{red node}=[fill=red, shape=circle, scale=0.4]
\tikzstyle{big black}=[fill=black, draw=black, shape=circle, scale=1.2]
\tikzstyle{green node}=[fill=green, draw=green, shape=circle, scale=0.4]
\tikzstyle{new style 1}=[fill=white, draw=black, shape=circle]
\tikzstyle{white-box}=[fill=white, draw=black, shape=rectangle, scale=0.4]
\tikzstyle{blue-box}=[fill=bue, draw=black, shape=rectangle, scale=0.4]
\tikzstyle{mid-arrow}=[-]
\tikzstyle{dashs}=[-, dashed, line width=0.15mm]
\tikzstyle{thick}=[-, line width=0.5mm]
\tikzstyle{arrow}=[->]
\tikzstyle{invisible}=[-, draw=none]
\tikzstyle{functor}=[-, fill={rgb,255: red,240; green,240; blue,240}]
\tikzstyle{boxedge}=[-, fill=white]
\tikzstyle{red-edge}=[-, color={red!120}, line width=0.4mm]
\tikzstyle{arrow}=[->]
\tikzstyle{blue-edge}=[-, color={blue!80}, line width=0.4mm]
\tikzstyle{mid-arrow}=[postaction={decorate, decoration={markings, mark=at position 0.5 with {\arrow{>}}}}]
\tikzset{ decoration={
    markings,
    mark=at position 0.5 with {\arrow{>}}}}
\newcommand{\s}{\enspace}
\newcommand{\sub}{\subseteq}
\newcommand{\size}[1]{\left\vert{#1}\right\vert}
\renewcommand{\tt}[1]{\mathtt{#1}}
\renewcommand{\bf}[1]{\mathbf{#1}}
\renewcommand{\cal}[1]{\mathcal{#1}}
\newcommand{\bb}[1]{\mathbb{#1}}
\renewcommand{\phi}{\varphi}
\newcommand{\xto}[1]{\xrightarrow{#1}}
\newcommand{\injects}{\xhookrightarrow{}}
\newcommand{\graph}[2]{#1 \rightrightarrows #2}
\newcommand{\N}{\bb{N}}
\newcommand{\signature}[2]{#2 \xleftarrow{\tt{dom}} #1 \xrightarrow{\tt{cod}} #2}
\newtheorem{definition}{Definition}[section]
\newtheorem{proposition}[definition]{Proposition}
\newtheorem{theorem}[definition]{Theorem}
\newtheorem{conjecture}[definition]{Conjecture}
\newtheorem{lemma}[definition]{Lemma}
\newtheorem{corollary}[definition]{Corollary}
\newtheorem{example}[definition]{Example}
\newtheorem{remark}[definition]{Remark}
\newtheorem{python}[definition]{Listing}
\newcommand{\problemtitle}[1]{\gdef\@problemtitle{#1}}
\newcommand{\probleminput}[1]{\gdef\@probleminput{#1}}
\newcommand{\problemoutput}[1]{\gdef\@problemoutput{#1}}
  \par\addvspace{.5\baselineskip}
  \par\addvspace{.5\baselineskip}
\newcommand\G{\mathcal{G}}
\tikzset{->-/.style={decoration={markings, mark=at position #1 with {\arrow{>}}}, postaction={decorate}},->-/.default=0.5}
\tikzset{game/.style={rectangle, minimum height = #1, minimum width = .8 cm, draw},game/.default=1.2cm}
\tikzset{dot/.style={circle, scale=.5, fill=#1, draw},dot/.default=black}
\tikzset{player/.style={isosceles triangle, isosceles triangle apex angle=90, inner sep=0pt, minimum width=2cm, shape border rotate=#1, draw},player/.default=180}
\newcommand{\diset}[2]{\binom{#1}{#2}}
\newcommand{\game}[6]{#1 \colon  \diset{#2}{#3} \overset{#6} \nrightarrow \diset{#4}{#5}}
\newcommand{\Pf}{\pi}
\newcommand{\Cf}{\kappa}
\newcommand{\Ef}{E}
\newcommand{\amax}{\operatorname*{argmax}}
\newcommand\teacher{\mathcal{T}}
\newcommand\student{\mathcal{S}}
\newcommand\marker{\mathcal{M}}
\newcommand\dist{d}
\title{Categorical Tools for Natural Language Processing}
\author{Giovanni de Felice}             
\begin{document}

\maketitle
\cleardoublepage
\begin{abstract}
This thesis develops the translation between category theory and
computational linguistics as a foundation for natural language processing.
The three chapters deal with syntax, semantics and pragmatics.
First, string diagrams provide a unified model of syntactic structures in formal grammars.
Second, functors compute semantics by turning diagrams into logical, tensor, neural or quantum computation.
Third, the resulting functorial models can be composed to form games where equilibria are the solutions of language processing tasks.
This framework is implemented as part of DisCoPy, the Python library for computing with string diagrams.
We describe the correspondence between categorical, linguistic and computational structures,
and demonstrate their applications in compositional natural language processing.
\end{abstract}

\cleardoublepage

\begin{acknowledgements}
\addcontentsline{toc}{chapter}{Acknowledgements}

I would like to thank my supervisor Bob Coecke for introducing me to the
wonderland of string diagrams, for supporting me throughout my studies and
always encouraging me to switch topics and pursue my ideas.
This thesis is the fruit of a thousand discussions, board sessions,
smokes and beers with Alexis Toumi.
I am grateful to him for having the patience to teach Python to a mathematician,
for his loyal friendship and the continual support he has given me
in both personal matters and research.
I want to thank my examiners Aleks Kissinger and Pawel Sobocinski for their
detailed feedback on the first version of this thesis, and their suggestion to
integrate the passage from categories to Python code.
Thanks also to Andreas Petrossantis, Sebastiano Cultrera and Dimitri Kartsaklis
for valuable comments on this manuscript,
to Dan Marsden and Prakash Panangaden for providing guidance in my early research,
and to Samson Abramsky for prompting me to search into the deeper history of
applied category theory.

Among fellow collaborators who have shared their wisdom and passion,
for many insightful discussions, I would like to thank Amar Hadzihasanovic,
Rui Soares Barbosa, David Reutter, Antonin Delpeuch, Stefano Gogioso, Konstantinos Meichanetzidis, Mario Roman, Elena Di Lavore and Richie Yeung.
Among my friends, who have been there for me in times of sadness and of joy,
and made me feel at home in Oxford, Rome and Sicily,
special thanks to Tommaso Salvatori, Pinar Kolancali, Tommaso Battistini,
Emanuele Torlonia, Benedetta Magnano and Pietro Scammacca.
Finally, a very special thanks to Nonna Miti for hosting me in her garden
and to my mother and father for their loving support.

\end{acknowledgements}

\tableofcontents
\chapter*{Introduction}
\markboth{Introduction}{}
\addcontentsline{toc}{chapter}{Introduction}

Since the very beginnings of human inquiry into language, people have investigated the natural processes by which we learn, understand and produce linguistic meaning.
Only recently, however, the field of linguistics has become an autonomous
scientific discipline.
The origins of this modern science are closely interlinked with
the birth of mathematical logic at the end of the nineteenth century.
In the United States, Peirce founded ``semiotics'' --- the science of signs and their interpretation --- while developing graphical calculi for logical inference.
At around the same time, in the United Kingdom, Frege and Russell developed formal languages for logic in the search for a Leibnizian ``characteristica universalis''
while discussing the sense and reference of linguistic phrases.

These mathematical origins initiated a formal and computational
approach to linguistics, often referred to as the symbolic tradition,
which aims at characterising language understanding in terms of structured logical
processes and the automatic manipulation  of symbols.
On the one hand, it led to the development of mathematical theories of syntax,
such as the categorial grammars stemming from the Polish school of logic
\cite{ajdukiewiz1935, lambek1958} and Chomsky’s influential generative grammars \cite{Chomsky57}.
On the other hand, it allowed for the development of formal approaches to semantics such as Tarski's theory of truth \cite{tarski1936, tarski1943}, which
motivated the work of Davidson \cite{davidson1967a} and Montague \cite{montague1970}
in extracting the logical form of natural language sentences.
From the technological perspective, these theories enabled the design of programming
languages, the construction of large-scale databases for storing structured knowledge
and linguistic data, as well as the implementation of expert computer systems
driven by formal logical rules to reason about this accrued knowledge.
Since the 1990s, the symbolic tradition has been challenged by a series of
new advances motivated by the importance of context and ambiguity in language
use \cite{barwise1983}.
With the growing amount of data and large-scale corpora available on the internet,
statistical inference methods based on $n$-grams, Markov models or Bayesian
classifiers allowed for experiments to tackle new problems such as speech
recognition and machine translation \cite{manning1999}.
The distributional representation of meaning in vector spaces \cite{sparckjones1997} was found suitable for disambiguating words in context \cite{schutze1998} and computing synonymity \cite{turney2010}.
Furthermore, connectionist models based on neural networks have produced impressive results in the last couple of decades, outperforming previous models
on a range of tasks such as language modelling \cite{bengio2003, mikolov2010, vaswani2017}, word sense disambiguation \cite{navigli2009},
sentiment analysis \cite{socher2013a, chan2022}, question answering \cite{jurafsky2008, laskar2020} and machine translation \cite{bahdanau2014, edunov2018}.

Driven by large-scale industrial applications, the focus gradually shifted
from theoretical enquiries into linguistic phenomena to the practical concern
of building highly parallelizable connectionist code for beating
state-of-the-art algorithms. Recently, a transformer neural network with billions
of parameters (GPT-3) \cite{brown2020} wrote a Guardian article on why humans have nothing to fear from AI. The reasons for how and why GPT-3 ``chose'' to
compose the text in the way that it did is a mystery and the structure of
its mechanism remains a ``black box''.
Connectionist models have shown the importance of the distributional aspect of language and the effectiveness of machine learning techniques in NLP.
However, their task-specificity and the difficulty in analysing the underlying processes which concur in their output are limits which need to be addressed.
Recent developments in machine learning have shown the importance of taking structure into account when tackling scientific questions in network science \cite{wu2020}, chemistry \cite{kearnes2016}, biology \cite{jumper2021, zhou2021}.  NLP would also benefit from the same grounding in order to analyse and interpret the growing ``library of Babel'' of natural language data.

Category theory can help build models of language amenable to both linguistic reasoning and numerical computation.
Its roots are the same as computational linguistics, as categories were used to
link algebra, logic and computation \cite{Lawvere63, lambek1986a, lambek1986}.
Category theory has since followed a solid thread of applications, from the semantics of programming languages \cite{saraswat1991, abramsky1995} to the modelling of a wide range of computational systems, including knowledge-based \cite{spivak2012}, quantum \cite{abramsky2007, coecke2017a}, dataflow \cite{bonchi2014a},
statistical \cite{vakarmatthijs2019} and differentiable \cite{abadimartin2019} processes. In the Compositional Distributional models of Coecke et al. \cite{DisCoCat08,DisCoCat11, sadrzadeh2013} (DisCoCat), categories are used to design models of language in which the meaning of sentences is derived by composition from the distributional embeddings of words.
Generalising from this work, language can be viewed as a syntax for arranging symbols together with a functor for interpreting them.
Specifically, syntactic structures form a free category of string diagrams, while meaning is computed in categories of numerical functions. Functorial models can then be learnt in data-driven tasks.

The aim of this thesis is to provide a unified framework of mathematical tools
to be applied in three important areas of computational linguistics: syntax, semantics and pragmatics.
We provide an implementation of this framework in object-oriented Python,
by translating categorical concepts into classes and methods.
This translation has lead to the development of DisCoPy \cite{defelice2020b},
an open-source Python toolbox for computing with string diagrams and functors.
We show the potential of this framework for reasoning about compositional models of language and building structured NLP pipelines.
We show the correspondence between categorical and linguistic notions and we
describe their implementation as methods and interfaces in DisCoPy.
The library is available, with an extensive documentation and testing suite, at:
\begin{center}
    \url{https://github.com/oxford-quantum-group/discopy}
\end{center}

In Chapter 1, on syntax, we use the theory of free categories and
string diagrams to formalise Chomsky's regular, context-free and unrestricted
grammars \cite{chomsky1956}. With the same tools, the categorial grammars of Ajdiuciewicz \cite{ajdukiewiz1935}, Lambek \cite{lambek1958} and Steedman \cite{steedman2000}, as well as Lambek's pregroups \cite{lambek1999a} and Tesniere's dependency grammars \cite{tesniere1959, gaifman1965}, are formalised.
We lay out the architecture of the syntactic modules of DisCoPy, with interfaces
for the corresponding formal models of grammar and functorial reductions
between them.
The second chapter deals with semantics. We use Lawvere's concept of functorial semantics \cite{Lawvere63} to define several NLP models, including logical, distributional and connectionist approaches. By varying the target semantic category, we recover knowledge-based, tensor-based and quantum models of language,
as well as Montague's logical semantics \cite{montague1970} and connectionist models based on neural networks. The implementation of these models in Python is obtained by defining semantic classes that carry out concrete computation. We describe the implementation of the main semantic modules of DisCoPy and their interface with high-performance libraries for numerical computation.
This framework is then applied to the study of the pragmatic aspects of language use
in context and the design of NLP tasks in Chapter 3. To this end, we use the
recent applications of category theory to statistics \cite{cho2019, fritz2020},
machine learning \cite{fong2019e, cruttwell2021} and game theory \cite{GhaniHedges18}
to develop a formal language for modelling compositions of NLP models into games
and pragmatic tasks.

The mathematical framework developed in this thesis provides a structural
understanding of natural language processing, allowing for both interpreting
existing NLP models and building new ones.
The proposed software is expected to contribute to the design of language
processing systems and their implementation using symbolic, statistical,
connectionist and quantum computing.

\addcontentsline{toc}{section}{Contributions}
\section*{Contributions}

The aim of this thesis is to provide a framework of mathematical tools for computational linguistics. The three chapters are related to syntax, semantics and pragmatics, respectively.

In Chapter 1, a unified theory of formal grammars is provided in terms of free categories and functorial reductions and implemented in object-oriented Python. This work started from discussions with Alexis Toumi, Bob Coecke, Mernoosh Sadrzadeh, Dan Marsden and Konstantinos Meichanetzidis about logical and distributional models of
natural language \cite{coecke2018c, defelice2020, coecke2022}. It was a driving force in the development of DisCoPy \cite{defelice2020b}.
The main contributions are as follows.

\begin{enumerate}
    \item A unified treatment of \emph{formal grammars} is provided
    using the theory of string diagrams in free monoidal categories. We cover
    Chomsky's regular \ref{section-path}, context-free \ref{section-trees} and
    unrestricted grammar \ref{section-diagrams}, corresponding
    to free categories, free operads and free monoidal categories respectively.
    This picture is obtained by aggregating the results of Walters and Lambek \cite{walters1989a, walters1989b, lambek1988, shiebler2020}.
    Using the same tools, we formalise categorial grammars \ref{sec-closed},
    as well as pregroups and dependency grammars \ref{sec-compact} in terms of
    biclosed and rigid categories.
    The links between pregroups and rigid categories are known since  \cite{preller2007b}, those between categorial grammar and biclosed categories were previously discussed by Lambek \cite{lambek1988} but not fully worked out, while the categorical formalisation of dependency grammar is novel.
    We also introduce the notion of pregroup with coreference for discourse
    representation in \ref{sec-hypergraph} \cite{coecke2018c} to offer an alternative to the DisCoCirc framework of Coecke \cite{coecke2020b} which can be implemented with readily available tools.
    To the best of our knowledge, this is the first time that models from the Chomskyan and categorial traditions appear in the same framework, and that the full correspondence between linguistic and categorical notions is spelled out.
    \item \emph{Functorial reductions} between formal grammars are introduced as a convenient intermediate notion between weak and strong equivalences, and used to compute normal forms of context-free grammars in \ref{section-reductions}. We use this notion in the remainder of the chapter to capture the relationship between:
    i) CFGs and categorial grammar (Propositions \ref{prop-ab-cfg} and \label{prop-cfg-ab}),
    ii) categorial grammar and biclosed categories (Propositions \ref{prop-ab-biclosed}, \ref{prop-lambek-biclosed} and \ref{prop-crossed-biclosed}),
    iii) categorial and pregroup grammars (Propositions \ref{prop-categorial-to-pregroup} and \ref{prop-crossed-pregroup}) and
    iv) pregroups, dependency grammars and CFGs in \ref{section-dependency}.
    The latter yields a novel result showing that dependency grammars are the structural intersection of pregroups and CFGs (Theorem \ref{theorem-dependencies}).
    \item The previously introduced categorical definitions are implemented in
    \emph{object-oriented Python}.
    The structure of this chapter follows the architecture of the syntactic modules of DisCoPy, as described at the end of this section.
    Free categories and syntactic structures are implemented by subclassing \py{cat.Arrow} or \py{monoidal.Diagram}, the core data structures of DisCoPy.
    Functorial reductions are implemented by calling the corresponding \py{Functor} class. We interface DisCoPy with linguistic tools for
    large-scale parsing.
\end{enumerate}

In Chapter 2, functorial semantics is applied to the study of natural language
processing models. Once casted in this algebraic framework, it becomes possible
to prove complexity results and compare different NLP pipelines, while also implementing these models in DisCoPy.
Section \ref{sec-rel-model} is based on joint work with Alexis Toumi and Konstantinos Meichanetzidis on relational semantics \cite{defelice2020}.
Sections \ref{sec-tensor-network}, \ref{sec-discopy} and \ref{sec-quantum-model} are based on the recent quantum models for NLP introduced with Bob Coecke, Alexis Toumi and Konstantinos Meichanetzidis
\cite{meichanetzidis2020, meichanetzidis2020a} and further developed in \cite{kartsaklis2021, toumi2022}.
We list the main contributions of this chapter.

\begin{enumerate}
    \item \emph{NLP models} are given a unified theory, formalised as functors
    from free syntactic categories to concrete categories of numerical structures. These include i) knowledge-based relational models \ref{sec-rel-model}
    casted as functors into the category of relations, ii)
    tensor network models \ref{sec-tensor-network} seen as functors into the category of matrices and including factorisation models for knowledge-graph embedding covered in \ref{sec-embedding} iii) quantum NLP models \ref{sec-quantum-model} casted as functors into the category of quantum circuits, iv)
    Montague semantics \ref{sec-functional} given by functors into cartesian closed categories, and v) recurrent and recursive neural network models
    \ref{sec-neural-net} which appear as functors from grammar to the category of functions on euclidean spaces.
    \item We prove \emph{complexity results} on the evaluation of these functorial models and related NLP tasks.
    Expanding on \cite{defelice2020}, we use relational models to
    define $\tt{NP}$-complete entailment and question-answering problems
    (Propositions \ref{prop-rel-np}, \ref{prop-entailment} and \ref{prop-qa}). We show that the evaluation of tensor network models is in general $\#\tt{P}$-complete (Proposition \ref{prop-tensor-semantics}) but that it becomes tractable when the input structures come from a dependency grammar
    (Proposition \ref{prop-tensor-dep}).
    We show that the additive approximation of quantum NLP models is a
    $\tt{BQP}$-complete problem (Proposition \ref{prop-bqp}).
    We also prove results showing that Montague semantics is intractable in its general form (Propositions \ref{prop-elementary}, \ref{prop-pspace}).
    \item \emph{Montague semantics} is given a detailed formalisation in terms of
    free cartesian closed categories. This corrects a common misconception in the
    DisCoCat literature, and in particular in \cite{sadrzadeh2013, sadrzadeh2014},
    where Montague semantics is seen as a functor from pregroup grammars to relations.
    These logical models are studied in \ref{sec-rel-model}, but they are distinct from Montague semantics where the lambda calculus and higher-order types
    play an important role \ref{sec-functional}.
    \item We show how to implement \emph{functorial semantics in DisCoPy}.
    More precisely, the implementation of the categories of tensors and Python functions is described in \ref{sec-concrete}.
    We then give a concrete example of how to solve a knowledge-embedding task in DisCoPy by learning functors \ref{sec-discopy}.
    We define currying and uncurrying of Python functions \ref{sec-functional}
    and use it to give a proof-of-concept implementation of Montague semantics.
    We define sequential and parallel composition of Tensorflow/Keras models \cite{chollet2015keras}, allowing us to construct recursive neural networks with a DisCoPy functor \ref{sec-neural-net}.
\end{enumerate}

In Chapter 3, we develop formal diagrammatic tools to model pragmatic scenarios
and natural language processing tasks. This Chapter is based on our work
with Mario Roman, Elena Di Lavore and Alexis Toumi \cite{defelice2020a}, and provides a basis for the formalisation of monoidal streams in the stochastic setting \cite{dilavore2022}.
The contribution for this section is still at a preliminary stage, but the diagrammatic notation succeeds in capturing and generalising a range of approaches found in the literature as follows.

\begin{enumerate}
    \item \emph{Categorical probability} \cite{fritz2020} is applied to the study of discriminative and generative language models using notions from Bayesian statistics in \ref{section-tasks}. We investigate the category of \emph{lenses} \cite{riley2018a} over discrete probability distributions in
    \ref{section-tools} and use it to characterise notions of context, utility and reward for iterated stochastic processes \cite{dilavore2022} (see Propositions \ref{prop-lenses-combs} and \ref{prop-contexts-factor}).
    Finally, we apply open games \cite{bolt2019} the study of Markov decision processes and repeated games in \ref{section-agents}, while giving examples relevant to NLP.
    \item Three \emph{NLP applications} of the developed tools are provided in \ref{section-examples}:
    i) we discuss reference games \cite{frank2012, monroe2015, bergen2016} and give a diagrammatic proof that Bayesian inversion yields a Nash equilibrium (Proposition \ref{prop-nash-equilibrium}),
    ii) we define a question answering game between a teacher and a student and compute the Nash equilibria when the student's strategies are given by
    relational models \cite{defelice2020a} and
    iii) we give a compositional approach to word-sense disambiguation as a game
    between words where strategies are word-senses.
\end{enumerate}

The DisCoPy implementation, carried out with Alexis Toumi \cite{defelice2020b}, is described throughout the first two chapters of this thesis. We focused on
i) the passage from categorical definitions to object-oriented Python and
ii) the applications of DisCoPy to Natural Language Processing.
Every section of Chapter 1 corresponds to a \emph{syntactic module} in DisCoPy, as we detail.

\begin{enumerate}
    \item In \ref{section-path},  we view derivations of \emph{regular} grammars
    as arrows in the free category and show how these notions are implemented via the core DisCoPy class \py{cat.Arrow}.
    \item In \ref{section-trees}, we study \emph{context-free} grammars in terms
    of trees in the free operad and give an implementation of free operads and
    their algebras in DisCoPy. This is a new \py{operad} module which has been written for this thesis and features interfaces with NLTK \cite{loper2002}
    for CFG and SpaCy \cite{spacy} for dependencies.
    \item In \ref{section-diagrams}, we formalise Chomsky's \emph{unrestricted}
    grammars in terms of monoidal signatures and string diagrams and
    describe the implementation of the key DisCoPy class \py{monoidal.Diagram}.
    \item In \ref{sec-closed}, we formalise \emph{categorial} grammar in terms of free biclosed categories.
    We give an implementation of \py{biclosed.Diagram} as a monoidal diagram with
    \py{curry} and \py{uncurry} methods and show its interface with
    state-of-the-art categorial grammar parsers, as provided by Lambeq \cite{kartsaklis2021}.
    \item In \ref{sec-compact}, we show that \emph{pregroup} and \emph{dependency} structures are diagrams in free rigid categories.
    We describe the data structure \py{rigid.Diagram} and its interface with SpaCy \cite{spacy} for dependency parsing.
    \item In \ref{sec-hypergraph}, we introduce a notion
    of pregroup grammar with \emph{coreference} using hypergraphs to represent the syntax of text and discourse. We describe the \py{hypergraph.Diagram} data structure and show how to interface it with SpaCy's package for neural coreference.
\end{enumerate}

The models studied in Chapter 2 can all be implemented using one of the four \emph{semantic modules} of DisCoPy which we detail.

\begin{enumerate}
    \item The \py{tensor} module of DisCoPy implements the category of matrices, as described in \ref{sec-concrete} where we give its implementation in NumPy \cite{harris2020array}.
    We use it in conjunction with Jax \cite{jax2018github} in \ref{sec-discopy}
    to implement the models introduced in \ref{sec-rel-model} and \ref{sec-tensor-network}.
    \item The \py{quantum} module of DisCoPy implements quantum circuits \ref{sec-quantum-model} and supports interfaces with PyZX \cite{kissinger2019} and TKET \cite{sivarajah2020} for optimisation and compilation on quantum hardware. These features are described in our recent work \cite{toumi2022}.
    \item The \py{function} module of DisCoPy implements the category of functions on Python types, as described in \ref{sec-concrete}. We define currying and uncurrying of Python functions \ref{sec-functional}
    and use it to give a proof-of-concept implementation of Montague semantics.
    \item The \py{neural} module implements the category of neural networks on euclidean spaces. We describe it in \ref{sec-neural-net} as
    an interface between DisCoPy and TensorFlow/Keras \cite{chollet2015keras}.
\end{enumerate}

A schematic view of the modules in DisCoPy and their interfaces is summarized in Figure \ref{architecture}.

\begin{figure}
    \centering
    \scalebox{0.9}{\input{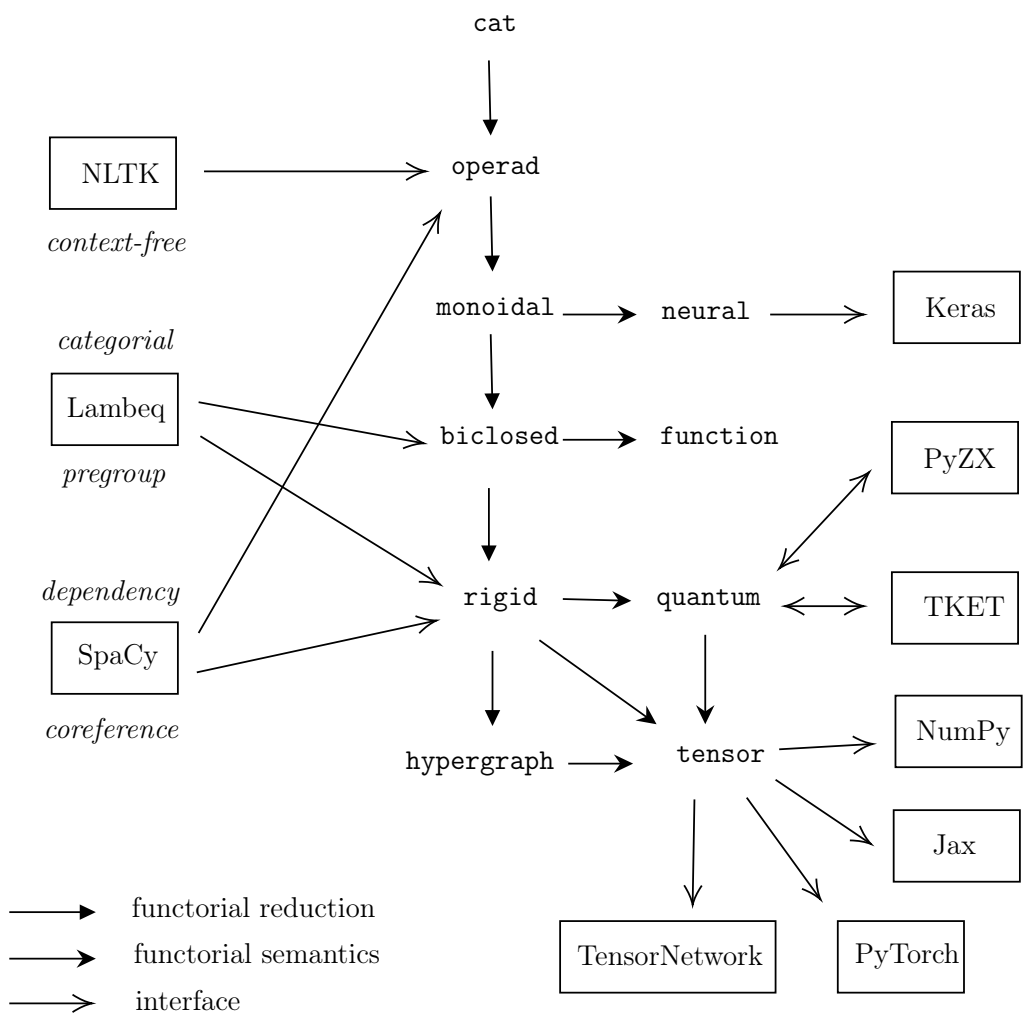}}
    \caption{DisCoPy: an interfaced compositional software for NLP}
    \label{architecture}
\end{figure}


\chapter{Diagrams for Syntax}
\pagestyle{fancy}
\renewcommand{\chaptermark}[1]{\markboth{#1}{}}
\renewcommand{\sectionmark}[1]{\extramarks{\thesection}{#1}}

The word ``grammar'' comes from the Greek
$\gamma \rho \acute{\alpha} \mu \mu \alpha$ (gramma), itself from
$\gamma\rho\acute{\alpha}\phi\epsilon\iota\nu$
(graphein) meaning both ``to write'' and ``to draw'', and we will represent
grammatical structure by drawing diagrams. A \emph{formal grammar} is usually defined
by a set of \emph{rewriting rules} on strings. The rewriting process, also
called \emph{parsing}, yields a  procedure for deciding whether a string of
words is grammatical or not.

These structures were studied in mathematics since the 1910s by Thue and later
by Post \cite{post1947} and Markov Jr. \cite{kushner2006}.
Their investigation was greatly advanced by Chomsky
\cite{Chomsky57}, who used them to \emph{generate} grammatical sentences from
some basic rewrite rules interpreted as \emph{productions}.
He showed that natural restrictions on the allowed production rules form a hierarchy,
from unrestricted to regular grammars, which corresponds to models of
computation of varying strengths, from Turing machines to deterministic finite
state automata \cite{chomsky1956}.
In parallel to Chomsky's seminal work, Lambek developed his syntactic calculus
\cite{lambek1958}, refining and unifying the \emph{categorial grammars} originated
in the Polish school of logic \cite{ajdukiewiz1935}.
These are different in spirit from Chomsky's grammars, but they also have tight
links with computation as captured by the Lambda calculus \cite{vanbenthem1987, steedman2000}.

In this chapter, we lay out the correspondence between free categorical structures
and linguistic models of grammar. Every level in this hierarchy is implemented with a corresponding class in DisCoPy.
In \ref{section-path},  we show that regular grammars
can be seen as graphs with a labelling homomorphism and their derivations as
arrows in the free category, implemented via the core DisCoPy class \py{cat.Arrow}.
In \ref{section-trees}, we show that context-free
grammars are operadic signatures and their derivations trees in the
free operad. We give an implementation of free operads as a class \py{operad.Tree},
interfaced with NLTK \cite{loper2002} for context-free parsing.
In \ref{section-diagrams},  we arrive at Chomsky's unrestricted
grammars, captured by monoidal signatures and string diagrams.
We discuss varying notions of reduction and normal form for these grammars,
and show the implementation of the key DisCoPy class \py{monoidal.Diagram}.
In \ref{sec-closed}, we show that categorial grammars such as the original
grammars of Ajdiuciewicz and Bar-Hillel, the Lambek calculus and
Combinatory Categorial Grammars (CCGs) can be seen as biclosed signatures and
their grammatical reductions as morphisms in free biclosed categories.
We give an implementation of \py{biclosed.Diagram} as a monoidal diagram with
\py{curry} and \py{uncurry} methods and show its interface with
state-of-the-art categorial grammar parsers, as provided by Lambeq \cite{kartsaklis2021}.
In \ref{sec-compact}, we show that pregroups and dependency grammars are both
captured by rigid signatures, and their derivations by morphisms in the free
rigid category. This leads to the data structure \py{rigid.Diagram} which we
interface with SpaCy \cite{spacy} for state-of-the-art dependency parsing.
Finally, in \ref{sec-hypergraph}, we introduce a notion
of pregroup grammar with coreference using hypergraphs to represent the syntax
of text and discourse, and give a proof-of-concept implementation in DisCoPy.

\section{Arrows}\label{section-path} 

In this section, we introduce three structures: categories, regular
grammars and \py{cat.Arrow}s. These cast light on a level-zero correspondence
between algebra, linguistics and Python programming.
We start by defining categories and their free construction from graphs.
Following Walters \cite{walters1989b}, regular grammars are defined as graphs
together with a labelling homomorphism and their grammatical sentences as labelled
paths, i.e. arrows of the corresponding free category.
This definition is very similar to the definition of a finite
state automaton, and we discuss the equivalence between Walters' notion,
Chomsky's original definition and finite automata.
We end by introducing the \py{cat} module, an implementation of free categories and
functors which forms the core of DisCoPy.


\subsection{Categories}

\begin{definition}[Simple signature / Directed graph]
    A simple signature, or directed graph, $G$ is a collection of vertices
    $G_0$ and edges $G_1$
    such that each edge has a domain and a codomain vertex
    $$\signature{G_1}{G_0}$$.
    A graph homomorphism $\phi: G \to \Gamma$ is a pair of functions
    $\phi_0 : G_0 \to \Gamma_0$ and $\phi_1: G_1 \to \Gamma_1$ such that the
    following diagram commutes:
    \begin{equation*}
        \begin{tikzcd}
        G_0 \arrow[d, "\phi_0"] & G_1 \arrow[l, "\tt{dom}"'] \arrow[r, "\tt{cod}"] \arrow[d, "\phi_1"] & G_0 \arrow[d, "\phi_0"]\\
        \Gamma_0 & \Gamma_1 \arrow[l, "\tt{dom}"] \arrow[r, "\tt{cod}"'] & \Gamma_0
        \end{tikzcd}
    \end{equation*}
    We denote by $G(a, b)$ the edges $f \in G_1$ such that
    $\tt{dom}(f) = a$ and $\tt{cod}(f) = b$.
    We also write $f: a \to b$ to denote an edge $f \in G(a, b)$.
\end{definition}

A category is a directed graph with a composition operation, in this context
vertices are called \emph{objects} and edges are called \emph{arrows} or
\emph{morphisms}.

\begin{definition}[Category]
    A category $\bf{C}$ is a graph $\bf{C}_1 \rightrightarrows \bf{C}_0$,
    where $\bf{C}_0$ is a set of \emph{objects}, and $\bf{C}_1$ a set of
    \emph{morphisms}, equipped with a composition operation
    $\cdot : \bf{C}(a, b) \times \bf{C}(b, c) \to \bf{C}(a, c)$
    defined for any $a, b, c \in \bf{C}_0$ such that:
    \begin{enumerate}
        \item for any $a \in \bf{C}_0$ there is an identity morphism
            $\tt{id}_a \in \bf{C}(a, a)$ (identities).
        \item for any $f: a \to b$,
        $f \cdot \tt{id}_a = f = \tt{id}_b \cdot f$ (unit law).
        \item whenever $a \xto{f} b \xto{g} c \xto{h} d$,
            we have $f \cdot (g \cdot h) = (f \cdot g) \cdot h$ (associativity).
    \end{enumerate}

    A \emph{functor} $F: \bf{C} \to \bf{D}$ is a graph homomorphism which respects
    composition and identities, i.e. for any $a \in \bf{C}_0$ $F(\tt{id}_a) = \tt{id}_{F(a)}$ and whenever $a \xto{g} b \xto{f} c$ in $\bf{C}$ we have $F(f \cdot g) = F(f) \cdot F(g)$.

    Given a pair of functors $F, G: \bf{C} \to \bf{D}$, a \emph{natural
    transformation} $\alpha: F \to G$ is a family of maps
    $\alpha_a : F(a) \to G(a)$ such that the following diagram commutes:
    \begin{equation}\label{eq-graph-homomorphism}
        \begin{tikzcd}
        F(a) \arrow[d, "F(f)"] \arrow[r, "\alpha_a"] & G(a) \arrow[d, "G(f)"]\\
        F(b) \arrow[r, "\alpha_b"] & G(b)
        \end{tikzcd}
    \end{equation}
    for any $f: a \to b$ in $\bf{C}$.
\end{definition}
\begin{remark}
    The symbol $\to$ appeared remarkably late in the history of symbols with
    the earliest use registered in Bernard Forest de Belidor's 1737
    \emph{L'architecture hydraulique}, where it is used to denote the direction
    of a flow of water. Arguably, it conveys more structured information then
    its predecessor: the Medieval manicule symbol. Its current mathematical
    use as the type of a morphism $f: x \to y $ appeared only at the beginning
    of the 20th century, the first extensive use being registered in Hausdorff
    \cite{hausdorff1935} to denote group homomorphisms.
\end{remark}
\begin{example}[Basic]
    Sets and functions form a category $\bf{Set}$.
    Monoids and monoid homomorphisms for a category $\bf{Mon}$.
    Graphs and graph homomorphisms form a category $\bf{Graph}$.
    Categories and functors form a category $\bf{Cat}$.
\end{example}

An arrow $f$ in a graph $G$ is a sequence of edges $f \in G_1^\ast$ such that
$\tt{cod}(f_{i}) = \tt{dom}(f_{i + 1})$, it can be represented graphically as
a sequence of arrows:
\begin{equation*}
    a_0 \xto{f} a_n = a_0 \xto{f_1} a_1 \dots \xto{f_n} a_n
\end{equation*}
Or as a sequence of vertices and edges:
\begin{equation*}
    \tikzfig{paths-as-nodes-edges}
\end{equation*}
Or as a sequence of boxes and wires:
\begin{equation*}
    \tikzfig{paths-as-boxes}
\end{equation*}
Two arrows with matching endpoints can be composed by concatenation.
\begin{equation*}
    a \xto{f} b \xto{g} c = a \xto{f \cdot g} c
\end{equation*}
Paths on $G$ in fact form a category, denoted $\bf{C}(G)$.
$\bf{C}(G)$ has the property of being the \emph{free category}
generated by $G$ \cite{MacLane71}.

The free category construction is the object part of functor
$\bf{C} : \bf{Graph} \to \bf{Cat}$,
which associates to any graph homomorphism $\phi : G \to V$ a functor
$\bf{C}(\phi): \bf{C}(G) \to \bf{C}(V)$ which relabels the vertices and
edges in an arrow. This free construction $\bf{C}$ is the \emph{left adjoint} of
the functor $U: \bf{Cat} \to \bf{Graph}$ which forgets the composition operation
on arrows. $\bf{C}$ is a left adjoint of $U$ in the sense that there is a natural
isomorphism:
$$\bf{Graph}(G, U(\bf{S})) \simeq \bf{Cat}(\bf{C}(G), \bf{S})$$
which says that specifying a functor $F: \bf{C}(G) \to \bf{S}$ is the same
as specifying an arrow in $\bf{S}$ for every generator in $G$. This will have important
consequences in the context of semantics.

A preorder $P$ is a category with at most one morphism between any two objects.
Given $a, b \in P_0$, the hom-set $P(a, b)$ is either the singleton
or empty, we write $a P b$ for the corresponding boolean value.
Identities and composition of the category, correspond to reflexivity and
transitivity of the preorder.
Following Lambek \cite{lambek1968} and \cite{strassurger2007}, we can interpret
a preorder as a \emph{logic}, by considering the underlying set as a set
of \emph{formulae} and the relation $\leq$ as a \emph{consequence} relation,
which is usually denoted $\vdash$ (entails).
Reflexivity and transitivity of the consequence relation
correspond to the following familiar rules of inference.

\begin{equation} \label{eq-reflexivity-and-transitivity}
    \AxiomC{}
    \RightLabel{$\qquad\qquad$}
    \UnaryInfC{$A \vdash A$}
    \DisplayProof
    \AxiomC{$A \vdash B$}
    \AxiomC{$B \vdash C$}
    \BinaryInfC{$A \vdash C$}
    \DisplayProof
\end{equation}

Given a graph $G$ we can build a preorder by taking the
reflexive transitive closure of the relation $G \sub G_0 \times G_0$ induced by the graph, $\leq = RTC(G) \sub G_0 \times G_0$.
We can think of the edges of the graph $G$ as a set of \emph{axioms}, then the
free preorder $RTC(G)$ captures the logical consequences of these axioms,
and, in this simple setting, we have that $a \implies b$ if and only if there
is an arrow from $a$ to $b$ in $G$.

This construction is analogous to the
free category construction on a graph, and is in fact part of a commuting
triangle of adjunctions relating topology, algebra and logic.
\begin{equation*}
    \begin{tikzcd}
     & \bf{Cat} \arrow[dl, "U"', bend right=25] \arrow[dr, "\exists"'] & \\
    \bf{Graph} \arrow[rr, "RTC"] \arrow[ur, "\bf{C}"'] & &
    \bf{Preord} \arrow[ul, "I"', bend right=25] \arrow[ll, "U", bend left=15]
    \end{tikzcd}
\end{equation*}
Where $\exists \bf{C}$ is the \emph{preorder collapse} of $\bf{C}$, i.e.
$a \exists \bf{C} b$ if and only if $\exists f \in \bf{C}(a, b)$.
The functor $RTC$ allows to define the following \emph{decision problem}.

\begin{definition}
    \begin{problem}
      \problemtitle{$\exists\tt{Path}$}
      \probleminput{$\graph{G}{B}$, $a, b \in N$}
      \problemoutput{$a \; RTC(G) \; b$}
    \end{problem}
\end{definition}

The problem $\exists\tt{Path}$ is also known as the graph accessibility
problem, which is complete for $\tt{NL}$ the complexity class of problems
solvable by a non-deterministic Turing machine in logarithmic space \cite{immerman1987}.
In particular, it is equivalent to $\tt{2SAT}$, the problem of satisfying a
formula in conjunctive normal form where each clause has two literals.
The reduction from $\tt{2SAT}$ to $\exists\tt{Path}$ is obtained by turning the
formula into its graph of implications.

The free construction $\bf{C}$ allows to define a \emph{proof-relevant} form of
the path problem, where what matters is not only whether $a$ entails $b$
but the way in which $a$ entails $b$.
\begin{definition}
    \begin{problem}
      \problemtitle{$\tt{Path}$}
      \probleminput{$G$, $a, b \in G_0$}
      \problemoutput{$f \in \bf{C}(G)(a, b)$}
    \end{problem}
\end{definition}
From these considerations we deduce that $\tt{Path} \in \tt{FNL}$, since
it is the \emph{function problem} corresponding to $\exists\tt{Path}$.
These problems correspond to parsing problems for regular grammars.

\subsection{Regular grammars}

We now show how graphs and categories formalise the notion of regular grammar.
Fix a finite set of words $V$, called the \emph{vocabulary}, and let us
use $V$ to label the edges in a graph $G$. The data for such a \emph{labelling}
is a function $L: G_1 \to V$ from edges to words, or equivalently
a graph homomorphism $L: G \to V$ where $V$ is seen as a graph
with one vertex and words as edges.
Fix a starting vertex $s_0$ and a terminal vertex $s_1$.
Given any arrow $f: s_0 \to s_1 \in \bf{C}(G)$, we can concatenate the labels
for each generator to obtain a string $L^\ast(f) \in \bf{C}(V) = V^\ast$
where $L^\ast = \bf{C}(L)$ is the function $L$ applied point-wise to arrows.
We say that a string $u \in V^\ast$ is \emph{grammatical} in $G$ whenever there
is an arrow $f: s_0 \to s_1$ in $G$ such that $L(f) = u$. We can think of the
arrow $f$ as a witness of the grammaticality of $u$, called a \emph{proof}
in logic and a \emph{derivation} in the context of formal grammars.

\begin{definition}[Regular grammar]
    A regular grammar is a finite graph $G$ equipped with a homomorphism
    $L: G \to V$, where $V$ is a set of words called the \emph{vocabulary},
    and two specified symbols $s_0, s_1 \in G_0$, the bottom (starting) and
    top (terminating) symbols.
    Explicitly it is given by three functions:
    \begin{equation*}
        \begin{tikzcd}
        G_0 & G_1 \arrow[l, "\tt{dom}"'] \arrow[r, "\tt{cod}"] \arrow[d, "L"] & G_0\\
         & V &
        \end{tikzcd}
    \end{equation*}
    The language generated by $G$ is given by the image of the labelling functor:
    $$\cal{L}(G) = L^\ast (\bf{C}(G)(s_0, s_1)) \sub V^\ast \, .$$
    A morphism of regular grammars $\phi: G \to H$ is a graph homomorphism
    such that the following triangle commutes:
    \begin{equation*}
        \begin{tikzcd}
        G \arrow[dr, "L"] \arrow[rr, "\phi"] &  & H \arrow[dl, "L"]\\
         & V &
        \end{tikzcd}
    \end{equation*}
    and such that $\phi(s_1) = s_1'$, $\phi(s_0) = s_0'$.
    These form a category of regular grammars $\bf{Reg}$ which is the
    slice or \emph{comma} category of the coslice over the points $\set{s_0, s_1}$
    of the category of signatures $\bf{Reg} = (2 \backslash \bf{Graph}) / V$.
\end{definition}

\begin{definition}[Regular language]
    A regular language is a subset of $X \sub V^\ast$ such that there is a regular grammar
    $G$ with $\cal{L}(G) = X$.
\end{definition}


\begin{example}[SVO]
    Consider the regular grammar generated by the following graph:
    \begin{equation*}
        \tikzfig{figures/family-relations}
    \end{equation*}
    An example of sentence in the language $\cal{L}(G)$ is ``A met B who met C.''.
\end{example}

\begin{example}[Commuting diagrams]
    Commuting diagrams such as \ref{eq-graph-homomorphism} can be understood using
    regular grammars. Indeed, a commuting diagram is a graph $G$ together with
    a labelling of each edge as a morphism in a category $\bf{C}$. Given a pair
    of vertices $x, y \in G_0$, we get a regular language $\cal{L}(G)$ given by
    arrows from $x$ to $y$ in $G$. Saying that the diagram $G$ commutes
    corresponds to the assertion that all strings in $\cal{L}(G)$ are equal as
    morphisms in $\bf{C}$.
\end{example}

We now translate from the original definition by Chomsky to the one above.
Recall that a regular grammar is a tuple $G = (N, V, P, s)$
where $N$ is a set of non-terminal symbols with a specified start symbol
$s \in N$, $V$ is a vocabulary and $P$ is a set of production rules
of the form $A \to aB$ or $A \to a$ or $A \to \epsilon$ where $a \in V$,
$A, B \in N$ and $\epsilon$ denotes the empty string.
We can think of the sentences generated by $G$ as arrows in a free category
as follows. Construct the graph $\Sigma = \graph{P}{(N + \set{s_1})}$
where for any production rule in $P$ of the form $A \to aB$ there is an edge
$A \xto{f} B$ with $L(f) = a$ and for any production rule $A \to a$ there is an edge
$A \xto{w} s_1$ with $L(w) = a$. The language generated by $G$ is the
image under $L$ of the set of labelled paths
$s \to s_1$ in $\Sigma$,
i.e. $\cal{L}(\Sigma) = L^\ast(\bf{C}(\Sigma)(s, s_1))$.

This translation is akin to the construction of a \emph{finite state automaton}
from a regular grammar. In fact, the above definition of regular grammar directly
is equivalent to the definition of \emph{non-deterministic} finite state automaton (NFA).
Given a regular grammar $(G, L)$, we can construct the span
$$ V \times G_0 \xleftarrow{L \times \tt{dom}} G_1 \xto{cod} G_0$$
which induces a relation $\tt{im}(G_1) \sub  V \times G_0 \times G_0$, which is
precisely the transition table of an NFA
with states in $G_0$, alphabet symbols $V$ and transitions in $\tt{im}(G_1)$.
If we require that the relation $\tt{im}(G_1)$ be a \emph{function} ---
i.e. for any  $(v, a) \in V \times G_0$ there is a unique $b \in G_0$ such that
$(v, a, b) \in \tt{im}(G_1)$ --- then this defines a \emph{deterministic}
finite state automaton (DFA).
From any NFA, one may build a DFA by blowing up the state space.
Indeed relations $X \sub V \times G_0 \times G_0$ are the same as
functions $V \times G_0 \to \cal{P}(G_0)$ where $\cal{P}$ denotes the powerset
construction. So any NFA $X \sub V \times G_0 \times G_0$
can be represented as a DFA $V \times \cal{P}(G_0) \to \cal{P}(G_0)$.

Now that we have shown how to recover the original definition of regular grammars,
consider the following folklore result from formal language theory.

\begin{proposition}
    Regular languages are closed under intersection and union.
\end{proposition}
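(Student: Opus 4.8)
The plan is to exploit the automaton characterization that the excerpt has just established: a regular grammar $(G, L)$ with start/terminal symbols is essentially an NFA, and via the powerset (subset) construction can be taken to be a DFA, so I will work with the transition-function presentation throughout. Concretely, suppose $X = \cal{L}(G)$ and $Y = \cal{L}(H)$ are regular languages over the same vocabulary $V$, presented by DFAs with transition functions $\delta_G : V \times G_0 \to G_0$ and $\delta_H : V \times H_0 \to H_0$ together with start states $s_0^G, s_0^H$ and accepting states $s_1^G, s_1^H$.

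For intersection, I would use the standard \emph{product automaton} construction. The idea is to run both automata in lockstep on the same input string. I take the state set to be $G_0 \times H_0$, define the transition function componentwise by $\delta(v, (a, a')) = (\delta_G(v, a), \delta_H(v, a'))$, set the start state to $(s_0^G, s_0^H)$, and declare a state $(b, b')$ accepting exactly when $b = s_1^G$ \emph{and} $b' = s_1^H$. A routine induction on the length of the input string shows that the product automaton reaches $(b, b')$ on input $u$ iff $G$ reaches $b$ and $H$ reaches $b'$ on $u$; hence it accepts $u$ iff $u \in X$ and $u \in Y$, giving $\cal{L} = X \cap Y$. Translating back through the equivalence with regular grammars, the product of two finite graphs is again a finite graph with an appropriate labelling homomorphism, so the result is genuinely a regular grammar.

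For union, the cleanest route is again the product automaton, now with accepting states $(b, b')$ such that $b = s_1^G$ \emph{or} $b' = s_1^H$; the same lockstep induction yields $\cal{L} = X \cup Y$. (Alternatively, union follows from closure under complement and intersection via De Morgan, where complement is obtained by swapping accepting and non-accepting states of a DFA — but since I already have the product construction in hand, changing only the acceptance condition is the most economical argument.) Both cases reduce to building a finite graph on $G_0 \times H_0$ and checking that the induced language matches, which stays entirely within the framework of the preceding section.

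The main obstacle is a subtlety rather than a difficulty: the product construction genuinely requires the \emph{deterministic} presentation, since running two \emph{nondeterministic} automata in parallel and intersecting does not correctly track the joint accepting behaviour. I therefore lean on the subset-construction result already proved in the excerpt (``from any NFA one may build a DFA by blowing up the state space'') to pass to DFAs first. A second minor point to handle carefully is the treatment of the empty-string productions $A \to \epsilon$ and the reconciliation of the two start/terminal conventions, but these are bookkeeping issues that the grammar-to-automaton translation given above already resolves.
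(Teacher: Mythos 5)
Your proof is correct, but it takes a different route from the paper's. The paper never determinizes: it works directly with the nondeterministic labelled-graph presentation, building the intersection as a synchronized product of graphs (an edge $(a,a')\to(b,b')$ whenever both grammars have edges with the \emph{same label}), and the union as a disjoint union of graphs with the start and terminal vertices identified. Your stated obstacle --- that ``the product construction genuinely requires the deterministic presentation'' --- is in fact only half true, and this is exactly where the two approaches diverge. For \emph{intersection}, the product of NFAs works as is: a word lies in the intersection iff there exist accepting runs in both automata, and runs of the synchronized product are precisely pairs of runs, so existential acceptance is tracked correctly. Determinism (more precisely, completeness of the transition function) is genuinely needed only for your \emph{or}-acceptance product for union, since a nondeterministic automaton can get stuck on a word the other one accepts; the paper sidesteps this by using disjoint union instead of a product. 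What your route buys is uniformity (one construction, two acceptance conditions) and robustness --- the paper's identification of $q_0$ with $q_0'$ and $q_1$ with $q_1'$ quietly permits paths that pass through the identified vertices and mix edges of the two grammars, a subtlety your complete-DFA product avoids entirely. What it costs is the exponential blow-up of the subset construction, the bookkeeping you flag for converting a DFA with a set of accepting states back into the single-terminal-vertex grammar format (which reintroduces nondeterminism anyway), and, more structurally, the connection the paper exploits immediately afterwards: its direct graph-level constructions are exactly the categorical product and coproduct in $\bf{Reg}$, a fact that the determinized presentation obscures.
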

\begin{proof}
   Suppose $G$ and $G'$ are regular grammars, with starting states $q_0, q_0'$
   and terminating states $q_1, q_1'$.

   Taking the cartesian product of the underlying graphs
   $G \times G' = \graph{G_1 \times G_1'}{G_0 \times G_0'}$
   we can define a regular grammar $G \cap G' \sub G \times G'$ with starting
   state $(q_0, q_0')$, terminating state $(q_1, q_1')$ and such that there is
   an edge between $(a, a')$ and $(b, b')$ whenever there are edges $a \xto{f} b$
   in $G$ and $a' \xto{f'} b'$ in $G'$ with the same label $L(f) = L'(f')$.
   Then an arrow from $(q_0, q_0')$ to  $(q_1, q_1')$ in $G \cap G'$ is the
   same as a pair of arrows $q_0 \to q_1$ in $G$ and $q_0' \to q_1'$ in $G'$.
   Therefore $\cal{L}(G \cap G') = \cal{L}(G) \cap \cal{L}(G')$.
   Proving the first part of the statement.

   Moreover, the disjoint union of graphs $G + G'$  yields a regular grammar
   $G \cup G'$ by identifying $q_0$ with $q_0'$ and $q_1$ with $q_1'$. Then
   an arrow $q_0 \to q_1$ in $G \cup G'$ is either an arrow $q_0 \to q_1$ in $G$
   or an arrow $q_0' \to q_1'$ in $G'$. Therefore
   $\cal{L}(G \cup G') = \cal{L}(G) \cup \cal{L}(G')$.
\end{proof}

The constructions used in this proof are canonical constructions in the
category of regular grammars $\bf{Reg}$. Note that
$\bf{Reg} = 2\backslash \bf{Graph} / V$ is both a slice and a coslice category.
Moreover, $\bf{Graph}$ has all limits and colimits.
While coslice categories reflect limits and slice categories reflect colimits,
we cannot compose these two statements to show that $\bf{Reg}$ has all limits
and colimits. However, we can prove explicitly that the constructions defined
above give rise to the categorical product and coproduct. We do not know whether
$\bf{Reg}$ also has equalizers and coequalizers which would yield all finite
limits and colimits.

\begin{proposition}
    The intersection $\cap$ of NFAs is the categorical product in $\bf{Reg}$.
    The union $\cup$ of NFAs is the coproduct in $\bf{Reg}$.
\end{proposition}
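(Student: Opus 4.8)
The plan is to exhibit, for each of the two constructions, the structure maps and then verify the corresponding universal property directly, since the text anticipates that the abstract slice/coslice bookkeeping does not chain. Conceptually, since $\bf{Reg} = (2 \backslash \bf{Graph})/V$, one expects the product to be the fibre product of $G$ and $G'$ over $V$ (i.e. the label-matched pullback, which is exactly $G \cap G'$) and the coproduct to be the pushout of $G$ and $G'$ under the two points $\{s_0,s_1\}$ (i.e. the disjoint union with the starting and terminating vertices glued, which is exactly $G \cup G'$). I would use these descriptions only as a guide and check the universal properties by hand.

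For the product, first equip $G \cap G'$ with the two projections $\pi : G \cap G' \to G$ and $\pi' : G \cap G' \to G'$ sending $(a,a') \mapsto a$, $(f,f') \mapsto f$ and symmetrically. Each is a morphism of $\bf{Reg}$: it preserves $\tt{dom}$ and $\tt{cod}$ componentwise, it sends $(s_0,s_0') \mapsto s_0$ and $(s_1,s_1')\mapsto s_1$, and it respects the labelling because the label of $(f,f')$ is by definition the common value $L(f)=L'(f')$. Given any regular grammar $H$ with marked points $t_0,t_1$ and morphisms $\alpha : H \to G$, $\beta : H \to G'$, define $\langle \alpha,\beta\rangle : H \to G \cap G'$ on vertices by $x \mapsto (\alpha(x),\beta(x))$ and on edges by $e \mapsto (\alpha(e),\beta(e))$. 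The one nonroutine point is that $(\alpha(e),\beta(e))$ really is an edge of $G \cap G'$: this holds precisely because $\alpha$ and $\beta$ preserve labels, so $L(\alpha(e)) = L_H(e) = L'(\beta(e))$ and the two components share a label. Preservation of endpoints, labels and marked points is then immediate, and uniqueness follows because $\pi \circ \langle\alpha,\beta\rangle = \alpha$ and $\pi' \circ \langle\alpha,\beta\rangle = \beta$ force both coordinates of any competitor. Hence $(G\cap G', \pi,\pi')$ is the categorical product.

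For the coproduct, equip $G \cup G'$ with the injections $\iota : G \to G\cup G'$ and $\iota' : G' \to G \cup G'$ that include each summand and send $s_0,s_0'$ to the common glued starting vertex and $s_1,s_1'$ to the common glued terminating vertex; these are $\bf{Reg}$-morphisms by construction. Given $\alpha : G \to H$ and $\beta : G' \to H$, define $[\alpha,\beta] : G \cup G' \to H$ by $\alpha$ on the first summand and $\beta$ on the second. Here the one nonroutine point is well-definedness on the glued vertices: the class $\{s_0,s_0'\}$ must be sent to a single vertex, and this works precisely because $\alpha$ and $\beta$ preserve marked points, so $\alpha(s_0) = t_0 = \beta(s_0')$, and likewise $\alpha(s_1) = t_1 = \beta(s_1')$. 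The copairing is then a graph homomorphism preserving labels and marked points, and it is the unique morphism with $[\alpha,\beta]\circ\iota = \alpha$ and $[\alpha,\beta]\circ\iota' = \beta$, since these equations fix its value on every edge and vertex. Hence $(G \cup G', \iota, \iota')$ is the categorical coproduct.

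The main obstacle is not the combinatorics but isolating exactly where the ambient slice-and-coslice structure enters, namely the two coherence checks above: label compatibility is what guarantees the mediating map into the product lands in the label-matched pullback rather than in the full product graph $G \times G'$, and consistency of the marked points under $\alpha,\beta$ is what makes the copairing out of the quotient well-defined, so that the coproduct is the \emph{glued} disjoint union rather than the bare one. These are precisely the places where the naive composition of ``slice creates colimits'' and ``coslice creates limits'' breaks down, which is why the verification must be done explicitly. Finally, the identifications $\cal{L}(G \cap G') = \cal{L}(G)\cap\cal{L}(G')$ and $\cal{L}(G\cup G') = \cal{L}(G)\cup\cal{L}(G')$ established in the preceding proposition justify calling these constructions intersection and union.
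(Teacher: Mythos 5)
Your proof is correct and follows essentially the same route as the paper's: an explicit verification of the universal properties via the pairing $\langle\alpha,\beta\rangle$ and copairing $[\alpha,\beta]$, with the same two key coherence checks (label compatibility forces the pairing to land in the fibre product $G \times_V G'$, and preservation of the marked points makes the copairing well-defined on the glued vertices). Your framing of the constructions as a pullback over $V$ and a pushout under the two marked points is a pleasant conceptual gloss the paper only implies, but the substance of the argument is identical.
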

\begin{proof}
    To show the first part. Suppose we have two morphisms of regular grammars
    $f: H \to G$ and $g: H \to G'$. These must
    respect the starting and terminating symbols as well as the labelling homomorphism.
    We can construct a homomorphism of signatures $<f, g> : H \to G \cap G'$ where
    $G \cap G' = G \times_V G'$ with starting point $(q_0, q_0')$ and endpoint
    $(q_1, q_1')$ as defined above.
    $<f, g>$ is given on vertices by
    $<f, g>_0(x) = (f_0(x), g_0(x))$ and on arrows by
    $<f, g>_1(h) = (f_0(h), g_0(h))$. Since $L(f(h)) = L(h) = L(g(h))$,
    this defines a morphism of regular grammars $<f, g> : H \to G \cap G'$.
    There are projections $G \cap G' \to G, G'$ induced by the projections
    $\pi_0, \pi_1: G \times G' \to G, G'$, and it is easy to check that
    $\pi_0 \circ <f, g> = f$ and  $\pi_1 \circ <f, g> = g$ where $\pi_0$ and
    $\pi_1$ are the projections. Now, suppose that there is some $k: H \to G \cap G'$
    with $\pi_0 \circ k = f$ and $\pi_1 \circ k = g$, $k$ then the underlying function $H \to G \times G'$ must be equal to $<f, g>$ and thus also as
    morphisms of regular grammar $k = <f, g>$.
    Therefore $G \cap G'$ is the categorical product in $\bf{Reg}$.

    Similarly the union is the coproduct in $\bf{Reg}$. Given any pair of
    morphisms $f: G \to H$ and $g: G' \to H$, we may define
    $[f, g] : G \cup G' \to H$ on vertices by $[f, g](x) = f(x)$ if $x \in G$ and
    $[f, g](x) = g(x)$ if $x \in G'$ and on similarly on edges. We have that
    $[f, g](q_0) = f(q_0) = g(q_0') = q_1$ since $q_0$ and $q_0'$ are identified in
    $G \cup G'$ and $L([f, g](h))$ is either of these
    equal expressions $L(f(h))= L(h) = L(g(h))$, i.e. $[f, g]$ is a morphism
    of regular grammars. Let $i: G \to G \cup G'$ and $i': G' \to G \cup G'$
    be the injections into the union. We have $[f, g] \circ i = f$ and
    $[f, g] \circ i' = g$. Moreover, for any other morphism $k: G \cup G' \to H$
    with $k \circ i = f$ and $k \circ i' = g$, we must have that $k(h) = f(h)$
    whenever $h \in G \sub G \cup G'$ and $k(h) = g$ otherwise, i.e. $k = [f, g]$.
    Therefore $G \cup G'$ satisfies the universal property of the coproduct in
    $\bf{Reg}$.
\end{proof}

%

We now study the parsing problem for regular grammars.
First, consider the \emph{non-emptiness problem}, which takes as input a regular grammar
$G$ and returns ``no'' if $\cal{L}(G)$ is empty and ``yes'' otherwise.

\begin{proposition}
    The non-emptiness problem is equivalent to $\exists\tt{Path}$
\end{proposition}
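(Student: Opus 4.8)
The plan is to exhibit the non-emptiness problem and $\exists\tt{Path}$ as inter-reducible via two log-space many-one reductions. The pivotal observation is that the labelling functor $L^\ast = \bf{C}(L)$ neither creates nor destroys non-emptiness: since $L$ is a total function on edges, $L^\ast$ maps a non-empty set of paths to a non-empty set of strings and the empty set to the empty set. Recalling that $\cal{L}(G) = L^\ast(\bf{C}(G)(s_0, s_1))$, this yields the chain
$$\cal{L}(G) \neq \emptyset \iff \bf{C}(G)(s_0, s_1) \neq \emptyset \iff s_0 \; RTC(G) \; s_1,$$
whose second equivalence is exactly the characterisation of arrows in the free category as labelled paths, i.e.\ that there is an arrow $a \to b$ precisely when $a \; RTC(G) \; b$.

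First I would reduce non-emptiness to $\exists\tt{Path}$. Given a regular grammar $(G, L, s_0, s_1)$, I forget the labelling and output the $\exists\tt{Path}$ instance consisting of the underlying graph $G$ together with the pair $s_0, s_1$. By the chain above this instance is accepted exactly when $\cal{L}(G)$ is non-empty. Since the reduction only projects away $L$ and copies the graph, it is computable in logarithmic space.

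Conversely, to reduce $\exists\tt{Path}$ to non-emptiness, I take an instance $(G, a, b)$ and build a regular grammar on the same graph by fixing a singleton vocabulary $V = \set{\ast}$ and the constant labelling $L(e) = \ast$ for every edge $e \in G_1$, with bottom symbol $s_0 = a$ and top symbol $s_1 = b$. Then $\cal{L}(G)$ is non-empty iff there is a path from $a$ to $b$, that is iff $a \; RTC(G) \; b$. This transformation merely attaches a constant label to each edge and so is again a log-space reduction.

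The argument is essentially routine; the one point that demands care is the totality of $L$ in the key chain, which guarantees that non-emptiness is transported faithfully in both directions between the path set and the generated language, rather than being washed out by the labelling. The degenerate case $s_0 = s_1$ is consistent with this picture, as the identity arrow contributes $\epsilon \in \cal{L}(G)$ in exact correspondence with the reflexivity clause $a \; RTC(G) \; a$.
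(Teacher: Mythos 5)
Your proof is correct and rests on exactly the same observation as the paper's one-line argument, namely that $\cal{L}(G) = L^\ast(\bf{C}(G)(s_0,s_1))$ is non-empty if and only if there is an arrow $s_0 \to s_1$, i.e.\ $s_0 \; RTC(G) \; s_1$. You merely flesh this out into explicit log-space reductions in both directions (forgetting the labelling, resp.\ attaching a constant label over a singleton vocabulary), which is a careful elaboration of, not a departure from, the paper's proof.
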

\begin{proof}
    $\cal{L}(G)$ is non-empty if and only if there is an arrow from $s_0$ to
    $s_1$ in $G$.
\end{proof}

Now, consider the problem of recognizing the language of a regular grammar $G$,
also known as $\tt{Parsing}$. We define the proof-relevant version which, of
course, has a corresponding decision problem $\exists\tt{Parsing}$.

\begin{definition}
    \begin{problem}
      \problemtitle{$\tt{Parsing}$}
      \probleminput{$G$, $u \in V^\ast$}
      \problemoutput{$f \in \bf{C}(G)(s_0, s_1)$ such that $L^\ast(f) = u$.}
    \end{problem}
\end{definition}

Given a string $u \in V^\ast$, we may build the regular grammar $G_u$ given by
the path-graph with edges labelled according to $u$, so that $\cal{L}(G_u) = \set{u}$.
Then the problem of deciding whether there is a parse for $u$ in $G$ reduces
to the non-emptiness problem for the intersection $G \cap G_u$. This has the
following consequence.

\begin{proposition}
    $\exists\tt{Parsing}$ is equivalent to $\exists\tt{Path}$ and is thus
    $\tt{NL}$-complete. Similarly, $\tt{Parsing}$ is $\tt{FNL}$-complete.
\end{proposition}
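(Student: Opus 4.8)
The plan is to prove $\exists\tt{Parsing} \equiv \exists\tt{Path}$ by exhibiting log-space reductions in both directions, and then to read off $\tt{NL}$-completeness from the fact, recalled above, that $\exists\tt{Path}$ is $\tt{NL}$-complete. For the forward reduction $\exists\tt{Parsing} \leq \exists\tt{Path}$ I would follow the construction sketched just before the statement: given an instance $(G, u)$, build the path-graph grammar $G_u$ with $\cal{L}(G_u) = \set{u}$ and form the product $G \cap G_u = G \times_V G_u$. By the identity $\cal{L}(G \cap G_u) = \cal{L}(G) \cap \set{u}$ coming from the product in $\bf{Reg}$, a parse of $u$ in $G$ exists if and only if $G \cap G_u$ has nonempty language, which by the non-emptiness proposition is exactly an instance of $\exists\tt{Path}$ in the product graph, from the start pair $(s_0, t_0)$ to the terminal pair $(s_1, t_1)$, where $t_0, t_1$ are the endpoints of $G_u$. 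I would then check that the product graph has size $O(\size{G_1}\,\size{u})$ and is constructible in logarithmic space, so this is a valid log-space reduction placing $\exists\tt{Parsing}$ in $\tt{NL}$.

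For the reverse reduction $\exists\tt{Path} \leq \exists\tt{Parsing}$, which yields $\tt{NL}$-hardness, I would turn a reachability instance $(G, a, b)$ into a one-letter parsing instance. Take the vocabulary $V = \set{x}$, label every edge of $G$ by $x$, add a self-loop labelled $x$ at each vertex, and set $s_0 = a$, $s_1 = b$; write $G'$ for the result and $N = \size{G_0} - 1$. The point is that if $b$ is reachable from $a$ there is a simple path of length at most $N$, which the self-loops let us pad to length exactly $N$, whereas self-loops never reach new vertices, so $x^N \in \cal{L}(G')$ holds precisely when $b$ is reachable from $a$ in $G$ (the degenerate case $a = b$ being covered by the empty path, paddable by self-loops at $a$). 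Since $x^N$ is writable in log space, $(G', x^N)$ is a log-space-computable $\exists\tt{Parsing}$ instance equivalent to the original, and the two reductions together give $\exists\tt{Parsing} \equiv \exists\tt{Path}$ and hence $\tt{NL}$-completeness.

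Finally, for $\tt{Parsing}$ I would lift both reductions to witnesses to obtain $\tt{FNL}$-completeness, using that $\tt{Path} \in \tt{FNL}$. Membership holds because a parse $f \in \bf{C}(G)(s_0, s_1)$ with $L^\ast(f) = u$ is the image under the projection functor $\bf{C}(\pi_G)$ of a witnessing path in $G \cap G_u$; such a path is supplied by $\tt{Path}$ and the projection is a log-space post-processing step. Hardness follows by reducing the $\tt{FNL}$-complete problem $\tt{Path}$ to $\tt{Parsing}$: from a parse of $x^N$ in $G'$ one reads off a walk from $a$ to $b$, and deleting the self-loop steps yields an arrow in $\bf{C}(G)(a, b)$, again a log-space transformation. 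I expect no deep difficulty here; the main subtlety to watch is purely bookkeeping, namely carrying the base-point pairs $(s_0, t_0)$ and $(s_1, t_1)$ correctly through the product and verifying that the witness-transport maps (local relabellings and deletions of self-loop edges) genuinely stay inside $\tt{FNL}$, i.e. are realised by log-space transducers.
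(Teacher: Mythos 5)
Your proposal is correct, and for the membership half it is exactly the paper's argument: the paper offers no proof environment at all, just the one-paragraph remark preceding the statement (build the path-grammar $G_u$ with $\cal{L}(G_u) = \set{u}$, intersect via the product in $\bf{Reg}$, and invoke the earlier equivalence of non-emptiness with $\exists\tt{Path}$), and your forward reduction reproduces this, adding the worthwhile explicit check that $G \cap G_u$ has size $O(\size{G_1}\,\size{u})$ and is log-space constructible. Where you genuinely add value is the hardness half and the $\tt{FNL}$ transfer, which the paper leaves entirely implicit. Your padding reduction — one-letter alphabet, self-loops at every vertex, target string $x^{\size{G_0}-1}$ — is correct and is in fact \emph{needed}: one cannot simply cite the non-emptiness proposition for hardness, because $\exists\tt{Parsing}$ takes a concrete string as input and the paper's labellings $L: G_1 \to V$ assign exactly one letter per edge (no $\epsilon$-moves), so some device like your self-loop padding is required to fix the witness length; simple paths having at most $\size{G_0}-1$ edges makes the length bound sound, self-loops never reach new vertices so deleting them transports witnesses back, and the degenerate case $a = b$ is handled as you say. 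The witness-level versions of both reductions (projecting a path in $G \cap G_u$ along $\bf{C}(\pi_G)$, and deleting self-loop steps) are indeed log-space transducible, so the $\tt{FNL}$ claim goes through at the same level of rigour the paper itself adopts for $\tt{Path} \in \tt{FNL}$. In short: same route as the paper where the paper says anything, and a correct completion of the two directions the paper asserts without proof.
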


At the end of the section, we give a simple algorithm for parsing regular grammars
based on the composition of arrows in a free category.
In order to model the higher levels of Chomsky's hierarchy, we need to equip
our categories with more structure.

\subsection{cat.Arrow}

We have introduced free categories and shown how they appear in formal language theory.
These structures have moreover a natural implementation in object-oriented Python,
which we now describe. In order to implement a free category in Python we need to
define three classes: \py{cat.Ob} for objects, \py{cat.Arrow} for morphisms
and \py{cat.Box} for generators.
Objects are initialised by providing a name.

\begin{python} \label{listing:cat.Ob}
{\normalfont Objects in a free category.}

\begin{minted}{python}
class Ob:
    def __init__(self, name):
        self.name = name
\end{minted}
\end{python}

Arrows, i.e. morphisms of free categories, are given by lists of boxes with matching
domains and codomains. In order to initialise a \py{cat.Arrow}, we provide a
domain, a codomain and a list of boxes.
The class comes with a method \py{Arrow.then} for composition and a static
method \py{Arrow.id} for generating identity arrows.

\begin{python} \label{listing:cat.Arrow}
{\normalfont Arrows in a free category.}

\begin{minted}{python}
class Arrow:
    def __init__(self, dom, cod, boxes, _scan=True):
        if not isinstance(dom, Ob) or not isinstance(cod, Ob):
            raise TypeError()
        if _scan:
            scan = dom
            for depth, box in enumerate(boxes):
                if box.dom != scan:
                    raise AxiomError()
                scan = box.cod
            if scan != cod:
                raise AxiomError()
        self.dom, self.cod, self.boxes = dom, cod, boxes

    def then(self, *others):
        if len(others) > 1:
            return self.then(others[0]).then(*others[1:])
        other, = others
        if self.cod != other.dom:
            raise AxiomError()
        return Arrow(self.dom, other.cod, self.boxes + other.boxes, _scan=False))

    @staticmethod
    def id(dom):
        return Arrow(self, dom, dom, [], _scan=False)

    def __rshift__(self, other):
        return self.then(other)

    def __lshift__(self, other):
        return other.then(self)
\end{minted}

    When \py{_scan == False} we do not check that the boxes in the arrow compose.
    This allows us to avoid checking composition multiple times for the same \py{Arrow}.
    The methods \py{__rshift__} and \py{__lshift__} allow to
    use the syntax \py{f >> g} and \py{g << f} for the composition of
    instances of the \py{Arrow} class.
\end{python}

Finally, generators of the free category are special instances of \py{Arrow},
initialised by a name, a domain and a codomain.

\begin{python} \label{listing:cat.Box}
{\normalfont Generators of a free category.}

\begin{minted}{python}
class Box(Arrow):
    def __init__(self, name, dom, cod):
        self.name, self.dom, self.cod = name, dom, cod
        Arrow.__init__(self, dom, cod, [self], _scan=False)
\end{minted}

    The subclassing mechanism in Python allows for \py{Box} to inherit all the
    \py{Arrow} methods, so that there is essentially no difference between a box and
    an arrow with one box.
\end{python}

\begin{remark}
    It is often useful to define dataclass methods such as
    \py{__repr__}, \py{__str__} and \py{__eq__} to represent, print and check
    equality of objects in a class.
    Similarly, other standard methods such as \py{__hash__} may be overwritten
    and used as syntactic gadgets.
    We set \py{self.name = name} although, in DisCoPy, this parameter
    is immutable and e.g. \py{Ob.name} is implemented as a \py{@property} method.
    In order to remain concise, we will omit these methods when defining further
    classes.
\end{remark}

We now have all the ingredients to compose arrows in free categories.
We check that the axioms of categories hold for \py{cat.Arrow}s on the nose.

\begin{python} \label{listing:cat.axioms}
{\normalfont Axioms of free categories.}

\begin{minted}{python}
x, y, z = Ob('x'), Ob('y'), Ob('z')
f, g, h = Box('f', x, y), Box('g', y, z), Box('h', z, x)
assert f >> Arrow.id(y) == f == Arrow.id(x) >> f
assert (f >> g) >> h == f >> g >> h == (f >> g) >> h
\end{minted}
\end{python}

A signature is a pair of lists, for objects and generators.
A homomorphism between signatures is a pair of Python dictionnaries.
Functors between the corresponding free categories are initialised by a pair
of mappings \py{ob}, from objects to objects, and \py{ar} from boxes to arrows.
The call method of \py{Functor} allows to evaluate the image of composite arrows.

\begin{python} \label{listing:cat.Functor}
{\normalfont Functors from a free category.}

\begin{minted}{python}
class Functor:
    def __init__(self, ob, ar):
        self.ob, self.ar = ob, ar

    def __call__(self, arrow):
        if isinstance(arrow, Ob):
            return self.ob[arrow]
        if isinstance(arrow, Box):
            return self.ar[arrow]
        if isinstance(arrow, Arrow):
            return Arrow.id(self(arrow.dom)).then(*map(self, arrow))
        raise TypeError()
\end{minted}

We check that the axioms hold.

\begin{minted}{python}
x, y = Ob('x'), Ob('y')
f, g = Box('f', x, y), Box('g', y, x)
F = Functor(ob={x : y, y: x}, ar={f : g, g: f})
assert F(f >> g) == F(f) >> F(g)
assert F(Arrow.id(x)) == Arrow.id(F(x))
\end{minted}
\end{python}

As a linguistic example, we use the composition method of \py{Arrow} to
write a simple parser for regular grammars.

\begin{python} \label{listing:cat.regular}
{\normalfont Regular grammar parsing.}

\begin{minted}{python}
from discopy.cat import Ob, Box, Arrow, AxiomError
s0, x, s1 = Ob('s0'), Ob('x'), Ob('s1')
A, B, C = Box('A', s0, x), Box('B', x, x), Box('A', x, s1)
grammar = [A, B, C]
def is_grammatical(string, grammar):
    arrow = Arrow.id(s0)
    bool = False
    for x in string:
        for box in grammar:
            if box.name == x:
                try:
                    arrow = arrow >> box
                    bool = True
                    break
                except AxiomError:
                    bool = False
        if not bool:
            return False
    return bool
assert is_grammatical("ABBA", grammar)
assert not is_grammatical("ABAB", grammar)
\end{minted}
\end{python}

So far, we have only showed how to implement \emph{free} categories and functors
in Python. However, the same procedure can be repeated. Implementing a
category in Python amounts to defining a pair of classes for objects and arrows
and a pair of methods for identity and composition.
In the case of \py{Arrow}, and the syntactic structures of this chapter,
we are able to check that the axioms hold in Python.
In the next chapter, these arrows will be mapped to concrete Python functions,
for which equality cannot be checked.

\section{Trees}\label{section-trees} 

Context-free grammars (CFGs) emerged from the linguistic work of Chomsky
\cite{chomsky1956} and are used in many areas of computer science.
They are obtained from regular grammars by allowing production rules to have
more than one output symbol, resulting in tree-shaped derivations.
Following Walters \cite{walters1989b} and Lambek \cite{lambek1999}, we formalise
CFGs as operadic signatures and their derivations as trees in the corresponding
free operad. Morphisms of free operads are trees with labelled nodes and edges.
We give an implementation -- the \py{operad} module of DisCoPy -- which satisfies
the axioms of operads on the nose and interfaces with the library NLTK \cite{loper2002}.


\subsection{Operads}

\begin{definition}[Operadic signature]
    An operadic signature is a pair of functions:
    $$G_0^\ast \xleftarrow{\tt{dom}} G_1 \xto{\tt{cod}} G_0$$
    where $G_1$ is the set of \emph{nodes} and $G_0$ a set of \emph{objects}.
    A morphism of operadic signatures $\phi: G \to \Gamma$ is a pair of functions
    $\phi_0: G_0 \to \Gamma_0$, $\phi_1: G_1 \to \Gamma_1$ such that the
    following diagram commutes:
    \begin{equation*}
        \begin{tikzcd}
        G_0^\ast \arrow[d, "\phi_0^\ast"] & G_1 \arrow[l, "\tt{dom}"'] \arrow[r, "\tt{cod}"] \arrow[d, "\phi_1"] & G_0 \arrow[d, "\phi_0"]\\
        \Gamma_0^\ast & \Gamma_1 \arrow[l, "\tt{dom}"] \arrow[r, "\tt{cod}"'] & \Gamma_0
        \end{tikzcd}
    \end{equation*}
    With these morphisms, operadic signatures form a category denoted $\bf{OpSig}$.
\end{definition}

A node or box in an operadic signature is denoted $b_0 \dots b_n \xto{f} a$.
Nodes of the form $a \xto{w} \epsilon$ for the empty
string $\epsilon$ are called \emph{leaves}.
\begin{equation*}
    \tikzfig{figures/node-leaf}
\end{equation*}

\begin{definition}[Operad]\label{def-operad}
    An operad $\bf{O}$ is an operadic signature equipped with a composition
    operation $\cdot : \prod_{b_i \in \vec{b}} \bf{Op}(\vec{c_i}, b_i) \times
    \bf{Op}(\vec{b}, a) \to \bf{Op}(\vec{c}, a)$
    defined for any $a \in \bf{O}_0$, $\vec{b}, \vec{c} \in \bf{O}_0^\ast$.
    Given $f: \vec{b} \to a$ and $g_i: \vec{c_i} \to b_i$, the composition of
    $\vec{g}$ with $f$ is denoted graphically as follows.
    \begin{equation}
        \tikzfig{figures/multicat-composition}
    \end{equation}
    We ask that they satisfy the following axioms:
    \begin{enumerate}
        \item for any $a \in \bf{O}_0$ there is an identity morphism
            $\tt{id}_a \in \bf{Op}(a, a)$ (identities).
        \item for any $f: \vec{b} \to a$,
        $f \cdot \tt{id}_a = f = \vec{\tt{id}_{b_i}} \cdot f$ (unit law).
        \item
        \begin{equation}\label{axiom-operad-interchanger}
            \tikzfig{figures/multicat-interchanger}
        \end{equation}
    \end{enumerate}
    An algebra $F: \bf{O} \to \bf{N}$ is a morphism of operadic signature which respects
    composition, i.e. such that whenever $\vec{c} \xto{\vec{g}} \vec{b} \xto{f} a$
    in $\bf{O}$ we have $F(\vec{g} \cdot f) = \vec{F(g)} \cdot F(f)$.
    With algebras as morphisms, operads form a category $\bf{Operad}$.
\end{definition}
\begin{remark}
    The diagrammatic notation we are using is not formal yet, but it will be made rigorous
    in the following section where we will see that operads are instances of monoidal
    categories and thus admit a formal graphical language of string diagrams.
\end{remark}

Given an operadic signature $G$, we may build the free operad over $G$, denoted
$\bf{Op}(G)$. Morphism of $\bf{Op}(G)$ are labelled trees with nodes from $G$ and
edges labelled by elements of the generating objects $B$.
Two trees are equivalent (or congruent) if they can be deformed continuously
into each other using the interchanger rules \ref{axiom-operad-interchanger}
repeatedly.
The free operad construction is part of the following free-forgetful
adjunction.
\begin{equation*}
    \bf{OpSig} \mathrel{\mathop{\rightleftarrows}^{\bf{Op}}_{U}} \bf{Operad}
\end{equation*}
This means that for any operadic signature $G$ and operad $\bf{O}$,
algebras $\bf{Op}(G) \to \bf{O}$ are in bijective correspondence with
morphisms of operadic signatures $G \to U(\bf{O})$.

\begin{example}[Peirce's alpha]\label{peirce-alpha}
    Morphisms in an operad can be depicted as trees, or equivalently as a nesting
    of bubbles:
    \begin{equation}
        \tikzfig{figures/trees-as-nestings}
    \end{equation}
    Interestingly, the nesting perspective was adopted by Peirce in his graphical
    formulation of propositional logic: the \emph{alpha graphs}
    \cite{peirce1906}. We may present Peirce's alpha graphs as an operad $\alpha$ with
    a single object $x$, variables as leaves $a, b, c, d : 1 \to x$, a binary
    operation $\land : xx \to x$ and a unary operation $\neg: x \to x$, together
    with equations encoding the associativity of $\land$ and $\neg \neg = \tt{id}_x$.
    In order to encode Peirce's rules for ``iteration'' and ``weakening'', we would
    need to work in a preordered operad, but we omit these rules here, see \cite{brady2000}
    for a full axiomatisation. The main purpose here is to note that the nesting
    notation is sometimes more practical than its tree counterpart. Indeed,
    since $\land$ is associative and it is the only binary operation in $\alpha$,
    when two symbols are put side by side on the page, it is unambiguous
    that one should take the conjunction $\land$. Therefore the nested
    notation simplifies reasoning and we may draw the propositional formula
    $$ \neg (a_0 \land \, \neg (a_1 \land  a_2) \, \land a_3)$$
    as the following diagram:
    \begin{equation}
        \tikzfig{figures/peirce-formula}
    \end{equation}
\end{example}

\subsection{Context-free grammars}

Given a finite operadic signature $G$, we can intepret the nodes in
$G$ as \emph{production rules}, the generating objects as \emph{symbols}, and
morphisms in the free operad $\bf{Op}(G)$ as \emph{derivations},
obtaining the notion of a context-free grammar.

\begin{definition}[Context-free grammar]\label{def-context-free-grammar}
    A CFG is a finite operadic signature of the following shape:
    $$(B + V)^\ast \leftarrow G \to B$$
    where $B$ is a set of non-terminal symbols with a specified sentence
    symbol $s \in B$, $V$ is a vocabulary (a.k.a a set of terminal symbols) and
    $G$ is a set of production rules.
    The language generated by $G$ is given by:
    $$ \cal{L}(G) = \set{u \in V^\ast \, \vert \, \exists g: u \to s \in \bf{Op}(G)}$$
    where $\bf{Op}(G)$ is the free operad of labelled trees with nodes
    from $G$.
\end{definition}
\begin{remark}
    Note that the direction of the arrows is the opposite of the usual direction
    used for CFGs, instead of seeing a derivation as a tree
    from the sentence symbol $s$ to the terminal symbols $u \in V^\ast$, we see
    it as a tree from $u$ to $s$. This, of course, does not change any of the
    theory.
\end{remark}

\begin{definition}[Context-free language]
    A context-free language is a subset $X \sub V^\ast$ such that $X = \cal{L}(G)$
    for some context-free grammar $G$.
\end{definition}

\begin{example}[Backus Naur Form]\label{ex-backus-naur-form}
    BNF is a convenient syntax for defining context-free languages recursively.
    An example is the following expression:
    $$ s \leftarrow s \land s \, \vert \, \neg s \, \vert \, a$$
    $$ a \leftarrow a_0 \, \vert \, a_1 \, \vert \, a_2 \, \vert \, a_3 $$
    which defines a CFG with seven production rules
    $\{s \leftarrow s \land s \, ,\, s \leftarrow \neg s \, , \, s \leftarrow a \,
    a \leftarrow a_0 \, a \leftarrow a_1 \, a \leftarrow a_2 \, a \leftarrow a_3\}$
    and such that
    trees with root $s$ are well-formed propositional logic formulae with variables
    in $\set{a_0, a_1, a_2, a_3}$. An example of a valid propositional formula is
    $$ \neg (a_0 \land a_1)\, \land\, \neg (a_2 \land a_3)$$
    as witnessed by the following tree:
    \begin{equation*}
        \tikzfig{figures/propositional-formula}
    \end{equation*}
    where we have omitted the types of intermediate wires for readability.
\end{example}
\begin{example}\label{ex-caesar-tree}
    As a linguistic example, let $B = \set{n, d, v, vp, np, s}$
    for nouns, prepositions, verbs, verb phrases and prepositional phrases, and let
    $G$ be the CFG defined by the following lexical rules:
    $$ \text{Caesar} \to n \quad \text{the} \to d \quad \text{Rubicon} \to n \quad \text{crossed} \to v$$
    together with the production rules $n \cdot v \to vp$,  $n \cdot d \to vp$,
    $vp \cdot pp \to s$. Then the following is a grammatical derivation:
    \begin{equation*}
        \tikzfig{figures/is-from-1}
    \end{equation*}
\end{example}
\begin{example}[Regular grammars revisited]\label{ex-regular-trees}
    Any regular grammar yields a CFG. The translation is given
    by turning paths into left-handed trees as follows:
    \begin{equation}
        \scalebox{0.9}{\tikzfig{figures/is-from-2}}
    \end{equation}
    Not all CFGs arise in this way. For example, the language of
    well-bracketed expressions, defined by the CFG with
    a single production rule $G = \set{s \leftarrow (s)}$, cannot be generated by a regular
    grammar. We can prove this using the \emph{pumping lemma}. Indeed, suppose
    there is a regular grammar $G'$ such that well-bracketed expressions are
    paths in $G'$. Let $n$ be the number of vertices in $G'$ and consider the
    grammatical expression $x = (\dots()\dots)$ with $n + 1$ open and
    $n+ 1$ closed brackets. If $G'$ parses $x$, then there must be a path $p_0$ in $G'$
    with labelling $(\dots($ and a path $p_1$ with labelling $)\dots)$ such that
    $p_0 \cdot p_1 : s_0 \to s_1$ in $G'$. By the \emph{pigeon hole}
    principle, the path $p_0$ must have a cycle of length $k \geq 1$.
    Remove this cycle from $p_0$ to get a new path $p_0'$. Then
    $p_0' \cdot p_1$ yields a grammatical expression $x' = (\dots()\dots)$
    with $n + 1 - k$ open brackets and $n + 1$ closed brackets. But then $x'$
    is not a well-bracketed expression. Therefore regular grammars cannot generate
    the language of well-bracketed expressions and we deduce that regular languages
    are \emph{strictly} contained in context-free languages.
\end{example}

We briefly consider the problem of parsing context-free grammars.

\begin{definition}
    \begin{problem}
      \problemtitle{$\tt{CfgParsing}$}
      \probleminput{$G$, $u \in V^\ast$}
      \problemoutput{$f \in \bf{Op}(G)(u, s)$}
    \end{problem}
\end{definition}

This problem can be solved using a \emph{pushdown automaton}, and in fact any
language recognized by a pushdown automaton is context-free \cite{chomsky1956}.
The following result was shown independently by several researchers at the end
of the 1960s.

\begin{proposition}\cite{younger1967, earley1970}
    Context-free grammars can be parsed in cubic time.
\end{proposition}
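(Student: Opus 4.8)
The plan is to establish the bound via the Cocke–Younger–Kasami dynamic programming algorithm, read off the free operad $\bf{Op}(G)$. First I would reduce to the case where every production node has arity at most two. Explicitly, I would replace $G$ by an operadic signature $G'$ in which every node is either a \emph{terminal} production $v \to a$ with $v \in V$ and $a \in B$, or a \emph{binary} production $bc \to a$ with $a, b, c \in B$ — that is, Chomsky Normal Form. This transformation factors each long production $b_1 \cdots b_k \to a$ through fresh intermediate symbols, eliminates unit productions $b \to a$ by precomposing along the reflexive–transitive closure of the arity-one subrelation on the finite set $B$, and removes the $\epsilon$-producing leaves; each step is a language-preserving \emph{functorial reduction}. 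Since $G$ is fixed and finite, this is constant-time preprocessing relative to the length $n = \size{u}$ of the input.

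Next I would set up the table. For the input $u = u_1 \cdots u_n$, define an array indexed by contiguous substrings: for $1 \le i \le j \le n$, let $T[i,j] \sub B$ be the set of symbols $a$ admitting a tree $g : u_i \cdots u_j \to a$ in $\bf{Op}(G')$. The base case $T[i,i]$ collects the codomains of the terminal productions applicable to $u_i$, and the inductive case is the operadic composition read backwards: for $i < j$ one has $a \in T[i,j]$ precisely when there is a binary production $bc \to a$ in $G'$ and a split point $i \le k < j$ with $b \in T[i,k]$ and $c \in T[k+1,j]$. Correctness follows from the observation that every tree in $\bf{Op}(G')$ of positive height factors uniquely through its root node, so a derivation of $u_i \cdots u_j$ is exactly the operadic composite of a binary root with two subderivations of complementary halves; the associativity and interchanger axioms of Definition \ref{def-operad} guarantee this factorisation is well-defined.

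The complexity count is then routine. There are $O(n^2)$ entries $T[i,j]$, and filling each requires ranging over the $O(n)$ split points $k$ and over the fixed set of binary productions, so the whole table is computed in $O(n^3)$ time, the grammar size being absorbed into the constant; membership $u \in \cal{L}(G)$ is decided by testing $s \in T[1,n]$. To solve the proof-relevant problem $\tt{CfgParsing}$ rather than its decision version, I would store at each entry a backpointer recording the production and split witnessing $a \in T[i,j]$, then reassemble the tree $f \in \bf{Op}(G)(u,s)$ by operadic composition in time linear in its size, leaving the cubic bound intact. Earley's algorithm gives the same bound without the normalization step, and could be cited as an alternative.

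The hard part will be the normalization rather than the dynamic program. Verifying that the passage to arity-$\leq 2$ productions preserves $\cal{L}(G)$ demands care with unit and $\epsilon$-productions, whose elimination reshapes the set of derivation trees while fixing their labelled yields; confirming that each elimination really is a language-preserving functorial reduction is the technical heart of the argument, whereas the $O(n^3)$ estimate and the operadic reading of the recursion follow straightforwardly once the normal form is in hand.
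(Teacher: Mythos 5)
The paper gives no proof of this proposition at all — it is stated as a classical result with citations to Younger and Earley — and your CYK dynamic program, recast operadically, is precisely the argument behind the Younger citation, so your proposal is correct and supplies exactly the proof the paper delegates to the literature. One caution on the normalization step: weak (language) equivalence is all that is needed there, and your stronger claim that unit- and $\epsilon$-elimination are themselves \emph{functorial} reductions is not obviously true (a functorial reduction must fix the vocabulary and map boxes to derivations, and collapsing chains of unit productions or deleting nullable symbols does not act box-by-box in any evident way; note the paper's own Chomsky-normal-form proposition only splits long rules and only claims a functorial reduction in one direction), so it is safer to assert only $\cal{L}(G) = \cal{L}(G')$ for those eliminations, which is all your cubic bound requires.
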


\subsection{operad.Tree}\label{subsec-trees}

Tree data structures are ubiquitous in computer science. They can implemented
via the inductive definition: a tree is a root together with a list of trees.
Implementing operads as defined in this Section, presents some extra difficulties
in handling types (domains and codomains) and identities. In fact, the concept
of ``identity tree'' is not frequent in the computer science literature.
Our implementation of free operads consists in the definition of classes
\py{operad.Tree}, \py{operad.Box}, \py{operad.Id} and \py{operad.Algebra},
corresponding to morphisms (trees), generators (nodes), identities and algebras of free operads, respectively.
A \py{Tree} is initialised by a \py{root}, instance of \py{Node}, together with
a list of \py{Tree}s called \py{branches}. Alternatively, it may be built from
generating Boxs using the \py{Tree.__call__} method, this allows for an intuitive
syntax which we illustrate below.

\begin{python} \label{listing:operad.Tree}
{\normalfont Tree in a free operad.}

\begin{minted}{python}
class Tree:
    def __init__(self, root, branches, _scan=True):
        if not isinstance(root, Box):
            raise TypeError()
        if not all([isinstance(branch, Tree) for branch in branches]):
            raise TypeError()
        if _scan and not root.cod == [branch.dom for branch in branches]:
            raise AxiomError()
        self.dom, self.root, self.branches = root.dom, root, branches

    @property
    def cod(self):
        if isinstance(self, Box):
            return self._cod
        else:
            return [x for x in branch.cod for branch in self.branches]

    def __repr__(self):
        return "Tree({}, {})".format(self.root, self.branches)

    def __str__(self):
        if isinstance(self, Box):
            return self.name
        return "{}({})".format(self.root.name,
                               ', '.join(map(Tree.__str__, self.branches)))

    def __call__(self, *others):
        if not others or all([isinstance(other, Id) for other in others]):
            return self
        if isinstance(self, Id):
            return others[0]
        if isinstance(self, Box):
            return Tree(self, list(others))
        if isinstance(self, Tree):
            lengths = [len(branch.cod) for branch in self.branches]
            ranges = [0] + [sum(lengths[:i + 1]) for i in range(len(lengths))]
            branches = [self.branches[i](*others[ranges[i]:ranges[i + 1]])
                        for i in range(len(self.branches))]
            return Tree(self.root, branches, _scan=False)
        raise NotImplementedError()

    @staticmethod
    def id(dom):
        return Id(dom)

    def __eq__(self, other):
        return self.root == other.root and self.branches == other.branches
\end{minted}
\end{python}

A \py{Box} is initialised by label \py{name}, a domain object \py{dom} and a
list of objects \py{cod} for the codomain.

\begin{python} \label{listing:operad.Box}
{\normalfont Node in a free operad.}

\begin{minted}{python}
class Box(Tree):
    def __init__(self, name, dom, cod):
        if not (isinstance(dom, Ob) and isinstance(cod, list)
                and all([isinstance(x, Ob) for x in cod])):
            return TypeError
        self.name, self.dom, self._cod = name, dom, cod
        Tree.__init__(self, self, [], _scan=False)

    def __repr__(self):
        return "Box('{}', {}, {})".format(self.name, self.dom, self._cod)

    def __hash__(self):
        return hash(repr(self))

    def __eq__(self, other):
        if isinstance(other, Box):
            return self.dom == other.dom and self.cod == other.cod \
                        and self.name == other.name
        if isinstance(other, Tree):
            return other.root == self and other.branches == []
\end{minted}
\end{python}

An \py{Id} is a special type of node, which cancels locally when composed with
other trees. The cases in which identities must be removed are handled in
the \py{Tree.__call__} method. The \py{Tree.__init__} method, as it stands,
does not check the identity axioms. We will however always use the
\py{__call__} syntax to construct our trees.

\begin{python} \label{listing:operad.Id}
{\normalfont Identity in a free operad.}

\begin{minted}{python}
class Id(Box):
    def __init__(self, dom):
        self.dom, self._cod = dom, [dom]
        Box.__init__(self, "Id({})".format(dom), dom, dom)

    def __repr__(self):
        return "Id({})".format(self.dom)
\end{minted}
\end{python}

We can check that the axioms of operads hold for \py{Tree.__call__}.

\begin{python} \label{listing:operad.axioms}
{\normalfont Axioms of free operads.}

\begin{minted}{python}
x, y = Ob('x'), Ob('y')
f, g, h = Box('f', x, [x, x]), Box('g', x, [x, y]), Box('h', x, [y, x])
assert Id(x)(f) == f == f(Id(x), Id(x))
left = f(Id(x), h)(g, Id(x), Id(x))
middle = f(g, h)
right = f(g, Id(x))(Id(x), Id(x), h)
assert left == middle == right == Tree(root=f, branches=[g, h])
\end{minted}
\end{python}

\begin{python} \label{listing:operad.caesar}
{\normalfont We construct the tree from Example \ref{ex-caesar-tree}.}

\begin{minted}{python}
n, d, v, vp, np, s = Ob('N'), Ob('D'), Ob('V'), Ob('VP'), Ob('NP'), Ob('S')
Caesar, crossed = Box('Caesar', n, []), Box('crossed', v, []),
the, Rubicon = Box('the', d, []), Box('Rubicon', n, [])
VP, NP, S = Box('VP', vp, [n, v]), Box('NP', np, [d, n]), Box('S', s, [vp, np])
sentence = S(VP(Caesar, crossed), NP(the, Rubicon))
\end{minted}
\end{python}

We define the \py{Algebra} class, which implements operad algebras as
defined in \ref{def-operad} and is initialised by a pair of mappings:
\py{ob} from objects to objects and \py{ar} from nodes to trees. These implement
functorial reductions and functorial semantics of CFGs, as defined in the next
section and chapter respectively.

\begin{python} \label{listing:operad.Algebra}
{\normalfont Algebra of the free operad.}

\begin{minted}{python}
class Algebra:
    def __init__(self, ob, ar, cod=Tree):
        self.cod, self.ob, self.ar = cod, ob, ar

    def __call__(self, tree):
        if isinstance(tree, Id):
            return self.cod.id(self.ob[tree])
        if isinstance(tree, Box):
            return self.ar[tree]
        return self.ar[tree.root](*[self(branch) for branch in tree.branches])
\end{minted}

    Note that we parametrised the class algebra over a codomain
    class, which by default is the free operad \py{Tree}. We may build any
    algebra from the free operad to an operad $A$ by providing
    a class \py{cod=A} with \py{A.id} and \py{A.__call__} methods.
    We will see a first example of this when we interface \py{Tree} with
    \py{Diagram} in the next section. Further examples will be given in Chapter 2.
\end{python}

We end by interfacing the \py{operad} module with the library NLTK \cite{loper2002}.

\begin{python} \label{listing:operad.nltk}
{\normalfont Interface between \py{nltk.Tree} and \py{operad.Tree}.}

\begin{minted}{python}
def from_nltk(tree):
    branches, cod = [], []
    for branch in tree:
        if isinstance(branch, str):
            return Box(branch, Ob(tree.label()), [])
        else:
            branches += [from_nltk(branch)]
            cod += [Ob(branch.label())]
    root = Box(tree.label(), Ob(tree.label()), cod)
    return root(*branches)
\end{minted}

This code assumes that the tree is generated from a lexicalised CFG. The \py{operad}
module of DisCoPy contains the more general version.
We can now define a grammar in NLTK, parse it, and extract an \py{operad.Tree}.
We check that we recover the correct tree for ``Caesar crossed the Rubicon''.

\begin{minted}{python}
from nltk import CFG
from nltk.parse import RecursiveDescentParser
grammar = CFG.fromstring("""
S -> VP NP
NP -> D N
VP -> N V
N -> 'Caesar'
V -> 'crossed'
D -> 'the'
N -> 'Rubicon'""")

rd = RecursiveDescentParser(grammar)
for x in rd.parse('Caesar crossed the Rubicon'.split()):
    tree = from_nltk(x)
assert tree == sentence
\end{minted}
\end{python}

\section{Diagrams}\label{section-diagrams} 

String diagrams in monoidal categories are the key tool that we use to
represent syntactic structures.
In this section we introduce \emph{monoidal grammars}, the equivalent of Chomsky's
unrestricted type-0 grammars. Their derivations are string diagrams in a
free monoidal category.
We introduce \emph{functorial reductions} as a structured way of comparing
monoidal grammars, and motivate them as a tool to reason about equivalence and
normal forms for context-free grammar.
String diagrams have a convenient \emph{premonoidal encoding} as lists of layers,
which allows to implement the class \py{monoidal.Diagram} as a subclass of \py{cat.Arrow}.
We give an overview of the \py{monoidal} module of DisCoPy and its
interface with \py{operad}.

\begin{center}
\begin{tabular}{ |c|c|c|c|c| }
 \hline
 Monoidal category & Type-$0$ grammar & Python\\
 \hline
 objects & strings & \py{Ty} \\
 generators & production rules & \py{Box} \\
 morphisms & derivations & \py{Diagram} \\
 functors & reductions & \py{Functor} \\
 \hline
\end{tabular}
\end{center}

\subsection{Monoidal categories}\label{sec-monoidal-categories}

\begin{definition}[Monoidal signature]
    A monoidal signature $G$ is a signature of the following form:
    $$\signature{G_1}{G_0^\ast}$$.
    $G$ is a finite monoidal signature if $G_1$ is finite.
    A morphism of monoidal signatures $\phi: G \to \Gamma$ is a pair of
    maps $\phi_0 : G_0 \to \Gamma_0$ and $\phi_1: G_1 \to \Gamma_1$
    such that the following diagram commutes:
    \begin{equation*}
        \begin{tikzcd}
        G_0^\ast \arrow[d, "\phi_0^\ast"] & G_1 \arrow[l, "\tt{dom}"'] \arrow[r, "\tt{cod}"] \arrow[d, "\phi_1"] & G_0^\ast \arrow[d, "\phi_0^\ast"]\\
        \Gamma_0^\ast & \Gamma_1 \arrow[l, "\tt{dom}"] \arrow[r, "\tt{cod}"'] & \Gamma_0^\ast
        \end{tikzcd}
    \end{equation*}
    With these morphisms, monoidal signatures form a category $\bf{MonSig}$.
\end{definition}

Elements $f: \vec{a} \to \vec{b}$ of $G_1$ are called \emph{boxes} and are denoted by
the following diagram, read from top to bottom, special cases are states and effects
with no inputs and outputs respectively.
\begin{equation*}
    \tikzfig{figures/monsig}
\end{equation*}

\begin{definition}[Monoidal category]
    A (strict) monoidal category is a category $\bf{C}$ equipped with a functor $\otimes : \bf{C} \times \bf{C} \to \bf{C}$ called the \emph{tensor} and
    a specified object $1 \in \bf{C}_0$ called the \emph{unit}, satisfying
    the following axioms:
    \begin{enumerate}
        \item $1 \otimes f = f = f \otimes 1$ (unit law)
        \item $(f \otimes g) \otimes h = f \otimes  (g \otimes h)$ (associativity)
    \end{enumerate}
    for any $f, g, h \in \bf{C}_1$.
    A (strict) monoidal functor is a functor that preserves the tensor product
    on the nose, i.e. such that $F(f \otimes g) = F(f) \otimes F(g)$.
    The category of monoidal categories and monoidal functors is denoted $\bf{MonCat}$.
\end{definition}

\begin{remark}
    The general (non-strict) definition of a monoidal category relaxes the
    equalities in the unit and associativity laws to the existence of natural
    isomorphisms, called unitors and associators. These are then required to satisfy some coherence conditions in the form of commuting diagrams, see MacLane \cite{MacLane71}.
    In practice, these natural transformations are not used in
    calculations. MacLane's coherence theorem ensures that any monoidal category
    is equivalent to a strict one.
\end{remark}

Given a monoidal signature $G$ we can generate the free monoidal category
$\bf{MC}(G)$, i.e. there is a free-forgetful adjunction:
\begin{equation*}
    \bf{MonSig} \mathrel{\mathop{\rightleftarrows}^{\bf{MC}}_{U}} \bf{MonCat}
\end{equation*}
This means that for any monoidal signature $G$ and monoidal category $\bf{S}$,
functors $\bf{MC}(G) \to \bf{S}$ are in bijective correspondence with
morphisms of signatures $G \to U(\bf{S})$.
The free monoidal category was first characterized by Joyal and Street
\cite{joyal1991geometry} who showed that morphisms in $\bf{MC}(G)$ are topological
objects called \emph{string diagrams}. We follow the formalisation of Delpeuch and
Vicary \cite{delpeuch2019b} who provided an equivalent combinatorial definition
of string diagrams.

Given a monoidal signature $G$, we can construct the signature of \emph{layers}:
$$ \signature{L(G) = G_0^\ast \times G_1 \times G_0^\ast}{G_0^\ast} $$
where for every layer $l = (u, f: x \to y, v) \in L(G)$ we define
$\tt{dom}(l) = uxv$ and $\tt{cod}(l) = uyv$. A layer $l \in L(G)$ is denoted as
follows:
\begin{equation*}
    \tikzfig{figures/layer}
\end{equation*}
The set of \emph{premonoidal diagrams} $\bf{PMC}(G)$ is the set of morphisms of
the free category generated by the layers:
$$ \bf{PMC}(G) = \bf{C}(L(G))$$
They are precisely morphisms of free premonoidal categories in the sense of \cite{power1997}.
Morphisms $d \in \bf{PMC}(G)$ are lists of layers $d = (d_1, \dots, d_n)$ such
that $\tt{cod}(d_i) = \tt{dom}(d_{i + 1})$. The data for such a diagram may be
presented in a more succint way as follows.

\begin{proposition}[Premonoidal encoding]\cite{delpeuch2019b}
    \label{prop-combinatorial-encoding}
    A premonoidal diagram $d \in \bf{PMC}(G)$ is uniquely defined by the
    following data:
    \begin{enumerate}
        \item a domain $\tt{dom}(d) \in \Sigma_0^\ast$,
        \item a codomain $\tt{cod}(d) \in \Sigma_0^\ast$,
        \item a list of boxes $\tt{boxes}(d) \in G_1^n$,
        \item a list of offsets $\tt{offsets}(d) \in \bb{N}^n$.
    \end{enumerate}
    Where $n \in \bb{N}$ is the \emph{length} of the diagram and the
    offsets indicate the number of boxes to the left of each wire. This data
    defines a valid premonoidal diagram if for $ 0 < i \leq n$ we have:
    $$ \tt{width}(d)_i \geq \tt{offsets}(d)_i + \size{\tt{dom}(b_i)}$$
    where the widths are defined inductively by:
    $$\tt{width}(d)_1 = \tt{size}(\tt{dom}(d)) \quad
    \tt{width}(d)_{i + 1} = \tt{width}(d)_i + \size{\tt{cod}(b_i)} - \size{\tt{dom}(b_i)}$$
    and $b_i = \tt{boxes}(d)_i$.
\end{proposition}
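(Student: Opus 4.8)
The plan is to prove the claim by exhibiting a pair of mutually inverse maps between the set of premonoidal diagrams $\bf{PMC}(G) = \bf{C}(L(G))$ and the set of tuples $(\tt{dom}, \tt{cod}, \tt{boxes}, \tt{offsets})$ satisfying the stated inequalities, thereby establishing both that the data determines $d$ uniquely and that the inequalities characterise the valid tuples. The crux is the observation that, once the domain and the lists of boxes and offsets are fixed, every intermediate wire type of the diagram is forced; so the only freedom in a layer $l_i = (u_i, b_i \colon x_i \to y_i, v_i)$ beyond the box $b_i$ is the length $\size{u_i}$ of its left context, which is exactly what the offset records.

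First I would define the encoding. Given $d = (l_1, \dots, l_n)$ with $l_i = (u_i, b_i, v_i)$, set $\tt{dom}(d) = \tt{dom}(l_1)$, $\tt{cod}(d) = \tt{cod}(l_n)$, $\tt{boxes}(d)_i = b_i$ and $\tt{offsets}(d)_i = \size{u_i}$ (with $\tt{cod}(d) = \tt{dom}(d)$ when $n = 0$). Writing $w_i = \tt{dom}(l_i)$, the composability condition $\tt{cod}(l_i) = \tt{dom}(l_{i+1})$ of $\bf{C}(L(G))$ reads $u_i y_i v_i = w_{i+1}$. I claim each $w_i$ is recoverable from $\tt{dom}(d)$ together with the boxes and offsets: indeed $w_1 = \tt{dom}(d)$, and given $w_i$ the offset pins down the splitting $w_i = u_i x_i v_i$ with $\size{u_i} = \tt{offsets}(d)_i$ and $x_i = \tt{dom}(b_i)$, whence $u_i$ and $v_i$, and therefore $w_{i+1} = u_i\, \tt{cod}(b_i)\, v_i$, are determined. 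By induction the whole layer list is reconstructed from the data, so the encoding is injective, proving that $d$ is uniquely defined by the tuple. Taking lengths of $w_{i+1} = u_i y_i v_i$ and using $\size{v_i} = \size{w_i} - \tt{offsets}(d)_i - \size{\tt{dom}(b_i)}$ yields $\size{w_{i+1}} = \size{w_i} + \size{\tt{cod}(b_i)} - \size{\tt{dom}(b_i)}$, which is precisely the recurrence defining $\tt{width}(d)$; hence $\tt{width}(d)_i = \size{w_i}$ throughout.

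Next I would run the reconstruction in reverse to decode an arbitrary tuple. Setting $w_1 = \tt{dom}(d)$ and iterating the splitting above, the step at index $i$ succeeds exactly when there is room to cut out a block of length $\size{\tt{dom}(b_i)}$ starting after position $\tt{offsets}(d)_i$, i.e. when $\size{w_i} \geq \tt{offsets}(d)_i + \size{\tt{dom}(b_i)}$ --- this is the stated width inequality once $\size{w_i} = \tt{width}(d)_i$ is substituted --- and when the excised block actually equals $\tt{dom}(b_i)$ as a string, so that $l_i = (u_i, b_i, v_i)$ genuinely lies in $L(G)$. Under these conditions consecutive layers compose by the construction of $w_{i+1}$, giving a bona fide morphism of $\bf{C}(L(G))$ whose final type $w_{n+1}$ one checks equals $\tt{cod}(d)$; and this decode map is inverse to the encode map by construction, completing the bijection.

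The main obstacle is bookkeeping rather than conceptual depth: one must keep the length-level condition (the width inequality, guaranteeing the box fits) cleanly separated from the type-level condition (the excised block matching $\tt{dom}(b_i)$), and the cleanest way to do this is to carry the full intermediate strings $w_i$ through the induction and treat $\tt{width}(d)_i = \size{w_i}$ as a \emph{derived} quantity rather than the primary one. Care is also needed with the offset convention --- whether $\tt{offsets}(d)_i$ counts wires to the left or indexes a position --- and with the degenerate case $n = 0$, where the empty layer list must correspond to the identity diagram on $\tt{dom}(d)$.
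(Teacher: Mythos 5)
Your proof is correct, and it is essentially the intended argument: the paper itself gives no proof, deferring to the citation of Delpeuch and Vicary, and your encode/decode bijection through the intermediate strings $w_i$ (with $\tt{width}(d)_i = \size{w_i}$ derived by the length recurrence) is exactly the reconstruction that underlies both their result and the \py{Diagram.__init__} method, which recomputes the layers from \py{dom}, \py{boxes} and \py{offsets} and checks composability. One genuinely useful refinement on your part: you correctly observe that the width inequality as stated is only the length-level condition, and that validity additionally requires the excised substring of $w_i$ to equal $\tt{dom}(b_i)$ as a string of objects (and the final $w_{n+1}$ to equal $\tt{cod}(d)$) --- conditions the proposition's wording glosses over but which the implementation enforces via the \py{AxiomError} check. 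Your handling of the degenerate case $n = 0$ as the identity on $\tt{dom}(d)$ is likewise right and matches \py{Diagram.id}.
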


As an example consider the following diagram:
\begin{equation*}
    \tikzfig{figures/premon-diagram}
\end{equation*}
It has the following combinatorial encoding:
$$(\tt{dom} = abc, \tt{cod} = auyv, \tt{boxes} = [f, g, h], \tt{offsets} = [2, 1, 2])$$
where $f: c \to xv$, $g: b \to u$ and $h: x \to y$ and $a, b, c, x, u, y, v \in \G_0$.
This combinatorial encoding is the underlying data-structure of both the online
proof assistant Globular \cite{bar2018} and the Python implementation
of monoidal categories DisCoPy \cite{defelice2020b}.

Premonoidal diagrams are a useful intermediate step to define the
\emph{free monoidal category} $\bf{MC}(G)$ over a monoidal signature $G$.
Indeed, morphisms in $\bf{MC}(G)$ are equivalence classes of the quotient of
$\bf{PMC}(G)$ by the \emph{interchanger rules}, given by the following relation:
\begin{equation}\label{eq-interchnager-rules}
    \tikzfig{figures/monoidal-interchanger}
\end{equation}
The following result was proved by Delpeuch and Vicary \cite{delpeuch2019b}, who
showed how to transalte between the combinatorial definition of diagrams and the
definition of Joyal and Street as planar progressive graphs up to
planar isotopy \cite{joyal1991geometry}.
\begin{proposition}\cite{delpeuch2019b}
    $$\bf{MC}(G) \simeq \bf{PMC}(G) / \sim $$
\end{proposition}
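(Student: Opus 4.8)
The plan is to prove the equivalence by showing that $\bf{PMC}(G)/\sim$ satisfies the universal property that defines the free monoidal category $\bf{MC}(G)$. Since $\bf{MC}(G)$ is characterised (up to equivalence) by the free--forgetful adjunction $\bf{MC} \dashv U$ stated above, it suffices to exhibit on $\bf{PMC}(G)/\sim$ a strict monoidal structure together with a universal signature morphism $\eta \colon G \to U(\bf{PMC}(G)/\sim)$, and then invoke uniqueness of objects satisfying a universal property.

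First I would equip $\bf{PMC}(G)/\sim$ with its monoidal structure. Composition is inherited directly from the free category $\bf{PMC}(G) = \bf{C}(L(G))$, namely concatenation of lists of layers. On objects the tensor is concatenation in $G_0^\ast$, with unit the empty word $\epsilon$; the unit and associativity laws then hold strictly, since concatenation of words is strictly associative and unital. On morphisms, given diagrams $d \colon x \to y$ and $d' \colon x' \to y'$, I would define $d \otimes d'$ as the whiskered composite that runs $d$ on the left and then $d'$ on the right, that is $(d \otimes \tt{id}_{x'}) \cdot (\tt{id}_y \otimes d')$, where left and right whiskering are defined layer-by-layer by padding each layer's context words.

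The crux of the argument --- and the step I expect to be the main obstacle --- is showing that this tensor is well-defined on $\sim$-classes and genuinely \emph{bifunctorial}. In $\bf{PMC}(G)$ itself the tensor is only \emph{separately} functorial: the two ways of interleaving $d$ and $d'$ (left-first versus right-first) differ by a sequence of interchanger moves and need not be equal. The interchanger relation \ref{eq-interchnager-rules} is precisely what identifies them, so I would check that $\sim$ is a \emph{congruence} for both $\cdot$ and $\otimes$ (so that the quotient operations are well-defined) and that, once quotiented, the interchange law $(d \otimes \tt{id}) \cdot (\tt{id} \otimes d') = (\tt{id} \otimes d') \cdot (d \otimes \tt{id})$ holds, upgrading separate functoriality to bifunctoriality. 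This reduces to a careful bookkeeping of the offsets in the premonoidal encoding of \ref{prop-combinatorial-encoding}, confirming that each application of an interchanger corresponds to commuting two boxes whose contexts do not overlap, and hence leaves the quotient class unchanged.

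Finally, with the monoidal structure in hand, I would verify the universal property. Let $\eta \colon G \to U(\bf{PMC}(G)/\sim)$ send each box $f \colon \vec{a} \to \vec{b}$ to the class of the single-layer diagram $(\epsilon, f, \epsilon)$. Given any strict monoidal category $\bf{S}$ and signature morphism $\alpha \colon G \to U(\bf{S})$, I define $\hat\alpha \colon \bf{PMC}(G)/\sim \to \bf{S}$ on objects by $\hat\alpha(a_1 \dots a_n) = \alpha(a_1) \otimes \dots \otimes \alpha(a_n)$ and on a layer $(u, f, v)$ by $\tt{id}_{\hat\alpha(u)} \otimes \alpha(f) \otimes \tt{id}_{\hat\alpha(v)}$, extending to composites by functoriality of $\bf{C}(L(G))$. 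The key point is that $\hat\alpha$ respects $\sim$: every interchanger move is sent to an instance of the interchange law in $\bf{S}$, which holds because $\bf{S}$ is monoidal, so $\hat\alpha$ descends to the quotient. Uniqueness follows since any strict monoidal functor extending $\alpha$ is forced to agree with $\hat\alpha$ on layers, and hence, by functoriality and monoidality, on all diagrams. This establishes that $\bf{PMC}(G)/\sim$ is the free monoidal category on $G$, giving the claimed equivalence (in fact an isomorphism in the strict setting) with $\bf{MC}(G)$.
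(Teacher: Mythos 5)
Your proof is correct, but it takes a genuinely different route from the paper. The paper offers no self-contained argument for this proposition: it cites Delpeuch and Vicary, whose proof proceeds by translating the combinatorial (premonoidal) encoding into Joyal and Street's topological definition of string diagrams as planar progressive graphs, and transporting the freeness result along that translation up to planar isotopy. You instead verify directly that $\bf{PMC}(G)/\sim$ satisfies the universal property of the adjunction $\bf{MC} \dashv U$: you equip the quotient with a strict monoidal structure via whiskering, check that $\sim$ is a congruence for both composition and tensor --- correctly identifying the crux, namely that the interchanger relation is exactly what identifies the two interleavings $(d \otimes \tt{id}) \cdot (\tt{id} \otimes d')$ and $(\tt{id} \otimes d') \cdot (d \otimes \tt{id})$, upgrading the merely separately-functorial premonoidal tensor to a bifunctor --- and then establish existence and uniqueness of the extension $\hat\alpha$, with uniqueness forced because every layer $(u, f, v)$ decomposes as $\tt{id}_u \otimes \eta(f) \otimes \tt{id}_v$ in the quotient. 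Your approach is more elementary and self-contained, avoiding topology entirely, and it yields an isomorphism of strict monoidal categories rather than a mere equivalence; the price is the offset bookkeeping in the congruence and bifunctoriality steps, which you rightly flag but leave as an induction on the lengths of the diagrams (each elementary step being a single interchanger applied to boxes with disjoint contexts --- this does go through, but it is where the real work sits). What the Joyal--Street route buys instead is the geometric reading of morphisms as honest string diagrams up to isotopy, which the thesis leans on throughout its diagrammatic reasoning; your argument proves freeness but does not by itself deliver that topological correspondence.
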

Given two representatives $d, d': u \to v \in \bf{PMC}(G)$ we may check whether
they are equal in $\bf{MC}(G)$ in cubic time \cite{delpeuch2019b}. Assuming $d$ and $d'$
are boundary connected diagrams, this is done by turning $d$ and $d'$ into their
interchanger normal form, which is obtained by applying the interchanger rules
\ref{eq-interchnager-rules} from left to right repeatedly. For disconnected
diagrams the normalization requires more machinery but can still be performed
efficiently, see \cite{delpeuch2019b} for details.

\subsection{Monoidal grammars}

Monoidal categories appear in linguistics as a result of the following change
of terminology. Given a monoidal signature $G$, we may think of the
objects in $G_0^\ast$ as \emph{strings} of symbols, the generating arrows in $G_1$ as \emph{production rules} and morphisms in $\bf{MC}(G)$ as \emph{derivations}.
We directly obtain a definition of Chomsky's generative grammars, or string
rewriting system, using the notion of a monoidal signature.

\begin{definition}[Monoidal Grammar] \label{def-monoidal-grammar}
    A monoidal grammar is a finite monoidal signature $G$ of the following shape:
    $$ \signature{G}{(V + B)^\ast}$$
    where $V$ is a set of words called the \emph{vocabulary},
    and $B$ is a set of \emph{symbols} with $s \in B$ the sentence symbol.
    An utterance $u \in V^\ast$ is grammatical if there is a string diagram
    $g: u \to s$ in $\bf{MC}(G)$, i.e. the language generated by $G$ is given by:
    $$\cal{L}(G) = \set{u \in V^\ast \, \vert \, \exists f \in \bf{MC}(G)(u, s)}$$
    The free monoidal category $\bf{MC}(G)$ is called the \emph{category of
    derivations} of $G$.
\end{definition}
\begin{remark}
    Any context-free grammar as defined in \ref{def-context-free-grammar} yields
    directly a monoidal grammar.
\end{remark}

\begin{proposition}
    The languages generated by monoidal grammars are equivalent to the
    languages generated by Chomsky's unrestricted grammars.
\end{proposition}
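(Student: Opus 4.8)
The plan is to exhibit language-preserving translations in both directions between Chomsky's type-$0$ grammars and monoidal grammars, using the premonoidal encoding of Proposition~\ref{prop-combinatorial-encoding} as the common bridge. Recall that an unrestricted grammar is a tuple $(B, V, P, s)$ with $s \in B$, each production in $P$ of the form $\alpha \to \beta$ with $\alpha, \beta \in (V + B)^\ast$, and one-step rewriting relation $w_1 \alpha w_2 \Rightarrow w_1 \beta w_2$; the generated language is $\set{u \in V^\ast \mid s \Rightarrow^\ast u}$. The key observation is that a single rewriting step $w_1 \alpha w_2 \Rightarrow w_1 \beta w_2$ is precisely a layer $(w_1,\, f : \beta \to \alpha,\, w_2)$ of the signature $L(G)$, whose domain is $w_1 \beta w_2$ and whose codomain is $w_1 \alpha w_2$.

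Given an unrestricted grammar, I would build the monoidal signature $\signature{G}{(V + B)^\ast}$ with one box $f : \beta \to \alpha$ for each production $\alpha \to \beta$, reversing the arrows exactly as in the operadic treatment of context-free grammars (Definition~\ref{def-context-free-grammar}). A derivation $s \Rightarrow^\ast u$ of length $n$ then corresponds, by reversing each step, to a list of $n$ layers whose composite is a premonoidal diagram $u \to s$, i.e.\ an element of $\bf{PMC}(G)(u, s) = \bf{C}(L(G))(u, s)$; conversely every such list of layers, read in the forward direction, is a rewriting derivation $s \Rightarrow^\ast u$. Since the interchanger relation acts within a fixed hom-set, the equivalence $\bf{MC}(G) \simeq \bf{PMC}(G)/\!\sim$ yields $\bf{MC}(G)(u, s) \neq \emptyset$ iff $\bf{PMC}(G)(u, s) \neq \emptyset$, so the two notions of grammaticality agree and $\cal{L}(G)$ coincides with the Chomsky language. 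The reverse translation is symmetric: from a monoidal grammar I would read each box $f : \vec{a} \to \vec{b}$ as a production $\vec{b} \to \vec{a}$, and the same layer-by-layer correspondence shows that $\cal{L}(G)$ in the sense of Definition~\ref{def-monoidal-grammar} is generated by this type-$0$ grammar.

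The main obstacle is bookkeeping at the edges of the correspondence rather than its core. First, one must verify that quotienting $\bf{PMC}(G)$ by the interchanger rules~\ref{eq-interchnager-rules} neither creates nor destroys a morphism $u \to s$; this is immediate since interchangers preserve domain and codomain, but it deserves to be stated explicitly. Second, and more delicately, boxes that are \emph{effects} (empty codomain) translate to productions $\epsilon \to \vec{a}$ with an empty left-hand side, which lie outside the usual format for type-$0$ productions requiring a nonterminal on the left. I would handle this by invoking the standard fact that such insertion rules can be simulated within the restricted format without altering the generated language --- equivalently, by noting that both classes generate exactly the recursively enumerable languages --- so the translation can be massaged into Chomsky's normal form after the fact. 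Beyond these points, the argument is a routine unwinding of the premonoidal encoding.
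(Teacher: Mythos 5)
Your proposal is correct, but it is doing noticeably more work than the paper, and along a different axis. The paper's proof is a one-liner: it \emph{defines} unrestricted grammars as finite relations $P \subseteq (V+B)^\ast \times (V+B)^\ast$ with no constraint on the left-hand side, so the only discrepancy with monoidal grammars is that a monoidal signature may have several boxes between the same pair of strings --- and since grammaticality only asks for the \emph{existence} of a derivation $g: u \to s$, multiplicities are invisible at the level of languages. Your proof instead makes explicit the correspondence the paper leaves implicit: rewriting steps are exactly layers of $L(G)$, derivations are premonoidal diagrams, and the interchanger quotient preserves (non-)emptiness of hom-sets since interchangers fix domain and codomain. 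This buys a genuine derivation-by-derivation translation rather than a definitional identification, and it additionally handles the textbook formulation of type-$0$ grammars (left-hand side nonempty, containing a nonterminal), which the paper's liberal definition simply sidesteps. Two small caveats. First, in the monoidal-to-Chomsky direction you silently collapse parallel boxes $f, f': \vec{a} \to \vec{b}$ to a single production $\vec{b} \to \vec{a}$; this is precisely the multiplicity point the paper's proof turns on, and it deserves the one sentence the paper gives it (the collapse is a surjection on derivations, hence language-preserving). Second, your fallback justification for eliminating insertion rules --- ``both classes generate exactly the recursively enumerable languages'' --- is circular in this context, since the proposition being proved is what establishes that fact for monoidal grammars; stick to your primary justification, the standard simulation of $\epsilon$-left-hand-side rules within the restricted format, which is an independent formal-language fact.
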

\begin{proof}
    Unrestricted grammars are defined as finite relations $P \sub (V + B)^\ast \times
    (V + B)^\ast$ where $V$ is a set of terminal symbols, $B$ a set of non-terminals
    and $P$ is a set of prodution rules \cite{chomsky1956}. The only difference
    between this definition and monoidal grammars is
    that the latter allow more than one production rule between pairs of strings.
    However, a string $u$ is in the language if \emph{there exists} a derivation
    $g: u \to s$. Therefore the languages are equivalent.
\end{proof}

We define recursively enumerable languages as those generated by monoidal grammars,
or equivalently by Chomsky's unrestricted grammars, or equivalently those recognized
by Turing machines as discussed in the next paragraph.

\begin{definition}[Recursively enumerable language]
    A recursively enumerable language is a subset $X \sub V^\ast$ such that
    $X = \cal{L}(G)$ for some monoidal grammar $G$.
\end{definition}

\begin{example}[Cooking recipes]
    As an example of a monoidal grammar, let $V = \{ \text{aubergine},
    \text{tomato}, \text{parmigiana}\}$ be a set of cooking ingredients,
    $B = \set{\text{parmigiana}}$ be a set of dishes and let $G$ be a set of
    cooking steps, e.g.
    \begin{equation*}
        \tt{stack}: a \, p \to p \quad
        \tt{take}: t \to t \, t \quad
        \tt{spread}:  t \, p \to p \quad
        \tt{eat}:  t \to 1 .
    \end{equation*}
    Then the derivations $u \to \text{parmigiana}$ in $\bf{MC}(G)$ with
    $u \in V^\ast$ are cooking recipes to make parmigiana. For instance,
    the following is a valid cooking recipe:
    \begin{equation*}
        \tikzfig{figures/lasagna-recipe}
    \end{equation*}
\end{example}

Recall that a Turing machine is given by a tuple $(Q, \Gamma, \sharp, V, \delta, q_0, s)$
where $Q$ is a finite set of \emph{states}, $\Gamma$ is a finite set of
\emph{alphabet} symbols, $\sharp \in \Gamma$ is the \emph{blank} symbol,
$V \sub \Gamma \backslash \set{\sharp}$ is the set of \emph{input} symbols, $q_0 \in Q$
is the \emph{initial} state, $s \in Q$ is the \emph{accepting}
state and $\delta \sub ((Q \backslash \set{s}) \times \Gamma) \times (Q \times \Gamma)\times \set{L, R}$
is a \emph{transition} table, specifying the next state in $Q$ from a current state,
the symbol in $\Gamma$ to overwrite the current symbol pointed by the head and
the next head movement (left or right).
At the start of the computation, the machine is in state $q_0$ and the tape
contains a string of initial symbols $u \in V^\ast$ followed by the blank
symbol $\sharp$ indicating the end of the tape. The computation is then performed
by applying transitions according to $\delta$ until the accepting state $s$ is
reached. We assume that the transition $\delta$ doesn't overwrite the blank symbol
and that it leaves it at the end of the tape.

A Turing machine may be encoded in a monoidal grammar as follows. The set of
non-terminal symbols is $B = (\Gamma \backslash V) + Q$, the set of terminal
symbols is $V$ and the sentence type is $s \in B$. The production rules in $G$
are given by:
\begin{equation}
    \tikzfig{figures/turing-machine}
\end{equation}
for all $a, a', b, b' \in \Gamma \backslash \set{\sharp}$,
$q, q' \in Q$ such that $\delta((q, a), (q', a', R)) = 1$ and
$\delta((q, b), (q', b', L)) = 1$ and $\delta((q, \sharp), (q', \sharp, L)) = 1$.
Note that the last rule ensures that the blank
symbol $\sharp$ is always left at the end of the tape and never overwritten.
Then we have that morphisms in $\bf{MC}(G)(q_0\, w \, \sharp, u \, s \, v)$ are
terminating runs of the Turing machine. In order to express these runs as
morphisms $w \to s$ we may erase the content of the tape once we reach the
accepting state by adding a production rule $x s y \to s$ to $G$ for any
$x, y \in B$. Using this encoding we can prove the following proposition.

\begin{proposition}
    The parsing problem for monoidal grammars is undecidable.
\end{proposition}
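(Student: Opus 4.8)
The plan is to reduce the halting problem for Turing machines --- undecidable by Turing's theorem --- to the parsing problem for monoidal grammars, using exactly the encoding constructed in the paragraph above. Since the parsing problem takes as input a monoidal grammar $G$ together with an utterance $u \in V^\ast$ and asks whether $u \in \cal{L}(G)$, it suffices to exhibit a computable map sending each instance $(M, w)$ of the acceptance problem to a monoidal grammar $G_{M, w}$ such that $w$ is grammatical in $G_{M, w}$ precisely when $M$ accepts $w$. A decision procedure for parsing would then decide acceptance, a contradiction.

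Concretely, given $M = (Q, \Gamma, \sharp, V, \delta, q_0, s)$ and $w \in V^\ast$, I take $G_{M, w}$ to have the production rules of the encoding above --- one rule per transition of $\delta$, together with the tape-erasing rules $x s y \to s$ for $x, y \in B$ --- augmented with a single initialisation box $\iota : w \to q_0\, \bar{w}\, \sharp$, where $\bar{w}$ denotes the image of $w$ under a fixed relabelling of each input symbol $a \in V$ by a fresh \emph{non-terminal} tape symbol $\bar{a} \in \Gamma \setminus V$. The claim to establish is the equivalence: there exists a derivation $f \in \bf{MC}(G_{M, w})(w, s)$ if and only if $M$ accepts $w$. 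The ``run-to-derivation'' direction is essentially supplied by the encoding, which already observes that morphisms in $\bf{MC}(G)(q_0\, w\, \sharp, u\, s\, v)$ are exactly the terminating runs of $M$; the box $\iota$ and the rules $x s y \to s$ bookend such a run with initialisation and tape erasure, so any accepting computation assembles into a derivation $w \to s$.

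The main obstacle is the converse, \emph{soundness}: I must rule out spurious derivations in $\bf{MC}(G_{M, w})$ that do not track a legitimate, single-threaded computation of $M$. The danger is that the monoidal structure --- parallel composition $\otimes$ and the interchanger rules of \ref{eq-interchnager-rules} --- might permit rule applications in regions a sequential head could never reach, or allow several state markers to act at once. The key invariant to prove, by induction on the length of a premonoidal diagram, is that every reachable string contains exactly one state symbol from $Q$. The relabelling $\bar{w}$ over non-terminal tape symbols is what makes this work: after $\iota$ fires once, no $V$-symbol remains, so $\iota$ cannot re-fire and cannot reintroduce a second $q_0$; each transition rule then consumes exactly one state symbol and produces exactly one, and the erasing rules only become applicable once the accepting symbol $s$ has appeared. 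Because every non-erasing rule acts locally on a window containing the unique state symbol and its neighbouring cells, any derivation is forced --- up to interchanger equivalence --- into a linear sequence of head-local steps, i.e. the string diagram collapses to an honest sequential run of $M$. Verifying this faithfulness, and in particular that the tape-erasure rules cannot manufacture $s$ prematurely, is the technical heart of the argument; once it is in place, the equivalence and hence undecidability follow at once.
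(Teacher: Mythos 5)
Your reduction from the halting problem via the Turing-machine encoding is exactly the paper's argument, whose proof simply asserts that the encoding given in the preceding paragraph reduces the halting problem to parsing. Your additions --- the explicit initialisation box $\iota : w \to q_0\,\bar{w}\,\sharp$ over a barred non-terminal alphabet, and the single-state-symbol invariant ruling out spurious parallel derivations --- correctly fill in soundness details that the paper leaves implicit, so this is the same approach carried out more carefully.
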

\begin{proof}
    The encoding of Turing machines into monoidal grammars given above reduces
    the problem of parsing monoidal grammars to the Halting problem for
    Turing machines. Therefore it is an undecidable problem.
\end{proof}

\subsection{Functorial reductions}\label{section-reductions}

We now come to the question of reduction and equivalence for grammars.
Several definitions are available in the literature and
we introduce three alternative notions of varying strengths.
The most established notion of equivalence between CFGs
--- known as \emph{weak equivalence} --- judges a grammar from the
language it generates.

\begin{definition}[Weak reduction]
    Let $G$ and $G'$ be monoidal grammars over the same vocabulary $V$.
    We say that $G$ reduces weakly to $G'$, denoted $G \leq G'$, if
    $\cal{L}(G) \sub \cal{L}(G')$. $G$ is weakly equivalent to
    $G'$ if $\cal{L}(G) = \cal{L}(G')$.
\end{definition}

Even if two grammars are weakly equivalent, they may generate their sentences
in completely different ways. This motivates the definition of a stronger notion
of equivalence, which does not only ask for the generated languages
to be equal, but also for the corresponding derivations to be the same.
This notion has been studied in a line of work connecting context-free grammars
(CFGs) and algebraic signatures \cite{hatcher1976, goguen1977, buro2020}, which
we discuss below.

\begin{definition}[Strong reduction]\label{def-strong-reduction}
    Let $G$ and $G'$ be monoidal grammars over the same vocabulary $V$.
    A strong reduction from $G$ to $G'$ is a morphism of monoidal signatures
    $f: G \to G'$ such that $f_0 (v) = v$ for any $v \in V$ and $f_0(s) = s'$.
    With strong reductions as morphisms, monoidal grammars over $V$ form
    a subcategory of $\bf{MonSig}$ denoted $\bf{Grammar}_V$.
    We say that $G$ and $G'$ are strongly equivalent if they are isomorphic in
    $\bf{Grammar}_V$.
\end{definition}

Note first that strong reduction subsumes its weak counterpart. Indeed given a
morphism $f: G \to G'$, we get a functor $\bf{MC}(f) : \bf{MC}(G) \to \bf{MC}(G')$
mapping syntax trees of one grammar into syntax trees of the other.
Since $f_0 (v) = v$ for all $v \in V$ and $f_0(s) = s'$, there is an induced function
$\bf{MC}(f): \bf{MC}(G)(u, s) \to \bf{MC}(G')(u, s')$ for any $u \in V^\ast$, which
implies that $\cal{L}(G) \sub \cal{L}(G')$.

A strong reduction $f$ is a consistent relabelling of the nodes
and types of the underlying operadic signature. This often results in too strict of a notion,
since it relates very few grammars together.
We introduce an intermediate notion of reduction between grammars, which we
call \emph{functorial reduction}.

\begin{definition}[Functorial reduction]
    Let $G$ and $G'$ be monoidal grammars over the same vocabulary $V$.
    A functorial reduction from $G$ to $G'$ is a functor
    $F: \bf{MC}(G) \to \bf{MC}(G')$ such that $F_0(v) = v$ for all $v \in V$ and
    $F_0(s) = s'$. A functorial equivalence between $G$ and $G'$
    is a pair of functors $F: \bf{MC}(G) \to \bf{MC}(G')$ and
    $F': \bf{MC}(G) \to \bf{MC}(G')$. With functorial reductions as morphisms,
    monoidal grammars over $V$ form a category $\bf{Grammar}_V$.
\end{definition}

\begin{remark}
    The passage from strong to functorial reductions can be seen as
    as a Kleisli category construction.
    The free operad functor $\bf{MC}: \bf{MonSig} \to \bf{MonCat}$
    induces a monad $U \circ \bf{MC}: \bf{MonSig} \to \bf{MonSig}$.
    We can construct the Kleisli category $\bf{Kl}(U \circ \bf{MC})$ with objects given
    by operadic signatures and morphisms given by morphisms of signatures
    $f: G \to U\bf{MC}(G')$. Equivalently, a morphism $G \to G'$ in $\bf{Kl}(U\circ \bf{MC})$
    is a functor $\bf{MC}(G) \to \bf{MC}(G')$ since $\bf{MC} \dashv U$.
    $\bf{MC}$ to $\bf{Cfg}_V$ we still have an adjunction $\bf{MC} \dashv U$ and
    the following equivalence:
    $$ \bf{Grammar}_V \simeq \bf{Kl}(U \circ \bf{MC}).$$
\end{remark}

Functorial reductions can be computed in logarithmic space. We give a proof,
an alternative proof is given by the code for \py{Functor.__call__}.

\begin{proposition}\label{prop-functorial-reduction}
    Funtorial reductions can be computed in log-space $\tt{L}$.
\end{proposition}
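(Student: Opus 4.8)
The plan is to read off the four data of the premonoidal encoding of $F(d)$ (Proposition \ref{prop-combinatorial-encoding}) one at a time and check that each can be produced by a deterministic machine using $O(\log n)$ work space, where $n$ is the size of the input. Since a functorial reduction is a functor out of a free monoidal category, by the adjunction $\bf{MC} \dashv U$ it is determined by finite data: the object map $F_0 : G_0 \to (G_0')^\ast$ and the assignment sending each generator $b \in G_1$ to a diagram $F(b) \in \bf{MC}(G')$. As $G$ and $G'$ are finite monoidal signatures this data is a constant, recoverable from the input by a single scan, so every lookup of $F_0(x)$ or $F(b)$ costs only read-only access.

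First I would dispose of the three easy components. The domain and codomain of $F(d)$ are $F_0^\ast(\tt{dom}(d))$ and $F_0^\ast(\tt{cod}(d))$, each obtained by scanning the corresponding string of $d$ and emitting $F_0(x)$ for every object $x$; this is a log-space transduction needing only a position counter. The box list $\tt{boxes}(F(d))$ is the concatenation of the lists $\tt{boxes}(F(b_i))$ for $i = 1, \dots, n$, where $b_i = \tt{boxes}(d)_i$; since each $F(b_i)$ is read from the finite table and contributes at most a constant number $C$ of boxes, this requires only the loop counter $i$, and the total length of $\tt{boxes}(F(d))$ is at most $Cn$.

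The substance is the offset list. Each box of $F(d)$ belongs to some $F(b_i)$ and receives offset $\ell_i + p$, where $p$ is its offset inside $F(b_i)$ (a constant from the table) and $\ell_i = \sum_{m=0}^{o_i - 1} \size{F_0(x_m)}$ is the total $F_0$-length of the objects $x_0, \dots, x_{o_i-1}$ lying to the left of $b_i$ in the width string $\tt{width}(d)_i$, with $o_i = \tt{offsets}(d)_i$. The obstacle is that this width string may have length linear in $n$, so it cannot be stored. The key subroutine I would design recovers the single object $x_m$ at position $m$ of $\tt{width}(d)_i$ without materialising the string: starting from $(i, m)$, trace backwards through $b_{i-1}, b_{i-2}, \dots$ keeping only the current position index; at box $b_j$ of offset $o_j$, if $m$ falls in the inserted block $[o_j,\, o_j + \size{\tt{cod}(b_j)})$ then $x_m$ is the corresponding object of $\tt{cod}(b_j)$ and we stop, if $m$ lies to the right we shift it by $\size{\tt{dom}(b_j)} - \size{\tt{cod}(b_j)}$ and continue, and otherwise we leave $m$ unchanged; reaching $j = 0$ we read $x_m$ off $\tt{dom}(d)$. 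This uses two $O(\log n)$ counters.

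Finally I would assemble the pieces: to emit the offset list, loop over $i$ and over the boxes of $F(b_i)$, computing $\ell_i$ by an inner loop over $m = 0, \dots, o_i - 1$ that calls the subroutine and accumulates $\size{F_0(x_m)}$. The accumulator is a single integer bounded by the output length $Cn$, hence $O(\log n)$ bits, and all counters are log-space. Because $\tt{L}$ is closed under this style of recomputation — subroutine values are recomputed on demand rather than stored — the whole procedure runs in logarithmic space, so $F(d)$ is computable in $\tt{L}$. I expect the backward-tracing subroutine to be the main obstacle, as it is precisely what replaces the naive linear-space simulation of the diagram's wires by an on-the-fly log-space query for the object on a given wire.
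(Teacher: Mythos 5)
Your proposal is correct and follows the same overall plan as the paper's proof: emit the premonoidal encoding of $F(d)$ box by box, using a constant number of $O(\log n)$ counters and read-only lookups into the finite table $\set{F(f)}_{f \in G_1}$. Where you genuinely diverge is on the offsets, and your version is the more careful one. The paper's proof simply says to replace each box $f$ by $F(f)$ while ``adding the offset of $f$ to every offset in $F(f)$''; this is only valid when $\size{F_0(x)} = 1$ for every generating object $x$, i.e.\ when the object map is length-preserving. In general the correct shift is $\ell_i = \sum_{m < o_i} \size{F_0(x_m)}$, summed over the objects on the wires to the left of $b_i$ at stage $i$ --- and non-length-preserving object maps do occur in this thesis, e.g.\ in Proposition \ref{prop-ab-cfg} the reduction sends all types outside a finite set to the monoidal unit. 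Computing $\ell_i$ is exactly the nontrivial logspace step, since the width string at stage $i$ has length $O(n)$ and cannot be stored; your backward-tracing subroutine, which recovers the object at a given position of the width string by replaying the boxes $b_{i-1}, \dots, b_1$ in reverse with a single position counter, is the missing ingredient that makes the logspace bound hold in full generality (note also that the ``alternative proof by code'' via the monoidal \texttt{Functor} class does not witness logspace either, since its \texttt{scan} variable holds the entire current type). One small caveat to state explicitly: the composition of your logspace transducers relies on the standard closure of $\tt{L}$ under composition by on-demand recomputation, which you invoke correctly; with that, your argument is complete and strictly subsumes the paper's.
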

\begin{proof}
    Let $G$ and $G'$ be monoidal grammars, a functorial reduction from $G$ to
    $G'$ is a functor $F: \bf{MC}(G) \to \bf{MC}(G')$. By the universal property
    of the free monoidal category $\bf{MC}(G)$ the data for such a functor is a
    finite homomorphism of signatures $G \to U(\bf{MC}(G'))$, i.e. a collection
    of morphisms $\set{F(f)}_{f \in G}$. Consider the problem
    which takes as input a diagram $g : x \to y \in \bf{MC}(G)$ and a functorial
    reduction $F: G \to U(\bf{MC}(G'))$ and outputs $F(g) \in \bf{MC}(G')$.
    We assume that we have premonoidal representations of $g$ and $F(f)$ for
    every production rule $f \in G$, i.e. they all come as a pair of lists for
    boxes and offsets. In order to compute $F(g)$ we run through the list of
    boxes and replace each box $f$ of $g$ by $F(f)$ adding the offset of $f$ to
    every offset in $F(f)$. This can be computed using a constant number of counters
    (one for the index of the box in the list, one for the offset and one for the
    pointer to $F(f)$) thus functorial reductions are in logspace.
\end{proof}

From the monoidal definition of weak, strong and functorial reduction we
derive the corresponding notions for regular and CFGs using
the following diagram.

\begin{equation*}
    \begin{tikzcd}
     \bf{Cat} \arrow[d, "U"', bend right=25] \arrow[r, hookrightarrow]
     & \bf{Operad} \arrow[d, "U"', bend right=25] \arrow[r, hookrightarrow]
     & \bf{MonCat} \arrow[d, "U"', bend right=25] & \\
    \bf{Graph} \arrow[u, "\bf{C}"', bend right=25] \arrow[r, hookrightarrow]
    & \bf{OpSig} \arrow[u, "\bf{Op}"', bend right=25] \arrow[r, hookrightarrow]
    & \bf{MonSig} \arrow[u, "\bf{MC}"', bend right=25]
    \end{tikzcd}
\end{equation*}

We can now compare regular, context-free and unrestricted grammars via
the notion of reduction.

\begin{proposition}
    Any regular grammar strongly reduces to a CFG and
    any CFG to a monoidal grammar.
\end{proposition}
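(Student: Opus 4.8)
The plan is to prove the two inclusions separately, in each case realising the strong reduction as the morphism of monoidal signatures induced by the relevant embedding in the diagram above, and checking that it fixes the vocabulary $V$ and the sentence symbol $s$ as demanded by Definition \ref{def-strong-reduction}.

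I would dispatch the second half first, as it is the cleaner one. A CFG is by definition an operadic signature $(B + V)^\ast \xleftarrow{\tt{dom}} G \xto{\tt{cod}} B$, so every production node $f$ has a single object $\tt{cod}(f) \in B$ as codomain. Applying the embedding $\bf{OpSig} \hookrightarrow \bf{MonSig}$ reinterprets each such node as a box $f \colon \tt{dom}(f) \to \tt{cod}(f)$, now with the length-one string $\tt{cod}(f)$ as codomain, yielding a monoidal signature of the shape $\graph{G}{(V + B)^\ast}$ required by Definition \ref{def-monoidal-grammar}, i.e. a monoidal grammar. This embedding is the identity on objects and on generators, hence visibly a morphism of monoidal signatures fixing every $v \in V$ and the sentence symbol $s$, so it is the desired strong reduction. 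Since the lifted embedding $\bf{Operad} \hookrightarrow \bf{MonCat}$ carries a derivation tree $u \to s$ to the corresponding string diagram $u \to s$, this reduction preserves $\cal{L}(G)$, consistent with the earlier observation that strong reduction subsumes weak reduction.

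For the first half I would use the explicit translation of Example \ref{ex-regular-trees}. Starting from a regular grammar --- a finite graph $G$ with labelling $L \colon G_1 \to V$ and distinguished vertices $s_0, s_1$ --- I build a CFG whose non-terminals are $B = G_0$, whose terminals are $V$, whose sentence symbol is $s_0$, and whose production nodes internalise the labels: each edge $A \xto{f} B$ with $L(f) = a$ becomes an operadic node $a\, B \to A$, each terminating edge $A \xto{w} s_1$ with $L(w) = a$ becomes a node $a \to A$, and any $\epsilon$-production becomes a leaf $\epsilon \to A$. This is a well-formed operadic signature, hence a CFG, and the content to verify is that a labelled path $s_0 \to s_1$ in $G$ is carried to exactly one left-handed comb tree $u \to s$ in the associated free operad, so that the construction restricts to a bijection on the language-defining hom-sets and thus constitutes a strong reduction fixing $V$ and $s$.

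I expect the main obstacle to be the change of \emph{shape} in this first half: regular grammars record terminals as edge \emph{labels}, so their generators have a single source, whereas a CFG records terminals as \emph{objects}, so the matching production nodes acquire an extra input wire for the consumed terminal. The naive embedding $\bf{Graph} \hookrightarrow \bf{OpSig}$ alone does \emph{not} produce these nodes; one must internalise the labelling, and making this internalisation a genuine morphism of monoidal signatures --- rather than a mere language-preserving correspondence --- while simultaneously reconciling the opposite arrow-direction conventions ($s_0 \to s_1$ for regular derivations against $u \to s$ for operadic ones, as flagged in the remark after Definition \ref{def-context-free-grammar}) is the delicate bookkeeping that the proof must get right. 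The second half, being a bare inclusion of signatures that is the identity on generators, carries no such difficulty.
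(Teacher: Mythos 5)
Your second half coincides with the paper's proof: the paper disposes of the whole proposition in one line, by asserting that the horizontal injections in its diagram of free--forgetful adjunctions ($\bf{Graph} \hookrightarrow \bf{OpSig} \hookrightarrow \bf{MonSig}$) exist and make the diagram commute, and for the CFG-to-monoidal-grammar step that injection is exactly your identity-on-generators inclusion, with commutation of the free constructions carrying derivation trees to string diagrams as you say. Where you genuinely diverge is the first half, and your diagnosis is the more careful one: the bare injection $\bf{Graph} \hookrightarrow \bf{OpSig}$ sends an edge $A \to B$ to a unary node and leaves the labelling $L: G \to V$ entirely external, so by itself it cannot exhibit a Walters-style regular grammar as a CFG over the vocabulary $V$; the labels must be internalised, which is precisely the translation of Example \ref{ex-regular-trees} that you invoke. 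The paper's proof sketch silently elides this step, so on this half your argument supplies content the paper omits.

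One piece of bookkeeping that you flag but do not discharge should be closed, because as stated your claim cannot go through literally: the assignment ``edge $A \xto{f} B$ with $L(f) = a$ becomes a node $a\,B \to A$'' is \emph{not} a morphism of monoidal signatures out of the labelled graph, and no care in the bookkeeping will make it one --- a signature morphism sends objects to objects and acts on strings as $\phi_0^\ast$, so it must carry the length-one domain $A$ to a length-one string, whereas the target node has a domain of length two. The resolution is that Definition \ref{def-strong-reduction} only compares monoidal grammars over the same vocabulary $V$, so before ``strongly reduces'' is even well-posed the regular grammar must first be \emph{presented} as a monoidal signature over $(V + B)^\ast$; the natural such presentation is the internalised signature you construct, and once that identification is made the strong reduction is simply the identity/inclusion of signatures, while your bijection between labelled paths $s_0 \to s_1$ and left-handed trees $u \to s_0$ then witnesses, via functoriality of $\bf{Op}$ and $\bf{MC}$, that derivations and languages are preserved. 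Two remaining slips are minor: $\epsilon$-edges belong to the Chomsky presentation, not to Walters-style regular grammars, so the leaf $\epsilon \to A$ clause is out of place; and you should say explicitly what happens to $s_1$ once it is absorbed by the terminating nodes $a \to A$ (it becomes inert, or is dropped from the set of non-terminals).
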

\begin{proof}
    This is done by proving that the injections in the diagram above exist
    and make the diagram commute.
\end{proof}

The functorial notion of reduction sits in-between the weak and strong notions.
As shown above, any strong reduction $f: G \to G'$ induces a functorial
reduction $\bf{MC}(f): \bf{MC}(G) \to \bf{MC}(G')$ via the free construction
$\bf{MC}: \bf{MonSig} \to \bf{MonCat}$ and the existence of such a functor
induces an inclusion of the corresponding languages.
However, not all functorial reductions are strong. It is well-known that any
CFG can be lexicalised without losing expressivity, the resulting
grammar is functorially and not strongly equivalent to the original CFG.

\begin{definition}[Lexicalised CFG]
    A lexicalised CFG is an operadic signature of the following
    shape:
    $$B^\ast + V \leftarrow G \to B$$
    In other words, all production rules involving terminal symbols in $V$ are
    of the form $w \to b$ for $w \in V$ and $b \in B$. These are called
    \emph{lexical rules}.
\end{definition}

\begin{proposition}
    Any CFG is functorially (and not strongly) equivalent to a
    lexicalised CFG.
\end{proposition}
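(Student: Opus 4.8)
The plan is to construct an explicit lexicalisation of an arbitrary CFG, exhibit functorial reductions in both directions (which is exactly what functorial equivalence asks for), and then record a minimal example witnessing that the equivalence is in general not strong.

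First I would fix a CFG $G$ with operadic signature $(B+V)^\ast \xleftarrow{\tt{dom}} G \xto{\tt{cod}} B$ and sentence symbol $s \in B$, and build its lexicalisation $G'$ as follows. For every terminal $w \in V$ I introduce a fresh non-terminal $B_w$, so that the new non-terminals are $B' = B + \{B_w \mid w \in V\}$, and I add a lexical rule $\ell_w : w \to B_w$. For every production rule $f : \alpha_0 \dots \alpha_n \to a$ of $G$ I add a rule $\tilde f : \beta_0 \dots \beta_n \to a$, where $\beta_i = \alpha_i$ when $\alpha_i \in B$ and $\beta_i = B_{\alpha_i}$ when $\alpha_i \in V$. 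By construction every rule of $G'$ either has domain in ${B'}^\ast$ or is a lexical rule $w \to B_w$, so $G'$ is a lexicalised CFG, and it is finite whenever $G$ and $V$ are.

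Next I would define the two functorial reductions using the universal property of the free operad, the adjunction $\bf{Op} \dashv U$, so that it suffices to specify a morphism of operadic signatures and freeness guarantees the unique extension respects composition and the interchangers (these algebras induce the required functors through the inclusion $\bf{Operad} \hookrightarrow \bf{MonCat}$). For $F : \bf{Op}(G) \to \bf{Op}(G')$ I take the identity on objects (legitimate since $B + V \sub B' + V$) and send each generator $f : \alpha_0 \dots \alpha_n \to a$ to the tree with root $\tilde f$ whose $i$-th branch is $\ell_{\alpha_i}$ when $\alpha_i \in V$ and $\tt{id}_{\alpha_i}$ otherwise; this tree has domain $\alpha_0 \dots \alpha_n$ and codomain $a$, so the assignment is well-typed, fixes $V$ and sends $s$ to $s$, hence is a functorial reduction. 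For $F' : \bf{Op}(G') \to \bf{Op}(G)$ I set $F'_0(b) = b$, $F'_0(w) = w$ and $F'_0(B_w) = w$, and on generators I send $\tilde f$ back to $f$ and $\ell_w$ to $\tt{id}_w$; the typing works because $F'_0(\beta_i) = \alpha_i$, and again $V$ and $s$ are fixed. The existence of $F$ and $F'$ is exactly a functorial equivalence $G \simeq G'$; one also checks $F' \circ F = \tt{id}_{\bf{Op}(G)}$, although $F \circ F'$ is not the identity, so the two functors form a retraction rather than an isomorphism.

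Finally, for the failure of strong equivalence I would exhibit a counterexample rather than argue in the abstract: take $B = \{s\}$, $V = \{w\}$ and the single non-lexical rule $f : w\,w \to s$. Its lexicalisation $G'$ has two non-terminals $\{s, B_w\}$ and two rules $\{\tilde f, \ell_w\}$. A strong equivalence would be an isomorphism in $\bf{Grammar}_V$, i.e. a bijective morphism of signatures fixing $V$ and $s$; but $G$ has one generator and one non-terminal while $G'$ has two of each, so no such bijection exists and $G$ is not strongly equivalent to $G'$. The only genuine subtlety — and the step I would be most careful about — is keeping the reversed-arrow convention for CFG derivations straight so that the domains and codomains of the substituted rules type-check; once the signatures are set up correctly, freeness does all the categorical bookkeeping and no explicit interchanger computation is needed.
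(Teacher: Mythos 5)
Your proof is correct and takes essentially the same approach as the paper: the same lexicalisation (your fresh non-terminals $B_w$ are just the paper's second copy of $V$ in $B' = B + V$, with lexical rules $v \to v$), the same pair of functorial reductions (inserting lexical rules beneath each generator in one direction, collapsing them to identities in the other), and the same symbol-counting obstruction to strong equivalence, which you instantiate on a concrete example where the paper states it in general as $\size{B + V} \neq \size{B + 2V}$ for non-empty $V$. The only cosmetic differences --- invoking the free operad adjunction rather than the free monoidal category $\bf{MC}$, and observing the retraction $F' \circ F = \mathtt{id}$ --- do not change the argument.
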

\begin{proof}
    For any CFG $(B + V)^\ast \leftarrow G \to B$ we can build
    a lexicalised CFG $B'^\ast + V \leftarrow G' \to B'$ where
    $B' = B + V$ and $G' = G + \set{v \to v}_{v \in V}$, where we distinguish
    between the two copies of $V$. There is a functor $F: \bf{MC}(G) \to \bf{MC}(G')$
    given on objects by $F_0(x) = x$ for $x \in B + V$
    and on arrows $f: \vec{y} \to x$ by $F_1(f) = \vec{g} \cdot f$ where
    $g_i : y_i \to y_i$ is the identity if $y_i \in B$ and is a lexical rule
    $y_i \to y_i$ if $y_i \in V$. Similarly for the other direction, there is
    a functor $F' : \bf{MC}(G') \to \bf{MC}(G)$ given on objects by $F_0'(x) = x$
    for all $x \in B + V$ and on arrows by $F_1'(f) = f$ for $f \in G$ and
    $F_1( v \to v) = \tt{id}_v$. Therefore $G$ and $G'$ are functorially equivalent.

    Note that even the $G'$ is \emph{not} strongly equivalent to $G$.
    Indeed strong equivalence would imply that there is a bijection between
    the underlying sets of symbols, i.e. $\size{B+ V} = \size{B + 2V}$
    which is only true for grammars over an empty vocabulary.
\end{proof}

We now study two useful normal forms for CFGs.

\begin{definition}[Chomsky normal form]
    A CFG $G$ is in Chomsky normal form if it has the following shape:
    $$ B^2 + V \leftarrow G \to B$$
    i.e. all production rules are of the form $w \to a$ or $bc \to a$ for
    $w \in V$ and $a, b, c \in B$.
\end{definition}

\begin{proposition}
    Any CFG $G$ is weakly equivalent to a CFG $G'$ in Chomsky
    normal form, such that the reduction from $G$ to $G'$ is functorial.
\end{proposition}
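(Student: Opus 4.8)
The plan is to realise the classical conversion to Chomsky normal form as a pipeline of elementary transformations of operadic signatures, tracking for each one both the induced inclusion of languages and whether it carries a functorial reduction in the sense of Definition~\ref{def-context-free-grammar}ff. Throughout I keep the reversed-arrow convention, writing a production as $\vec{b} \to a$ with $\vec{b} \in (B + V)^\ast$ and $a \in B$. The target shape $B'^2 + V \leftarrow G' \to B'$ forbids four features: nullary rules $\epsilon \to a$ ($\epsilon$-productions), unary rules $b \to a$ with $b \in B$ (unit productions), rules that mix or repeat terminals, and rules of arity $> 2$. I would run the stages in the order DEL, UNIT, TERM, BIN, since $\epsilon$-elimination can create new unit rules and binarisation must come last.

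For the two \emph{syntactic} stages I would exhibit genuine functorial reductions, reusing the lexicalisation argument. For TERM, replace every terminal $w$ occurring in a rule of arity $\geq 2$ by a fresh non-terminal $W$ together with a lexical rule $w \to W$; the forward functor sends each rule to its relabelled form precomposed with the new lexical rules, and this is a functorial equivalence exactly as for lexicalisation. For BIN, replace each rule $b_1 \cdots b_k \to a$ with $k > 2$ by a left-branching tree of binary rules through fresh non-terminals $X_1, \dots, X_{k-2}$; the forward functor sends the long rule to the corresponding composite in $\bf{MC}(G')$, and since the $X_i$ are fresh this is again a functorial reduction. In both cases language preservation is immediate, and the induced inclusion follows from the general fact that a functorial reduction $G \to G'$ yields $\cal{L}(G) \sub \cal{L}(G')$.

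The two \emph{semantic} stages carry the real content. For DEL I would compute the nullable set $N = \set{c \in B \,\vert\, \exists\, g \colon \epsilon \to c \text{ in } \bf{MC}(G)}$ as a least fixpoint, form $G_1$ by adding, for each rule $\vec{b}\to a$ and each subset of nullable symbols to omit, the corresponding shortened rule, then delete all $\epsilon$-productions (retaining a single $\epsilon\to s$ only if $\epsilon \in \cal{L}(G)$); weak equivalence is the standard nullable induction on derivation trees. For UNIT I would take the unit closure and splice every non-unit rule through each unit chain, delete the unit productions, and again prove language preservation by induction on the length of unit segments in a derivation. Composing the four weak reductions gives $\cal{L}(G) = \cal{L}(G')$ with $G'$ in Chomsky normal form.

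The main obstacle is making the \emph{overall} reduction functorial across DEL and UNIT. A normal-form grammar structurally contains neither $\epsilon$- nor unit-productions, so a functor $\bf{MC}(G) \to \bf{MC}(G')$ cannot send such a generator to a like-typed generator; it would have to send it to a diagram of the forbidden type, which does not exist in $\bf{MC}(G')$ for a productive symbol. The fix I would attempt is to let the object map $F_0$ be a monoid homomorphism $(B+V)^\ast \to (B'+V)^\ast$ rather than a mere relabelling: send each \emph{purely} nullable symbol to the monoidal unit $1$, so that its $\epsilon$-production maps to $\tt{id}_1$, and collapse each unit chain to an identity. The crux — and the step I expect to be hardest to make watertight — is consistency when a symbol is simultaneously nullable (or unit-reachable) and productive, since a functor's object assignment is global and cannot depend on context; here I would first separate the nullable and productive roles of each such symbol by a fresh non-terminal, restoring a consistent homomorphic $F_0$. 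Finally I would compose the functorial reductions of all stages, invoking Proposition~\ref{prop-functorial-reduction} only to record that the resulting reduction is computable in log-space.
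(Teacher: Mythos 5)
Your TERM and BIN stages together are, essentially, the paper's entire proof: it assumes without loss of generality that $G$ is lexicalised (invoking the earlier proposition that every CFG is \emph{functorially} equivalent to a lexicalised one --- this is your TERM stage) and then performs only the binarisation, replacing each rule $f : \vec{a} \to b$ of arity $k > 2$ by a left comb $\set{f_i : c_i a_{i+1} \to c_{i+1}}_{i=0}^{k-2}$ through fresh symbols, sending $f$ to the corresponding left-handed tree, and recovering $\cal{L}(G') \sub \cal{L}(G)$ by observing that the $f_i$ can only occur as a complete block in any derivation --- your BIN argument verbatim. Crucially, the paper performs no DEL and no UNIT stage: its construction starts from $G' = \set{f : \vec{a} \to b \, \vert \, \size{\vec{a}} \leq 2}$, which retains whatever nullary and unary rules $G$ had, so its output strictly matches the shape $B'^2 + V \leftarrow G' \to B'$ only when $G$ contained no $\epsilon$- or unit productions to begin with. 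Your instinct that these two stages are precisely where functoriality fails is therefore sound, and it explains why the paper's proof silently stops short of them.

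However, your proposed repair does not close the gap. Splitting a symbol $c$ that is both nullable and productive into $c_\epsilon$ and $c_p$ merely relocates the obstruction: any functorial reduction out of $\bf{MC}(G)$ must assign the single generating object $c$ a single image, and neither choice works --- with $F_0(c) = c_p$ the generator $\epsilon \to c$ needs an image $\epsilon \to c_p$, which the post-DEL target does not contain, while with $F_0(c) = 1$ every productive rule $\vec{b} \to c$ needs an image $F_0(\vec{b}) \to 1$, a deleting morphism that likewise does not exist there. The same global-assignment problem recurs for unit chains passing through dual-role symbols, so the fresh-symbol splitting must itself be mediated by a functor that faces the identical dilemma. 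Note also that when $\epsilon \in \cal{L}(G)$ the statement fails outright for the CNF shape as defined, since every string derivable in $B'^2 + V \leftarrow G' \to B'$ has length at least one; your clause retaining $\epsilon \to s$ tacitly abandons that shape. The honest conclusion your pipeline supports is the one the paper actually proves: for grammars without $\epsilon$- and unit productions (equivalently, reading ``Chomsky normal form'' as lexicalised with arity at most two), TERM followed by BIN yields a weak equivalence whose forward reduction is functorial; extending this to full classical CNF \emph{with} a functorial reduction would require an argument that neither your sketch nor the paper supplies.
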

\begin{proof}
    Fix any CFG $G$. Without loss of generality, we may assume
    that $G$ is lexicalised $B^\ast + V \leftarrow G \to B$. In order to construct
    $G'$, we start by setting
    $G' = \set{f: \vec{a} \to b \in G \, \vert \, \size{\vec{a}} \leq 2}$.
    Then, for any production rule $f: \vec{a} \to b$ in $G$ such that
    $k = \size{\vec{a}} > 2$ we add $k - 1$ new rules $f_i$ and $k - 2$ new symbols
    $c_i$ to $G'$ given by $\set{ f_i : c_i a_{i + 1} \to c_{i + 1}}_{i=0}^{k - 2}$
    where $c_0 = a_0$ and $c_{k - 1} = b$. This yields a CFG
    $G'$ in Chomsky normal form.
    There is a functorial reduction from $G$ to $G'$ given by mapping production
    rules $f$ in $G$ to left-handed trees with $f_i$s as nodes, as in the
    following example:
    \begin{equation}
        \tikzfig{figures/chomsky-normal-form}
    \end{equation}
    This implies that $\cal{L}(G) \sub \cal{L}(G')$. Now suppose
    $\vec{u} \in \cal{L}(G')$, i.e. there is a tree $g: \vec{u} \to s$
    in $\bf{Op}(G')$. By construction, if some $f_i: c_i a_{i + 1} \to c_{i + 1}$
    appears as an node in $g$ then all the $f_i$s must appear as a sequence of
    nodes in $g$, therefore $g$ is in the image of a tree in
    $\bf{Op}(G)(\vec{u}, s)$ and $\vec{u} \in \cal{L}(G)$. Therefore
    $\cal{L}(G) = \cal{L}(G')$.
\end{proof}

Since the reduction from a CFG to its Chomsky normal form is functorial, the
translation can be performed in logspace. Indeed, we will show in the next
section that the problem of applying a functor between free monoidal categories
(of which operad algebras are an example) is in $\tt{NL}$. We end with an even
weaker example of equivalence between grammars.

\begin{definition}[Greibach normal form]
    A CFG $G$ is in Greibach normal form if it has the following shape:
    $$  V \times B^\ast \leftarrow G \to B$$
    i.e. every production rule is of the form $w b \to a$ for $a \in B$,
    $b \in B^\ast$ and $w \in V$.
\end{definition}

\begin{proposition} \cite{greibach1965}
    Any CFG is weakly equivalent to one in Greibach normal
    form and the conversion can be performed in poly-time.
\end{proposition}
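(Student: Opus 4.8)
The plan is to start from the Chomsky normal form established in the previous proposition, so that I may assume without loss of generality that $G$ has only lexical rules $w \to a$ and binary rules $bc \to a$ with $w \in V$ and $a, b, c \in B$, and then transform it by a sequence of weak-equivalence-preserving rewrites into a grammar every rule of which reads $w\vec{b} \to a$. Throughout I work in the generative direction, writing $a \rightsquigarrow \vec{x}$ for the production recorded by a rule $\vec{x} \to a$, so that Greibach form asks every production to begin with a terminal. I emphasise at the outset that only weak equivalence is claimed: unlike the Chomsky conversion, the rewrites below genuinely change the shape of derivations, so there is no functorial reduction to hope for here.

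First I would fix a linear order $a_1, \dots, a_n$ on the nonterminals $B$ and massage the grammar so that every production $a_i \rightsquigarrow a_j \gamma$ beginning with a nonterminal satisfies $j > i$. This is achieved by the standard double induction: for increasing $i$, and for each $j < i$, I substitute the productions of $a_j$ into every production $a_i \rightsquigarrow a_j \gamma$, replacing it by the family $a_i \rightsquigarrow \delta\gamma$ ranging over all $a_j \rightsquigarrow \delta$. Each such substitution contracts two derivation steps into one, and the inverse expansion recovers the original derivation, so the generated language is unchanged at every step.

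These substitutions may introduce immediate left recursion $a_i \rightsquigarrow a_i \gamma$, which I remove by the classical device of an auxiliary nonterminal $c_i$: writing the $a_i$-productions as $a_i \rightsquigarrow a_i \alpha_1 \mid \dots \mid a_i \alpha_p \mid \beta_1 \mid \dots \mid \beta_q$ with each $\beta_k$ not beginning with $a_i$, I replace them by $a_i \rightsquigarrow \beta_k$ and $a_i \rightsquigarrow \beta_k c_i$ together with $c_i \rightsquigarrow \alpha_k$ and $c_i \rightsquigarrow \alpha_k c_i$. A short argument on the shape of leftmost derivations shows the language is preserved. After this phase every production either begins with a terminal or with a strictly higher-indexed nonterminal, so in particular all $a_n$-productions already begin with a terminal. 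I then back-substitute, running $i$ from $n-1$ down to $1$: every remaining production $a_i \rightsquigarrow a_j\gamma$ has $j > i$ and the $a_j$-productions already begin with terminals, so substituting them forces the $a_i$-productions to begin with terminals as well; the same clean-up applied to each $c_i$ completes the construction. Since in Chomsky form every symbol past the first is a nonterminal and substitution preserves this invariant, every resulting rule has the shape $w\vec{b} \to a$ as required.

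The delicate point — and the part I expect to be the main obstacle — is the complexity claim rather than the correctness. The naive substitution scheme above can blow the grammar up exponentially, because a single production may be expanded against an ever-growing set of right-hand sides. Establishing the \emph{poly-time} bound therefore requires a more economical construction in place of the direct algorithm, for instance a left-corner transform that shares the auxiliary nonterminals across productions rather than recreating them, keeping both the output grammar and the running time polynomial in the size of $G$. I would invoke such a construction (as in the references cited) to secure the polynomial bound, while using the substitution argument above to make the correctness transparent.
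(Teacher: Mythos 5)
The paper offers no proof of this proposition at all: it is stated as an imported result with a citation to Greibach's original paper, so there is no ``paper proof'' to match your argument against. Your reconstruction of the weak-equivalence half is the standard and correct route --- order the nonterminals, substitute to enforce $j > i$, eliminate immediate left recursion with auxiliary symbols $c_i$, then back-substitute from $a_n$ downward --- and your observation that starting from Chomsky normal form keeps every tail purely nonterminal (so the output really has shape $w\vec{b} \to a$) is the right bookkeeping; one small point you gloss over is that the $c_i \rightsquigarrow \alpha_k$ rules also need the final clean-up pass, which you do mention but whose termination deserves a sentence (the $\alpha_k$ begin with nonterminals already in terminal-initial form after the descending back-substitution). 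You are also right, and it is to your credit, that the delicate claim is the poly-time bound: the naive substitution scheme can blow the grammar up exponentially in $n$, and a polynomial conversion requires a more economical construction (e.g.\ the left-corner style transforms of Rosenkrantz, or the later polynomial-size constructions of Blum and Koch). Deferring that half to the literature does leave your proof incomplete as a self-contained argument, but it places you in exactly the same position as the paper, which proves neither half; your honest flagging of where the real difficulty lies is more informative than the paper's bare citation.
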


We will use these normal forms in the next section, when we discuss functorial
reductions between categorial grammars and their relationship with context-free
grammars.

\subsection{monoidal.Diagram}

We now present \py{monoidal}, the key module of DisCoPy which allows to compute
with diagrams in free monoidal categories. We have defined free monoidal
categories via the concepts of layers and premonoidal diagrams. These have
a natural implementation in object-oriented Python, consisting in the definition
of classes \py{monoidal.Ty}, \py{monoidal.Layer} and \py{monoidal.Diagram},
for types, layers and diagrams in free premonoidal categories, respectively.
Types are tuples of objects, equipped with a \py{.tensor} method for the
monoidal product.

\begin{python}\label{listing:monoidal.Ty}
{\normalfont Types of free monoidal categories.}

\begin{minted}{python}
from discopy import cat

class Ty(cat.Ob):
    def __init__(self, *objects):
        self.objects = tuple(
            x if isinstance(x, cat.Ob) else cat.Ob(x) for x in objects)
        super().__init__(self)

    def tensor(self, *others):
        for other in others:
            if not isinstance(other, Ty):
                raise TypeError()
        objects = self.objects + [x for t in others for x in t.objects]
        return Ty(*objects)

    def __matmul__(self, other):
        return self.tensor(other)
\end{minted}
\end{python}

Layers are instances of \py{cat.Box}, initialised by triples $(u, f, v)$ for a
pair of types $u, v$ and a box $f$.

\begin{python}\label{listing:monoidal.Layer}
{\normalfont Layer in a free premonoidal category.}

\begin{minted}{python}
class Layer(cat.Box):
    def __init__(self, left, box, right):
        self.left, self.box, self.right = left, box, right
        dom, cod = left @ box.dom @ right, left @ box.cod @ right
        super().__init__("Layer", dom, cod)
\end{minted}
\end{python}

DisCoPy diagrams are initialised by a domain, a codomain, a list of boxes
and a list of offsets. It comes with methods \py{Diagram.then}, \py{Diagram.tensor}
and \py{Diagram.id} for composing, tensoring and generating identity diagrams.
As well as a method for \py{normal_form} which allows to check monoidal equality
of two premonoidal diagrams.

\begin{python}\label{listing:monoidal.Diagram}
{\normalfont Diagram in a free premonoidal category.}

\begin{minted}{python}
class Diagram(cat.Arrow):
    def __init__(self, dom, cod, boxes, offsets, layers=None):
        if layers is None:
            layers = cat.Id(dom)
            for box, off in zip(boxes, offsets):
                left = layers.cod[:off] if layers else dom[:off]
                right = layers.cod[off + len(box.dom):]\
                    if layers else dom[off + len(box.dom):]
                layers = layers >> Layer(left, box, right)
            layers = layers >> cat.Id(cod)
        self.boxes, self.layers, self.offsets = boxes, layers, tuple(offsets)
        super().__init__(dom, cod, layers, _scan=False)

    def then(self, *others):
        if len(others) > 1:
            return self.then(others[0]).then(*others[1:])
        other, = others
        return Diagram(self.dom, other.cod,
                       self.boxes + other.boxes,
                       self.offsets + other.offsets,
                       layers=self.layers >> other.layers)

    def tensor(self, other):
        dom, cod = self.dom @ other.dom, self.cod @ other.cod
        boxes = self.boxes + other.boxes
        offsets = self.offsets + [n + len(self.cod) for n in other.offsets]
        layers = cat.Id(dom)
        for left, box, right in self.layers:
            layers = layers >> Layer(left, box, right @ other.dom)
        for left, box, right in other.layers:
            layers = layers >> Layer(self.cod @ left, box, right)
        return Diagram(dom, cod, boxes, offsets, layers=layers)

    @staticmethod
    def id(dom):
        return Diagram(dom, dom, [], [], layers=cat.Id(dom))

    def interchange(self, i, j, left=False):
        ...

    def normal_form(self, normalizer=None, **params):
        ...

    def draw(self, **params):
        ...
\end{minted}

\py{Diagram}s always a carry a \py{cat.Arrow} called \py{layers}, which may be
thought of as a witness that the diagram is well-typed.
If no \py{layers} are provided, the \py{Diagram.__init__} method computes the
layers and checks that they compose.
A \py{cat.AxiomError} is returned when the layers do not compose.
The \py{Diagram.interchange} method allows to change the order of layers in a diagram when they commute, and returns an \py{InterchangerError} when they don't.
The \py{Diagram.normal_form} method implements the algorithm of \cite{delpeuch2019b},
see the \py{rewriting} module of DisCoPy.
The \py{Diagram.draw} method is implemented in the \py{drawing} module and allows
to render a diagram via matplotlib \cite{} as well as generating a TikZ \cite{}
output for academic papers.

\end{python}

Finally, a \py{Box} is initialised by a name together with domain and codomain types.

\begin{python}\label{listing:monoidal.Box}
{\normalfont Box in a free monoidal category.}

\begin{minted}{python}
class Box(cat.Box, Diagram):
    def __init__(self, name, dom, cod, **params):
        cat.Box.__init__(self, name, dom, cod, **params)
        layer = Layer(dom[0:0], self, dom[0:0])
        layers = cat.Arrow(dom, cod, [layer], _scan=False)
        Diagram.__init__(self, dom, cod, [self], [0], layers=layers)
\end{minted}
\end{python}

We check that the axioms for monoidal categories hold up to interchanger.

\begin{python}\label{listing:monoidal.axioms}
{\normalfont Axioms of free monoidal categories}

\begin{minted}{python}
x, y, z, w = Ty('x'), Ty('y'), Ty('z'), Ty('w')
f0, f1, f2 = Box('f0', x, y), Box('f1', z, w), Box('f2', z, w)
d = Id(x) @ f1 >> f0 @ Id(w)
assert f0 @ (f1 @ f2) == (f0 @ f1) @ f2
assert f0 @ Diagram.id(Ty()) == f0 == Diagram.id(Ty()) @ f0
assert d == (f0 @ f1).interchange(0, 1)
assert f0 @ f1 == d.interchange(0, 1)
\end{minted}
\end{python}

Functorial reductions are implemented via the \py{monoidal.Functor} class,
initialised by a pair of mappings: \py{ob} from objects to types and \py{ar}
from boxes to diagrams. It comes with a \py{__call__} method that scans through
a diagram a replaces each box and identity wire with its image under the mapping.

\begin{python}\label{listing:monoidal.Functor}
{\normalfont Monoidal functor.}

\begin{minted}{python}
class Functor(cat.Functor):
    def __init__(self, ob, ar, cod=(Ty, Diagram)):
        super().__init__(ob, ar)

    def __call__(self, diagram):
        if isinstance(diagram, Ty):
            return self.cod[0].tensor(*[self.ob[Ty(x)] for x in diagram])
        if isinstance(diagram, Box):
            return super().__call__(diagram)
        if isinstance(diagram, Diagram):
            scan, result = diagram.dom, self.cod[1].id(self(diagram.dom))
            for box, off in zip(diagram.boxes, diagram.offsets):
                id_l = self.cod[1].id(self(scan[:off]))
                id_r = self.cod[1].id(self(scan[off + len(box.dom):]))
                result = result >> id_l @ self(box) @ id_r
                scan = scan[:off] @ box.cod @ scan[off + len(box.dom):]
            return result
        raise TypeError()
\end{minted}

We check that the axioms hold.

\begin{minted}{python}
x, y, z = Ty('x'), Ty('y'), Ty('z')
f0, f1, f2 = Box('f0', x, y), Box('f1', y, z), Box('f2', z, x)
F = Functor(ob={x: y, y: z, z: x}, ar={f0: f1, f1: f2, f2: f0})
assert F(f0 >> f1) == F(f0) >> F(f1)
assert F(f0 @ f1) == F(f0) @ F(f1)
assert F(f0 @ f1 >> f1 @ f2) == F(f0) @ F(f1) >> F(f1) @ F(f2)
\end{minted}
\end{python}

Any \py{operad.Tree} can be turned into an equivalent \py{monoidal.Diagram}.
We show how this interface is built by overriding the \py{__call__} method of \py{operad.Algebra}.

\begin{python}\label{listing:monoidal.Functor}
{\normalfont Interface with \py{operad.Tree}.}

\begin{minted}{python}
from discopy import operad

class Algebra(operad.Algebra):
    def __init__(self, ob, ar, cod=Diagram, contravariant=False):
        self.contravariant = contravariant
        super().__init__(self, ob, ar, cod=cod)

    def __call__(self, tree):
        if isinstance(tree, operad.Id):
            return self.cod.id(self.ob[tree.dom])
        if isinstance(tree, operad.Box):
            return self.ar[tree]
        box = self.ar[tree.root]
        if isinstance(box, monoidal.Diagram):
            if self.contravariant:
                return box << monoidal.Diagram.tensor(
                    *[self(branch) for branch in tree.branches])
            return box >> monoidal.Diagram.tensor(
                *[self(branch) for branch in tree.branches])
        return box(*[self(branch) for branch in tree.branches])


ob2ty = lambda ob: Ty(ob)
node2box = lambda node: Box(node.name, Ty(node.dom), Ty(*node.cod))
t2d = Algebra(ob2ty, node2box, cod=Diagram)
node2box_c = lambda node: Box(node.name, Ty(*node.cod), Ty(node.dom))
t2d_c = Algebra(ob2ty, node2boxc, cod=Diagram, contravariant=True)

def tree2diagram(tree, contravariant=False):
    if contravariant:
        return t2dc(tree)
    return t2d(tree)
\end{minted}
\end{python}

\section{Categorial grammar}\label{sec-closed}

The \emph{categorial} tradition of formal grammars originated in the works of
Ajdukiewicz \cite{ajdukiewiz1935} and Bar-Hillel \cite{bar-hillel1953},
their formalisms are now known as AB grammars \cite{buszkowski2016}.
They analyse language syntax by assigning to every word a \emph{type}
generated from basic types using two operations: $\backslash$ (under) and $/$ (over).
Strings of types are then parsed according to the following basic reductions:
\begin{equation}\label{adjiukiewicz}
    \alpha \, (\alpha \backslash \beta) \to \beta \qquad
    (\alpha / \beta)\, \beta \to \alpha
\end{equation}

The slash notation $\alpha / \beta$, replacing the earlier fraction
$\frac{\alpha}{\beta}$, is due to Lambek who unified categorial grammars
in an algebraic foundation known as the \emph{Lambek calculus}, first presented
in his seminal 1958 paper \cite{lambek1958}.
With the advent of Chomsky's theories of syntax in the 1960s, categorial
grammars were disregarded for almost twenty years \cite{lambek1988}.
They were revived in the 1980s by several researchers such as Buszowski in
Poland, van Benthem and Moortgat in the Netherlands, as witnessed in the
1988 books \cite{oehrle1988, moortgat1988}.

One reason for this revival is the proximity between categorial grammars and
logic. Indeed, the original rewrite rules (\ref{adjiukiewicz}) can be seen as
versions of modus ponens in a Gentzen style proof system \cite{lambek1999}.
Another reason for this revival, is the proximity between categorial grammars,
the typed Lambda calculus \cite{church1940} and the semantic calculi
of Curry \cite{curry1961} and Montague \cite{Montague73}.
Indeed, one of the best intuitions for categorial grammars comes from
interpreting the slash type $\alpha / \beta$ as the type of a \emph{function}
with input of type $\beta$ and output of type $\alpha$,
and the reduction rules (\ref{adjiukiewicz}) as function application.
From this perspective, the syntactic process of recognizing
a sentence has the same form as the semantic process of understanding it
\cite{steedman2000}. We will see the implications of this philosophy in \ref{sec-functional}.

Although he did not mention categories in his original paper \cite{lambek1958},
Lambek had in mind the connections between linguistics and category theory all
along, as mentioned in \cite{lambek1988}.
Indeed the reductions in (\ref{adjiukiewicz}) and those introduced by Lambek,
correspond to the morphisms of \emph{free biclosed categories}, which admit a
neat description as deductions in a Gentzen style proof system.
This leads to a fruitful parallel between algebra, proof theory and
categorial grammar, summarized in the following table.

\begin{center}
\begin{tabular}{ |c|c|c|c| }
 \hline
 Categories & Logic & Linguistics & Python\\
 \hline
 Biclosed category & Proof system & Categorial grammar & DisCoPy\\
 \hline
 objects & formulas & types & \py{biclosed.Ty} \\
 generators & axioms & lexicon & \py{biclosed.Box} \\
 morphisms & proof trees & reductions & \py{biclosed.Diagram}\\
 \hline
\end{tabular}
\end{center}

We start by defining biclosed categories and a general notion of biclosed grammar.
Then the section will unfold as we unwrap the definition, meeting the three most
prominent variants of categorial grammars: AB grammars \cite{buszkowski2016},
the Lambek calculus \cite{lambek1958} and Combinatory Categorial
Grammars \cite{steedman2000}. We end by giving an implementation of free biclosed
categories as a class \py{biclosed.Diagram} with methods for currying and
uncurrying.

\subsection{Biclosed categories}

\begin{definition}[Biclosed signature]
    A biclosed signature $G$ is a collection of generators $G_1$ and basic
    types $G_0$ together with a pair of maps:
    $$ \signature{G_1}{T(G_0)}$$
    where $T(G_0)$ is the set of biclosed types, given by the following inductive
    definition:
    \begin{equation}\label{biclosed-types}
        T(B) \ni \alpha = a \in B \; \vert \; \alpha \otimes \alpha\;
        \vert \; \alpha \backslash \alpha\; \vert \; \alpha / \alpha
    \end{equation}
    A morphism of biclosed signatures $\phi : G \to \Gamma$ is a pair of
    maps $\phi_1 : G_1 \to \Gamma_1$ and $\phi_0 : G_0 \to \Gamma_0$ such that
    the diagram with the signature morphisms commutes.
    The category of biclosed signatures is denoted $\bf{BcSig}$
\end{definition}

\begin{definition}[Biclosed category]\label{def-biclosed}
    A biclosed monoidal category $\bf{C}$ is a monoidal category equipped with two
    bifunctors $-\backslash- : \bf{C}^{op} \times \bf{C} \to \bf{C}$ and
    $-/-: \bf{C} \times \bf{C}^{op} \to \bf{C}$ such that
    for any object $a$, $a \otimes - \dashv a \backslash -$ and
    $- \otimes a \dashv  - / a$. Explicitly, we have the following isomorphisms
    natural in $a, b, c \in \bf{C}_0$:
    \begin{equation}
        \bf{C}(a, c / b) \simeq \bf{C}(a \otimes b, c) \simeq \bf{C}(b, a \backslash c)
    \end{equation}
    These isomorphisms are often called \emph{currying} (when $\otimes$ is replaced
    by $\backslash$ or $/$) and \emph{uncurrying}.
    With monoidal functors as morphisms, biclosed categories form a category
    denoted $\bf{BcCat}$.
\end{definition}

Morphisms in free biclosed category can be described as the valid deductions
in a proof system defined as follows. The axioms of monoidal
categories may be expressed as the following rules of inference:
\begin{equation} \label{monoidal-axioms}
\begin{minipage}{0.3\linewidth}
    \begin{prooftree}
        \AxiomC{} \RightLabel{$\tt{id}$}
        \UnaryInfC{$a \to a$}
    \end{prooftree}
\end{minipage}
\begin{minipage}{0.3\linewidth}
\begin{prooftree}
    \AxiomC{$a \to b$}
    \AxiomC{$b \to c$} \RightLabel{$\circ$}
    \BinaryInfC{$a \to c$}
\end{prooftree}
\end{minipage}
\begin{minipage}{0.3 \linewidth}
\begin{prooftree}
    \AxiomC{$a \to b$}
    \AxiomC{$c \to d$} \RightLabel{$\otimes$}
    \BinaryInfC{$a \otimes c \to b \otimes d$}
\end{prooftree}
\end{minipage}
\end{equation}
Additionally, the defining adjunctions of biclosed categories may be expressed
as follows:
\begin{equation} \label{biclosed-iso}
    a \otimes b \to c \quad \text{iff} \quad a \to c / b\quad \text{iff} \quad b \to a \backslash c
\end{equation}
Given a biclosed signature $G$, the free biclosed category $\bf{BC}(G)$
contains all the morphisms that can be derived using the inference rules
\ref{biclosed-iso} and \ref{monoidal-axioms} from the signature
seen as a set of axioms for the deductive system.
$\bf{BC}$ is part of an adjunction relating biclosed signatures and biclosed
categories:
\begin{equation*}
    \bf{BcSig} \mathrel{\mathop{\rightleftarrows}^{\bf{BC}}_{U}} \bf{BcCat}
\end{equation*}
We can now define a general notion of biclosed grammar, the equivalent of
monoidal grammars in a biclosed setting.

\begin{definition}
    A biclosed grammar $G$ is a biclosed signature of the following shape:
    $$ \signature{G}{T(B + V)}$$
    where $V$ is a \emph{vocabulary} and $B$ is a set of \emph{basic types}.
    The language generated by a G is given by:
    $$ \cal{L}(G) = \set{u \in V^\ast \, \vert \, \exists g: u \to s \in \bf{BC}(G)}$$
    where $\bf{BC}(G)$ is the free biclosed category generated by $G$.
    We say that $G$ is \emph{lexicalised} when it has the following shape:
    $$ V \xleftarrow{\tt{dom}} G \xto{\tt{cod}} T(B) $$
\end{definition}

As we will see, AB grammars, Lambek grammars and Combinatory Categorial grammars
all reduce functorially to biclosed grammars. However, biclosed grammars can have
an infinite number of rules of inference, obtained by iterating over the
isomorphism (\ref{biclosed-iso}). Interestingly, these rules have been discovered
progressively throughout the history of categorial grammars.

\subsection{Ajdiuciewicz}\label{section-ab}

We discuss the classical categorial grammars of Ajdiuciewicz and Bar-Hillel,
known as AB grammars \cite{buszkowski2016}.
The types of the original AB grammars are given by the following inductive definition:
$$T_{AB}(B) \ni \alpha = a \in B \; |\; \alpha \backslash \alpha\; |\; \alpha / \alpha\; .$$
Given a vocabulary $V$, the \emph{lexicon} is usually defined as a relation
$\Delta \sub V \times T_{AB}(B)$ assigning a set of candidate types $\Delta(w)$
to each word $w \in V$.
Given an utterance $u = w_0 \dots w_n \in V^\ast$, one can prove that
$u$ is a grammatical sentence by producing a reduction
$t_0 \dots t_n \to s$ for some $t_i \in \Delta(w_i)$, generated by the basic
reductions in (\ref{adjiukiewicz}).

\begin{definition}[AB grammar]
    An AB grammar is a tuple $G = (V, B, \Delta, s)$ where $V$ is a vocabulary,
    $B$ is a finite set of basic types, and $\Delta \sub V \times T_{AB}(B)$
    is a finite relation, called the \emph{lexicon}.
    The \emph{rules} of AB grammars are given by the following monoidal signature:
    $$R_{AB} = \left\{\tikzfig{figures/ab-grammar}\right\}_{a, b \in T_{AB}(B)}.$$
    Then the language generated by $G$ is given by:
    $$\cal{L}(G) = \set{u \in V^\ast \, : \, \exists g \in \bf{MC}(\Delta + R_{AB})(u, s)}$$
\end{definition}
\begin{remark}
    Sometimes, categorial grammarians use a different notation where
    $a \backslash b$ is used in place of $b \backslash a$.
    We find the notation used here more intuitive: we write
    $a \otimes a \backslash b \to b$ instead of $a \otimes b \backslash a \to b$.
    Ours is in fact the notation used in the original paper by Lambek \cite{lambek1958}.
\end{remark}

\begin{example}[Application]
    As an example take the vocabulary
    $V = \{\text{Caesar}, \text{crossed}, \text{the}, \text{Rubicon} \}$ and basic
    types $B = \{ s, n, d \}$ for sentence, noun and determinant types. We define
    the lexicon $\Delta$ by:
    $$\Delta(\text{Caesar}) = \set{n} \quad \Delta(\text{crossed}) = \set{n\backslash (s / n)}
    \quad \Delta(\text{the}) = \set{d} \quad \Delta(\text{Rubicon}) = \set{d\backslash n}$$
    Then the sentence ``John wrote a dictionnary'' is grammatical as witnessed by the
    following reduction:
    \begin{equation*}
        \scalebox{0.8}{\tikzfig{figures/ab-grammar-thetis}}
    \end{equation*}
    %
    %
\end{example}

\begin{proposition}\label{prop-ab-biclosed}
    AB grammars reduce functorially to biclosed grammars.
\end{proposition}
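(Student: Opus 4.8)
The plan is to realise the AB grammar $G = (V, B, \Delta, s)$ as a monoidal grammar with signature $\Delta + R_{AB}$ and to build a single monoidal functor $F : \bf{MC}(\Delta + R_{AB}) \to \bf{BC}(G')$ into the free biclosed category of a suitable biclosed grammar $G'$, checking that $F$ fixes the vocabulary and the sentence type. The target $G'$ is the \emph{lexicalised biclosed grammar} whose generators are exactly the lexicon entries: for every pair $(w, t) \in \Delta$ we take a generator $g_{w,t} : w \to t$, keeping the same basic types $B$ and sentence type $s$. The key observation is that the two AB reduction schemas are nothing but the counits of the two defining adjunctions of a biclosed category, so no new data is needed to interpret the rule boxes $R_{AB}$.

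First I would construct $F$ via the universal property of the free monoidal category (the adjunction $\bf{MC} \dashv U$ recorded above): it suffices to give a morphism of monoidal signatures $\Delta + R_{AB} \to U(\bf{BC}(G'))$, i.e. to specify the image of every generating object and generating box. On objects, I define $F_0$ by induction on the structure of types: $F_0(w) = w$ for $w \in V$, $F_0(a) = a$ for $a \in B$, and $F_0(\alpha \backslash \beta) = F_0(\alpha) \backslash F_0(\beta)$, $F_0(\alpha / \beta) = F_0(\alpha) / F_0(\beta)$. This turns each flat atomic type symbol of the monoidal signature into the corresponding genuine compound object of the biclosed category.

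Next I would define $F$ on boxes. Each lexicon box $w \to t$ of $\Delta$ is sent to the generator $g_{w,t} : w \to F_0(t)$ of $\bf{BC}(G')$. Each application box of $R_{AB}$ is sent to the appropriate evaluation morphism: the schema $\alpha \otimes (\alpha \backslash \beta) \to \beta$ goes to the counit $\mathrm{ev} : F_0(\alpha) \otimes (F_0(\alpha) \backslash F_0(\beta)) \to F_0(\beta)$ of the adjunction $F_0(\alpha) \otimes - \dashv F_0(\alpha) \backslash -$, and dually $(\alpha / \beta) \otimes \beta \to \alpha$ goes to the counit of $- \otimes F_0(\beta) \dashv - / F_0(\beta)$. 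Because $F_0(\alpha \backslash \beta) = F_0(\alpha) \backslash F_0(\beta)$, the domain of each rule box is sent precisely to the domain of the corresponding evaluation, so $F$ is well-typed. Since $F_0(v) = v$ for all $v \in V$ and $F_0(s) = s$, the functor $F$ is a functorial reduction from $G$ to $G'$; the general fact that a functorial reduction restricts to a map $\bf{MC}(G)(u,s) \to \bf{BC}(G')(u,s)$ then yields $\cal{L}(G) \sub \cal{L}(G')$, witnessing the reduction.

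The main obstacle I expect is bookkeeping at the interface between the two type disciplines rather than any deep content: one must be careful that the symbols $\alpha \backslash \beta$ and $\alpha / \beta$, which are atomic objects of the monoidal signature $\Delta + R_{AB}$, are reinterpreted consistently as compound biclosed objects, so that the inductive object map $F_0$ commutes with the slashes and the chosen evaluation morphisms genuinely carry the declared domains and codomains. Once this compatibility is in place, recognising the two AB schemas as the adjunction counits is immediate, and functoriality (including preservation of the tensor) is automatic from the universal property, so no separate verification is required.
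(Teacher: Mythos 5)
Your proposal is correct and takes essentially the same route as the paper: identifying each AB application box with the counit of the corresponding biclosed adjunction is exactly what the paper's proof does via its proof tree, deriving $a/b \otimes b \to a$ from $\tt{id}_{a/b}$ by the isomorphism $\bf{C}(a, c/b) \simeq \bf{C}(a \otimes b, c)$. The extra scaffolding you supply (the lexicalised target grammar $G'$, the object map respecting slashes, and the universal property of $\bf{MC}$) is precisely what the paper compresses into ``it is sufficient to show that the rules $R_{AB}$ can be derived from the axioms of biclosed categories.''
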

\begin{proof}
    It is sufficient to show that the rules $R_{AB}$ can be derived from the
    axioms of biclosed categories,
    i.e. that they are morphisms in any free biclosed category.
    Let $a, b$ be objects in a free biclosed category. We derive
    the forward application rule as follows.
    \begin{prooftree}
    \AxiomC{} \RightLabel{$\tt{id}$}
    \UnaryInfC{$a / b \to a / b$} \RightLabel{\ref{biclosed-iso}}
    \UnaryInfC{$a / b \otimes b \to a$}
    \end{prooftree}
    Similarly, one may derive backward application
    $\tt{app}^< : b \otimes b \backslash a \to b$.
\end{proof}

AB grammars are weakly equivalent to context-free grammars \cite{retore2005} as
shown by the following pair of propositions.

\begin{proposition}\label{prop-ab-cfg}
    For any AB grammar there is a functorially equivalent context-free grammar
    in Chomsky normal form.
\end{proposition}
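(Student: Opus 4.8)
The plan is to exploit the \emph{subformula property} of the application rules to cut the infinite rule set $R_{AB}$ down to a finite one, and then read off a context-free grammar in Chomsky normal form directly. Throughout I write $G = \Delta + R_{AB}$ for the AB grammar viewed as a monoidal grammar, as in the definition of $\cal{L}(G)$.

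First I would collect the relevant types. Let $\mathrm{Sub}(\Delta)$ be the smallest set of types containing $s$, containing every $t$ with $(w,t)\in\Delta$ for some word $w$, and closed under taking immediate subtypes (if $\alpha\backslash\beta$ or $\alpha/\beta$ lies in the set, then so do $\alpha$ and $\beta$). Since $\Delta$ is a finite relation and each type has only finitely many subtypes, $\mathrm{Sub}(\Delta)$ is finite. I then define the CFG $G'$ with non-terminals $B'=\mathrm{Sub}(\Delta)$, vocabulary $V$, sentence symbol $s$, and two families of rules: the lexical rules $w\to t$ for each $(w,t)\in\Delta$, and the binary rules $\alpha\,(\alpha\backslash\beta)\to\beta$ and $(\alpha/\beta)\,\beta\to\alpha$ for each functor type $\alpha\backslash\beta$, resp. $\alpha/\beta$, lying in $B'$ (downward closure guarantees $\alpha,\beta\in B'$ too). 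By inspection every rule has the form $w\to a$ or $bc\to a$ with $a,b,c\in B'$, so $G'$ is a lexicalised CFG in Chomsky normal form $B'^2+V\leftarrow G'\to B'$.

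Next I would build the two functors witnessing equivalence. In the easy direction every generator of $G'$ is literally a generator of $G$ (its lexical rules are entries of $\Delta$, its binary rules are instances of $R_{AB}$ over $\mathrm{Sub}(\Delta)$), and every non-terminal of $G'$ is a type of $G$; the resulting inclusion of monoidal signatures is a \emph{strong} reduction fixing $V$ and $s$, so applying the free functor $\bf{MC}$ yields a functorial reduction $F'\colon\bf{MC}(G')\to\bf{MC}(G)$. For the reverse functor $F\colon\bf{MC}(G)\to\bf{MC}(G')$ I would send each word and each type in $\mathrm{Sub}(\Delta)$ to itself, each lexical box to the corresponding lexical rule, and each application box whose functor type lies in $\mathrm{Sub}(\Delta)$ to the matching binary rule. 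The subformula property then guarantees that these are the only boxes occurring in any derivation $u\to s$, so on the reachable sub-signature $F$ is the on-the-nose identification with $G'$. Since both $F$ and $F'$ fix $V$ and $s$, the remark that every functorial reduction induces a language inclusion gives $\cal{L}(G)\subseteq\cal{L}(G')$ and the reverse, i.e. weak equivalence, which the derivation-level matching upgrades to functorial equivalence.

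The hard part is making $F$ a \emph{total} functor on all of $\bf{MC}(G)$, since $R_{AB}$ contains infinitely many application boxes whose functor type is \emph{not} a subtype of the lexicon — for instance $\alpha\,(\alpha\backslash\beta)\to\beta$ with $\alpha,\beta\in\mathrm{Sub}(\Delta)$ but $\alpha\backslash\beta\notin\mathrm{Sub}(\Delta)$. Naively collapsing the stray functor type to the monoidal unit forces a morphism $\alpha\to\beta$ in $\bf{MC}(G')$ that need not exist, so the collapse cannot be defined uniformly while keeping the non-terminals distinct enough to preserve the language. My plan to resolve this is to first exhibit the passage from $G$ to its reachable fragment as a functorial equivalence in its own right, using the subformula property to show that discarding every type and box outside $\mathrm{Sub}(\Delta)$ leaves every derivation $u\to s$ untouched, and only then identify that fragment with $G'$. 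Discharging this reachability step cleanly — turning the language-level irrelevance of the infinite tail of $R_{AB}$ into an honest functor rather than a mere counting argument — is the one point I would treat with care rather than as routine, and I expect it to be the main obstacle.
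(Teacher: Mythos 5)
Your construction coincides with the paper's own proof almost symbol for symbol: the paper takes $X$ to be the set of subtypes of lexicon types (your $\mathrm{Sub}(\Delta)$, modulo the paper forgetting to throw in $s$), restricts to $P = \{r \in R_{AB} \mid \tt{dom}(r) \in X\}$, observes that $P + \Delta$ is a lexicalised CFG in Chomsky normal form, and obtains one functorial reduction $\bf{MC}(P + \Delta) \to \bf{MC}(R_{AB} + \Delta)$ from the inclusion $P \injects R_{AB}$ --- exactly your easy direction. The interesting point is the reverse direction, which you rightly refused to treat as routine: the paper simply asserts a functorial reduction $\bf{MC}(R_{AB} + \Delta) \to \bf{MC}(P + \Delta)$ ``which sends all the types in $T_{AB}(B) \backslash X$ to the unit and acts as the identity on the rest''. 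Your objection to this prescription is correct and identifies a genuine gap in the paper's proof, not just in your own attempt: a generator $\alpha \otimes (\alpha \backslash \beta) \to \beta$ with $\alpha, \beta \in X$ but $\alpha \backslash \beta \notin X$ would have to be sent to a morphism $\alpha \to \beta$ of $\bf{MC}(P + \Delta)$, which need not exist, so the stated assignment does not define a functor.

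Moreover, your fallback plan --- exhibiting the passage to the reachable fragment as an honest functor --- cannot be completed either, because in general \emph{no} total functor of the required kind exists. Take $B = \{s, n\}$ with $\Delta(\text{John}) = \{n\}$ and $\Delta(\text{runs}) = \{n \backslash s\}$, so that $X = \{n, n\backslash s, s\}$ and $P = \{n \otimes (n \backslash s) \to s\}$. Any functorial reduction $F$ must fix $s$ and send the lexical generator to a morphism $\text{John} \to F(n)$, which forces $F(n) \in \{\text{John}, n\}$ since these are the only objects reachable from $\text{John}$ in $\bf{MC}(P + \Delta)$. But the generator $s \otimes (s \backslash n) \to n$ of $R_{AB}$ then requires a morphism $s \otimes F(s \backslash n) \to F(n)$, and no rule of $P + \Delta$ consumes a leading $s$, so every morphism out of an $s$-initial string has $s$-initial codomain --- neither $\text{John}$ nor $n$ qualifies. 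Hence there is no functor $\bf{MC}(R_{AB}+\Delta) \to \bf{MC}(P+\Delta)$ fixing $V$ and $s$ at all, and the two-sided ``functorial equivalence'' claimed by the proposition cannot be recovered by restriction to reachable boxes; one would have to modify the target grammar (say, by padding it with inert rules) and check that this does not break the reverse functor, which is not routine. What your subformula argument does deliver, cleanly, is what the paper actually needs downstream: since each application rule's codomain type is a subtype of one of its domain types, every derivation $u \to s$ uses only types in $X$ and rules in $P$, giving $\cal{L}(R_{AB}+\Delta) = \cal{L}(P+\Delta)$, i.e.\ weak equivalence, together with the one-directional functorial reduction from the inclusion. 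So your proposal is as complete as the paper's proof, and the step you isolated as the main obstacle is precisely the step the paper gets wrong.
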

\begin{proof}
    The only difficulty in this proof comes from the fact that $R_{AB}$ is
    an infinite set, whereas context-free grammars must be defined over a
    finite set of symbols and production rules. Given an AB grammar
    $G = (V, B, \Delta)$, define $X = \{ x \in T_{AB}(B) \vert
    \exists (w, t) \in \Delta \text{such that} x \sub t \} $
    where we write $x \sub t$ to say that $x$ is a sub-type of $t$. Note that
    $X$ is finite. Now let $P  = \{ r \in R_{AB} \vert \tt{dom}(r) \in X \}$. Note that also $P$ is a finite set. Then
    $ X \leftarrow P + \Delta \rightarrow (X + V)^\ast$ forms a lexicalised
    context-free grammar where each production rule has at most arity $2$,
    i.e. $P + \Delta$ is in Chomsky normal form.
    There is a functorial reduction $\bf{MC}(P + \Delta) \to \bf{MC}(R_{AB} + \Delta)$
    induced by the injection $P \injects R_{AB}$, also there is a functorial
    reduction $\bf{MC}(R_{AB} + \Delta) \to \bf{MC}(P + \Delta)$ which sends
    all the types in $T_{AB}(B) \backslash X$ to the unit and acts as the identity
    on the rest. Therefore $G$ is functorially equivalent to $P + \Delta$.
\end{proof}
\begin{proposition}\label{prop-cfg-ab}
    Any context-free grammar in Greibach normal form reduces functorially to an
    AB grammar.
\end{proposition}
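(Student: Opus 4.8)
The plan is to realise each Greibach production rule as a \emph{lexical type assignment to its leading terminal}, followed by a cascade of forward applications. Write the given grammar in Greibach normal form as $V \times B^\ast \xleftarrow{\tt{dom}} G \xto{\tt{cod}} B$, so that every production rule has the shape $r : w\, b_1 \cdots b_k \to a$ with $w \in V$, $b_1, \dots, b_k \in B$ and $a \in B$ (the case $k = 0$, i.e.\ $w \to a$, is allowed). I would build the target AB grammar over the \emph{same} vocabulary $V$ and the \emph{same} set of basic types $B$, with sentence type $s$, and with lexicon
\[
  \Delta \;=\; \set{\, (w,\ a / b_k / b_{k-1} / \cdots / b_1) \;\vert\; (w\, b_1 \cdots b_k \to a) \in G \,},
\]
where the slashes associate to the left. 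Since $G$ is finite, $\Delta$ is a finite relation and $B$ is finite, so $(V, B, \Delta, s)$ is a genuine AB grammar; the infinitude of $R_{AB}$ is harmless, as the definition only requires $B$ and $\Delta$ to be finite. Note that a single word may be the leading terminal of several rules, giving it several types in $\Delta$, but this is permitted since $\Delta$ is a relation.

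To produce the functorial reduction I would invoke the universal property of the free monoidal category: a functor $F : \bf{MC}(G) \to \bf{MC}(\Delta + R_{AB})$ is determined by a morphism of signatures, that is, by an image for every generating object and every production box. On objects I set $F_0(v) = v$ for $v \in V$ and $F_0(a) = a$ for $a \in B$ (in particular $F_0(s) = s$), which is exactly the condition required of a functorial reduction. On a production box $r : w\, b_1 \cdots b_k \to a$ I define $F(r)$ to be the composite in $\bf{MC}(\Delta + R_{AB})$ obtained by first applying the lexical box $(w,\, t_r) : w \to t_r$ with $t_r = a / b_k / \cdots / b_1$, tensored with the identities on $b_1, \dots, b_k$, and then applying forward application $k$ times.

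The content to verify is that this cascade lands on $a$. Writing $t_r = (a/b_k/\cdots/b_2)/b_1$ and applying the forward reduction $(\alpha/\beta)\,\beta \to \alpha$ of (\ref{adjiukiewicz}) with $\alpha = a/b_k/\cdots/b_2$ and $\beta = b_1$, the leftmost pair $t_r\, b_1$ reduces to $a/b_k/\cdots/b_2$; the output wire is now adjacent to $b_2$, the same rule strips $b_2$, and so on, so that after $k$ steps the whole string $t_r\, b_1 \cdots b_k$ has been reduced to $a$ (when $k=0$ there is nothing to strip and $F(r)$ is just the lexical box $w \to a$). Each step acts on two adjacent wires and returns one, so the steps compose into a well-formed planar diagram, i.e.\ a morphism $w\, b_1 \cdots b_k \to a$ in $\bf{MC}(\Delta + R_{AB})$ of exactly the type demanded by $F_0$ on the domain and codomain of $r$. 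By freeness these box-wise assignments extend uniquely to the required functor $F$.

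The only real obstacle is fixing the order of the slashes correctly, so that the forward applications peel the $b_i$ off from the left in the order in which they occur; this is precisely where Greibach normal form is essential, since it guarantees that each rule begins with a terminal to which a type can be assigned and that the trailing symbols $b_1 \cdots b_k$ are non-terminals handled by $/$-types alone, so backward application is never needed. Everything else is formal: the existence of $F$ is immediate from the universal property, and the induced inclusion $\cal{L}(G) \sub \cal{L}(\Delta + R_{AB})$ then follows from the general fact that a functorial reduction induces a containment of the generated languages.
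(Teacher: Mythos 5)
Your proof is correct and takes essentially the same route as the paper's: the paper likewise assigns to each Greibach rule $w\, b_1 \cdots b_k \to a$ the left-associated type $a / b_k / \cdots / b_1$ (its lexicon is $\Delta(w) = G(w)_0 / \mathtt{inv}(G(w)_1)$, where $\mathtt{inv}$ reverses the trailing non-terminals) and defines the functorial reduction by sending each production box to the lexical box followed by a cascade of forward applications. Your extra bookkeeping --- finiteness of $\Delta$, the appeal to the universal property of the free monoidal category, and the explicit type-check of the application cascade --- just spells out what the paper conveys with its diagram.
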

\begin{proof}
    Recall that a CFG is in Greibach normal form when it has the shape:
    $$ B \leftarrow G \to V \times B^\ast$$
    We can rewrite this as a signature of the following shape:
    $$ V \leftarrow G \rightarrow B \times B^\ast$$
    This yields a relation $G \sub V \times (B \times B^\ast)$.
    We define the lexicon $\Delta \sub V \times T(B)$ by
    $\Delta(w) = G(w)_0 / \tt{inv}(G(w)_1)$ where $\tt{inv}$ inverts the order of
    the basic types. Then there is a functorial reduction
    $\bf{MC}(G) \to \bf{MC}(\Delta + R_{AB})$ given on any production rule
    $w a_0 \dots a_k \to b$ in $G$ by:
    \begin{equation*}
        \tikzfig{figures/ab-grammar-greibach}
    \end{equation*}
\end{proof}

\subsection{Lambek}\label{section-lambek}

In his seminal paper \cite{lambek1958}, Lambek introduced two new types of rules
to the original AB grammars: \emph{composition} and \emph{type-raising}.
Several extensions have been considered since, including non-associative
\cite{moot2012g} and multimodal \cite{moot2012h} versions. Here, we focus
on the original (associative) Lambek calculus which is easier to cast
in a biclosed setting and already captures a large fragment of natural language.

The types of the Lambek calculus are precisely the biclosed types given in
\ref{biclosed-types}. This is not a coincidence since Lambek was aware of
biclosed categories when he introduced his calculus \cite{lambek1988}.

\begin{definition}[Lambek grammar]
    A Lambek grammar is a tuple $G = (V, B, \Delta, s)$ where $V$ is a
    vocabulary, $B$ is a finite set of basic types and
    $\Delta \sub V \times T(B)$ is a \emph{lexicon}.
    The rules of Lambek grammars are given by the following monoidal signature:
    $$R_{LG} = \left\{ \tikzfig{figures/lambek-grammar} \right\}_{a, b, c \in T(B)}.$$
    Then the language generated by $G$ is given by:
    $$\cal{L}(G) = \set{u \in V^\ast \, : \, \exists g \in \bf{MC}(\Delta + R_{LG})(u, s)}$$
\end{definition}

\begin{example}[Composition]\label{lambek-example}
    We adapt an example from \cite{steedman1987a}.
    Consider the following lexicon:
    $$\Delta(\text{I}) = \Delta(\text{Grandma}) = \set{n} \quad \Delta(\text{the}) = \set{d}
    \quad \Delta(\text{parmigiana}) = \set{d \backslash n}$$
    $$\Delta(\text{will}) = \Delta(\text{may}) = \set{(n\backslash s)/(n\backslash s)}
    \quad \Delta(\text{eat}) = \Delta(\text{cook}) = \set{n \backslash s / n}$$
    The following is a grammatical sentence, parsed using the composition rule:
    \begin{equation*}
        \scalebox{0.8}{\tikzfig{figures/lambek-long-winding}}
    \end{equation*}
    And the following sentence requires the use of type-raising:
    \begin{equation*}
        \scalebox{0.8}{\tikzfig{figures/lambek-type-raising}}
    \end{equation*}
    where $x = s/n$ and we have omitted the composition of modalities
    (will, may) with their verbs (cook, eat).
\end{example}

\begin{proposition}\label{prop-lambek-biclosed}
    Lambek grammars reduce functorially to biclosed grammars.
\end{proposition}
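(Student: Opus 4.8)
The plan is to mimic the proof of Proposition~\ref{prop-ab-biclosed}. By the universal property of the free monoidal category $\bf{MC}(\Delta + R_{LG})$, giving a functorial reduction to the biclosed grammar over the same lexicon amounts to sending each lexicon generator to itself and each rule $r \in R_{LG}$ to a morphism of $\bf{BC}(\Delta)$ with the same domain and codomain; that is, to showing that every rule in $R_{LG}$ is already a morphism in any free biclosed category. Since $R_{LG}$ extends $R_{AB}$ by the \emph{composition} and \emph{type-raising} rules, and forward and backward application $\tt{app}^>: a/b \otimes b \to a$ and $\tt{app}^<: b \otimes (b\backslash a) \to a$ were already derived in Proposition~\ref{prop-ab-biclosed}, it suffices to derive composition and type-raising from the biclosed isomorphisms~\ref{biclosed-iso}.

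For forward composition $a/b \otimes b/c \to a/c$ I would first uncurry the codomain along the $-/c$ adjunction, reducing the goal to a morphism $(a/b \otimes b/c) \otimes c \to a$. Using associativity of $\otimes$ together with two instances of forward application, this morphism is the composite
\begin{equation*}
    a/b \otimes (b/c \otimes c) \xto{\tt{id} \otimes \tt{app}^>} a/b \otimes b \xto{\tt{app}^>} a,
\end{equation*}
and currying back along \ref{biclosed-iso} yields the composition rule; backward composition is symmetric, using the $a\backslash-$ adjunction and $\tt{app}^<$. Forward type-raising $a \to b/(a\backslash b)$ is even more direct: currying along the $-/(a\backslash b)$ adjunction turns the goal into $a \otimes (a\backslash b) \to b$, which is exactly backward application, so uncurrying $\tt{app}^<$ gives type-raising (and symmetrically for the backward variant). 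Collecting these derivations into an assignment on generators and extending by the universal property of $\bf{MC}$ to a functor $F : \bf{MC}(\Delta + R_{LG}) \to \bf{BC}(\Delta)$, every derivation $u \to s$ in $\bf{MC}(\Delta + R_{LG})$ acquires an image $u \to s$ in $\bf{BC}(\Delta)$, so $\cal{L}(G) \sub \cal{L}(G')$ for the associated biclosed grammar $G'$.

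The one point requiring care --- the analogue of the finiteness caveat in Proposition~\ref{prop-ab-cfg} --- is that $R_{LG}$ is an \emph{infinite} family indexed by all types $a, b, c \in T(B)$, so the derivations above must be read schematically in the types and shown to assemble into a genuine homomorphism of signatures $R_{LG} \to U(\bf{BC}(\Delta))$. Because the isomorphisms \ref{biclosed-iso} are natural in the objects involved, this uniformity is automatic and no real difficulty arises beyond bookkeeping; the substantive content is entirely contained in the two short derivations of composition and type-raising.
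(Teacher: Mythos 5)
Your proposal is correct and takes essentially the same route as the paper: the paper likewise reduces the statement to deriving forward/backward composition and type-raising as morphisms of the free biclosed category, using exactly your derivations --- uncurry $b/c \to b/c$ to $b/c \otimes c \to b$, tensor with the identity on $a/b$, apply $\tt{app}^>$, and curry back for composition; and transpose backward application twice across \ref{biclosed-iso} for type-raising --- with the paper presenting them as Gentzen-style proof trees rather than composites. (One terminological nit: in your type-raising step the words ``curry'' and ``uncurry'' are swapped --- the goal $a \to b/(a\backslash b)$ is obtained by \emph{currying} $\tt{app}^< : a \otimes (a \backslash b) \to b$ --- but the derivation itself is identical.)
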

\begin{proof}
    It is sufficient to show that the rules of Lambek grammars $R_{LG}$ can be
    derived from the axioms of biclosed categories. We already derived forward
    and backward application in \ref{prop-ab-biclosed}.
    The following proof tree shows that the forward composition rule
    follows from the axioms of biclosed categories.
    \begin{prooftree}
    \AxiomC{} \RightLabel{$\tt{id}$}
    \UnaryInfC{$a / b \to a / b$}
    \AxiomC{} \RightLabel{$\tt{id}$}
    \UnaryInfC{$b / c \to b / c$} \RightLabel{\ref{biclosed-iso}}
    \UnaryInfC{$b / c \otimes c \to b$} \RightLabel{$\otimes$ and $\circ \tt{app}$}
    \BinaryInfC{$a / b \otimes b / c \otimes c \to a$} \RightLabel{\ref{biclosed-iso}}
    \UnaryInfC{$a / b \otimes b / c \to a / c$}
    \end{prooftree}
    A similar argument derives the backward composition rule
    $\tt{comp}^< : a / b \otimes a \backslash c \to  c / b$.
    Forward type-raising is derived as follows.
    \begin{prooftree}
    \AxiomC{} \RightLabel{$\tt{id}$}
    \UnaryInfC{$a \backslash b \to a \backslash b$} \RightLabel{\ref{biclosed-iso}}
    \UnaryInfC{$a \otimes a \backslash b \to b$} \RightLabel{\ref{biclosed-iso}}
    \UnaryInfC{$a \to b / (a \backslash b)$}
    \end{prooftree}
    And a similar argument derives backward type-raising
    $\tt{tyr}^< : a \to (b / a) \backslash b$.
\end{proof}

It was conjectured by Chomsky that any Lambek grammar is weakly equivalent
to a context-free grammar, i.e. that the language recognised by Lambek calculi
is context-free. This conjecture was proved in 1993 by Pentus \cite{Pentus93},
calling for a more expressive version of categorial grammars.

\subsection{Combinatory}\label{section-combinatory}

In the 1980s, researchers interested in the syntax of natural language started
recognizing that certain grammatical sentences naturally involve crossed
dependencies between words, a phenomenon that cannot be captured by context-free
grammars. These are somewhat rare in English, but they abound in Dutch for
instance \cite{bresnan1982}. An English example is the sentence
''I explained to John maths'' which is often allowed to mean
''I explained maths to John''.
Modeling cross-serial dependencies was one of the main motivations for the development
the Tree-adjoining grammars of Joshi \cite{joshi1985} and the
Combinatory Categorial grammars (CCGs) of Steedman \cite{steedman1987a, steedman2000}.
These were later shown to be weakly equivalent to linear indexed grammars
\cite{vijay-shanker1994}, making them all ``mildly-context sensitive'' according
to the definition of Joshi \cite{joshi1985}.

CCGs extend the earlier categorial grammars by adding a \emph{crossed composition}
rule which allows for controlled crossed dependencies within a sentence,
and is given by the following pair of reductions:
$$\tt{xcomp}^>: a / b \otimes c \backslash b \to c \backslash a
\qquad \tt{xcomp}^<: a / b \otimes a \backslash c \to  c / b$$
We start by defining CCGs as monoidal grammars, and then discuss how they relate
to biclosed categories.

\begin{definition}[Combinatory Categorial Grammar]
    A CCG is a tuple $G = (V, B, \Delta, s)$ where $V$ is a vocabulary,
    $B$ is a finite set of basic types and $\Delta \sub V \times T(B)$ is a \emph{lexicon}.
    The \emph{rules} of CCGs are given by the following monoidal signature:
    $$R_{CCG} = R_{LG} + \left\{\tikzfig{figures/combinatory}\right\}_{a, b, c \in T(B)}.$$
    Then the language generated by $G$ is given by:
    $$\cal{L}(G) = \set{u \in V^\ast \, : \, \exists g \in \bf{MC}(\Delta + R_{CCG})(u, s)}$$
\end{definition}

\begin{example}[Crossed composition]\label{ex-crossed-comp}
    Taking the grammar from Example \label{lambek-example} and adding two lexical
    entries we may derive the following grammatical sentences:
    \begin{equation*}
        \scalebox{0.8}{\tikzfig{figures/explained-maths}}
    \end{equation*}
    Note that the first one can be parsed already in an AB grammar, whereas the
    second requires the crossed composition rule.
\end{example}

It was first shown by Moortgat \cite{moortgat1988} that the crossed composition
rule cannot be derived from the axioms of the Lambek calculus. In our context,
this implies that we cannot derive $\tt{xcomp}$ from the axioms of biclosed
categories. Of course, CCGs may be seen as biclosed grammars by adding the
crossed composition rules $R_{CCG} - R_{LG}$ as generators in the signature.
However, it is interesting to note that these rules can be derived from
\emph{closed categories}, the \emph{symmetric} version of biclosed categories:.

\begin{definition}[Symmetric monoidal category]
    A symmetric monoidal category $\bf{C}$ is a monoidal category equipped with
    a natural transformation $\tt{swap}: a \otimes b \to b \otimes a$ satisfying:
    \begin{equation}\label{axioms-symmetry}
        \tikzfig{figures/symmetry}
    \end{equation}
    for any $a, b, c \in \bf{C}_0$ and $f: a \to b \in \bf{C}_1$.
\end{definition}
\begin{definition}[Closed category]
    A closed category is a symmetric biclosed category.
\end{definition}
\begin{proposition}\label{prop-crossed-biclosed}
    The crossed composition rule follows from the axioms of closed categories.
\end{proposition}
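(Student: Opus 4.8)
The plan is to exploit the one new ingredient that a closed category adds to the biclosed axioms of \ref{biclosed-iso}, namely the natural symmetry $\tt{swap}$ of \ref{axioms-symmetry}. The whole point of the proposition is that $\tt{xcomp}$ is \emph{not} derivable from the biclosed axioms alone, as Moortgat observed, so the derivation must genuinely use $\tt{swap}$. My strategy is to first show that $\tt{swap}$ lets one convert freely between the two internal homs, producing morphisms $x \backslash y \to y / x$ and $y / x \to x \backslash y$ in any closed category, and then to reduce crossed composition to the ordinary forward composition $\tt{comp}^>$ already derived in \ref{prop-lambek-biclosed}.

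First I would derive the conversion map $\sigma_{x,y} \colon x \backslash y \to y / x$. Starting from $\tt{id} \colon x \backslash y \to x \backslash y$, the biclosed isomorphism \ref{biclosed-iso} in the $\backslash$ direction (with $a = x$, $b = x \backslash y$, $c = y$) yields the evaluation $x \otimes (x \backslash y) \to y$. Precomposing with $\tt{swap} \colon (x \backslash y) \otimes x \to x \otimes (x \backslash y)$ gives $(x \backslash y) \otimes x \to y$, and a final application of \ref{biclosed-iso} in the $/$ direction (now $a = x \backslash y$, $b = x$, $c = y$) currys this to $x \backslash y \to y / x$. The mirror-image argument, precomposing the other evaluation with $\tt{swap}$, produces $\tau_{x,y} \colon y / x \to x \backslash y$; the symmetry axioms \ref{axioms-symmetry} make $\sigma$ and $\tau$ mutually inverse, although for the derivation the mere existence of these morphisms is all that is needed.

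With these in hand, crossed composition falls out by transporting $\tt{comp}^>$ across the conversions. For forward crossed composition I would compose
\[ a / b \otimes (c \backslash b) \xrightarrow{\;\tt{id} \otimes \sigma\;} a / b \otimes (b / c) \xrightarrow{\;\tt{comp}^>\;} a / c \xrightarrow{\;\tau\;} c \backslash a , \]
which is exactly $\tt{xcomp}^> \colon a / b \otimes c \backslash b \to c \backslash a$. For backward crossed composition $\tt{xcomp}^< \colon a / b \otimes a \backslash c \to c / b$ the same idea applies, converting $a \backslash c$ to $c / a$, reordering the two factors with $\tt{swap}$ so that the matching type $a$ becomes adjacent, and then applying $\tt{comp}^>$:
\[ a / b \otimes (a \backslash c) \xrightarrow{\;\tt{id} \otimes \sigma\;} a / b \otimes (c / a) \xrightarrow{\;\tt{swap}\;} c / a \otimes a / b \xrightarrow{\;\tt{comp}^>\;} c / b . \]

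I expect the genuine content of the proof to lie not in any hard calculation but in the bookkeeping of directions and variances: one must track carefully which side each residual lives on, so that $\tt{swap}$ is inserted in the correct place, and invoke the naturality of $\tt{swap}$ to slide it past the internal-hom structure. Conceptually, the delicate point is recognising that symmetry is precisely the feature that circumvents Moortgat's non-derivability result for the Lambek calculus, so the proof should make explicit that every use of $\tt{comp}^>$ here is routed through a $\tt{swap}$ that would be unavailable in the purely biclosed setting.
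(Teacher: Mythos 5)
Your derivation is correct, and it uses the symmetry exactly where it must — but it is organised differently from the paper's proof. The paper derives each crossed rule directly, in the same sequent style as Propositions \ref{prop-ab-biclosed} and \ref{prop-lambek-biclosed}: uncurry the identity on $c \backslash b$ to the evaluation $c \otimes (c \backslash b) \to b$, tensor with $a/b$ and compose with application to get $a/b \otimes c \otimes (c\backslash b) \to a$, pre-compose with $\tt{swap}$ to reorder the factors, and curry once more — a single proof tree per rule, with the one ``crucial'' use of $\tt{swap}$ visible in the penultimate inference. You instead isolate a lemma: the conversion maps $\sigma_{x,y} \colon x \backslash y \to y/x$ and $\tau_{x,y} \colon y/x \to x \backslash y$ (themselves built by the same uncurry--swap--recurry pattern), and then obtain both crossed rules by conjugating the already-derived $\tt{comp}^>$ of Proposition \ref{prop-lambek-biclosed} by these conversions. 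Your factorisation buys three things: it makes the conceptual content explicit (symmetry collapses the two residuals, so crossed composition is ordinary composition viewed through that identification), it yields $\tt{xcomp}^>$ and $\tt{xcomp}^<$ uniformly from one lemma rather than two separate trees, and it lands exactly on the codomains $c \backslash a$ and $c/b$ as stated in the definition of $R_{CCG}$ — whereas the paper's own tree concludes at $a \backslash c$, a slip relative to its statement of the rule (the final currying step in \ref{biclosed-iso} peels $c$ from the right and so yields $a/c$, which still needs your $\tau$ to become $c \backslash a$). What the paper's direct derivation buys is brevity and stylistic continuity with the preceding propositions. One small caution: your remark that $\sigma$ and $\tau$ are mutually inverse requires an argument from the axioms \ref{axioms-symmetry} and naturality, not just their construction — but since, as you note, only the existence of the two maps is used, nothing in the derivation depends on it.
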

\begin{proof}
    Let $a, b, c$ be objects in the free closed category with no generators.
    The following proof tree shows that the backward crossed composition rule
    follows from the axioms of closed categories.
    \begin{prooftree}
    \AxiomC{} \RightLabel{$\tt{id}$}
    \UnaryInfC{$a / b \to a / b$}
    \AxiomC{} \RightLabel{$\tt{id}$}
    \UnaryInfC{$c \backslash b \to c \backslash b$} \RightLabel{\ref{biclosed-iso}}
    \UnaryInfC{$c \otimes c \backslash b \to b$} \RightLabel{$\otimes$ and $\circ \tt{app}$}
    \BinaryInfC{$a /b \otimes c \otimes c \backslash b \to a$} \RightLabel{$\circ \tt{swap}$}
    \UnaryInfC{$a / b \otimes c \backslash b \otimes c \to a$} \RightLabel{\ref{biclosed-iso}}
    \UnaryInfC{$a / b \otimes c \backslash b \to a \backslash c$}
    \end{prooftree}
    Note the crucial use of pre-composition with $\tt{swap}$ in the penultimate inference.
    A similar argument derives the forward crossed composition rule
    $a / b \otimes a \backslash c \to  c / b$.
\end{proof}

One needs to be very careful in adding permutations and symmetry in a formal grammar.
The risk is to lose any notion of word order.
It was initially believed that adding the crossed composition rule, coupled with
type-rasing, would collapse the calculus to permutation invariance \cite{williams2003}.
However, as argued by Steedman this is not the case:
out of the $n!$ possible permutations of a string of $n$ types,
CCGs only allow $S(n)$ permutations where $S(n)$ is the $n$th Large Schroder
number \cite{steedman2019}. Thus the crossed composition rule only permits limited crossed
dependencies within a sentence. This claim is also verified empirically by the
successful performance of large scale CCG parsers \cite{yoshikawa2019}.

If we consider the complexity of parsing categorial grammars, it turns out that
parsing the Lambek calculus is $\tt{NP}$-complete \cite{savateev2012}.
Even when the order of types is bounded the best parsing algorithms for the
Lambek calculus run in $O(n^5)$ \cite{fowler2008}. Parsing
CCGs is in $\tt{P}$ with respect to the size of the input sentence, but it becomes
exponential when also the grammar is taken in the input \cite{kuhlmann2018}.
This is in contrast to CFGs or the mildly context-sensitive tree-adjoining
grammars where parsing is computable in $O(n^3)$ in both the size of
input grammar and input sentence.
This called for a simpler version of the Lambek calculus, with a more efficient
parsing procedure, but still keeping the advantages of the categorial
lexicalised formalism and the proximity to semantics.

\subsection{biclosed.Diagram}

We wish to upgrade the \py{monoidal.Diagram} class to represent diagrams in free
biclosed categories. In order to achieve this, we first need to upgrade the \py{Ty}
class, to handle the types $T(B)$ of free biclosed categories. A \py{biclosed.Ty}
is an instance of \py{monoidal.Ty} with extra methods for under $\backslash$
and over $/$ operations. Recall the context-free grammar of biclosed types:
\begin{equation}\label{biclosed-types}
    T(B) \ni \alpha = a \in B \; \vert \; \alpha \otimes \alpha\;
    \vert \; \alpha \backslash \alpha\; \vert \; \alpha / \alpha
\end{equation}
Elements of $T(B)$ are trees built from leaves $B$ and binary nodes
$\otimes$, $\backslash$ and $/$.
It may be implemented in object-oriented programming, by subclassing
\py{monoidal.Ty} for the tensor and using the native tree structure of classes.
We give a working version of \py{biclosed.Ty}, detailing also some of the
standard methods.

\begin{python}\label{listing:biclosed.Ty}
{\normalfont Types in free biclosed categories}

\begin{minted}{python}
from discopy import monoidal

class Ty(monoidal.Ty):
    def __init__(self, *objects, left=None, right=None):
        self.left, self.right = left, right
        super().__init__(*objects)

    def __lshift__(self, other):
        return Over(self, other)

    def __rshift__(self, other):
        return Under(self, other)

    def __matmul__(self, other):
        return Ty(*(self @ other))

    @staticmethod
    def upgrade(old):
        if len(old) == 1 and isinstance(old[0], (Over, Under)):
            return old[0]
        return Ty(*old.objects)


class Over(Ty):
    def __init__(self, left=None, right=None):
        Ty.__init__(self, self)

    def __repr__(self):
        return "Over({}, {})".format(repr(self.left), repr(self.right))

    def __str__(self):
        return "({} << {})".format(self.left, self.right)

    def __eq__(self, other):
        if not isinstance(other, Over):
            return False
        return self.left == other.left and self.right == other.right


class Under(Ty):
    def __init__(self, left=None, right=None):
        Ty.__init__(self, self)

    def __repr__(self):
        return "Under({}, {})".format(repr(self.left), repr(self.right))

    def __str__(self):
        return "({} >> {})".format(self.left, self.right)

    def __eq__(self, other):
        if not isinstance(other, Under):
            return False
        return self.left == other.left and self.right == other.right
\end{minted}

The \py{Ty.upgrade} method allows for compatibility between the tensor
methods \py{__matmul__} of \py{biclosed} and \py{monoidal} types. The upgrading
mechanism is further desribed in the documentation of \py{monoidal}.

We illustrate the syntax of biclosed types.

\begin{minted}{python}
x = Ty('x')
assert x >> x << x == Over(Under(Ty('x'), Ty('x')), Ty('x'))
assert x >> (x << x) == Under(Ty('x'), Over(Ty('x'), Ty('x')))
x0, x1, y0, y1, m = Ty('x0'), Ty('x1'), Ty('y0'), Ty('y1'), Ty('m')
lens = (x0 >> m @ y0) @ ( m @ x1 >> y1)
assert lens == Ty(Under(Ty('x0'), Ty('m', 'y0')), Under(Ty('m', 'x1'), Ty('y1')))
\end{minted}
\end{python}

A \py{biclosed.Diagram} is a \py{monoidal.Diagram} with domain and
codomain \py{biclosed.Ty}s, together with a pair of static methods \py{curry()}
and \py{uncurry()} implementing the defining isomorphism of biclosed categories
(\ref{biclosed-iso}).
In fact, we can store the information of how a biclosed diagram is constructed
by using two special subclasses of \py{Box}, which record every application of
the currying morphisms. Thus a diagram in a biclosed category is a tree built
from generating boxes using \py{id}, \py{then}, \py{tensor}, \py{Curry} and
\py{UnCurry}.

\begin{python}\label{listing:biclosed.Diagram}
{\normalfont Diagrams in free biclosed categories}

\begin{minted}{python}
from discopy import monoidal

@monoidal.Diagram.subclass
class Diagram(monoidal.Diagram):

    def curry(self, n_wires=1, left=False):
        return Curry(self, n_wires, left)

    def uncurry(self):
        return UnCurry(self)

class Id(monoidal.Id, Diagram):


class Box(monoidal.Box, Diagram):


class Curry(Box):
    def __init__(self, diagram, n_wires=1, left=False):
        name = "Curry({})".format(diagram)
        if left:
            dom = diagram.dom[n_wires:]
            cod = diagram.dom[:n_wires] >> diagram.cod
        else:
            dom = diagram.dom[:-n_wires]
            cod = diagram.cod << diagram.dom[-n_wires or len(diagram.dom):]
        self.diagram, self.n_wires, self.left = diagram, n_wires, left
        super().__init__(name, dom, cod)


class UnCurry(Box):
    def __init__(self, diagram):
        name = "UnCurry({})".format(diagram)
        self.diagram = diagram
        if isinstance(diagram.cod, Over):
            dom = diagram.dom @ diagram.cod.right
            cod = diagram.dom.left
            super().__init__(name, dom, cod)
        elif isinstance(diagram.cod, Under):
            dom = diagram.dom.left @ diagram.dom
            cod = diagram.dom.right
            super().__init__(name, dom, cod)
        else:
            super().__init__(name, diagram.dom, diagram.cod)
\end{minted}

We give a simple implementation of free biclosed categories which does not impose
the axioms of free biclosed categories, \py{UnCurry(Curry(f)) == f} and naturality.
IT can be implemented on syntax by adding
\py{if} statement in the inits, and upgrading \py{Curry} and \py{UnCurry}
to subclasses of \py{biclosed.Diagram}..
\end{python}

Finally, a \py{biclosed.Functor} is a \py{monoidal.Functor} whose call
method has a predefined mapping for all structural boxes in \py{biclosed}.
It thus allows to map any \py{biclosed.Diagram} into a codomain class \py{cod}
equipped with \py{curry} and \py{uncurry} methods.

\begin{python}\label{listing:biclosed.Functor}
{\normalfont Functors from free biclosed categories}

\begin{minted}{python}
class Functor(monoidal.Functor):
    def __init__(self, ob, ar, cod=(Ty, Diagram)):
        self.cod = cod
        super().__init__(ob, ar, ob_factory=cod[0], ar_factory=cod[1])

    def __call__(self, diagram):
        if isinstance(diagram, Over):
            return self(diagram.left) << self(diagram.right)
        if isinstance(diagram, Under):
            return self(diagram.left) >> self(diagram.right)
        if isinstance(diagram, Ty) and len(diagram) > 1:
            return self.cod[0].tensor(*[
                self(diagram[i: i + 1]) for i in range(len(diagram))])
        if isinstance(diagram, Id):
            return self.cod[1].id(self(diagram.dom))
        if isinstance(diagram, Curry):
            n_wires = len(self(getattr(
                diagram.cod, 'left' if diagram.left else 'right')))
            return self.cod[1].curry(
                self(diagram.diagram), n_wires, diagram.left)
        if isinstance(diagram, UnCurry):
            return self.cod[1].uncurry(self(diagram.diagram))
        return super().__call__(diagram)
\end{minted}
\end{python}

We recover the rules of categorial grammars (in historical order) by constructing
them from identities in the free biclosed category with no generators.

\begin{python}\label{listing:biclosed.categorial}
{\normalfont Categorial grammars and the free biclosed category}

\begin{minted}{python}
# Adjiuciewicz
FA = lambda a, b: UnCurry(Id(a >> b))
assert FA(x, y).dom == x @ (x >> y) and FA(x, y).cod == y
BA = lambda a, b: UnCurry(Id(b << a))
assert BA(x, y).dom == (y << x) @ x and BA(x, y).cod == y

# Lambek
proofFC = FA(x, y) @ Id(y >> z) >> FA(y, z)
FC = Curry(proofFC, left=True)
assert FC.dom == (x >> y) @ (y >> z) and FC.cod == (x >> z)
BC = Curry(Id(x << y) @ BA(z, y) >> BA(y, x))
assert BC.dom == (x << y) @ (y << z) and BC.cod == (x << z)
TYR = Curry(UnCurry(Id(x >> y)))
assert TYR.dom == x and TYR.cod == (y << (x >> y))

# Steedman
Swap = lambda a, b: Box('Swap', a @ b, b @ a)
proofBX = Id(x << y) @ (Swap(z >> y, z) >> FA(z, y)) >> BA(y, x)
BX = Curry(proofBX)
assert BX.dom == (x << y) @ (z >> y) and BX.cod == (x << z)
proofFX = (Swap(x, y << x) >> BA(x, y)) @ Id(y >> z) >> FA(y, z)
FX = Curry(proofFX, left=True)
assert FX.dom == (y << x) @ (y >> z) and FX.cod == (x >> z)
\end{minted}
\end{python}

The assertions above are alternative proofs of Propositions
\ref{prop-ab-biclosed}, \ref{prop-lambek-biclosed} and \ref{prop-crossed-biclosed}.
We draw the proofs for forward composition (\py{proofFC}) and
backwards crossed composition (\py{proofBX}).
\begin{center}
    \includegraphics[scale=0.55]{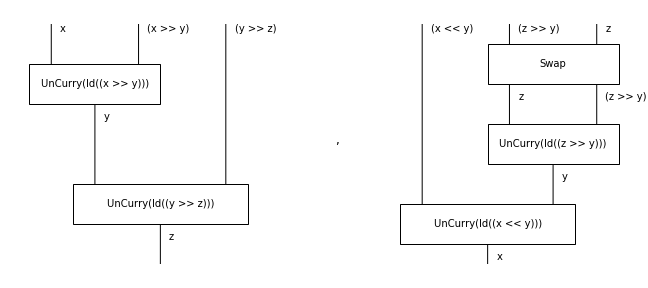}
\end{center}
%
%

\section{Pregroups and dependencies}\label{sec-compact}

In his 1897 paper ``The logic of relatives'' \cite{peirce1897}, Peirce makes an
analogy between the sentence ``John gives John to John'' and the molecule of ammonia.
\begin{equation*}
    \tikzfig{figures/peirce-analogy}
\end{equation*}
The intuition that words within a sentence are connected by ``bonds'',
as atoms in a molecule, is the basis of Peirce's diagrammatic approach to
logical reasoning, and of his analysis of the concept of \emph{valency} in
grammar \cite{askedal2019}.
This intuition resurfaced in the work of Lucien Tesni\'ere \cite{tesniere1959}
who analysed the valency of a large number of lexical items, in an approach to
syntax that became known as \emph{dependency grammar} \cite{hays1964, gaifman1965}.
These bear a striking resemblance to Lambek's pregroup grammars \cite{Lambek08}
and its developments in the DisCoCat framework of Coecke, Sadrzadeh et al.
\cite{DisCoCat08,sadrzadeh2013, sadrzadeh2014}.

In this section, we formalise both Pregroup Grammar (PG) and Dependency Grammar (DG)
in the language of \emph{free rigid categories}. Once casted
in this algebraic framework, the similarities between PG and DG become
apparent. We show that dependency grammars are structurally equivalent to
both pregroup grammar and context-free grammar, i.e. their derivations are
tree-shaped rigid diagrams. We end by describing the implementation of the class
\py{rigid.Diagram} and we interface it with SpaCy's dependency parser \cite{spacy}.

\subsection{Pregroups and rigid categories}\label{section-pregroup}

Pregroup grammars were introduced by Lambek in 1999 \cite{lambek1999a}.
Arising from a non-commutative fragment of Girard's linear logic
\cite{casadio2002} they refine and simplify the earlier Lambek calculus
discussed in Section \ref{section-lambek}.
As shown by Buszowski, pregroup grammars are weakly equivalent to context-free
grammars \cite{buszkowski2007}. However, the syntactic structures generated by
pregroup grammars differ from those of a CFG. Instead of trees,
pregroups parse sentences by assigning to them a nested pattern of
\emph{cups} or ``links'' as in the following example.
\begin{equation*}
    \tikzfig{figures/pregroup-gives}
\end{equation*}
In the strict sense of the word, a pregroup is a preordered monoid where each object
has a left and a right adjoint \cite{lambek1999a}. We formalise pregroup grammars
in terms of \emph{rigid categories} which categorify the notion of pregroup
by upgrading the preorder to a category. Going from inequalities in a preordered
monoid to arrows in a monoidal category allows both to reason about syntactic ambiguity
--- as discussed by Lambek and Preller \cite{preller2007b} --- as well as to define
pregroup semantics as a monoidal functor, an observation which lead to the
development of the compositional distributional models of Coecke et al.
\cite{DisCoCat11} and which will form the basis of the next chapter on semantics.

Given a set of basic types $B$, the set of pregroup types $P(B)$ is defined
as follows.
$$P(B) \ni t \, ::= \, b \in B \, | \, t^r \, | \, t^l \, | \, t \otimes t.$$
we can use it to define the notion of a rigid signature.

\begin{definition}[Rigid signature]
    A rigid signature is a graph $\Sigma = \graph{\Sigma_1}{P(\Sigma_0)}$.
    A morphism of rigid signatures $\Sigma \to \Gamma$ is a pair of maps
    $\Sigma_1 \to \Gamma_1$ and $\Sigma_0 \to \Gamma_0$ satisfying the obvious
    commuting diagram. Rigid signatures and their morphisms form a category
    denoted $\bf{RgSig}$.
\end{definition}
\begin{definition}[Rigid category]
    A rigid category $\bf{C}$ is a monoidal category such that each object $a$
    has a left adjoint $a^l$ and a right adjoint $a^r$. In other words there are
    morphisms $a^l \otimes a \to 1$, $1 \to a \otimes a^l$,
    $a \otimes a^r \to 1$ and $1 \to a^r \otimes a^l$, denoted as cups and caps
    and satisfying the snake equations:
    \begin{equation}\label{axioms-rigid}
        \tikzfig{figures/rigid}
    \end{equation}
    The category of rigid categories and monoidal functors is denoted $\bf{RigidCat}$.
\end{definition}
\begin{proposition}
    Rigid categories are biclosed, with $a \backslash b = a^r \otimes b$ and
    $b / a = b \otimes a^l$.
\end{proposition}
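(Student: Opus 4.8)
The plan is to produce, for each object $a$, the two natural hom-set isomorphisms demanded by Definition \ref{def-biclosed}, namely
$$\mathbf{C}(a \otimes b, c) \simeq \mathbf{C}(b, a^r \otimes c), \qquad \mathbf{C}(a, c \otimes b^l) \simeq \mathbf{C}(a \otimes b, c),$$
which exhibit $a \otimes - \dashv a^r \otimes -$ and $- \otimes b \dashv - \otimes b^l$ and thereby identify $a \backslash c = a^r \otimes c$ and $c / b = c \otimes b^l$. Conceptually this is just the statement that the cups and caps make $a \dashv a^r$ an adjunction inside the one-object bicategory associated with $\mathbf{C}$, and that tensoring preserves adjunctions; but since the setting is elementary I would give the transposes explicitly and check they are mutually inverse using the snake equations \ref{axioms-rigid}.

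First I would treat the left internal hom. Writing $\eta_a : 1 \to a^r \otimes a$ for the right cap and $\epsilon_a : a \otimes a^r \to 1$ for the right cup, I send $f : a \otimes b \to c$ to $\hat f = (a^r \otimes f) \circ (\eta_a \otimes b) : b \to a^r \otimes c$, and conversely $g : b \to a^r \otimes c$ to $\check g = (\epsilon_a \otimes c) \circ (a \otimes g) : a \otimes b \to c$. The round trip $\check{\hat f}$ rewrites, by bifunctoriality of $\otimes$ (the interchange law), to $f \circ \big[((\epsilon_a \otimes a) \circ (a \otimes \eta_a)) \otimes b\big]$, and the bracketed morphism is $\mathrm{id}_a$ by one snake equation, so $\check{\hat f} = f$; the reverse composite reduces to $\mathrm{id}$ by the companion snake equation. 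The over hom is handled dually: with the left cap $\eta_b : 1 \to b \otimes b^l$ and left cup $\epsilon_b : b^l \otimes b \to 1$, one transposes $f : a \otimes b \to c$ to $(f \otimes b^l) \circ (a \otimes \eta_b) : a \to c \otimes b^l$, and the two round trips again collapse to identities via the snake equations. Diagrammatically every transpose is simply the bending of a wire along a cap or cup, and each verification is an instance of yanking a snake straight.

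It then remains to promote these families of bijections to the bifunctors $-\backslash- : \mathbf{C}^{op} \times \mathbf{C} \to \mathbf{C}$ and $-/- : \mathbf{C} \times \mathbf{C}^{op} \to \mathbf{C}$ of Definition \ref{def-biclosed}, which amounts to checking naturality. Naturality in $b$ and $c$ for fixed $a$ is routine diagram chasing from the definitions of the transposes together with interchange; the contravariant functoriality of $a \mapsto a^r$ and $b \mapsto b^l$ is itself forced by the adjunctions, since a morphism $a \to a'$ has a canonical mate $a'^r \to a^r$. The main obstacle is therefore not any single computation — each is a one-line use of interchange followed by a snake equation — but the careful bookkeeping of variances and of the unitors suppressed by strictness, so as to confirm that the two one-sided closed structures genuinely assemble into a biclosed category rather than merely two unrelated adjunctions.
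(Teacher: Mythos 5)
Your proof is correct: the paper states this proposition without any proof, and your argument --- transposing along the cup $\epsilon_a : a \otimes a^r \to 1$ and cap $\eta_a : 1 \to a^r \otimes a$ (dually for $^l$), collapsing the round trips by the snake equations via interchange, and obtaining functoriality of $-\backslash-$ and $-/-$ from naturality and mates --- is precisely the standard argument the omission presupposes. One small point of care you handled correctly: the paper's list of structure maps for a rigid category contains a typo ($1 \to a^r \otimes a^l$ should read $1 \to a^r \otimes a$), and your reading of the caps is the intended one, consistent with the snake equations in (\ref{axioms-rigid}).
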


Given a rigid signature $\Sigma$ we can generate the \emph{free rigid category}
$$\bf{RC}(\Sigma) = \bf{MC}(\Sigma + \set{cups, caps}) / \sim_{snake}$$
where $\sim_{snake}$ is the equivalence relation on diagrams induced by the
snake equations (\ref{axioms-rigid}). Rigid categories are called
\emph{compact 2-categories} with one object by Lambek and Preller \cite{preller2007b},
who showed that $\bf{RC}$ defines an adjunction between rigid signatures
and rigid categories.
\begin{equation*}
    \bf{RgSig} \mathrel{\mathop{\rightleftarrows}^{\bf{RC}}_{U}} \bf{RgCat}
\end{equation*}

We start by defining a general notion of \emph{rigid grammar}, a subclass of
biclosed grammars.

\begin{definition}
    A rigid grammar $G$ is a rigid signature of the following shape:
    $$ \signature{G}{P(B + V)}$$
    where $V$ is a \emph{vocabulary} and $B$ is a set of \emph{basic types}.
    The language generated by $G$ is given by:
    $$ \cal{L}(G) = \set{u \in V^\ast \, \vert \, \exists g: u \to s \in \bf{RC}(G)}$$
    where $\bf{RC}(G)$ is the free rigid category generated by $G$.
\end{definition}

A pregroup grammar is a lexicalised rigid grammar defined as follows.

\begin{definition}[Pregroup grammar]
    A pregroup grammar is a tuple $G = (V, B, \Delta, I, s)$ where $V$ is a
    \emph{vocabulary}, $B$ is a finite set of basic types, $G \sub V \times P(B)$
    is a \emph{lexicon} assigning a set of possible pregroup types to each word,
    and $I \sub B \times B$ is a finite set of \emph{induced steps}.
    The language generated by $G$ is given by:
    $$\cal{L}(G) = \set{u \in V^\ast \, : \, \exists g \in \bf{RC}(G)(u, s)}$$
    where $\bf{RC}(G) : = \bf{RC}(\Delta + I)$.
\end{definition}

\begin{example}\label{ex:pair-of-lovers}
    Fix the basic types $B = \set{s, n, n_1, d, d_1}$ for sentence, noun, plural
    noun, determinant and plural determinant and consider the following pregroup lexicon:
    $$\Delta(\text{pair}) = \set{d^r \, n}, \: \Delta(\text{lives}) = \set{d_1^r \, n_1},
    \: \Delta(\text{lovers}) = \set{n_1},
    \: \Delta(\text{starcross}) = \set{n\, n^l},$$
    $$\Delta(\text{take}) = \set{n^r \, s \, n^l},
    \quad \Delta(\text{of}) = \set{n^r \, n \, n^l}, \quad
    \Delta(\text{A}) = \set{d}, \quad \Delta(\text{their}) = \set{d_1}.$$
    and one induced step $I = \set{n_1 \to n}$.
    Then the following is a grammatical sentence:
    \begin{equation*}
        \tikzfig{figures/pregroup-pair-of-lovers}
    \end{equation*}
    where we omitted the types for readability, and we denoted the induced step
    using a black node.
\end{example}

The tight connections between categorial grammars and pregroups were discussed
in \cite{buszkowski2007a}, they are evermore apparent from a categorical
perspective: since rigid categories are biclosed,
there is a canonical way of mapping the reductions of a categorial grammar
to pregroups.

\begin{proposition}\label{prop-categorial-to-pregroup}
    For any Lambek grammar $G$ there is a pregroup grammar $G'$ with a
    functorial reduction $\bf{MC}(G) \to \bf{RC}(G')$.
\end{proposition}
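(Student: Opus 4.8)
The plan is to exhibit the standard translation turning biclosed types into pregroup types and to promote it to a monoidal functor using the universal property of the free monoidal category $\bf{MC}(\Delta + R_{LG})$. The key input is the earlier observation that any rigid category is biclosed, with $a \backslash b = a^r \otimes b$ and $b / a = b \otimes a^l$. This records exactly how the two slashes are interpreted inside a free rigid category, and hence dictates the type translation on the nose.

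First I would define a map $\tau : T(B) \to P(B)$ on biclosed types by induction: $\tau(a) = a$ for $a \in B$, $\tau(\alpha \otimes \beta) = \tau(\alpha) \otimes \tau(\beta)$, $\tau(\alpha \backslash \beta) = \tau(\alpha)^r \otimes \tau(\beta)$, and $\tau(\beta / \alpha) = \tau(\beta) \otimes \tau(\alpha)^l$. I then set $G' = (V, B, \Delta', \emptyset, s)$ where $\Delta'(w) = \set{\tau(t) \, \vert \, t \in \Delta(w)}$, with no induced steps required.

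Next I would build the functor $F : \bf{MC}(\Delta + R_{LG}) \to \bf{RC}(G')$, where the target is a monoidal category (a quotient of a free one) so that a functorial reduction here just means a monoidal functor fixing the vocabulary and the sentence symbol. Since the source is free, its universal property reduces the task to specifying images of generating objects and boxes, compatibly with domains and codomains. On objects I set $F_0 = \tau$ on the types of $T(B)$, $F_0(v) = v$ for $v \in V$ and $F_0(s) = s$, which immediately gives the two conditions demanded of a functorial reduction. On boxes, each lexical entry $(w, t) : w \to t$ maps to the entry $(w, \tau(t)) : w \to \tau(t)$ of $\Delta'$, and each instance of a rule in $R_{LG}$ maps to the morphism in $\bf{RC}(G')$ witnessing that same rule. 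Such witnesses exist precisely because $\bf{RC}(G')$ is biclosed and, by Proposition \ref{prop-lambek-biclosed}, every rule of $R_{LG}$ is derivable from the biclosed axioms; concretely, forward and backward application become cups, while composition and type-raising become the derived composites of cups and caps.

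The step requiring genuine care, and the main obstacle, is checking that $\tau$ intertwines the two biclosed structures, so that every rule instance acquires an image with matching domain and codomain. For forward application this amounts to verifying $F_0((a / b) \otimes b) = (\tau(a) \otimes \tau(b)^l) \otimes \tau(b)$, which carries the cup $\tau(b)^l \otimes \tau(b) \to 1$ down to $\tau(a) = F_0(a)$; the other five rule families are handled symmetrically. Because $R_{LG}$ is an infinite family indexed by $a, b, c \in T(B)$, this verification must be uniform in the types, which is exactly what the inductive definition of $\tau$ together with the identities $a \backslash b = a^r \otimes b$ and $b / a = b \otimes a^l$ guarantees. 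No relations need to be checked on the source side, since $\bf{MC}(\Delta + R_{LG})$ is free; the snake equations live only in the target $\bf{RC}(G')$ and are automatically respected. Once all domains and codomains are matched, $F$ is a well-defined functorial reduction $\bf{MC}(G) \to \bf{RC}(G')$, completing the argument.
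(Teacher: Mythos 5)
Your proposal is correct and is essentially the paper's own argument: the paper's proof consists of a figure exhibiting precisely your translation $\tau$ (with $a \backslash b \mapsto a^r \otimes b$, $b/a \mapsto b \otimes a^l$) together with the mapping of application to cups and of composition and type-raising to the corresponding cup/cap composites in the free rigid category, which the subsequent example (``will'' $\mapsto (n^r s)(n^r s)^l = n^r s\, s^l n$) confirms. Your write-up merely makes explicit the universal-property bookkeeping that the paper leaves implicit.
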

\begin{proof}
    The translation works as follows:
    \begin{equation*}
        \tikzfig{figures/categorial-to-pregroup}
    \end{equation*}
\end{proof}

\begin{example}
    Consider the categorial parsing of the sentence ``Grandma will cook the parmigiana''
    from Example \ref{lambek-example}.
    The categorial type of ``will'' is given by
    $(n \backslash s) / (n \backslash s)$ which translates to the pregroup
    type $(n^r s)(n^r s)^l = n^r s\, s^l n$, the transitive verb type
    $(n \backslash s) / n$ for ``cook'' translates to $n^r \, s \, n^l$,
    and similarly for the other types.
    Translating the categorial reduction according to the mapping above, we
    obtain the following pregroup reduction:
    \begin{equation*}
        \scalebox{0.8}{\tikzfig{figures/pregroup-from-categorial}}
    \end{equation*}
\end{example}

One advantage of pregroups over categorial grammars is that they can be parsed
more efficiently. This is a consequence of the following lemma, proved by Lambek
in \cite{lambek1999a}.

\begin{proposition}[Switching lemma]\label{prop-switching}
    For any pregroup grammar $G$ and any reduction $t \to s$ in
    $\bf{RC}(G)$, there is a type $t'$ such that
    $t \to s = t \to t' \to s$ and $t \to t'$ doesn't use contractions (cups),
    $t' \to s$ doesn't use expansions (caps).
\end{proposition}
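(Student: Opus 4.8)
The plan is to prove the lemma by normalising the string diagram of the reduction so that, reading top to bottom, every cap (expansion) lies above every cup (contraction); the intermediate type $t'$ is then read off along the horizontal cut separating the two regions. I work in the free rigid category $\bf{RC}(G) = \bf{MC}(G + \set{cups, caps})/\sim_{snake}$, so a reduction $g : t \to s$ is a string diagram whose layers are of four kinds: cups, caps, lexicon boxes and induced steps. Since the statement constrains only the relative vertical position of cups and caps (boxes and induced steps are permitted in either region), it suffices to exhibit $g$, as a morphism of $\bf{RC}(G)$, in the form $E \cdot C$ with $E$ drawn above $C$, where $E$ contains no cups and $C$ contains no caps.

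I would argue by structural induction on the construction of $g$ from generators by composition $\cdot$ and tensor $\otimes$. The generators are immediate: a single cap is already of the form $E$ with $C = \tt{id}$, a single cup of the form $C$ with $E = \tt{id}$, and a box or induced step may be placed in $C$ with $E = \tt{id}$. The tensor case uses the interchange law: writing $g_i = E_i \cdot C_i$, bifunctoriality of $\otimes$ gives $g_1 \otimes g_2 = (E_1 \otimes E_2)\cdot(C_1 \otimes C_2)$, again cap-region above cup-region. The only case with genuine content is composition $g = g_1 \cdot g_2$: substituting $g_i = E_i \cdot C_i$ produces $E_1 \cdot C_1 \cdot E_2 \cdot C_2$, whose middle factor $C_1 \cdot E_2$ places a cup-region \emph{above} a cap-region, exactly the forbidden order.

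Everything therefore reduces to a commutation lemma, the heart of the argument: a cup-region followed by a cap-region can be rewritten as a cap-region followed by a cup-region. I would prove this by local rewriting. Adjacent layers acting on disjoint wires commute by interchange, and since a cup has no outputs it can never feed a cap directly below it, so a cap can always be slid upward past a cup on which it does not depend; a cup and cap meeting on a shared wire are removed outright by the snake equations (\ref{axioms-rigid}). Taking as termination measure the number of pairs consisting of a cup lying above a cap, each move strictly decreases it, so the process halts in a diagram with no cup above any cap. Once $C_1 \cdot E_2 = \tilde E \cdot \tilde C$ is obtained, the composition case closes at once, since $(E_1 \cdot \tilde E)$ is cup-free and $(\tilde C \cdot C_2)$ is cap-free.

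I expect the main obstacle to be the presence of intervening lexicon boxes and induced steps between a cup and a cap that one wishes to reorder: before the two structural layers can meet, such a box must be moved aside, and since it may straddle the relevant wires one has to argue that it can always be absorbed into whichever region (cap-side or cup-side) preserves the ordering, using that boxes are unconstrained by the statement. The second delicate point is bookkeeping: one must verify, working in the premonoidal encoding of Proposition \ref{prop-combinatorial-encoding}, that sliding a cap upward or yanking a snake preserves planarity and the offset constraints and creates no new cup-above-cap inversions elsewhere. With these checks in place the induction closes, and the cut between the final cap-region and cup-region yields the required $t'$ with $t \to t'$ free of cups and $t' \to s$ free of caps.
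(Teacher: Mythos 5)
Your overall strategy --- normalising a premonoidal representative by interchangers until every cap lies above every cup, then reading $t'$ off the horizontal cut --- is sound in spirit; note that the paper itself gives no proof, deferring to Lambek's original argument, which is likewise a local commute-or-cancel induction. One simplification you miss: since a cup has no output wires and a cap no input wires, a cup lying \emph{above} a cap can never share a wire with it, so your snake-equation clause is vacuous --- a shared wire occurs only in the cap-above-cup configuration (a snake), which is already in the allowed order and must \emph{not} be removed. Pure interchangers suffice for the cup/cap commutation, even when the two sit at the same horizontal position, because for $f: x \to 1$ and $g: 1 \to y$ one has $f \cdot g = f \otimes g = g \otimes f$ in any strict monoidal category.

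The genuine gap is exactly the point you flag and then wave away: intervening boxes. Your primary move, sliding caps upward, can be blocked outright. A cap inserted strictly between two output wires of a lexical box $w \to t$ occupies a position inside that box's codomain span, so the two layers do not act on disjoint wire intervals and the interchanger of (\ref{eq-interchnager-rules}) does not apply; planar isotopy cannot carry the cap past the box. Nor does ``absorbing the box into a region'' help, since the box cannot descend past a cap trapped among its outputs. For a general rigid signature this is fatal: with a generator $h: c \otimes c' \to d \otimes e \otimes d'$ one builds a cup, then $h$ with its two-wire domain straddling the cup's position, then a cap inside $h$'s outputs --- now the cap cannot rise, the cup cannot sink, your rewriting deadlocks, and your inversion measure stalls. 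So some property specific to pregroup grammars must enter, and it is this: every generator of $\bf{RC}(G)$ --- lexical entries $w \to t$ and induced steps $x \to y$ --- has a \emph{single input wire}, so no box's domain can ever straddle a cup's position. The correct uniform strategy is therefore the dual of yours: sink every cup downward. A cup interchanges past any later layer that is a box (single-wire domain, hence entirely to its left or right) or a cap (never connected, as above), so with the measure ``number of pairs of a cup above a non-cup layer'' the process terminates in a diagram of the form (caps and boxes) followed by (cups only); the cut gives $t'$, with the stronger conclusion that $t' \to s$ consists solely of contractions. Once reoriented this way, your structural induction and its problematic commutation lemma are unnecessary --- the normalisation runs directly on any premonoidal representative.
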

\begin{remark}
    Note that the equivalent lemma for categorial grammars would state that
    all instances of the type-raising rule can appear after all instances of
    the composition and application rules. This is however not the case.
\end{remark}

A direct corollary of this lemma, is that any sentence $ u \in V^\ast$ may
be parsed using only contractions (cups). This drastically reduces the search space
for a reduction, with the consequence that pregroup grammars can be parsed
efficiently.

\begin{proposition}
    Pregroup grammars can be parsed in polynomial time \cite{degeilh2005, moroz2011}
    and in linear-time in linguistically justified restricted cases \cite{preller2007}.
\end{proposition}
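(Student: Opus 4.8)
The plan is to leverage the Switching Lemma (Proposition \ref{prop-switching}) to collapse parsing to a planar bracket-matching problem, which admits a cubic-time dynamic program, and then to specialise to a greedy stack-based algorithm for the restricted linear-time case.

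First I would invoke the corollary of the Switching Lemma recorded immediately above it: since the sentence type $s$ is a single basic type in $B$, no expansions (caps) can ever be required in a reduction $t \to s$, because an expansion strictly increases the number of simple types in the underlying string, whereas the target $s$ has length one. Hence every grammatical reduction may be taken to use contractions (cups) only, which confines the search to non-crossing matchings of simple types. Given an utterance $u = w_0 \dots w_n$, each choice of lexical types $t_i \in \Delta(w_i)$ concatenates into a string of simple types $a_1^{(k_1)} \dots a_N^{(k_N)}$, where the exponents range over $\Z$ recording iterated adjoints, and $N$ is linear in the sentence length since the lexicon is finite and each entry has bounded length.

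Next I would set up a CYK-style table. A cup contracts an adjacent pair $a^{(k)} a^{(k+1)} \to 1$, while an induced step in $I \sub B \times B$ rewrites a single basic type; because cups are planar in a free rigid category, these contractions nest. I would define a table $R[i,j]$ recording the set of simple types (including the unit $1$) to which the substring from position $i$ to $j$ can reduce, filled bottom-up by three moves: an outermost cup wrapping a reducible inner substring, a split at some midpoint with the two halves reducing compatibly, and an induced step applied to a length-one result. The utterance is grammatical iff $s \in R[1,N]$. With $O(N^2)$ substrings, $O(|B|)$ candidate reduced types per cell and $O(N)$ split points, this runs in $O(N^3)$ time; summing over the boundedly-many lexical assignments keeps the total polynomial, matching \cite{degeilh2005, moroz2011}. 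I would remark that this cubic bound coincides with the CYK bound for context-free grammars, consistent with the weak equivalence of pregroups and CFGs of \cite{buszkowski2007}. For the linear-time claim I would follow Preller \cite{preller2007}: under the linguistically justified restriction that each word carries essentially a single admissible type of bounded adjoint order, so that no disjunctive lexical search is needed, the non-crossing contraction is computed greedily in one left-to-right pass maintaining a stack, contracting whenever the stacked top and the incoming symbol form a cup, exactly as in matching Dyck brackets, terminating in $O(N)$ steps.

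The main obstacle I anticipate is establishing \emph{completeness} rather than mere soundness of the bracket-matching reduction: I must argue that the planarity of cups in the free rigid category forces every cup-only reduction to respect a non-crossing nesting, so that the context-free table captures all reductions and never misses one that would require crossing links. The delicate bookkeeping is the interaction of the iterated adjoints $a^{(k)}$ with the finite set of induced steps $I$ inside the recursion, which must be threaded through the table without enlarging the state space beyond $O(|B|)$ per cell.
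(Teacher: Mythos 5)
The paper itself offers no proof of this proposition --- it is stated as a known result with citations, the only in-paper ingredient being the corollary of the Switching Lemma (Proposition \ref{prop-switching}) that cup-only reductions suffice. Your chart-parsing sketch is therefore a reasonable reconstruction of what \cite{degeilh2005, moroz2011} actually do, and your starting point is the right one. However, there is a concrete flaw in your complexity accounting: you propose to run the cubic table once per choice of lexical types and then ``sum over the boundedly-many lexical assignments.'' The number of types per word is bounded, but the number of \emph{global} assignments is $\prod_i \size{\Delta(w_i)}$, which is exponential in the sentence length, so your algorithm as stated is not polynomial. The standard repair --- and what the cited algorithms do --- is to fold the lexical choice into the chart itself, e.g.\ by parsing the lattice whose parallel arcs over each word are its candidate type strings (or indexing cells by word boundaries and closing each word's base cells under $\Delta(w_i)$); the lattice has $O(n)$ vertices, so the dynamic program stays cubic. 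A second, smaller issue: your standalone justification for discarding caps (``an expansion strictly increases the number of simple types, whereas the target has length one'') does not work by itself, since an expansion can be introduced transiently and contracted later; it is the Switching Lemma together with snake-equation cancellation of expansion--contraction pairs that eliminates caps, so you should lean entirely on the corollary you invoke rather than on the counting argument.

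On the linear-time case, be careful that a pure greedy stack (``contract whenever the top and the incoming symbol form a cup'') is \emph{not} complete in general even with unambiguous lexical entries: in the presence of induced steps $I \sub B \times B$, an eager contraction can consume a symbol needed for a later link, so Dyck-bracket matching is an oversimplification. Preller's restrictions in \cite{preller2007} are precisely what make the local contraction decisions forced (deterministic with bounded lookahead), not merely the absence of disjunctive lexical search; your sketch should attribute the correctness of the one-pass algorithm to that confluence property rather than to uniqueness of types alone.
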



As discussed in Section \ref{section-combinatory}, linguists have observed that
certain grammatical sentences naturally involve crossed dependencies between
words \cite{stabler2004}. Although the planarity assumption is justified in
several cases of interest \cite{cancho2006a}, grammars with crossed dependencies
allow for more flexibility when parsing natural languages.
In order to model these phenomena with pregroup grammars, we need to
step out of (planar) rigid categories and allow for (restricted) permutations.
These can be represented in the symmetric version of rigid categories, known as
\emph{compact closed} categories.

\begin{definition}[Compact-closed]\label{def-compact-closed}
    A compact-closed category is a rigid category (\ref{axioms-rigid}) that is
    also symmetric (\ref{axioms-symmetry}).
\end{definition}

Given a pregroup grammar $G$, we can generate the free compact-closed category
defined by:
$$\bf{CC}(G) = \bf{RC}(G + \tt{swap})/\sim_{\tt{sym}}$$
where $\sim_{\tt{sym}}$ is the congruence induced by the axioms for the symmetry
(\ref{axioms-symmetry}). Notice that in a compact-closed
category $a^r \simeq a^l$ for any object $a$, see e.g. \cite{heunen2019}.
Pregroup reductions in the free rigid category $\bf{RC}(G)$ can of course be
mapped in $\bf{CC}(G)$ via the canonical rigid functor which forgets the
$^r$ and $^l$ adjoint structure.

We cannot use free compact-closed categories directly to parse sentences.
If we only ask for a morphism $g: u \to s$ in $\bf{CC}(G)$ in order to show that
the string $u$ is grammatical, then any permutation of the words in $u$ would
also count as grammatical, and we would lose any notion of word order.
In practice, the use of the swap must be restricted to only special cases.
These were discussed in \ref{sec-closed}, where we saw that
the crossed composition rule of combinatory grammars is suitable for
modeling these restricted cases.

In recent work \cite{yeung2021}, Yeung and Kartsaklis introduced a translation
from CCG grammars to pregroup grammars which allows to build a diagram in the
free compact-closed category over a pregroup grammar from any derivation of a
CCG. This is useful in practical applications since it allows to turn the output
of state-of-the-art CCG parsers such as \cite{yoshikawa2017} into compact-closed
diagrams. The translation is captured by the following proposition.

\begin{proposition}\label{prop-crossed-pregroup}
    For any combinatory grammar $G$ there is a pregroup grammar $G'$ with a
    canonical functorial reduction $G \to \bf{CC}(G')$.
\end{proposition}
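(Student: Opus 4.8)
The plan is to reuse the biclosed-to-pregroup translation of Proposition \ref{prop-categorial-to-pregroup} and then extend it across the crossed composition rules using the symmetry available in a compact-closed category. Since rigid categories are biclosed with $a \backslash b = a^r \otimes b$ and $b / a = b \otimes a^l$, every biclosed type $t \in T(B)$ appearing in the CCG lexicon $\Delta$ has a canonical image as a pregroup type in $P(B)$; applying this translation pointwise to $\Delta$ defines a pregroup grammar $G'$ over the same vocabulary $V$ and basic types $B$, with sentence type $s$ and empty set of induced steps. This fixes the action of the candidate functor $F$ on objects: words go to themselves, $s$ to $s$, and each constructor $\otimes, \backslash, /$ to its rigid counterpart.

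By the universal property of the free monoidal category (the adjunction $\bf{MC} \dashv U$), to define $F : \bf{MC}(\Delta + R_{CCG}) \to \bf{CC}(G')$ it suffices to choose, for each generator of the signature $\Delta + R_{CCG}$, a morphism in $\bf{CC}(G')$ with the correct translated domain and codomain. Each lexical entry $w \to t$ maps to the corresponding pregroup lexical entry $w \to F_0(t)$, and it then remains to realise every rule of $R_{CCG}$ as a morphism of $\bf{CC}(G')$.

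For the rules inherited from the Lambek calculus --- forward and backward application, composition and type-raising, i.e. all of $R_{LG}$ --- this is immediate: they are morphisms in any free biclosed category by Propositions \ref{prop-ab-biclosed} and \ref{prop-lambek-biclosed}, and since $\bf{RC}(G')$ is rigid hence biclosed, their images already exist in $\bf{RC}(G') \sub \bf{CC}(G')$. The genuinely new content is the crossed composition rules $R_{CCG} - R_{LG}$. Here I would invoke Proposition \ref{prop-crossed-biclosed}: crossed composition follows from the axioms of closed, i.e. symmetric biclosed, categories. A compact-closed category is rigid and symmetric, hence biclosed and symmetric, hence \emph{closed}; therefore the derivation of Proposition \ref{prop-crossed-biclosed} can be replayed verbatim in $\bf{CC}(G')$, yielding morphisms $F(\tt{xcomp}^>)$ and $F(\tt{xcomp}^<)$ of the required translated type. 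Assembling these choices, the universal property produces a unique monoidal functor $F$, which by construction fixes $V$ and $s$ and is thus a functorial reduction $\bf{MC}(G) \to \bf{CC}(G')$.

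The main obstacle I anticipate is the bookkeeping of translated domains and codomains, so that the crossed-composition morphism built from swaps and cups genuinely has type $F_0(a/b) \otimes F_0(c \backslash b) \to F_0(c \backslash a)$, that is $a\, b^l\, c^r\, b \to c^r\, a$ in $\bf{CC}(G')$: one must verify that the instance of $\tt{swap}$ needed to reorder the adjoints, followed by the appropriate cups, lands exactly on the pregroup translation of the target type, and that this assignment is uniform across all $a, b, c \in T(B)$. This is precisely the translation of Yeung and Kartsaklis \cite{yeung2021}, and checking its compatibility with the rigid translation of types is the one place where the symmetry of $\bf{CC}(G')$ is indispensable --- in the planar category $\bf{RC}(G')$ no such morphism exists, which is exactly why we must pass to the compact-closed completion.
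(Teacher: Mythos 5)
Your proposal is correct and follows essentially the same route as the paper: the paper's proof is exactly the translation of Proposition \ref{prop-categorial-to-pregroup} extended by an explicit swap-and-cups diagram for the crossed composition rules, which is precisely the morphism your appeal to Proposition \ref{prop-crossed-biclosed} constructs once the closed-category derivation is interpreted in $\bf{CC}(G')$ via $a \backslash b = a^r \otimes b$ and $b / a = b \otimes a^l$. The only difference is presentational --- the paper draws the image of $\tt{xcomp}$ directly where you derive it abstractly and defer the type bookkeeping, but both yield the same canonical functorial reduction.
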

\begin{proof}
   The translation is the same as \ref{prop-categorial-to-pregroup}, together
   with the following mapping for the crossed composition rules:
   \begin{equation}
       \scalebox{0.8}{\tikzfig{figures/pregroup-cross-dep}}
   \end{equation}
\end{proof}

Representing the crossed composition rule in compact-closed categories, allows
to reason diagrammatically about equivalent syntactic structures.

\begin{example}
    As an example consider the pregroup grammar $G$ with basic types $B = \set{n, s}$
    and lexicon given by:
    $$\Delta(\text{cooked}) = \set{n^r s n^l} \quad
    \Delta(\text{me}) = \Delta(\text{Grandma}) = \Delta(\text{parmigiana}) = \set{n}
    \quad \Delta(\text{for}) = \set{s^l s n^l}$$
    Then using the grammar from Example \ref{ex-crossed-comp}, we can map
    the two CCG parses to the following compact-closed diagrams in $\bf{CC}(G)$,
    even though the second is not grammatical in a planar pregroup grammar.
    \begin{equation*}
        \tikzfig{figures/pregroup-cross-gave}
    \end{equation*}
    If we interpret the wires for words as the unit of the tensor, then these two
    diagrams are equal in $\bf{CC}(G)$ but not when seen in a biclosed category.
\end{example}

\subsection{Dependency grammars are pregroups}\label{section-dependency}

Dependency grammars arose from the work of Lucien Tesniere in the 1950s
\cite{tesniere1959}. It was made formal in the 1960s by Hays \cite{hays1964} and
Gaifman \cite{gaifman1965}, who showed that they have the same expressive power
as context-free grammars. Dependency grammars are very popular in NLP,
supported by large-scale parsing tools such as those provided by Spacy
\cite{spacy}. We take the formalisation of Gaifman \cite{gaifman1965} as a starting
point and show how the dependency relation may be seen as a diagram in a free
rigid category.

Let us fix a vocabulary $V$ and a set of symbols $B$, called categories in
\cite{gaifman1965}, with $s \in B$ the sentence symbol.

\begin{definition}[Dependency grammar \cite{gaifman1965}]\label{def-dependency-grammar}
    A dependency grammar $G$ consists in a set of rules $G \sub (B + V) \times B^\ast$
    of the following shapes:
    \begin{enumerate}[I]
        \item $(x , y_1 \dots y_l \star y_{l+1} \dots y_n)$ where $x, y_i \in B$,
        indicating that the symbol $x$ may depend on the symbols $y_1 \dots y_l$
        on the left and on the symbols $y_{l+1} \dots y_n$ on the right.
        \item $(w, x)$ for $w \in V$ and $x \in B$, indicating
        that the word $w$ may have type $x$.
        \item $(x, s)$ indicating that the symbol $x$ may govern a sentence.
    \end{enumerate}
\end{definition}

Following Gaifman \cite{gaifman1965}, we define the language $\cal{L}(G)$
generated by a dependency grammar $G$ to be the set of strings
$u = w_1 w_2 \dots w_n \in V^\ast$ such that there are
symbols $x_1, x_2 \dots x_n \in B$ and a binary \emph{dependency relation}
$d \sub X \times X$ where $X = \set{x_1, \dots, x_n}$ satisfying
the following conditions:
\begin{enumerate}
    \item $(w_i, x_i) \in G$ for all $i \leq n$,
    \item $(x_i, x_i) \notin RTC(d)$ where $RTC(d)$ is the reflexive transitive
    closure of $d$, i.e. the dependency relation is \emph{acyclic},
    \item if $(x_i, x_j) \in d$ an $(x_i, x_k) \in d$ then $x_j = x_k$,
    i.e. every symbol depends on at most one head, i.e. the dependency relation
    is single-headed or \emph{monogamous},
    \item if $ i \leq j \leq k$ and $(x_i, x_k) \in RTC(d)$ then
    $(x_i, x_j) \in RTC(d)$, i.e. the dependency relation is \emph{planar},
    \item there is exactly one $x_h$ such that $\forall j \, (x_h, x_j) \notin d$
    and $(x_h, s) \in G$, i.e. the relation is \emph{connected} and \emph{rooted}
    ($x_h$ is called the root and we say that $x_h$ governs the sentence).
    \item for every $x_i$, let $y_1, \dots, y_l \in X$ be the (ordered list of)
    symbols which depend on $x_i$ from the left and $y_{l+1}, \dots, y_n \in X$
    be the (ordered list of) symbols which depend on $x_i$ from the right,
    then $(x, y_1 \dots y_l \star y_{l+1} \dots y_n) \in G$, i.e. the dependency
    structure is allowed by the rules of $G$.
\end{enumerate}

\begin{example}
    Consider the dependency grammar with $V = \set{\text{Moses}, \text{crossed}, \text{the},
    \text{Red}, \text{Sea}}$, $B = \set{d, n, a, s, v}$ and
    rules of type I:
    $$(v, n \star n), (a, \star n), (d, \star), (n, \star), (n, a d \star)\, ,$$
    of type II:
    $$ (\text{Moses}, n), (\text{crossed}, v), (\text{the}, d), (\text{Red}, a),
    (\text{Sea}, n)$$
    and a single rule of type III $(v, s)$.
    Then the sentence ``She tied a plastic bag'' is grammatical as witnessed
    by the following dependency relation:
    \begin{equation*}
        \tikzfig{figures/dependency}
    \end{equation*}
\end{example}

This combinatorial definition of a dependency relation has an algebraic counterpart
as a morphism in a free rigid category.
Given a dependency grammar $G$, let $\Delta(G) \sub V \times P(B)$ be the
pregroup lexicon defined by:
\begin{equation}\label{dependency-lexicon}
    (w, y_1^r \dots y_l^r\, x\, y_{l+1}^l \dots y_n^l) \in \Delta(G) \iff (w, x) \in G
    \land (x, y_1 \dots y_l \,\star\, y_{l+1} \dots y_n) \in G
\end{equation}
also, let $I(G)$ be rules in $G$ of the form $(x, s)$ where $x \in B$ and
$s$ is the sentence symbol.

\begin{proposition}\label{prop-dependency-grammar}
    For any dependency grammar $G$, if a string of words is grammatical
    $u \in \cal{L}(G)$ then there exists a morphism $u \to s$ in
    $\bf{RC}(\Delta(G) + I(G))$.
\end{proposition}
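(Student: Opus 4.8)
The plan is to read off an explicit rigid diagram from the witnessing dependency structure and to construct it by induction on the dependency tree. Since $u = w_1 \dots w_n \in \cal{L}(G)$, there are symbols $x_1, \dots, x_n \in B$ and a dependency relation $d$ satisfying conditions (1)--(6). Conditions (2), (3) and (5) make $d$ acyclic, single-headed and uniquely rooted at some $x_h$, so $d$ is a rooted tree, while condition (4) says this tree is \emph{projective} --- equivalently, that every subtree spans a contiguous substring of $u$. First I would strengthen the claim to a lemma suited to induction: for each $i$, writing $u_i$ for the contiguous substring carried by the subtree rooted at $x_i$, there is a morphism $u_i \to x_i$ in $\bf{RC}(\Delta(G))$. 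The proposition then follows: since $x_h$ is the root we have $u = u_h$, so the lemma gives $u \to x_h$, and condition (5) guarantees $(x_h, s) \in G$, hence $(x_h, s) \in I(G)$, providing the final induced step $x_h \to s$; composing yields $u \to s$ in $\bf{RC}(\Delta(G) + I(G))$.

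For the lemma I would induct on the height of the subtree. In the base case $x_i$ is a leaf, so by condition (1) the word $w_i$ carries the bare type $x_i \in \Delta(G)$ and $w_i \to x_i$ is immediate. For the inductive step, let $y_1, \dots, y_l$ and $y_{l+1}, \dots, y_n$ be the immediate left- and right-dependents of $x_i$ in word order. Projectivity makes $u_i$ the concatenation $u_{y_1} \dots u_{y_l}\, w_i\, u_{y_{l+1}} \dots u_{y_n}$, and the induction hypothesis reduces each block $u_{y_j} \to y_j$. By condition (6) together with (1), the defining biconditional \ref{dependency-lexicon} places the valency type $y_1^r \dots y_l^r\, x_i\, y_{l+1}^l \dots y_n^l$ of $w_i$ in $\Delta(G)$, so applying all the lexical boxes turns $u_i$ into a tensor of basic types and their adjoints. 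Each dependency is then discharged by a cup $y_j \otimes y_j^r \to 1$ on the left and $y_j^l \otimes y_j \to 1$ on the right, and after all contractions only $x_i$ survives.

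The hard part will be matching the \emph{planarity} of $d$ to the planarity of the cup pattern: the adjoints in \ref{dependency-lexicon} must be ordered so that the cups realizing the dependencies \emph{nest} rather than cross, which is exactly what is needed for the composite to live in the free \emph{rigid} (planar) category $\bf{RC}(\Delta(G))$ without invoking the symmetry of a compact-closed category. This is where condition (4) is essential, and I would verify it by checking that, under the word order fixed by projectivity, contracting each word's dependents from the innermost outwards produces only nested, non-crossing cups. Condition (3) (monogamy) ensures each adjoint is consumed by a single cup, and condition (2) (acyclicity) keeps the induction well-founded. The remaining bookkeeping --- that the cups are legitimate instances of the snake data \ref{axioms-rigid} and that the composite is well-typed --- is routine once the non-crossing arrangement has been established.
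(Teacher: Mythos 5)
Your proof is correct, but it takes a genuinely different route from the paper's. The paper argues globally: it applies one lexical entry per word all at once, obtaining $w_1 \dots w_n \to T$ where $T$ is the tensor of all the valency types, then discharges each dependency pair $(x_i, x_j) \in d$ with a single cup, invoking monogamy (3) and connectedness (5) to see that every $x_i$ except the root is eventually consumed, planarity (4) to assert that the cups can be drawn without crossing, and finally the induced step $x_h \to s$ from (5). You instead prove a stronger intermediate lemma --- every subtree rooted at $x_i$ spans a contiguous substring $u_i$ (the standard yield characterization of projectivity) and reduces as $u_i \to x_i$ in $\bf{RC}(\Delta(G))$ --- and assemble the diagram by structural induction on the dependency tree. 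The two constructions yield the same diagram, but your organization localizes the planarity bookkeeping: condition (4) is used exactly once, to get contiguity of yields, after which the nesting of cups at each node is automatic provided the adjoints in \ref{dependency-lexicon} are read with dependents listed from the head outward (the paper's example rule $(n, a\,d\,\star)$ for ``the Red Sea'' confirms this is the intended convention); you correctly identify this ordering convention as the crux, whereas the paper simply asserts non-crossing from (4) without comment. Your lemma also anticipates the tree-shapedness of operadic pregroup reductions that the paper only establishes later via Delpeuch's autonomization (Proposition \ref{prop-pregroup-trees}), so it buys a more direct bridge to Proposition \ref{prop-dep2preg}. One small nit: in your base case, the bare lexical entry $(w_i, x_i) \in \Delta(G)$ requires condition (6) (the presence of the rule $(x_i, \star) \in G$) in addition to condition (1), since the biconditional \ref{dependency-lexicon} demands both conjuncts; you state this correctly in the inductive step but elide it at the leaves.
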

\begin{proof}
    Fix a dependency grammar $G$, and suppose $w_1 \dots w_n \in \cal{L}(G)$
    then there is a list of symbols $X = \{ x_1, x_2 \dots x_n \}$ with $x_i \in B$
    and a dependency relation $d \sub X \times X$ such that the conditions
    $(1), \dots, (6)$ given above are satisfied. We need to show that $d$ defines
    a diagram $ w_1 \dots w_n \to s$ in $\bf{RC}(\Delta(G) + I(G))$. Starting
    from the type $ w_1 w_2 \dots w_n$, conditions (1) and (6) of the dependency
    relation ensure that there is an assignment of a single lexical entry in
    $\Delta(G)$ to each $w_i$. Applying these lexical entries we get a morphism
    $ w_1 w_2 \dots w_n \to T$ in $\bf{RC}(\Delta(G))$ where:
    $$T = \otimes_{i=1}^n (y_{i,1}^r \dots y_{i,l_i}^r\, x_i\, y_{i, l_i+ 1}^l \dots y_{i, n_i}).$$
    For each pair $(x_i, x_j) \in d$ with $i \leq j$, $x_i$ must appear as some
    $y_{j, k}$ with $k \leq l_j$ by condition (6). Therefore we can apply
    a cup connecting $x_i$ and $(y_{j, k})^r$ in $T$. Similarly for $(x_i, x_j)
    \in d$ with $j \leq i$, $x_i$ must appear as some $y_{j, k}$ with $k > l_j$ and
    we can apply a cup connecting $(y_{j, k})^l$ and $x_i$ in $T$.
    By monogamy (3) and connectedness (5) of the dependency relation, there is
    exactly one such pair for each $x_i$, except for the root $x_h$.
    Therefore we can keep applying cups until only $x_h$ is left.
    By planarity (4) of the dependency relation, these cups
    don't have to cross, which means that the diagram obtained is a valid morphism
    $ T \to x_h$ in $\bf{RC}(\Delta(G))$. Finally condition (5) ensures that
    there exists an induced step $x_h \to s \in I(G)$. Overall  we have built
    a morphism $ w_1 w_2 \dots w_n \to T \to x_h \to s$ in $\bf{RC}(\Delta(G) + I(G))$,
    as required.
\end{proof}

\begin{corollary}
    For any dependency grammar $G$ there is a pregroup grammar
    $\tilde{G} = (V, B, \Delta(G), I(G), s)$ such that
    $\cal{L}(G) \sub \cal{L}(\tilde{G})$.
\end{corollary}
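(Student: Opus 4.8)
The plan is to derive this corollary almost directly from Proposition~\ref{prop-dependency-grammar}. First I would unfold the definition of the language of a pregroup grammar: by construction the grammar $\tilde{G} = (V, B, \Delta(G), I(G), s)$ has $\bf{RC}(\tilde{G}) := \bf{RC}(\Delta(G) + I(G))$, so that
\[
\cal{L}(\tilde{G}) = \set{u \in V^\ast \,:\, \exists g \in \bf{RC}(\Delta(G) + I(G))(u, s)}.
\]

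Next I would take an arbitrary grammatical string $u \in \cal{L}(G)$ and invoke Proposition~\ref{prop-dependency-grammar}, which guarantees a morphism $u \to s$ in $\bf{RC}(\Delta(G) + I(G))$. By the unfolded definition above, the existence of such a morphism is precisely the condition for membership $u \in \cal{L}(\tilde{G})$. Since $u$ was arbitrary, this establishes $\cal{L}(G) \sub \cal{L}(\tilde{G})$, as required.

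I do not anticipate any genuine obstacle here: the substantive content --- translating a dependency relation satisfying acyclicity, monogamy, planarity and connectedness into an actual rigid diagram of cups followed by an induced step --- has already been carried out in the proof of Proposition~\ref{prop-dependency-grammar}. The only point worth flagging is why the statement yields a containment rather than an equality. The reverse inclusion would require every reduction $u \to s$ in $\bf{RC}(\tilde{G})$ to arise from a dependency structure admissible under $G$, and this can fail: a rigid diagram may contract types in ways that violate the single-headedness or connectedness conditions while remaining a valid morphism, so $\tilde{G}$ may over-generate. Proving equality would therefore demand an extra argument --- for instance using the switching lemma (Proposition~\ref{prop-switching}) to normalise reductions so that only cups appear, and then reading back a well-formed dependency relation from the resulting diagram --- which goes beyond what the corollary asserts.
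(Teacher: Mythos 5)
Your proof is correct and matches the paper, which states this corollary without separate proof precisely because it is the immediate unfolding of Proposition \ref{prop-dependency-grammar} against the definition of $\cal{L}(\tilde{G})$ that you give. Your closing remark about why only containment (not equality) holds also accords with the paper's subsequent discussion, where equality is recovered only for operadic pregroups via the switching lemma in Proposition \ref{prop-dep2preg}.
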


\begin{example}
    An example of the translation defined above is the following:
    \begin{equation*}
        \scalebox{0.8}{\tikzfig{figures/dependency-to-pregroup}}
    \end{equation*}
\end{example}

The proposition above gives a structural reduction from dependency grammars
to pregroup grammars, where the dependency relation witnessing the grammaticality
of a string $u$ is seen as a pregroup reduction $u \to s$.
This leads to a first question: do all the pregroup reduction arise from
a dependency relation?
In \cite{preller2007}, Preller gives a combinatorial description of pregroup
reductions, which is strikingly similar to the definition of dependency relation.
In particular it features the same conditions for monogamy (3), planarity (4) and
connectedness (5).
However, pregroup reductions are in general \emph{not} acyclic, as the following
example shows:
\begin{equation}\label{ex-cyclic-pregroup}
    \tikzfig{figures/pregroup-acyclic}
\end{equation}
Therefore we do not expect that any given pregroup grammar reduces to a
dependency grammar. The question still remains for pregroup grammars with
restricted types of lexicons. Indeed, the cyclic example above uses a lexicon
which is not of the shape (\ref{dependency-lexicon}).
We define operadic pregroups to be pregroup grammars with lexicon of the shape
(\ref{dependency-lexicon}).

\begin{definition}[Operadic pregroup]
    An operadic pregroup is a pregroup grammar $G = (V, B, \Delta, s)$ such
    that for any lexical entry $(w, t) \in \Delta$ we have $t= y^r x \ z^l$
    for some $x \in B$ and $y, z \in B^\ast$.
\end{definition}

Using Delpeuch's autonomization of monoidal categories \cite{delpeuch2019},
we can show that reductions in an operadic pregroup always form a tree,
justifying the name ``operadic'' for these structures.

\begin{proposition}\label{prop-pregroup-trees}
     Every operadic pregroup is functorially equivalent to a CFG.
\end{proposition}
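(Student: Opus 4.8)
The plan is to read a CFG $G'$ directly off the operadic lexicon and then exhibit a functorial equivalence between its free operad $\bf{Op}(G')$ and the free rigid category $\bf{RC}(\Delta + I)$ of the operadic pregroup $G = (V, B, \Delta, s)$. Concretely, for every lexical entry $(w, y_1^r \dots y_k^r\, x\, z_1^l \dots z_m^l) \in \Delta$ I introduce a production rule $y_1 \dots y_k\, w\, z_1 \dots z_m \to x$, treating $w$ as a terminal leaf and $y_i, z_j, x$ as non-terminals, and for every induced step $(x, s) \in I$ I add a rule $x \to s$. This is a lexicalised CFG of the shape $(B+V)^\ast \leftarrow G' \to B$. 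The whole point of the operadic restriction $t = y^r x z^l$ is that each word carries a single \emph{bare} head type $x$, which becomes the codomain of the corresponding operadic node; the adjoint blocks $y^r$ and $z^l$ record its left and right dependents, exactly as in the dependency lexicon (\ref{dependency-lexicon}).

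First I would pin down the shape of reductions. By the switching lemma (Proposition \ref{prop-switching}) and its corollary, any reduction $u \to s$ in $\bf{RC}(\Delta + I)$ may be taken to use only cups, since the lexicon contains no caps and the sole non-cup morphisms are the induced steps $x \to s$ applied at the very end. In such a cups-only reduction every contraction has the form $y^r \otimes y \to 1$ or $y \otimes y^l \to 1$, so each adjoint slot $y_j^r$ (respectively $z_j^l$) of a word must be cupped against the bare head $x'$ of some other word lying to its left (respectively right). Thus each cup encodes a single governing relation, reproducing the monogamous, connected dependency bookkeeping established in Proposition \ref{prop-dependency-grammar}.

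The key step is to promote this to a natural bijection between cups-only reductions and operad trees in $\bf{Op}(G')$. Because every word contributes exactly one un-adjointed basic type, each head is consumed by at most one cup, so the contraction pattern is single-headed, and planarity of the rigid diagram forbids crossing cups, forcing the governing structure to be a rooted, non-crossing tree whose root is reduced to $s$ by an induced step. Conversely, the recipe of Proposition \ref{prop-dependency-grammar} sends each operadic node $y_1 \dots y_k\, w\, z_1 \dots z_m \to x$ to the fan of cups contracting the dependents' heads against $w$'s adjoint slots, with operadic composition mirroring the nesting of these fans. To make the correspondence rigorous I would use Delpeuch's autonomization of monoidal categories \cite{delpeuch2019}: the induced functor $F' \colon \bf{Op}(G') \to \bf{RC}(\Delta + I)$ straightens each tree into a planar cup diagram, and the content of the proposition is that $F'$ is \emph{full and faithful} on the derivations $u \to s$, giving a bijection of parses and hence the language equality $\cal{L}(G) = \cal{L}(G')$ together with the reverse assignment of reductions to trees.

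The main obstacle is precisely this fullness, i.e.\ ruling out reductions in $\bf{RC}(\Delta + I)$ that are \emph{not} tree-shaped. In a general pregroup such cyclic reductions genuinely occur, as the example (\ref{ex-cyclic-pregroup}) shows, and the entire weight of the argument is to prove that the operadic restriction $t = y^r x z^l$ excludes them. Establishing acyclicity --- that no chain of cups can close a loop once each word supplies exactly one bare basic type --- and using autonomization to certify that no morphisms beyond the planar trees survive, is where the real difficulty lies; once this is in hand, the construction of $G'$ and the verification that $F'$ respects composition and identities are routine.
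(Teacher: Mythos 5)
Your construction of the CFG from the operadic lexicon is exactly the paper's, and you have correctly located the crux --- ruling out cyclic reductions --- but your proposal stops precisely there: you state that establishing acyclicity ``is where the real difficulty lies'' and defer it, so the core of the proposition is left unproven. Worse, the route you propose for closing it is circular in structure: you ask for fullness of a functor $F' \colon \bf{Op}(G') \to \bf{RC}(\Delta + I)$ from trees into pregroup reductions, but that fullness \emph{is} the statement being proved, so it cannot serve as the key lemma. Your intermediate combinatorial claim also overreaches: monogamy, planarity and connectedness of the cup pattern do not by themselves force a tree, since Preller's characterisation shows general pregroup reductions enjoy all three properties while the example in (\ref{ex-cyclic-pregroup}) is nonetheless cyclic. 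The single observation you offer about the operadic shape --- each word contributes exactly one bare head --- is the right ingredient, but you never convert it into an argument excluding cycles.

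The paper closes this gap by running the autonomization in the \emph{opposite} direction, where a known fullness theorem is available. It defines a functor $F \colon \bf{RC}(\tilde{G}) \to \cal{A}(\bf{MC}(G))$ into Delpeuch's free autonomous category over the free monoidal category on the CFG $G$, sending each lexical entry $(w, y_1^r \dots y_l^r\, x\, y_{l+1}^l \dots y_n^l)$ to the corresponding production rule with its dependent wires bent by caps. Delpeuch's theorem \cite{delpeuch2019} states that the embedding $\bf{MC}(G) \hookrightarrow \cal{A}(\bf{MC}(G))$ is full on morphisms between objects of $(B+V)^\ast$; since $F(u) = u$ and $F(s) = s$ are such objects, every cup and cap in $F(g)$ cancels via the snake equations and $F(g)$ lands in $\bf{MC}(G)$. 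Acyclicity then comes for free --- monoidal diagrams are acyclic by construction --- and since every box has a single output the diagram is a tree; connectivity is preserved, so the original pregroup reduction was tree-shaped. Note also that this route needs no appeal to the switching lemma \ref{prop-switching}, because the fullness argument eliminates cups and caps simultaneously, whereas your restriction to cups-only reductions handles existence of a parse but not arbitrary morphisms of $\bf{RC}(\Delta + I)$. If you want to salvage your direction, you would have to supply an independent combinatorial proof that a planar, monogamous cup pattern over types of the restricted shape $y^r x\, z^l$ admits no rootless (cyclic) component --- a fiddly argument that the forward-functor trick makes unnecessary.
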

\begin{proof}
    It will be sufficient to show that the reductions of an operadic pregroup are trees.
    Fix an operadic pregroup $\tilde{G} = (V, B, \Delta, I, s)$.
    We now define a functor $F : \bf{RC}(\tilde{G}) \to \cal{A}(\bf{MC}(G))$
    where $\cal{A}$ is the free rigid (or autonomous) category construction on
    monoidal categories as defined by Delpeuch \cite{delpeuch2019}, and $G$ is a
    context-free grammar with basic symbols $B + V$ and production rules
    $y_1 \dots y_l w y_{l+1} \dots y_n \to x \in G$ whenever
    $(w, y_1^r \dots y_l^r\, x\, y_{l+1}^l \dots y_n^l) \in \tilde{G}$.
    $F$ is given by the identity on objects $F(x) = x$, and on lexical entries
    it is defined by:
    \begin{equation*}
        \tikzfig{figures/autonomize-1}
    \end{equation*}
    As shown by Delpeuch the embedding $\bf{MC}(G) \to \cal{A}(\bf{MC}(G))$ is full
    on the subcategory of morphisms $g: x \to y$ where $x, y \in (B + V)^\ast$.
    Given any pregroup reduction $g: u \to s$ in $\bf{RC}(\tilde{G})$ we may apply
    the functor $F$ to get a morphism $F(g): F(u) \to F(s)$. By definition
    of $F$, we have that $F(u) = u$ and $F(s) = s$ are both elements of
    $(B + V)^\ast$. By fullness of the embedding, all the cups and caps
    in $F(g)$ can be removed using the snake equation, i.e. $F(g) \in \bf{MC}(G)$.
    We give an example to illustrate this:
    \begin{equation*}
        \tikzfig{figures/autonomize-2}
    \end{equation*}
    It is easy to see that the induced monoidal diagram has the same connectivity
    as the pregroup reduction. Since it is a monoidal diagram it must be
    \emph{acyclic}, and since all the boxes have only one ouput it must be
    a \emph{tree}, finishing the proof.
\end{proof}

\begin{proposition}\label{prop-dep2preg}
    Every dependency grammar is struturally equivalent to an operadic pregroup.
\end{proposition}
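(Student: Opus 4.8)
The plan is to exhibit, for a given dependency grammar $G$, an operadic pregroup structurally equivalent to it, and the natural candidate is already at hand: the pregroup grammar $\tilde{G} = (V, B, \Delta(G), I(G), s)$ constructed in Proposition \ref{prop-dependency-grammar}. First I would observe that $\tilde{G}$ is \emph{operadic} essentially by definition. Indeed, by \ref{dependency-lexicon} every lexical entry of $\Delta(G)$ has the shape $y_1^r \dots y_l^r\, x\, y_{l+1}^l \dots y_n^l = y^r x z^l$ with $x \in B$ and $y, z \in B^\ast$, which is exactly the operadic constraint. The forward half of the equivalence is then Proposition \ref{prop-dependency-grammar} together with its corollary: the assignment sending a dependency relation $d$ witnessing $u \in \cal{L}(G)$ to the rigid diagram built by linking matched adjoints with cups gives a map from derivations of $G$ into $\bf{RC}(\tilde{G})(u, s)$, and in particular $\cal{L}(G) \sub \cal{L}(\tilde{G})$.

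The substance of the proof is the reverse direction: reconstructing a valid dependency relation from an arbitrary pregroup reduction. Given $g : u \to s$ in $\bf{RC}(\tilde{G})$ with $u = w_1 \dots w_n$, I would first use the corollary of the switching lemma (Proposition \ref{prop-switching}), which guarantees that any reduction to the atomic sentence type $s$ may be taken to use only contractions (cups) followed by a single induced step $x_h \to s$ from $I(G)$. Reading off this planar pattern of cups defines a binary relation $d$ on the chosen symbols $x_1, \dots, x_n$: set $(x_i, x_j) \in d$ whenever a cup links the base occurrence $x_i$ to an adjoint occurrence inside the entry assigned to $w_j$. It then remains to check conditions (1)--(6) of the dependency language. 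Condition (1) is immediate from the lexical assignment, since each entry of $\Delta(G)$ comes from a pair $(w, x) \in G$; monogamy (3), planarity (4) and connectedness and rootedness (5) follow from Preller's combinatorial characterisation of pregroup reductions \cite{preller2007}, which supplies exactly these properties for the cups of any reduction; and condition (6) holds because each operadic entry $y^r x z^l$ was manufactured from a type-I rule $(x, y_1 \dots y_l \star y_{l+1} \dots y_n)$, so the left and right dependents of $x_i$ realised by the cups are precisely those permitted by a single rule of $G$.

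The outstanding condition is acyclicity (2), and this is where I expect the main obstacle to lie. General pregroup reductions need not be acyclic, as the example \ref{ex-cyclic-pregroup} shows, so acyclicity cannot come from pregroup theory alone and must instead be extracted from the operadic restriction. Here I would invoke Proposition \ref{prop-pregroup-trees}, which shows that every reduction in an operadic pregroup becomes, after autonomisation, a tree and hence an acyclic monoidal diagram. Transporting this acyclicity back through the correspondence between cups and dependency edges yields condition (2) for $d$, which completes the reconstruction, shows $u \in \cal{L}(G)$, and therefore establishes $\cal{L}(G) = \cal{L}(\tilde{G})$.

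Finally, to upgrade from weak to \emph{structural} equivalence I would check that the two assignments --- dependency relation to rigid diagram, and rigid diagram to dependency relation --- are mutually inverse and functorial, so that together they form a functorial equivalence between the derivations of $G$ and those of $\bf{RC}(\tilde{G})$. Since each cup corresponds to exactly one dependency edge, this bijection preserves the connectivity of derivations, which is precisely the content of structural rather than merely weak equivalence. The two delicate points, on which the whole argument turns, are both consequences of the operadic shape of the lexicon: that the reduction can be taken cup-only, and that Proposition \ref{prop-pregroup-trees} applies to furnish acyclicity.
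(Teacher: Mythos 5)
Your proof is correct and follows essentially the same route as the paper's: the forward injection via Proposition \ref{prop-dependency-grammar}, the reverse direction via the switching lemma \ref{prop-switching} to obtain a cup-only reduction whose links satisfy conditions (1) and (3)--(6) by Preller's characterisation \cite{preller2007}, and acyclicity (2) extracted from Proposition \ref{prop-pregroup-trees}. You correctly isolate acyclicity as the one condition that cannot come from pregroup theory alone and must use the operadic restriction, which is precisely the pivot of the paper's argument.
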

\begin{proof}
    We need to show that given a dependency grammar $G$ and a string of words $u \in V^\ast$,
    dependency relations for $u$ are in one-to-one correspondence with reductions
    $u \to s$ for the operadic pregroup $\tilde{G} = (V, B, \Delta(G), I(G), s)$.
    The proof of Proposition \ref{prop-dependency-grammar}, gives an injection of
    dependency relations for $u$ to pregroup reductions $u \to s$ for
    $\tilde{G}$.  For the opposite direction note that, by the Lambek switching
    lemma \ref{prop-switching}, any pregroup reduction $u \to s$ can be obtained
    using lexical entries followed by a diagram made only of cups (contractions).
    This defines a relation on $u$ satisfying conditions (1) and (3-6) of
    dependency relations, see \cite{preller2007}. It remains to show that also
    condition (2) is satisfied, i.e. that reductions in an operadic pregroup
    are \emph{acyclic}, but this follows from Proposition \ref{prop-pregroup-trees}.
\end{proof}

\begin{theorem}\label{theorem-dependencies}
    Every dependency grammar is structurally equivalent to both a pregroup and
    a context-free grammar.
\end{theorem}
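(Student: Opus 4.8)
The plan is to assemble the theorem directly from the two structural results established immediately above, the only real content being the observation that both of the equivalences in play are genuine bijections on derivations and that such structural equivalences compose. First I would dispose of the pregroup half, which requires no new argument. Given a dependency grammar $G$, Proposition \ref{prop-dep2preg} produces the operadic pregroup $\tilde{G} = (V, B, \Delta(G), I(G), s)$ and shows that for every string $u \in V^\ast$ the dependency relations for $u$ are in one-to-one correspondence with the reductions $u \to s$ in $\bf{RC}(\tilde{G})$. Since an operadic pregroup is by definition a pregroup grammar, this is already a structural equivalence between $G$ and a pregroup grammar, settling the first claim outright.

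For the context-free half I would compose this with Proposition \ref{prop-pregroup-trees}. That proposition builds from $\tilde{G}$ a context-free grammar $\hat{G}$ with symbols $B + V$ and a production rule $y_1 \dots y_l\, w\, y_{l+1} \dots y_n \to x$ for each lexical entry $(w, y_1^r \dots y_l^r\, x\, y_{l+1}^l \dots y_n^l) \in \tilde{G}$, together with the functor $F$ into $\cal{A}(\bf{MC}(\hat{G}))$ sending each pregroup reduction to a tree of the same connectivity. The point I need to extract is that $F$ restricts to a bijection $\bf{RC}(\tilde{G})(u, s) \to \bf{Op}(\hat{G})(u, s)$ for every $u$. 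Injectivity follows because $F$ preserves the cup-connectivity of a reduction, and by the switching lemma (Proposition \ref{prop-switching}) any reduction $u \to s$ is determined, up to the snake equations, by a cups-only contraction, hence by its connectivity. Surjectivity follows because fullness of the embedding $\bf{MC}(\hat{G}) \hookrightarrow \cal{A}(\bf{MC}(\hat{G}))$ lets one read any tree of $\bf{Op}(\hat{G})$ backwards to a unique cups-only pregroup reduction. Composing this bijection with the one from Proposition \ref{prop-dep2preg} yields a one-to-one correspondence between dependency relations for $u$ and derivation trees $u \to s$ in $\hat{G}$, which is exactly the asserted structural equivalence with a context-free grammar.

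The main obstacle is precisely this upgrade from the \emph{functorial} equivalence of Proposition \ref{prop-pregroup-trees} to a \emph{structural} one, together with the (routine but necessary) transitivity of structural equivalence. Both the translation $\Delta(G)$ of Proposition \ref{prop-dependency-grammar} and the autonomization functor $F$ only rearrange cups and introduce caps that cancel under the snake equations, so connectivity is a complete invariant on either side; the care lies in verifying that distinct dependency relations never collapse to the same tree and that no tree is left unmatched, so that the composite of the two derivation-level bijections is again a bijection and hence delivers the theorem.
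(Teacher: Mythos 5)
Your proof is correct and takes essentially the same route as the paper, whose entire proof of Theorem \ref{theorem-dependencies} is the one-line composition of Propositions \ref{prop-dep2preg} and \ref{prop-pregroup-trees} that you carry out. Your extra care in upgrading the merely \emph{functorial} equivalence of Proposition \ref{prop-pregroup-trees} to a derivation-level bijection --- injectivity via connectivity and the switching lemma (Proposition \ref{prop-switching}), surjectivity via fullness of the autonomization embedding --- is sound and in fact supplies a detail the paper's own proof leaves implicit.
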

\begin{proof}
    Follows from Propositions \ref{prop-dep2preg} and \ref{prop-pregroup-trees}.
\end{proof}

Overall, we have three equivalent ways of looking at the structures induced by
dependency grammars a.k.a operadic pregroups.
First, we may see them as \emph{dependency relations} as
first defined by Gaifman \cite{gaifman1965} and reviewed above. Second, we may
see them as \emph{pregroup reductions} (i.e. patterns of cups)
as proven in Proposition \ref{prop-dependency-grammar}.
Third, we may see them as \emph{trees} as shown in Proposition \ref{prop-pregroup-trees}.

On the one hand, this new algebraic perspective will allow us to give functorial
semantics to dependency grammars. In \ref{sec-tensor-network}
we interpret them in rigid categories using their characterization as pregroup
grammars. In \ref{section-wsd}, we interpret them in a category
of probabilistic processes (where cups and caps are not allowed) using the
characterization of dependency relations as trees.
On the other, it allows us to interface DisCoPy with established dependency
parsers such as those provided by SpaCy \cite{spacy}.

\subsection{rigid.Diagram}

The \py{rigid} module of DisCoPy is often used as a suitable
intermediate step between any grammar and tensor-based semantics.
For example, the \py{lambeq} package
\cite{kartsaklis2021} provides a method for generating instances of
\py{rigid.Diagram} from strings parsed using a transformer-based
CCG parser \cite{clark2021}. We describe the implementation of the \py{rigid}
module and construct an interface with SpaCy's dependency parser \cite{spacy}.
A \py{rigid.Ob}, or basic type, is defined by a \py{name} and a winding number
\py{z}. It comes with property methods \py{.l} and \py{.r} for taking left
and right adjoints by acting on the winding integer \py{z}.

\begin{python}\label{listing:rigid.Ob}
{\normalfont Basic types and their iterated adjoints.}

\begin{minted}{python}
class Ob(cat.Ob):
    @property
    def z(self):
        """ Winding number """
        return self._z

    @property
    def l(self):
        """ Left adjoint """
        return Ob(self.name, self.z - 1)

    @property
    def r(self):
        """ Right adjoint """
        return Ob(self.name, self.z + 1)

    def __init__(self, name, z=0):
        self._z = z
        super().__init__(name)
\end{minted}
\end{python}

Types in rigid categories also come with a monoidal product,
We implement them by subclassing \py{monoidal.Ty} and providing the defining
methods of \py{rigid.Ob}, note that taking adjoints reverses the order of objects.

\begin{python}\label{listing:rigid.Ty}
{\normalfont Pregroup types, i.e. types in free rigid categories.}

\begin{minted}{python}
class Ty(monoidal.Ty, Ob):
    @property
    def l(self):
        return Ty(*[x.l for x in self.objects[::-1]])

    @property
    def r(self):
        return Ty(*[x.r for x in self.objects[::-1]])

    @property
    def z(self):
        return self[0].z

    def __init__(self, *t):
        t = [x if isinstance(x, Ob)
             else Ob(x.name) if isinstance(x, cat.Ob)
             else Ob(x) for x in t]
        monoidal.Ty.__init__(self, *t)
        Ob.__init__(self, str(self))
\end{minted}
\end{python}

Rigid diagrams are monoidal diagrams with special morphisms called \py{Cup}
and \py{Cap}, satisfying the snake equations (\ref{axioms-rigid}).
The requirement that the axioms are satisfied is relaxed to the availability of
a polynomial time algorithm for checking equality of morphisms. This is implemented
in DisCoPy, with the \py{rigid.Diagram.normal_fom} method, following the
algorithm of Dunn and Vicary \cite[Definition 2.12]{dunn2019}.
Rigid diagrams are also biclosed, i.e. they can be \py{curry}ed and \py{uncurry}ed.

\begin{python}\label{listing:rigid.Diagram}
{\normalfont Diagrams in free rigid categories.}

\begin{minted}{python}
@monoidal.Diagram.subclass
class Diagram(monoidal.Diagram):
    @staticmethod
    def cups(left, right):
        return cups(left, right)

    @staticmethod
    def caps(left, right):
        return caps(left, right)

    @staticmethod
    def curry(self, n_wires=1, left=False):
        return curry(self, n_wires=n_wires, left=left)

    @staticmethod
    def uncurry(self, n_wires=1, left=False):
        return uncurry(self, n_wires=n_wires, left=left)

    def normal_form(self, normalizer=None, **params):
        ...


class Box(monoidal.Box, Diagram):
    ...


class Id(monoidal.Id, Diagram):
    ...
\end{minted}

Note that currying and uncurrying correspond to transposition of wires in
the rigid setting. The class comes with its own \py{Box} instance which carry
a winding number $\py{_z}$ for their transpositions. The currying and
uncurrying functions are defined as follows.

\begin{minted}{python}
from discopy.rigid import Id, Ty, Box, Diagram

def curry(diagram, n_wires=1, left=False):
    if not n_wires > 0:
        return diagram
    if left:
        wires = diagram.dom[:n_wires]
        return Diagram.caps(wires.r, wires) @ Id(diagram.dom[n_wires:])\
            >> Id(wires.r) @ diagram
    wires = diagram.dom[-n_wires:]
    return Id(diagram.dom[:-n_wires]) @ Diagram.caps(wires, wires.l)\
        >> diagram @ Id(wires.l)

def uncurry(diagram, n_wires=1, left=False):
    if not n_wires > 0:
        return diagram
    if left:
        wires = diagram.cod[:n_wires]
        return Id(wires.l) @ diagram\
            >> Diagram.cups(wires.l, wires) @ Id(diagram.cod[n_wires:])
    wires = diagram.cod[-n_wires:]
    return diagram @ Id(wires.r)\
        >> Id(diagram.cod[:-n_wires]) @ Diagram.cups(wires, wires.r)
\end{minted}
\end{python}

We only showed the main methods available with rigid diagrams. The DisCoPy
implementation also comes with classes \py{Cup} and \py{Cap} for representing
the structural morphisms.
This allows to define \py{rigid.Functor} as a monoidal Functor with a
predefined mapping on instances of \py{Cup} and \py{Cap}.

\begin{python}\label{listing:rigid.Functor}
{\normalfont Functors from free rigid categories.}

\begin{minted}{python}
class Functor(monoidal.Functor):
    def __init__(self, ob, ar, cod=(Ty, Diagram)):
        super().__init__(ob, ar, cod=cod)

    def __call__(self, diagram):
        if isinstance(diagram, Ty):
            ...
        if isinstance(diagram, Cup):
            return self.cod[1].cups(
                self(diagram.dom[:1]), self(diagram.dom[1:]))
        if isinstance(diagram, Cap):
            return self.cod[1].caps(
                self(diagram.cod[:1]), self(diagram.cod[1:]))
        if isinstance(diagram, Box):
            ...
        if isinstance(diagram, monoidal.Diagram):
            return super().__call__(diagram)
        raise TypeError()
\end{minted}
\end{python}

We build an interface with the dependency parser of SpaCy \cite{spacy}. From a SpaCy
dependency parse we may obtain both an \py{operad.Tree} and a \py{rigid.Diagram}.

\begin{python}\label{listing:operad.SpaCy}
{\normalfont Interface between \py{spacy} and \py{operad.Tree}}

\begin{minted}{python}
from discopy import operad

def find_root(doc):
    for word in doc:
        if word.dep_ == 'ROOT':
            return word

def doc2tree(word):
    if not word.children:
        return operad.Box(word.text, operad.Ob(word.dep_), [])
    root = operad.Box(word.text, operad.Ob(word.dep_),
                      [operad.Ob(child.dep_) for child in word.children])
    return root(*[doc2tree(child) for child in word.children])

def from_spacy(doc):
    root = find_root(doc)
    return doc2tree(root)
\end{minted}
\begin{minted}{python}
import spacy
nlp = spacy.load("en_core_web_sm")
doc = nlp("Moses crossed the Red Sea")
assert str(from_spacy(doc)) = 'crossed(Moses, Sea(the, Red))'
\end{minted}
\end{python}

\begin{python}\label{listing:rigid.SpaCy}
{\normalfont Interface between \py{spacy} and \py{rigid.Diagram}}

\begin{minted}{python}
from discopy.rigid import Ty, Id, Box, Diagram, Functor

def doc2rigid(word):
    children = word.children
    if not children:
        return Box(word.text, Ty(word.dep_), Ty())
    left = Ty(*[child.dep_ for child in word.lefts])
    right = Ty(*[child.dep_ for child in word.rights])
    box = Box(word.text, left.l @ Ty(word.dep_) @ right.r, Ty(),
              data=[left, Ty(word.dep_), right])
    top = curry(curry(box, n_wires=len(left), left=True), n_wires=len(right))
    bot = Id(Ty()).tensor(*[doc2rigid(child) for child in children])
    return top >> bot

def doc2pregroup(doc):
    root = find_root(doc)
    return doc2rigid(root)
\end{minted}

We can now build pregroup reductions from dependency parses.
We check that the outputs of the two interfaces are functorially equivalent.

\begin{minted}{python}
def rewiring(box):
    if not box.data:
        return Box(box.name, box.dom, Ty())
    left, middle, right = box.data[0], box.data[1], box.data[2]
    new_box = Box(box.name, middle, left @ right)
    return uncurry(uncurry(new_box, len(left), left=True), len(right))

F = Functor(ob=lambda ob: Ty(ob.name), ar=rewiring)
assert repr(F(doc2pregroup(doc)).normal_form()) ==\
    repr(operad.tree2diagram(from_spacy(doc)))

drawing.equation(dep2pregroup(doc).normal_form(left=True),
                 operad.tree2diagram(from_spacy(doc)), symbol='->')
\end{minted}

\begin{center}
    \includegraphics[scale=0.55]{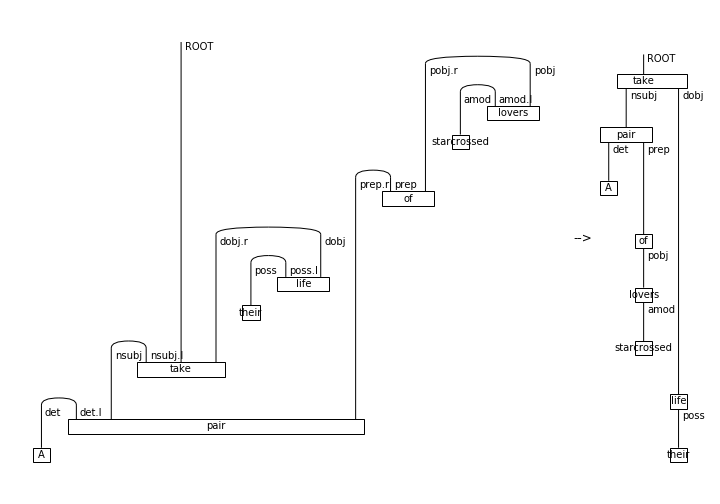}
\end{center}
\end{python}

\section{Hypergraphs and coreference} \label{sec-hypergraph} 

Coreference resolution is the task of finding all linguistic expressions, called
mentions, which refer to the same entity in a piece of text.
It has been a core research topic in
NLP \cite{elango2006}, including early syntax-based models of pronoun resolution
\cite{hobbs1978}, Bayesian and statistical approaches \cite{ge1998, clark2015}
as well as neural-based models \cite{clark2016, lee2017}. This is still a
very active area of research with new state-of-the-art models released every
year, and several open-source tools available in the web \cite{stanza2020}.

In the previous sections, we studied a range of formal grammars that capture
the syntactic structure of sentences. The aim of this section is
to cross the sentence boundary and move towards an analysis of text and
discourse. Assuming that the resolution process has been completed, we want a
suitable syntactic representation of text with coreference.
We can obtain it using a piece of structure known as a commutative
special Frobenius algebra, or more simply a ``spider''. These were introduced in
linguistics by Sadrzadeh et al. \cite{sadrzadeh2013, sadrzadeh2014} as a model
for relative pronouns, and have recently been used by Coecke \cite{coecke2020b}
to model the interaction of sentences within text.

In this section, we introduce pregroup grammars with coreference, a syntactic
model which allows to represent the grammatical and referential structure of text
diagrammatically. This is similar in spirit to the work of Coecke \cite{coecke2020b},
although our approach preserves the pregroup formalism and adds coreference as extra structure.
This makes our model suited for implementation since one can first parse
sentences with a pregroup grammar and then link the entities together using coreference
resolution tools. We show a proof-of-concept implementation using the \py{hypergraph}
module of DisCoPy.

\subsection{Hypergraph categories}\label{section-hypergraph}

The term ``hypergraph categories'' was introduced in 2018 by Fong and Spivak
\cite{fong2018a}, to refer to categories equipped with \emph{Frobenius algebras}
on every object. These structures were studied at least since Carboni and
Walters \cite{carboni1987} and have been applied to such diverse fields as
databases \cite{BonchiEtAl18}, control theory \cite{baez2014a},
quantum computing \cite{coecke2017a} and linguistics \cite{sadrzadeh2014}.
In a recent line of work \cite{bonchi2016d, bonchi2020, bonchi2021},
Bonchi, Sobocinski et al. developed a rewrite theory for
morphisms in these categories in terms of double-pushout hypergraph rewriting
\cite{bonchi2020}. This makes apparent the combinatorial nature of hypergraph
categories, making them particularly suited to implementation \cite{wilson2021a}.
We will not be interested here in the rewrite theory for these categories, but
rather in their power in representing the grammatical and coreferential structure of
language. They will also provide us with an intermediate step between syntax
and the semantics of Sections \ref{sec-rel-model} and \ref{sec-tensor-network}.

\begin{definition}[Hypergraph category]\cite{fong2018a}
    A hypergraph category is a symmetric monoidal category such that each
    object $a$ is equipped with a commutative special Frobenius algebra
    $\tt{Frob}_a = \{ \Delta_a, \epsilon_a, \nabla_a, \eta_a\}$
    satisfying the following axioms:
    \begin{equation}\label{axioms-hypergraph}
        \tikzfig{figures/frobenius}
    \end{equation}
    Where the unlabeled wire denotes object $a$.
\end{definition}
\begin{proposition}
    Hypergraph categories are self-dual compact-closed with cups and caps given by:
    \begin{equation}\label{def-frobenius-cups}
        \tikzfig{figures/frobenius-cups}
    \end{equation}
\end{proposition}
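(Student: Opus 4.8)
The plan is to show that the maps defined in (\ref{def-frobenius-cups}) --- the cup $\epsilon_a \circ \nabla_a : a \otimes a \to 1$ and the cap $\Delta_a \circ \eta_a : 1 \to a \otimes a$ --- exhibit every object $a$ as its own dual, so that the category is compact-closed with $a^r = a^l = a$. Since a hypergraph category is symmetric by definition, the only thing left to verify is that these cups and caps satisfy the snake equations (\ref{axioms-rigid}); the symmetry then automatically upgrades the resulting duality to a two-sided (self-dual) one, identifying the left and right adjoint of $a$ with $a$ itself.

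First I would write out one of the two snake composites explicitly. Taking the input wire on the right, bending it with the cap and closing it with the cup gives the morphism
\begin{equation*}
    (\mathrm{id}_a \otimes \epsilon_a) \circ (\mathrm{id}_a \otimes \nabla_a) \circ (\Delta_a \otimes \mathrm{id}_a) \circ (\eta_a \otimes \mathrm{id}_a) : a \to a.
\end{equation*}
The key step is to collapse the middle two factors using the Frobenius law from (\ref{axioms-hypergraph}), which states $(\mathrm{id}_a \otimes \nabla_a) \circ (\Delta_a \otimes \mathrm{id}_a) = \Delta_a \circ \nabla_a$. Substituting this in leaves $(\mathrm{id}_a \otimes \epsilon_a) \circ \Delta_a \circ \nabla_a \circ (\eta_a \otimes \mathrm{id}_a)$. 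Now the unit law $\nabla_a \circ (\eta_a \otimes \mathrm{id}_a) = \mathrm{id}_a$ and the counit law $(\mathrm{id}_a \otimes \epsilon_a) \circ \Delta_a = \mathrm{id}_a$ --- both part of the Frobenius algebra data in $\tt{Frob}_a$ --- reduce the whole composite to $\mathrm{id}_a$, as required.

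The second snake equation is proved by the mirror-image argument, bending the wire on the other side; here I would invoke the opposite Frobenius law $(\nabla_a \otimes \mathrm{id}_a) \circ (\mathrm{id}_a \otimes \Delta_a) = \Delta_a \circ \nabla_a$ together with the left unit and counit laws, or alternatively deduce it from the first using commutativity and cocommutativity of $\tt{Frob}_a$ via the symmetry (\ref{axioms-symmetry}). It is worth noting that \emph{specialness} $\nabla_a \circ \Delta_a = \mathrm{id}_a$ plays no role in these two computations: the compact-closed structure follows from the Frobenius and (co)unit laws alone, with specialness only becoming relevant when one wants the induced self-duality to interact well with the spider normal form.

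The computation is essentially a short diagrammatic rewrite, so I do not expect any deep obstacle; the one point requiring care is applying the Frobenius law with the correct orientation and tensor placement, since there are two symmetric forms of it and picking the wrong one leaves a composite that does not obviously simplify. Once the Frobenius law is used to turn the ``zig-zag'' into a $\Delta_a \circ \nabla_a$ sandwiched between a unit and a counit, the remaining reductions are forced.
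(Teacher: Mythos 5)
Your proof is correct and is exactly the standard argument the paper implicitly relies on: the paper states this proposition without proof (remarking elsewhere only that the snake equations are ``easy to show'' from the Frobenius axioms), and your rewrite --- Frobenius law to collapse the zig-zag into $\Delta_a \circ \nabla_a$, then the unit and counit laws --- is the intended computation, with the mirror-image case handled properly. Your observation that specialness $\nabla_a \circ \Delta_a = \mathrm{id}_a$ plays no role in the snake equations is accurate and a nice point of precision beyond what the paper records.
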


The axioms of commutative special Frobenius algebras (\ref{axioms-hypergraph}),
may be expressed in a more intuitive way as fusion rules of \emph{spiders}.
Spiders are defined as follows:
\begin{equation}
    \scalebox{0.8}{\tikzfig{figures/spiders}}
\end{equation}
They are the normal form of commutative special Frobenius algebras. More precisely,
using the axioms (\ref{axioms-hypergraph}), it can be shown that any connected
diagram built using the Frobenius generators $\tt{Frob}_a$
can be rewritten into the right-hand side above. This was shown in the context
of categorical quantum mechanics \cite{heunen2019} where Frobenius algebras,
corresponding to ``observables'', play a foundational role. The following
result is also proved in \cite{heunen2019}, and used in the context of the ZX calculus \cite{vandewetering2020}, it provides a more concise and intuitive way of reasoning
with commutative special Frobenius algebras.

\begin{proposition}[Spider fusion]\cite{heunen2019}
    The axioms of special commutative Frobenius algebras (\ref{axioms-hypergraph}),
    are equivalent to the spider fusion rules:
    \begin{equation}\label{spider-fusion}
        \scalebox{0.6}{\tikzfig{figures/spider-fusion}}
    \end{equation}
\end{proposition}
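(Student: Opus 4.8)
The plan is to prove the two implications separately, with the forward direction resting on a normal-form theorem for connected Frobenius diagrams and the backward direction being a routine leg-count. First I would define, for each $n, m \geq 0$, the canonical spider $S_{n,m} : a^{\otimes n} \to a^{\otimes m}$ as the morphism obtained by multiplying the $n$ inputs together into a single wire using $\nabla_a$ (and $\eta_a$ when $n = 0$) and then comultiplying that wire into $m$ outputs using $\Delta_a$ (and $\epsilon_a$ when $m = 0$). With this definition in hand, the forward direction (axioms $\Rightarrow$ spider fusion) reduces to the following normal-form claim: every morphism built by composing and tensoring the generators $\tt{Frob}_a$ whose underlying diagram is connected and has $n$ inputs and $m$ outputs is equal, using only the axioms (\ref{axioms-hypergraph}), to $S_{n,m}$.

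I would prove this claim by induction on the number of generators. The base cases are the generators themselves, each of which is manifestly a spider: $\nabla_a = S_{2,1}$, $\eta_a = S_{0,1}$, $\Delta_a = S_{1,2}$, $\epsilon_a = S_{1,0}$ and $\tt{id}_a = S_{1,1}$. For the inductive step I would peel one generator off the boundary of the connected diagram and absorb it into the remaining piece, which is a spider by hypothesis: associativity and commutativity let me permute and regroup the legs feeding a multiplication, their co-dual versions do the same for comultiplications, the Frobenius law lets me slide a splitting past a merging so that all merges precede all splits, and the special law $\nabla_a \circ \Delta_a = \tt{id}_a$ removes the closed loops that arise when two legs of the same spider are joined. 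These moves rewrite any connected diagram into the single-merge-then-split normal form. Spider fusion is then immediate: given two spiders $S_{n,m}$ and $S_{p,q}$ joined along $j \geq 1$ wires, the composite is a connected diagram with $n + p - j$ inputs and $m + q - j$ outputs, so by the normal-form theorem it equals the single spider $S_{n+p-j,\,m+q-j}$, which is exactly the rule depicted in (\ref{spider-fusion}).

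For the converse (spider fusion $\Rightarrow$ axioms) I would read each Frobenius axiom off the fusion rule applied to the generators viewed as spiders. Associativity and coassociativity are the two ways of fusing a pair of copies of $S_{2,1}$ (resp.\ $S_{1,2}$) into $S_{3,1}$ (resp.\ $S_{1,3}$); the unit and counit laws are the fusion of $S_{2,1}$ with $\eta_a = S_{0,1}$ (resp.\ of $S_{1,2}$ with $\epsilon_a = S_{1,0}$) yielding $\tt{id}_a = S_{1,1}$; the Frobenius law is the statement that the three ways of connecting $S_{2,1}$ to $S_{1,2}$ along a single wire all give $S_{2,2}$; and speciality $\nabla_a \circ \Delta_a = \tt{id}_a$ is the fusion of $S_{1,2}$ with $S_{2,1}$ along both wires, producing $S_{1,1}$. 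Commutativity and cocommutativity amount to invariance of $S_{n,m}$ under permutation of its input and output legs, which I would build into the statement of the fusion rule, or derive from it together with the symmetry $\tt{swap}$ of the ambient category.

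The main obstacle is the normal-form theorem in the forward direction, specifically showing that the order in which generators are eliminated does not affect the result, i.e.\ that the rewriting toward normal form is confluent. The delicate points are handling diagrams that become disconnected only transiently during the induction, and ensuring that the loop-removal step licensed by speciality interacts correctly with the Frobenius sliding moves; once connectivity is tracked carefully this is precisely the combinatorial argument of Heunen and Vicary \cite{heunen2019}, to which I would appeal for the full bookkeeping.
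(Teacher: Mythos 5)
Your proposal is correct and takes essentially the same route as the paper, which gives no proof of its own but defers to the spider normal-form theorem of Heunen and Vicary \cite{heunen2019} --- precisely the argument you reconstruct, with connected Frobenius diagrams normalising to a single spider $S_{n,m}$ in the forward direction and each axiom (associativity, unit, Frobenius, speciality) recovered as a small instance of fusion in the converse. Your deferral of the confluence and loop-removal bookkeeping to \cite{heunen2019} matches the paper's own treatment, which states in the surrounding text that any connected diagram built from the generators $\tt{Frob}_a$ rewrites to the spider normal form and cites that source for the proof.
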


Given a monoidal signature $\Sigma$, the free hypergraph category is defined by:
$$\bf{Hyp}(\Sigma) = \bf{MC}(\Sigma + \set{\tt{Frob}_a}_{a \in \Sigma_0})/ \cong$$
where $\bf{MC}$ is the free monoidal category construction, defined in
\ref{section-diagrams} and $\cong$ is the equivalence relation generated
by the axioms of commutative special Frobenius algebras (\ref{axioms-hypergraph}),
or equivalently the spider fusion rules (\ref{spider-fusion}).
Without loss of generality, we may assume that the monoidal signature $\Sigma$
has trivial $\tt{dom}$ function. Formally we have that for any monoidal signature
$\signature{\Sigma}{B^\ast}$ there is a monoidal signature of the form
$\Sigma' \xto{\tt{cod}} B^\ast$ such that $\bf{Hyp}(\Sigma) \simeq \bf{Hyp}(\Sigma')$.
The signature $\Sigma'$ is defined by taking the \emph{name} of every generator
in $\Sigma$, i.e. turning all the inputs into outputs using caps as follows:
\begin{equation*}
    \tikzfig{figures/signature-bend-wires}
\end{equation*}
We define a hypergraph signature as a monoidal signature where the generators
have only output types.

\begin{definition}[Hypergraph signature]
    A hypergraph signature $\Sigma$ over $B$ is a set of hyperedge symbols $\Sigma$
    together with a map $\sigma : \Sigma \to B^\ast$.
\end{definition}

 Morphisms in $\bf{Hyp}(\Sigma)$ for a hypergraph signature $\Sigma$ have a
 normal form given as follows.

\begin{proposition}[Hypergraph normal form]\label{prop-hyp-normal-form}
    Let $\Sigma \to B^\ast$ be a hypergraph signature and $x, y \in B^\ast$.
    Any morphism $d : x \to y \in \bf{Hyp}(\Sigma)$ is equal to a diagram of
    the following shape:
    \begin{equation*}
        \scalebox{0.7}{\tikzfig{figures/hyp-normal-form}}
    \end{equation*}
    where $H(d) \in \bf{Hyp}(\varnothing)$ is a morphism built using only
    spiders, i.e. generated from $\cup_b\tt{Frob}_b$ for $b \in B$.
\end{proposition}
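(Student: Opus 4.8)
The plan is to prove this by structural induction on a string-diagram representative of $d$, progressively sliding every generator of $\Sigma$ to the top of the diagram. The key observation is that, since $\Sigma$ is a hypergraph signature, each generator $g$ is a \emph{state} $1 \to \sigma(g)$ with no inputs, and a state can always be floated upward past any morphism by the interchange law. I would target the explicit form
\begin{equation*}
    d = \big(\tt{id}_x \otimes g_1 \otimes \cdots \otimes g_n\big) \cdot H(d),
\end{equation*}
where $g_1, \dots, g_n$ are the generators of $\Sigma$ occurring in $d$ (with multiplicity), $W = \sigma(g_1) \otimes \cdots \otimes \sigma(g_n)$, and $H(d) : x \otimes W \to y$ is built only from Frobenius generators, identities and swaps, i.e. a morphism of $\bf{Hyp}(\varnothing)$. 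Reading top to bottom, this is exactly the advertised picture: the $\Sigma$-boxes sit on top as sources, and the spider network $H(d)$ wires their outputs together with the inputs $x$ down to the outputs $y$.

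Before the induction I would record that the pure-spider morphisms form a symmetric monoidal sub-structure: $\bf{Hyp}(\varnothing)$ contains all identities and swaps, is closed under $\cdot$ and $\otimes$, and contains every Frobenius generator $\tt{Frob}_b$. Hence any composite or tensor of spider morphisms is again a spider morphism, which is what makes the ``spider part'' stable under the inductive steps. I would then call a morphism in \emph{box-top form} when it can be written as above. The base cases are immediate: an identity, a swap, and each Frobenius generator are in box-top form with no $\Sigma$-boxes ($n = 0$, with $H$ the morphism itself); a single generator $g$ is in box-top form with $n = 1$ and $H = \tt{id}_{\sigma(g)}$.

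For the inductive step I would treat tensor and composition. For a tensor $f \otimes f'$ of two box-top morphisms, functoriality of $\otimes$ places the two spider parts side by side and the two lists of box-outputs side by side; a permutation of wires, itself a spider morphism since it is built from swaps, then moves all box-outputs to sit after the identity wires and is absorbed into the spider part. The crucial case is composition $f \cdot f'$ with $f : a \to c$ and $f' : c \to b$ both in box-top form. Here the generators of $f'$ appear \emph{below} the spider part of $f$, so I would use the interchange law to slide them upward past that spider part, which is legitimate precisely because they are states with no inputs, until they join the generators of $f$ in a single top layer; the two spider parts then compose into one morphism of $\bf{Hyp}(\varnothing)$. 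This sliding of the second factor's states through the first factor's spider network is the step that carries the real content of the argument.

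Putting the cases together, every representative of $d$ reduces to box-top form, which is the claimed normal form with $H(d) \in \bf{Hyp}(\varnothing)$. Since the manipulations used, namely interchange, functoriality of $\otimes$, and naturality of the swap, are all valid equalities in the symmetric monoidal category $\bf{Hyp}(\Sigma)$, the resulting equation holds modulo the Frobenius congruence. If a genuinely canonical representative is wanted, I would finally apply the spider-fusion rules of the preceding proposition to collapse each connected component of $H(d)$ to a single spider. The main obstacle is the bookkeeping in the composition case: making precise that states can be commuted through an arbitrary spider diagram without changing the morphism, for which the interchange law together with the absence of inputs on the generators is exactly what is needed.
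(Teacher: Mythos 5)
Your proof is correct, but there is nothing in the paper to compare it against: the thesis states Proposition \ref{prop-hyp-normal-form} without proof, treating it as a known normal-form result (it is folklore in the hypergraph-category literature, cf.\ Bonchi--Seeber--Soboci\'nski and Fong--Spivak), so your argument supplies a proof rather than diverging from one. The argument you give is the standard one and it is sound: since $\bf{Hyp}(\Sigma)$ is presented as a quotient of a free (symmetric) monoidal category, structural induction on term representatives is legitimate, and every manipulation you invoke --- interchange, bifunctoriality of $\otimes$, naturality of $\tt{swap}$ --- is an equality already at the symmetric-monoidal level, hence descends through the Frobenius congruence. The one step carrying real content is exactly the one you flag: in the composition case, for $H : a \otimes W \to c$ and a state $\vec{g}' : 1 \to W'$, interchange gives $H \cdot (\tt{id}_c \otimes \vec{g}') = (\tt{id}_{a \otimes W} \otimes \vec{g}') \cdot (H \otimes \tt{id}_{W'})$, which is precisely what floats the second factor's boxes to the top; this is where the hypergraph-signature hypothesis (generators have no inputs) is used, and your proof would visibly fail without it. Two small points worth making explicit if you write this up: first, in the tensor case the permutation you absorb lives \emph{between} the box layer and $H \otimes H'$, obtained by sliding states past identity wires via naturality of $\tt{swap}$, e.g.\ $\vec{g} \otimes \tt{id}_{a'} = (\tt{id}_{a'} \otimes \vec{g}) \cdot \tt{swap}_{a', W}$; second, your closing observation is worth keeping --- the normal form itself needs only the symmetric monoidal structure plus the state-shaped generators, and the Frobenius axioms (spider fusion) enter only if one additionally wants to collapse each connected component of $H(d)$ to a single spider, which is how the proposition feeds into the incidence-graph reading the paper gives after it.
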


This normal form justifies the name ``hypergraph'' for these categories. Indeed
we may think of the spiders of a diagram $d$ as \emph{vertices}, and boxes with
$n$ ports as \emph{hyperedges} with $n$ vertices. Then the morphism
$H(d)$ defined above is the \emph{incidence graph} of $d$, indicating for each
vertex (spider) the hyperedges (boxes) that contain it.
Note however, that morphisms in $\bf{Hyp}(\Sigma)$ carry more data than a simple
hypergraph. First, they carry labels for every hyperedge and every vertex. Second
they are ``open'', i.e. they carry input-output information allowing to compose
them. If we consider morphisms $d: 1 \to 1$ in $\bf{Hyp}(\Sigma)$, these are
in one-to-one correspondence with hypergraphs labeled by $\Sigma$, as
exemplified in the following picture:

\begin{equation}
    \tikzfig{figures/hypergraph-to-incidence-graph}
\end{equation}

\subsection{Pregroups with coreference}\label{section-coreference}

Starting from a pregroup grammar $G = (V, B, \Delta, I, s)$, we can model
coreference by adding memory or \emph{reference types} for each lexical
entry, and rules that allow to swap, merge or discard these reference types.
Formally, this is done by fixing a set of reference types $R$ and allowing the
lexicon to assign reference types alongside pregroup types:
$$\Delta \sub V \times P(B) \times R^\ast$$
We can then represent the derivations for such a grammar with coreference in one
category $\bf{Coref}(\Delta, I)$ defined by:
$$\bf{Coref}(\Delta, I) = \bf{RC}(\Delta + I + \set{\tt{Frob}_r}_{r \in R}, \set{\tt{swap}_{r, x}}_{r \in R\, , \, x \in P(B) + R})$$
where $\tt{Frob}_r$ contains the generators of Frobenius algebras for every
reference type $r \in R$ (allowing to initialise, merge, copy and delete them),
$\tt{swap}_{r, x}$ allows to swap any reference type $r \in R$ to the right of
any other type $x \in P(B) + R$, i.e. reference types commute with any other object.
Note that we are not quotienting $\bf{Coref}$ by any axioms since we use it
only to represent the derivations.

\begin{definition}[Pregroup with coreference]
    A pregroup grammar with coreference is a tuple $G = (V, B, R, I, \Delta, s)$
    where $V$ is a vocabulary, $B \ni s$ is a set of basic types, $R$ is a set of
    reference types, $I \sub B \times B$ is a set of induced steps,
    $\Delta \sub V \times P(B) \times R^\ast$ is a lexicon,
    assigning to every word $w \in V$ a set of types $\Delta(w)$ consisting of a
    pregroup type in $P(B)$ and a list of reference type in $R^\ast$.
    An utterance $u = w_0 \dots w_n \in V^\ast$ is
    $k$-grammatical in $G$ if there are types
    $(t_i , r_i) \in \Delta(w_i) \sub P(B) \times R^\ast$ such that
    there is morphism
    $t_0 \otimes r_0 \otimes t_1 \otimes r_1 \dots t_n \otimes r_n \to s^k$ in
    $\bf{Coref}(\Delta, I) =: \bf{Coref}(G)$ for some $k \in \bb{N}$.
\end{definition}

\begin{example}
    We denote reference types with red wires. The following is a valid reduction
    in a pregroup grammar with coreference, obtained from Example \ref{ex:pair-of-lovers} by adding reference types for the words ``A'', ``whose'' and ``their''.
    \begin{equation*}
        \scalebox{0.7}{\tikzfig{figures/a-pair-of-lovers}}
    \end{equation*}
\end{example}

We can now consider the problem of parsing a pregroup grammar with coreference.

\begin{definition}
    \begin{problem}
      \problemtitle{$\tt{CorefParsing}$}
      \probleminput{$G$, $u \in V^\ast$, $k \in \bb{N}$}
      \problemoutput{$f \in \bf{Coref}(G)(u, s^k)$}
    \end{problem}
\end{definition}

Note that, seen as a decision problem, $\tt{CorefParsing}$ is equivalent to
the parsing problem for pregroups.

\begin{proposition}
    The problem $\exists\tt{CorefParsing}$ is equivalent to the parsing problem
    for pregroup grammars.
\end{proposition}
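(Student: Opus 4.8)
The plan is to show that coreference parsing collapses onto ordinary pregroup parsing once the reference types are forgotten, because the extra structure indexed by $R$ — the Frobenius generators $\tt{Frob}_r$ and the swaps $\tt{swap}_{r,x}$ — decouples entirely from the pregroup reduction on base types. Concretely, swaps only move reference types past other wires (so they never disturb the planar order of the base types), and $\tt{Frob}_r$ acts only on $r$; moreover the codomain $s^k$ contains no reference type at all, so in any parse every reference type must eventually be discarded. I would establish the equivalence by exhibiting reductions in both directions, the substantive content being that the reference layer imposes no constraint beyond a base-type reduction.

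The easy direction is that pregroup parsing reduces to $\exists\tt{CorefParsing}$: a pregroup grammar $G = (V, B, \Delta, I, s)$ is the special case with $R = \varnothing$, for which $\bf{Coref}(G) = \bf{RC}(\Delta + I)$, and then $\exists\tt{CorefParsing}(G, u, 1)$ is literally the pregroup parsing problem for $u$. For the converse I would first define a rigid functor $F : \bf{Coref}(G) \to \bf{RC}(\Delta' + I)$, where $\Delta' \sub V \times P(B)$ is the projection of the lexicon that forgets reference types, $F$ is the identity on base types, and $F$ sends every $r \in R$ to the monoidal unit $1$. Since the Frobenius generators and swaps in the signature involve only reference types, their images are forced to be the unique structural maps on $1$, so $F$ is well defined; applying it to a coreference parse $f : t_0 \otimes r_0 \otimes \dots \otimes t_n \otimes r_n \to s^k$ yields a pregroup reduction $t_0 \otimes \dots \otimes t_n \to s^k$, giving one half of the claim that $u$ is $k$-grammatical in $G$ iff $t_0 \dots t_n \to s^k$ in $\bf{RC}(\Delta' + I)$ for some choice $(t_i, r_i) \in \Delta(w_i)$.

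For the other half (the lifting), given a pregroup reduction $g : t_0 \dots t_n \to s^k$ on base types I would build a coreference parse by tensoring $g$ with a diagram on the reference types alone: using the swaps to slide every $r_i$ clear of the base-type reduction and then the Frobenius counits $\epsilon_{r_i}$ to delete them. Because swaps move only reference types they leave the base-type reduction intact, and because $\tt{Frob}_r$ acts only on $r$ the deletion never interacts with $g$, so the two pieces tensor into a valid morphism of $\bf{Coref}(G)$. Combining the halves, $\exists\tt{CorefParsing}(G, u, k)$ holds iff $u$ admits a pregroup reduction to $s^k$ in $G' = (V, B, \Delta', I, s)$; the construction of $G'$ is computable in logarithmic space, and deciding reducibility to $s^k$ is itself an instance of pregroup parsing, so the two problems are inter-reducible.

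The main obstacle will be making the decoupling rigorous in the lifting direction: one must argue that the reference-type traffic can always be routed, via the swaps, past the base-type cups and caps without crossing any base wire, and then removed by the counits — essentially a normal-form argument separating the Frobenius/swap layer from the rigid cup/cap layer, in the spirit of the hypergraph normal form of Proposition \ref{prop-hyp-normal-form}. Some care is also needed with the quantifier on $k$, which ranges existentially in the definition of grammaticality but appears as an input to $\tt{CorefParsing}$; the equivalence is therefore cleanest when phrased as reducibility of $u$ to the specific type $s^k$.
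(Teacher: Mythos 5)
Your proof is correct and takes essentially the same route as the paper: one direction projects out the reference component of the lexicon (the paper's $\tilde{\Delta}$, your $\Delta'$) to turn a coreference parse into a pregroup reduction, and the other lifts a base parse by discarding reference types with the Frobenius counits. Your functor sending every $r \in R$ to the monoidal unit is in fact a slightly cleaner formalisation of the paper's informal ``split the diagram'' step, and it makes the normal-form worry in your final paragraph unnecessary --- the counits $\epsilon_{r_i}$ can be applied locally on the interleaved type $t_0 \otimes r_0 \otimes \dots \otimes t_n \otimes r_n$, so no swap routing past the cups and caps is ever needed.
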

\begin{proof}
    First note that for any pregroup grammar with coreference
    $G = (V, B, R, I, \Delta, s)$ there is aa corresponding pregroup grammar
    $\tilde{G} = (V, B, I, \tilde{\Delta}, s^k)$ where $\tilde{\Delta}$ is obtained
    from $\Delta$ by projecting out the $R^\ast$ component. Suppose there is a
    reduction $d: u \to s^k$ in $\bf{Coref}(G)$, since there are no boxes making
    pregroup and reference types interact, we can split the diagram $d$ into
    a pregroup reduction in $\bf{RC}(G)$ and a coreference resolution
    $R^n \to 1$ where $n$ is th number of reference types used. Therefore $u$
    is also grammatical in $\tilde{G}$. Now, $\tilde{G}$ reduces functorially
    to $G$ since we can map the lexical entries in $\tilde{\Delta}$ to the corresponding entry in $\Delta$ followed by discarding the reference types with
    counit spiders. Therefore any parsing for $\tilde{G}$ induces a parsing
    for $\bf{Coref}(G)$, finishing the proof.
\end{proof}

Supposing we can parse pregroups efficiently, $\tt{CorefParsing}$ becomes only
interesting as a function problem: which coreference resolution should we choose
for a given input utterance $u$?
There is no definite mathematical answer to this question. Depending on
the context, the same utterance may be resolved in different ways. Prominent
current approaches are neural-based such as the mention-ranking coreference models
of Clark and Manning \cite{clark2016a}. SpaCy offers a \py{neuralcoref} package
implementing their model and we show how to interface it with the \py{rigid.Diagram}
class at the end of this section. Being able to represent the coreference resolution
in the diagram for a piece of text is particularly useful in semantics, for example
it allows to build arbitrary conjunctive queries as one-dimensional strings of words,
see \ref{sec-rel-model}.

The image of the parsing function for pregroups with coreference may indeed be
arbitrarily omplicated, in the sense that any hypergraph $d \in \bf{Hyp}(\Sigma)$
can be obtained as a $k$-grammatical reduction, where
the hyperedges in $\Sigma$ are seen as \emph{verbs} and the vertices in $B$
as \emph{nouns}.

\begin{proposition}\label{prop-svo-argument}
    For any finite hypergraph signature $\Sigma \xto{\sigma} B^\ast$, there is a
    pregroup grammar with coreference $G = (V, B, R, \Delta(\sigma), s)$
    and a full functor $J: \bf{Coref}(\Delta(\sigma)) \to \bf{Hyp}(\Sigma)$
    with $J(w) = J(s) = 1$ for $w \in V$.
\end{proposition}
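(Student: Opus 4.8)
The plan is to build the grammar so that hyperedges become verbs and the Frobenius structure on reference types reconstructs the incidence (spider) part of any hypergraph. Concretely, I would take the basic types to be $B$ and the reference types to be a copy $R = B$ of them, with no induced steps. For each hyperedge symbol $e \in \Sigma$ with $\sigma(e) = b_1 \dots b_n \in B^\ast$, I introduce a single word $w_e \in V$ whose lexical entry assigns the pregroup type $s$ together with the list of reference types $b_1 \dots b_n$, i.e. $(w_e, s, \sigma(e)) \in \Delta(\sigma)$. This is the intended reading in which $w_e$ heads one sentence (the factor $s$) and exposes its incident vertices as reference wires; the nouns are the vertices $B$ and the verbs are the hyperedges.

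Next I would define $J \colon \bf{Coref}(\Delta(\sigma)) \to \bf{Hyp}(\Sigma)$ on objects by $J(w) = J(s) = 1$ for a word $w$ and the sentence symbol, $J(b) = b$ for a basic type, and $J(r) = r$ for a reference type $r \in R = B$; since $\bf{Hyp}(\Sigma)$ is self-dual compact closed, adjoints collapse, $J(b^l) = J(b^r) = b$. On generators, the lexical box $w_e$ is sent to the hyperedge $e \colon 1 \to \sigma(e)$ (the factor $s$ disappearing because $J(s) = 1$); each $\tt{Frob}_r$ is sent to the Frobenius algebra $\tt{Frob}_{J(r)}$ of $\bf{Hyp}(\Sigma)$; each $\tt{swap}_{r,x}$ to the symmetry; and the rigid cups and caps of $\bf{Coref}$ to the compact-closed cups and caps supplied by the Frobenius structure (\ref{def-frobenius-cups}). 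Because $\bf{Coref}(\Delta(\sigma))$ is by construction a free rigid category on these generators, $J$ is well defined as soon as its images satisfy the snake equations, which they do since $\bf{Hyp}(\Sigma)$ is rigid; no further relations need checking, as $\bf{Coref}$ imposes none beyond the snakes.

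For fullness I would invoke the hypergraph normal form of Proposition \ref{prop-hyp-normal-form}: every morphism $g \colon x \to y$ in $\bf{Hyp}(\Sigma)$ factors as a tensor of hyperedges $\bigotimes_i e_i$, each $1 \to \sigma(e_i)$, followed by a morphism $H(g)$ built solely from spiders on the vertex objects $B$. I lift the top layer by choosing the utterance $w_{e_1}\dots w_{e_k}$ and applying the corresponding lexical entries, which produce exactly the reference wires $\sigma(e_1)\dots\sigma(e_k)$ together with a factor $s^k$; I then lift $H(g)$ using the $\tt{Frob}_r$ generators and the swaps $\tt{swap}_{r,x}$, matching the spider network wire-for-wire, while any boundary cups or caps lift through the rigid structure. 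Applying $J$ returns $\bigotimes_i e_i$ followed by $H(g)$, that is $g$, since every generator was chosen as a $J$-preimage and the spurious $s^k$ is annihilated by $J(s) = 1$. As this works for every $g$, the maps $\bf{Coref}(X,Y) \to \bf{Hyp}(JX, JY)$ are onto and $J$ is full; specialising to $g \colon 1 \to 1$ recovers the stated fact that an arbitrary hypergraph arises as a $k$-grammatical reduction to $s^k$.

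The main obstacle I anticipate is the bookkeeping around the $s$-factors and the object-matching demanded by fullness. Each verb necessarily contributes an $s$, so the lift naturally lands in a hom-set whose codomain carries $k$ copies of $s$; one must check that $J(s) = 1$ genuinely collapses these so that the image lies in $\bf{Hyp}(JX, JY)$ for the intended $x, y$, and that the reference-type swaps afford enough freedom to route every vertex wire to its spider without disturbing the planar pregroup layer. The normal-form decomposition of Proposition \ref{prop-hyp-normal-form} is what makes this routing tractable, since it cleanly separates the verb layer from the spider layer and so mirrors the verb-then-coreference shape of morphisms in $\bf{Coref}$.
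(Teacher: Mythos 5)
Your core argument is the paper's: everything hinges on the hypergraph normal form of Proposition \ref{prop-hyp-normal-form}, with hyperedges lifted to verbs and the spider part $H(d)$ lifted to the Frobenius generators on the reference types, and well-definedness of $J$ is discharged the same way, via freeness of $\bf{Coref}$ as a rigid category and self-duality of $\bf{Hyp}(\Sigma)$. Your lexicon, though, is a simplification of the paper's: there, each verb $w \in \Sigma$ gets pregroup type $s\,\sigma(w)^l$ with \emph{empty} reference list, and each $w \in B$ is a separate noun word with entry $(w, w, w)$, sent by $J$ to a cup, so that utterances are genuine verb--noun strings and the reference wires enter through the nouns. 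You collapse this by typing the verb as $(s, \sigma(e))$ outright. Nothing in the statement forbids this and it shortens the construction; it only forfeits the subject--verb--object reading that the paper exploits when reusing the proposition in Propositions \ref{prop-tensor-semantics} and \ref{prop-qa}. (Minor point: take the basic types to be $B + \set{s}$, since the definition of a pregroup grammar with coreference requires $s$ to be a basic type.)

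The one step that genuinely fails is your claim that $\bf{Coref}(X, Y) \to \bf{Hyp}(JX, JY)$ is onto for \emph{all} $X, Y$. Consider a pregroup basic type $b$. In $\bf{Coref}$, no generator consumes or produces a bare pregroup wire: the Frobenius generators exist only on reference types, cups and caps merely bend wires, and a lexical box cannot occur in a diagram with boundary $b \to b$ because its domain is a word object, which nothing creates or consumes. Hence every morphism in $\bf{Coref}(b, b)$ connects its input port to its output port, and since $J$ sends generators to generators this connectivity survives in the image. But $\bf{Hyp}(\Sigma)(b, b)$ contains the disconnected morphism $b \to 1 \to b$, the counit of $\tt{Frob}_b$ followed by its unit, which therefore has no preimage. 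Your wire-for-wire lifting of $H(g)$, and the remark that ``boundary cups or caps lift through the rigid structure,'' tacitly assume all open wires are reference-typed, which is false for general $X, Y$. What the paper proves --- and all that its downstream uses require --- is surjectivity on closed diagrams $d : 1 \to 1$, each lifted to a $k$-grammatical reduction $u \to s^k$; your argument for that case is sound, so the repair is to restrict the fullness claim to scalars (or to boundaries built from reference types) rather than assert it for arbitrary hom-sets.
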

\begin{proof}
    We define the vocabulary $V = \Sigma + B$ where the elements of $\Sigma$ are
    seen as verbs and the elements of $B$ as nouns. We also set the basic types
    to be $B + \set{s}$ and the reference types to be $R = B$.
    The lexicon $\Delta(\sigma) \sub V \times P(B + \set{s}) \times R^\ast$
    is defined by:
    $$\Delta(\sigma) = \set{(w, s\,\sigma(w)^l, \epsilon) \s\vert\s w \in \Sigma \sub V }
    +  \set{(w, w, w) \vert w \in B \sub V }$$
    where $\epsilon$ denotes the empty list.
    The functor $J : \bf{Coref}(\Delta(\sigma)) \to \bf{Hyp}(\Sigma)$ is given
    on objects by $J(w) = 1$ for $w \in V$, $J(x) = x$ for $x \in B + R$ and
    $J(s) = 1$ and on the lexical entries by $J(w) = w$ if $w \in \Sigma$
    and $J(w) = \tt{cup}_w$ if $w \in B$.

    In order to prove the proposition, we show that for
    any $d: 1 \to 1 \in \bf{Hyp}(\Sigma)$ there is an
    utterance $u \in V^\ast$ with a $k$-grammatical reduction $g: u \to s^k$ in
    $\bf{Coref}(\Delta(\sigma))$ such that $J(g) = d$, where $k$ is
    the number of boxes in $d$.
    Fix any diagram $d : 1 \to 1 \in \bf{Hyp}(\Sigma)$. By proposition
    \ref{prop-hyp-normal-form}, it can be put in normal form:
    \begin{equation*}
        \tikzfig{figures/svo-argument-1}
    \end{equation*}
    Where $f_1 \dots f_k$ are the boxes in $d$ and $H(d)$ is a morphism built
    only from the Frobenius generators $\tt{Frob}_b$ for $b \in B$.
    Let $\sigma(f_i) = b_{i0} \dots b_{in_i}$, then for every $i \in \set{1, \dots k}$,
    we may build a sentence as follows (leaving open the reference wires):
    \begin{equation*}
        \tikzfig{figures/svo-argument-2}
    \end{equation*}
    Tensoring these sentences together and connecting them with coreference
    $H(d)$ we get a morphism $g: u \to s^k$ in $\bf{Coref}(\Delta(\sigma))$
    where $u = f_1 \sigma(f_1) \dots f_k \sigma(f_k)$, and it is easy to check
    that $J(g) = d$.
\end{proof}

\subsection{hypergraph.Diagram}

The \py{hypergraph} module of DisCoPy, still under development, is an implementation
of diagrams in free hypergraph categories based on \cite{bonchi2020, bonchi2021}.
The main class \py{hypergraph.Diagram} is well documented and it comes with
a method for composition implemented using pushouts.
We give a high-level overview of the datastrutures of this module, with a
proof-of-concept implementation of dependency grammars with coreference.
Before introducing diagrams, recall that the types of free hypergraph categories
are \emph{self-dual} rigid types.

\begin{minted}{python}
class Ty(rigid.Ty):
    @property
    def l(self):
        return Ty(*self.objects[::-1])

    r = l
\end{minted}

We store a \py{hypergraph.Diagram} via its incidence graph. This is given by
a list of \py{boxes} and a list of integers \py{wires} of the same length as ports.

\begin{minted}{python}
port_types = list(map(Ty, self.dom)) + sum(
    [list(map(Ty, box.dom @ box.cod)) for box in boxes], [])\
    + list(map(Ty, self.cod))
\end{minted}

Note that \py{port_types} are the concatenation of the domain of the diagram, the pairs of
domains and codomains of each box, and the codomain od the diagram.
We see that \py{wires} defines a mapping from ports to spiders, which we store as a
list of \py{spider_types}.

\begin{minted}{python}
spider_types = {}
for spider, typ in zip(wires, port_types):
    if spider in spider_types:
        if spider_types[spider] != typ:
            raise AxiomError
    else:
        spider_types[spider] = typ
spider_types = [spider_types[i] for i in sorted(spider_types)]
\end{minted}

Thus a \py{hypergraph.Diagram} is initialised by a domain \py{dom}, a codomain
\py{cod}, a list of boxes \py{boxes} and a list of wires \py{wires} as characterised
above. The \py{__init__} method automatically computes the \py{spider_types},
and in doing so checks that the diagram is well-typed.

\begin{python}\label{listing:hypergraph.Diagram}
{\normalfont Diagrams in free hypergraph categories}

\begin{minted}{python}
class Diagram(cat.Arrow):
    def __init__(self, dom, cod, boxes, wires, spider_types=None):
        ...

    def then(self, other):
        ...

    def tensor(self, other=None, *rest):
        ...

    __matmul__ = tensor

    @property
    def is_monogamous(self):
        ...

    @property
    def is_bijective(self):
        ...

    @property
    def is_progressive(self):
        ...

    def downgrade(self):
        ...

    @staticmethod
    def upgrade(old):
        return rigid.Functor(
            ob=lambda typ: Ty(typ[0]),
            ar=lambda box: Box(box.name, box.dom, box.cod),
            ob_factory=Ty, ar_factory=Diagram)(old)

    def draw(self, seed=None, k=.25, path=None):
        ...
\end{minted}

The current \py{draw} method is based on a random spring layout algorithm of the hypergraph.
One may check whether a diagram is symmetric, compact-closed or traced, using methods
\py{is_progressive}, \py{is_bijective} and \py{is_monogamous} respectively.
The \py{downgrade} method turns any \py{hypergraph.Diagram} into a \py{rigid.Diagram}.
This is by no means an optimal algorithm.
There are indeed many ways to tackle the problem of extraction of rigid diagrams
from hypergraph diagrams.
\end{python}

The classes \py{hypergraph.Box} and \py{hypergraph.Id} are defined as usual,
by suclassing the corresponding \py{rigid} classes and \py{Diagram}.
Two special types of morphisms, \py{Spider} and \py{Swap}, can be defined
directly as subclasses of \py{Diagram}.

\begin{python}\label{listing:hypergraph.Spider}
{\normalfont Swaps and spiders in free hypergraph categories}

\begin{minted}{python}
class Swap(Diagram):
    """ Swap diagram. """
    def __init__(self, left, right):
        dom, cod = left @ right, right @ left
        boxes, wires = [], list(range(len(dom)))\
            + list(range(len(left), len(dom))) + list(range(len(left)))
        super().__init__(dom, cod, boxes, wires)


class Spider(Diagram):
    """ Spider diagram. """
    def __init__(self, n_legs_in, n_legs_out, typ):
        dom, cod = typ ** n_legs_in, typ ** n_legs_out
        boxes, spider_types = [], list(map(Ty, typ))
        wires = (n_legs_in + n_legs_out) * list(range(len(typ)))
        super().__init__(dom, cod, boxes, wires, spider_types)
\end{minted}
\end{python}

We now show how to build hypergraph diagrams from SpaCy's dependency parser and
the coreference information provided by the package \py{neuralcoref}.

\begin{python}\label{listing:hypergraph.coref}
{\normalfont Coreference and hypergraph diagrams}

\begin{minted}{python}
import spacy
import neuralcoref

nlp = spacy.load('en')
neuralcoref.add_to_pipe(nlp)
doc1 = nlp("A pair of starcross lovers take their life")
doc2 = nlp("whose misadventured piteous overthrows doth with \
            their death bury their parent's strife.")
\end{minted}

We use the interface with SpaCy from the \py{operad} module to extract dependency
parses and a \py{rigid.Functor} with codomain \py{hypergraph.Diagram} to
turn the dependency trees into hypergraphs.

\begin{minted}{python}
from discopy.operad import from_spacy, tree2diagram
from discopy.hypergraph import Ty, Id, Box, Diagram
from discopy import rigid

F = rigid.Functor(ob=lambda typ: Ty(typ[0]),
                  ar=lambda box: Box(box.name, box.dom, box.cod),
                  ob_factory=Ty, ar_factory=Diagram)

text = F(tree2diagram(from_spacy(doc1, lexicalised=True)))\
    @ F(tree2diagram(from_spacy(doc2, lexicalised=True)))
assert text.is_monogamous
text.downgrade().draw(figsize=(10, 7))
\end{minted}

We can now use composition in \py{hypergraph} to add coreference boxes that link
leaves according to the coreference \py{clusters}.

\begin{minted}{python}
from discopy.rigid import Ob

ref = lambda x, y: Box('Coref', Ty(x, y), Ty(x))

def coref(diagram, word0, word1):
    pos0 = diagram.cod.objects.index(word0)
    pos1 = diagram.cod.objects.index(word1)
    swaps = Id(Ty(word0)) @ \
        Diagram.swap(diagram.cod[pos0 + 1 or pos1:pos1], Ty(word1))
    coreference = swaps >> \
        ref(word0, word1) @ Id(diagram.cod[pos0 + 1 or pos1:pos1])
    return diagram >> Id(diagram.cod[:pos0]) @ coreference @ \
        Id(diagram.cod[pos1 + 1 or len(diagram.cod):])

def resolve(diagram, clusters):
    coref_diagram = diagram
    for cluster in clusters:
        main = str(cluster.main)[0]
        for mention in cluster.mentions[1:]:
            coref_diagram = coref(coref_diagram, Ob(main), Ob(str(mention)))
    return coref_diagram

doc = nlp("A pair of starcross lovers take their life, whose misadventured \
    piteous overthrows doth with their death bury their parent's strife.")
clusters = doc._.coref_clusters
resolve(text, clusters).downgrade().draw(figsize=(11, 9), draw_type_labels=False)
\end{minted}

\begin{center}
    \includegraphics[scale=0.65]{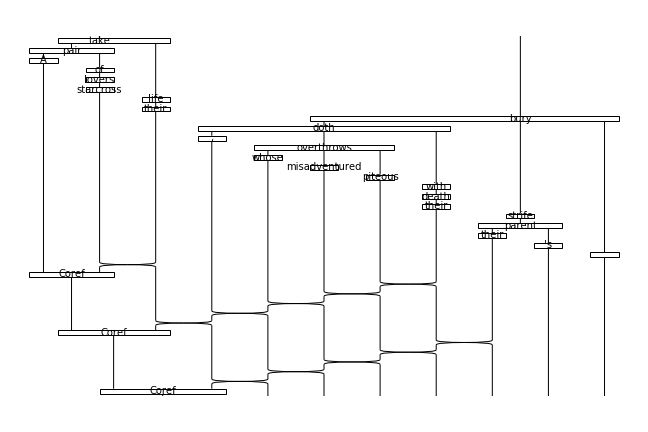}
\end{center}
\end{python}

Note that the only way we have so far of applying functors to a
\py{hypergraph.Diagram} is by first downgrading it to a \py{rigid.Diagram}.
An important direction of future work is the implementation of double
pushout rewriting for hypergraph diagrams \cite{bonchi2020, bonchi2021}.
In particular, this would allow to compute free functors directly on the hypergraph
representation as they are special instances of rewrites.

\chapter{Functors for Semantics}
\renewcommand{\chaptermark}[1]{\markboth{#1}{}}

The modern word ``semantics'' emerged from the linguistic turn of the end of the
19th century along with Peirce's ``semiotics'' and Saussure's ``semiology''.
It was introduced as ``s\'emantique'' by the French linguist Michel Breal,
and has its root in the greek word
$\sigma \eta \mu \breve{\alpha} \sigma \acute{\iota} \bar{\alpha}$
(semasia) which translates to ``meaning'' or ``signification''.
Semantics is the scientific study of language meaning. It thus presupposes
a definition of language as a system of signs, a definition of meaning
as a mental or computational process, and an understanding of how signs are
mapped onto their meaning.
The definition and analysis of these concepts is a problem
which has motivated the work of linguists, logicians and computer scientists
throughout the 20th century.

In his work on ``The Semantic Conception'' \cite{tarski1936, tarski1943},
Alfred Tarski proposed to found the science of semantics on the concept of
\emph{truth} relative to a \emph{model}.
As a mathematician, Tarski focused on the formal language of
logic and identified the conditions under which a logical formula
$\phi$ is true in a model $K$.
The philosophical intuition and mathematical tools
developed by Tarski and his collaborators had a great impact on
linguistics and computer science. They form the basis of Davidson's
\cite{davidson1967, davidson1967a} truth-conditional semantics, as well
as Montague's ``Universal Grammar'' \cite{montague1970, montague1970a, Montague73},
which translates natural language sentences into logical formulae.
They are also at the heart of the development of relational databases in the 1970s
\cite{Codd70, ChamberlinBoyce76, ChandraMerlin77}, where formulas are used as
queries for a structured storage of data.
These approaches adhere to the principle of \emph{compositionality}, which we
may sum up in Frege's words: ``The possibility of our understanding sentences
which we have never heard before rests evidently on this, that we can construct
the sense of a sentence out of individual parts which correspond to words''
\cite{frege1914}. In Tarski's approach, compositionality manifests itself in the notion of \emph{satisfaction} for a formula $\phi$ in a model $K$, which is defined
by induction over the formal grammar from which $\phi$ is constructed.
$$ \text{Formulas} \xto{\text{Model}} \text{Truth} $$

In his 1963 thesis \cite{Lawvere63}, Lawvere introduced the concept of
\emph{functorial semantics} as a foundation for universal algebra.
The idea is to represent syntax as a free category with products and semantics
as a structure-preserving functor computing the meaning of a compound algebraic
expression from the semantics of its basic operations.
The functorial approach to semantics is naturally compositional, but it
generalises Tarski's set-theoretic approach by allowing the semantic category
to represent processes in different models of computation.
For example, taking semantics in the category of complex vector spaces and linear
maps allows to build a structural understanding of quantum information protocols
\cite{abramsky2007}. The same principles are used in applications of category
theory to probability \cite{fong2013, cho2019}, databases \cite{spivak2012},
chemistry \cite{baez2017} and network theory \cite{baez2018}.
Of particular interest to us are the Categorical Compositional Distributional
(DisCoCat) models of Coecke et al. \cite{DisCoCat08, DisCoCat11} where functors
are used to compute the semantics of natural language sentences from the distributional
embeddings of their constituent words. As we will see, functors are useful
for both constructing new models of meaning and formalising already existing ones.
$$ \text{Syntax} \xto{\text{Functor}} \text{Semantics}$$

In this chapter, we use functorial semantics to characterise the expressivity and complexity of a number of NLP models, including logical, tensor network, quantum and neural network models, while showing how they are implemented in DisCoPy.
We start in \ref{sec-concrete} by showing how to implement semantic categories
in Python, while introducing the two main concrete categories we are interested in, $\bf{Mat}_\bb{S}$ and $\bf{Set}$, implemented respectively with the Python classes \py{Tensor} and \py{Function}.
In \ref{sec-functional}, we show that Montague semantics is captured by functors
from a biclosed grammar to the category of sets and functions $\bf{Set}$,
through a lambda calculus for manipulating first-order logic formulas.
We then give an implementation of Montague semantics by defining currying and uncurrying methods for \py{Function}.
In \ref{sec-neural-net}, we show that recurrent and recursive neural network
models are functors from regular and monoidal grammars (respectively) to a
category $\bf{NN}$ of neural network architectures. We illustrate this by
building an interface between DisCoPy and Tensorflow/Keras \cite{chollet2015keras}.
In \ref{sec-rel-model}, we show that functors from pregroup grammars to
the category of relations allow to translate natural language
sentences into conjunctive queries for a relational database.
In \ref{sec-tensor-network}, we formalise the relationship between DisCoCat
and tensor networks and use it to derive complexity results for DisCoCat models.
In \ref{sec-quantum-model}, we study the complexity of our recently proposed quantum models for NLP \cite{coecke2020a, meichanetzidis2020a}.
We show how tensor-based models are implemented in just a few lines of DisCoPy code, and we use them to solve a knowledge embedding task \ref{sec-discopy}.

\section{Concrete categories in Python}\label{sec-concrete}

We describe the implementation of the main semantic modules in DisCoPy: \py{tensor} and \py{function}. These consists in classes \py{Tensor} and \py{Function} whose methods carry out numerical computation. \py{Diagram} may then be evaluated using \py{Functor}.
We may start by considering the \emph{monoid}, a
set with a unit and a product, or equivalently a category with one object.
We can implement it as a subclass of \py{cat.Box} by overriding
\py{init}, \py{repr}, \py{then} and \py{id}.
In fact, it is sufficient to provide an additional \py{tensor} method and
we can make \py{Monoid} a subclass of \py{monoidal.Box}. Both \py{then}
and \py{tensor} are interpreted as multiplication, \py{id} as the unit.

\begin{python}
{\normalfont Delooping of a monoid as \py{monoidal.Box}.}

\begin{minted}{python}
from discopy import monoidal
from discopy.monoidal import Ty
from numpy import prod

class Monoid(monoidal.Box):
    def __init__(self, m):
        self.m = m
        super().__init__(m, Ty(), Ty())

    def __repr__(self):
        return "Monoid({})".format(self.m)

    def then(self, other):
        if not isinstance(other, Monoid):
            raise ValueError
        return Monoid(self.m * other.m)

    def tensor(self, other):
        return Monoid(self.m * other.m)

    def __call__(self, *others):
        return Monoid(prod([self.m] + [other.m for other in others]))

    @staticmethod
    def id(x):
        if x != Ty():
            raise ValueError
        return Monoid(1)

assert Monoid(2) @ Monoid.id(Ty()) >> Monoid(5) @ Monoid(0.1) == Monoid(1.0)
assert Monoid(2)(Monoid(1), Monoid(4)) == Monoid(8)
\end{minted}
\end{python}

\begin{remark}
We define semantic classes as subclasses of \py{monoidal.Box} in order for them
to inherit the usual DisCoPy syntax.
This can be avoided by explicitly providing
the definitions \py{__matmul__}, right and left \py{__shift__} as well as
the dataclass methods.
\end{remark}

A weighted context-free grammar (WCFG) is a CFG where every
production rule is assigned a \emph{weight}, scores are assigned to derivations
by multiplying the weights of each production rule appearing in the tree.
This model is equally expressive as probabilistic CFGs and has been
applied to range of parsing and tagging tasks \cite{smith2007}.
WCFGs are simply functors from \py{Tree} into the \py{Monoid} class!
Thus we can define weighted grammars as a subclass of \py{monoidal.Functor}.

\begin{python}\label{listing:functors.WeightedGrammar}
{\normalfont Weighted grammars as \py{Functor}.}

\begin{minted}{python}
from discopy.monoidal import Functor, Box, Id

class WeightedGrammar(Functor):
    def __init__(self, ar):
        ob = lambda x: Ty()
        super().__init__(ob, ar, ar_factory=Monoid)

weight = lambda box: Monoid(0.5)\
    if (box.dom, box.cod) == (Ty('N'), Ty('A', 'N')) else Monoid(1.0)

WCFG = WeightedGrammar(weight)
A = Box('A', Ty('N'), Ty('A', 'N'))
tree = A >> Id(Ty('A')) @ A
assert WCFG(tree) == Monoid(0.25)
\end{minted}
\end{python}

We can now generate trees with NLTK and evaluate them in a weighted CFG.

\begin{python}\label{listing:functors.WCFG}
{\normalfont Weighted context-free grammar.}

\begin{minted}{python}
from discopy.operad import from_nltk, tree2diagram
from nltk import CFG
from nltk.parse import RecursiveDescentParser
grammar = CFG.fromstring("""
S -> VP NP
NP -> D N
VP -> N V
N -> A N
V -> 'crossed'
D -> 'the'
N -> 'Moses'
A -> 'Red'
N -> 'Sea'""")

rd = RecursiveDescentParser(grammar)
parse = next(rd.parse('Moses crossed the Red Sea'.split()))
diagram = tree2diagram(from_nltk(parse))
parse2 = next(rd.parse('Moses crossed the Red Red Red Sea'.split()))
diagram2 = tree2diagram(from_nltk(parse2))
assert WCFG(diagram).m > WCFG(diagram2).m
\end{minted}
\end{python}

Functors into \py{Monoid} are degenerate examples of a larger class of models
called \py{Tensor} models. An instance of \py{Monoid} is in fact a
\py{Tensor} with domain and codomain of dimension $1$.
In \py{Tensor} models of language, words and production rules are upgraded from
just carrying a weight to carrying a tensor. The tensors are multiplied according
to the structure of the diagram.

\subsection{Tensor}

Tensors are multidimensional arrays of numbers that can be multiplied along their indices.
The \py{tensor} module of DisCoPy comes with interfaces with \py{numpy} \cite{harris2020array}, \py{tensornetwork} \cite{roberts2019} and
\py{pytorch} \cite{pytorch2019} for efficient tensor contraction as well as
\py{simpy} \cite{simpy} and \py{jax} \cite{jax2018github}
for computing gradients symbolically and numerically.
We describe the implementation of the semantic class \py{Tensor}.
We give a more in-depth example in \ref{sec-discopy}, after covering the
relevant theory in \ref{sec-rel-model} and \ref{sec-tensor-network}.

A semiring is a set $\bb{S}$ equipped with two binary operations $+$ and
$\cdot$ called addition and multiplication, and two specified elements
$0, 1$ such that $(\bb{S}, +, 0)$ is a commutative monoid, $(\bb{S}, \cdot, 1)$
is a monoid, the multiplication distributes over addition:
\begin{equation*}
    a \cdot (b + c) = a \cdot b + a \cdot c \qquad
    (a + b) \cdot c  = a \cdot c + b \cdot c
\end{equation*}
and multiplication by $0$ annihilates: $a \cdot 0 = 0 = 0 \cdot a$
for all $a, b, c \in \bb{S}$.
We say that $\bb{S}$ is commutative when $a \cdot b = b \cdot a$ for all
$a, b \in \bb{S}$.
Popular examples of semirings are the booleans $\bb{B}$, natural numbers
$\bb{N}$, positive reals $\bb{R}^+$, reals $\bb{R}$ and complex numbers $\bb{C}$.

The axioms of a semiring are the minimal requirements to define matrix
multiplication and thus a category $\bf{Mat}_\bb{S}$ with objects natural numbers
$n, m \in \bb{N}$ and arrows $n \to m$ given by $n \times m$ matrices with
entries in $\bb{S}$. Composition is given by matrix multiplication and identities
by the identity matrix.
For any commutative semiring $\bb{S}$ the category $\bf{Mat}_\bb{S}$ is monoidal
with tensor product $\otimes$ given by the kronecker product of matrices.
Note that $\bb{S}$ must be \emph{commutative} in order for $\bf{Mat}_\bb{S}$ to be monoidal, since otherwise the interchanger law wouldn't hold.
When $\bb{S}$ is non-commutative, $\bf{Mat}_\bb{S}$ is a premonoidal category \cite{power1997}.

\begin{example}
    The category of finite sets and relations $\bf{FRel}$ is isomorphic to
    $\bf{Mat}_\bb{B}$. The category of finite dimensional real vector spaces
    and linear maps is isomorphic to $\bf{Mat}_\bb{R}$.
    The category of finite dimensional Hilbert spaces and linear maps is
    isomorphic to $\bf{Mat}_\bb{C}$.
\end{example}

Matrices $f: 1 \to n_0 \otimes \dots \otimes n_k$ for objects $n_i \in \bb{N}$
are usually called \emph{tensors} with $k$ indices of dimensions $n_i$.
The word tensor emphasizes that this is a $k$ dimensional array of numbers,
while matrices are usually thought of as $2$ dimensional. Thus $\bf{Mat}_\bb{S}$
can be thought of as a category of tensors with specified input and output dimensions.
This gives rise to more categorical structure.
$\bf{Mat}_\bb{S}$ forms a hypergraph category with Frobenius structure
$(\mu, \nu, \delta, \epsilon)$ given by the ``generalised Kronecker delta''
tensors, defined in Einstein's notation by:
\begin{equation}
    \mu_{i, j}^k = \delta_i^{j, k} =
    \begin{cases*}
      1 & if $i = j = k$ \\
      0 & otherwise
    \end{cases*}
    \qquad
    \nu^i = \epsilon_i = 1
\end{equation}
In particular, $\bf{Mat}_\bb{S}$ is compact closed with cups and caps given
by $\mu\epsilon$ and $\delta\nu$. The transpose $f^\ast$ of a matrix $f: n \to m$
is obtained by pre and post composing with cups and caps.
When the semiring $\bb{S}$ is involutive, $\bf{Mat}_\bb{S}$ has moreover
a dagger structure, i.e. an involutive identity on objects contravariant endofunctor
(see the nlab). In the case when $\bb{S} = \bb{C}$ this is given by taking the
conjugate transpose, corresponding to the dagger of quantum mechanics.

The class \py{Tensor} implements the category of matrices in \py{numpy}
\cite{harris2020array},
with matrix multiplication as \py{then} and kronecker product as \py{tensor}.
The categorical structure of $\bf{Mat}_\bb{S}$ translates into methods
of the class \py{Tensor}, as listed below.
The types of the category of tensors are given by tuples
of dimensions, each entry corresponding to a wire. We can implement them as
a subclass of \py{rigid.Ty} by overriding \py{__init__}.

\begin{python}\label{listing:tensor.Dim}
{\normalfont Dimensions, i.e. types of Tensors.}

\begin{minted}{python}
class Dim(Ty):
    @staticmethod
    def upgrade(old):
        return Dim(*[x.name for x in old.objects])

    def __init__(self, *dims):
        dims = map(lambda x: x if isinstance(x, monoidal.Ob) else Ob(x), dims)
        dims = list(filter(lambda x: x.name != 1, dims))  # Dim(1) == Dim()
        for dim in dims:
            if not isinstance(dim.name, int):
                raise TypeError(messages.type_err(int, dim.name))
            if dim.name < 1:
                raise ValueError
        super().__init__(*dims)

    def __repr__(self):
        return "Dim({})".format(', '.join(map(repr, self)) or '1')

    def __getitem__(self, key):
        if isinstance(key, slice):
            return super().__getitem__(key)
        return super().__getitem__(key).name

    @property
    def l(self):
        return Dim(*self[::-1])

    @property
    def r(self):
        return Dim(*self[::-1])
\end{minted}
\end{python}

A \py{Tensor} is initialised by domain and codomain \py{Dim} types and an
array of shape \py{dom @ cod}. It comes with methods \py{then}, \py{tensor}
for composing tensors in sequence and in parallel. These matrix operations can
be performed using \py{numpy}, \py{jax.numpy} \cite{jax2018github} or
\py{pytorch} \cite{pytorch2019} as backend.

\begin{python}\label{listing:tensor.Tensor}
{\normalfont The category of tensors with \py{Dim} as objects.}

\begin{minted}{python}
import numpy as np


class Tensor(rigid.Box):
    def __init__(self, dom, cod, array):
        self._array = Tensor.np.array(array).reshape(tuple(dom @ cod))
        super().__init__("Tensor", dom, cod)

    @property
    def array(self):
        return self._array

    def then(self, *others):
        if self.cod != other.dom:
            raise AxiomError()
        array = Tensor.np.tensordot(self.array, other.array, len(self.cod))\
            if self.array.shape and other.array.shape\
            else self.array * other.array
        return Tensor(self.dom, other.cod, array)

    def tensor(self, others):
        dom, cod = self.dom @ other.dom, self.cod @ other.cod
        array = Tensor.np.tensordot(self.array, other.array, 0)\
            if self.array.shape and other.array.shape\
            else self.array * other.array
        source = range(len(dom @ cod))
        target = [
            i if i < len(self.dom) or i >= len(self.dom @ self.cod @ other.dom)
            else i - len(self.cod) if i >= len(self.dom @ self.cod)
            else i + len(other.dom) for i in source]
        return Tensor(dom, cod, Tensor.np.moveaxis(array, source, target))

    def map(self, func):
        return Tensor(
            self.dom, self.cod, list(map(func, self.array.flatten())))

    @staticmethod
    def id(dom=Dim(1)):
        from numpy import prod
        return Tensor(dom, dom, Tensor.np.eye(int(prod(dom))))

    @staticmethod
    def cups(left, right):
        ...

    @staticmethod
    def caps(left, right):
        ...

    @staticmethod
    def swap(left, right):
        array = Tensor.id(left @ right).array
        source = range(len(left @ right), 2 * len(left @ right))
        target = [i + len(right) if i < len(left @ right @ left)
                  else i - len(left) for i in source]
        return Tensor(left @ right, right @ left,
                      Tensor.np.moveaxis(array, source, target))

Tensor.np = np
\end{minted}
\end{python}

The compact-closed structure of the category of matrices is implemented via
static methods \py{cups} and \py{caps} and \py{swap}.
We check the axioms of compact closed categories (\ref{def-compact-closed}) on
a \py{Dim} object.

\begin{python}\label{listing:tensor.axioms}
{\normalfont Axioms of compact closed categories.}

\begin{minted}{python}
from discopy import Dim, Tensor
import numpy as np

x = Dim(3, 2)
cup_r, cap_r = Tensor.cups(x, x.r), Tensor.caps(x.r, x)
cup_l, cap_l = Tensor.cups(x.l, x), Tensor.caps(x, x.l)
snake_r = Tensor.id(x) @ cap_r >> cup_r @ Tensor.id(x)
snake_l =  cap_l @ Tensor.id(x) >> Tensor.id(x) @ cup_l
assert np.allclose(snake_l.array, Tensor.id(x).array, snake_r.array)

swap = Tensor.swap(x, x)
assert np.allclose((swap >> swap).array, Tensor.id(x @ x).array)
assert np.allclose((swap @ Tensor.id(x) >> Tensor.id(x) @ swap).array,
                   Tensor.swap(x, x @ x).array)
\end{minted}
\end{python}

Functors into \py{Tensor} allow to evaluate any DisCoPy diagram as a tensor network.
They are simply \py{monoidal.Functor}s with codomain \py{(Dim, Tensor)},
initialised by a mapping \py{ob} from \py{Ty} to \py{Dim} and a mapping \py{ar}
from \py{monoidal.Box} to \py{Tensor}.
We can use them to give an example of a DisCoCat model \cite{DisCoCat11} in DisCoPy, these are studied in \ref{sec-rel-model} and \ref{sec-tensor-network} and used for a concrete task in \ref{sec-discopy}.

\begin{python}\label{listing:functors.TTN}
{\normalfont DisCoCat model as \py{Functor}.}

\begin{minted}{python}
from lambeq import BobcatParser
parser = BobcatParser()
diagram = parser.sentence2diagram('Moses crossed the Red Sea')
diagram.draw()
\end{minted}

\begin{center}
    \includegraphics[scale=0.50]{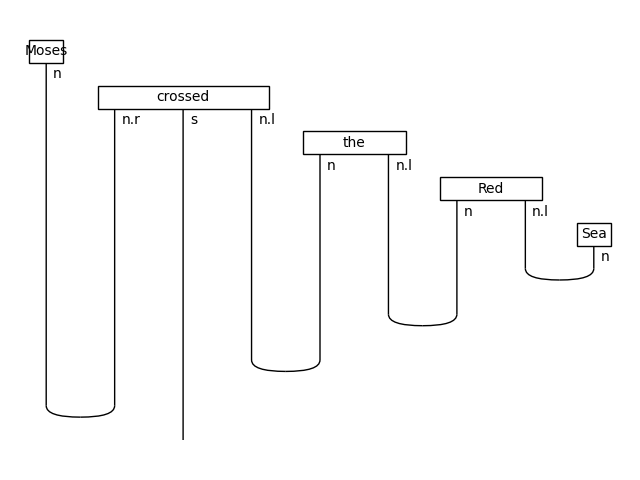}
\end{center}

\begin{minted}{python}
from discopy.tensor import Dim, Tensor, Functor
import numpy as np

def box_to_array(box):
    if box.name =='Moses':
        return np.array([1, 0])
    if box.name == 'Sea':
        return np.array([0, 1])
    if box.name == 'crossed':
        return np.array([[0, 1], [1, 0]])
    if box.name in ['the', 'Red']:
        return np.eye(2)
    raise NotImplementedError()

ob = lambda x: 2 if x == n else 1
F = Functor(ob, box_to_array)
assert F(diagram).array == [1.0]
\end{minted}
\end{python}

\subsection{Function}

\py{Function} is the semantic class that characterises the models
studied in \ref{sec-functional} and \ref{sec-neural-net}.
It consists in an implementation of the category $\bf{Set}$ of sets and functions,
or more precisely, the category of Python functions on tuples with \py{Ty} as objects. We describe the basic methods of \py{Function}, that arise from
the cartesian structure of the category of functions.
In \ref{sec-functional}, we also define \py{curry} and \py{uncurry} methods for \py{Function}, accounting for the biclosed structure of this category.
Alexis Toumi \cite{Toumi22} gives detailed implementation and examples for this
class.

$\bf{Set}$ is a monoidal category with the cartesian
product $\times : \bf{Set} \times \bf{Set} \to \bf{Set}$ as tensor.
It is moreover symmetric, there is a natural transformation
$\sigma_{A, B}: A \times B \to B \times A$ satisfying
$\sigma_{B, A} \circ \sigma_{A, B} = \tt{id}_{A \times B}$ and the axioms
\ref{axioms-symmetry}. $\bf{Set}$ is a \emph{cartesian} category.

\begin{definition}[Cartesian category]\label{def-cartesian}
    A cartesian category $\bf{C}$ is a symmetric monoidal category such that
    the product $\times$ satisfies the following properties:
    \begin{enumerate}
        \item there are projections $A \xleftarrow{\pi_1} A \times B \xrightarrow{\pi_2} B$
        for any $A, B \in \bf{C}_0$,
        \item any pair of function $f: C \to A$ and $g: C \to B$ induces a unique
        function $<f, g> : C \to A \times B$ with $\pi_1 \circ <f, g> = f$ and
        $\pi_2 \circ <f, g> = g$.
    \end{enumerate}
\end{definition}

The structure of the category of functions is orthogonal to the structure of tensors. This may be stated formally as in the following proposition,
shown in the context of the foundations of quantum mechanics.

\begin{proposition}\cite{abramsky2012}
    A compact-closed category that is also cartesian is trivial, i.e. there is
    at most one morphism between any two objects.
\end{proposition}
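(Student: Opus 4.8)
The plan is to exploit the tension between two facts: in a compact-closed category every object $a$ carries cups and caps satisfying the snake equations, while in a cartesian category the monoidal unit $1$ is \emph{terminal} and $\otimes$ is the categorical product. The strategy is to show that together these force the identity on every object to factor through $1$, after which triviality is immediate. Throughout I would work in the strictified category (justified by Mac Lane coherence, as in the rest of the excerpt), so that $a \otimes 1 = a$ on the nose.

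First I would record the cartesian structure I need. Since $\otimes$ is the product, for each object $X$ there is a unique discard map $\mathord{!}_X \colon X \to 1$, and these are natural, i.e. $\mathord{!}_Y \circ f = \mathord{!}_X$ for every $f \colon X \to Y$. Moreover the projections are recovered from the discards: $\pi_1 = \mathrm{id}_a \otimes \mathord{!}_b \colon a \otimes b \to a$, and a morphism into a product $a \otimes a^\ast$ is determined by its two components $\pi_1 \circ -$ and $\pi_2 \circ -$.

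Next I would feed the compact-closed data into this. Writing the cup as $\epsilon_a \colon a^\ast \otimes a \to 1$ and the cap as $\eta_a \colon 1 \to a \otimes a^\ast$, with snake equation $(\mathrm{id}_a \otimes \epsilon_a) \circ (\eta_a \otimes \mathrm{id}_a) = \mathrm{id}_a$, the key observation is that $\epsilon_a$ lands in the terminal object and so \emph{must} equal $\mathord{!}_{a^\ast \otimes a}$. Consequently $\mathrm{id}_a \otimes \epsilon_a$ is exactly the projection $a \otimes (a^\ast \otimes a) \to a$. Setting $u = \pi_1 \circ \eta_a \colon 1 \to a$, I would then evaluate the left-hand side of the snake equation by composing with this projection, using naturality of the projections and of $\mathord{!}$; the computation collapses to $u \circ \mathord{!}_a$. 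The snake equation therefore reads
\[
    \mathrm{id}_a \;=\; u \circ \mathord{!}_a,
    \qquad u \colon 1 \to a, \quad \mathord{!}_a \colon a \to 1,
\]
so every identity factors through $1$.

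Finally I would conclude: given parallel maps $f, g \colon X \to Y$, inserting $\mathrm{id}_Y = u_Y \circ \mathord{!}_Y$ and using $\mathord{!}_Y \circ f = \mathord{!}_X = \mathord{!}_Y \circ g$ gives $f = u_Y \circ \mathord{!}_X = g$, whence there is at most one morphism between any two objects. The only obstacle I anticipate is bookkeeping: keeping the associator/unitor identifications honest when recognizing $\mathrm{id}_a \otimes \epsilon_a$ as a genuine product projection and tracking which $a$-factor the projection selects. Working strictly dispatches this cleanly, and the entire conceptual content sits in the single identity $\mathrm{id}_a = u \circ \mathord{!}_a$.
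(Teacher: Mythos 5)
Your proof is correct; note that the paper itself gives no proof of this proposition (it is stated with a citation to Abramsky), and your argument is exactly the standard one behind that citation: terminality of the cartesian unit forces $\epsilon_a = {!}_{a^\ast \otimes a}$, which turns $\mathrm{id}_a \otimes \epsilon_a$ into a product projection and collapses the snake identity to $\mathrm{id}_a = u \circ {!}_a$ with $u = \pi_1 \circ \eta_a$, whence any two parallel morphisms coincide. Your computation in fact yields slightly more than the stated ``at most one'': since $u_Y \circ {!}_X$ always exists, every hom-set is a singleton and every object is isomorphic to $1$, so the category is equivalent to the terminal category.
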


The main difference comes from the presence of the diagonal map $\tt{copy}$ in  $\bf{Set}$. This is a useful piece of structure which exists in any cartesian category.

\begin{proposition}[Fox \cite{fox1976}]
    In any cartesian category $\bf{C}$ there is a natural transformation
    $\tt{copy}_A: A \to A \otimes A$ satisfying the following axioms:
    \begin{enumerate}
        \item Commutative comonoid axioms:
        \begin{equation}
            \scalebox{0.8}{\tikzfig{figures/commutative-comonoid}}
        \end{equation}
        \item Naturality of copy:
        \begin{equation}
            \scalebox{0.8}{\tikzfig{figures/copy}}
        \end{equation}
    \end{enumerate}
\end{proposition}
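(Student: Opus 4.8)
The plan is to realise $\tt{copy}_A$ as the diagonal map coming directly from the universal property of the product, and then to verify every required equation by one uniform move: two morphisms into a product agree as soon as their composites with the two projections agree. First I would define
\[
    \tt{copy}_A = \langle \tt{id}_A, \tt{id}_A \rangle : A \to A \otimes A,
\]
by applying clause (2) of Definition \ref{def-cartesian} with $C = A$ and $f = g = \tt{id}_A$, so that $\pi_1 \circ \tt{copy}_A = \tt{id}_A = \pi_2 \circ \tt{copy}_A$ holds by construction. I would also record the one computational identity needed throughout, namely $f \otimes g = \langle f \circ \pi_1, g \circ \pi_2 \rangle$, from which $\pi_1 \circ (f \otimes g) = f \circ \pi_1$ and $\pi_2 \circ (f \otimes g) = g \circ \pi_2$.

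For naturality, given $f : A \to B$ I would prove $\tt{copy}_B \circ f = (f \otimes f) \circ \tt{copy}_A$ as maps $A \to B \otimes B$; by the uniqueness in clause (2) it suffices to compare both sides after $\pi_1$ and $\pi_2$. On the left $\pi_1 \circ \tt{copy}_B \circ f = f$, while on the right $\pi_1 \circ (f \otimes f) \circ \tt{copy}_A = f \circ \pi_1 \circ \tt{copy}_A = f$, and symmetrically for $\pi_2$. The comonoid axioms follow the same pattern. Cocommutativity $\sigma_{A,A} \circ \tt{copy}_A = \tt{copy}_A$ holds because $\pi_1 \circ \sigma_{A,A} = \pi_2$ and $\pi_2 \circ \sigma_{A,A} = \pi_1$, so both sides have projection components equal to $\tt{id}_A$. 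Coassociativity reduces, after composing with the three projections of $A \otimes A \otimes A$, to the fact that $(\tt{copy}_A \otimes \tt{id}_A) \circ \tt{copy}_A$ and $(\tt{id}_A \otimes \tt{copy}_A) \circ \tt{copy}_A$ both equal the ternary diagonal $\langle \tt{id}_A, \tt{id}_A, \tt{id}_A \rangle$.

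The counit axiom is where the real content sits, since it requires a delete map $\tt{del}_A : A \to 1$ into the monoidal unit together with the fact that $1$ is terminal. I would set $\tt{del}_A = \pi_2 \circ \rho_A^{-1}$, where $\rho_A : A \otimes 1 \to A$ is the right unitor, and then check $\rho_A \circ (\tt{id}_A \otimes \tt{del}_A) \circ \tt{copy}_A = \tt{id}_A$ by projecting, with the mirror statement for the left counit. The main obstacle, and the step I would treat most carefully, is establishing that the unit is actually terminal — that $\tt{del}_A$ is the \emph{unique} morphism $A \to 1$ — because clauses (1)–(2) only assert the binary product structure. This is the standard lemma that a symmetric monoidal category whose tensor is the categorical product has terminal unit; I would prove it by showing that any $h : A \to 1$ and $\tt{del}_A$ become equal after tensoring with $\tt{id}_A$ and comparing projections, using the coherence of $\rho$ and $\sigma$. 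Terminality then also yields naturality of $\tt{del}$ for free, completing the comonoid structure required by the statement.
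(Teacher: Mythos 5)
The paper states this proposition without proof, citing Fox \cite{fox1976}, so there is no in-paper argument to compare yours against; judged on its own, your proposal is correct and is the standard construction: setting $\tt{copy}_A = \langle \tt{id}_A, \tt{id}_A \rangle$ and verifying naturality, cocommutativity and coassociativity by post-composing with projections and invoking the uniqueness clause of Definition \ref{def-cartesian} all goes through exactly as you describe.

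Two points deserve care. First, your recorded identity $f \otimes g = \langle f \circ \pi_1, g \circ \pi_2 \rangle$ is not derivable from clauses (1)--(2) as literally written: it is equivalent to the projections being \emph{natural} with respect to the monoidal bifunctor, i.e.\ to $\otimes$ together with $\pi_1, \pi_2$ being the product as a bifunctor rather than merely objectwise. That is clearly the intended reading of the definition, but you should state it as an assumption rather than a consequence, since everything else in your proof rests on it. Second, you correctly identify terminality of the unit as the real content behind the counit, and you are right that it is provable rather than an extra axiom; however, your sketch (tensoring $h : A \to 1$ with an identity and comparing projections via naturality of $\rho$) only yields that $q \circ h$ is independent of $h$, where $q = \pi_2 \circ \rho_1^{-1} : 1 \to 1$. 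To conclude that all maps $A \to 1$ coincide one must still show $q = \tt{id}_1$, which follows from an Eckmann--Hilton-style computation in the commutative monoid $\bf{C}(1,1)$ (one checks $p, q$ are idempotents with $p \circ q = p$ and $q \circ p = q$, hence equal, and then that this common idempotent is split epi, hence the identity) --- or can simply be cited, since it is classical. With that step filled in, your argument is complete.
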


This proposition may be used to characterise the \emph{free cartesian category}
$\bf{Cart}(\Sigma)$ from a generating monoidal signature $\Sigma$ as the
free monoidal category with natural comonoids on every object. These were
first studied by Lawvere \cite{Lawvere63} who used them to define algebraic theories as functors.

\begin{definition}[Lawvere theory]
    A Lawvere theory with signature $\Sigma$ is a product-preserving functor
    $F: \bf{Cart}(\Sigma) \to \bf{Set}$.
\end{definition}

We now show how to implement these concepts in Python.
A \py{Function} is initialised by a domain \py{dom} and a codomain \py{cod}
together with a Python function \py{inside} which takes tuples
of length \py{len(dom)} to tuples of length \py{len(cod)}.
The class comes with methods \py{id}, \py{then} and \py{tensor}
for identities, sequential and parallel composition of functions, as well as a
\py{__call__} method which accesses \py{inside}.

\begin{python}\label{listing:pyth.Function}
{\normalfont The category of Python functions on tuples.}

\begin{minted}{python}
class Function(monoidal.Box):
    def __init__(self, inside, dom, cod):
        self.inside = inside
        name = "Function({}, {}, {})".format(inside, dom, cod)
        super().__init__(name, dom, cod)

    def then(self, other):
        inside = lambda *xs: other(*tuple(self(*xs)))
        return Function(inside, self.dom, other.cod)

    def tensor(self, other):
        def inside(*xs):
            left, right = xs[:len(self.dom)], xs[len(self.dom):]
            result = tuple(self(*left)) + tuple(other(*right))
            return (result[0], ) if len(self.cod @ other.cod) == 1 else result
        return Function(inside, self.dom @ other.dom, self.cod @ other.cod)

    def __call__(self, *xs): return self.inside(*xs)

    @staticmethod
    def id(x):
        return Function(lambda *xs: xs, x, x)

    @staticmethod
    def copy(x):
        return Function(lambda *xs: (*xs, *xs), x, x @ x)

    @staticmethod
    def delete(x):
        return Function(lambda *xs: (), x, Ty())

    @staticmethod
    def swap(x, y):
        return Function(lambda x0, y0: (y0, x0), x @ y, y @ x)
\end{minted}
\end{python}

We can check the properties of diagonal maps and projections.

\begin{python}\label{listing:function.axioms}
{\normalfont Axioms of cartesian categories.}

\begin{minted}{python}
X = Ty('X')
copy = Function.copy(X)
delete = Function.delete(X)
I = Function.id(X)
swap = Function.swap(X, X)

assert (copy >> copy @ I)(54) == (copy >> I @ copy)(54)
assert (copy >> delete @ I)(46) == (copy >> I @ delete)(46)
assert (copy >> swap)('was my number') == (copy)('was my number')

f = Function(lambda x: (46,) if x == 54 else (54,), X, X)
assert (f >> copy)(54) == (copy >> f @ f)(54)
assert (copy @ copy >> I @ swap @ I)(54, 46) == Function.copy(X @ X)(54, 46)
\end{minted}
\end{python}

This is all we need in order to interpret diagrams as functions!
Indeed, it is sufficient to use an instance of \py{monoidal.Functor}, with
codomain \py{cod = (Ty, Function)}. We generate a diagram using the interface
with SpaCy and we evaluate its semantics with a \py{Functor}.

\begin{python}\label{listing:pyth.arithmetic}
{\normalfont Lawvere theory as a \py{Functor}}

\begin{minted}{python}
from discopy.operad import from_spacy, tree2diagram
import spacy
nlp = spacy.load("en_core_web_sm")
doc = nlp("Fifty four was my number")
diagram = tree2diagram(from_spacy(doc), contravariant=True)
diagram.draw()
\end{minted}

\begin{center}
    \includegraphics[scale=0.50]{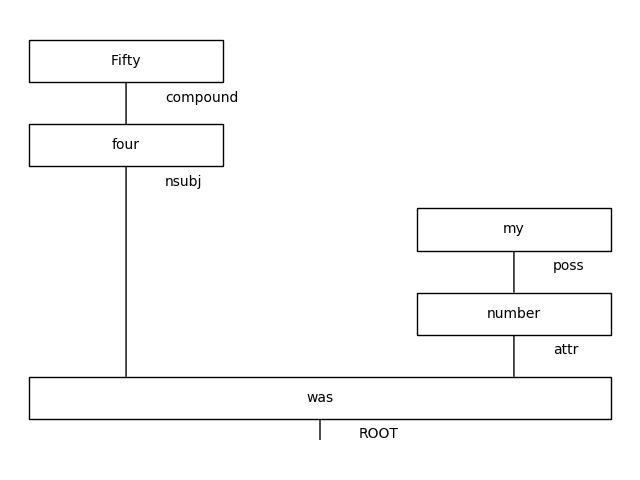}
\end{center}

\begin{minted}{python}
from discopy.monoidal import Ty, Id, Box, Functor

X = Ty('X')
ob = lambda x: X
def box_to_function(box):
    if box.name == 'was':
        return Function(lambda x, y: (x == y, ), X @ X, X)
    if box.name == 'number':
        return Function.id(X)
    if box.name == 'four':
        return Function(lambda x: (4 + x, ), X, X)
    if box.name == 'Fifty':
        return Function(lambda: (50, ), Ty(), X)
    if box.name == 'my':
        return Function(lambda: (54, ), Ty(), X)
    raise NotImplementedError()

F = Functor(ob, box_to_function, ob_factory=Ty, ar_factory=Function)
assert F(diagram)() == (True,)
\end{minted}
\end{python}

\section{Montague models}\label{sec-functional}
Montague's work appeared in three papers in the 1970s
\cite{montague1970, montague1970a, Montague73}.
In the first, he characterises his endeavour in formulating a mathematically
precise theory of natural language semantics:
``The basic aim of semantics is to characterize the notion of a true sentence
(under a given interpretation) and of entailment'' \cite{montague1970}.

On the syntactic side, Montague's grammar can be seen as an instance of the
categorial grammars studied in \ref{sec-closed}, see e.g. \cite{moot2012b}.
On the semantic side, he used a blend of lambda calculus and modal logic allowing
him to combine the logical meaning of individual words into the meaning of a sentence.
This work had an immediate influence on philosophy and linguistics
\cite{partee1976, halvorsen1979, sep-montague-semantics}.
It motivated much of the work on combinatory categorial grammars
(see \ref{section-combinatory}) and has been used in the implementation
of semantic parsers turning natural language into database queries
\cite{krishnamurthy2014, artzi2014}.
From the point of view of large-scale NLP, Montague models suffer from great complexity
issues and making them practical comes at the cost of restricting the possibility
of (exact) logical reasoning.

It was first proposed by Lambek \cite{lambek1988, lambek1999}, that Montague
semantics should be seen as a functor from categorial grammars to a cartesian
closed category. In this section, we make this idea precise, focusing on the
first-order logic aspects of Montague's translation and leaving the intensional
and modal aspects for future work.
We formulate Montague semantics as a pair of functors:
$$G \longrightarrow \bf{CCC}(\Gamma_\Sigma) \longrightarrow \bf{Set}$$
where $G$ is a categorial grammar, $\bf{CCC}(\Gamma_\Sigma)$ is a lambda calculus
for typed first-order-logic and $\bf{Set}$ is the category of sets and functions.
This factorization allows to distinguish between the syntactic translation from
sentences to formulae $G \to \bf{CCC}(\Gamma_\Sigma)$ and the evaluation of
the resulting formulae in a concrete model $\bf{CCC}(\Gamma_\Sigma) \to \bf{Set}$.

We start in \ref{section-lambda} by introducing the lambda calculus and
reviewing its relationship with cartesian closed categories.
In \ref{section-typed-logic}, we define a typed lambda calculus
$\bf{CCC}(\Gamma_\Sigma)$ for manipulating first-order logic formulae, and
in \ref{section-montague} we define Montague models and discuss their complexity.
Finally, in \ref{section-montague-discopy} we show how to implement Montague
semantics in DisCoPy, by defining \py{curry} and \py{uncurry} methods for
the \py{Function} class introduced in \ref{sec-concrete}.

\subsection{Lambda calculus}\label{section-lambda}

The lambda calculus was introduced in the 1930s by Alonzo Church as part of his
research into the foundations of mathematics \cite{church1932, church1936}.
It is a model of computation which can be used to simulate any Turing machine
\cite{turing1937}. Church also introduced a simply typed version in
\cite{church1940} which yields a weaker model of computation but allows to avoid
the infinite recursions of the untyped calculus.
There is a well known correspondence --- due to Curry and Howard --- between
the typed lambda calculus and intuitionistic logic, where types are seen as
formulae and terms as proofs. This correspondence was later extended by Lambek
\cite{lambek1986} who showed that the typed lambda calculus has a clear
characterisation as the equational theory of \emph{cartesian closed categories},
viewing programs (or proofs) as morphisms.

The rules of the lambda calculus emerge naturally from the structure of the category
of sets and functions $\bf{Set}$.
We have seen in \ref{sec-concrete} that $\bf{Set}$ is a monoidal category with
the cartesian product $\times : \bf{Set} \times \bf{Set} \to \bf{Set}$.
Moreover, for any pair of sets $A$ and $B$, the hom-set $\bf{Set}(A, B)$ is itself a
set, denoted $B^A$ or $A \to B$. In fact, there is a functor
$-^B: \bf{Set} \to \bf{Set}$ taking a set $A$ to $A^B$ and a function
$f: A \to C$ to a function $f^B : A^B \to C^B$ given by $f^B(g) = f \circ g$
for any $g \in A^B$.
This functor is the right adjoint of the cartesian product, i.e. there is a
natural isomorphism:
$$ \bf{Set}(A \times B, C) \simeq \bf{Set}(A, C^B)$$
This is holds in any cartesian closed category.

\begin{definition}[Cartesian closed category]\label{def-cartesian-closed}
    A cartesian closed category $\bf{C}$ is cartesian category equipped with a
    functor $-^A$ for any object $A \in \bf{C}_0$ which is the right adjoint
    of the cartesian product $A \times - \dashv -^A$. Explicitly, there is
    a natural isomorphism:
    \begin{equation}\label{eq-cartesian-closed}
        \bf{C}(A \times B, C) \simeq \bf{C}(A, C^B)
    \end{equation}
\end{definition}
\begin{proposition}
    A cartesian closed category is a cartesian category (Definition \ref{def-cartesian})
    which is also biclosed (Definition \ref{def-biclosed}).
\end{proposition}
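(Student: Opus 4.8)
The plan is to observe that the statement is almost definitional: a cartesian closed category is a cartesian category by the very wording of Definition \ref{def-cartesian-closed}, so the entire content of the proposition is to extract the biclosed structure of Definition \ref{def-biclosed} from the single exponential functor $-^A$. The key idea is that a cartesian category is symmetric monoidal with $\otimes = \times$, so the distinction between the left internal hom $\backslash$ and the right internal hom $/$ collapses: both can be built from the same exponential, with the symmetry isomorphism mediating between the two adjunctions.

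First I would set $c / b := c^{b}$ and $a \backslash c := c^{a}$, and check variances. The exponential $(c,b) \mapsto c^{b}$ is covariant in the numerator $c$ and contravariant in the exponent $b$, giving a bifunctor $-/- : \bf{C} \times \bf{C}^{op} \to \bf{C}$ as required; reading $(a,c) \mapsto c^{a}$ with the arguments in the other order yields $-\backslash- : \bf{C}^{op} \times \bf{C} \to \bf{C}$. Then I would verify the chain of natural isomorphisms demanded by Definition \ref{def-biclosed}. The first isomorphism is exactly the defining adjunction of the CCC: $\bf{C}(a \otimes b, c) = \bf{C}(a \times b, c) \simeq \bf{C}(a, c^{b}) = \bf{C}(a, c/b)$. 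For the second I would precompose with the symmetry $\sigma_{a,b} : a \times b \xto{\sim} b \times a$, obtaining $\bf{C}(a \otimes b, c) \simeq \bf{C}(b \times a, c) \simeq \bf{C}(b, c^{a}) = \bf{C}(b, a \backslash c)$, where the middle step is again the CCC adjunction applied with arguments transposed. Composing these exhibits the required $\bf{C}(a, c/b) \simeq \bf{C}(a \otimes b, c) \simeq \bf{C}(b, a \backslash c)$, so $a \otimes - \dashv a \backslash -$ and $- \otimes a \dashv -/a$.

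The routine but necessary remaining work is to confirm that these isomorphisms are natural in all of $a$, $b$, $c$ simultaneously, which follows from naturality of the CCC adjunction together with naturality of $\sigma$; and to note that the exponential assembles into genuine bifunctors rather than bare object-assignments, using functoriality of $-^{A}$ and of $A \times -$.

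I expect the only real subtlety to be bookkeeping around variances and the mild discrepancy between the prose of Definition \ref{def-cartesian-closed} (which states $A \times - \dashv -^{A}$) and its displayed hom-isomorphism $\bf{C}(A \times B, C) \simeq \bf{C}(A, C^{B})$ (which reads as $- \times B \dashv -^{B}$); both forms are equivalent in a symmetric monoidal setting via $\sigma$, and I would simply invoke symmetry to pass between them whenever needed. No deeper obstacle arises: the essential mathematical fact is that symmetric monoidal closure gives biclosure with $a \backslash c \simeq c / a$.
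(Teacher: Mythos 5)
Your proof is correct; the paper in fact states this proposition without any proof, and the argument you give --- setting $c / b := c^{b}$ and $a \backslash c := c^{a}$, obtaining the first adjunction directly from the exponential and the second by precomposing with the symmetry $\sigma$ --- is precisely the standard argument the paper leaves implicit. Your observation about the variance discrepancy in Definition \ref{def-cartesian-closed}, where the prose asserts $A \times - \dashv -^{A}$ while the displayed isomorphism $\bf{C}(A \times B, C) \simeq \bf{C}(A, C^{B})$ reads as $- \times B \dashv -^{B}$, is also accurate, and your resolution via $\sigma$ is the right one.
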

\begin{remark}
    The categorial biclosed grammars studied in \ref{sec-closed}
    map canonically into the lambda calculus, as we will see in \ref{section-montague}.
\end{remark}

A functional signature is a set $\Gamma$ together with a function
$\tt{ty}: \Gamma \to TY(B)$ into the set of functional types defined inductively
by:
$$TY(B) \ni T, U\, =\, b\in B \;|\; T \otimes U\;|\; T \to U\;.$$
we write $x: T$ whenever $\tt{ty}(x) = T$ for $x \in \Gamma$.
Given a functional signature $\Gamma$, we can consider the free cartesian closed
category $\bf{CCC}(\Gamma)$ generated by $\Gamma$. As first shown by Lambek
\cite{lambek1986}, morphisms of the free cartesian closed category over $\Gamma$
can be characterised as the terms of the simply typed lambda calculus generated
by $\Gamma$.

We define the lambda calculus generated by the basic types $B$ and a functional
signature $\Gamma$. Its \emph{types} are given by $TY(B)$.
We define the set of \emph{terms} by the following inductive definition:
$$ TE \ni t, u \; = \; x \;\vert\; tu \;\vert\; \lambda x.t \;\vert\; <t, u> \;\vert\;
\pi_1 u \;\vert\; \pi_2 u$$
A \emph{typing context} is just a set of pairs of the form $x: T$, i.e.
a functional signature $\Gamma$. Then a \emph{typing judgement} is a triple:
$$ \Gamma \vdash t:T $$
consisting in the assertion that term $t$ has type $T$ in context $\Gamma$.
A \emph{typed term} is a term $t$ with a typing judgement $\Gamma \vdash t:T$
which is derivable from the following rules of inference:

\begin{equation}
    \begin{minipage}{0.3\linewidth}
        \begin{prooftree}
            \AxiomC{}
            \UnaryInfC{$ \Gamma, x: T \vdash x: T$}
        \end{prooftree}
    \end{minipage}
\end{equation}

\begin{equation} \label{lambda-rules}
\begin{minipage}{0.3\linewidth}
\begin{prooftree}
    \AxiomC{$\Gamma \vdash t: T$}
    \AxiomC{$\Gamma \vdash u : U$}
    \BinaryInfC{$\Gamma \vdash <t, u> : T \times U$}
\end{prooftree}
\end{minipage}
\begin{minipage}{0.3\linewidth}
\begin{prooftree}
    \AxiomC{$\Gamma \vdash v: T \times U$}
    \UnaryInfC{$\Gamma \vdash \pi_1v : T$}
\end{prooftree}
\end{minipage}
\begin{minipage}{0.3\linewidth}
\begin{prooftree}
    \AxiomC{$\Gamma \vdash v: T \times U$}
    \UnaryInfC{$\Gamma \vdash \pi_2v : U$}
\end{prooftree}
\end{minipage}
\end{equation}

\begin{equation}
    \begin{minipage}{0.3 \linewidth}
    \begin{prooftree}
        \AxiomC{$\Gamma, x: U \vdash t: T$}
        \UnaryInfC{$\Gamma \vdash \lambda x. t : U \to T$}
    \end{prooftree}
    \end{minipage}
    \begin{minipage}{0.3\linewidth}
    \begin{prooftree}
        \AxiomC{$\Gamma \vdash t: U \to T$}
        \AxiomC{$\Gamma \vdash u : U$}
        \BinaryInfC{$\Gamma \vdash tu : T$}
    \end{prooftree}
    \end{minipage}
\end{equation}

We define the typed lambda terms generated by $\Gamma$, denoted $\Lambda(\Gamma)$ as the
lambda terms that can be typed in context $\Gamma$.
In order to define equivalence of typed lambda terms we start by defining
$\beta$-reduction $\to_\beta$ which is the relation on terms generated by the
following rules:
\begin{equation}
    (\lambda x. t)u \to_\beta t[u/x] \qquad \pi_1<t, u> \to_\beta t
    \qquad \pi_2<t, u> \to_\beta u
\end{equation}
where $t[u/x]$ is the term obtained by substituting $u$ in place of $x$ in $t$,
see e.g. \cite{abramsky2010a} for an inductive definition.
We define $\beta$-conversion $\sim_\beta$ as the symmetric reflexive transitive
closure of $\to_\beta$. Next, we define $\eta$-conversion $\sim_\eta$ as the
symmetric reflexive transitive closure of the relation defined by:
\begin{equation}
    t \sim_\eta \lambda x. tx \qquad v \sim_\eta <\pi_1v, \pi_2v>
\end{equation}
Finally $\lambda$-conversion, denoted $\sim_\lambda$ is the transitive closure
of the union $\sim_\beta \cup \sim_\eta$. One may show that for any typed term
$\Gamma \vdash t: T$, if $t \to_\beta t'$ then $\Gamma \vdash t': T$, and similarly
for $\sim_\eta$, so that $\lambda$-equivalence is well defined on typed terms.
Moreover, $\beta$-reduction admits \emph{strong normalisation}, i.e.
ever reduction sequence is terminating and leads to a normal form without redexes.
For any lambda term $t$ we denote its normal form by $\tt{nm}(t)$, which
of course satisfies $t \sim_\lambda \tt{nm}(t)$. However normalising lambda
terms is a problem known to be \emph{not elementary recursive} in general
\cite{statman1979}! We discuss the consequences of this result for Montague
grammar at the end of the section.

We can now state the correspondence between cartesian closed categories and
the lambda calculus \cite{lambek1986}. For a proof, we refer to the lecture
notes \cite{abramsky2010a} where this equivalence is spelled out in detail
alongside the correspondence with intuitionistic logic.

\begin{proposition}\cite[Section 1.6.5]{abramsky2010a}\label{prop-lambda-ccc}
    The free cartesian closed category over $\Gamma$
    is equivalent to the lambda calculus generated by $\Gamma$.
    $$\bf{CCC}(\Gamma) \simeq \Lambda(\Gamma)/\sim_\lambda$$
\end{proposition}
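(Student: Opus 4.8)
The plan is to realize $\Lambda(\Gamma)/\sim_\lambda$ as an honest category and then show it satisfies the universal property that characterises the free cartesian closed category $\bf{CCC}(\Gamma)$; by uniqueness of free objects the claimed equivalence follows. First I would build the \emph{syntactic category} $\cal{S}$: its objects are the functional types $TY(B)$, and a morphism $T \to U$ is a $\lambda$-equivalence class of typed terms $x : T \vdash t : U$ in a single free variable. The identity on $T$ is the class of $x : T \vdash x : T$, and the composite of $x:T \vdash t:U$ with $y:U \vdash u:V$ is the capture-avoiding substitution $x:T \vdash u[t/y] : V$. The first technical lemma to nail down is the \emph{substitution lemma}, ensuring this composition is well defined on $\sim_\lambda$-classes and associative; here one checks that $\beta$- and $\eta$-conversion are stable under substitution, which is exactly what makes $\sim_\lambda$ a congruence.

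Next I would equip $\cal{S}$ with cartesian closed structure directly from the term syntax. The product $T \times U$ is interpreted by the product type, with projections from $\pi_1, \pi_2$ and pairing from $<-,->$; the universal property of the product reduces precisely to the projection rules $\pi_1<t,u> \to_\beta t$ and $\pi_2<t,u> \to_\beta u$ together with surjective pairing $v \sim_\eta <\pi_1 v, \pi_2 v>$. The exponential $U \to T$ is the arrow type, and the natural isomorphism $\cal{S}(A \times U, T) \simeq \cal{S}(A, U \to T)$ of Definition \ref{def-cartesian-closed} is given in one direction by $\lambda$-abstraction and in the other by application; that these are mutually inverse is exactly $\beta$- and $\eta$-conversion, $(\lambda x.t)u \sim_\beta t[u/x]$ and $t \sim_\eta \lambda x.tx$. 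This establishes \emph{soundness}: $\cal{S}$ is a cartesian closed category and the evident inclusion sends each generator $c \in \Gamma$ to a morphism of the expected type.

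It then remains to verify the universal property. Given any cartesian closed category $\bf{D}$ and any interpretation $M$ of $\Gamma$ in $\bf{D}$, I would define a cartesian closed functor $\hat{M}: \cal{S} \to \bf{D}$ by structural recursion on terms: variables to projections, pairing and projection to the universal maps of Definition \ref{def-cartesian}, and abstraction and application to currying and evaluation. The substitution lemma again guarantees that $\hat{M}$ respects composition, and a routine induction over the typing rules \ref{lambda-rules} shows it is well defined on $\sim_\lambda$-classes, since each conversion rule holds as an equation in any cartesian closed category. Uniqueness is immediate: a cartesian closed functor is determined by its action on generators together with preservation of the product and exponential structure, and every morphism of $\cal{S}$ is built from generators by these operations. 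Thus $\cal{S}$ is the free cartesian closed category on $\Gamma$, whence $\cal{S} \simeq \bf{CCC}(\Gamma)$.

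The step I expect to be the main obstacle is the \emph{completeness} half hidden above, namely that two terms denoting the same morphism are already $\lambda$-convertible — equivalently, that $\hat{M}$ is faithful when $\bf{D} = \cal{S}$ and $M$ is the identity interpretation. The clean route is to appeal to confluence and strong normalisation of $\beta\eta$-reduction (the latter noted in the text for $\beta$), so that $\sim_\lambda$-classes admit canonical normal forms $\tt{nm}(t)$, and then to check that the equations generated by the categorical structure coincide with $\sim_\lambda$ on these normal forms. Managing variable binding and substitution coherently throughout — rather than any single categorical computation — is where the real care is needed; I would lean on the detailed bookkeeping lemmas of \cite{abramsky2010a} for this part.
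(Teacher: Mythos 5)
Your construction is correct and is essentially the same argument the paper defers to (the paper gives no proof body, citing Abramsky's notes, which proceed exactly as you do: the syntactic category of types and $\sim_\lambda$-classes of terms, cartesian closed structure from pairing/projection and abstraction/application with $\beta\eta$ supplying the triangle identities, then the universal property via structural recursion). One small remark: your final paragraph's ``completeness'' worry is already discharged by the universal-property route --- once the syntactic category is shown free, the equivalence with $\bf{CCC}(\Gamma)$ follows from uniqueness of free objects, so the appeal to confluence and strong normalisation of $\beta\eta$ is unnecessary (and would be delicate anyway, since confluence in the presence of surjective pairing requires care).
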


Note that a morphism $f : x \to y$ in $\bf{CCC}(\Gamma)$ may have several
equivalent representations as a lambda term in $\Lambda(\Gamma)$.
In the remainder of this section, by $f \in \bf{CCC}(\Gamma)$ we will mean
any such representation, and we will write explicitly $\tt{nm}(f)$ when
we want its normal form.

\subsection{Typed first-order logic}\label{section-typed-logic}

Montague used a blend of lambda calculus and logic which allows to compose the
logical meaning of individual words into the meaning of a sentence. For
example to intepret the sentence ``John walks'', Montague would assign to
``John'' the lambda term $J = \lambda x . \text{John}(x)$ and to ``walks'' the
term $W = \lambda \phi . \exists x \cdot \phi(x) \land \text{walks}(x)$ so that their
composition results in the closed logic formula
$\exists x \cdot \text{John}(x) \land \text{walks}(x)$.
Note that $x$ and $\phi$ above are symbols of different type, $x$ is a variable
and $\phi$ a proposition.
The aim of this section is to define a typed lambda calculus for manipulating
first-order logic formulae --- akin to \cite{lappin2004} and \cite{baral2012} ---
which will serve as codomain for Montague's mapping.

We start by recalling the basic notions of first-order logic (FOL).
A  \emph{FOL signature} $\Sigma = \cal{C} + \cal{F} + \cal{R}$ consist in a set
of constant symbol $a, b \in \cal{C}$, a set of function symbols $f, g \in \cal{F}$
and a set of relational symbols $R, S \in \cal{R}$ together
with a function $\tt{ar}: \cal{F} + \cal{R} \to \bb{N}$ assigning an arity to
functional and relational symbols.
The terms of first-order logic over $\Sigma$ are generated by the following
context-free grammar:
$$FT(\Sigma) \ni t \s ::= \s x \s\vert\s a \s\vert\s f(\vec{x})$$
where $x \in \cal{X}$ for some countable set of variables $\cal{X}$, $a \in \cal{C}$,
$f \in \cal{F}$ and $\vec{x} \in (\cal{X} \cup \cal{C})^{\tt{ar}(f)}$.
The set of first-order logic formulae over $\Sigma$ is defined by the following
context-free grammar:
\begin{equation}\label{def-fol}
    FOL(\Sigma) \ni \phi \s ::= \s \top \s\vert\s t \s\vert\s x = x' \s\vert\s R(\vec{x})
    \s\vert\s \phi \land \phi \s\vert\s \exists \ x \cdot \phi \s\vert\s \neg \phi
\end{equation}
where $t, x, x' \in FT(\Sigma)$, $R \in \cal{R}$,
$\vec{x} \in FT(\Sigma)^{\tt{ar}(R)}$ and $\top$ is the truth symbol.
Let us denote the variables of $\phi$ by $\tt{var}(\phi) \sub \cal{X}$ and
its free variables by $\tt{fv}(\phi) \sub \tt{var}(\phi)$.
For any formula $\phi$ and $\vec{x} \in FT(\Sigma)^\ast$, we denote by
$\phi(\vec{x})$ the formula obtained by substituting the terms $x_1 \dots x_n$
in place of the free variables of $\phi$ where $n = \size{\tt{fv}(\phi)}$.

We can now build a typed lambda calculus over the set of FOL formulae.

\begin{definition}[Typed first-order logic]
    Given a FOL signature $\Sigma = \cal{C} + \cal{R}$, we define the typed first
    order logic over $\Sigma$ as the free cartesian closed category
    $\bf{CCC}(\Gamma_\Sigma)$ where $\Gamma_\Sigma$ is the functional signature
    with basic types:
    $$ B  = \set{ X, P}_{n \in \bb{N}}$$
    where $X$ is the type of terms and $P$ is the type of propositions,
    and entries given by:
    $$ \Gamma_\Sigma = \set{\phi : P \s\vert\s \phi \in FOL(\Sigma) - FT(\Sigma)}
    + \set{x : X \s\vert\s x \in FT(\Sigma)} $$
\end{definition}

In order to recover a FOL formula from a lambda term $f : T$ in $\bf{CCC}(\Gamma_\Sigma)$,
we need to normalise it.

\begin{proposition}
    For any lambda term $f : P$ in $\bf{CCC}(\Gamma_\Sigma)$, the normal form
    is a first order logic formula $\tt{nm}(f) \in FOL(\Sigma)$.
\end{proposition}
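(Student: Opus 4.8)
The plan is to reduce the statement to a structural analysis of $\beta\eta$-long normal forms. By Proposition \ref{prop-lambda-ccc} the morphism $f : P$ is represented by a lambda term of $\Lambda(\Gamma_\Sigma)$, and by strong normalisation of $\to_\beta$ recalled above, together with confluence, its normal form $\tt{nm}(f)$ exists and is unique up to $\alpha$-equivalence. Since $\beta$-reduction preserves typing, $\tt{nm}(f)$ is again a closed term of type $P$. It therefore suffices to characterise the shape of typed $\beta$-normal forms over $\Gamma_\Sigma$ and to read off a first-order formula from one of type $P$.

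First I would fix the standard mutually inductive description of eta-long normal forms: a \emph{normal} term is either an abstraction $\lambda x. M$ (at a function type), a pair $\langle M_1, M_2 \rangle$ (at a product type), or a \emph{neutral} term (at a base type $X$ or $P$); a neutral term is a head $h$ (a variable or a constant of $\Gamma_\Sigma$) applied to a spine of normal arguments, possibly interleaved with projections $\pi_1, \pi_2$. The key is a strengthened claim, proved by simultaneous induction on this grammar, stated for terms in a context $\Gamma_\Sigma, x_1 : X, \dots, x_m : X$ extended only by variables of type $X$: every normal term of type $X$ denotes an element of $FT(\Sigma)$ whose free variables lie among $x_1, \dots, x_m$, and every normal term of type $P$ denotes an element of $FOL(\Sigma)$ with the same bound on its free variables.

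The inductive cases follow the generators of $\Gamma_\Sigma$. A neutral term of base type whose head is a bound $X$-variable is that variable, hence a term in $FT(\Sigma)$; a head that is a function symbol $f$ applied to normal arguments of type $X$ gives $f(\vec{x})$ by the induction hypothesis, and a relation symbol $R$ likewise yields an atomic formula $R(\vec{x})$, the substitution operation $\phi(\vec{x})$ keeping us inside $FOL(\Sigma)$. The logical constants are handled similarly: $\land$, $\neg$ and $\top$ applied to normal terms of type $P$ return compound formulas by closure of $FOL(\Sigma)$ under the connectives, while the existential constant, applied to a normal term of type $X \to P$ — which by eta-longness is an abstraction $\lambda x. M$ with $M$ normal of type $P$ in the context extended by $x : X$ — returns $\exists x \cdot \phi$, where $\phi$ is the formula obtained from $M$ by the induction hypothesis. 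Specialising to $f$ closed of type $P$, the context is empty, so $\tt{nm}(f)$ has no free variables and is an element of $FOL(\Sigma)$, as claimed.

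The main obstacle will be the quantifier/abstraction case, where the correspondence between lambda-bound variables of type $X$ and genuine first-order variables must be tracked carefully: this is precisely why the induction is strengthened to contexts carrying free $X$-variables, and why the eta-long (rather than merely $\beta$-) normal form is the right object, so that a subterm of type $X \to P$ is guaranteed to be an abstraction whose body can be quantified. A secondary point to verify is that application of a predicate, or of a formula with free variables, to term arguments realises first-order substitution $\phi(\vec{x})$ and so remains within the grammar (\ref{def-fol}); confluence then guarantees that the resulting formula is independent of the reduction strategy used to reach $\tt{nm}(f)$.
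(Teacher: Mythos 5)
Your argument cannot be compared against the paper's own proof for a simple reason: the paper leaves the proof environment for this proposition empty. Taken on its own terms, what you give is the standard adequacy proof for a higher-order-abstract-syntax encoding of first-order logic, and its skeleton is sound: strong normalisation plus confluence to get a well-defined $\mathtt{nm}(f)$ of the same type, mutual induction on $\eta$-long normal and neutral forms, the strengthened statement over contexts $\Gamma_\Sigma, x_1 : X, \dots, x_m : X$ extended only by $X$-variables, and $\eta$-longness forcing the argument of the existential constant to be an abstraction $\lambda x. M$ so that the quantifier case goes through. For the signature you are implicitly using, this is correct and complete.

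The genuine gap is that this is not the signature the paper defines. Your inductive cases claim to ``follow the generators of $\Gamma_\Sigma$'' and treat function symbols $f : X^{\mathtt{ar}(f)} \to X$, relation symbols $R : X^{\mathtt{ar}(R)} \to P$, the connectives $\land, \neg, \top$, and an existential constant of type $(X \to P) \to P$ as the constants. But the paper's $\Gamma_\Sigma$ consists of \emph{whole formulas} $\phi : P$ for every $\phi \in FOL(\Sigma) - FT(\Sigma)$ and \emph{whole terms} $t : X$ for every $t \in FT(\Sigma)$ --- all constants of base type, with no higher-order constants at all. Under that literal definition your case analysis collapses degenerately: a closed $\beta$-normal term of base type $P$ over a base-type-only signature can only be a constant, so the proposition is trivially true and none of your connective or quantifier cases ever arise. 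Worse, the ``secondary point'' you defer --- that applying a formula with free variables to term arguments realises the substitution $\phi(\vec{x})$ --- is precisely what fails there: $\beta$-reduction substitutes only for $\lambda$-bound variables and cannot reach inside a constant, and $\lambda$-abstraction cannot bind the FOL variables occurring in a formula-constant, so $(\lambda x.\, \mathrm{John}(x))\,\mathrm{Rome}$ reduces to the constant $\mathrm{John}(x)$, not to $\mathrm{John}(\mathrm{Rome})$. So your proof is right for the signature the paper evidently \emph{intends} (its own Montague examples assign types like $(X \to P) \to ((X \to P) \to P)$), but it silently repairs the definition of $\Gamma_\Sigma$ rather than proving the proposition as stated. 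To close the gap you should either state the HOAS-style signature explicitly as a corrected $\Gamma_\Sigma$ and then run your induction, or prove the trivial statement for the literal $\Gamma_\Sigma$ and observe that it cannot support the substitution behaviour the surrounding section relies on.
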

\begin{proof}

\end{proof}

Note that morphisms $\phi : P$ in $\bf{CCC}(\Gamma_\Sigma)$ are the same as
first-order logic formulae $\phi \in FOL(\Sigma)$, since we have adopted the convention
that a morphism in $\bf{CCC}(\Gamma)$ is the normal form of its representation as
a lambda term in $\Lambda(\Gamma)$.

\begin{example}
    Take $\Sigma = \set{\text{John}, \text{Mary}, \text{walks}, \text{loves}}$
    with $\tt{ar}(\text{John}) = \tt{ar}(\text{Mary}) = \tt{ar}(\text{walks}) = 1$
    and $\tt{ar}(\text{loves}) = 2$. Then the following are examples of well-typed
    lambda expressions in $\bf{CCC}(\Gamma_\Sigma)$:
    $$ \lambda \phi . \exists x \cdot \phi(x) \land \text{John}(x)\, :\, P \to P
    \qquad \lambda x \lambda y. \text{loves}(x, y)\, :\, X \times X \to P$$
\end{example}

A model for first-order logic formulae is defined as follows.

\begin{definition}[FOL model]
        A model $K$ over a FOL signature $\Sigma$, also called $\Sigma$-model,
        is given by a set $U$ called the \emph{universe} and
        an interpretation $K(R) \sub U^{\tt{ar}(R)}$ for every relational symbol
        $R \in \cal{R}$ and $K(a) \in U$ for every constant symbol $c \in \cal{C}$.
        We denote by $\cal{M}_\Sigma$ the set of $\Sigma$-models.
\end{definition}

Given a model $K \in \cal{M}_\Sigma$ with universe $U$, let
$$\tt{eval}(\phi, K) = \set{v \in U^{\tt{fv}(\phi)} \s \vert \s  (K, v) \vDash \phi}$$
where the satisfaction relation $(\vDash)$ is defined by induction over
(\ref{def-fol}) in the usual way.

\begin{proposition}
    Any model $K \in \cal{M}_\Sigma$ induces a monoidal functor
    $F_K: \bf{CCC}(\Gamma_\Sigma) \to \bf{Set}$ such that closed lambda terms
    $\phi : F_0$ are mapped to their truth value in $K$.
\end{proposition}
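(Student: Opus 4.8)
The plan is to define $F_K$ by appealing to the universal property of the free cartesian closed category. By Proposition \ref{prop-lambda-ccc}, specifying a product- and exponential-preserving functor out of $\bf{CCC}(\Gamma_\Sigma)$ is the same as giving an interpretation of the generating types and terms in the target CCC $\bf{Set}$ that is invariant under $\lambda$-conversion. First I would fix the action on basic types: set $F_K(X) = U$, the universe of $K$, and $F_K(P) = 2 = \{0, 1\}$, the set of truth values (the subobject classifier of $\bf{Set}$). This extends uniquely to all of $TY(B)$ by demanding that $F_K$ preserve the CCC structure, $F_K(T \otimes U) = F_K(T) \times F_K(U)$ and $F_K(T \to U) = F_K(U)^{F_K(T)}$. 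Monoidality is then immediate, since $F_K$ preserves $\times$ on the nose.

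Second, I would use the model $K$ to fix the action on generators. Each generating term $t \in FT(\Sigma)$ is interpreted as a value in $U$, with a constant $a \in \cal{C}$ sent to $K(a)$. Each generating formula $\phi \in FOL(\Sigma)$, although typed $\phi : P$, carries $n = \size{\tt{fv}(\phi)}$ free variables and is read in context as a morphism $X^{\otimes n} \to P$; I would send it under $F_K$ to the characteristic function of its evaluation set $\tt{eval}(\phi, K) \sub U^n$. When $\phi$ is closed, $n = 0$ and this becomes a map $1 \to 2$, i.e. the truth value of $\phi$ in $K$ — exactly the assignment the statement requires.

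The extension to a genuine functor is then automatic from the universal property, but it rests on one soundness check: that $\lambda$-equivalent terms receive equal interpretations. This holds because the data above is precisely the standard set-theoretic semantics of the simply typed lambda calculus, and $\bf{Set}$, being a CCC, validates $\beta\eta$-conversion; invoking Proposition \ref{prop-lambda-ccc} repackages this as the claim that the interpretation factors through $\Lambda(\Gamma_\Sigma)/\sim_\lambda \simeq \bf{CCC}(\Gamma_\Sigma)$. I would also record the normalisation step here: since every $f : P$ has a FOL normal form $\tt{nm}(f) \in FOL(\Sigma)$ with $f \sim_\lambda \tt{nm}(f)$, it suffices to interpret formulas in normal form and verify $F_K(f) = F_K(\tt{nm}(f))$.

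The main obstacle is verifying the truth-value clause coherently, i.e. that the functorial reading of a FOL formula agrees with Tarskian satisfaction. This is a compositionality lemma proved by induction over the grammar (\ref{def-fol}): conjunction must go to intersection of satisfying-assignment sets, negation to complement inside the appropriate $U^n$, and the existential quantifier to the image under the projection $U^{n+1} \to U^n$, each clause matching the recursive definition of $(\vDash)$. The delicate bookkeeping lies in tracking free variables and substitution across the two layers — the FOL connectives internal to the generators and the lambda-level application and abstraction on top — and in checking that collapsing $U^0$ to a point sends $\tt{eval}(\phi, K)$ to the correct element of $2$. Once this induction is in place, the proposition follows.
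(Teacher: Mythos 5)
Your overall strategy---universal property plus an interpretation of the generators---is the same as the paper's, but your choice of interpretation does not type-check against the signature $\Gamma_\Sigma$ as the paper defines it, and this is a genuine gap rather than a cosmetic difference. In $\Gamma_\Sigma$, \emph{every} formula $\phi \in FOL(\Sigma)$, open or closed, is a generator of type $P$ (i.e.\ a global element $1 \to P$ of the free cartesian closed category), and every term in $FT(\Sigma)$---including the bare variables $x \in \cal{X}$---is a generator of type $X$. The universal property therefore demands one element of $F_K(P)$ for each formula and one element of $F_K(X)$ for each term. With your assignment $F_K(P) = 2$ and $F_K(X) = U$, two things fail. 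First, an open formula such as $R(x)$ must receive a single truth value, destroying the information about which valuations satisfy it; your fix---``read $\phi$ in context as a morphism $X^{\otimes n} \to P$''---silently re-types the generator, which the universal property does not permit (that would be a different, formulas-in-context signature, not the one this proposition is about). Second, a variable generator $x : X$ has no canonical denotation in $U$; your clause only covers constants $a \mapsto K(a)$.

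The paper dissolves both problems by choosing the semantic objects differently: it sets $F_K(X) = 1$, so every term is sent to the unique point and variables need no denotation, and $F_K(P) = \coprod_{n=0}^{N} \cal{P}(U^n)$ (with $N$ the maximal arity in $\Sigma$), so each generator $\phi : P$ can be sent to its entire evaluation table $\tt{eval}(\phi, K) \sub U^{\tt{fv}(\phi)}$. A closed formula then lands in $\cal{P}(U^0) \cong \bb{B}$, which is exactly the truth-value clause of the statement, holding by definition of $\tt{eval}$ with no induction needed. Your concluding compositionality lemma is likewise unnecessary in this setup: the FOL connectives are not operations in the free CCC ($\phi$, $\psi$ and $\phi \land \psi$ are three independent generators of type $P$), so Tarskian satisfaction is packaged inside $\tt{eval}$ once and for all rather than verified clause by clause against the functorial structure. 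Your route is the standard hyperdoctrine-style semantics and would be correct for a signature that types a formula with $n$ free variables as an arrow $X^{\otimes n} \to P$, but for the signature actually at hand it both overcomplicates the proof and fails at the generators.
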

\begin{proof}
    By the universal property of free cartesian closed categories, it is sufficient
    to define $F_K$ on generating objects and arrows. On objects we define:
    $$F_K(X) = 1 \qquad F_K(P) = \coprod_{n=0}^{N}\cal{P}(U^n)$$
    where $N$ is the maximum arity of a symbol in $\Sigma$ and $\cal{P}(U^n)$
    is the powerset of $U^n$.
    On generating arrows $\phi : P$ and $x: X$ in $\Gamma$, $F_k$ is defined by:
    $$F_K(\phi) = \tt{eval}(\phi, K) \in F_K(P) \qquad F_K(x) = 1$$
\end{proof}

\subsection{Montague semantics}\label{section-montague}

We model Montague semantics as a pair of functors
$G \to \bf{CCC}(\Gamma_\Sigma) \to \bf{Set}$ for a grammar $G$.
We have already seen that functors  $\bf{CCC}(\Gamma_\Sigma) \to \bf{Set}$ can
be built from $\Sigma$-models or relational databases.
It remains to study functors $G \to \bf{CCC}(\Gamma_\Sigma)$,
which we call \emph{Montague models}.

\begin{definition}[Montague model]
    A Montague model is a monoidal functor $M: G \to \bf{CCC}(\Gamma_\Sigma)$
    for a biclosed grammar $G$ and a FOL signature $\Sigma$ such that $M(s) = P$
    and $M(w) = 1$ for any $w \in V \sub G_0$.
    The semantics of a grammatical sentence $g : u \to s$ in $\bf{BC}(G)$ is the
    first order logic formula $\tt{nm}(M(w)) \in FOL(\Sigma)$ obtained by
    normalising the lambda term $M(w) : P$ in $\bf{CCC}(\Gamma_\Sigma)$.
\end{definition}
\begin{remark}
    Note that since $\bf{CCC}(\Gamma_\Sigma)$ is biclosed, it is sufficient to
    define the image of the lexical entries in $G$ to obtain a Montague model.
    The structural morphisms $\tt{app}$, $\tt{comp}$ defined in \ref{section-lambek}
    have a canonical intepretation in any cartesian closed category given by:
    $$\tt{app}_{A, B} \mapsto \lambda f. (\pi_2f)(\pi_1f): (A \times (A \to B)) \to B$$
    $$\tt{comp}_{A, B, C} \mapsto \lambda f \lambda a . (\pi_2f)(\pi_1f)a : ((A \to B)
    \times (B \to C)) \to (A \to C)$$
\end{remark}
\begin{example}
    Let us consider a Lambek grammar $G$ with basic types $B = \set{n, n', s}$
    and lexicon given by:
    $$\Delta(\text{All}) = \set{s/(n\backslash s)/n'} \quad \Delta(roads) = \set{n'}
    \quad \Delta(\text{lead to}) = \set{(n\backslash s) / n} \quad
    \Delta(\text{Rome}) = \set{n}$$
    The following is a grammatical sentence in $\cal{L}(G)$:
    \begin{equation*}
        \tikzfig{figures/montague-all-roads}
    \end{equation*}
    We fix a FOL signature $\Sigma = \cal{R} + \cal{C}$ with one constant
    symbol $\cal{C} =\set{\text{Rome}}$ and two relational
    symbols $\cal{R} = \set{\text{road}, \text{leads-to}}$ with arity $1$ and $2$
    respectively. We can now define a Montague model $M$ on objects by:
    $$ M(n) = X \quad M(n') = X \to P \quad M(s) = P$$
    and on arrows by:
    $$M(\text{All}) = \lambda \phi \lambda \psi . \forall x (\phi(x) \implies \psi(x)):
    (X \to P) \to ((X \to P) \to P)$$
    $$M(\text{roads}) = \lambda x. \text{road}(x) : X \to P \qquad
    M(\text{Rome}) = \text{Rome} : X$$
    $$M(\text{lead to}) = \lambda x \lambda y. \text{leads-to}(x, y) : X \to (X \to P)$$
    Then the image of the sentence above normalises to the following lambda term:
    $$ M(\text{All roads lead to Rome} \to s) = \forall x \cdot (\text{road}(x)
    \implies \text{leads-to}(x, \text{Rome}))$$
\end{example}

We are interested in the problem of computing the Montague semantics of sentences
generated by a biclosed grammar $G$.

\begin{definition}
    \begin{problem}
    \problemtitle{$\tt{LogicalForm}(G)$}
    \probleminput{$g: u \to s$, $\Sigma$, $M: G \to \bf{CCC}(\Gamma_\Sigma)$}
    \problemoutput{$\tt{nm}(M(g)) \in FOL(\Sigma)$}
  \end{problem}
\end{definition}

We may split the above problem in two steps. First, given the sentence $g: u \to s$
we are asked to produce the corresponding lambda term $M(g) \in \Lambda(\Gamma_\Sigma)$.
Second, we are asked to normalise this lambda term in order to obtain the underlying
first-order logic formula in $FOL(\Sigma)$. The first step is easy and may in
fact be done in $\tt{L}$ since it is an example of a functorial reduction, see
Proposition \ref{prop-functorial-reduction}. The second step may however be
exceptionally hard. Indeed, Statman showed that deciding $\beta$ equivalence between
simply-typed lambda terms is not elementary recursive \cite{statman1979}.
As a consequence, the normalisation of lambda terms is itself not elementary recursive. We may moreover show that any simply typed lambda term can be
obtained in the image of some
$M: G \to \bf{CCC}(\Gamma_\Sigma)$, yielding the following result.

\begin{proposition}\label{prop-elementary}
    There are biclosed grammars $G$ such that $\tt{LogicalForm}(G)$ is not
    elementary recursive.
\end{proposition}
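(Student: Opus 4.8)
The plan is to exhibit an elementary (indeed logspace) reduction from the normalisation of simply typed lambda terms to $\tt{LogicalForm}(G)$, so that an elementary algorithm for the latter would contradict Statman's theorem \cite{statman1979}. Recall from Proposition \ref{prop-lambda-ccc} that morphisms of $\bf{CCC}(\Gamma_\Sigma)$ are exactly the typed lambda terms over $\Gamma_\Sigma$ modulo $\sim_\lambda$, and that $\tt{nm}$ extracts their normal forms; Statman shows that computing this normal form is not elementary recursive. The only constraint imposed by the Montague setting is the type discipline: since $M(u) = 1$ for any utterance $u \in V^\ast$ and $M(s) = P$, the image $M(g)$ of a derivation $g : u \to s$ is always a closed term $1 \to P$, whose normal form is a formula in $FOL(\Sigma)$. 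The heart of the argument is therefore to show that the hard instances of normalisation can be made to live among such closed terms of type $P$, and that each of them is realised as $M(g)$ for a single fixed grammar $G$.

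First I would fix the trivial lexicalised grammar $G$ with one word $V = \set{w}$, basic types $B = \set{s}$, and a single lexical entry $\ell : w \to s$; its only grammatical derivation is the lexical insertion $g = \ell : w \to s$. By the universal property of the free biclosed category (the adjunction $\bf{BC} \dashv U$), and since $\bf{CCC}(\Gamma_\Sigma)$ is biclosed, a Montague model $M : \bf{BC}(G) \to \bf{CCC}(\Gamma_\Sigma)$ is determined by its values on objects, $M(w) = 1$ and $M(s) = P$, together with a single closed term $M(\ell) : 1 \to P$. Thus $M(g) = M(\ell)$ may be \emph{any} closed lambda term of type $P$ over $\Gamma_\Sigma$, and the instance $(g, \Sigma, M)$ of $\tt{LogicalForm}(G)$ asks precisely for $\tt{nm}(M(\ell))$. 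Encoding such an instance costs size linear in that of the chosen term, so the reduction is elementary.

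Next I would supply an explicit family of closed terms of type $P$ whose normalisation is not elementary recursive. Using the standard simply typed numerals $\bar{n} = \lambda f \lambda x . f^{n} x$ of type $(P \to P) \to (P \to P)$ together with the higher-order self-applications $\bar{2}\,\bar{2}\cdots\bar{2}$ computing towers of exponentials, one obtains for each $k$ a term $N_k$ of size $O(k)$ whose normal form is the numeral for $2 \uparrow\uparrow k$. Choosing a generator $c : P$ (any atomic formula, e.g. $\top$) and a morphism $\nu : P \to P$ available in the typed first-order calculus (for instance negation, as used in the example of \ref{section-typed-logic}), the closed term $t_k = N_k\, \nu\, c : P$ normalises to $\tt{nm}(t_k) = \nu^{(2\uparrow\uparrow k)} c \in FOL(\Sigma)$, a formula of non-elementary size in $k$. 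Hence no elementary algorithm can even output $\tt{nm}(M(\ell))$ when $M(\ell) = t_k$; letting $k$ grow therefore makes $\tt{LogicalForm}(G)$ not elementary recursive. (Alternatively one may fix $M$ and $g$ carrying the tower machinery in the lexicon and vary the sentence instead, which is the sense in which \emph{any} simply typed lambda term is obtained in the image of $M$.)

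The main obstacle is the second step rather than the first: reducing \emph{arbitrary} simply typed normalisation to normalisation of closed terms of the single type $P$. I would handle this through the blow-up family above, which certifies non-elementary hardness directly at ground type by an output-size argument, thereby sidestepping the delicate problem of reducing higher-type $\beta\eta$-equivalence to ground type. The one point requiring care is to verify that the iterator $\nu : P \to P$ is a genuine morphism of $\bf{CCC}(\Gamma_\Sigma)$ rather than a mere formula-former; this holds because the typed first-order logic of \ref{section-typed-logic} provides the propositional connectives as operations on $P$, so $\nu = \neg$ qualifies. The complementary fact that the syntactic first step, producing $M(g)$ from $g$, is only an easy functorial reduction lying in $\tt{L}$ by Proposition \ref{prop-functorial-reduction} confirms that all the intractability is concentrated in the normalisation step, as required.
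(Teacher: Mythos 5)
Your proof is correct, but it reaches the result by a genuinely different route than the paper. The paper crafts a non-trivial grammar $G$ with a single generating object and structural generators $\tt{swap}$, $\tt{copy}$, $\tt{discard}$, argues that this makes the induced functor $\bf{BC}(G) \to \bf{CCC}$ \emph{full}, and then locates the hard instances in the \emph{derivations} $g$: every simply typed lambda term arises as $F(f)$ for some $f \in \bf{BC}(G)$, so Statman's theorem on the non-elementary complexity of normalisation transfers directly to $\tt{LogicalForm}(G)$. You instead take the trivial one-word grammar and locate the hard instances in the \emph{functor} $M$ — legitimate, since $M$ is explicitly part of the input to $\tt{LogicalForm}(G)$ — and replace Statman's theorem by a self-contained output-size argument: Church-numeral towers applied to $\neg : P \to P$ and $\top : P$ yield closed terms $t_k$ of type $P$ whose normal forms $\neg^{(2\uparrow\uparrow k)}\top$ are non-elementarily large, so no elementary algorithm can even write the output. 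Your route buys two things the paper's does not: it avoids the paper's unproven fullness claim for $\bf{BC}(G) \to \bf{CCC}$, and it sidesteps the delicate issue (which the paper glosses over) of getting the hard normalisation instances down to the ground type $P$ where the output is guaranteed to be a FOL formula. Conversely, the paper's version makes the stronger linguistic point that hardness can live in the sentence structure itself rather than in the lexical semantics, and the full-functor observation has independent interest. Two small caveats on your write-up: the availability of $\neg$ as a morphism $P \to P$ rests on reading the (loosely specified) signature $\Gamma_\Sigma$ as providing connectives as operations rather than only closed formulas as constants — the paper's own example $\lambda \phi . \exists x \cdot \phi(x) \land \text{John}(x) : P \to P$ licenses this reading, but it is worth stating as an assumption; and the size bound $O(k)$ for $N_k$ holds Curry-style, while Church-style type annotations inflate it to roughly $O(2^k)$, which does not affect the non-elementary conclusion since $2\uparrow\uparrow k$ remains non-elementary in $2^k$.
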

\begin{proof}
    Let $G$ be a biclosed grammar with one generating object $a$
    generating arrows $\tt{swap}: a \otimes a \to a \otimes a$,
    $\tt{copy}: a \to a \otimes a$ and $\tt{discard}: a \to 1$.
    These generators map canonically in any cartesian category. In fact, there
    is a full functor $F: \bf{BC}(G) \to \bf{CCC}$, since all the structural morphisms of cartesian closed categories can be represented in $\bf{BC}(G)$.
    Therefore for any morphism $h \in \bf{CCC}$ there is a morphism
    $f \in \bf{BC}(G)$  such that $F(f) = h$.
    Then for any typed lambda term $g \in \Lambda(\varnothing)$ there is a morphism
    $f \in \bf{BC}(G)$ such that $F(f)$ maps to $g$ under the translation
    \ref{prop-lambda-ccc}. Therefore normalising $g$ is equivalent to computing
    $\tt{nm}(F(f)) = \tt{LogicalForm}(G)$.
    Therefore the problem of normalising lambda terms
    reduces to $\tt{LogicalForm}$.
\end{proof}

Of course, this proposition is about the worst case scenario, and definitely
not about human language. In fact, small to medium scale semantic parsers exist
which translate natural language sentences into their logical form using Montague's
translation \cite{krishnamurthy2014, artzi2014}.
It would be interesting to show that for restricted choices of grammar $G$
(e.g. Lambek or Combinatory grammars), and possibly assuming that the order of
the lambda terms assigned to words is bounded (as in \cite{terui2012}),
$\tt{LogicalForm}$ becomes tractable.

Even if we are able to extract a logical formula efficiently
from natural language, we need to be able to evaluate it in a database in order
to compute its truth value. Thus, in order to compute the full-blown montague
semantics we need to solve the following problem.

\begin{definition}
    \begin{problem}
    \problemtitle{$\tt{Montague}(G)$}
    \probleminput{$g: u \to s$, $\Sigma$, $M: G \to \bf{CCC}(\Gamma_\Sigma)$,
                   $K \in \cal{M}_\Sigma$}
    \problemoutput{$F_K(M(g)) \sub U^{\tt{fv}(M(g))}$}
  \end{problem}
\end{definition}

Note that solving this problem does not necessarily require to solve
$\tt{LogicalForm}$. However, since we are dealing with first-order logic,
the problem for a general biclosed grammar is at least as hard as the
evaluation of FOL formulae.

\begin{proposition}\label{prop-pspace}
    There are biclosed grammars $G$ such that $\tt{Montague}$ is $\tt{PSPACE}$-hard.
\end{proposition}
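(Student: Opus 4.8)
The plan is to reduce the satisfiability problem for quantified Boolean formulae ($\tt{TQBF}$), the canonical $\tt{PSPACE}$-complete problem, to $\tt{Montague}(G)$. Whereas Proposition \ref{prop-elementary} extracts hardness from the \emph{normalisation} step $G \to \bf{CCC}(\Gamma_\Sigma)$, here I would instead exploit the \emph{evaluation} step $\bf{CCC}(\Gamma_\Sigma) \to \bf{Set}$: evaluating a first-order formula in a finite model is already $\tt{PSPACE}$-hard, even for a formula presented directly with no normalisation blow-up. Given a closed QBF $\Phi = Q_1 x_1 \cdots Q_n x_n\, \psi$ with $Q_i \in \set{\exists, \forall}$ and $\psi$ a Boolean combination of the $x_i$, I would build a $\Sigma$-model $K$ with two-element universe $U = \set{0, 1}$ and a single constant $c$ interpreted as $1$, together with a sentence $\phi_\Phi \in FOL(\Sigma)$ obtained from $\Phi$ by letting each $x_i$ range over $U$, rendering the atom ``$x_i$ true'' as $x_i = c$, and translating $\land, \neg, \exists$ directly (and $\lor, \forall$ by De Morgan and $\neg \exists \neg$) within the grammar (\ref{def-fol}). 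By construction $K \vDash \phi_\Phi$ iff $\Phi$ is true.

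Next I would realise $\phi_\Phi$ as the image of a grammatical derivation. Fix once and for all the biclosed grammar $G$ of Proposition \ref{prop-elementary} augmented with finitely many generators that the \emph{input} Montague model $M$ interprets as the first-order connectives $\land, \neg$, the existential quantifier of type $(X \to P) \to P$, equality of type $X \times X \to P$, and the constant $c : X$; the bound variables $x_1, \dots, x_n$ are handled not by further generators but by the cartesian structure ($\tt{copy}$, $\tt{discard}$, $\tt{swap}$ together with products and projections), so that $G$ stays finite and independent of $\Phi$. As in Proposition \ref{prop-elementary}, the internal language of cartesian closed categories then renders the induced functor full, so every sentence of $FOL(\Sigma)$ of the above shape is realised as $M(g_\Phi)$ for some derivation $g_\Phi : u \to s$ with $M(s) = P$ and $M(w) = 1$ on words. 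The output of the problem is $F_K(M(g_\Phi)) = \tt{eval}(\phi_\Phi, K) \sub U^{\tt{fv}(\phi_\Phi)}$; since $\phi_\Phi$ is closed, $\tt{fv}(\phi_\Phi) = \varnothing$ and this set is the singleton $U^0$ when $\Phi$ is true and $\varnothing$ otherwise, so the nonemptiness of the output decides $\tt{TQBF}$.

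Finally I would check that the reduction is efficient. The map $\Phi \mapsto (\phi_\Phi, K, M)$ is a direct syntactic rewriting of size polynomial in $\size{\Phi}$, and the passage $\phi_\Phi \mapsto g_\Phi$ is a functorial reduction, hence computable in $\tt{L}$ by Proposition \ref{prop-functorial-reduction}; crucially, since $F_K$ is a genuine functor on $\bf{CCC}(\Gamma_\Sigma)$ it identifies each morphism with its $\lambda$-normal form, so $F_K(M(g_\Phi)) = F_K(\tt{nm}(M(g_\Phi)))$ and the reduction never touches the potentially non-elementary normal form. The main obstacle I anticipate is the bookkeeping of the previous paragraph: verifying that the fixed finite set of generators, threaded through the cartesian closed internal language, assembles the full quantifier-and-connective structure of $\phi_\Phi$ as a derivation of size polynomial in $\size{\Phi}$ while respecting the Montague constraints, so that no hidden blow-up defeats the $\tt{L}$ bound. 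Since only $\tt{PSPACE}$-hardness is claimed, membership need not be addressed---and indeed, in full generality the problem may be strictly harder, as $M(g)$ can grow non-elementarily upon normalisation.
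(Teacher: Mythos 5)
Your proof is correct in substance, but it takes a genuinely different and more ambitious route than the paper's. The paper's proof is a one-liner: since the Montague model $M$ is part of the input to $\tt{Montague}$, it takes $G$ to be a grammar with a \emph{single} word $w$ of type $s$ and sets $M(w) = \phi$ --- legitimate because every FOL formula $\phi$ is literally a generator of type $P$ in $\Gamma_\Sigma$ --- so that $F_K(M(w)) = \tt{eval}(\phi, K)$, and $\tt{PSPACE}$-hardness follows immediately from Vardi's result \cite{vardi1982} on evaluating first-order formulae, with no fullness argument, no $\tt{TQBF}$ encoding, and no derivation-size bookkeeping. Your reduction instead fixes $G$, $M$ and even the two-element structure $K$ once and for all and varies only the sentence $g_\Phi$; this proves a strictly stronger statement (hardness in the \emph{expression} complexity of the sentence alone, rather than in the combined input including $M$ and $K$), which the paper's trick cannot deliver since it stuffs the whole formula into the lexicon. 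The price is exactly the bookkeeping you flag: your realisability step inherits the informal fullness claim of Proposition \ref{prop-elementary}, and it additionally treats $\land$, $\neg$, $\exists$, $=$ as reusable operations of $\bf{CCC}(\Gamma_\Sigma)$, which are not literally entries of $\Gamma_\Sigma$ as defined (only whole formulae $\phi : P$ and terms $x : X$ are), though the paper's own examples of Montague lexica use them in the same way, so you are no less rigorous than the source. One small correction: Proposition \ref{prop-functorial-reduction} concerns applying a given functor to a given diagram, not constructing the derivation $g_\Phi$ from $\Phi$, so it is not the right citation for that step --- but a direct structural recursion on $\Phi$ builds $g_\Phi$ in polynomial time, which is all that $\tt{PSPACE}$-hardness requires.
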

\begin{proof}
    It was shown by Vardi that the problem of evaluating $\tt{eval}(\phi, K)$
    for a first-order logic formula $\phi$ and a model $K$ is $\tt{PSPACE}$-hard
    \cite{vardi1982}.
    Reducing this problem to $\tt{Montague}$ is easy. Take $G$ to be a categorial
    grammar with a single word $w$ of type $s$. Given any first-order logic formula $\phi$
    and model $K$, we define $M: G \to \bf{CCC}(\Gamma_\Sigma)$ by $M(s) = P$ and
    $M(w) = \phi$. Then we have that the desired output $F_K(M(w)) = F_K(\phi) =
    \tt{eval}(\phi, K)$ is the evaluation of $\phi$ in $K$.
\end{proof}

It would be interesting to look at restrictions of this problem, such
as fixing the Montague model $M$, fixing the input sentence $g: u \to s$ and
restricting the allowed grammars $G$. In general however, the proposition above shows
that working with full-blown first-order logic is inpractical for large-scale
industrial NLP applications. It also shows that models of this kind are the most
general and applicable if we can find a compromise between tractability and
logical expressivity.

\subsection{Montague in DisCoPy}\label{section-montague-discopy}

We implement Montague models as DisCoPy functors from \py{biclosed.Diagram}
into the class \py{Function}, as introduced in \ref{sec-closed} and
\ref{sec-concrete} respectively.
Before we can do this, we need to upgrade the \py{Function} class
to account for its biclosed structure. The only additional methods we need are
\py{curry} and \py{uncurry}.

\begin{python}\label{listing:Function.curry}
{\normalfont Curry and UnCurry methods for \py{Function}.}

\begin{minted}{python}
class Function:
    ...
    def curry(self, n_wires=1, left=False):
            if not left:
                dom = self.dom[:-n_wires]
                cod = self.cod << self.dom[-n_wires:]
                inside = lambda *xl: (lambda *xr: self.inside(*(xl + xr)),)
                return Function(inside, dom, cod)
            else:
                dom = self.dom[n_wires:]
                cod = self.dom[:n_wires] >> self.cod
                inside = lambda *xl: (lambda *xr: self.inside(*(xl + xr)),)
                return Function(inside, dom, cod)

    def uncurry(self):
        if isinstance(self.cod, Over):
            left, right = self.cod.left, self.cod.right
            cod = left
            dom = self.dom @ right
            inside = lambda *xs: self.inside(*xs[:len(self.dom)])[0](*xs[len(self.dom):])
            return Function(inside, dom, cod)
        elif isinstance(self.cod, Under):
            left, right = self.cod.left, self.cod.right
            cod = right
            dom = left @ self.dom
            inside = lambda *xs: self.inside(*xs[len(left):])[0](*xs[:len(left)])
            return Function(inside, dom, cod)
        return self
\end{minted}
\end{python}

We can now map biclosed diagrams into functions using \py{biclosed.Functor}.
We start by initialising a sentence with its categorial grammar parse.

\begin{python}\label{listing:montague.example}
{\normalfont Example parse from a categorial grammar}

\begin{minted}{python}
from discopy.biclosed import Ty, Id, Box, Curry, UnCurry

N, S = Ty('N'), Ty('S')
two, three, five = Box('two', Ty(), N), Box('three', Ty(), N), Box('five', Ty(), N)
plus, is_ = Box('plus', Ty(), N >> (N << N)), Box('is', Ty(), N >> S << N)

FA = lambda a, b: UnCurry(Id(a >> b))
BA = lambda a, b: UnCurry(Id(b << a))

sentence = two @ plus @ three @ is_ @ five
grammar = FA(N, N << N) @ Id(N) @ BA(N, N >> S) >> BA(N, N) @ Id(N >> S) >> FA(N, S)
sentence = sentence >> grammar
sentence.draw()
\end{minted}
\begin{center}
    \includegraphics[scale=0.40]{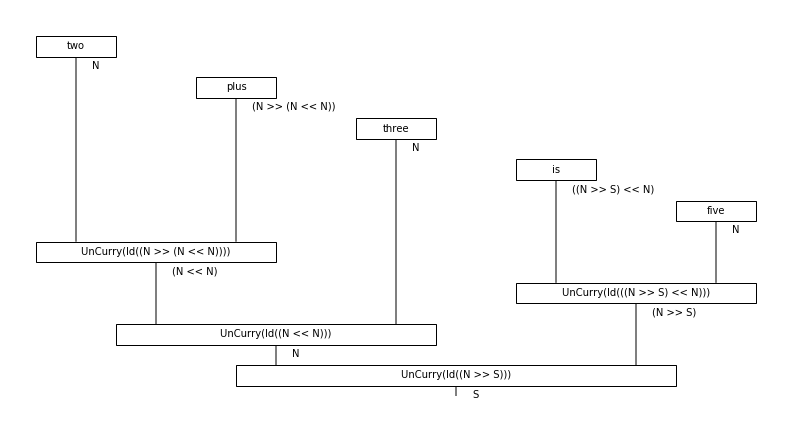}
\end{center}
\end{python}

We can now evaluate this sentence in a Montague model defined as a biclosed functor.

\begin{python}\label{listing:Function.curry}
{\normalfont Evaluating a sentence in a Montague model.}

\begin{minted}{python}
from discopy.biclosed import Functor

number = lambda y: Function(lambda: (y, ), Ty(), N)
add = Function(lambda x, y: (x + y,), N @ N, N)
equals = Function(lambda x, y: (x == y, ), N @ N, S)

ob = lambda x: x
ar = {two: number(2), three: number(3), five: number(5),
      is_: equals.curry().curry(left=True),
      plus: add.curry().curry(left=True)}
Montague = Functor(ob, ar, ob_factory=Ty, ar_factory=Function)
assert Montague(sentence)() == (True,)
\end{minted}
\end{python}

\section{Neural network models}\label{sec-neural-net}

In the last couple of decades, neural networks have become ubiquitous in
natural language processing.
They have been used successfully in a wide range of tasks including language
modelling \cite{demulder2015}, machine translation \cite{bahdanau2014, popel2020},
parsing \cite{jurafsky2008}, question answering \cite{zhou2017} and
sentiment analysis \cite{socher2013a, kalchbrenner2014}.

In this section we show that neural network models can be formalised as functors from
a grammar $G$ to the category $\bf{Set}_\bb{R}$ of Euclidean spaces and functions
via a category $\bf{NN}$ of neural network architectures. This includes feed-forward,
recurrent and recursive neural networks and we discuss how the recent
attention mechanisms could be formalised in this diagrammatic framework.
At each step, we show how these neural architectures are used to solve concrete
tasks such as sentence classification, language modelling, sentiment analysis and
machine translation. We keep a level of informality in describing the learning
process, an aspect which we will further explore in Chapter 3.
We end the section by building an interface between DisCoPy and Tensorflow/Keras
neural networks \cite{chollet2015keras, tensorflow2015-whitepaper}, by defining
a class \py{Network} that allows for composing and tensoring Keras models.

\subsection{Feed-forward networks}

The great advantage of neural networks comes from their ability
to simulate any function on Euclidean spaces, a series of results
known as universal approximation theorems \cite{tikk2003, ohn2019}.
We will thus give them semantics in the category $\bf{Set}_\bb{R}$ where morphisms
are functions acting on Euclidean spaces.
Note that $\bf{Set}_\bb{R}$ is monoidal with product $\oplus$ defined on objects
as the direct sum of Euclidean spaces $\bb{R}^n \oplus \bb{R}^m = \bb{R}^{n +m}$
and on arrows as the cartesian product $f \oplus g (x, y) = (f(x), g(y))$.
To our knowledge, $\bf{Set}_\bb{R}$ doesn't have much more structure than this.
We will in fact not be able to interpret

In a typical supervised machine learning problem one wants
to approximate an unknown function $f: \bb{R}^n \to \bb{R}^m$ given a
dataset of pairs $D = \{(x, f(x)) \vert x \in \bb{R}^n , \, f(x) \in \bb{R}^m \}$.
Neural networks are parametrized functions built from the following basic
processing units:
\begin{enumerate}
    \item sum $\tt{sum} : n \to 1$ for $n \in \bb{N}$,
    \item weights $\{ w : 1 \to 1 \} _{w \in W_0}$,
    \item biases $\{ r: 0 \to 1 \}_{r \in W_1}$
    \item and activation $\sigma : 1 \to 1$.
\end{enumerate}
where $W = W_0 + W_1$ is a set of variables.
These generate a free cartesian category
$$\bf{NN} = \bf{Cart}(W + \{\tt{sum}, \sigma\})$$
where morphisms are the diagrams of neural network architectures.
For example, a neuron with $n$ inputs, bias $w_0 \in W$ and weights $\vec{w} \in W^\ast$
is given by the following diagram in $\bf{NN}$.
\begin{equation}\label{eq-neuron}
    \tikzfig{figures/neuron}
\end{equation}
``Deep'' networks are simply given by composing these neurons in parallel
forming a \emph{layer}:
\begin{equation*}
    \tikzfig{figures/neural-layer}
\end{equation*}
and then stacking layers one on top of the other, as pasta sheets on a lasagna.


Fix a neural network architecture $K: n \to m$ in $\bf{NN}$.
Given a choice of parameters $\theta: W \to \bb{R}$, the network $K$ induces a
function $I_\theta(K) : \bb{R}^n \to \bb{R}^m$ called an \emph{implementation}.
$I_\theta$ is in fact a monoidal functor $\bf{NN} \to \bf{Set}_\bb{R}$ defined on objects by
$I_\theta(1)= \bb{R}$, and on arrows by:
\begin{enumerate}
    \item $I_\theta(\tt{sum}) = \{ (x_1, \dots, x_n) \mapsto \sum_{i=1}^n x_i \}$,
    \item $I_\theta(w) = \{ x \mapsto \theta(w) \cdot x \}$,
    \item $I_\theta(r) = \{() \mapsto r\}$
    \item $I_\theta(\sigma)$ is a non-linearity such as sigmoid.
\end{enumerate}
For example, the image of the neuron \ref{eq-neuron} is given by the
function $I_\theta(o(\vec{w})) : \bb{R}^n \to \bb{R}$ defined as follows:
$$ I_\theta(o(\vec{w}))(\vec{x}) = \sigma(w_0 + \sum_{i = 1}^n w_i x_i)$$

In order to learn $f: \bb{R}^n \to \bb{R}^m$, we choose an architecture $K: n \to m$
and a loss function $l: \bb{R}^W \to \bb{R}$ that evaluates the choice of parameters
against the dataset, such as the mean-squared error
$l(\theta) = \sum_i (I_\theta(K)(x_i) - f(x_i))^2$.
The aim is to minimize the loss function with respect to the parameters so that
$I_\theta(K)$ approximates $f$ as closely as possible.
The most reliable way of doing this is to compute the gradient of $l$ w.r.t.
$\theta$ and descending along the gradient. This boils down to computing the
gradient of $I_\theta(K)$ w.r.t $\theta$ which itself boils down to computing the gradient
of each component of the network. Assuming that $\sigma : \bb{R} \to \bb{R}$ is a
smooth function, the image of the functor $I_\theta$ lies in $\bf{Smooth}
\injects \bf{Set}_\bb{R}$, the category of smooth functions on Euclidean spaces.
The backpropagation algorithm provides an
efficient way of updating the parameters of the network step-by-step while
descending along the gradient of $l$, see \cite{cruttwell2021} for a recent
categorical treatment of differentiation.

Although neural networks are deterministic functions, it is often useful to think
of them as probabilistic models. One can turn the output of a neural network into
a probability distribution using the $\tt{softmax}$ function as follows.
Given a neural network $K: m \to n$ and a choice of parameters $\theta : W \to \bb{R}$
we can compose $I_\theta(K): \bb{R}^m \to \bb{R}^n$ with a softmax layer,
given by:
$$ \tt{softmax}_n(\vec{x})_i = \frac{e^{x_i}}{\sum_{i=1}^n e^{x_i}} $$
Then the output is a normalised vector of positive reals of length $n$,
which yields a distribution over $[n]$ the set with $n$ elements, i.e. softmax
has the following type:
$$\tt{softmax}_n : \bb{R}^n \to \cal{D}([n])$$
where $\cal{D}(A)$ is the set of probability distributions over $A$, as defined
in \ref{section-tasks}, where softmax is analysed in more detail.
In order to simulate a probabilistic process,
from a neural network $K: m \to n$, an element of $[n]$ is drawn at random from
the induced distribution $\tt{softmax}_n(I_\theta(K)(x)) \in \cal{D}([n])$.

Feed-forward neural networks can be used for \emph{sentence classification}:
the general task of assigning labels to pieces of written text.
Applications include spam detection, sentiment analysis and document classification.
Let $D \sub V^\ast \times X$ be a dataset of pairs $(u, x)$ for $u$ an utterance
and $x$ a label. The task is to find a function
$f: V^\ast \to \cal{D}(X)$ minimizing a loss function $l(f, D)$ which computes
the distance between the predicted labels and the expected ones given by $D$.
For example one may take the mean squared error $l(f, D) = \sum_{(u, x) \in D} (f(u) - \ket{x})^2$
or the cross entropy loss $-\frac{1}{\size{D}}\sum_{(u, x) \in D} \ket{x} \cdot \tt{log}(f(u))$,
where $\ket{x}$ is the one-hot encoding of $x \in X$ as a distribution $\ket{x} \in \cal{D}(X)$
and $\cdot$ is the inner product.
Assuming that every sentence $u \in V^\ast$ has length at most $m$ and that
we have a word embedding $E: V \to \bb{R}^k$, we can parametrize the set of functions
$\bb{R}^{m k} \to \bb{R}^{\size{X}}$ using a neural network $K: m k \to \size{X}$
and use softmax to get a probability distribution over classes:
$$f(x \, \vert \, u ) = \tt{softmax}_k(I_\theta(K)(E^\ast(u))) \in \cal{D}(X)$$
where $u = v_1 v_2 \dots v_m \in V^\ast$ is an utterance and $E^\ast(u)$ is the
feature vector for $u$, obtained by concatenating the word embeddings
$E^\ast(u) = \tt{concat}(E(v_1), E(v_2), \dots, E(v_n))$. In this approach,
$K$ plays the role of a black box which takes in a list of words and ouputs
a class. For instance in a sentiment analysis task, we can set
$X = \set{\text{positive}, \text{negative}}$ and $m = 3$ and expect that
``not so good'' is classified as ``negative''.
\begin{equation*}
    \tikzfig{figures/not-so-good}
\end{equation*}

A second task we will be interested in is \emph{language modelling},
the task of predicting a word in a text given the previous words.
It takes as input a corpus, i.e. a set of strings of words $C \sub V^\ast$,
and outputs $f: V^\ast \to \cal{D}(V)$ a function which ouputs a distribution
for next word $f(u) \in \cal{D}(V)$ given a sequence of previous words $u \in V^\ast$.
Language modelling can be seen as an instance of the classification task, where the
corpus $C$ is turned into a dataset of pairs $D$ such that $(u, x) \in D$ whenever
$ux$ is a substring of $C$. Thus language modelling is a \emph{self-supervised}
task, i.e. the supervision is not annotated by humans but directly generated from text.
Neural networks were first used for language modelling by Bengio et al. \cite{bengio2003},
who trained a single (deep) neural network taking a fixed number of words as input and
predicting the next, obtaining state of the art results at the time.

For both of these tasks, feed-forward neural networks perform poorly compared
to the recurrent architectures which we are about to study. This is because
feed-forward networks take no account of the sequential nature of the input.

\subsection{Recurrent networks}

The most popular architectures currently used modelling language are
\emph{recurrent neural networks} (RNNs). They were introduced in NLP by Mikolov et al.
in 2010 \cite{mikolov2010} and have since become widely used, see
\cite{demulder2015} for a survey.

RNNs were introduced by Ellman \cite{elman1990} to process sequence input data.
They are defined by the following recursive equations:
\begin{equation}
    h_t = \sigma(W_h x_t + U_h h_{t-1} + b_h) \quad y_t = \sigma(W_yh_t + b_y)
\end{equation}
where $t \in \bb{N}$ is a time variable, $h_t$ denotes the encoder hidden vector,
$x_t$ the input vector, $y_t$ the output, $W_h$ and $W_y$ are matrices of weights
and $b_h, b_y$ are bias vectors. We may draw these two components as diagrams
in $\bf{NN}$:
\begin{equation*}
    \tikzfig{figures/recurrent-network}
\end{equation*}

\begin{remark}
    The diagrams in the remainder of this section should be read from left/top to
    bottom/right. One may obtain the corresponding string diagram by bending the left wires to the top and the right wires to the bottom. In the diagram above we
    have omitted the subscript $t$ since it is determined by the left-to-right
    reading of the diagram and we use the labels $h, x, y$ for dimensions, i.e.
    objects of $\bf{NN}$.
\end{remark}

The networks above are usually called \emph{simple} recurrent networks since
they have only one layer.
More generally $X$ and $Y$ could be any neural networks of type
$X : x \oplus h \to h$ and $Y : h \to y$, often called recurrent network and
decoder respectively.

\begin{definition}
    A recurrent neural network is a pair of neural networks $X : x \oplus h \to h$
    and $Y : h \to y$ in $\bf{NN}$ for some dimensions $x, h, y \in \bb{N}$.
\end{definition}

The data of RNNs defined above captures precisely the data of a functor from a
regular grammar as we proceed to show.
Fix a finite vocabulary $V$ and consider the regular grammar $RG$ with three symbols
$s_0, h , s_1$ and transitions $h \xto{w} h$ for each word $w \in V$.
\begin{equation*}
    \tikzfig{figures/recurrent-grammar}
\end{equation*}
Note that $RG$ parses any string of words, i.e. the language generated by $RG$
is $\cal{L}(RG) = V^\ast$. The derivations in $RG$ are sequences:
\begin{equation}\label{eq-recurrent-derivation}
    \tikzfig{figures/recurrent-derivation}
\end{equation}
Recurrent neural networks induce functors from this regular grammar to $\bf{NN}$.

\begin{proposition}
    Recurrent neural networks $(X: x \oplus h \to h, Y: h \to y)$ induce
    functors $RN : RG \to \bf{NN}$ such that
    $\size{V} = x$, $RN(h) = h$, $RN(s_1) = y$, $RN(s_0) = 0$ and
    $RN(s_0 \to h) = \vec{0}$.
\end{proposition}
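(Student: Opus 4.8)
The plan is to invoke the universal property of the free category on the graph $RG$. Recall from \ref{section-path} that a functor $\bf{C}(RG) \to \bf{S}$ is the same data as a graph homomorphism $RG \to U(\bf{S})$, i.e. a choice of object in $\bf{S}$ for each vertex of $RG$ together with a well-typed arrow in $\bf{S}$ for each edge; functoriality on composites is then automatic. So, identifying the derivations of $RG$ with arrows of its free category, it suffices to specify $RN : \bf{C}(RG) \to \bf{NN}$ on the vertices $\set{s_0, h, s_1}$ and on the generating edges, and to check that the assigned arrows have matching domains and codomains.

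On objects I would set $RN(s_0) = 0$, $RN(h) = h$ and $RN(s_1) = y$, where $0$ denotes the monoidal unit (the zero-dimensional space) of $\bf{NN}$ and $h, y \in \bb{N}$ are the hidden and output dimensions of the given pair $(X, Y)$. On edges there are three families to treat. The initialisation edge $s_0 \to h$ is sent to the zero state $\vec{0} : 0 \to h$, which picks out the initial hidden vector $h_0 = 0$; this is well-typed since $RN(s_0) = 0$ and $RN(h) = h$. The decoding edge $h \to s_1$ is sent to the decoder $Y : h \to y$, which is well-typed by construction. The interesting case is the family of word transitions $h \xto{w} h$, one for each $w \in V$. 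Using $\size{V} = x$, every word determines a one-hot state $\ket{w} : 0 \to x$, realised in $\bf{NN}$ as a tuple of $x$ biases (all $0$ except a single $1$). I would then set
\begin{equation*}
    RN(h \xto{w} h) \; = \; (\ket{w} \oplus \tt{id}_h) \cdot X,
\end{equation*}
which, using the identification $0 \oplus h = h$, is an arrow $h \to h$ feeding the recurrent cell $X : x \oplus h \to h$ with the current hidden state together with the one-hot encoding of $w$. All three families are well-typed, so the universal property produces a unique functor $RN$ with the stated values, establishing existence.

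The substance of the statement --- and the reason this is the right construction --- is that $RN$ transports derivations to the unrolled RNN. A derivation of an utterance $u = w_1 \dots w_n$ is the composite $s_0 \to h \xto{w_1} h \cdots \xto{w_n} h \to s_1$ in $\bf{C}(RG)$, so by functoriality its image is
\begin{equation*}
    RN(u) \; = \; \vec{0} \cdot (\ket{w_1} \oplus \tt{id}_h) \cdot X \cdots (\ket{w_n} \oplus \tt{id}_h) \cdot X \cdot Y .
\end{equation*}
I would then verify by induction on $n$ that post-composing with the implementation functor $I_\theta : \bf{NN} \to \bf{Set}_\bb{R}$ reproduces the recurrence $h_t = X(\ket{w_t}, h_{t-1})$ with $h_0 = 0$, followed by $y = Y(h_n)$, so that $I_\theta(RN(u))$ is exactly the value the recurrent network assigns to $u$. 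This is the only step requiring an actual calculation, and it is routine: each transition layer contributes one application of $X$, and functoriality converts the sequential composite of layers into the nested recursion. The main thing to be careful about is the bookkeeping around the one-hot encoding and the unit identification $0 \oplus h = h$, which is precisely what makes the initial-state edge compose correctly with the first word transition.
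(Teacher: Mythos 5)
Your proposal is correct and matches the paper's proof essentially verbatim: the paper likewise defines $RN$ on generators by $RN(h \xto{w} h) = (\ket{w} \oplus \tt{id}_h) \cdot X$ and sends the decoding edge to $Y$, relying implicitly on the free-category universal property that you spell out explicitly. Your added type-checking and the unrolling verification against $I_\theta$ are sound elaborations rather than a different route (and you silently correct the paper's typo, which writes $RN(h \to s_0) = Y$ where $h \to s_1$ is meant).
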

\begin{proof}
    Given an RNN $(X, Y)$ we can build a functor with
    $RN(h \xto{w} h) = (\ket{w} \oplus \tt{id}_h) \cdot X$ where $\ket{w}$ is
    the one-hot encoding of word $w \in V$ as a vector of dimension $n = \size{V}$,
    and $RN(h \to s_0) = Y$.
\end{proof}
\begin{remark}
    Note that a functor $RN : RG \to \bf{NN}$ induces a finite family of neural
    networks $RN(h \xto{w} h)$ indexed by $w \in V$.
    We do not think that it is always possible to construct a recurrent network
    $X : \size{V} \oplus RN(h) \to RN(h)$ such that
    $RN(h \xto{w} h) = (\ket{w} \oplus \tt{id}_n) \cdot X$, hinting that functors
    $RN: RG \to \bf{NN}$ are a larger class of processes than RNNs,
    but we were unable to find a counterexample.
\end{remark}

Given a recurrent neural network $RN: G \to \bf{NN}$, the image under $RN$ of
the derivation \ref{eq-recurrent-derivation} is the following diagram in $\bf{NN}$.

\begin{equation*}
    \tikzfig{figures/recurrent-nn}
\end{equation*}

It defines a network that runs through the input words $w_i$ with the recurrent
network and uses the decoder to output a prediction of dimension $y$.

In order to classify sentences in a set of classes $X$, one may choose
$y = \size{X}$, so that the implementation of the diagram
above is a vector of size $\size{X}$ giving scores for the likelihood that
$w_0 w_1 \dots w_k$ belongs to class $c \in X$.

For language modelling, we may set $y = \size{V}$.
Given a string of words $u \in V^\ast$ with derivation
$g_u: s_0 \to s_1 \in \bf{C}(RG)$
and an implementation $I_\theta : \bf{NN} \to \bf{Set}_\bb{R}$,
we can compute the distribution over the next words
$\tt{softmax}(I_\theta(RN(g_u))) \in \cal{D}\size{V}$.

Another task that recurrent neural networks allow to tackle is \emph{machine
translation}. This is done by composing a encoder recurrent network
$X: \size{V} \oplus h \to h$ with a decoder recurrent network $K : h \to h \oplus \size{V}$,
as in the following diagram.
\begin{equation}
    \tikzfig{figures/recurrent-translation}
\end{equation}

One of the drawbacks of standard RNNs is that they don't have memory.
For instance in the translation architecture above, the last encoder state
is unfolded into all the output decoder states. This often results in a translation
that loses accuracy as the sentences grow bigger. \emph{Long-short term memory}
networks (LSTMs) were introduced in order to remedy this problem \cite{hochreiter1997}.
They are a special instance of recurrent neural networks, where the encoder states
are split into a \emph{hidden state} and a \emph{cell state}. The cell state
allows to store information for longer durations and their architecture was
designed to avoid vanishing gradients \cite{gers2000}. This makes them particularly
suited to NLP applications, see \cite{otter2019} for example applications.

Moreover, RNNs are well suited for processing data coming in sequences but they fail
to capture more structured input data. If we want to process syntax trees and
more complex grammatical structures, what we need is a recursive neural network.

\subsection{Recursive networks}

Recursive neural networks (RvNN) generalise RNNs by allowing neural networks
to recur over complex structures. They were introduced
in the 1990s by Goller and Kuchler \cite{goller1996} for the classification
of logical terms, and further generalised by Sperduti et al.
\cite{sperduti1997, frasconi1998} who pioneered their applications in chemistry
\cite{bianucci2000, micheli2004}.
RvNNs were introduced in NLP by Socher et al. \cite{socher2013a}, who obtained
state-of-the-art results in the task of sentiment analysis using tree-shaped
RvNNs. They have since been applied to several tasks including word sense
disambiguation \cite{cheng2015} and logical entailment \cite{bowman2015}.

In \cite{sperduti1997}, Sperduti and Starita provide a structured way of mapping
labelled directed graphs onto neural networks.
The main difficulty in their formalisation appears when the graph in the domain
has cycles, in which case they give a procedure for unfolding it into a directed acyclic graph.
We review their formalisation for the case of directed acyclic graphs (DAGs).
A DAG is given by a set of vertices $N$ and a set of
edges $E \sub N \times N$ such that the transitive closure of $E$ does not contain
loops. In any DAG $(N, E)$, the parents of a vertex $v$ are defined by
$\tt{pa}(v) = \set{v' \in N \, \vert \, (v', v) \in E}$ and the children by
$\tt{ch}(v) = \set{v' \in N \, \vert \, (v, v') \in E}$.
A $\Sigma$-labelled DAG is a DAG $(N, E)$ with a labelling function $\phi : N \to \Sigma$.

Suppose we have a set $X$ of $\Sigma$-labelled DAGs which we want to classify over $k$
classes, given a dataset of pairs $D \sub X \times [k]$.
The naive way to do this is to encode every graph in $D$ as a vector
of fixed dimension $n$ and learning the parameters of a neural network
$K : n \to k$. Sperduti and Starita propose instead to assign a
\emph{recursive neuron} to each vertex of the graph, and connecting them according
to the topological structure of the graph.
Given a labelled DAG $(N, E, \phi)$ with an assignment $\theta: E + \Sigma \to \bb{R}$
of weights to every edge in $E$ and every label in $\Sigma$,
the recursive neuron assigned to a vertex $v$ in a DAG $(N, E)$ is the function
defined by the following recursive formula:
\begin{equation}\label{eq-recursive-neuron}
    o(v) = \sigma( \theta(\phi(v)) + \sum_{v' \in \tt{pa}(v)} \theta(v', v) o(v'))
\end{equation}
Note that this is a simplification of the formula given by Sperduti et al. where
we assume that a single weight is assigned to every edge and label, see \cite{sperduti1997}
for details.
They further assume that the graph has a \emph{sink}, i.e. a vertex which can
be reached from any other vertex in the graph. This allows to consider the
output of the neuron assigned to the supertarget as the score of the graph,
which they use for classification.

We show that the recursive neuron (\ref{eq-recursive-neuron}) defines a functor
from a monoidal grammar to the category of neural networks $\bf{NN}$.
Given any set of $\Sigma$-labelled DAGs $X$, we may construct a monoidal signature
$G = \Sigma + E^\ast \xleftarrow{\phi + \tt{in}} N \xto{\tt{out}} E^\ast$ where $N$
is the set of all vertices appearing in a graph in $X$ and $E$ is the set of all
edges appearing in a graph in $X$, with $\tt{in}, \tt{out}: N \to E^\ast$ listing
the input and output edges of vertex $v$ respectively and $\phi: N \to \Sigma$ is
the labelling function. Then the recursive neuron
$o$ given above defines a functor $O : \bf{MC}(G + \tt{swap}) \to \bf{Set}_\bb{R}$
from the free monoidal category generated by $G$ and swaps to $\bf{Set}_\bb{R}$.
The image of a vertex $v : l \oplus \vec{e} \to \vec{e'}$ is a function
$O(v) : \bb{R}^{\size{\vec{e}} + 1} \to \bb{R}^{\size{\vec{e'}}}$ given by:
$O(v)(x) = \tt{copy}(o(v)(x))$ where $\tt{copy}: \bb{R} \to \bb{R}^{\size{\vec{e'}}}$
is the diagonal map in $\bf{Set}_\bb{R}$.
Note that any DAG $H \in X$ gives rise to a morphism in $\bf{MC}(G + \tt{swap})$
given by connecting the boxes (vertices) according to the topological structure
of the graph, using swaps if necessary.
Note that the functor $O$ factors through the category $\bf{NN}$ of neural
networks since all the components of Equation \ref{eq-recursive-neuron} are
generators of $\bf{NN}$, thus $O$ defines a mapping from DAGs in $X$ to
neural networks.
Generalising from this to the case where vertices in the graph can be assigned
multiple neurons, we define recursive neural networks as follows.

\begin{definition}
    A recursive network model is a monoidal functor $F: G \to \bf{NN}$ for a
    monoidal grammar $G$, such that $F(w) = 1$ for $w \in V \sub G_0$.
    Given a choice of parameters $\theta :  W \to \bb{R}$,
    the semantics of a parsed sentence $g : u \to s$ in $\bf{MC}(G)$
    is given by $I_\theta(F(g)) \in \bb{R}^{F(s)}$.
\end{definition}
\begin{remark}
    Although it is in general useful to consider non-planar graphs as the input
    of a recursive neural network, in applications to linguistics one can usually
    assume that the graphs are planar. In order to recover non-planar graphs
    from the definition above it is sufficient to add a swap to the signature $G$.
\end{remark}

With this definition at hand, we can look at the applications of RvNNs in
linguistics.
The recursive networks for sentiment analysis of \cite{socher2013a}
are functors from a context-free grammar to neural networks. In this case
RvNNs are shown to capture correctly the role of negation in changing the
sentiment of a review. This is because the tree structure induced by the context-free
grammar captures the part of the phrase which is being negated, as in the following
example.
\begin{equation*}
    \tikzfig{figures/not-very-good}
\end{equation*}
As shown in \cite[Figure 4]{socher2013a} the recurrent network automtically learned
to assign a negative sentiment to the phrase above, even if the sub-phrase
``very good'' is positive. Generalising from this, researchers have shown that
recursive networks perform well on natural language entailment tasks \cite{bowman2015}.

\subsection{Attention is all you need?}\label{sec-attention}

The great advantage of neural networks is also their worst handicap. Since they
are able to approximate almost any function, we have no guarantees
as to what the resulting function will look like, an issue famously put as
the ``black box problem''.
The baffling thing is that neural networks seem to work best when their underlying
structure is unconstrained and neurons are arranged in a deep fully connected
network. We can see this pattern in the recent progression that
neural language models have made: from structured networks to attention.

Attention layers were added to recurrent neural networks by Bahdanau et al. in
2014 \cite{bahdanau2014}.
The architecture of Bahdanau et al. is composed of two recurrent neural
networks, an encoder $f$ and a decoder $g$, connected by an attention mechanism
as in the following diagram:
\begin{equation}\label{eq-bahdanau}
    \tikzfig{figures/recurrent-attention}
\end{equation}
where $x_i$ is the $i$th input word in the domain language, $y_i$ is the $i$th
output word in the target language, the $h_i$s and $s_i$s are the hidden states
of the encoder and decoder RNN respectively:
$$ h_i = f(x_i, h_{i-1}) \qquad s_i = g(s_{i-1}, y_{i-1}, c_i)$$
and $c_i$ is the context vector calculated from the input $\vec{x}$, the
hidden states $\vec{h}$ and the last decoder hidden state $s_{i-1}$ as
follows:
\begin{equation}\label{eq-attention0}
    c_i = \sum_{j=1}^{n} \alpha_{ij} h_j
\end{equation}
where
\begin{equation}\label{eq-attention1}
    \alpha_{ij} = (\tt{softmax}_n(\vec{e_{i}}))_j  \quad (\vec{e_{i}})_j = a(s_{i-1}, h_j)
\end{equation}
where $n = \size{\vec{x}}$ and $a$ is a (learned) feed-forward neural network.
The coefficients $\alpha_{ij}$ are called
\emph{attention weights}, they provide information as to how much input word $x_j$
is relevant for determining the output word $y_i$.
In the picture above, we used a comb notation for $\tt{Attention}$ informally to
represent the recursive relation between context vectors $c$ and hidden
states $h$ and $s$, capturing the flow of information in the architecture
of Bahdanau et al. This diagram should be read from top-left to bottom-right.
At each time step $t$, the attention mechanism computes the
context vector $c_t$ from the last decoder hidden state $s_{t-1}$ and all the
encoder hidden states $\vec{h}$. Infinite comb diagrams such as the one above may be formalised as monoidal streams over the category of neural networks \cite{dilavore2022}. A similar notation is used in Chapter 3.
Note that Bahdanau et al. model the encoder $f$ as a bidirectional RNN \cite{schuster1997} which produces hidden states $h_i$ that depend both on the previous and the following words. For simplicity we have depicted $f$ as a standard RNN.

In 2017, Vaswani et al. published the paper ``Attention is all you need''
\cite{vaswani2017} which introduced \emph{transformers}. They showed that the recurrent
structure is not needed to obtain state-of-the-art results in machine translation.
Instead, they proposed a model built up from three simple components:
positional encoding, attention and feed-forward networks, composed as in the
following diagram:
\begin{equation}
    \tikzfig{figures/attention-is-all}
\end{equation}
where
\begin{enumerate}
    \item $W_Q, W_K, W_V$ are (learned) linear functions, and $D$ is a (learned)
    decoder neural network.
    \item the positional encoding $\tt{pos}$ supplements the word vector $w_i$
    of dimension $n$ with another $n$ dimensional vector $\tt{pos}(i) \in \bb{R}^n$
    where $\tt{pos} : \bb{N} \to \bb{R}^n$ is given by:
    \begin{equation*}
    \tt{pos}(i)_j =
    \begin{cases*}
      \tt{sin}(\frac{j}{m^{2k/n}}) & if $j = 2k$ \\
      \tt{cos}(\frac{j}{m^{2k/n}}) & if $j = 2k + 1$
    \end{cases*}
    \end{equation*}
    where $m \in \bb{N}$ is a hyperparameter which determines the phase of the
    sinusoidal function (Vaswani et al. choose $m = 1000$ \cite{vaswani2017}),
    \item and attention is defined by the following formula:
    \begin{equation}\label{eq-attention2}
        \tt{Attention}(Q, K, V) = \tt{softmax}(\frac{Q K^T}{\sqrt{d_K}}) V
    \end{equation}
    where $Q$ and $K$ are vectors of the same dimension $d_K$ called \emph{query} and
    \emph{keys} respectively and $V$ is a vector called \emph{values}.
\end{enumerate}
Note that the positional encoding is needed since otherwise the network would
have no information about the position of words in a sentence. Using these
sinusoidal encodings turns absolute into relative position \cite{vaswani2017}.
Comparing this definition \ref{eq-attention2} of attention with the one used by
Bahdanau et al. \cite{bahdanau2014}, we note that in Equations \ref{eq-attention0}
and \ref{eq-attention1} the hidden states $\vec{h}$ and $\vec{s}$
play the role of keys $K$ and queries $Q$ respectively,
and the values $V$ are again taken to be encoder hidden states $\vec{h}$.
The main difference is that instead of a deep neural network (denoted
$a$ above) the queries and keys are pre-processed with \emph{linear} operations
($W_K$, $W_Q$ and $W_V$ above).
This architecture is the basis of BERT \cite{devlin2019} and its extensions
such as GPT-3 which use billions of parameters to achieve state of the art
results in a wide range of NLP tasks.

Reasoning about what happens inside these models and explaining their behaviour
with linguistic analysis is hard.
Indeed, the same architectures where sentences are replaced by images and
words by parts of that image give surprisingly accurate results in the image
recognition task \cite{dosovitskiy2020}, suggesting that linguistic principles
are insufficient for analysing these algorithms.
However, viewing deep neural networks as end-to-end black-box learners, it becomes
interesting to open the box and analyse the output of the intermediate layers
with the aim of understanding the different features that the network learns in
the process. Along these lines, researchers have found that neural network models
automatically encode linguistic features such as grammaticality \cite{coenen2019}
and dependency parsing \cite{alishahi2019}.
One possible line of future research would be to study what happens in the linear
world of keys, values and queries. One may take the latest forms of attention
\cite{vaswani2017} as the realisation that accurate predictions can be obtained
from a linear process $frac{Q K^T}{\sqrt{d_K}}$ by using a single $\tt{softmax}$
activation. We will give a bayesian interpretation of this activation function
in Chapter 3.

\subsection{Neural networks in DisCoPy}

We now show how to interface DisCoPy with Tensorflow/Keras neural networks
\cite{chollet2015keras, tensorflow2015-whitepaper}. In order to do this,
we need to define objects, identities, composition and tensor for Keras models.

The objects of our category of neural networks will be instances of \py{PRO},
a subclass of \py{monoidal.Ty} initialised by a dimension \py{n}.
A morphism from \py{PRO(n)} to \py{PRO(k)} is a neural network with input shape \py{(n, )}
and output shape \py{(k, )} (we only deal with flat shapes for simplicity).
A \py{neural.Network} is initialised by providing domain and codomain dimensions
together with a Keras model of that type.
Composition is easily implemented using the call method of Keras models.
For \py{tensor}, we first need to split the domain using \py{keras.layers.Lambda},
then we act on each subspace independently with \py{self} and \py{other}, and finally we
concatenate the outputs. Identities are simply Keras models with \py{outputs = inputs},
and we include a static method for constructing dense layer models.

\begin{python}\label{listing:neural.Network}
{\normalfont The category of Keras models}

\begin{minted}{python}
from discopy import monoidal, PRO
import tensorflow as tf
from tensorflow import keras

class Network(monoidal.Box):
    def __init__(self, dom, cod, model):
        self.model = model
        super().__init__("Network", dom, cod)

    def then(self, other):
        inputs = keras.Input(shape=(len(self.dom),))
        output = self.model(inputs)
        output = other.model(output)
        composition = keras.Model(inputs=inputs, outputs=output)
        return Network(self.dom, other.cod, composition)

    def tensor(self, other):
        dom = len(self.dom) + len(other.dom)
        cod = len(self.cod) + len(other.cod)
        inputs = keras.Input(shape=(dom,))
        model1 = keras.layers.Lambda(
            lambda x: x[:, :len(self.dom)],)(inputs)
        model2 = keras.layers.Lambda(
            lambda x: x[:, len(self.dom):],)(inputs)
        model1 = self.model(model1)
        model2 = other.model(model2)
        outputs = keras.layers.Concatenate()([model1, model2])
        model = keras.Model(inputs=inputs, outputs=outputs)
        return Network(PRO(dom), PRO(cod), model)

    @staticmethod
    def id(dim):
        inputs = keras.Input(shape=(len(dim),))
        return Network(dim, dim, keras.Model(inputs=inputs, outputs=inputs))

    @staticmethod
    def dense_model(dom, cod, hidden_layer_dims=[], activation=tf.nn.relu):
        inputs = keras.Input(shape=(dom,))
        model = inputs
        for dim in hidden_layer_dims:
            model = keras.layers.Dense(dim, activation=activation)(model)
        outputs = keras.layers.Dense(cod, activation=activation)(model)
        model = keras.Model(inputs=inputs, outputs=outputs)
        return Network(PRO(dom), PRO(cod), model)
\end{minted}
\end{python}

As an application, we use a \py{monoidal.Functor} to construct a Keras model
from a context-free grammar parse, illustrating the tree neural networks
of \cite{socher2013a}.

\begin{python}\label{listing:neural.Example}
{\normalfont Tree neural networks}

\begin{minted}{python}
from discopy.monoidal import Ty, Box, Functor

n, a, np, neg, star = Ty('N'), Ty('A'), Ty('NP'), Ty('Neg'), Ty('*')
not_, very, good = Box('not', star, neg), Box('very', star, a), Box('good', star, n)
P1, P2 = Box('P1', a @ n, np), Box('P2', neg @ np, np)
diagram = not_ @ very @ good >> Id(neg) @ P1 >> P2

dim = 5
ob = lambda x: PRO(dim)
ar = lambda box: Network.dense_model(len(box.dom) * dim, len(box.cod) * dim)
F = Functor(ob, ar, ob_factory=PRO, ar_factory=Network)
keras.utils.plot_model(F(diagram).model)
\end{minted}

\begin{center}
    \includegraphics[scale=0.12]{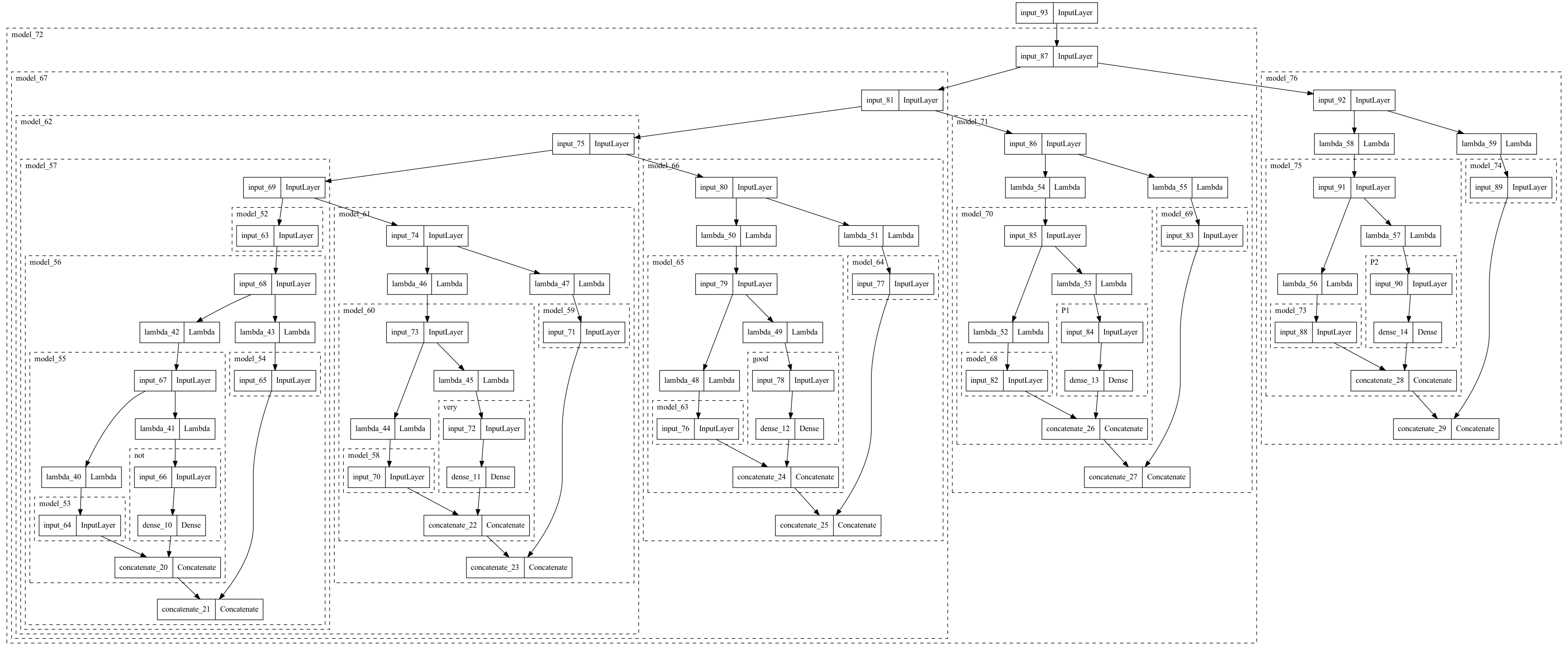}
\end{center}
\end{python}

\section{Relational models}\label{sec-rel-model}

The formal study of \emph{relations} was initiated by De Morgan in the mid
19th century \cite{demorgan1847}.
It was greatly developed by Peirce who only published small fragments of his
\emph{calculus of relations} \cite{peirce1897}, although much of it was
popularised in the influential work of Schroder \cite{schroder1890}.
In the first half of the twentieth century, this calculus was often
disregarded in favour of Frege's and Russell's approach to logic \cite{anellis2012},
until it was revived by Tarski \cite{tarski1941} who developed it into the rich
field of model theory.

With the advent of computer science, the calculus of relations came to be
recognized as a convenient framework for storing and accessing data, leading to
the development of \emph{relational databases} in the 1970s \cite{Codd70}.
SQL queries were introduced by Chamberlin and Boyce in 1976
\cite{ChamberlinBoyce76} and they are still used for accessing databases today.
\emph{Conjunctive queries} are an important subclass of SQL, corresponding to
the Select-Where-From fragment. They were introduced by Chandra and Merlin
\cite{ChandraMerlin77} who showed that their evaluation in
a relational database is an NP-complete problem, spawning a large field
of studies in the complexity of constraint satisfaction problems \cite{DalmauEtAl02}.
The blend of algebra and logic offered by the theory of relations
is particularly suited to a categorical formalisation and it has motivated
the work of Carboni and Walters \cite{carboni1987} as well as Brady and
Trimble \cite{brady2000} and Bonchi et al. \cite{bonchi2017b, BonchiEtAl18}
among others.

In this section, we start by reviewing the theory of relational databases
and conjunctive queries from a categorical perspective.
We then introduce relational models by transposing the definitions into a
linguistic setting.
This allows us to transfer results from database theory to linguistics
and define $\tt{NP-complete}$ \emph{entailment} and \emph{question answering}
problems. The correspondence of terminology between databases, category theory
and linguistics is summarized in the table below.

\begin{center}
\begin{tabular}{ |c|c|c| }
 \hline
 Databases & Algebra & Linguitics\\
 \hline
 Relational database & Cartesian bicategory & Relational model \\
 \hline
 attributes & objects & basic types\\
 schema & signature & lexicon\\
 query & morphism & sentence\\
 instance & functor & model\\
 containment & preorder enrichment & entailment \\
 \hline
\end{tabular}
\end{center}

\subsection{Databases and queries}\label{app-database}

We introduce the basic notions of relational databases, starting with an example.

\begin{example}\label{ex-1600}
\begin{center}
    \begin{tabular}{ |c|c|c| }
     \hline
     reader & book & writer \\
     \hline
     Spinoza & De Causa & Bruno\\
     Shakespeare & World of Wordes & Florio \\
     Florio & De Causa & Bruno\\
     Leibniz & Tractatus & Spinoza\\
     \hline
    \end{tabular}
\end{center}

Consider the structure of the table above, which we denote by $\rho$.
There is a set of \emph{attributes}
$A =\set{\text{reader}, \text{book}, \text{writer}}$ which name the
columns of the table and a set of \emph{data values} $D_a$ for each attribute
$a \in A$,
$$D_\text{r} = D_\text{w} =
\{\text{Spinoza, Shakespeare, Leibniz, Bruno, Florio}\}$$
$$D_{b} = \{\text{De Causa}, \text{World of Wordes}, \text{Tractatus}\}$$
A row of the table is a tuple $t \in \prod_{a \in A} D_a$,
which assigns a particular value $t_a$ to each attribute $a \in A$, e.g.
$(\text{Leibniz, Tractatus, Spinoza})$.
The table then consists in a set of tuples, i.e. a \emph{relation}
$\rho \sub \prod_{a \in A} D_a$.
\end{example}

A relational database is a collection of tables (relations), organised by a
\emph{schema}. Given a set of attributes $A$, a schema $\Sigma$ is a set of
relational symbols, together with a domain function $\tt{dom}: \Sigma \to A^\ast$.
The schema serves to specify the set of names $\Sigma$ for the tables in
a database together with the type of their columns.
For example we may have $\rho \in \Sigma$ for the table above with
$\tt{dom}(\rho) = (\text{reader}, \text{book}, \text{writer})$.
We have already encountered this type of structure in \ref{section-hypergraph},
where we used the term \emph{hypergraph signature} instead of schema.

\begin{definition}
    A relational database $K$ with schema $\Sigma$ is an assignment
    of each attribute $a \in A$ to a corresponding set of data values $D_a$,
    and an assignment of each symbol $R \in \Sigma$ to a relation
    $K(R) \sub \prod_{a \in \tt{dom}(R)} D_a$.
\end{definition}

Instead of working directly with relational databases, it is often
convenient to work with a simpler notion known as a \emph{relational structure}.
The schema is replaced by a \emph{relational signature}, which is a set of
symbols $\Sigma$ equipped with an \emph{arity} function $\tt{ar} :\Sigma \to \N$.

\begin{definition}[Relational structure]
    A relational structure $K$ over a signature $\Sigma$, also called a
    $\Sigma$-structure, is given by a set $U$ called the \emph{universe} and
    an interpretation $K(R) \sub U^{\tt{ar}(R)}$ for every symbol $R \in \Sigma$.
    We denote by $\cal{M}_\Sigma$ the set of $\Sigma$-structures with
    \emph{finite} universe $U(K)$.

    Given two $\Sigma$-structures $K, K'$, a homomorphism $f : K \to K'$ is a
    function $f : U(K) \to U(K')$ such that
    $\forall \ R \in \Sigma \s \forall \ \vec{x} \in U^{\tt{ar}(R)} \s \cdot
    \s \vec{x} \in K(R) \implies f(\vec{x}) \in K'(R)$.
\end{definition}
\begin{remark}
    Note that relational structures are the same as relational databases with
    only one attribute. Attributes $a \in A$ can be recovered by encoding them
    as predicates $a \in \Sigma$ of arity $1$ and one may take the universe to
    be the union of the sets of data values $U = \cup_{a \in A} D_a$.
\end{remark}

We consider the problem of finding a homomorphism between relational structures.

\begin{definition}
    \begin{problem}
    \problemtitle{$\tt{Homomorphism}$}
    \probleminput{$K, K' \in \cal{M}_\Sigma$}
    \problemoutput{$f : K \to K'$}
  \end{problem}
\end{definition}

\begin{proposition}\cite{GareyJohnson90}
  $\tt{Homomorphism}$ is $\tt{NP-complete}$.
\end{proposition}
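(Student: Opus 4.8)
The plan is to establish the two halves of $\tt{NP}$-completeness separately: first that the problem lies in $\tt{NP}$, and then that it is $\tt{NP}$-hard by reduction from graph colouring. Throughout I read $\tt{Homomorphism}$ as the decision problem asking whether a homomorphism $f : K \to K'$ exists, the displayed function problem being its natural search counterpart; membership and hardness both refer to this decision version.

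For membership in $\tt{NP}$, I would use a candidate homomorphism itself as the certificate. Such an $f$ is merely a function $f : U(K) \to U(K')$, and since both universes are finite it can be written down as a lookup table of size $\size{U(K)} \cdot \log \size{U(K')}$, which is polynomial in the size of the input. To verify that a given $f$ is genuinely a homomorphism, one runs through every symbol $R \in \Sigma$ and every tuple $\vec{x} \in K(R)$ and checks that $f(\vec{x}) \in K'(R)$; this is a polynomial number of membership tests, each decidable in polynomial time. Hence a correct $f$ is a polynomial-size, polynomial-time-checkable certificate, placing the problem in $\tt{NP}$.

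For $\tt{NP}$-hardness I would reduce from $3$-colourability, which is known to be $\tt{NP}$-complete. Fix the signature $\Sigma = \set{E}$ with a single binary symbol. An undirected graph $G$ becomes a $\Sigma$-structure whose universe is the vertex set of $G$ and whose interpretation of $E$ is the (symmetric, irreflexive) edge relation; let $K_3$ denote the $\Sigma$-structure on three elements whose relation $E$ contains every ordered pair of \emph{distinct} elements. The key observation is that a homomorphism $f : G \to K_3$ assigns to each vertex one of three ``colours'' so that adjacent vertices receive adjacent, hence distinct, colours, precisely because $K_3$ is loop-free; conversely any proper $3$-colouring \emph{is} such a homomorphism. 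Thus $G$ is $3$-colourable if and only if $\tt{Homomorphism}(G, K_3)$ has a solution, and the map $G \mapsto (G, K_3)$ is computable in logarithmic space, since it merely copies $G$ and appends the fixed structure $K_3$.

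The main subtlety to get right is the correspondence in the hardness step: one must verify that the loop-freeness of the target $K_3$ is exactly what forces $f(u) \neq f(v)$ for each edge $(u,v)$, and that encoding $G$ with a symmetric edge relation makes the homomorphism condition symmetric in the way proper colouring demands. Everything else is routine bookkeeping, and the polynomial bounds on both the verifier and the reduction are immediate. I would close by noting that this reduction already works for a \emph{fixed} signature consisting of a single binary relation, so the hardness does not depend on the schema growing with the input.
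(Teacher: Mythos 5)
Your proof is correct, but it follows a different route from the paper's on both halves. For membership, the paper invokes Fagin's theorem — homomorphism existence is expressible by an existential second-order formula, hence lies in $\tt{NP}$ by descriptive complexity — whereas you give the elementary guess-and-check argument with the homomorphism itself as a polynomial-size certificate; yours is self-contained, the paper's is a one-liner if one already has Fagin's theorem on hand. For hardness, the paper reduces from the graph homomorphism problem, observing that with $\Sigma = \set{\bullet}$, $\tt{ar}(\bullet) = 2$, a $\Sigma$-structure just \emph{is} a graph, so relational-structure homomorphism subsumes graph homomorphism; this presupposes that graph homomorphism is already known $\tt{NP}$-hard. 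You instead reduce from $3$-colourability with the fixed target $K_3$, which is in fact the classical way graph homomorphism hardness is itself established — so your argument goes one level deeper and is fully self-contained modulo the textbook $\tt{NP}$-completeness of $3$-colouring. Your version also buys a slightly sharper conclusion, which you rightly flag: hardness already holds with a fixed signature (one binary symbol) and a fixed target structure, so the difficulty is not an artefact of the schema or the right-hand structure growing with the input. Your handling of the decision-versus-search reading of the problem statement matches the paper's implicit convention, and the loop-freeness observation for $K_3$ is exactly the point on which the reduction hinges, so there is no gap.
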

\begin{proof}
  Membership may be shown to follow from Fagin's theorem: homomorphisms are defined by an existential second-order logic formula.
  Hardness follows by reduction from graph homomorphism: take $\Sigma = \set{\bullet}$ and $\tt{ar}(\bullet) = 2$ then a $\Sigma$-structure is a graph.
\end{proof}

The most prominent query language for relational databases is SQL \cite{ChamberlinBoyce76}.
\emph{Conjunctive queries} form a subset of SQL (corresponding to
the Select-Where-From fragment) with a convenient mathematical formulation. We
define conjunctive queries and the corresponding $\tt{Evaluation}$ and
$\tt{Containment}$ problems.
Let $\cal{X}$ be a (countable) set of variables, $\Sigma$ a relational signature
and consider the logical formulae generated by the following context-free grammar:
$$
\phi \s ::= \s \top \s\vert\s x = x' \s\vert\s \phi \land \phi \s\vert\s
\exists \ x \cdot \phi \s\vert\s R(\vec{x})
$$
where $x, x' \in \cal{X}$, $R \in \Sigma$ and $\vec{x} \in \cal{X}^{\tt{ar}(R)}$.
Let us denote the variables of $\phi$ by $\tt{var}(\phi) \sub \cal{X}$, its free
variables by $\tt{fv}(\phi) \sub \tt{var}(\phi)$ and its atomic formulae by
$\tt{atoms}(\phi) \sub \coprod_{R \in \Sigma} \text{var}(\phi)^{\tt{ar}(R)}$,
i.e. an atomic formula is given by $R(x_1, \dots, x_{\tt{ar}(R)})$ for some
variables $x_i \in \cal{X}$.

This fragment is called \emph{regular logic} in the category-theory literature
\cite{fong2018}. It yields conjuntive queries via the \emph{prenex normal form}.
\begin{definition}
    Conjunctive queries $\phi \in \cal{Q}_\Sigma$ are the prenex normal form
    $\phi = \exists\ x_0 \cdots \exists \ x_k \cdot \phi'$ of regular logic
    formulae, for the bound variables $\set{x_0, \dots, x_k} = \tt{var}(\phi)
    \setminus \tt{fv}(\phi)$ and $\phi' = \bigwedge \tt{atoms}(\phi)$.
    We denote by :
    $$\cal{Q}_\Sigma(k) = \{\phi \in \cal{Q}_\Sigma\, \vert\, \tt{fv}(\phi)= k\}$$
    the set of conjunctive queries with $k$ free variables.
\end{definition}

Given a structure $K \in \cal{M}_\Sigma$, let
$\tt{eval}(\phi, K) = \set{v \in U(K)^{\tt{fv}(\phi)} \s \vert \s  (K, v) \vDash \phi}$
where the satisfaction relation $(\vDash)$ is defined in the usual way.

\begin{definition}
    \begin{problem}
    \problemtitle{$\tt{Evaluation}$}
    \probleminput{$\phi \in \cal{Q}_\Sigma, \quad K \in \cal{M}_\Sigma$}
    \problemoutput{$\tt{eval}(\phi, K) \sub U(K)^{\tt{fv}(\phi)}$}
  \end{problem}
\end{definition}

\begin{definition}
    \begin{problem}
    \problemtitle{$\tt{Containment}$}
    \probleminput{$\phi, \phi' \in \cal{Q}_\Sigma$}
    \problemoutput{$\phi \sub \phi' \s\equiv\s \forall \ K \in \cal{M}_\Sigma \ \cdot \ \tt{eval}(\phi, K) \sub \tt{eval}(\phi', K)$}
  \end{problem}
\end{definition}

\begin{example}\label{ex-1600-query}
    Following from \ref{ex-1600} let  $U := D_{w} \cup D_b$ and fix the schema
    $\Sigma = \{ \text{read, wrote}\} \cup U$ with
    $\tt{ar}(w) = 1$ for $w \in U$ and
    $\tt{ar}(\text{read}) =  \tt{ar}(\text{wrote}) = 2$. Consider the relational
    structure $K \in \cal{Q}_\Sigma$ with universe $U$ and
    $K(w) = \set{w} \sub U$ for $w \in \Sigma - \set{\text{read, wrote}}$ and
    $K(\text{read}) \sub U \times U$ given by the first two columns of table $\rho$,
    $K(\text{wrote}) \sub U \times U$ given by the second two columns of $\rho$.
    The following is a conjunctive query with no free variables:
    $$\phi = \exists y, z \, \cdot \, \text{read}(x, y)  \land \text{Bruno}(y) \land
    \text{wrote}(x, z)$$
    Since $\phi$ has one free variables, $\tt{eval}(\phi, K) \sub U$.
    With $K$ as defined above there are three valuations
    $v : \tt{var}(\phi) = \set{x, y, z} \to U$ such that $(v, K) \vDash \phi$,
    yielding  $\tt{eval}(\phi, K) = \set{\text{Spinoza, Florio, Galileo}} \sub U$.
\end{example}

\begin{definition}[Canonical structure]
  Given a query $\phi \in \cal{Q}_\Sigma$, the canonical structure
  $CM(\phi) \in \cal{M}_\Sigma$ is given by $U(CM(\phi)) = \tt{var}(\phi)$ and
  $CM(\phi)(R) = \set{\vec{x} \in \tt{var}(\phi)^{\tt{ar}(R)} \ \vert \ R(\vec{x}) \in \tt{atoms}(\phi)}$
  for $R \in \Sigma$.
\end{definition}

This result was used by Chandra and Merlin to reduce from $\tt{Homomorphism}$
to both $\tt{Evaluation}$ and $\tt{Containment}$.

\begin{theorem}[Chandra-Merlin \cite{ChandraMerlin77}]\label{chandra-merlin}
  The problems $\tt{Evaluation}$ and $\tt{Containment}$ are logspace equivalent
  to $\tt{Homomorphism}$, hence $\tt{NP-complete}$.
\end{theorem}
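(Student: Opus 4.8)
The plan is to route every reduction through the \emph{canonical structure} $CM(\phi)$ already defined above, together with its dual construction, the \emph{canonical query} $\phi_K$ of a finite $\Sigma$-structure $K$: existentially quantify one variable $x_u$ for each element $u \in U(K)$ and take the conjunction of all atoms $R(x_{u_1}, \dots, x_{u_k})$ with $(u_1, \dots, u_k) \in K(R)$. These two operations are mutually inverse up to renaming elements as variables, and crucially both are mere syntactic transcriptions — copying a relation table into a list of atoms, or vice versa — so each is computable in logarithmic space by scanning the input once while maintaining a constant number of pointers. This is what will make all four reductions log-space.

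First I would establish the \emph{fundamental lemma} relating evaluation and homomorphism: for a query $\phi$ and a structure $K$, a valuation $v : \tt{var}(\phi) \to U(K)$ satisfies $(K, v) \vDash \phi$ if and only if $v$ is a homomorphism $CM(\phi) \to K$. This is immediate by induction, since the atoms of $\phi$ are precisely the tuples placed in the relations of $CM(\phi)$, and satisfaction of a conjunction of atoms says exactly that each such tuple is preserved. Restricting $v$ to the free variables identifies $\tt{eval}(\phi, K)$ with the set of homomorphisms $CM(\phi) \to K$ projected onto the images of $\tt{fv}(\phi)$. This yields both directions of $\tt{Evaluation} \equiv \tt{Homomorphism}$: to evaluate $\phi$ in $K$ one asks for homomorphisms $CM(\phi) \to K$, and conversely $\tt{Hom}(K, K') \neq \varnothing$ iff $K' \vDash \phi_K$, where $\phi_K$ is the canonical (variable-free) query of $K$.

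Next I would prove the \emph{homomorphism theorem} for containment: $\phi \sub \phi'$ if and only if there is a homomorphism $CM(\phi') \to CM(\phi)$ sending the $i$th free variable of $\phi'$ to the $i$th free variable of $\phi$. The forward direction uses that $CM(\phi)$ is itself a canonical model of $\phi$ under the identity valuation (by the fundamental lemma); hence $CM(\phi) \vDash \phi$, so if $\phi \sub \phi'$ then $CM(\phi) \vDash \phi'$, which by the fundamental lemma is exactly a homomorphism $CM(\phi') \to CM(\phi)$ respecting free variables. The converse is a composition argument: given $h : CM(\phi') \to CM(\phi)$ and any $K$ with $v \in \tt{eval}(\phi, K)$, viewed as $v : CM(\phi) \to K$, the composite $v \circ h : CM(\phi') \to K$ witnesses that $v$ restricted to free variables lies in $\tt{eval}(\phi', K)$. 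Feeding canonical queries in and canonical structures out then gives the two log-space reductions between $\tt{Containment}$ and $\tt{Homomorphism}$.

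Finally, combining these equivalences with the earlier proposition that $\tt{Homomorphism}$ is $\tt{NP-complete}$ transfers completeness to both problems. The main obstacle I anticipate is the homomorphism theorem rather than the fundamental lemma: one must correctly orient the homomorphism (it runs from $CM(\phi')$ to $CM(\phi)$, opposite to the containment), handle free variables by pinning them to matching variables, and — the genuinely nontrivial point — justify that containment over the \emph{infinite} class $\cal{M}_\Sigma$ of all finite structures is decided by the \emph{single} canonical witness $CM(\phi)$. The observation that, under the identity valuation, $CM(\phi)$ is the most generic model of $\phi$ is exactly what collapses this universal quantifier to one test.
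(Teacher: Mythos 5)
Your proposal is correct and takes essentially the same route as the paper: evaluation is identified with homomorphisms out of the canonical structure $CM(\phi)$, containment is settled by the homomorphism theorem between canonical structures, and the canonical query of a structure (your $\phi_K$, the paper's query with $\tt{var}(\phi) = U(K)$ and $\tt{atoms}(\phi) = K$) supplies the reverse reductions, all plainly in logspace. One point in your favour: your orientation of the containment homomorphism --- $CM(\phi') \to CM(\phi)$, opposite to the direction of the containment $\phi \sub \phi'$ --- is the correct one, whereas the paper's own proof writes $f : CM(\phi) \to CM(\phi')$, which reverses the standard Chandra--Merlin direction and appears to be a typo.
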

\begin{proof}
  Given a query $\phi \in \cal{Q}_\Sigma$ and a structure $K \in \cal{M}_\Sigma$,
  query evaluation $\tt{eval}(\phi, K)$ is given by the set of homomorphisms $CM(\phi) \to K$.
  Given $\phi, \phi' \in \cal{M}_\Sigma$, we have $\phi \sub \phi'$ iff there
  is a homomorphism $f : CM(\phi) \to CM(\phi')$ such that
  $f(\tt{fv}(\phi)) = \tt{fv}(\phi')$.
  Given a structure $K \in \cal{M}_\Sigma$, we construct $\phi \in \cal{Q}_\Sigma$
  with $\tt{fv}(\phi) = \varnothing$, $\tt{var}(\phi) = U(K)$ and $\tt{atoms}(\phi) = K$.
\end{proof}

\subsection{The category of relations}

Let us now consider the structure of the category of relations.
A relation $R: A \to B$ is a subset $R \sub A \times B$ or equivalently
a predicate $R: A \times B \to \bb{B}$, we write $aRb$ for the logical
statement $R(a, b) = 1$.
Given $S: B \to C$, the composition $R;S : A \to C$  is defined as follows:
$$a R;S c \, \iff \, \exists b \in B \cdot a R b \land b S c \, .$$
Under this composition relations form a category denoted $\bf{Rel}$.

We can also construct the category of relations by considering the powerset
monad $\cal{P}: \bf{Set} \to \bf{Set}$ defined on objects by
$\cal{P}(X)= \set{S \sub X}$ and on arrows by
$f: X \to Y$ by  $\cal{P}(f): \cal{P}(X) \to \cal{P}(Y): S \mapsto f(S)$.
A relation $R : A \to B$ is the same as a function $R: A \to \cal{P}(B)$ and in fact
$\bf{Rel} = \bf{Kl}(\cal{P})$ is the Kleisli category of the powerset monad.

Any function $f: A \to B$ induces a relation
$I(f) = \set{(x, f(x))\, |\, x \in A} \sub A \times B$,
sometimes called the graph of $f$.
The tensor product of relations $R : A \to B$ and $T : C \to D$ is denoted
$R \otimes T: A \times C \to B \times D$ and defined by:
$$ (a, c) R \otimes T (b, d) \, \iff \, a R b \land c T d\, .$$
Equipped with this tensor product and unit the one-element set $1$,
$\bf{Rel}$ forms a symmetric monoidal category, with symmetry lifted from
$\bf{Set}$. We can thus use the graphical language of monoidal categories
for reasoning with relations.

Note that each object $A \in \bf{Rel}$ is self-dual, as witnessed by
the morphisms $\tt{cup}_A: A \times A \to 1$ and $\tt{cap}_A: 1 \to A \times A$,
defined by:
$$(a, a')\tt{cup}_A \iff (a = a') \iff \tt{cap}_A(a, a')$$
These denoted graphically as cups and caps and satisfy the snake equations \ref{axioms-rigid}.
Thus $\bf{Rel}$ is a compact-closed category.
Moreover, every object $A \in \bf{Rel}$ comes equipped with morphisms
$\Delta : A \to A \times A$ and $\nabla : A \times A \to A$ defined by:
$$a \Delta (a', a'')\, \iff \, (a = a' = a'') \, \iff (a, a') \nabla a''\, .$$
Together with the unit $\eta : 1 \to A$ and the counit $\epsilon: A \to 1$,
defined by $\eta = A = \epsilon$, the tuple $(\Delta, \epsilon, \nabla, \eta)$
satisfies the axioms of special commutative frobenius algebras,
making $\bf{Rel}$ a hypergraph category in the sense of Fong and Spivak \cite{fong2018a}.
Note that $\tt{cup} = \nabla ; \epsilon$ and $\tt{cap} = \eta ; \Delta$, moreover it is easy to show that the snake equations follow from the axioms of
special commutative Frobenius algebras.
Finally, we can equip the hom-sets $\bf{Rel}(A, B)$ with a preorder structure
given by:
$$R \leq S \iff (a R b \implies a S b)\, .$$
In category theory, this situation is known as a \emph{preorder enrichment}.
Equipped with this preorder enrichment, $\bf{Rel}$ forms a
\emph{Cartesian bicategory} in the sense of Carboni and Walters \cite{carboni1987}.

\subsection{Graphical conjunctive queries}

Bonchi, Seeber and Sobocinski \cite{BonchiEtAl18} introduced graphical
conjunctive queries (GCQ), a graphical calculus where query evaluation and
containment are captured by the axioms of the \emph{free Cartesian bicategory}
$\bf{CB}(\Sigma)$ generated by a relational signature $\Sigma$.
Cartesian bicategories were introduced by Carboni and Walters \cite{carboni1987}
as an axiomatisation of categories of relations, they are hypergraph categories
where every hom-set has a partial order structure akin to subset inclusion
between relations. We review the correspondence of Bonchi et al.
\cite{BonchiEtAl18}, between conjunctive queries and morphisms of free cartesian
bicategories. We refer to Appendix \ref{app-database} for an introduction
to relational datatabases with examples.

\begin{definition}[Cartesian bicategory]\cite{carboni1987}\label{def-CB}
    A cartesian bicategory $\bf{C}$ is a hypergraph category enriched in preorders
    and where the preorder structure interacts with the hypergraph structure
    as follows:
    \begin{equation}
        \scalebox{0.8}{\tikzfig{figures/cartesian-bicategory}}
    \end{equation}
    for all objects $a, b \in \bf{C}_0$ and morphisms $R : a \to b$.
    A morphism of Cartesian bicategories is a strong monoidal functor which
    preserves the partial order, the monoid and the comonoid structure.
\end{definition}

We recall the basic notions of relational databases and conjunctive queries.
A \emph{relational signature} is a set of symbols $\Sigma$
equipped with an \emph{arity} function $\tt{ar} :\Sigma \to \N$. This is
used to define a relational structure as a mathematical abstraction of a databases.

\begin{definition}[Relational structure]
    A relational structure $K$ over a signature $\Sigma$, also called a
    $\Sigma$-structure, is given by a set $U$ called the \emph{universe} and
    an interpretation $K(R) \sub U^{\tt{ar}(R)}$ for every symbol $R \in \Sigma$.
    We denote by $\cal{M}_\Sigma$ the set of $\Sigma$-structures with
    \emph{finite} universe $U(K)$.

    Given two $\Sigma$-structures $K, K'$, a homomorphism $f : K \to K'$ is a
    function $f : U(K) \to U(K')$ such that
    $\forall \ R \in \Sigma \s \forall \ \vec{x} \in U^{\tt{ar}(R)} \s \cdot
    \s \vec{x} \in K(R) \implies f(\vec{x}) \in K'(R)$.
\end{definition}

Let $\cal{X}$ be a (countable) set of variables, $\Sigma$ a relational signature
and consider the logical formulae generated by the following context-free grammar:
$$
\phi \s ::= \s \top \s\vert\s x = x' \s\vert\s \phi \land \phi \s\vert\s
\exists \ x \cdot \phi \s\vert\s R(\vec{x})
$$
where $x, x' \in \cal{X}$, $R \in \Sigma$ and $\vec{x} \in \cal{X}^{\tt{ar}(R)}$.
Let us denote the variables of $\phi$ by $\tt{var}(\phi) \sub \cal{X}$, its free
variables by $\tt{fv}(\phi) \sub \tt{var}(\phi)$ and its atomic formulae by
$\tt{atoms}(\phi) \sub \coprod_{R \in \Sigma} \text{var}(\phi)^{\tt{ar}(R)}$,
i.e. an atomic formula is given by $R(x_1, \dots, x_{\tt{ar}(R)})$ for some
variables $x_i \in \cal{X}$.
This fragment is called \emph{regular logic} in the category-theory literature
\cite{fong2018}. It yields conjuntive queries via the \emph{prenex normal form}.

\begin{definition}
    Conjunctive queries $\phi \in \cal{Q}_\Sigma$ are the prenex normal form
    $\phi = \exists\ x_0 \cdots \exists \ x_k \cdot \phi'$ of regular logic
    formulae, for the bound variables $\set{x_0, \dots, x_k} = \tt{var}(\phi)
    \setminus \tt{fv}(\phi)$ and $\phi' = \bigwedge \tt{atoms}(\phi)$.
    We denote by :
    $$\cal{Q}_\Sigma(k) = \{\phi \in \cal{Q}_\Sigma\, \vert\, \tt{fv}(\phi)= k\}$$
    the set of conjunctive queries with $k$ free variables.
\end{definition}

\begin{proposition}\label{structure-functor-bijection}
    There is a bijective correspondence between relational structures
    with signature $\sigma: \Sigma \to \N = \set{x}^\ast$
    and monoidal functors $K: \Sigma \to \bf{Rel}$ such that $K(x) = U$.
\end{proposition}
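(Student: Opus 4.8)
The plan is to read this as an immediate consequence of the universal property of the free monoidal category, once both sides are unpacked correctly.

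First I would interpret the relational signature $\sigma : \Sigma \to \N$ as a monoidal signature over the single basic type $x$: since $\N \cong \set{x}^\ast$, each symbol $R \in \Sigma$ becomes a generator with trivial domain and codomain $x^{\otimes \sigma(R)}$, i.e. a state $R : 1 \to x^{\otimes \sigma(R)}$. The notation $K : \Sigma \to \bf{Rel}$ then denotes a strict monoidal functor out of the free monoidal category $\bf{MC}(\Sigma)$, and the clause $K(x) = U$ pins down the image of the unique generating object.

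Next I would invoke the free--forgetful adjunction $\bf{MC} \dashv U$ recalled in Section \ref{section-diagrams}: monoidal functors $\bf{MC}(\Sigma) \to \bf{Rel}$ are in bijection with morphisms of monoidal signatures $\Sigma \to U(\bf{Rel})$. Fixing the object part by $K(x) = U$ forces $K(x^{\otimes n}) = U^{\times n}$ by monoidality, so the only remaining freedom is the assignment, for each symbol $R$, of a morphism $K(R) : 1 \to U^{\times \sigma(R)}$ in $\bf{Rel}$. The crux is then a single identification: a morphism $1 \to U^{\times n}$ in $\bf{Rel}$ is by definition a relation, i.e. a subset of $1 \times U^n \cong U^n$, so that $\bf{Rel}(1, U^{\times n}) \cong \cal{P}(U^n)$. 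Under this identification the family $\set{K(R)}_{R \in \Sigma}$ is exactly a choice of interpretation $K(R) \sub U^{\sigma(R)}$ for every symbol, which together with the universe $U$ is precisely the data of a $\Sigma$-structure. I would then check that the two passages — from a relational structure to its freely induced functor, and from a functor back to the subsets obtained by restricting to the generators — are mutually inverse, which is immediate from the universal property.

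I expect the only delicate point to be bookkeeping rather than mathematics: one must verify that the monoidal coherence of $K$ matches the word $x^{\otimes n}$ with the $n$-fold cartesian product $U^{\times n}$ (the tensor of $\bf{Rel}$), and observe that no further structure of $\bf{Rel}$ (its compact-closed or Frobenius structure) is constrained, since every generator is a plain state. Hence an ordinary monoidal functor with $K(x)=U$ carries no more and no less than the data of a relational structure, giving the claimed bijection.
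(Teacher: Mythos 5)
Your proof is correct and takes essentially the same route as the paper's: both reduce the statement to the observation that, by the universal property of the free construction, the data of a monoidal functor $K: \Sigma \to \mathbf{Rel}$ with $K(x) = U$ is exactly a choice of state $K(R) \in \mathbf{Rel}(1, U^{\sigma(R)}) \cong \mathcal{P}(U^{\sigma(R)})$ for each symbol $R$, i.e.\ a $\Sigma$-structure. The paper merely states this more tersely (for general schemas over an attribute set $A$, with relational structures as the case $A = \{x\}$), so your explicit handling of the adjunction and the states-as-subsets identification fills in the same argument rather than departing from it.
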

\begin{proof}
    Given a schema $\tt{dom} : \Sigma \to A^\ast$, the data for a monoidal
    functor $K : \Sigma \to \bf{Rel}$ is an assignment of each $ a \in A$ to
    a set of data-values $D_a = K(a)$ and of each symbol $R \in \Sigma$ to a
    relation $K(R)\sub \prod_{a \in \tt{dom}(R)} D_a$. This is precisely
    the data of a relational database. Relational structures are a sub-example
    with $A = \set{x}$.
\end{proof}

Bonchi, Seeber and Sobocinski show that queries can be represented as diagrams
in the free cartesian bicategory, and that this translation is semantics preserving.
Let $\bf{CB}(\Sigma)$ be the free Cartesian bicategory generated by one
object $x$ and arrows $\set{ R : 1 \to x^\tt{ar}(R)}_{R \in \Sigma}$,
see \cite[def.~21]{BonchiEtAl18}.

\begin{proposition}(\cite[prop.~9,10]{BonchiEtAl18})\label{translation}
  There is a two-way translation between formulas and diagrams:
  $$\Theta : \cal{Q}_\Sigma \leftrightarrows \bf{CB}(\Sigma): \Lambda$$
  which preserves the semantics, i.e.
  such that for all $\phi, \phi' \in \cal{Q}_\Sigma$ we have
  $\phi \sub \phi' \iff \Theta(\phi) \leq \Theta(\phi')$,
  and for all arrows $d, d' \in \bf{CB}(\Sigma)$, $d \leq d' \iff \Lambda(d) \sub \Lambda(d')$.
\end{proposition}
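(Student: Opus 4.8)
The plan is to define the two translations $\Theta$ and $\Lambda$ by structural induction and then verify a \emph{semantic} adequacy statement — that a query and its diagram denote the same relation in every model — together with an \emph{order} adequacy statement relating query containment to the bicategorical preorder. Throughout I read the diagrammatic vocabulary of $\bf{CB}(\Sigma)$ under the dictionary built up in the previous subsections: a wire of type $x$ is a \emph{variable}, a generator box $R : 1 \to x^{\tt{ar}(R)}$ is an \emph{atomic formula} $R(\vec{x})$ on the variables naming its output wires, the comonoid comultiplication $\Delta$ (copy) realises \emph{sharing} of a variable across several atoms, a spider realises an \emph{equality} $x = x'$, and the counit $\epsilon$ (discarding a wire) realises \emph{existential quantification} $\exists x$. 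Free variables of $\phi$ become the dangling output wires of $\Theta(\phi)$, so that $\Theta(\phi) : 1 \to x^{\size{\tt{fv}(\phi)}}$.

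First I would spell out $\Theta$ on the grammar of regular logic: $\top$ maps to the discarding diagram, $x = x'$ to a spider identifying the two variables, $R(\vec{x})$ to the box $R$ with its outputs routed by copies to the variables it names, $\phi \land \psi$ to the parallel composite with shared variables fused by $\Delta, \nabla$, and $\exists x \cdot \phi$ to $\Theta(\phi)$ with the $x$-wire closed off by $\epsilon$. Conversely $\Lambda$ reads a formula off a diagram by naming every maximal connected bundle of wires (every spider) with a fresh variable, emitting one atom per box, one equality per explicit identification, and existentially binding every variable not reaching an output boundary. The first lemma to prove is that these are mutually inverse, up to provable equality in $\bf{CB}(\Sigma)$ on one side and logical equivalence of conjunctive queries on the other; this is a routine but careful induction, using the special-commutative-Frobenius axioms (\ref{axioms-hypergraph}) to normalise chains of $\Delta, \nabla, \eta, \epsilon$ into single spiders.

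The semantic step uses Proposition \ref{structure-functor-bijection}: every $\Sigma$-structure $K$ is the same datum as a monoidal functor $K : \Sigma \to \bf{Rel}$, which extends freely to a morphism of cartesian bicategories $\bf{CB}(\Sigma) \to \bf{Rel}$. I would then show by induction on $\phi$ that the relation $K(\Theta(\phi)) \sub U^{\tt{fv}(\phi)}$ computed by functorial semantics is exactly $\tt{eval}(\phi, K)$: copying corresponds to the repeated use of a variable in the satisfaction relation, the counit to the existential, and boxes to the interpreted predicates $K(R)$. This identifies the two notions of denotation on the nose, simultaneously for every model.

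The last and hardest part is order-reflection, i.e.\ $\phi \sub \phi' \iff \Theta(\phi) \leq \Theta(\phi')$ together with its dual through $\Lambda$. Soundness ($\leq$ implies $\sub$) is immediate from the semantic step, since $\leq$ is preserved by every cartesian-bicategory morphism and hence by every model $K$, forcing $\tt{eval}(\phi, K) \sub \tt{eval}(\phi', K)$ for all $K$. The converse — \textbf{completeness} of the bicategorical axioms for containment — is the genuine obstacle: from the purely semantic hypothesis $\phi \sub \phi'$ one must exhibit an inequality $\Theta(\phi) \leq \Theta(\phi')$ derivable from the axioms of Definition \ref{def-CB}. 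The route I would take is to reduce to the canonical structure and invoke Chandra--Merlin (Theorem \ref{chandra-merlin}): $\phi \sub \phi'$ holds iff there is a homomorphism $CM(\phi') \to CM(\phi)$ fixing the free variables, and such a homomorphism can be transcribed variable-by-variable into a derivation that merges and projects the wires of $\Theta(\phi)$ onto those of $\Theta(\phi')$ using only copy/merge and unit/counit — precisely the generators whose comparisons the Cartesian-bicategory laws make available. Assembling this transcription into a single derivable $2$-cell, and checking it is natural in the boundary, completes the equivalence; running the same argument through $\Lambda$ yields the statement for diagrams.
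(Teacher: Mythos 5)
Your proposal is correct in substance, but it does considerably more work than the paper, whose ``proof'' of Proposition \ref{translation} is essentially a citation: the result is imported wholesale from Bonchi, Seeber and Sobocinski \cite{BonchiEtAl18}, and the paper only records that the translation is defined by induction on the syntax of regular logic formulae and that open wires correspond to free variables (i.e. $\Theta(\phi) \in \bf{CB}(\Sigma)(0, n)$ for $n = \size{\tt{fv}(\phi)}$, and $\tt{fv}(\Lambda(d)) = m + n$ for $d \in \bf{CB}(\Sigma)(m, n)$). What you have written is, in effect, a reconstruction of the cited proof itself, and your strategy matches it: the inductive dictionary for $\Theta$ and $\Lambda$, mutual inverseness up to the Frobenius axioms, soundness via the fact that every $\Sigma$-structure (Proposition \ref{structure-functor-bijection} plus freeness) extends to an \emph{order-preserving} morphism $\bf{CB}(\Sigma) \to \bf{Rel}$, and completeness by transcribing a Chandra--Merlin homomorphism into a derivable $2$-cell. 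Two remarks on the completeness step, which is where the real content lies and where your sketch is slightly under-specified. First, merging and projecting wires with spiders is not enough: when the homomorphism $h : CM(\phi') \to CM(\phi)$ sends two distinct atoms of $\phi'$ to the same atom of $\phi$ you must \emph{duplicate} a box, and atoms of $\phi$ outside the image of $h$ must be \emph{discarded}; both require the lax naturality inequalities of Definition \ref{def-CB} (copy after a box is below two copies of the box after copy, and discarding a box's outputs is below discarding its inputs), in addition to the counit $\Delta \circ \nabla \leq \tt{id}$ of the $\Delta \dashv \nabla$ adjunction for un-merging variables. Second, your homomorphism runs $CM(\phi') \to CM(\phi)$, the standard Chandra--Merlin direction and the one the transcription actually needs; note that Theorem \ref{chandra-merlin} as printed in the paper writes the homomorphism in the opposite direction, so be careful which convention you invoke. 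In short: relative to the paper your route is ``different'' only in that you prove what the paper cites; relative to \cite{BonchiEtAl18} it is the same argument, modulo the glossed inequalities above.
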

\begin{proof}
  The translation is defined by induction from the syntax of regular logic
  formulae to that of GCQ diagrams and back. Note that given
  $\phi \in \cal{Q}_\Sigma$ with $\size{\tt{fv}(\phi)} = n$, we
  have $\Theta(\phi) \in \bf{CB}(\Sigma)(0, n)$ and similarly we have
  $\tt{fv}(\Lambda(d)) = m + n$ for $d \in \bf{CB}(\Sigma)(m, n)$, i.e.
  open wires correspond to free variables.
\end{proof}

\begin{example}
    The translation works as follows. Given a morphism in $\bf{CB}(\Sigma)$,
    normalized according to \ref{prop-hyp-normal-form},
    the spiders are interpreted as variables and the boxes as relational symbols.
    For example, assuming $A, B, C \in \Sigma$, the following morphism
    $f: x \to x \in \bf{CB}(\Sigma)$
    \begin{equation*}
        \tikzfig{figures/hypergraph-to-query}
    \end{equation*}
    is mapped to the query:
    $$\Lambda(f) = \exists x_1, x_2 \, \cdot \, A(x_0, x_1) \land B(x_0, x_1, x_2) \land C(x_1, x_3)$$
    The query from Example \ref{ex-1600-query}, is mapped by $\Theta$ to the
    diagram:
    \begin{equation*}
        \tikzfig{figures/query-to-diagram}
    \end{equation*}
\end{example}

\begin{proposition}\label{isomorphisms}
  Let $[\bf{CB}(\Sigma), \bf{Rel}]$ denote the set of morphisms of
  Cartesian bicategories, there are bijective correspondences between closed diagrams
  in $\bf{CB}(\Sigma)$, formulas in $\cal{Q}_\Sigma$ with no free variables
  and models with signature $\Sigma$.
  $$\bf{CB}(\Sigma)(0,0)
  \s\stackrel{(1)}{\simeq}\s \set{\phi \in \cal{Q}_\Sigma \ \vert\ \tt{fv}(\phi) = \varnothing}
  \s\stackrel{(2)}{\simeq}\s \cal{M}_\Sigma
  \s\stackrel{(3)}{\simeq}\s [\bf{CB}(\Sigma), \bf{Rel}]$$
\end{proposition}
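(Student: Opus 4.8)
The plan is to establish the three bijections $(1)$, $(2)$, $(3)$ independently, each by exhibiting an explicit pair of mutually inverse maps, and then to reconcile the different notions of equality that the three sides carry. The three tools I would draw on are: the semantics-preserving translation $\Theta : \cal{Q}_\Sigma \leftrightarrows \bf{CB}(\Sigma) : \Lambda$ of Proposition \ref{translation} for $(1)$; the canonical structure $CM$ together with the inverse construction $K \mapsto \phi_K$ appearing in the proof of Theorem \ref{chandra-merlin} for $(2)$; and the universal property of the free cartesian bicategory $\bf{CB}(\Sigma)$ for $(3)$, which is the cartesian-bicategory analogue of Proposition \ref{structure-functor-bijection}.

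For $(3)$ I would argue directly from freeness. Since $\bf{CB}(\Sigma)$ is the \emph{free} cartesian bicategory on one object $x$ and one generator $R : 1 \to x^{\tt{ar}(R)}$ for each $R \in \Sigma$, a morphism of cartesian bicategories $F : \bf{CB}(\Sigma) \to \bf{Rel}$ is determined uniquely by its values on the generators, and any such assignment extends. The datum of $F$ is therefore a set $U = F(x)$ together with, for each $R \in \Sigma$, a relation $F(R) \sub U^{\tt{ar}(R)}$. This is exactly the datum of a $\Sigma$-structure $K$ with universe $U$ and $K(R) = F(R)$, so $K \mapsto F_K$ and $F \mapsto (F(x), \set{F(R)}_{R \in \Sigma})$ are mutually inverse, giving $\cal{M}_\Sigma \simeq [\bf{CB}(\Sigma), \bf{Rel}]$.

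For $(2)$ I would use the canonical structure $CM$ and its inverse. In one direction, $CM$ sends a closed query $\phi$ to the finite structure with universe $\tt{var}(\phi)$ and $CM(\phi)(R) = \set{\vec{x} \s\vert\s R(\vec{x}) \in \tt{atoms}(\phi)}$. In the other, a finite structure $K$ is sent to the closed query $\phi_K$ with $\tt{var}(\phi_K) = U(K)$, all variables existentially bound, and one atom $R(\vec{a})$ for each tuple $\vec{a} \in K(R)$, so that $\tt{atoms}(\phi_K) = K$. Working with the canonical syntactic form for conjunctive queries (prenex, the conjunction read as a set of atoms, with trivial $\top$-conjuncts and equality atoms eliminated), a closed query is determined by the pair $(\tt{var}(\phi), \tt{atoms}(\phi))$, and a short check gives $CM(\phi_K) = K$ and $\phi_{CM(\phi)} = \phi$. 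Hence $CM$ is a bijection between closed queries in canonical form and finite $\Sigma$-structures.

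Finally, $(1)$ follows by restricting Proposition \ref{translation} to the closed case: since $\Theta(\phi) \in \bf{CB}(\Sigma)(0, n)$ for $\phi$ with $n$ free variables, closed queries land in $\bf{CB}(\Sigma)(0,0)$, and $\Lambda$ sends a closed diagram back to a query with no free variables. To upgrade this order-preserving correspondence to an honest bijection I would invoke the hypergraph normal form of Proposition \ref{prop-hyp-normal-form}: every morphism of $\bf{CB}(\Sigma)$ equals a normal diagram in which spiders are variables and boxes are relational symbols, and this normal form reads off directly as the canonical syntactic query. The main obstacle is not any single calculation but the reconciliation of the three equivalence relations at play — logical equivalence of queries, equality under the cartesian-bicategory axioms, and raw equality of structures. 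The delicate point is that, by Theorem \ref{chandra-merlin}, \emph{logically} equivalent closed queries correspond only to \emph{homomorphically} equivalent structures (same core), not to equal ones; since the statement takes $\cal{M}_\Sigma$ to be all finite structures, the bijections must be read on the nose via the canonical normal forms rather than up to semantic equivalence. Arranging these normal forms to match across the three settings — so that the same $\cal{M}_\Sigma$ legitimately plays both the role of canonical structures of queries and the role of interpretation functors, linked by $\tt{eval}(\phi, K) = \set{\text{homomorphisms } CM(\phi) \to K}$ — is where the care is required.
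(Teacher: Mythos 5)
Your proof is correct and takes essentially the same route as the paper, whose one-line proof cites exactly your three ingredients: the translation $\Theta \leftrightarrows \Lambda$ of Proposition \ref{translation} for (1), the $CM$ construction and its inverse from the proof of Theorem \ref{chandra-merlin} for (2), and Proposition \ref{structure-functor-bijection} together with the freeness of $\bf{CB}(\Sigma)$ for (3). Your closing observation — that logically equivalent closed queries correspond only to homomorphically equivalent structures, so the bijections must be read on the nose via canonical normal forms (with the hypergraph normal form of Proposition \ref{prop-hyp-normal-form} mediating between diagrams and structures) rather than up to semantic equivalence — is a real subtlety that the paper's proof leaves implicit, and your resolution of it is the correct reading of the statement.
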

\begin{proof}
  (1) follows from theorem \ref{translation}, (2) from theorem \ref{chandra-merlin}
  and (3) follows from proposition \ref{structure-functor-bijection} since
  any monoidal functor $\Sigma \to \bf{Rel}$ induces a morphism of Cartesian
  bicategories $\bf{CB}(\Sigma) \to \bf{Rel}$.
\end{proof}

\subsection{Relational models}\label{section-relational-models}

We have seen that relational databases are functors
$K: \Sigma \to \bf{Rel}$ from a relational signature $\Sigma$. It is natural to
to generalise this notion by considering functors $G \to \bf{Rel}$ where
$G$ is a formal grammar.
Any of the formal grammars studied in Chapter 1 may be used to build a relational
model. However, it is natural to pick a grammar $G$ that we can easily interpret
in $\bf{Rel}$. In other words, we are interested in grammars which have common
structure and properties with the category of relations.
Recall that $\bf{Rel}$ is compact-closed
with the diagonal and its transpose as cups and caps.
This makes rigid grammars particularly suited for
relational semantics, since we can interpret cups and caps using the compact
closed structure of $\bf{Rel}$.

\begin{definition}[Relational model]
    A relational model is a rigid monoidal functor $F : G \rightarrow \bf{Rel}$
    where $G$ is a rigid grammar.
\end{definition}

We illustrate relational models with an example.

\begin{example}[Truth values]
    Let us fix the vocabulary $V = U +
    \set{\text{read, wrote}}$, where $U = D_{w} \cup D_b \cup D_r$
    is the set of data values from Example \ref{ex-1600}.
    Consider the pregroup grammar defined by the following lexicon:
    $$ \Delta(x) = \set{n} \, , \quad
    \Delta(\text{read}) = \Delta(\text{wrote}) = \set{n^r s n^l}$$
    for all $x \in U \sub V$. We build a functor $F: \Delta \to \bf{Rel}$, defined
    on objects by $F(n) = U$, and $F(w) = 1 = F(s)$ for all $w \in V$,
    on proper nouns by $F(x \to n) = \set{x} \sub U$ for $x \in U \sub V$ and on
    verbs as follows:
    \begin{equation*}
        \tikzfig{figures/rel-pregroup-functor}
    \end{equation*}
    where $\rho : 1 \to U \otimes U \otimes U \in \bf{Rel}$ is the table (relation)
    from Example \ref{ex-1600}.
    Interpreting cups and caps in $\bf{RC}(\Delta)$ with their counterparts in
    $\bf{Rel}$, we can evaluate the semantics of the sentence
    $g :\text{Spinoza read Bruno} \to s$:
    \begin{equation*}
        \tikzfig{figures/rel-functor-1}
    \end{equation*}
    obtaining a morphisms $F(g) : 1 \to 1 \in \bf{Rel}$, which is simply a
    \emph{truth value} given by the evaluation of the following query:
    $$ F(g) = \top \iff \exists x, y \in U \, \cdot \, F(\text{Spinoza})(x) \land
    F(\text{read})(x, y) \land F(\text{Bruno})(y) $$
    which is true with our definition of $F$.
\end{example}

From this example, we see that relational models can be used to give a truth
theoretic semantics by interpreting the sentence type $s$ as the unit of the
tensor in $\bf{Rel}$. We can also use these models to answer questions.

\begin{example}[Questions]\label{1600-questions}
    Add a question type $q \in B$ and the question word $\text{Who} \in V$ to
    the pregroup grammar above with $\Delta(\text{Who}) = q\, s^l \, n$. Then
    there is a grammatical question $g_q : \text{Who read De Causa} \to s$
    in $\bf{RC}(\Delta)$ given by the following diagram:
    \begin{equation*}
        \tikzfig{figures/rel-functor-2}
    \end{equation*}
    Let $F(q) = U$ and $F(\text{Who} \to q\, s^l \, n) =
    \tt{cap}_U \sub U \otimes U$. Then evaluating the question above in $F$
    yields $F(g_q) = \set{\text{Spinoza, Florio}} \sub U$.
\end{example}

As shown by Sadrzadeh et al. \cite{sadrzadeh2014}, we may give semantics to the
relative pronoun ``that'' using the Frobenius algebra in $\bf{Rel}$.

\begin{example}[Relative pronouns]\label{ex:rel-pron}
    Add the following lexical entries:
    $$\Delta(\text{that}) = \set{n^r \, n \, s^l \, n^{ll}, n^r \, n \, s^l \, n}\, , \quad \Delta(\text{a}) = \set{d}\, ,
    \quad \Delta(\text{book}) = \set{d^r n} \, .$$.
    Then there is a grammatical question
    $g_q' : \text{Who read a book that Bruno wrote} \to q$
    in $\bf{RC}(\Delta)$ given by the following diagram:
    \begin{equation*}
        \tikzfig{figures/rel-functor-3}
    \end{equation*}
    Let $F(d) = 1$, $F(\text{a} \to d) = \top$,
    $F(\text{that} \to n^r \, n \, s^l \, n) =
    \nu \cdot \delta \cdot (\delta \otimes \tt{id}_U): 1 \to U^{3}$
    (the spider with 3 outputs and 0 inputs) and
    $F(\text{book} \to d^r \, n) = D_{b} \sub U$. Then evaluating the question
    above in $F$ yields $F(g_q') = \set{\text{Spinoza, Florio, Galileo}} \sub U$,
    as expected.
\end{example}

The first linguistic problem that we consider is the task of computing the
semantics of a sentence $u \in \cal{L}(G)$ for a pregroup grammar $G$ in a given
relational model $F: G \to \bf{Rel}$. Throughout this section and the next, we
assume that $G$ is a rigid grammar.

\begin{definition}
    \begin{problem}
      \problemtitle{$\tt{RelSemantics}(G)$}
      \probleminput{$g \in \bf{RC}(G)(u, s)$, $F: G \to \bf{Rel}$}
      \problemoutput{$F(g)$}
    \end{problem}
\end{definition}

Since $\bf{Rel}$ is a cartesian bicategory, any monoidal functor from
$G$ to $\bf{Rel}$ must factor through a free cartesian bicategory.

\begin{lemma}\label{lemma-factorisation}
    Let $G= (V, B, \Delta, s)$ be a pregroup grammar.
    Any relational model $F: \bf{RC}(G) \to \bf{Rel}$ factors through a free
    cartesian bicategory $\bf{RC}(G) \to \bf{CB}(\Sigma) \to \bf{Rel}$, where
    $\Sigma$ is obtained from $\Delta$ via the map $P(B) \to B^\ast$ which
    forgets the adjoint structure.
\end{lemma}
\begin{proof}
    This follows from the universal property of the free cartesian bicategory.
\end{proof}

This lemma allows to reduce the problem of computing the semantics of sentences in
$\bf{Rel}$ to $\tt{Evaluation}$, thus proving its membership in $\tt{NP}$.

\begin{proposition}\label{prop-rel-np}
  There is a logspace reduction from $\tt{RelSemantics}(G)$ to conjunctive query
  $\tt{Evaluation}$, hence $\tt{RelSemantics} \in \tt{NP}$.
\end{proposition}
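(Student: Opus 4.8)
The plan is to use the factorisation of Lemma \ref{lemma-factorisation} to reduce the computation of $F(g)$ to the evaluation of a conjunctive query, and then to verify that every step of this reduction is carried out in logarithmic space. Given an instance $(g, F)$ of $\tt{RelSemantics}(G)$, with $g \in \bf{RC}(G)(u, s)$ and $F : \bf{RC}(G) \to \bf{Rel}$, Lemma \ref{lemma-factorisation} guarantees that $F$ factors as $\bf{RC}(G) \xto{R} \bf{CB}(\Sigma) \xto{K} \bf{Rel}$, where $\Sigma$ is the relational signature obtained from the lexicon $\Delta$ by forgetting the adjoint structure via $P(B) \to B^\ast$. The idea is that the first factor $R$ turns the pregroup reduction $g$ into a diagram in the free cartesian bicategory, the translation of Bonchi et al. turns this diagram into a conjunctive query, and the second factor $K$ is precisely a relational structure in the sense of Proposition \ref{structure-functor-bijection}. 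Evaluating the query against this structure then returns $F(g)$.

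Concretely, I would proceed as follows. First, compute $d := R(g) \in \bf{CB}(\Sigma)$; since $R$ is a functorial reduction, this is computable in logspace by Proposition \ref{prop-functorial-reduction}. Second, put $d$ in the hypergraph normal form of Proposition \ref{prop-hyp-normal-form} and apply the translation $\Lambda$ of Proposition \ref{translation} to obtain a conjunctive query $\phi := \Lambda(d) \in \cal{Q}_\Sigma$, reading spiders as variables and boxes as relational atoms; the free variables of $\phi$ correspond to the open wires of $d$, hence to the image of $s$ under $F$. Third, read off the finite relational structure $K \in \cal{M}_\Sigma$ from the action of the second factor on the generators of $\Sigma$, using the bijection of Proposition \ref{structure-functor-bijection}. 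By the semantics-preservation property of $\Lambda$ and the fact that $K$ interprets $\bf{CB}(\Sigma)$ in $\bf{Rel}$ exactly as the given model does, we obtain $F(g) = \tt{eval}(\phi, K)$, so that $(\phi, K)$ is the desired instance of $\tt{Evaluation}$.

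It then remains to assemble the complexity bound. Each of the three steps above is a logspace computation, and logspace reductions compose, so $(g, F) \mapsto (\phi, K)$ is a logspace reduction from $\tt{RelSemantics}(G)$ to $\tt{Evaluation}$. Since $\tt{Evaluation}$ lies in $\tt{NP}$ by the Chandra--Merlin Theorem \ref{chandra-merlin}, and $\tt{NP}$ is closed under logspace (indeed polynomial-time) reductions, we conclude $\tt{RelSemantics} \in \tt{NP}$.

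The main obstacle I anticipate is the bookkeeping needed to keep the whole reduction within logspace rather than merely polynomial time. The delicate points are that the translation $\Lambda$, which Bonchi et al. define by structural induction on diagrams, can be realised by a single left-to-right pass over the premonoidal encoding of $d$ maintaining only a constant number of counters for the spider (variable) indices, and that the hypergraph normalisation of Proposition \ref{prop-hyp-normal-form} needed to read off a clean incidence structure does not blow up the space usage. Establishing these, together with the careful matching of the open wires of $d$ with the free variables of $\phi$ so that $F(g)$ and $\tt{eval}(\phi, K)$ agree as subsets of the same power of $U(K)$, is where the real work lies; the remaining verifications are routine applications of the cited results.
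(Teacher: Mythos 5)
Your proposal is correct and follows essentially the same route as the paper's own (much terser) proof: factor $F$ through the free cartesian bicategory via Lemma \ref{lemma-factorisation}, translate the resulting diagram into a conjunctive query $\phi = \Lambda(L(g))$ with $\tt{eval}(\phi, K) = F(g)$ using Proposition \ref{translation}, and conclude membership in $\tt{NP}$ from Theorem \ref{chandra-merlin}. The extra bookkeeping you flag --- the logspace implementation of $\Lambda$ and the matching of open wires to free variables --- is careful elaboration of claims the paper simply asserts, not a different argument.
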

\begin{proof}
  The factorisation $K \circ L = F$ of lemma~\ref{lemma-factorisation} and
  the translation $\Lambda$ of theorem~\ref{translation} are in logspace,
  they give a query $\phi = \Lambda (L(r)) \in \cal{Q}_\Delta$ such that
  $\tt{eval}(\phi, K) = F(r)$.
\end{proof}

The queries that arise from a pregroup grammar are a particular subclass
of conjunctive queries. This leads to the question: what is the complexity
of $\tt{Evaluation}$ for this class of conjunctive queries?
We conjecture that these queries have bounded treewidth, i.e. that
they satisfy the tractability condition for the CSP dichotomy theorem \cite{Bulatov17}.

\begin{conjecture}
  For any pregroup grammar $G$, $\tt{RelSemantics}(G)$ is poly-time computable
  in the size of $(u, s) \in List(V) \times P(B)$ and in the size of the
  functor $F$.
\end{conjecture}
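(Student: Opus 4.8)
The plan is to establish the conjecture along the route already sketched: reduce $\tt{RelSemantics}(G)$ to conjunctive query $\tt{Evaluation}$, exhibit a uniform bound on the treewidth of the resulting queries, and then invoke the tractability of bounded-treewidth constraint satisfaction \cite{DalmauEtAl02, Bulatov17}. First I would use the polynomial-time parsing of pregroup grammars together with the Switching Lemma (Proposition \ref{prop-switching}): every grammatical $u$ admits a reduction $g : u \to s$ in $\bf{RC}(G)$ built from lexical entries followed by a diagram of cups (contractions) only. Composing the factorisation of Lemma \ref{lemma-factorisation} with the translation $\Lambda$ of Proposition \ref{translation} produces, in logarithmic space, a query $\phi = \Lambda(L(g)) \in \cal{Q}_\Sigma$ with $\tt{eval}(\phi, K) = F(g)$ for $K$ the relational structure underlying $F$; this is exactly Proposition \ref{prop-rel-np}. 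Since $G$ is fixed, the maximal length $c$ of a pregroup type in the lexicon $\Delta$ is a constant, each word contributes at most $c$ type-atoms, and so $\size{\phi} = O(\size{u})$.

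The heart of the argument, and the step I expect to be the main obstacle, is bounding the treewidth of the canonical structure $CM(\phi)$ by a constant depending only on $G$. I would read off the query hypergraph directly from the cup-only reduction $g$: its hyperedges are the word-states $F(w_i)$, each of arity at most $c$, laid out left to right according to the sentence, and its vertices are the spiders obtained by quotienting the type-atom occurrences under the equalities imposed by the cups. Because $\bf{RC}(G)$ is rigid and the factorisation through $\bf{CB}(\Sigma)$ introduces no swaps, this link structure is a non-crossing matching on the linearly ordered type-atoms — precisely the planarity guaranteed by the hypergraph normal form of Proposition \ref{prop-hyp-normal-form}.

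To convert planarity into a treewidth bound I would contract the contiguous block of type-atoms belonging to each word to a single super-vertex. Collapsing contiguous blocks is an order-preserving map of the line, and such a map can never turn a non-crossing matching into a crossing one, so the contracted multigraph is outerplanar and has treewidth at most $2$. Un-contracting — replacing each super-vertex by the clique on its at most $c$ original atoms — inflates the treewidth by a factor of at most $c$, by the standard observation that expanding every vertex of a width-$t$ tree decomposition into a set of size $\leq c$ yields a valid decomposition of width $\leq c(t+1)-1$. Hence the treewidth of $CM(\phi)$ is at most $3c-1$, a constant fixed by the grammar.

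Finally I would appeal to the tractability of bounded-treewidth conjunctive queries \cite{DalmauEtAl02}: a query whose canonical structure has treewidth at most $k$ can be evaluated against a structure $K$ in time $O(\size{\phi}\cdot\size{K}^{k+1})$. With $k \leq 3c-1 = O(1)$, $\size{\phi} = O(\size{u})$ and $\size{K} = O(\size{F})$, this delivers the bound polynomial in $\size{(u,s)}$ and $\size{F}$. The delicate points left to discharge are the exact identification of the query hypergraph with the cup-structure — in particular verifying that within-word cups and parallel edges do not disturb the outerplanarity argument — and confirming that it is the combined-complexity form of \cite{DalmauEtAl02}, rather than only its data-complexity form, that is being applied.
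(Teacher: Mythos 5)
You should first note a mismatch with the premise of this exercise: the statement you are proving is stated in the paper as a \emph{conjecture}, with no proof given --- indeed the authors explicitly say, in the related discussion of bubblewidth in \ref{section-disco}, that they ``were unable to show pregroup reductions have bounded bubblewidth in general''. So your proposal is not an alternative to the paper's argument; it is an attempt to settle an open question, and it follows exactly the route the paper sketches (reduce to $\mathtt{Evaluation}$ via Lemma \ref{lemma-factorisation} and Proposition \ref{translation}, bound the treewidth, invoke bounded-treewidth tractability). Your genuinely new contribution is the treewidth bound itself, and the core of it is sound: for a reduction consisting of lexical entries followed only by cups, the atoms of the word types lie on a line in contiguous blocks of size at most the lexicon constant $c$, the cups form a non-crossing matching, contracting each block is a monotone map of the line and so preserves non-crossingness (crossings can only degenerate into shared endpoints or parallel edges, both harmless), the contracted multigraph embeds with all vertices on a line and arcs in a half-plane and is therefore outerplanar of treewidth at most $2$, and the clique blow-up lemma then gives treewidth at most $3c-1$ for the Gaifman graph of $CM(\phi)$, since that graph is a minor of the matching-plus-word-cliques graph obtained by contracting the cup edges. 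This would prove the paper's bounded-treewidth conjecture for cup-only parses, and note that it is consistent with the paper's observations: treewidth, unlike left-to-right bubblewidth, is insensitive to deep nesting, and the cyclic reductions of (\ref{ex-cyclic-pregroup}) have small treewidth.

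The genuine gap is your appeal to the Switching Lemma. The input to $\mathtt{RelSemantics}(G)$ is an \emph{arbitrary} morphism $g \in \mathbf{RC}(G)(u,s)$, and Proposition \ref{prop-switching} does not put $g$ in cup-only form: it factors $g$ as an expansion (cap) phase followed by a contraction (cup) phase. The ``parsed using only contractions'' corollary is an existence statement about \emph{some} parse of $u$, not an equality of morphisms, and your argument needs an equality, since distinct reductions $u \to s$ can have distinct images under $F$. With caps present, the intermediate type can be unboundedly long, the word blocks are no longer the only atoms, and planarity alone gives nothing (grids are planar). What rescues you is that functors preserve equations of $\mathbf{RC}(G)$, so you may replace $g$ by any equal morphism: in the free rigid category, snake-yanking normalisation (poly-time, per the algorithm of \cite{dunn2019} that the paper invokes for \texttt{rigid.Diagram.normal\_form}) reduces any $g : u \to s$ to lexical entries, induced steps, a non-crossing cup matching, and possibly closed loops. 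You must then check the residual cases your write-up omits: loops contribute only scalar factors or small disconnected components to the query (treewidth $2$); induced steps are edge subdivisions and do not raise treewidth; and within-word cups, which you correctly flag, become self-loops on super-vertices and are absorbed by the word cliques after blow-up. Finally, your worry about combined versus data complexity is resolved favourably: bounded-treewidth conjunctive queries admit evaluation in time polynomial in $\size{\phi}\cdot\size{K}^{k+1}$, i.e.\ in combined complexity \cite{DalmauEtAl02} --- though for the output $\mathtt{eval}(\phi,K) \sub U^{\mathtt{fv}(\phi)}$ to be of polynomial size you must also assume, as all the paper's examples do, that the sentence type $s$ has bounded length. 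With these repairs your argument appears to close the paper's conjecture; the switching-lemma step as you stated it, however, is not valid as written.
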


\subsection{Entailment and question answering}

We have seen that relational models induce functors $L: G \to \bf{CB}(\Sigma)$,
turning sentences into conjunctive queries.
Thus we can test whether a sentence $u \in \cal{L}(G)$ entails a second sentence
$u' \in \cal{L}(G)$ by checking containment of the corresponding queries.
More generally, we may consider models in a \emph{finitely presented} cartesian
bicategory $\bf{C}$, i.e. a cartesian bicategory equipped with a finite set of
\emph{existential rules} of the form
$\forall \ x_0 \ \cdots \ \forall \ x_k \ \cdot \ \phi \to \phi'$ for
$\phi, \phi' \in \bf{CB}(\Sigma)$ with
$\tt{fv}(\phi) = \tt{fv}(\phi') = \set{x_0, \dots, x_k}$. These are also called
tuple-generating dependencies in database theory \cite{Thomazo13}. They will
allow us to model more interesting forms of entailment in natural language.

\begin{definition}[CB model]
    A CB model for a rigid grammar $G$ is a monoidal functor $L : G \to \bf{C}$
    where $\bf{C}$ is a finitely presented Cartesian bicategory.
\end{definition}

\begin{example}
    Take $\bf{C}$ to be the Cartesian bicategory generated by the signature
    $\Sigma = \set{\text{Leib, Spi, infl, calc, phil, \dots}}$ as
    1-arrows with codomain given by the function $\tt{ar} : \Sigma \to \N$ and
    the following set of 2-arrows:
    \begin{equation*}
        \tikzfig{figures/existential-rules}
    \end{equation*}
    The composition of 2-arrows in $\bf{C}$ allows us to compute entailment, e.g.:
    \begin{equation*}
        \tikzfig{figures/derivation}
    \end{equation*}
    where the second and third inequations follow from the axioms of definition~\ref{def-CB},
    the first and last from the generating 2-arrows.

    Starting from the pregroup grammar $G$ defined in \ref{section-relational-models},
    and adding the following lexical entries:
    $$\Delta(\text{influenced}) = \Delta(\text{discovered}) = \set{n^r s n^l},
    \: \Delta(\text{calculus}) = \set{n}, \: \Delta(\text{philosopher}) = \set{d^r n} .$$
    We may construct a functor $L : G \to \bf{CB}(\Sigma)$ given on objects by
    $L(w) = L(s) = 1$ and $L(n) = x$, and on arrows by sending every lexical
    entry to the corresponding symbol in $\Sigma$ except for the question
    word ``who'' which is interpreted as a cap and the functional word ``that''
    which is interpreted as a spider with three outputs. Then one may check that
    the image of the sentence ``Spinoza influenced a philosopher that
    discovered calculus''  is grammatical in $G$ and that the corresponding
    pregroup reduction is mapped via $L$ to the last diagram in the derivation
    above.
\end{example}

\begin{definition}
    \begin{problem}
    \problemtitle{$\tt{Entailment}$}
    \probleminput{$r \in \bf{RC}(G)(u, s), \quad r' \in \bf{RC}(G)(u', s), \quad L : \bf{RC}(G) \to \bf{C}$}
    \problemoutput{$L(r) \leq L(r')$}
  \end{problem}
\end{definition}

\begin{proposition}\label{prop-entailment}
  $\tt{Entailment}$ is undecidable for finitely presented Cartesian bicategories.
  When $\bf{C}$ is freely generated (i.e. it has no existential rules),
  the problem reduces to conjunctive query $\tt{Containment}$.
\end{proposition}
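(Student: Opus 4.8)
The plan is to treat the two assertions separately, since they pull in opposite directions. For the second (freely generated) case, everything reduces to the machinery already in place: the translation $\Lambda$ of Proposition \ref{translation}. Given inputs $r \in \bf{RC}(G)(u, s)$, $r' \in \bf{RC}(G)(u', s)$ and a free model $L : \bf{RC}(G) \to \bf{CB}(\Sigma)$, I would first compute the diagrams $L(r), L(r') \in \bf{CB}(\Sigma)$; this is a functorial reduction and hence in logspace by Proposition \ref{prop-functorial-reduction}. Since $\Lambda$ is semantics-preserving on arrows, $L(r) \leq L(r')$ holds if and only if $\Lambda(L(r)) \sub \Lambda(L(r'))$. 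Because $r$ and $r'$ share the codomain $s$ with $L(s) = 1$, both translate to conjunctive queries with the same (possibly empty) set of free variables, so the resulting instance is genuinely an instance of $\tt{Containment}$. As $\Lambda$ is itself computable in logspace by Proposition \ref{translation}, this yields the claimed reduction.

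For the first (undecidability) claim, the key observation is that a finitely presented cartesian bicategory $\bf{C}$ is exactly a free cartesian bicategory $\bf{CB}(\Sigma)$ quotiented by a finite set of $2$-cell inequations $\phi \to \phi'$ with $\tt{fv}(\phi) = \tt{fv}(\phi')$, which under $\Lambda$ are precisely tuple-generating dependencies \cite{Thomazo13}, i.e.\ a finite theory $\Gamma$ over $\Sigma$. Under the translation, $L(r) \leq L(r')$ in $\bf{C}$ becomes containment of the conjunctive query $\Lambda(L(r))$ in $\Lambda(L(r'))$ \emph{modulo} $\Gamma$ --- that is, over all $\Sigma$-structures satisfying $\Gamma$. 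The plan is then to reduce from the (boolean) conjunctive query containment problem under tuple-generating dependencies, which is undecidable. Given such an instance consisting of a schema $\Sigma$, a finite theory $\Gamma$ and boolean queries $q, q'$, I would build $\bf{C} = \bf{CB}(\Sigma)/\Gamma$ and, mirroring the encoding in the proof of Proposition \ref{prop-pspace}, take $G$ to be a trivial grammar with two sentence-typed words $w_q, w_{q'}$, the identity reductions $r : w_q \to s$ and $r' : w_{q'} \to s$, and $L$ defined by $L(s) = 1$, $L(w_q) = \Theta(q)$, $L(w_{q'}) = \Theta(q')$. Then $L(r) \leq L(r')$ holds iff $q \sub q'$ modulo $\Gamma$, so undecidability of the latter transfers to $\tt{Entailment}$.

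The hard part will not be the free case, which is essentially a bookkeeping consequence of Proposition \ref{translation}, but rather making the first correspondence precise: I need to verify that the $2$-cells of a finitely presented cartesian bicategory, transported through $\Lambda$ and $\Theta$, really do correspond bijectively (and semantics-preservingly) to a theory of tuple-generating dependencies, including the bound/free-variable matching, and that the quotient of the bicategory computes exactly the certain-answer semantics of $\Gamma$. Pinning down which undecidability source to cite --- and checking that its instances land inside the subclass of queries realisable as $\Theta(q)$ with matching free variables --- is the delicate step; the reduction itself is then routine once this dictionary between presented cartesian bicategories and dependency theories is established.
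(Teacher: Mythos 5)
Your proposal is correct and follows essentially the same route as the paper: the paper's (very terse) proof likewise cites the undecidability of conjunctive query entailment under existential rules \cite{BagetMugnier02} for the first claim, and invokes the semantics-preserving translation of Proposition \ref{translation} to reduce the freely generated case to $\tt{Containment}$ in logspace. Your explicit encoding via a trivial grammar with sentence-typed words (mirroring Proposition \ref{prop-pspace}) and your identification of the $2$-cells with tuple-generating dependencies simply spell out the dictionary that the paper leaves implicit.
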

\begin{proof}
    Entailment of conjunctive queries under existential rules is undecidable,
    see \cite{BagetMugnier02}. When $\bf{C} = \bf{CB}(\Sigma)$ is freely
    generated by a relational signature $\Sigma$, i.e. with no existential rules,
    theorem~\ref{translation} yields a logspace reduction to
    $\tt{Containment}$: $\tt{Entailment} \in \tt{NP}$.
\end{proof}

We now consider the following computational problem: given a natural language
corpus and a question, does the corpus contain an answer?
We show how to translate a corpus into a relational database so that question
answering reduces to query evaluation.

In order to translate a corpus into a relational database, it is not sufficient
to parse every sentence independently, since the resulting queries will have
disjoint sets of variables. The extra data that we need is a
\emph{coreference resolution}, which allows to link the common entities
mentioned in these sentences. In \ref{section-coreference}, we defined
a notion of pregroup grammar with coreference $G$ which allows to represent a
corpus of $k$ sentences as one big diagram $C \in \bf{Coref}(G)$, assuming that
both the pregroup parsing and the coreference resolution have been performed.
In order to interpret a pregroup grammar with coreference $G= (V, B, \Delta, I, R, s)$
in a cartesian bicategory $\bf{C}$, it is sufficient to fix a CB model from the pregroup
grammar $(V, B, \Delta, I, s)$ into $\bf{C}$ and choose an image for the
reference types in $R$. Then the coreference resolution is interpreted
using the Frobenius algebra in $\bf{C}$.

Fix a pregroup grammar with coreference $G$ and a CB model
$L : \bf{Coref}(G) \to \bf{CB}(\Sigma)$ with $L(s) = 0$,
i.e. grammatical sentences are mapped to closed formulae.
We assume that $L(q) = L(a)$ for $q$ and $a$ the question and answer
types respectively, i.e. both are mapped to queries with the same number of
free variable. Lexical items such as ``influence'' and ``Leibniz'' are mapped
to their own symbol in the relational signature $\Sigma$, whereas functional
words such as relative pronouns are sent to the Frobenius algebra of
$\bf{CB}(\Sigma)$, see \ref{section-relational-models} or \cite{sadrzadeh2013}.

We describe a procedure for answering a question given a corpus.
Suppose we are given a corpus $u \in V^\ast$ with $k$ sentences. Parsing and
resolving the coreference yields a morphism $C \in \bf{Coref}(G)(u, s^k)$. Using $L$ we
obtain a relational database from $C$ given by the canonical structure
induced by the corresponding query $K(C) = CM(L(C))$,
we denote by $E := \tt{var}(L(C))$ the universe of this relational structure.
Given a parsed question $g: v \to q \in \bf{Coref}(G)$,
we can answer the question $g$ by evaluating it in the model $K(C)$.

\begin{definition}
    \begin{problem}
    \problemtitle{$\tt{QuestionAnswering}$}
    \probleminput{$C \in \bf{Coref}(G)(u, s^k), g \in \bf{Coref}(G)(v, q)$}
    \problemoutput{$\tt{Evaluation}(K, L(g)) \sub E^{\tt{fv}(L(g))} \quad$
                   where $K = CM(L(C))$}
  \end{problem}
\end{definition}

\begin{proposition}\label{prop-qa}
  $\tt{QuestionAnswering}$ is $\tt{NP-complete}$.
\end{proposition}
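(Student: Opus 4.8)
The plan is to prove both directions separately, in each case routing through the Chandra--Merlin correspondence (Theorem \ref{chandra-merlin}) between conjunctive query $\tt{Evaluation}$ and $\tt{Homomorphism}$. As with $\tt{Evaluation}$ itself, I read NP-completeness of this function problem through its associated decision problem (whether a given tuple lies in the output set, equivalently whether the output is non-empty).

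For membership in $\tt{NP}$, I would observe that the problem is, up to a logspace preprocessing step, an instance of $\tt{Evaluation}$, reusing the argument of Proposition \ref{prop-rel-np} in the coreference setting. Given the corpus $C \in \bf{Coref}(G)(u, s^k)$ and the question $g \in \bf{Coref}(G)(v, q)$, applying the CB model $L$ is a functorial reduction, hence logspace by Proposition \ref{prop-functorial-reduction}; the translation $\Lambda$ from $\bf{CB}(\Sigma)$ diagrams to conjunctive queries is logspace by Proposition \ref{translation}; and forming the canonical structure $K = CM(L(C))$ is a direct logspace construction, reading vertices as variables and boxes as tuples. This yields, in logspace, a query $\phi = \Lambda(L(g)) \in \cal{Q}_\Sigma$ and a structure $K$ with $\tt{eval}(\phi, K) = \tt{Evaluation}(K, L(g))$. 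Since $\tt{Evaluation} \in \tt{NP}$ by Theorem \ref{chandra-merlin}, so is $\tt{QuestionAnswering}$.

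For $\tt{NP}$-hardness, I would give a logspace reduction from $\tt{Evaluation}$. Given an arbitrary $\phi \in \cal{Q}_\Sigma$ and a finite $K' \in \cal{M}_\Sigma$, I first replace $K'$ by a closed query $\psi$ with $CM(\psi) = K'$, which exists because the canonical-structure construction is onto: set $\tt{var}(\psi) = U(K')$, $\tt{fv}(\psi) = \varnothing$ and $\tt{atoms}(\psi) = K'$, exactly as in the last line of the proof of Theorem \ref{chandra-merlin}. The heart of the reduction is then to realise both $\psi$ and $\phi$ as images of grammatical reductions under one CB model. Proposition \ref{prop-svo-argument} supplies precisely this: for the hypergraph signature $\Sigma$ there is a pregroup grammar with coreference and a full functor $J : \bf{Coref}(\Delta(\sigma)) \to \bf{Hyp}(\Sigma)$ under which every closed diagram $d : 1 \to 1$ in $\bf{Hyp}(\Sigma)$ arises as $J$ of a $k$-grammatical reduction. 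Via the equivalences of Proposition \ref{isomorphisms} and the translation $\Theta$ of Proposition \ref{translation}, the queries $\psi$ and $\phi$ correspond to diagrams in $\bf{CB}(\Sigma) \simeq \bf{Hyp}(\Sigma)$ (closed for $\psi$, with $\tt{fv}(\phi)$ open wires for $\phi$), so I obtain a corpus $C$ and a question $g$ with $L(C) = \Theta(\psi)$ and $L(g) = \Theta(\phi)$. Then $CM(L(C)) = K'$ and $\tt{Evaluation}(K', L(g)) = \tt{eval}(\phi, K')$, giving the reduction.

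The main obstacle will be the hardness direction, specifically the bookkeeping needed to make the two applications of Proposition \ref{prop-svo-argument} compatible. I must ensure that a single grammar $G$ and single model $L$ serve for both $C$ and $g$, that the free variables of $L(g)$ are correctly identified with the universe $E = \tt{var}(L(C))$ of the induced database, and that the question type is interpreted so that $\tt{fv}(\Lambda(L(g)))$ matches $\tt{fv}(\phi)$. These are exactly the coherence conditions already imposed on $L$ in the setup ($L(q) = L(a)$ and $L(s) = 0$), so the verification is routine once the encoding is fixed; aligning the variable identifications is where care is needed.
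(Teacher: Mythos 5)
Your proposal is correct and takes essentially the same route as the paper: membership by (logspace) reduction to conjunctive query $\tt{Evaluation}$, and hardness by reduction from $\tt{Evaluation}$ using Proposition \ref{prop-svo-argument} to realise the structure as a corpus and the query as a question. Your write-up merely makes explicit two steps the paper leaves implicit, namely the logspace computation of $\phi = \Lambda(L(g))$ and $K = CM(L(C))$ for membership, and the encoding of $K'$ as a closed query $\psi$ with $CM(\psi) = K'$ (hidden in the paper's abuse of notation $L(C) = K$) for hardness.
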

\begin{proof}
  Membership follows immediately by reduction to $\tt{Evaluation}$.
  Hardness follows by reduction from $\tt{Evaluation}$. Indeed fix any
  relational structure $K$ and query $\phi$, using Proposition \ref{prop-svo-argument},
  we may build a corpus $C \in \bf{Coref}(G)$ and a question $g \in \bf{Coref}(G)$ such
  that $L(C) = K$ and $L(g) = \phi$.
\end{proof}

Note that this is an asymptotic result. In practice, the questions we may ask
are small compared to the size of the corpus. This would make the problem
tractable since $\tt{Evaluation}$ is only $\tt{NP}$-complete in the
combined size of database and query, but it becomes polytime computable when the
query is fixed \cite{Thomazo13}.

\section{Tensor network models}\label{sec-tensor-network}

Tensors arose in the work of Ricci and Levi-Civita in the end of the 19th
century \cite{ricci1900}. They were adopted by Einstein \cite{einstein1916},
who used repeated indices to denote their compositions, and applied to
quantum mechanics by Heisenberg \cite{kramers1925} to describe the possible
states of quantum systems.

In the 1970s, Penrose introduced a diagrammatic notation for manipulating tensor
expressions \cite{penrose1971}: wires represent vector spaces, nodes represent
multi-linear maps between them.
This work was one of the main motivations behind Joyal and Street's
graphical calculus for monoidal categories \cite{joyal1991geometry}, which
was later adopted in the development of Categorical Quantum Mechanics (CQM)
\cite{abramsky2008}.
The same notation is widely used in the Tensor Networks (TN) community
\cite{eisert2013, biamonte2015, desrosiers2019}.
Until recently, CQM and TN remained separated fields because they were interested
in different aspects of these graphical networks. On the one hand, rewriting
and axiomatics. On the other, fast methods for
tensor contraction and complexity theoretic guarantees.
These two lines of research are now seeing a fruitful exchange of ideas as
category theorists become more applied and vice-versa.
For instance, rewriting strategies developed in the context
of categorical quantum mechanics can be used to speed-up quantum computations
\cite{kissinger2020}, or solve satisfiability and counting problems
\cite{debeaudrap2020, townsend-teague2021a}.

The tools and methods developed by these communities are finding many
applications in artificial intelligence.
Tensor networks are widely used in machine learning, supported by efficient
contraction tools such as Google's TensorNetwork library \cite{efthymiou2019},
and are beginning to be applied to natural language processing
\cite{pestun2017a, zhang2019}.
Distributional Compositional models of meaning \cite{DisCoCat08, DisCoCat11}
(DisCoCat) arise naturally from Categorical Quantum Mechanics \cite{abramsky2008, coecke2017a},
In a nutshell, CQM provides us with graphical calculi to reason about tensor
networks, and DisCoCat provides a way of mapping natural language to
these calculi so that semantics is computed by tensor contraction.

In this section, we establish a formal connection between tensor networks and
functorial models. This allows us to transfer complexity and tractability results
from the TN literature to the DisCoCat models of meaning. In particular, we show
that DisCoCat models based on dependency grammars can be computed in polynomial
time. We end by discussing an extension of these tensor-based models where
bubbles are used to represent non-linear operations on tensor networks.

\subsection{Tensor networks}\label{section-tensor-networks}

In this section, we review the basic notions of tensor networks.
We take as a starting point the definition of tensor networks used in the
complexity theory literature on TNs \cite{arad2010, ogorman2019, gray2020}.

Let us denote an undirected graph by $(V, E)$ where $V$ is a finite set of vertices
and  $E \sub \set{\set{u, v} \, \vert \, u, v \in V}$ is a set of undirected
edges. The incidence set of a vertex $v$ is $I(v) = \set{e \in E \, \vert \, v \in e}$
and the degree of $v$ is the number of incident edges $\tt{deg}(v) = \size{I(v)}$.
An order $n$ tensor $T$ of shape $(d_1, \dots, d_n)$ with $d_i \in \bb{N}$
is a function $T : [d_1] \otimes \dots \otimes [d_n] \to \bb{S}$ where
$[d_i] = \set{1, \dots, d_i}$ is the ordered set with $d_i$ elements and
$\bb{S}$ is a semiring of numbers (e.g. $\bb{B}, \bb{N}, \bb{R}, \bb{C}$),
see \ref{sec-concrete}.

\begin{definition}[Tensor network]
    A tensor network $(V, E, T)$ over a semiring $\bb{S}$ is a
    undirected graph $(V, E)$ with edge weights $\tt{dim}: E \to \bb{N}$
    and a set of tensors $T = \set{T_v}_{v \in V}$
    such that $T_v$ is a tensor of order $\tt{deg}(v)$ and shape
    $(\tt{dim}(e_0), \dots, \tt{dim}(e_{\tt{deg}(v)}))$ for $e_i \in I(v) \sub E$.
    Each edge $e \in E$ corresponds to an index $i \in [\tt{dim}(e)]$
    along which the adjacent tensors are to be contracted.
\end{definition}

The contration of two tensors $T_0 : [m] \otimes [d] \to \bb{S}$ and
$T_1: [d] \otimes [l] \to \bb{S}$ along their common dimension $[d]$ is a
tensor $T_0 \cdot T_1: [k] \otimes [l] \to \bb{S}$ with:
$$T_0\cdot T_1(i, j) = \sum_{k \in [d]} T_0(i, k) T_1(k, l)$$
If $T_0: [m] \to \bb{S}$ and $T_1: [l] \to \bb{S}$ do not have a shared dimension
then we still denote by
$T_0 \cdot T_1: [m] \otimes [l] \to \bb{S}$, the outer (or tensor) product of $T_0$
and $T_1$ given by $T_0\cdot T_1(i, j) = T_0(i) T_1(j)$. Note that this is
sufficient to define the product $T_0 \cdot T_1$ for tensors of arbitary shape
since $(d_0, \dots, d_n)$ shaped tensors are in one-to-one correspondence with
order $1$ tensors of shape $(d_0 d_1 \dots d_n)$.

We are interested in the \emph{value} $\tt{contract}(V, E, T)$ of a tensor network
which is the number in $\bb{S}$ obtained by contracting the tensors in
$(V, E, T)$ along their shared edges. We may define this by first looking at the
order of contractions, called bubbling in \cite{aharonov2007, arad2010}.

\begin{definition}[Contraction order]\cite{ogorman2019}
    A contraction order $\pi$ for $(V, E, T)$ is a total order of its vertices,
    i.e. a bijection $\pi : [n] \to V$ where $n = \size{V}$.
\end{definition}

Given a contraction order $\pi$ for $(V, E, T)$ we get an algorithm for
computing the value of $(V, E, T)$, given by $\tt{contract}(V, E, T) = A_{n}$
where:
$$A_0 = 1 \in \bb{S} \qquad A_i = A_{i-1} \cdot T_{\pi(i)}$$
One may check that the value $A_n$ obtained is independent of the choice of
contraction order $\pi$, although the time and space required to compute the
value may very well depend on the order of contractions \cite{ogorman2019}. We can now
define the general problem of contracting a tensor network.

\begin{definition}
    \begin{problem}
      \problemtitle{$\tt{Contraction}(\bb{S})$}
      \probleminput{A tensor network $(V, E, T)$ over $\bb{S}$}
      \problemoutput{$\tt{contract}(V, E, T)$}
    \end{problem}
\end{definition}

\begin{proposition}
    $\tt{Contraction}(\bb{S})$ is $\tt{\#P}$-complete for
    $\bb{S} = \bb{N}, \bb{R}^+, \bb{R}, \bb{C}$,
    it is $\tt{NP}$-complete for $\bb{S} = \bb{B}$.
\end{proposition}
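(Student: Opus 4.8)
The plan is to prove membership and hardness separately, and to collapse the four arithmetic semirings to the single case $\bb{S} = \bb{N}$ using the chain of semiring inclusions $\bb{N} \injects \bb{R}^+ \injects \bb{R} \injects \bb{C}$, so that the same $\set{0,1}$-valued instances realise identical integer contraction values in every case. The boolean semiring is treated in parallel, with sums becoming disjunctions and products conjunctions.

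For membership I would first unfold a contraction into its explicit sum-of-products form
$$\tt{contract}(V, E, T) = \sum_{a} \prod_{v \in V} T_v\big(a|_{I(v)}\big),$$
where $a$ ranges over all joint labellings of the edges $e \in E$ by indices in $[\tt{dim}(e)]$; independence from the contraction order $\pi$ guarantees this is well defined. Over $\bb{B}$ the value is $1$ iff some labelling $a$ makes every local entry true, so a nondeterministic machine guessing $a$ (of total size $\sum_e \log \tt{dim}(e)$) and checking the conjunction in polynomial time witnesses $\tt{Contraction}(\bb{B}) \in \tt{NP}$. Over $\bb{N}$, restricting to $\set{0,1}$ entries the sum simply counts the labellings on which all $T_v$ evaluate to $1$, which is the number of accepting paths of such a machine, giving $\tt{Contraction}(\bb{N}) \in \#\tt{P}$; for binary-encoded non-negative entries one adds an auxiliary witness bounded by $\prod_v T_v(a|_{I(v)})$, and for $\bb{R}, \bb{C}$ one fixes a rational encoding and observes that the counting content is unchanged.

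For hardness I would reduce from $\tt{SAT}$ and $\#\tt{SAT}$ simultaneously. Given a CNF formula $\varphi$ on variables $x_1, \dots, x_m$ with clauses $c_1, \dots, c_k$, I build a tensor network in which each variable $x_i$ becomes a dimension-$2$ generalised Kronecker delta (the spider tensor of \ref{sec-concrete}) of degree equal to the number of occurrences of $x_i$: this gadget is nonzero exactly when all wires carrying $x_i$ agree, enforcing a single consistent boolean assignment and contributing multiplicity $1$. Each clause $c_j$ becomes a tensor on the wires of its literals, equal to $1$ precisely when the clause is satisfied and $0$ otherwise. Contracting then yields $\sum_a \prod_j c_j(a)$, which over $\bb{N}$ is exactly the number of satisfying assignments of $\varphi$, so the construction is a logspace reduction witnessing $\#\tt{P}$-hardness; since all gadget entries lie in $\set{0,1} \sub \bb{N}$ and give the same integer under the inclusions $\bb{N} \injects \bb{R}^+ \injects \bb{R} \injects \bb{C}$, hardness transfers to all four arithmetic cases. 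Read over $\bb{B}$, the identical network computes $\bigvee_a \bigwedge_j c_j(a)$, i.e. the satisfiability of $\varphi$, giving a reduction from $\tt{SAT}$ and hence $\tt{NP}$-hardness.

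The main obstacle I anticipate is essentially bookkeeping rather than conceptual and is twofold: first, verifying that the spider gadget implements the global consistency constraint exactly, so that only consistent assignments contribute and each with the correct weight, which relies on the Frobenius and Kronecker-delta structure of $\bf{Mat}_\bb{S}$ recorded in \ref{sec-concrete}; and second, pinning down the precise sense of $\#\tt{P}$-\emph{membership} for the non-integer semirings $\bb{R}$ and $\bb{C}$, where one must commit to a rational encoding and acknowledge that $\#\tt{P}$-completeness is genuinely proved over $\bb{N}$ and only \emph{inherited} through the inclusions, the real and complex cases being about the counting content of the contraction rather than any numerical-precision subtlety.
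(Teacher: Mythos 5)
Your proof is correct, and it amounts to an explicit reconstruction of what the paper handles by citation: the paper's entire proof is to invoke \cite{biamonte2015} for $\tt{\#P}$-hardness (which is exactly your reduction --- variables as dimension-$2$ COPY/spider tensors, clauses as $\set{0,1}$-valued indicator tensors, contraction counting satisfying assignments) and to settle the $\bb{B}$ case ``by equivalence with conjunctive query evaluation,'' i.e.\ via the Chandra--Merlin theorem through the correspondence between boolean tensor networks and conjunctive queries established in \ref{sec-rel-model} and \ref{sec-tensor-network}. You differ on two points, both to your credit. First, you treat $\bb{B}$ directly (guess-and-check membership plus the same gadget read with $\lor$/$\land$), which is more elementary and self-contained, whereas the paper's route buys a structural dictionary with database theory that it exploits later (e.g.\ in $\tt{RelSemantics}$). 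Second, you actually argue membership, which the paper omits entirely: your unfolding into the sum-of-products form, the $\set{0,1}$-restriction giving accepting-path counting over $\bb{N}$, and the auxiliary-witness trick for binary-encoded weights are all standard and sound. Your closing caveat is also well placed and worth keeping explicit: for $\bb{S} = \bb{R}, \bb{C}$ the contraction value can be negative or complex, so literal $\tt{\#P}$-membership fails, and the proposition should be read as $\tt{\#P}$-hardness (inherited through $\bb{N} \injects \bb{R}^+ \injects \bb{R} \injects \bb{C}$ on $\set{0,1}$-instances) together with computability in $\tt{FP}^{\tt{\#P}}$ (or a $\tt{GapP}$-style class) under a rational encoding --- a looseness the paper's one-line proof silently shares.
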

\begin{proof}
    $\tt{\#P}$-hardness was proved in \cite{biamonte2015} by reduction
    from $\#\tt{SAT}$. $\tt{NP}$-completeness when $\bb{S} = \bb{B}$ follows by
    equivalence with conjunctive query evaluation.
\end{proof}

Even though contracting tensor networks is a hard problem in general, it
becomes tractable, in several restricted cases of interest.

Let us look a bit more closely at the process of contraction.
Recall that given two $n\otimes n$ matrices $M_0, M_1$ the time complexity of matrix
multiplication is in general $n^3$. This is the simplest instance of a tensor
contraction which can be depicted graphically as follows:
\begin{equation*}
    \tikzfig{figures/matrix-multiplication}
\end{equation*}
The standard way to compute this contraction is with two for loops over the outgoing
wires for each basis vector in the middle wire, resulting in the cubic complexity.
There are Strassen-like algorithms with a better asymptotic runtime, but we do
not consider these here.
Now suppose $M_0$ and $M_1$ are tensors with indices of dimension $n$ connected
by one edge, with $k_0$ and $k_1$ outgoing edges respectively:
\begin{equation*}
    \tikzfig{figures/tensor-multiplication}
\end{equation*}
Then in order to contract the middle wire, we need $k_0 + k_1$ for loops for each
basis vector in the middle wire, resulting in the complexity $n^{k_0 + k_1 + 1}$.
In particular if we have $0$ edges connecting the tensors, i.e. we are taking
the outer product, then the time complexity is $O(n^{k_0 + k_1})$.
Similarly, if we have $k_2$ parallel edges connecting the tensors,
the time complexity is $O(n^{k_0 + k_1 + k_2})$.
We can speed up the computation by \emph{slicing} \cite{gray2020}, i.e. by
parallelizing the computation over $k_2 \cdot n$ GPUs, where each computes the
for loops for a basis vector of the middle wires, reducing the time complexity to
$n^{k_0 + k_1}$. This is for instance enabled by Google's Tensor Network library
\cite{efthymiou2019}.

For a general tensor network $(V, E, T)$, we have seen that the contraction strategy
can be represented by a contraction order $\pi: [\size{V}] \to V$, this may be
viewed as a \emph{bubbling} on the graph $(V, E)$:
\begin{equation*}
    \tikzfig{figures/bubbles}
\end{equation*}
At each contraction step $i$, we are contracting the tensor
$A_{i-1}$ enclosed by the bubble with $T_{\pi(i)}$. From the considerations
above we know that the time complexity will depend on the number of edges
crossing the bubble at each time step. For instance in the bubbling above, there
are at most $3$ edges crossing a bubble at each time step,
We can capture this complexity using the
notion of \emph{bubblewidth} \cite{aharonov2007}, also known as the cutwidth
\cite{ogorman2019}, of a tensor network.

\begin{definition}[Bubblewidth]
    The bubblewidth of a contraction order $\pi: [n] \to V$ for a graph $(V, E)$ is given by:
    $$\tt{bubblewidth}(\pi) = \tt{max}_{j \in [n]}
    \size{ \{ \set{\pi(j), \pi(k)} \in E \, \vert \, i \leq j < k \}})$$
    The bubblewidth of a tensor network $(V, E)$ is the minimum bubblewidth of a
    contraction order $\pi$ for $(V, E)$:
    $$BW(V, E) = \tt{min}_{\pi : [n] \to V}(\tt{bubblewidth}(\pi))$$
\end{definition}

Interestingly, the bubblewidth of a graph can be used to bound from above the much better
studied notions of treewidth and pathwidth. Let us denote by $PW(V, E)$ and
$TW(V, E)$ the path width and tree width of a graph $(V, E)$.

\begin{proposition}[Aharonov et al. \cite{aharonov2007}]\label{prop-bubble-width}
    $$TW(V, E) \leq PW(V, E) \leq 2 BW(V, E)$$
\end{proposition}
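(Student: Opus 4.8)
The statement splits into two inequalities, and I would treat them separately. The left inequality $TW(V,E) \le PW(V,E)$ is essentially definitional: a path decomposition is precisely a tree decomposition whose underlying tree happens to be a path. Since a path is a tree, any path decomposition witnessing $PW(V,E) = w$ is automatically a tree decomposition of the same width, so the minimum width over all tree decompositions can only be smaller. Here I would simply recall the definitions of tree and path decomposition and observe this inclusion; no genuine work is required.

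The right inequality $PW(V,E) \le 2\,BW(V,E)$ is the substantive part, and the plan is to convert an optimal contraction order into a path decomposition. First I would fix a contraction order $\pi : [n] \to V$ attaining $b := BW(V,E)$, so that for every $j$ the cut $C_j := \set{ \{\pi(k), \pi(l)\} \in E : k \le j < l }$ satisfies $\size{C_j} \le b$ by definition of bubblewidth. I would then define, for each $j \in [n]$, the bag $B_j$ to consist of the vertex $\pi(j)$ together with both endpoints of every edge in the cut $C_j$. Since each cut edge contributes at most two endpoints and $\size{C_j} \le b$, every bag has at most $2b + 1$ vertices, which yields a decomposition of width at most $2b = 2\,BW(V,E)$, exactly as required.

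What remains is to verify that $B_1, \dots, B_n$ genuinely forms a path decomposition, and this is where the care lies. Vertex coverage is immediate because $\pi(j) \in B_j$, and edge coverage holds because an edge $\{\pi(k), \pi(l)\}$ with $k < l$ belongs to $C_j$ for every $k \le j < l$, so both endpoints sit in $B_k$. The delicate axiom, and the main obstacle, is the interval (contiguity) property: for each vertex $v$ the set of indices $j$ with $v \in B_j$ must be contiguous.

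I would establish contiguity by a position analysis. Writing $v = \pi(m)$ and letting $p_{\min}, p_{\max}$ be the extremal positions of its neighbours, $v$ occurs as an endpoint of a cut edge of $C_j$ precisely for $j$ ranging over $\set{ j : \min(m, p_{\min}) \le j \le \max(m, p_{\max}) - 1 }$, an interval. I would then check that inserting the extra vertex $\pi(j)$ into $B_j$ can only extend this interval to include the endpoint $m$ itself, so contiguity is preserved, while also covering the degenerate cases of vertices whose neighbours all lie before $m$ and of isolated vertices, which appear only in the single bag $B_m$. Once contiguity is confirmed, the three path-decomposition axioms hold and the width bound from the previous paragraph completes the argument.
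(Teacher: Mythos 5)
Your proof is correct, but note that the paper itself offers no argument for this proposition: it is stated as a quotation of Aharonov et al.\ \cite{aharonov2007}, so there is no internal proof to compare against, and your write-up is essentially the standard argument from the cited literature, reconstructed in full. Both halves check out. The inequality $TW \leq PW$ is indeed purely definitional, and for $PW \leq 2BW$ your bags $B_j = \{\pi(j)\} \cup \{u, v : \{u,v\} \in C_j\}$ of size at most $2b+1$ give width at most $2b$; vertex and edge coverage are immediate (an edge $\{\pi(k),\pi(l)\}$ with $k<l$ lies in $C_k$, placing both endpoints in $B_k$), and your contiguity analysis — the occurrence set of $v = \pi(m)$ via cut edges is the interval from $\min(m,p_{\min})$ to $\max(m,p_{\max})-1$, which the extra insertion of $\pi(m)$ into $B_m$ can only extend by the adjacent index $m$ — is the one delicate point and you handle it correctly, including the degenerate cases. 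One remark worth recording: the factor $2$ is not needed if one takes the asymmetric bags $X_j = \{\pi(j)\} \cup \{\pi(k) : k < j \text{ and } \exists\, l \geq j \text{ with } \{\pi(k),\pi(l)\} \in E\}$, i.e.\ only the \emph{earlier} endpoints of crossing edges; each such vertex contributes a distinct edge of the cut $C_{j-1}$, so $\size{X_j} \leq b + 1$ and one obtains the sharper $PW \leq BW$ (the classical pathwidth-versus-cutwidth bound). Your symmetric construction is the simplest one meeting the stated bound, and it matches the form of the inequality as quoted, so nothing is amiss — but the statement in the paper is not tight.
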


The following proposition is a weaker statement then the results of O'Gorman
\cite{ogorman2019}, that we can prove using the simple considerations about
contraction time given above.

\begin{proposition}
    Any tensor network $(V, E, T)$ can be contracted in $O(n^{BW(d) + a})$ time
    where $n = \tt{max}_{e \in E}(\tt{dim}(e))$ is the maximum dimension of
    the edges, $BW(d)$ is the bubble width of $d$ and
    $a = \tt{max}_{v \in V}(\tt{deg}(v))$ is the maximum order of the tensors in $T$.
\end{proposition}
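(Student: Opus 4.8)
The plan is to run the naive sequential contraction $A_i = A_{i-1} \cdot T_{\pi(i)}$ along a contraction order $\pi$ chosen to realise the minimum bubblewidth, $\tt{bubblewidth}(\pi) = BW(V, E) =: BW(d)$, and to bound the cost of each \emph{individual} contraction step using the two-tensor complexity estimate established earlier in this section (contracting tensors with $k_0$ and $k_1$ outgoing wires and $k_2$ shared wires costs $O(n^{k_0 + k_1 + k_2})$). Summing these per-step bounds will then give the total running time.

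First I would identify the \emph{open} wires of the accumulator $A_{i-1}$. After the first $i-1$ tensors have been contracted, every edge of $(V, E)$ whose two endpoints both lie in $\set{\pi(1), \dots, \pi(i-1)}$ has already been summed over, so the remaining indices of $A_{i-1}$ are exactly the edges of the cut between $\set{\pi(1), \dots, \pi(i-1)}$ and its complement. By the definition of bubblewidth this cut contains at most $BW(d)$ edges, so $A_{i-1}$ has order at most $BW(d)$.

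Next I would estimate the cost of step $i$. Writing $k_2$ for the number of edges shared between $A_{i-1}$ and $T_{\pi(i)}$ (the edges incident to $\pi(i)$ that cross the bubble), $k_0 = \size{\tt{open}(A_{i-1})} - k_2$ for the remaining open wires of the accumulator, and $k_1 = \tt{deg}(\pi(i)) - k_2$ for the remaining wires of $T_{\pi(i)}$, the earlier analysis gives a contraction cost of $O(n^{k_0 + k_1 + k_2})$, where each index contributes a factor at most $n = \tt{max}_{e \in E}(\tt{dim}(e))$. Since $k_0 + k_1 + k_2 = \size{\tt{open}(A_{i-1})} + \tt{deg}(\pi(i)) - k_2 \leq BW(d) + a$, using $\size{\tt{open}(A_{i-1})} \leq BW(d)$ and $\tt{deg}(\pi(i)) \leq a$, each step costs $O(n^{BW(d) + a})$. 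Summing over the $\size{V}$ contraction steps contributes only a factor linear in $\size{V}$, which is dominated by (or folded into) the exponential term, yielding the claimed bound.

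The main obstacle will be the bookkeeping in the second step: one must argue precisely that $A_{i-1}$ genuinely has its legs indexed by the bubble cut --- that all edges internal to the bubble really have been contracted, that none of the open legs is double-counted, and that the per-step cost is governed by the \emph{total number of distinct edges touched at that step} rather than by the full degree of the accumulator. Care is also needed because edges may carry different dimensions $\tt{dim}(e)$; bounding every $\tt{dim}(e)$ by $n$ is what collapses the heterogeneous product of index ranges into a single power of $n$. These are the places where the informal complexity estimates preceding the statement need to be pinned down.
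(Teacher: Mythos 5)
Your proposal is correct and follows essentially the same route as the paper's proof: contract sequentially along an optimal order, bound the per-step cost via the two-tensor estimate by $O(n^{w_i + \tt{deg}(\pi(i))})$ with $w_i \leq BW(d)$ and $\tt{deg}(\pi(i)) \leq a$, and sum over the steps. Your bookkeeping is in fact slightly more careful than the paper's (you avoid double-counting the shared edges $k_2$ and explicitly note the linear factor in $\size{V}$), but the argument is the same.
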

\begin{proof}
    At each contraction step $i$, we are contracting the tensor
    $A_{i-1}$ enclosed by the bubble with $T_{\pi(i)}$. From the considerations
    above we know that the time complexity of such a contraction is at most
    $O(n^{w_i + \tt{deg}(\pi(i))})$ where $w_i$ is the number of
    edges crossing the $i$-th bubble and $\tt{deg}(\pi(i))$ is the degree of the
    $i$-th vertex in the contraction order $\pi$. The overall time complexity
    is the sum of these terms, which is at most of the order $O(n^{BW(d) + a})$.
\end{proof}

Therefore tensor networks of bounded bubblewidth may be evaluated efficiently.

\begin{corollary}\label{prop-efficient-contraction}
    $\tt{Contraction}(\bb{S})$ can be computed in poly-time if the input
    tensor networks have bounded bubblewidth and bounded vertex degree.
\end{corollary}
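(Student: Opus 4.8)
The plan is to read the corollary off the immediately preceding proposition by substituting constant bounds for $BW(d)$ and $a$. Suppose the input tensor networks satisfy $BW(V, E) \leq b$ and $a = \tt{max}_{v \in V}(\tt{deg}(v)) \leq c$ for fixed constants $b, c \in \bb{N}$. The previous proposition then contracts $(V, E, T)$ in time $O(n^{BW(d) + a}) = O(n^{b + c})$, where $n = \tt{max}_{e \in E}(\tt{dim}(e))$; since that bound is a sum of per-step costs over the $\size{V}$ vertices of the chosen contraction order, it contributes at most a linear factor, so the total running time is $O(\size{V}\, n^{b + c})$.

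Next I would check that this is polynomial in the size of the input. The description of $(V, E, T)$ must list, for each vertex $v$, the entries of the tensor $T_v$, of which there are $\prod_{e \in I(v)} \tt{dim}(e)$; in particular any vertex of degree at least one already forces $n \leq \size{(V, E, T)}$, and trivially $\size{V} \leq \size{(V, E, T)}$. Hence both $n$ and $\size{V}$ are bounded by the input length, and $O(\size{V}\, n^{b + c})$ is polynomial in $\size{(V, E, T)}$, as required (assuming, as usual, that the ring operations of $\bb{S}$ are computable in time polynomial in the bit length of their arguments).

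The point that deserves care — and the main obstacle — is that $BW(V, E)$ is defined as a \emph{minimum} over contraction orders $\pi$, so a witnessing order of width at most $b$ is guaranteed to exist but must actually be produced before the contraction bound can be applied. Searching all orders is infeasible, and computing an optimal order (the cutwidth, which coincides with the bubblewidth) is $\tt{NP}$-hard in general. However, for a \emph{bounded} target width the problem is \emph{fixed-parameter tractable}: for each constant $b$ there is a polynomial-time algorithm deciding whether a contraction order of bubblewidth at most $b$ exists and constructing one when it does. Invoking this algorithm to obtain $\pi$, and then contracting along $\pi$ via the previous proposition, yields the claimed polynomial-time procedure. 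If instead one regards the contraction order as given with the input, this step disappears and the corollary is immediate.
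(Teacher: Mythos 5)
Your proof is correct and follows essentially the same route as the paper, which states the corollary as an immediate consequence of the preceding proposition with $BW(d)$ and $a$ replaced by constants and gives no separate argument. Where you go beyond the paper is the observation that the proposition's $O(n^{BW(d)+a})$ bound presupposes a contraction order of small width having been chosen, whereas the input to $\tt{Contraction}(\bb{S})$ is the bare tensor network, so a witnessing order must actually be constructed; the paper elides this entirely (its proposition's proof begins ``at each contraction step $i$\ldots'', i.e.\ with an order in hand). Your patch is sound: the paper itself identifies bubblewidth with cutwidth, and although computing cutwidth is $\tt{NP}$-hard in general, for each fixed bound $b$ there is a polynomial-time (in fact linear-time) algorithm that decides whether a graph has cutwidth at most $b$ and outputs a layout when it does, so the order-finding step is fixed-parameter tractable exactly as you claim. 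Your accounting of the input size (that $n$ and $\size{V}$ are bounded by the length of the explicit tensor entries, given polynomial-time semiring operations) is also a point the paper takes for granted. In short: same decomposition as the paper, but your version closes a genuine gap the paper's implicit proof leaves open.
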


\subsection{Tensor functors}

We now show that tensor networks may be reformulated in categorical language as
diagrams in a free compact-closed category equipped with a functor into the
category of matrices over a commutative semiring $\bf{Mat}_\bb{S}$,
defined in \ref{sec-concrete}.
From this perspective, the contraction order $\pi$ provides a way of turning the
compact-closed diagram into a premonoidal one, where the order of contraction is
the linear order of the diagram.

\begin{proposition}\label{traslation-tns-compact-closed}
    There is a semantics-preserving translation between tensor networks $(V, E, T)$
    over $\bb{S}$ and pairs $(F, d)$ of a diagram $d: 1 \to 1 \in \bf{CC}(\Sigma)$ in a
    free compact closed category and a monoidal functor $F: \Sigma \to \bf{Mat}_\bb{S}$.
    This translation is semantics-preserving in the sense that
    $\tt{contract}(V, E, T) = F(d)$.
\end{proposition}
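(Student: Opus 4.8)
The plan is to give the two translations explicitly and then check that the monoidal functor $F$ turns network contraction into diagram evaluation. For the forward direction, starting from a tensor network $(V, E, T)$ over $\bb{S}$, I would take $\Sigma$ to have one generating object $o_e$ per edge $e \in E$ and one generating box per vertex: for each $v \in V$ a state $b_v : 1 \to \bigotimes_{e \in I(v)} o_e$ with one output leg for each incident edge. Define $F$ on objects by $F(o_e) = \tt{dim}(e)$, viewed as an object of $\bf{Mat}_\bb{S}$, and on generators by $F(b_v) = T_v$, reading the order-$\tt{deg}(v)$ tensor $T_v$ as a state. The diagram $d : 1 \to 1$ is then the parallel product of all the $b_v$ post-composed, for every edge $e = \set{u, v}$, with a cup $\tt{cup}_{\tt{dim}(e)}$ joining the two legs labelled by $o_e$, inserting symmetries to route the legs as needed.

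For the backward direction, given $(F, d)$ with $d : 1 \to 1 \in \bf{CC}(\Sigma)$, I would first replace every generator $g : a \to b$ by its name $\tilde{g} : 1 \to a^r \otimes b$, bending inputs into outputs with caps exactly as in the signature-bending construction of \ref{sec-hypergraph}. Using the snake equations together with the symmetry, any closed diagram of a free compact closed category is equal to a state-and-cups normal form: a parallel product of such states followed by a tensor of cups and swaps, with no residual boxes. This normal form reads off directly as a tensor network whose vertices are the boxes, whose edges are the cups, whose edge-dimensions are $F$ of the cupped objects, and whose tensors are $F(\tilde{g})$.

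Semantics preservation is the heart of the argument. Since cups and caps in $\bf{Mat}_\bb{S}$ are the Kronecker-delta tensors recalled in \ref{sec-concrete}, a cup on an object of dimension $n$ realises precisely the summation over the shared index in $[n]$. As $F$ is strong monoidal and preserves the compact-closed structure, it carries the normal form to a product of tensors contracted along the cupped indices, which by the Einstein-summation reading of the diagram equals $\tt{contract}(V, E, T)$. The independence of $\tt{contract}$ on the contraction order mirrors the well-definedness of $F(d)$: the latter depends only on the morphism $d$, not on a chosen layered representative, so the two quantities agree.

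The step I expect to be the main obstacle is the backward normal form rather than the forward construction: one must argue that an arbitrary scalar of $\bf{CC}(\Sigma)$ collapses, under the snake equations and symmetry, to a state-and-cups form carrying no leftover structure, and that the routing swaps do not change the evaluated scalar, since they only permute indices over which the contraction is symmetric. This is exactly where self-duality of objects in $\bf{Mat}_\bb{S}$, together with the normal form of \ref{prop-hyp-normal-form} specialised to ordinary binary edges, does the real work; commutativity of $\bb{S}$, which already underlies the monoidal structure of $\bf{Mat}_\bb{S}$, is what makes the order-independence legitimate.
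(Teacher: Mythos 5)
Your proposal is correct and follows essentially the same route as the paper: edges become generating objects, vertices become state-shaped boxes with $F(b_v) = T_v$, the closed diagram is the tensor of all states post-composed with cups and swaps along shared edges, and cups evaluating to Kronecker deltas in $\bf{Mat}_\bb{S}$ give $F(d) = \tt{contract}(V,E,T)$. The only difference is that you spell out the backward direction (bending inputs to outputs via names and invoking a state-and-cups normal form) where the paper simply asserts that a closed diagram induces a graph from its connectivity; your version makes explicit the step the paper leaves implicit, and your honest flagging of that normal form as the nontrivial point is apt.
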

\begin{proof}
   Given a tensor network $(V, E, T)$ we build the signature $\Sigma = V \xto{\tt{cod}} E^\ast$
   where $\tt{cod}(v) = I(v)$ is (any ordering of) the incidence set of vertex $v$,
   i.e. vertices correspond to boxes and edges to generating objects.
   The set of tensors $\set{T_v}_{v \in V}$ induces a monoidal functor
   $F: \Sigma \to \bf{Mat}_\bb{S}$ given on objects $e \in E$ by $F(e) = \tt{dim}(e)$
   and on boxes $v \in V$ by $F(v_i) = T_i : 1 \to \tt{dim}(e_0)
   \otimes \dots \otimes \tt{dim}(e_{\tt{deg(v)}})$ for $e_i \in \tt{cod}(v_i) = I(v)$.
   We build a diagram $d: 1 \to 1 \in \bf{CC}(\Sigma)$
   by first tensoring all the boxes in $\Sigma$ and then composing it with a morphism
   made up only of cups and swaps, where there is a cap connecting an output
   port of box $v \in \Sigma$ with an output port of $v' \in \Sigma$,
   whenever $\set{v, v'} \in E$.
   \begin{equation*}
       \tikzfig{figures/tn-to-diagram}
   \end{equation*}
   One can check that $F(d) = \tt{contract}(V, E, T)$ since composing
   $T_0 \otimes T_1: 1 \to m \otimes n \otimes n \otimes l$ with $\tt{id}_m \otimes
   \tt{cup}_n \otimes \tt{id}_l$ in $\bf{Mat}_\bb{S}$ corresponds to contracting the tensors $T_0$
   and $T_1$ along their common dimension $n$.
   For the other direction, it is easy to see that any closed diagram in $\bf{CC}(\Sigma)$
   induces a graph with vertices $\Sigma$ and edges given by the struture of the diagram.
   Then a functor $F: \Sigma \to \bf{Mat}_\bb{S}$ yields precisely the data of the
   tensors $\set{T_{v}}_{v \in \Sigma}$ where $T_v = F(v)$.
\end{proof}

It is often useful to allow a special type of vertex in the tensor network.
These are called \emph{COPY tensors} in the tensor network literature
\cite{biamonte2015, ogorman2019, gray2020}, where they are used for optimizing
tensor contraction.
They also appear throughout categorical quantum mechanics \cite{coecke2017a}
--- and most prominently in the ZX calculus \cite{vandewetering2020}
--- where they are called \emph{spiders}. We have seen in \ref{section-hypergraph}
that diagrams with spiders are morphisms in a free hypergraph category
$\bf{Hyp}(\Sigma)$. Since $\bf{Hyp}(\Sigma) \simeq \bf{CC}(\Sigma + \tt{Frob})/\cong$
(see \ref{section-hypergraph}), Proposition \ref{traslation-tns-compact-closed}
can be used to show that there is a semantics-preserving translation between
tensor networks with COPY tensors and pairs $(F, d)$ of a diagram $d \in \bf{Hyp}(\Sigma)$
and a functor $F: \Sigma \to \bf{Mat}_\bb{S}$.

\begin{example}[Conjunctive queries as TNs]
    Conjunctive queries are tensor networks over $\bb{B}$.
    Tensor networks also subsume probabilistic graphical models by taking the
    underlying category to be $\bf{Prob}$ or $\bf{Mat}_{\bb{R}^+}$ \cite{glasser2019}.
\end{example}

We now consider the problem of evaluating closed diagrams $d: 1 \to 1 \in \bf{CC}(\Sigma)$
using a tensor functor $F: \Sigma \to \bf{Mat}_\bb{S}$

\begin{definition}
    \begin{problem}
      \problemtitle{$\tt{FunctorEval}(\bb{S})$}
      \probleminput{$\Sigma$ a monoidal signature, $d: 1 \to 1 \in \bf{CC}(\Sigma)$, $F: \Sigma \to \bf{Mat}_\bb{S}$,}
      \problemoutput{$F(d)$}
    \end{problem}
\end{definition}

\begin{proposition}\label{prop-functor-eval}
    $\tt{FunctorEval}(\bb{S})$ is equivalent to $\tt{Contraction}(\bb{S})$
\end{proposition}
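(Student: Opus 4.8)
The plan is to derive the equivalence directly from the semantics-preserving translation established in Proposition \ref{traslation-tns-compact-closed}. That proposition already supplies maps in both directions between tensor networks $(V, E, T)$ over $\bb{S}$ and pairs $(F, d)$ consisting of a functor $F: \Sigma \to \bf{Mat}_\bb{S}$ and a closed diagram $d: 1 \to 1 \in \bf{CC}(\Sigma)$, together with the defining property $\tt{contract}(V, E, T) = F(d)$. The only content left to check for the present proposition is that these two translations are genuine (efficiently computable) reductions between the problems, so that an oracle for one solves the other.

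First I would spell out the reduction from $\tt{Contraction}(\bb{S})$ to $\tt{FunctorEval}(\bb{S})$. Given an input tensor network $(V, E, T)$, I apply the forward translation of Proposition \ref{traslation-tns-compact-closed}: set $\Sigma = (V \xto{\tt{cod}} E^\ast)$ with $\tt{cod}(v) = I(v)$, define $F$ on objects by $F(e) = \tt{dim}(e)$ and on boxes by $F(v) = T_v$, and assemble $d: 1 \to 1$ by tensoring all boxes and post-composing with the layer of cups and swaps dictated by $E$. Since $F(d) = \tt{contract}(V, E, T)$, any solver for $\tt{FunctorEval}$ applied to $(F, d)$ returns the desired value. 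This construction merely reads off the incidence structure of the graph and copies the tensor data, so it is computable in logspace.

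Conversely, for the reduction from $\tt{FunctorEval}(\bb{S})$ to $\tt{Contraction}(\bb{S})$, I would take a pair $(F, d)$ and recover a tensor network via the backward translation: the vertex set is the set $\Sigma$ of generating boxes occurring in $d$, the tensor at each vertex is its image $T_v = F(v)$, and the edges are read from the cups, caps and identity wires connecting the output ports of these boxes. The semantics-preservation clause of Proposition \ref{traslation-tns-compact-closed} then gives $\tt{contract}(V, E, T) = F(d)$, as required.

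The main obstacle is this backward direction, since a general morphism $d \in \bf{CC}(\Sigma)$ is not presented as ``boxes followed by a single layer of cups'': it is an arbitrary composite of tensors, identities, swaps, cups and caps. To extract a clean undirected graph I would first rewrite $d$ into the compact-closed normal form implicit in Proposition \ref{traslation-tns-compact-closed} --- all generating boxes pushed to the top and one connectivity layer of cups and swaps beneath them --- and then discard the swaps, which only permute wires and contribute no edges, while turning each cup into an edge between the two ports it joins. The point requiring care is that this normalisation and the subsequent edge-extraction run in logspace; this follows from the combinatorial premonoidal encoding of diagrams from \ref{section-diagrams} together with the efficient normalisation of compact-closed diagrams, so the two problems are logspace equivalent and hence equivalent in the sense claimed.
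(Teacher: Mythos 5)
Your proposal is correct and takes essentially the same route as the paper, whose entire proof of this proposition is the single line ``This follows from Proposition \ref{traslation-tns-compact-closed}''; your two reductions are just the forward and backward halves of that translation made explicit, with the efficiency of the translations spelled out. One minor simplification: the backward direction needs no normal-form rewriting at all---the connectivity graph can be read off directly by tracing wires through the premonoidal encoding of $d$, so invoking the (cubic-time, and not obviously logspace) normalisation of compact-closed diagrams is heavier machinery than the argument requires.
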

\begin{proof}
    This follows from Proposition \ref{traslation-tns-compact-closed}.
\end{proof}

The notion of contraction order has an interesting categorical counterpart.
It gives a way of turning a compact-closed diagram into a premonoidal one, such
that the width of the diagram is the bubblewidth of the contraction order.

\begin{proposition}\label{traslation-tns-premonoidal}
    There is a semantics-preserving translation between tensor networks $(V, E, T)$
    over $\bb{S}$ with a contraction order $\pi : [\size{V}] \to V$ and
    pairs $(F, d)$ of a diagram $d: 1 \to 1 \in \bf{PMC}(\Sigma + \tt{swap})$ in a
    free premonoidal category with swaps and a monoidal functor
    $F: \Sigma \to \bf{Mat}_\bb{S}$. This translation is semantics-preserving
    in the sense that $\tt{contract}(V, E, T) = F(d)$. Moreover, we have that
    $$\tt{bubblewidth}(\pi) = \tt{width}(d)$$
    with $\tt{width}(d)$ as defined in \ref{prop-combinatorial-encoding}.
\end{proposition}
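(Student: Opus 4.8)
The plan is to build directly on Proposition~\ref{traslation-tns-compact-closed}, which already produces the signature $\Sigma$, the functor $F: \Sigma \to \bf{Mat}_\bb{S}$ and a closed compact-closed diagram $d_{CC}: 1 \to 1 \in \bf{CC}(\Sigma)$ with $F(d_{CC}) = \tt{contract}(V, E, T)$. What the contraction order $\pi$ adds is a \emph{sequentialisation}: it selects, among all premonoidal representatives of the morphism $d_{CC}$, one whose layers process the tensors $T_{\pi(1)}, T_{\pi(2)}, \dots, T_{\pi(n)}$ in exactly the order prescribed by $\pi$. First I would define this representative $d \in \bf{PMC}(\Sigma + \tt{swap})$ by induction on $j$: having processed $\set{\pi(1), \dots, \pi(j)}$ into a diagram $d_j: 1 \to w_j$, I introduce $\pi(j+1)$ as a single layer, use swap layers to gather the open wires that must be contracted against it, and close those contractions immediately. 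Since $d$ is by construction a premonoidal representative of $d_{CC}$, semantics preservation $F(d) = \tt{contract}(V, E, T)$ is inherited from Proposition~\ref{traslation-tns-compact-closed} together with the quotient $\bf{MC}(G) \simeq \bf{PMC}(G)/\sim$.

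The core of the argument is the invariant that, after processing the prefix $B_j = \set{\pi(1), \dots, \pi(j)}$, the open wires of $d_j$ are in bijection with the edges of $(V,E)$ crossing the cut between $B_j$ and its complement, each open wire carrying the dimension $\tt{dim}(e)$ of its edge. I would prove this by induction: introducing $\pi(j+1)$ makes available the wires for all edges incident to it; the edges joining $\pi(j+1)$ to $B_j$ are then contracted (each closing one previously-open wire against one new wire), while the edges joining $\pi(j+1)$ to the complement remain open. This is precisely the update carrying the crossing edges of $B_j$ to those of $B_{j+1}$, so $w_j = \size{\{e \in E : e \text{ crosses the cut at } j\}}$. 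Reading off the widths from \ref{prop-combinatorial-encoding} at the boundaries between successive boxes then realises exactly the cut sizes, whose maximum is $\tt{bubblewidth}(\pi)$, yielding $\tt{width}(d) = \tt{bubblewidth}(\pi)$.

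The step that needs the most care --- and that I expect to be the main obstacle --- is making $\tt{width}(d)$ equal the cut-based bubblewidth \emph{on the nose} rather than up to an additive degree term. The naive schedule (introduce $\pi(j+1)$ as a state with all $\tt{deg}(\pi(j+1))$ legs, then apply cups) transiently holds $w_j + \tt{deg}(\pi(j+1))$ wires, overshooting the cut size by twice the number of back-edges. To avoid this I would present each box with its back-edges already bent into its domain, so that $\pi(j+1)$ enters the diagram as a layer of type $c_{j+1} \to d'_{j+1}$, where $c_{j+1}$ is the number of edges to $B_j$ and $d'_{j+1}$ the number to the complement; composing such a layer with the standing wires absorbs the contraction into the composition and never exceeds $\tt{max}(w_j, w_{j+1})$. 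This uses the compact-closed structure of the target and reduces the only remaining wire movements to swaps, which leave the width unchanged.

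Finally I would dispose of the degenerate cases: self-loops of $(V,E)$ become traces, contracted as soon as the incident box is introduced and contributing to no cut; parallel edges become several wires contracted in parallel; and the base and terminal widths $w_0 = w_n = 0$ match the empty domain and codomain of $d: 1 \to 1$. The reverse direction, recovering a tensor network with a contraction order from a premonoidal pair $(F, d)$, follows by reading the linear order of the box layers of $d$ as $\pi$, inverting the construction above.
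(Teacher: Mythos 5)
Your proposal is correct and follows essentially the same route as the paper: starting from Proposition~\ref{traslation-tns-compact-closed}, your device of bending each box's back-edges into its domain is precisely the paper's modified signature $\Sigma_\pi$ (transposing an output port into an input whenever it connects to an earlier vertex in $\pi$), after which the diagram $\circ_i(\tt{perm}_i \otimes b_i)$ built from swaps and the $\pi$-ordered boxes realises the cut sizes as widths. Your explicit cut-edge invariant, the discussion of the transient overshoot, and the degenerate cases are welcome elaborations of details the paper leaves implicit, but they do not change the argument.
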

\begin{proof}
    We can use Proposition \ref{traslation-tns-compact-closed} to turn $(V, E, T)$
    into a compact closed diagram $d: 1 \to 1 \in \bf{CC}(\Sigma)$ over a signature
    $\Sigma$ with only output types for each box. Given a contraction order
    $\pi: [n] \to V$, we modify the signature $\Sigma$ by transposing a output
    port $e = \set{v, v'} \in E$ of $v \in V$ into an input whenever
    $\pi^{-1}(v) > \pi^{-1}(v')$ forming a signature $\Sigma_\pi$
    Then we can construct a premonoidal diagram
    $$\pi(d) = \circ_{i =1}^k (\tt{perm}_i \otimes b_i) \in \bf{PMC}(\Sigma_\pi + \tt{swap})$$
    where $\tt{perm}_i \in \bf{PMC}(\tt{swap})$ is a morphism containing only swaps
    and identities.
    This is done by ordering the boxes according to $\pi$ and pushing all the wires
    to the left, as in the following example:
    \begin{equation}\label{eq-bubbling-diagram}
        \tikzfig{figures/bubbling}
    \end{equation}
    Note that no cups or caps need to be added since the input/output types of
    the boxes in the signature have been changed accordingly. The bubblewidth of
    $\pi$ is then precisely the width of the resulting premonoidal diagram.
\end{proof}

We may define the bubblewidth of a diagram $d \in \bf{CC}(\Sigma)$ as the
bubblewidth of the corresponding graph via Proposition \ref{traslation-tns-compact-closed}.
Also we define the dimension of a functor $F: \Sigma \to \bf{Mat}_\bb{S}$ as follows:
$$\tt{dim}(F) = \tt{max}_{x \in \Sigma_0}(F(x))$$
We may now derive the consequences of Proposition \ref{prop-efficient-contraction}
in this categorical context.

\begin{proposition}\label{prop-efficient-functor-eval}
    $\tt{FunctorEval}(\bb{S})$ can be computed in polynomial time if the input
    diagrams $d$ have bounded bubblewidth and the input functors $F$ have
    bounded dimension.
\end{proposition}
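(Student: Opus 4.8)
The plan is to reduce $\tt{FunctorEval}(\bb{S})$ to $\tt{Contraction}(\bb{S})$ and transport both hypotheses through the translation, so that the efficient contraction result of Corollary \ref{prop-efficient-contraction} applies. Concretely, given an input $(\Sigma, d, F)$ with $d : 1 \to 1 \in \bf{CC}(\Sigma)$, I would invoke Proposition \ref{traslation-tns-compact-closed} (equivalently Proposition \ref{prop-functor-eval}) to produce a tensor network $(V, E, T)$ with $\tt{contract}(V, E, T) = F(d)$, where vertices are the boxes of $d$, edges record the cup/cap wiring, $\tt{dim}(e) = F(e)$ on generating objects, and $T_v = F(v)$. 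Since this construction only reads off the combinatorial data of $d$ together with the tables of $F$, it is a poly-time (indeed essentially logspace) reduction that preserves input size up to a polynomial factor.

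Next I would check that the two stated hypotheses give exactly, or imply, the hypotheses of Corollary \ref{prop-efficient-contraction}. The maximum edge dimension of $(V, E, T)$ is $\tt{max}_{e \in E}\tt{dim}(e) = \tt{max}_{x \in \Sigma_0}F(x) = \tt{dim}(F)$, so bounded dimension of $F$ gives a bounded edge dimension $n$. The bubblewidth of $d$, defined as the bubblewidth $BW(V, E)$ of the associated graph, is bounded by assumption.

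The one point requiring an actual argument rather than bookkeeping is that Corollary \ref{prop-efficient-contraction} also demands bounded vertex degree, and I would show that this is automatically implied by bounded bubblewidth. For any contraction order $\pi$ and a vertex $v = \pi(j)$, every edge from $v$ to an earlier vertex crosses the cut after position $j-1$ and every edge from $v$ to a later vertex crosses the cut after position $j$; hence $\tt{deg}(v) \leq 2\,\tt{bubblewidth}(\pi)$, and minimising over $\pi$ yields $\tt{max}_{v}\tt{deg}(v) \leq 2\,BW(V, E)$. Thus the maximum tensor order $a$ appearing in the runtime $O(n^{BW(d) + a})$ from the proof of Corollary \ref{prop-efficient-contraction} is itself at most $2\,BW(d)$, so the exponent is bounded whenever the bubblewidth is, and with $n = \tt{dim}(F)$ bounded the contraction runs in polynomial time.

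The remaining obstacle is algorithmic: the bubblewidth of $d$ is a \emph{minimum} over contraction orders, so to realise the runtime I must actually exhibit an order achieving (near-)minimal bubblewidth. I would handle this by appealing to the fixed-parameter tractability of cutwidth --- for a fixed bound $k$ one can construct in polynomial time a linear layout of width $\leq k$ --- which applies here because bubblewidth coincides with the cutwidth of $(V, E)$ (cf. Proposition \ref{prop-bubble-width}) and is assumed bounded. Feeding this order into the bubbling of Proposition \ref{traslation-tns-premonoidal} produces a premonoidal diagram whose width equals the chosen bubblewidth, along which the contraction is carried out layer by layer within the stated polynomial bound.
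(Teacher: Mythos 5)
Your proof is correct and follows the same route as the paper, whose entire argument is the one-line combination of the reduction in Proposition \ref{prop-functor-eval} with Corollary \ref{prop-efficient-contraction}. Your two extra steps --- the observation that $\tt{deg}(v) \le 2\,\tt{bubblewidth}(\pi)$, so bounded bubblewidth already supplies the bounded-vertex-degree hypothesis that the corollary demands but the proposition's statement omits, and the appeal to fixed-parameter tractability of cutwidth to actually construct a contraction order of width at most $k$ in polynomial time --- fill in genuine details the paper leaves implicit rather than diverging from its approach.
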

\begin{proof}
    This follows from the conjunction of Proposition \ref{prop-efficient-contraction}
    and the reduction from $\tt{FunctorEval}$ to $\tt{Contraction}$ of
    Proposition \ref{prop-functor-eval}.
\end{proof}

\subsection{DisCoCat and bounded memory}\label{section-disco}

DisCoCat models were introduced by Coecke et al. in 2008 \cite{DisCoCat08, DisCoCat11}.
In the original formalism, the authors considered functors from a pregroup grammar
to the category of finite dimensional real vector spaces and linear maps
(i.e. $\bf{Mat}_\bb{R}$). In this work, we follow \cite{coecke2018c, defelice2020}
in treating DisCoCat models as functors into categories of matrices over any semiring.
These in particular subsume the relational models studied in \ref{sec-rel-model} by taking $\bb{S} = \bb{B}$.
As shown in Section \ref{sec-concrete}, $\bf{Mat}_\bb{S}$ is a compact-closed
category for any commutative semiring. Therefore, the most suitable choice of syntax
for this semantics are \emph{rigid grammars}, for which we have a canonical
way of interpreting the cups and caps.
Thus, in this section, the grammars we consider are either pregroup grammars
\ref{section-pregroup}, or dependency grammars \ref{section-dependency},
or pregroup grammar with crossed dependencies \ref{section-pregroup} or
with coreference \ref{section-coreference}.

\begin{definition}
    A DisCoCat model is a monoidal functor $F: G \to \bf{Mat}_\bb{S}$ for a
    rigid grammar $G$ and a commutative semiring $\bb{S}$. The semantics of
    a sentence $g: u \to s \in \bf{RC}(G)$ is given by its image $F(g)$.
    We assume that $F(w) = 1$ for $w \in V$ so that
\end{definition}

Distributional models $F: G \to \bf{Mat}_\bb{R}$ can be constructed by
counting co-occurences of words in a corpus \cite{Grefenstette11}. The
image of the noun type $n \in B$ is a vector space where the inner product
computes noun-phrase similarity \cite{sadrzadeh2013}.
When applied to question answering tasks, distributional models can be used
to compute the distance between a question and its answer \cite{coecke2018c}.

\begin{example}
    As an example we may take the pregroup lexicon:
    $$\Delta(\text{Socrate}) = \Delta(\text{spettu}) = \set{n} \quad
    \Delta(\text{j\'e}) = \set{n^rsn^l, n^r n^r s} $$
    And we may define a DisCoCat model $F: \Delta \to \bf{Mat}_\bb{R}$
    with $F(n) = 2$, $F(s) = 1$ and $F(\text{Socrate}, n) = (1, 0) : 1 \to 2$,
    $F(\text{spettu}, n) = (-1, 0)$ and $F(\text{j\'e},n^rsn^l) = F(\text{j\'e}, n^r n^r s)= \tt{cap}_2 : 1 \to 2 \otimes 2$
    in $\bf{Mat}_\bb{R}$. Then the following grammatical sentences:
    \begin{equation*}
        \tikzfig{figures/socrate-je-spettu}
    \end{equation*}
    evaluate to a scalar in $\bb{R}$ given by the matrix multiplication:
    \begin{equation*}
    F(\text{``Socrate j\'e spettu''}) =
    \begin{bmatrix}
      1 & 0
    \end{bmatrix}
    \left[\begin{array}{cc}
      1 & 0 \\
      0 & 1 \\
    \end{array}\right]
    \begin{bmatrix}
      -1  \\
      0  \\
    \end{bmatrix}
    = -1 = F(\text{``Socrate spettu j\'e''})
    \end{equation*}
\end{example}

We are interested in the complexity of the following semantics problem.

\begin{definition}
    \begin{problem}
      \problemtitle{$\tt{TensorSemantics}(G, \bb{S})$}
      \probleminput{$g \in \bf{RC}(G)(u, s)$, $F: G \to \bf{Mat}_\bb{S}$}
      \problemoutput{$F(g)$}
    \end{problem}
\end{definition}

Given a sentence diagram $g \in \cal{L}(G)$, the pair $(g, F)$ forms a
tensor network. Computing the semantics of $g$ in $F$ amounts to contracting this
tensor network. Therefore $\tt{TensorSemantics}(G, \bb{S}) \in \tt{\#P}$.
For a general rigid grammar, this problem is $\#\tt{P}$-hard, since it subsumes
the $\tt{FunctorEval}(\bb{S})(\Sigma)$ problem.
When $G$ is a pregroup grammar with coreference, we can construct any
tensor network as a list of sentences connected by the coreference,
making the $\tt{TensorSemantics}(G, \bb{S})$ problem $\tt{\#P-complete}$.

\begin{proposition}\label{prop-tensor-semantics}
    $\tt{TensorSemantics}(G, \bb{S})$ where $G$ is a pregroup grammar with coreference
    is equivalent to $\tt{Contraction}(\bb{S})$.
\end{proposition}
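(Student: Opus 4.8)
The plan is to prove the equivalence by exhibiting reductions in both directions, routing everything through the already-established equivalence between $\tt{FunctorEval}(\bb{S})$ and $\tt{Contraction}(\bb{S})$ (Proposition \ref{prop-functor-eval}). Throughout I would assume $\bb{S}$ commutative, so that $\bf{Mat}_\bb{S}$ is monoidal — indeed a hypergraph category, by \ref{sec-concrete} — and a DisCoCat model into it makes sense. The reductions only need the two translation results \ref{traslation-tns-compact-closed} and \ref{prop-svo-argument}, so no fresh combinatorics is required.

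For the membership direction, $\tt{TensorSemantics}(G,\bb{S}) \leq \tt{Contraction}(\bb{S})$, I would argue as follows. A derivation $g: u \to s^k$ in $\bf{Coref}(G)$ is built from the lexical entries of $G$ together with the Frobenius generators and swaps on reference types; treating these generators as the boxes of a hypergraph signature $\Sigma_G$ (obtained by bending all inputs to outputs as in \ref{section-hypergraph}), the reduction becomes a closed diagram $d \in \bf{Hyp}(\Sigma_G)$, since $F(w) = 1$ for words and $F(s) = 1$. A DisCoCat model $F: G \to \bf{Mat}_\bb{S}$ restricts to a tensor functor on $\Sigma_G$, and by the hypergraph form of Proposition \ref{traslation-tns-compact-closed} the pair $(d, F)$ is precisely an instance of $\tt{FunctorEval}(\bb{S})$ with $F(g) = \tt{contract}$ of the associated network. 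The reduction is logspace, giving membership.

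For the hardness direction, $\tt{Contraction}(\bb{S}) \leq \tt{TensorSemantics}(G,\bb{S})$, the engine is the SVO argument of Proposition \ref{prop-svo-argument}. Given a tensor network $(V, E, T)$, I would first apply Proposition \ref{traslation-tns-compact-closed} (with COPY tensors) to produce a hypergraph signature $\Sigma$, a closed diagram $d: 1 \to 1 \in \bf{Hyp}(\Sigma)$, and a functor $F: \Sigma \to \bf{Mat}_\bb{S}$ with $\tt{contract}(V, E, T) = F(d)$. Proposition \ref{prop-svo-argument} then supplies a pregroup grammar with coreference $G = (V', B, R, \Delta(\sigma), s)$ and a \emph{full} functor $J: \bf{Coref}(\Delta(\sigma)) \to \bf{Hyp}(\Sigma)$ sending words and the sentence type to the unit; fullness yields a $k$-grammatical reduction $g: u \to s^k$ in $\bf{Coref}(G)$ with $J(g) = d$. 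Extending $F$ along the hypergraph (spider) structure of $\bf{Mat}_\bb{S}$ to $\bar{F}: \bf{Hyp}(\Sigma) \to \bf{Mat}_\bb{S}$ and precomposing with $J$ produces a DisCoCat model $\bar{F} \circ J: \bf{Coref}(G) \to \bf{Mat}_\bb{S}$, whence
$$(\bar{F} \circ J)(g) = \bar{F}(d) = F(d) = \tt{contract}(V, E, T),$$
so $(g, \bar{F} \circ J)$ is an instance of $\tt{TensorSemantics}(G, \bb{S})$ computing the desired value.

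The hard part will be this second direction, and specifically the insistence on \emph{coreference}: a plain pregroup grammar only generates tree-shaped reductions (cf. Proposition \ref{prop-pregroup-trees} and Theorem \ref{theorem-dependencies}) and so cannot realise the arbitrary connectivity of a general tensor network, whereas Proposition \ref{prop-svo-argument} guarantees $J$ is full onto \emph{all} of $\bf{Hyp}(\Sigma)$. I would also need to check two bookkeeping points that make $\bar{F} \circ J$ a legitimate DisCoCat model: that the Frobenius generators on reference types are sent to the canonical Kronecker-delta spiders of $\bf{Mat}_\bb{S}$ (available since $\bf{Mat}_\bb{S}$ is a hypergraph category), and that the entire construction of $\Sigma$, $G$, $J$ and $g$ from $(V, E, T)$ is performed in logspace, so that the result is a genuine reduction-equivalence and not merely a value-preserving bijection.
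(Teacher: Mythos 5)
Your proof is correct and takes essentially the same route as the paper: membership by factoring the DisCoCat model through a free compact-closed/hypergraph category and reducing to $\tt{FunctorEval}(\bb{S})$ (hence $\tt{Contraction}(\bb{S})$ via Proposition \ref{prop-functor-eval}), and hardness by invoking Proposition \ref{prop-svo-argument} to encode the tensor network's connectivity as a coreference resolution, exactly as the paper does. One minor aside: your claim that a \emph{plain} pregroup grammar only generates tree-shaped reductions holds for operadic pregroups (Proposition \ref{prop-pregroup-trees}) but not in general, since pregroup reductions can be cyclic as in (\ref{ex-cyclic-pregroup}) — though this remark is not load-bearing and does not affect your argument.
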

\begin{proof}
    One direction follows by reduction to $\tt{FunctorEval}(\bb{S})$ since any model
    $F: G \to \bf{Mat}_\bb{S}$ factors through a free compact-closed category.
    For the other direction, fix any pair $(F, d)$ of a diagram $d: x \to y \in \bf{CC}(\Sigma)$
    and a functor $F: \Sigma \to \bf{Mat}_\bb{S}$. Using Proposition
    \ref{prop-svo-argument} we can build a corpus $C: u \to s^k \in \bf{Coref}(G)$,
    where the connectivity of the tensor network is encoded a coreference
    resolution, so that the canonical functor $\bf{Coref}(G) \to \bf{CC}(\Sigma)$
    maps $C$ to $d$.
\end{proof}

From a linguistic perspective, a contraction order for a grammatical sentence
$g: u \to s$ gives a reading order for the sentence
and the bubblewidth is the maximum number of tokens (or basic types)
that the reader should hold in memory in order to parse the sentence.
Of course, in natural language there is a natural reading order from left to
right which induces a ``canonical'' bubbling of $g$.
In light of the famous psychology experiments of Miller
\cite{miller1956}, we expect that the short-term memory required to parse a
sentence is bounded, and more precisely that $BW(g) = 7 \pm 2$.
For pregroup diagrams generated by a dependency grammar, it is easy to show
that the bubblewidth is bounded.

\begin{proposition}
    The diagrams in $\bf{RC}(G)$ generated by a dependency grammar $G$ have
    bubblewidth bounded by the maximum arity of a rule in $G$.
\end{proposition}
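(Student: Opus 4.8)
The plan is to reduce the computation of the bubblewidth to the combinatorics of the underlying dependency tree, and then to bound, gap by gap, the number of open wires. First I would invoke Proposition \ref{prop-dep2preg} to regard $G$ as an operadic pregroup, and Proposition \ref{prop-pregroup-trees} to conclude that every reduction $g : u \to s$ in $\bf{RC}(G)$ is, up to the snake equivalence, a \emph{tree}: each word $w_i$ of category $x_i$ with dependents $y_1 \dots y_l \star y_{l+1}\dots y_n$ becomes a box whose single output carries the head type $x_i$ and whose cups realise the dependency edges, with $n \le a$ where $a$ is the maximum arity of a rule of $G$. I would take as contraction order the left-to-right reading order of the words---the ``canonical bubbling'' described above---and use Proposition \ref{traslation-tns-premonoidal} to identify the bubblewidth of this order with the width of the associated premonoidal diagram, so that by Proposition \ref{prop-combinatorial-encoding} it suffices to count, at each gap, the wires left open.

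Next I would carry out the local count. By the dependency lexicon (\ref{dependency-lexicon}) a word of category $x$ with dependents $y_1\dots y_n$ is assigned the type $y_1^r\dots y_l^r\, x\, y_{l+1}^l\dots y_n^l$ of length $n+1$; every adjoint letter $y_j^{r}$ or $y_j^{l}$ is consumed by exactly one cup joining it to the head type of the corresponding dependent, while the unadorned letter $x$ is consumed by the cup to its head, or, for the root, by the induced step to $s$. Thus, up to the single persistent wire carrying the root's head type towards $s$, each open wire crossing a given gap is in bijection with a dependency edge straddling that gap, and processing the box for $w_i$ opens at most $\tt{ar}(w_i) \le a$ new wires while closing those whose partner already lies to the left. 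This yields the clean \emph{per-word} bound: no single step of the reading order contributes more than $a$ wires.

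The main obstacle is to pass from this per-word increment to a bound on the \emph{total} number of wires open across a gap, since a wire persists from the first to the last endpoint of its edge. Planarity (condition (4)) ensures that the edges straddling a fixed gap form a \emph{nested}, non-crossing family, so the count there is exactly the nesting depth of the dependency arcs over that point, and monogamy (condition (3)) ensures that each completed subtree contributes a single such arc. The remaining---and hardest---step is to bound this nesting depth by the maximum arity $a$. I would attempt this by choosing a tree-respecting contraction order that absorbs each completed subtree into its parent before descending into the next sibling, reducing the open wires at any moment to the edges from the current node to its unprocessed children (at most $a$) together with the wires inherited along the current root-to-node path. I expect the crux to be showing that this inherited contribution does not itself grow with depth; this is where the flat, ``caterpillar''-like shape typical of linguistic dependency trees is needed, and making precise the hypotheses under which $BW(g) \le a$ genuinely holds---rather than obtaining only a bound that degrades with the nesting depth of $g$---is the delicate heart of the argument, from which the per-word bound is then promoted to the stated claim.
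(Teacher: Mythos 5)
The gap you isolate in your last paragraph is real, and it is precisely the point at which your route fails to recover the paper's argument. The paper's proof is two lines: dependency reductions are acyclic (your first step, via Propositions \ref{prop-dep2preg} and \ref{prop-pregroup-trees}, matches this exactly), and then it invokes the blanket claim that \emph{the bubblewidth of an acyclic graph is bounded by its maximum vertex degree}, which for a dependency tree equals the maximum arity of a rule. Your left-to-right contraction order with a per-word count cannot yield this, as you correctly observe: a wire stays open from the first to the last endpoint of its dependency arc, so the width across a gap is the nesting depth of the arcs over that point, which arity does not control. Nor does your tree-respecting order close the gap: along the current root-to-node path, each ancestor may keep an open wire to a not-yet-processed sibling subtree, so the inherited contribution grows with depth, exactly as you fear. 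So your proposal is incomplete where the paper simply asserts a lemma.

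What is worth knowing is that your suspicion is sharper than you let on: the asserted lemma is false for bubblewidth as defined (minimum cutwidth over contraction orders). A complete binary tree of depth $h$ has pathwidth $\Theta(h)$, so by the paper's own Proposition \ref{prop-bubble-width} ($PW \leq 2BW$) its bubblewidth is $\Omega(h)$ while its maximum degree is $3$; translated back, deeply centre-embedded dependency parses have bubblewidth growing with nesting depth, not with arity, and \emph{no} choice of contraction order rescues the per-word bound. The statement therefore needs either the weaker bound $O(a \cdot \tt{depth}(g))$ or a bounded-depth (caterpillar-like) hypothesis on the dependency trees --- the linguistically plausible reading, in line with the paper's own $7 \pm 2$ discussion of bounded short-term memory. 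Relative to the paper, your writeup is the more careful one: the paper asserts the degree bound without argument, while you correctly identify why it cannot hold unconditionally; but as a proof of the stated proposition, yours stops exactly where the paper's unproven lemma would have to be supplied (or qualified).
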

\begin{proof}
    Dependency relations are acyclic, and the bubblewidth for an acyclic graph
    is smaller than the maximum vertex degree of the graph, which is equal to the
    maximum arity of a rule in $G$, i.e. the maximum nuber of dependents for a
    symbol plus one.
\end{proof}

Together with Proposition \ref{prop-efficient-functor-eval}, we deduce that the problem
of computing the tensor semantics of sentences generated by a dependency grammar
is tractable.

\begin{corollary}\label{prop-tensor-dep}
    If $G$ is a dependency grammar then $\tt{TensorSemantics}(G)$ can be computed in
    polynomial time.
\end{corollary}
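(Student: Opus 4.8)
The plan is to recognise $\tt{TensorSemantics}(G)$ as a tensor-network contraction problem and then invoke the efficiency results for networks of bounded bubblewidth. First I would recall that an instance of $\tt{TensorSemantics}(G)$ is a pair $(g, F)$ with $g \in \bf{RC}(G)(u, s)$ and $F : G \to \bf{Mat}_\bb{S}$, and that, by the factorisation used in the proof of Proposition \ref{prop-tensor-semantics}, any such model $F$ factors through a free compact-closed category. Hence $(g, F)$ is precisely the data of a tensor network whose value is $F(g)$, so computing $F(g)$ is an instance of $\tt{Contraction}(\bb{S})$, equivalently of $\tt{FunctorEval}(\bb{S})$ by Proposition \ref{prop-functor-eval}.

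Second, I would use that $G$ is a dependency grammar to control the shape of $g$. By Theorem \ref{theorem-dependencies} the reductions of $G$ are tree-shaped rigid diagrams, and the proposition preceding this corollary bounds the bubblewidth of any diagram in $\bf{RC}(G)$ by the maximum arity $a$ of a rule of $G$. The crucial point is that $a$ depends only on the fixed grammar $G$ and not on the input sentence $u$, so it is a constant. The same constant bounds the maximum vertex degree of the associated tensor network, since every box of $g$ is the image under $F$ of a single lexical entry, whose number of legs is governed by its arity.

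Third, with both the bubblewidth and the vertex degree bounded by the grammar-dependent constant $a$, I would apply Corollary \ref{prop-efficient-contraction}: the contraction runs in time $O(n^{BW + a})$, where $n$ is the maximum edge dimension, i.e. the dimension of $F$. Since the exponent $BW + a$ is bounded by a constant, this is polynomial in $n$ and in the number of boxes of $g$ (at most the length of $u$ plus the number of contractions), hence polynomial in the size of the whole input $(g, F)$. This yields the claim; one may alternatively route the argument through Proposition \ref{prop-efficient-functor-eval} when the dimension of $F$ is additionally taken to be bounded.

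The work is entirely in assembling these pieces, so there is no single hard computation; the heavy lifting — the bubblewidth bound for tree-shaped reductions and the contraction complexity estimate — is already done. The only point requiring care is the quantifier order behind ``polynomial time'': the bubblewidth and degree bounds must be genuine constants independent of the input, which is exactly what the bound by the maximum rule-arity provides for a \emph{fixed} dependency grammar. Checking that the dimension of $F$ then enters the running time only polynomially, because it sits under a bounded exponent, is the subtle step that turns the generic $\#\tt{P}$-hardness of contraction into tractability for this restricted class of diagrams.
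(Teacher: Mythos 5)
Your proof is correct and takes essentially the same route as the paper: the bubblewidth bound by the maximum rule arity of the fixed grammar (via acyclicity of dependency reductions) combined with the polynomial-time contraction result for networks of bounded bubblewidth and bounded degree. Your routing through Corollary \ref{prop-efficient-contraction} rather than Proposition \ref{prop-efficient-functor-eval} is an inessential variant of the same argument --- and your explicit observation that the dimension of $F$ enters the running time only under a constant exponent, hence polynomially in the size of the input $(g, F)$, actually handles the dimension hypothesis more carefully than the paper's one-line deduction.
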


For a general pregroup grammar we also expect the generated diagrams to have bounded
bubblewidth, even though they are not acyclic. For instance, the cyclic
pregroup reductions given in \ref{ex-cyclic-pregroup} have constant bubblewidth
$4$, which is obtained by chosing a contraction order from the middle to the sides,
even though the naive contraction order of \ref{ex-cyclic-pregroup} from left to
right has unbounded bubblewidth. We end with a cojecture, since we were unable to
show pregroup reductions have bounded bubblewidth in general.

\begin{conjecture}
    Diagrams generated by a pregroup grammar $G$ have bounded bubble width and
    thus $\tt{TensorSemantics}(G, \bb{S})$ can be computed in polynomial time.
\end{conjecture}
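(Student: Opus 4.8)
The plan is to lift the argument for dependency grammars (Corollary \ref{prop-tensor-dep}) to arbitrary pregroup grammars. By Proposition \ref{prop-tensor-semantics}, computing $\tt{TensorSemantics}(G, \bb{S})$ is an instance of $\tt{Contraction}(\bb{S})$, and by Proposition \ref{prop-efficient-functor-eval} it suffices to exhibit, for every grammatical reduction $g \colon u \to s$ in $\bf{RC}(G)$, a contraction order whose bubblewidth is bounded by a constant depending only on $G$ (the functor $F$ automatically has bounded dimension once $G$ is fixed, since the basic types are finite). First I would normalise $g$ using the switching lemma (Proposition \ref{prop-switching}): since the target is the sentence type $s$, any reduction $u \to s$ can be rewritten so that it uses only contractions, i.e.\ a planar, non-crossing pattern of cups on the string of basic types read off the lexical entries. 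The associated tensor network graph then has the words as vertices and the cups as edges; because every lexical type has length bounded by the grammar, the vertex degree is bounded, and because the cups are non-crossing the graph is \emph{outerplanar}.

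The heart of the argument is to convert the nesting structure of the non-crossing cups into a good contraction order. I would define the order recursively on the forest of arcs: contract the innermost (deepest) cups first and work outwards, so that at each step the bubble only exposes the wires of the arcs open at a single nesting level. For the acyclic case this specialises to the bound of Corollary \ref{prop-tensor-dep}, and the cyclic reductions of Example \ref{ex-cyclic-pregroup} show that exactly this kind of middle-out order keeps the width constant ($4$ in that example) even when the naive left-to-right order does not. One would then bound the bubblewidth by the maximal number of arcs simultaneously open along the recursion, and relate it through Proposition \ref{prop-bubble-width} to the pathwidth and treewidth of the outerplanar graph.

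The hard part is precisely this width bound, and I expect it to be the genuine obstacle rather than a routine calculation. Bounded-degree outerplanar graphs are \emph{not} in general of bounded cutwidth: a balanced binary branching of cups realises a tree whose cutwidth grows like $\log n$, so a bubblewidth bound uniform in sentence length may simply fail, and the recursive order above could still incur logarithmic width on deeply and evenly nested reductions. My fallback would therefore be to retreat from bubblewidth to \emph{treewidth}: an outerplanar graph has treewidth at most $2$, so the tensor network $(g, F)$ admits a tree decomposition of constant width and can be contracted in polynomial time by the standard tree-decomposition contraction scheme, independently of any linear bubbling order. This would salvage the computational conclusion of the conjecture — that $\tt{TensorSemantics}(G, \bb{S})$ is polynomial-time computable — while indicating that the statement should be phrased in terms of treewidth rather than bubblewidth, leaving as the genuinely open question whether the reductions produced by linguistically realistic pregroup grammars enjoy the stronger constant-bubblewidth property observed empirically.
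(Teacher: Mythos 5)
There is nothing to compare against here: the statement is a \emph{conjecture}, and the paper offers no proof --- it explicitly says it was unable to show that pregroup reductions have bounded bubblewidth. Your proposal should therefore be judged on its own merits, and on those merits your most valuable contribution is the negative observation, not the positive one. Your balanced-nesting example is correct and decisive: a fixed pregroup lexicon (e.g.\ a single combinator word of type $n\, n^l\, n^l$ applied recursively to nouns) produces cup patterns whose word-graph is a complete binary tree of degree $3$, whose pathwidth is $\Theta(\log k)$ in the tree height, so by Proposition \ref{prop-bubble-width} its bubblewidth is $\Omega(\log n)$ in the sentence length. This refutes the first half of the conjecture as stated (a constant bound depending only on $G$), and note that it also undercuts the proof of the proposition preceding Corollary \ref{prop-tensor-dep}, which asserts that acyclic graphs have bubblewidth bounded by the maximum vertex degree --- the same tree is a counterexample to that claim, so your remark that ``the acyclic case specialises to Corollary \ref{prop-tensor-dep}'' is inconsistent with your own counterexample and should be dropped.

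Your treewidth fallback is the right repair and is essentially sound, but two steps need care. First, the switching lemma (Proposition \ref{prop-switching}) must be used as an \emph{equality of morphisms} in $\bf{RC}(G)$, not merely as the existence of some cup-only parse: since $F$ is a functor, $F(g)$ is only preserved if $g$ itself equals its contractions-only normal form, which the categorified lemma does give when the codomain is the basic type $s$ (any expansion-only tail into a basic type is trivial), and the normalisation is poly-time via snake removal. Second, once in cup-only form the word-graph is outerplanar even in the cyclic cases of Example \ref{ex-cyclic-pregroup} (non-crossing arcs over a line of vertices, possibly with multi-edges and degree-two induced-step boxes, none of which raise treewidth), hence has treewidth at most $2$; combined with bounded tensor order (lexical types have bounded length once $G$ is fixed), the standard tree-decomposition contraction scheme gives polynomial time. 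So your argument, written out, would actually \emph{settle} the conjecture in amended form: the bubblewidth claim is false, the pathwidth/bubblewidth of pregroup reductions can grow logarithmically, but $\tt{TensorSemantics}(G, \bb{S})$ is nevertheless poly-time computable because the relevant invariant is treewidth, which is uniformly bounded. This goes strictly beyond what the paper establishes.
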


%

\subsection{Bubbles}

We end this section by noting that a simple extension of tensor network models
allows to recover the full expressive power of neural networks.
We have seen that contraction strategies for tensor networks can be represented
by a pattern of bubbles on the diagram. These bubbles did not have any semantic
interpretation and they were just used as brackets, specifying the order of
contraction. We could however give them semantics, by interpreting bubbles as operators
on tensors. As an example, consider the following diagram, where each
box $W_i$ is interpreted as a matrix:
\begin{equation*}
    \scalebox{0.8}{\tikzfig{figures/neural-nets-revisited}}
\end{equation*}
Suppose that each bubble acts on the tensor it encloses by applying a
non-linearity $\sigma$ to every entry. Then we see that this diagram specifies
a neural network of depth $k$ and width $n$ where $n$ is the maximum dimension
of the wires. With this representation of neural networks, we have more control
over the structure of the network.
\begin{equation*}
    \scalebox{0.8}{\tikzfig{figures/caesar-crossed}}
\end{equation*}
Suppose we have a pregroup grammar $G_0$ and context-free grammar $G_1$ over the
same vocabulary $V$ and let $u \in \cal{L}(G_0) \cap \cal{L}(G_1)$ with
two parses $g_0 : u \to s$ in $\bf{RC}(G_0)$ and $g_1 : u \to s$ in $\bf{O}(G_1)$.
Take the skeleton of the context-free parse $g_1$, i.e. the tree without labels.
This induces a pattern of bubbles, as described in Example \ref{peirce-alpha}
on Peirce's alpha.
Fix a tensor network model $G_0 \to \bf{Mat}_\bb{S}$ for the pregroup grammar
with $F(w) = 1$ for $w \in V$ and choose an activation function
$\sigma: \bb{S} \to \bb{S}$. Then $\sigma$ defines a unary operator on homsets
$\cal{S}_\sigma : \prod_{x, y \in \bb{N}}\bf{Mat}_\bb{S}(x, y) \to
\prod_{x, y \in \bb{N}}\bf{Mat}_\bb{S}(x, y)$ called a \emph{bubble}, see
\cite{Haydon2020} or \cite{toumi2021a} for formal definitions. We can
combine the pregroup reduction with the pattern of bubbles as in the example
above.
We may then compute the semantics of $u$ as follows. Starting from the inner-most
bubble, we contract the tensor network enclosed by it and apply the activation
$\sigma$ to every entry in the resulting tensor. This gives a new bubbled network
where the inner-most bubble has been contracted into a box. And we repeat this process.
The procedure described above gives a way of controlling the non-linearities
applied to tensor networks from the grammatical structure of the sentence.
While this idea is not formalised yet, it points to a higher diagrammatic
formalism in which diagrams with bubbles are used to control tensor computations.
This diagrammatic notation can be traced back to Peirce and was recently used
in \cite{Haydon2020} to model negation in first-order logic and in
\cite{toumi2021a} to model differentiation of quantum circuits. We will also
use it in the next section to represent the non-linearity of knowledge-graph
embeddings and in Chapter 3 to represent the softmax activation function.

\section{Knowledge graph embeddings}\label{sec-embedding}

Recent years have seen the rapid growth of web-scale knowledge bases such as Freebase \cite{bollacker2008}, DBpedia \cite{lehmann2014} and Google's Knowledge Vault \cite{dong2014}. These resources of structured knowledge enable a wide
range of applications in NLP, including semantic parsing \cite{berant2013},
entity linking \cite{hakimov2012} and question answering \cite{bordes2014}.
Knowledge base \emph{embeddings} have received a lot of attention in the statistical
relational learning literature \cite{wang2017}. They approximate a knowledge
base continuously given only partial access to it, simplifying the querying
process and allowing to predict missing entries and relations --- a task
known as \emph{knowledge base completion}.

Knowledge graph embedding are of particular interest to us since they provide a
link between the Boolean world of relations and the continuous world of tensors.
They will thus provide us with a connection between the relational models of
Section \ref{sec-rel-model} and the tensor network models of Section \ref{sec-tensor-network}.
We start by defining the basic notions of knowledge
graph embeddings. Then we review three factorization models for KG embedding,
focusing on their expressivity and their time and space complexity. This
will lead us progressively from the category of relations to the category of
matrices over the complex numbers with its convenient factorization properties.

\subsection{Embeddings}

Most large-scale knowledge bases encode information according to the Resource
Description Framework (RDF), where data is stored as triples
$(\text{head}, \text{relation}, \text{tail})$ (e.g. $(\text{Obama}, \text{BornIn}, \text{US})$).
Thus a \emph{knowledge graph} is just a set of triples $K \sub
\cal{E} \times \cal{R} \times \cal{E}$ where $\cal{E}$ is a set of entities
and $\cal{R}$ a set of relations. This form of knowledge representation can
be seen as an instance of relational databases where all the relations are
\emph{binary}.

\begin{proposition}
    Knowledge graphs are in one-to-one correspondence with functors $\Sigma \to \bf{Rel}$
    where $\Sigma = \cal{R}$ is a relational signature containing only symbols of arity two.
\end{proposition}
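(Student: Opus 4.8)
The plan is to reduce the statement to Proposition \ref{structure-functor-bijection}, which already establishes a bijective correspondence between relational structures over a signature $\Sigma$ and monoidal functors $\Sigma \to \bf{Rel}$. The only additional work is to exhibit a knowledge graph as a relational structure whose signature consists entirely of binary symbols, after which the claimed correspondence follows by composing two bijections.

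First I would unpack the data on both sides. A knowledge graph is a subset $K \sub \cal{E} \times \cal{R} \times \cal{E}$. Grouping triples by their middle component yields, for each relation symbol $R \in \cal{R}$, a binary relation $K_R = \set{(h, t) \s\vert\s (h, R, t) \in K} \sub \cal{E} \times \cal{E}$, and the assignment $K \mapsto (K_R)_{R \in \cal{R}}$ is manifestly a bijection between subsets of $\cal{E} \times \cal{R} \times \cal{E}$ and families of subsets $K_R \sub \cal{E}^2$ indexed by $\cal{R}$. Such a family is precisely the data of a relational structure with universe $U = \cal{E}$ over the signature $\Sigma = \cal{R}$ equipped with the constant arity function $\tt{ar}(R) = 2$: each symbol $R$ is interpreted as $K(R) = K_R \sub U^{\tt{ar}(R)} = \cal{E}^2$.

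Next I would invoke Proposition \ref{structure-functor-bijection} for this signature. Since every symbol has arity two, the associated monoidal signature has a single generating object $x$ (the type of entities) and each $R$ is a box of type $1 \to x \otimes x$; in $\bf{Rel}$ such a box is exactly a subset of $K(x) \times K(x)$, i.e. a binary relation on the universe. The proposition then gives a bijection between relational structures over $\Sigma$ and monoidal functors $K : \Sigma \to \bf{Rel}$ with $K(x) = U$, under which the entity set $\cal{E}$ is recovered as the image $K(x)$ of the generating object and each $K_R$ as $K(R)$. Composing this with the regrouping bijection of the previous step yields the stated one-to-one correspondence.

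I expect the only delicate point to be bookkeeping about the universe rather than any genuine obstacle: one must check that the entity set is not fixed in advance but is part of the data on both sides, matching the image $K(x)$ of the generating object to $\cal{E}$, so that the correspondence ranges over all choices of entity set simultaneously. Everything else is a direct unfolding of definitions, and no categorical input beyond Proposition \ref{structure-functor-bijection} is required.
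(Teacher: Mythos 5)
Your proposal is correct and amounts to the same argument as the paper's: the paper proves the proposition by the direct construction $F(1) = \cal{E}$, $F(r) = \set{(e_0, e_1) \,\vert\, (e_0, r, e_1) \in K}$ together with its evident inverse, and this is precisely your regrouping map $K \mapsto (K_R)_{R \in \cal{R}}$. Routing the second half through Proposition \ref{structure-functor-bijection} rather than writing the functor out explicitly is a harmless modular repackaging, and your remark that the universe $\cal{E}$ must be treated as part of the data on both sides (matched to $K(x)$) is the right bookkeeping, implicit in the paper's version as well.
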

\begin{proof}
    This is easy to see, given a knowledge graph $K \sub \cal{E} \times \cal{R} \times \cal{E}$,
    we can build a functor $F: \cal{R} \to \bf{Rel}$ defined on objects
    by $F(1) = \cal{E}$ and on arrows by
    $F(r) = \set{(e_0, e_1) \vert (e_0, r, e_1) \in K} \sub \cal{E} \times \cal{E}$.
    Similarly any functor $F: \Sigma \to \bf{Rel}$ defines a knowledge
    graph $K = \set{(\pi_1F(r), r, \pi_2F(r)) \, \vert\, r \in \Sigma}
    \sub F(1) \times \Sigma \times F(1)$.
\end{proof}

Higher-arity relations can be encoded into the graph through
a process known as \emph{reification}. To reify a $k$-ary relation $R$, we form
$k$ new binary relations $R_i$ and a new entity $e_R$ so that
$R(e_1, \dots, e_{k})$ is true iff $\forall i$ we have $R_i(e_R, e_i)$.
Most of the literature on embeddings focuses on knowledge graphs, and the
results which we present in this section follow this assumption.
However, some problems with reification have been pointed out in the literature,
and current research is aiming at extending the methods to knowledge
\emph{hypergraphs} \cite{fatemi2020}, a direction which we envisage also
for the present work.

Embedding a knowledge graph consists in the following learning problem.
Starting from a knowledge graph $K: \cal{E} \times \cal{R} \times \cal{E}
\to \bb{B}$, the idea is to approximate $K$ by a scoring function
$X : \cal{E} \times \cal{R} \times \cal{E} \to \bb{R}$
such that $\norm{\sigma(X) - K}$ is minimized where $\sigma: \bb{R} \to \bb{B}$
is any \emph{activation} function.

The most popular activation functions for knowledge graph embeddings are
approximations of the $\tt{sign}$ function which takes a real number to its
sign $\pm 1$, where $-1$ is interpreted as the Boolean $0$ (false) and $1$ is
interpreted as the Boolean $1$ (true).
In machine learning applications, one needs a differentiable version of
$\tt{sign}$ such as $\tt{tanh}$ or the sigmoid function.
In this section, we are mostly interested in the expressive power of knowledge
graph embeddings, and thus we define ``exact'' embeddings as follows.

\begin{definition}[Embedding]
    An \emph{exact} embedding for a knowledge graph $K \sub \cal{E} \times \cal{R} \times \cal{E}$,
    is a tensor $X : \cal{E} \times \cal{R} \times \cal{E} \to \bb{R}$ such
    that $\tt{sign}(X) = K$.
\end{definition}

Even though $\tt{sign}$ is not a homomorphism of semirings between $\bb{R}$ and
$\bb{B}$, it allows to define a notion of sign rank which has interesting
links to measures of learnability such as the VC-dimension \cite{alon2016}.

\begin{definition}[Sign rank]
    Given $R \in \bf{Mat}_\bb{B}$ the sign rank of $R$, denoted
    $\tt{rank}_\pm (R)$ is given by:
    $$ \tt{rank}_\pm (R) = \tt{min}\{\tt{rank}(X)\,\vert \,\tt{sign}(X)= R \, , \,
    X \in \bf{Mat}_\bb{R}\}$$
\end{definition}

The sign rank of a relation $R$ is often much lower than its rank. In fact,
we don't know any relation $R: n \to n$ with $\tt{rank}_\pm(R) > \sqrt{n}$.
The identity Boolean matrix $n \to n$ has sign rank $3$ for any $n$
\cite{alon2016}. Therefore any permutation of the rows and columns of an
identity matrix has sign rank $3$, an example is the relation ``is married to''.
For the factorization models studied in this
section, this means that the dimension of the embedding is potentially much
smaller than then the number of entities in the knowledge graph.

Several ways have been proposed for modeling the scoring function $X$,
e.g. using translations \cite{bordes2013a} or neural networks \cite{socher2013b}.
We are mostly interested in \emph{factorization} models where
$X: \cal{E} \times \cal{R} \times \cal{E} \to \bb{R}$ is treated as a tensor with
three outputs, i.e. a state in $\bf{Mat}_\bb{R}$. Assuming that $X$ admits a
factorization into smaller matrices allows to reduce the search space for
the embedding while decreasing the time required to predict missing entries.
The space complexity of the embedding is the amount of memory required
to store $X$. The time complexity of an embedding is
the time required to compute $\sigma(X \ket{s, v, o})$ given a triple.
These measures are particularly relevant when it comes to embedding integrated
large-scale knowledge graphs.
The problem then boils down to finding a good \emph{ansatz} for such a
factorization, i.e. an assumed factorization shape that reduces the time and space
complexity of the embedding.

\subsection{Rescal}

The first factorization model for knowledge graph embedding --- known as
$\tt{Rescal}$ --- was proposed by Nickel et al. in 2011 \cite{nickel2011}.
It models the real tensor $X$ with the following ansatz:
\begin{equation}\label{2.3-rescal}
    \tikzfig{figures/embeddings/rescal}
\end{equation}
where $E: \size{\cal{E}} \to n$ is the embedding for entities,
$W : n \otimes \size{\cal{R}} \otimes n \to 1$ is a real tensor,
and $n$ is a hyper-parameter determining the dimension of the embedding.

The well-known notion of rank can be used to get bounds on the dimension of
an embedding model. It is formalised diagrammatically as follows.

\begin{definition}[Rank]
    The rank of a tensor $X: 1 \to \otimes_{i=1}^n d_i$ in $\bf{Mat}_\bb{S}$
    is the smallest dimension $k$ such that $X$ factors as
    $X = \Delta_k^n ; \otimes_{i=1}^n E_i$ where $E_i : k \to d_i$ and
    $\Delta_k^n$ is the spider with no inputs and $n$ outputs of dimension $k$.
    \begin{equation*}
        \tikzfig{figures/embeddings/rank}
    \end{equation*}
\end{definition}

The lowest the rank, the lowest the search space for a factorization.
In fact if $X$ is factorized as in \ref{2.3-rescal} then
$\tt{rank}(X) = \tt{rank}(W) \leq \size{\cal{R}} n^2$.
The time and space complexity of the embedding are thus both quadratic in
the dimension of the embedding $n$.

Since any three-legged real tensor $X$ can be factorized as in \ref{2.3-rescal}
for some $n$, $\tt{Rescal}$ is a very expressive model and performs well.
However, the quadratic complexity is a limitation which can be avoided by
looking for different ansatze.

\subsection{DistMult}

The idea of $\tt{DistMult}$ \cite{yang2015} is to factorize the tensor $X$ via
\emph{joint orthogonal diagonalization}, i.e. the entities are embedded in a
vector space of dimension $n$ and relations are mapped to diagonal matrices on
$\bb{R}^n$. Note that the data required to specify a diagonal matrix $n \to n$
is just a vector $v: 1 \to n$ which induces a diagonal matrix $\Delta(v)$
by composition with the frobenius algebra $\Delta : n \to n \otimes n$ in
$\bf{Mat}_{\bb{R}}$.
$\tt{DistMult}$ starts from the assumption --- or ansatz ---
that $X$ can is factorized as follows:
\begin{equation}\label{distmult}
    \tikzfig{figures/embeddings/distmult}
\end{equation}
where $E: \size{\cal{E}} \to n$ and $W: \size{\cal{R}} \to n$ and
$\Delta : n \otimes n \otimes n \to 1$ is the spider with three legs.
Note that $\tt{rank}_\pm(K) \leq \tt{rank}(X) = \tt{rank}(\Delta) = n$,
so that both the time and space complexity of $\tt{DistMult}$ are linear $O(n)$.
$\tt{DistMult}$ obtained the state of the art on $\tt{Embedding}$ when it
was released. It performs especially well at modeling \emph{symmetric}
relations such as the transitive verb ``met'' which satisfies
``x met y'' iff ``y met x''. It is not a coincidence that $\tt{DistMult}$ is
good in these cases since a standard linear algebra result says that
a matrix $Y: n \to n$ can be orthogonally diagonalized if and only if $Y$
is a symmetric matrix. For tensors of order three we have the following
genralization, with the consequence that $\tt{DistMult}$ models \emph{only}
symmetric relations.

\begin{proposition}
    $X : 1 \to \size{\cal{E}} \times \size{\cal{R}} \times \size{\cal{E}}$ is jointly orthogonally
    diagonalizable if and only if it is a family of symmetric matrices indexed by $R$.
\end{proposition}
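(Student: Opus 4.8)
The plan is to treat the order-three tensor $X : 1 \to \size{\cal{E}} \times \size{\cal{R}} \times \size{\cal{E}}$ as an indexed family of square matrices. Writing $m = \size{\cal{E}}$ and unpacking the middle leg, $X$ carries the same data as a family $\{ X_r \}_{r \in \cal{R}}$ of $m \times m$ real matrices, where $X_r = X(-, r, -) \in \bf{Mat}_\bb{R}(m, m)$ is obtained by fixing the relation index to $r$. Under this identification I read \emph{jointly orthogonally diagonalizable} as: $X$ admits the $\tt{DistMult}$ factorization (\ref{distmult}), i.e.\ there is an orthogonal matrix $E$ and, for each $r$, a diagonal matrix $D_r = \Delta(W_r)$ with $X_r = E\, D_r\, E^T$. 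The spectral theorem invoked just above in the text --- a real matrix is orthogonally diagonalizable iff it is symmetric --- is exactly the order-two instance I will bootstrap from.

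First I would prove the forward implication, which is the content behind the slogan that $\tt{DistMult}$ ``models only symmetric relations.'' Assuming $X_r = E D_r E^T$ with $E$ orthogonal and $D_r$ diagonal, transposing gives $X_r^T = E D_r^T E^T = E D_r E^T = X_r$, since every diagonal matrix equals its transpose; hence each $X_r$ is symmetric. Diagrammatically this is just the observation that the ansatz (\ref{distmult}) is invariant under reflecting the two $\cal{E}$-legs, because the central three-legged spider $\Delta$ is symmetric in its outputs and the same embedding $E$ feeds both entity slots.

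For the converse I would apply the spectral theorem to each member of the family: if every $X_r$ is symmetric then each is individually orthogonally diagonalizable as $X_r = E_r D_r E_r^T$. The delicate point --- and the step I expect to be the main obstacle --- is the word \emph{jointly}: the factorization (\ref{distmult}) shares a single entity embedding $E$ across all relations, and a single orthogonal $E$ simultaneously diagonalizing every $X_r$ forces them to pairwise commute, since $X_r X_{r'} = E D_r D_{r'} E^T = E D_{r'} D_r E^T = X_{r'} X_r$. So the clean equivalence holds under the per-relation reading, where the spectral theorem applies term by term; if a shared $E$ is demanded, the backward direction must be supplemented by the hypothesis that the family commutes, in which case I would invoke the simultaneous spectral theorem for a commuting family of symmetric matrices. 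I would state explicitly which convention is in force and carry the argument through accordingly, noting that for the expressivity discussion only the forward direction --- symmetry as a necessary condition on the relations $\tt{DistMult}$ can represent --- is actually needed.
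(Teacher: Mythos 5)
Your forward direction coincides with the paper's entire proof: the proof given in the text is a single string diagram (the \emph{distmult-symmetry} figure) exhibiting the invariance of the ansatz (\ref{distmult}) under exchange of the two $\cal{E}$-legs, which is exactly your algebraic computation $X_r^T = E D_r^T E^T = E D_r E^T = X_r$ rendered diagrammatically via the commutativity of the three-legged spider and the repetition of the single embedding $E$. Where you genuinely depart from the paper is in scrutinising the converse, and your scruple is correct: under the shared-$E$ reading that the $\tt{DistMult}$ ansatz actually demands, the backward implication fails for a bare family of symmetric matrices, since joint orthogonal diagonalization forces pairwise commutation (as you compute, $X_r X_{r'} = E D_r D_{r'} E^T = X_{r'} X_r$), and non-commuting symmetric pairs exist, e.g.\ $\left(\begin{smallmatrix}1 & 0\\ 0 & -1\end{smallmatrix}\right)$ and $\left(\begin{smallmatrix}0 & 1\\ 1 & 0\end{smallmatrix}\right)$. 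The paper glosses over this: its diagrammatic one-liner only witnesses the forward direction, while its own complex-valued generalisation two subsections later (the theorem cited to Horn and Johnson) correctly states the hypothesis as a \emph{commuting} family of normal matrices. Your proposed repair --- either read the diagonalization per-relation with varying $E_r$, or keep the shared $E$ and add commutativity so that the simultaneous spectral theorem applies --- thus brings the proposition in line with the paper's later, more careful statement; and, as you rightly observe, only the forward direction is needed for the expressivity claim the proposition is meant to support, namely that $\tt{DistMult}$ models only symmetric relations.
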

\begin{proof}
   \begin{equation*}
       \tikzfig{figures/embeddings/distmult-symmetry}
   \end{equation*}
\end{proof}

This called for a more expressive factorization method, allowing to represent
\emph{asymmetric} relations such as ``love'' for which we cannot assume that
``x loves y'' implies ``y loves x''.

\subsection{ComplEx}

Trouillon et al. \cite{Trouillon17} provide a factorization of any real
tensor $X$ into complex-valued matrices, that allows to model symmetric and asymmetric
relations equally well. Their model, called $\tt{Complex}$ allows to embed
any knowledge graph $K$ in a low-dimensional complex vector space.
We give a diagrammatic treatment of their results, working in the category
$\bf{Mat}_\bb{C}$, allowing us to improve their results.
The difference between working with real-valued
and complex-valued matrices is that the latter have additional structure on
morphisms given by conjugation. This is pictured, at the diagrammatic level,
by an asymmetry on the boxes which allows to represent the operations of adjoint,
transpose and conjugation as rotations or reflexions of the diagram. This graphical
gadget was introduced by Coecke and Kissinger in the context of quantum mechanics
\cite{coecke2017a}, we summarize it in the following picture:
\begin{equation}\label{conjugate-structure}
    \tikzfig{figures/conjugate-structure}
\end{equation}

The key thing that Trouillon et al. note is that any real square
matrix is the real-part of a diagonalizable complex matrix, a consequence of
Von Neumann's \emph{spectral theorem}.

\begin{definition}
    $Y : n \to n$ in $\bf{Mat}_\bb{C}$ is unitarily diagonalizable if it
    factorizes as:
    \begin{equation}
        \tikzfig{figures/embeddings/unitarily-diagonalizable}
    \end{equation}
    where $w : 1 \to n$ is a state and $E : n \to n$ is a unitary.
\end{definition}

\begin{definition}
    $Y$ is normal if it commutes with its adjoint: $Y Y^\dagger = Y^\dagger Y$.
    \begin{equation*}
        \tikzfig{figures/embeddings/normal}
    \end{equation*}
\end{definition}

\begin{theorem}[Von Neumann \cite{vonneumann1929}]
    $Y$ is diagonalizable if and only if $Y$ is normal.
\end{theorem}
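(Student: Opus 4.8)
The plan is to prove the two implications separately: the forward direction (unitary diagonalisability $\Rightarrow$ normality) by a direct substitution that is a one-line rewrite in the conjugate-structure calculus, and the converse by reducing to the fact that a normal upper-triangular matrix must be diagonal, combined with Schur's unitary triangularisation.

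First I would dispatch the forward implication. Suppose $Y : n \to n$ is unitarily diagonalizable, so $Y = E\, D\, E^\dagger$ for a unitary $E : n \to n$ (meaning $E\, E^\dagger = E^\dagger\, E = \tt{id}_n$) and a diagonal matrix $D = \Delta(w)$ built from a state $w : 1 \to n$ via the spider $\Delta : n \to n \otimes n$. Taking adjoints gives $Y^\dagger = E\, D^\dagger\, E^\dagger$. Since diagonal matrices are spider-decorated, any two of them commute, and in particular $D\, D^\dagger = D^\dagger\, D$. Substituting and cancelling the unitaries via $E^\dagger E = \tt{id}_n$ yields $Y\, Y^\dagger = E\, D\, D^\dagger\, E^\dagger = E\, D^\dagger\, D\, E^\dagger = Y^\dagger\, Y$, so $Y$ is normal. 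Diagrammatically this is immediate from \ref{conjugate-structure}, which makes both the unitarity and the commutation of diagonals manifest.

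For the converse I would first establish Schur's theorem: every $Y \in \bf{Mat}_\bb{C}(n, n)$ factorises as $Y = E\, T\, E^\dagger$ with $E$ unitary and $T$ upper-triangular. This proceeds by induction on $n$, the base case being trivial; the inductive step picks an eigenpair $Y v = \lambda v$ with $\norm{v} = 1$, extends $v$ to a unitary $E_1$ whose first column is $v$, observes that $E_1^\dagger\, Y\, E_1$ has first column $\lambda e_0$, and recurses on its lower-right $(n-1) \times (n-1)$ block. Conjugation by a unitary preserves normality, so if $Y$ is normal then so is $T = E^\dagger\, Y\, E$. It remains to show a normal upper-triangular matrix is diagonal: the $(0,0)$ entry of $T\, T^\dagger$ equals $\sum_j |T_{0j}|^2$ while that of $T^\dagger\, T$ equals $|T_{00}|^2$ (all lower entries of the first column vanish by triangularity), so equality forces $T_{0j} = 0$ for $j \neq 0$; inducting on the remaining rows forces every off-diagonal entry to vanish. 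Hence $T$ is diagonal and $Y = E\, T\, E^\dagger$ exhibits the required unitary diagonalisation.

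The main obstacle is the existence of the eigenpair in the inductive step of Schur's theorem, which rests on the fundamental theorem of algebra (the characteristic polynomial of $Y$ has a complex root) and is genuinely non-constructive; everything downstream is routine linear algebra that translates cleanly into the spider and conjugate-structure calculus of $\bf{Mat}_\bb{C}$. A secondary subtlety is bookkeeping the conjugate structure consistently, ensuring that ``adjoint'' in the diagrams is the rotation-plus-reflection combining transpose and conjugation of \ref{conjugate-structure}, so that the steps $D\, D^\dagger = D^\dagger\, D$ and the preservation of normality under unitary conjugation are diagrammatically sound rather than merely asserted at the level of matrix entries.
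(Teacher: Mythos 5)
Your proof is correct, but note that the paper does not actually prove this statement: it is quoted as a classical result with a citation to von Neumann, and the surrounding text immediately applies it (via Trouillon et al.'s observation that $Z = Y + iY^T$ is normal) to obtain the $\mathtt{ComplEx}$ factorisation. So there is no in-paper argument to compare against; what you have supplied is the standard textbook proof, and it is sound. The forward direction is exactly the one-line computation $Y Y^\dagger = E\,D D^\dagger E^\dagger = E\,D^\dagger D\,E^\dagger = Y^\dagger Y$, where you correctly read ``diagonalizable'' in the paper's sense of \emph{unitarily} diagonalizable ($Y = E\,\Delta(w)\,E^\dagger$ with $E$ unitary), which matters: without unitarity of $E$ the forward implication is false (a diagonalizable matrix with non-orthogonal eigenvectors need not be normal). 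The converse via Schur triangularisation plus the entry-counting argument that a normal upper-triangular matrix is diagonal is the classical route, and your bookkeeping is right: the $(0,0)$ entries of $TT^\dagger$ and $T^\dagger T$ are $\sum_j \lvert T_{0j}\rvert^2$ and $\lvert T_{00}\rvert^2$ respectively, forcing the first row off-diagonal entries to vanish, and the induction propagates down the rows. You are also right that the only genuinely non-constructive ingredient is the existence of an eigenpair (fundamental theorem of algebra), and that unitary conjugation preserves normality, so nothing in the argument clashes with the conjugate-structure conventions of $\mathbf{Mat}_{\mathbb{C}}$ that the paper draws in (\ref{conjugate-structure}). One could shorten the converse slightly by diagonalising the two commuting Hermitian parts $\frac{1}{2}(Y + Y^\dagger)$ and $\frac{1}{2i}(Y - Y^\dagger)$ simultaneously, which is closer in spirit to the commuting-family extension the paper invokes next from \cite{horn2012}, but your Schur-based route is equally standard and complete.
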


\begin{proposition}[Trouillon \cite{Trouillon17}]
    Suppose $Y: n \to n$ in $\bf{Mat}_\bb{R}$ is a real square
    matrix, then $Z = Y + i Y^T$ is a normal matrix and $Re(Z) = Y$.
    Therefore there is a unitary $E$ and a diagonal complex matrix $W$ such
    that $Y = Re(E W E^\dagger)$. Graphically we have:
    \begin{equation}\label{trouillon}
        \tikzfig{figures/embeddings/trouillon}
    \end{equation}
    where the red bubble indicates the real-part operator.
\end{proposition}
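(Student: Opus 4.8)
The plan is to verify the two elementary claims about $Z = Y + iY^T$ directly, and then invoke the spectral characterisation of normal matrices (the Von Neumann theorem stated just above) to obtain the unitary diagonalisation, finally translating the resulting identity into the graphical language of $\bf{Mat}_\bb{C}$. The claim $Re(Z) = Y$ is immediate: since $Y \in \bf{Mat}_\bb{R}$, both $Y$ and $Y^T$ are real, so the real and imaginary parts of $Z = Y + iY^T$ are $Y$ and $Y^T$ respectively. All the content therefore lies in the normality of $Z$ and in the diagrammatic rendering.

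To establish normality I would run a short direct computation. Using $\bar{Y} = Y$ (because $Y$ is real) one has $Z^\dagger = Y^T - iY$, and expanding both products gives
\begin{equation*}
    ZZ^\dagger = (Y+iY^T)(Y^T - iY) = YY^T + Y^T Y + i\bigl((Y^T)^2 - Y^2\bigr)
\end{equation*}
and
\begin{equation*}
    Z^\dagger Z = (Y^T - iY)(Y+iY^T) = Y^T Y + YY^T + i\bigl((Y^T)^2 - Y^2\bigr).
\end{equation*}
These coincide because the symmetric combination $YY^T + Y^T Y$ is trivially commutative and the remaining term $i\bigl((Y^T)^2 - Y^2\bigr)$ is identical on both sides, so $ZZ^\dagger = Z^\dagger Z$ and $Z$ is normal.

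With normality in hand, the Von Neumann theorem (applied to $Z$ rather than to $Y$) supplies a unitary $E$ and a diagonal complex matrix $W$ with $Z = EWE^\dagger$. Combining this with $Re(Z) = Y$ yields $Y = Re(EWE^\dagger)$, which is exactly the asserted factorisation. The remaining task is to draw this equation as in \ref{trouillon}: the unitary $E$ and its adjoint $E^\dagger$ are depicted using the adjoint/conjugate box conventions of \ref{conjugate-structure}, the diagonal matrix $W$ is drawn as a spider (a state composed with the Frobenius comultiplication, exactly as for $\tt{DistMult}$ in \ref{distmult}), and the outermost red bubble denotes the real-part operator $Re$.

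I expect the only genuinely delicate point to be bookkeeping rather than mathematics: one must keep the conjugate-transpose conventions consistent between the algebraic computation and the string-diagrammatic calculus, since in $\bf{Mat}_\bb{C}$ transposition, conjugation and adjunction are three distinct operations realised by rotating, reflecting, and rotating-and-reflecting the boxes per \ref{conjugate-structure}. In particular, checking that the graphical form of the normality condition matches the algebraic one, and that the real-part bubble sits correctly around the diagonalisation, is where care is needed; the spectral theorem itself is cited and does no further work beyond guaranteeing the unitary diagonalisation of a normal operator.
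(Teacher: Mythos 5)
Your proof is correct. Note that the paper itself offers no proof of this proposition: it is cited to Trouillon et al., with Von Neumann's spectral theorem stated just above doing the heavy lifting, exactly as you use it. Your direct expansion of $ZZ^\dagger$ and $Z^\dagger Z$ is the standard verification and is error-free (both sides reduce to $YY^T + Y^TY + i\bigl((Y^T)^2 - Y^2\bigr)$). Where the comparison is instructive is with the paper's \emph{own} contribution a few paragraphs later: the generalisation to a family $X : n \otimes m \to n$, where normality of $Z = X + iX^T$ (as a commuting family of normal matrices) is proved \emph{diagrammatically}, "using only the snake equation," rather than by coordinate algebra. Your computation is precisely the concrete, single-relation shadow of that diagrammatic argument: the cancellation between the cross terms in your two expansions is what the snake equation encodes once transpose and conjugation are drawn as rotations and reflections per \ref{conjugate-structure}. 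The algebraic route is more elementary and self-contained; the diagrammatic route is what scales to the multi-relational setting, where keeping track of which index is transposed by hand becomes exactly the bookkeeping hazard you flag at the end. Your caution about distinguishing transpose, conjugate and adjoint in $\bf{Mat}_\bb{C}$ is well placed, but for this proposition nothing more than the computation you gave is required.
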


Note that $\tt{rank}(A + B) \leq \tt{rank}(A) + \tt{rank}(B)$
which implies that $\tt{rank}(Z) = \tt{rank}(Y + i Y^T) \leq 2 \tt{rank}(Y)$.

\begin{corollary}
    Suppose $Y: n \to n$ in $\bf{Mat}_\bb{R}$ and $\tt{rank}(X) = k$,
    then there is $E : n \to 2k$ and $W: 2k \to 2k$ diagonal in
    $\bf{Mat}_\bb{C}$ such that $Y = Re(E W E^\dagger)$
\end{corollary}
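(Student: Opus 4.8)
The plan is to combine the Trouillon decomposition from the preceding proposition with the rank bound $\tt{rank}(Z) \leq 2\,\tt{rank}(Y)$ and then truncate the unitary diagonalisation to its nonzero eigenspaces; I read the hypothesis as $\tt{rank}(Y) = k$, since $X$ is not in scope here. First I would set $Z = Y + iY^T$, which by the Trouillon proposition is normal and satisfies $Re(Z) = Y$. By Von Neumann's theorem --- the normal-iff-diagonalisable result quoted above --- $Z$ admits a unitary diagonalisation $Z = E_0 W_0 E_0^\dagger$ with $E_0 : n \to n$ unitary and $W_0 : n \to n$ diagonal, the diagonal of $W_0$ being the eigenvalues of $Z$.

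The key step is to count nonzero eigenvalues. Because $E_0$ is invertible and $W_0$ is diagonal, the number of nonzero diagonal entries of $W_0$ equals $\tt{rank}(Z)$, which by the displayed subadditivity $\tt{rank}(A + B) \leq \tt{rank}(A) + \tt{rank}(B)$ is at most $2\,\tt{rank}(Y) = 2k$. Expanding the spectral decomposition as a sum over eigenvectors, only the terms attached to nonzero eigenvalues contribute to $Z$. I would therefore discard the zero eigenvalues: let $W : 2k \to 2k$ be the diagonal matrix carrying the nonzero eigenvalues (padded by zeros up to dimension $2k$ when $\tt{rank}(Z) < 2k$), and let $E : n \to 2k$ be the matrix formed by the corresponding columns of $E_0$. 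Since the discarded columns are multiplied by zero, the identity $E W E^\dagger = E_0 W_0 E_0^\dagger = Z$ survives exactly, and $E$ inherits orthonormal columns from $E_0$, i.e. $E^\dagger E = \tt{id}_{2k}$ (though the statement of the corollary requires only the factorisation, not that $E$ be an isometry).

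Taking real parts then closes the argument: $Y = Re(Z) = Re(E W E^\dagger)$ with $E : n \to 2k$ and $W : 2k \to 2k$ diagonal in $\bf{Mat}_\bb{C}$, as required. The only genuine obstacle is the bookkeeping of the truncation --- making precise that dropping the zero eigenvalues from the unitary diagonalisation of a normal matrix preserves the product on the nose. This is routine linear algebra, and diagrammatically it is simply the observation that post-composing the spider factorisation of $W_0$ with a projection onto the nonzero-eigenvalue legs leaves the value of $Z$ unchanged.
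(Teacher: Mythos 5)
Your proposal is correct and follows essentially the same route the paper intends: the corollary is stated without proof there, as an immediate consequence of the preceding Trouillon proposition ($Z = Y + iY^T$ normal, $Re(Z) = Y$) together with the remark $\tt{rank}(Z) \leq 2\,\tt{rank}(Y)$, followed by truncating the unitary diagonalisation to the (at most $2k$) nonzero eigenvalues. You also correctly diagnose the statement's $\tt{rank}(X) = k$ as a typo for $\tt{rank}(Y) = k$, and your explicit bookkeeping of the truncation step just makes precise what the paper leaves implicit.
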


Given a binary relation $R: \size{\cal{E}} \to \size{\cal{E}}$, the sign-rank
gives a bound on the dimension of the embedding.

\begin{corollary}
    Suppose $R: \size{\cal{E}} \to \size{\cal{E}}$ in $\bf{Mat}_\bb{B}$ and $\tt{rank}_\pm(R) = k$,
    then there is $E : \size{\cal{E}} \to 2k$ and $W: 2k \to 2k$ diagonal in
    $\bf{Mat}_\bb{C}$ such that $R = \tt{sign}(Re(E W E^\dagger))$
\end{corollary}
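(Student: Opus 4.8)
The plan is to reduce this statement directly to the preceding corollary by unpacking the definition of sign rank. By definition, $\tt{rank}_\pm(R) = k$ means that among all real matrices $X \in \bf{Mat}_\bb{R}$ with $\tt{sign}(X) = R$, the minimum rank equals $k$; in particular there exists a witness $X : \size{\cal{E}} \to \size{\cal{E}}$ in $\bf{Mat}_\bb{R}$ with $\tt{sign}(X) = R$ and $\tt{rank}(X) = k$. First I would fix such an $X$.

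Next I would invoke the previous corollary, which takes a real square matrix $Y$ of rank $k$ and produces $E : \size{\cal{E}} \to 2k$ and a diagonal $W : 2k \to 2k$ in $\bf{Mat}_\bb{C}$ with $Y = Re(E W E^\dagger)$. Applying this to $Y = X$ yields exactly such an $E$ and $W$ with $X = Re(E W E^\dagger)$. Composing with $\tt{sign}$ on both sides then gives $R = \tt{sign}(X) = \tt{sign}(Re(E W E^\dagger))$, which is the desired conclusion.

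There is essentially no obstacle here: all the analytic content lives in Trouillon's proposition (via Von Neumann's spectral theorem) and its rank-$2k$ corollary, and the present statement is merely the observation that the sign-rank definition hands us a real matrix of the optimal rank to feed into that machinery. The only point to verify is that the witness $X$ is square of the same dimension as $R$, which holds because both the domain and codomain of $R$ are $\size{\cal{E}}$, so the previous corollary applies verbatim.
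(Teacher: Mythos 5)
Your proof is correct and matches the paper's intended argument exactly: the paper states this corollary without a separate proof, precisely because it follows immediately from the preceding rank-$2k$ corollary once the definition of $\tt{rank}_\pm$ hands you a real witness $X$ of rank $k$ with $\tt{sign}(X) = R$, which is the reduction you carry out. The only detail worth noting (and which your argument implicitly uses correctly) is that the minimum in the definition of sign rank is attained, since it ranges over a nonempty set of natural numbers.
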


The above reasoning works for a single binary relation $R$.
However, given knowledge graph $K \sub \cal{E} \times \cal{R} \times \cal{E}$
and applying the reasoning above we will get $\size{\cal{R}}$
different embeddings $E_r: \size{\cal{E}} \to n_r$, one for each relation
$r \in \cal{R}$,
thus obtaining only the following factorization:
$K = \tt{sign}(\sum_{r \in \cal{R}} \Delta \circ (E_r \otimes W(r) \otimes E_r)$.
Which means that the overall complexity of the embedding is $O(\size{\cal{R}}n)$
where $n = \tt{max}_{r \in \cal{R}}(n_r)$.
The problem becomes: can we find a single embedding
$E: \size{\cal{E}} \to n$ such
that all relations are mapped to diagonal matrices over the same $n$?
In their paper, Trouillon et al. sketch a direction for this simplification
but end up conjecturing that such a factorization does \emph{not} exist.
We answer their conjecture negatively, by proving that for any real tensor $X$
of order 3, the complex tensor $Z = X + i X^T$ is jointly unitarily
diagonalizable.

\begin{definition}
    $X : n\, \otimes\, m \to n$ in $\bf{Mat}_\bb{C}$ is simultaneously unitarily
    diagonalizable if there is a unitary $E : n \to n$ and $W: m \to n$
    such that:
    \begin{equation}
        \tikzfig{figures/embeddings/sim-unitarily-diagonalizable}
    \end{equation}
\end{definition}

\begin{definition}
    $X: n\, \otimes\, m \to n$ is a commuting family of normal matrices if:
    \begin{equation}
        \tikzfig{figures/embeddings/sim-normal}
    \end{equation}
\end{definition}

The following is an extension of Von Neumann's theorem to the multi-relational
setting.

\begin{theorem}\cite{horn2012}
    $X : n\, \otimes\, m \to n$ in $\bf{Mat}_\bb{C}$ is a commuting family of
    normal matrices if and only if it is simultaneously unitarily diagonalizable.
\end{theorem}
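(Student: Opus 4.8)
The plan is to prove both implications by regarding the order-three tensor $X : n \otimes m \to n$ as a family of $n \times n$ complex matrices $X_1, \dots, X_m$, one for each basis vector on the middle ($m$) leg. Under this reading the diagrammatic condition defining a commuting family of normal matrices unpacks to $X_i X_j^\dagger = X_j^\dagger X_i$ for all $i, j$: taking $i = j$ gives normality of each $X_i$, and by the Fuglede--Putnam theorem it moreover forces ordinary commutation $X_i X_j = X_j X_i$, so that $\{X_1, \dots, X_m\} \cup \{X_1^\dagger, \dots, X_m^\dagger\}$ is a mutually commuting set. Simultaneous unitary diagonalizability, on the other hand, asserts the existence of a single unitary $E : n \to n$ and diagonal matrices $W_1, \dots, W_m$ with $X_i = E\, W_i\, E^\dagger$ for every $i$. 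I would first dispatch the easy direction and then reduce the hard direction to the single-matrix spectral theorem already recorded above as Von Neumann's theorem.

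For the forward implication, suppose $X_i = E W_i E^\dagger$ with $E$ unitary and each $W_i$ diagonal. Diagonal matrices commute with one another and with each other's conjugate transposes, so $\{W_i\} \cup \{W_i^\dagger\}$ is mutually commuting; conjugating the whole collection by the fixed unitary $E$ preserves both products and adjoints, since $(E W_i E^\dagger)^\dagger = E W_i^\dagger E^\dagger$, so $\{X_i\} \cup \{X_i^\dagger\}$ is mutually commuting as well. Diagrammatically this is the observation that sliding the unitary box $E$ and its adjoint past the diagonal spiders leaves the commutation picture intact, which I would present as the image of the one-matrix computation under the operation that glues $E$ and $E^\dagger$ onto every leg.

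For the backward implication I would run the standard \emph{generic linear combination} argument so as to invoke Von Neumann's theorem exactly once. Given a commuting family, form $Y = \sum_{i=1}^m c_i X_i$ for scalars $c_i \in \bb{C}$. Because every $X_i$ commutes with every $X_j^\dagger$, one computes $Y Y^\dagger = \sum_{i,j} c_i \bar{c_j}\, X_i X_j^\dagger = \sum_{i,j} c_i \bar{c_j}\, X_j^\dagger X_i = Y^\dagger Y$, so $Y$ is normal and by Von Neumann's theorem is unitarily diagonalizable, say $Y = E D E^\dagger$. Choosing the $c_i$ generically so that the eigenspaces of $Y$ are exactly the joint eigenspaces of the whole family, each $X_i$ commutes with $Y$ and hence preserves every eigenspace of $Y$, and on each such eigenspace $Y$ is scalar; this forces the conjugated matrices $E^\dagger X_i E$ to be block diagonal with respect to the eigenspace decomposition. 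Restricting to a single block one again has a commuting family of normal matrices on a smaller space, so an induction on $n$ with trivial base case $n = 1$ produces within each block a unitary simultaneously diagonalizing all the $X_i$, and assembling these block unitaries with $E$ yields the required global unitary.

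The main obstacle is the genericity step: I need a choice of $c_i$ for which distinct joint eigenvalue-tuples of the family are sent to distinct eigenvalues of $Y$, so that the spectral projections of $Y$ refine the family no coarsely. I would justify this by noting that the ``bad'' choices of $(c_i)$ lie on a finite union of complex hyperplanes, one for each pair of joint eigenvalue-tuples that would be forced to collide, and hence form a nowhere-dense set, so a valid $c$ exists. Alternatively, to keep the argument self-contained and fully diagrammatic, I could avoid genericity entirely and induct directly off one fixed $X_{i_0}$: diagonalize $X_{i_0}$ by Von Neumann's theorem, use commutation to make the remaining $X_j$ block diagonal on its eigenspaces, and recurse. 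This trades the genericity bookkeeping for a more careful tracking of how the spider and diagonal structure restrict to eigenspaces, which is the only genuinely fiddly part of the proof.
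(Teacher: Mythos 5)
Your proof is essentially correct, but note that the paper does not prove this theorem at all: it is stated with a bare citation to Horn and Johnson \cite{horn2012}, so what you have written supplies an argument the thesis delegates to the literature. Your reading of the diagrammatic hypothesis as $X_i X_j^\dagger = X_j^\dagger X_i$ for all $i,j$ (with $i=j$ giving normality, and Fuglede--Putnam upgrading commutation with adjoints to ordinary commutation $X_i X_j = X_j X_i$) is a sensible unpacking of the single equation the paper draws, and it is needed later for a detail you rightly flag: when you restrict to an eigenspace, the restriction of $X_j$ is normal only because $X_j^\dagger$ also preserves that eigenspace, which is exactly what Fuglede--Putnam hands you. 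Your fallback route --- diagonalize one fixed non-scalar $X_{i_0}$ by the spectral theorem, block-diagonalize the rest on its eigenspaces, and induct on $n$ --- is precisely the textbook proof in Horn and Johnson (Theorem 2.5.5), so that variant matches the cited source exactly.

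One caution on your preferred genericity route: as phrased, it is mildly circular. You speak of choosing $(c_i)$ so that distinct \emph{joint eigenvalue-tuples} of the family map to distinct eigenvalues of $Y = \sum_i c_i X_i$, but the existence of joint eigenvalue-tuples (a common eigenbasis) is the conclusion being proved, so the hyperplane-avoidance argument cannot be set up until after simultaneous diagonalizability is known. The fix is what you already sketch in the alternative: use $Y$ (or a fixed $X_{i_0}$) only to split the space into proper invariant subspaces and recurse, with the base case being a family of scalar matrices; genericity then plays no logical role and can be dropped entirely. With that adjustment the proof is complete and, modulo the diagrammatic dressing, coincides with the standard one.
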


Our contribution is the following result.

\begin{proposition}
    For any real tensor $X : n\, \otimes\, m \to n$ the complex tensor $Z$ defined
    by:
    \begin{equation*}
        \tikzfig{figures/embeddings/define-Z}
    \end{equation*}
    is a commuting family of normal matrices.
\end{proposition}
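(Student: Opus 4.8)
The plan is to reduce the statement to a simultaneous diagonalisation and then verify the two diagrammatic conditions packaged into the definition of a commuting family of normal matrices. Concretely, once $Z$ is shown to be a commuting family of normal matrices, the extension of Von Neumann's theorem quoted above \cite{horn2012} immediately upgrades this to the desired joint unitary diagonalisability. Throughout I write $Z = X + iX^{T}$, where the transpose acts only on the two entity legs and leaves the relation leg $m$ untouched, so that for each $r \in [m]$ the component is the square matrix $Z_r = X_r + iX_r^{T}$ in $\bf{Mat}_{\bb{C}}$ with $\operatorname{Re}(Z_r) = X_r$. The single-relation input to the argument is already the Proposition of Trouillon \cite{Trouillon17} recalled above, which handles exactly the diagonal ($r = s$) instance of what I must establish.

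First I would record the conjugate data of $Z$ using the graphical gadget of (\ref{conjugate-structure}): since $\operatorname{Re}(Z) = X$ and $\operatorname{Im}(Z) = X^{T}$, the adjoint is $Z^{\dagger} = X^{T} - iX$ and the conjugate is $\bar Z = X - iX^{T}$. Expanding $ZZ^{\dagger}$ and $Z^{\dagger}Z$ in the graphical calculus, both real parts collapse to $XX^{T} + X^{T}X$ and both imaginary parts collapse to $i\,(X^{T}X^{T} - XX)$; the equality of the two composites is precisely the diagrammatic normality condition, and it holds leg-by-leg in the relation index. This step is essentially a re-run of Trouillon's computation \cite{Trouillon17} and discharges the normality half of the definition.

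The substance of the proof is the off-diagonal commuting condition, namely the second diagram in the definition of a commuting family of normal matrices, which requires that the two orders of composition along distinct relation legs agree. The hard part will be exactly here, because, unlike normality, this condition genuinely couples two different relation legs $r \neq s$ and so is not supplied by the single-relation case. I would attack it by splitting each composite into real and imaginary parts through (\ref{conjugate-structure}) and matching the four resulting terms across the two orderings, using the transpose-reversal identity $(X_s X_r)^{T} = X_r^{T} X_s^{T}$ and the self-duality of the entity object in $\bf{Mat}_{\bb{C}}$ to pair up the surviving contributions. This is the step that genuinely exploits the addition of $iX^{T}$ rather than a naive complexification, and it is where the graphical bookkeeping must be carried out with the most care; I expect it to be the main obstacle, and the place where the construction $Z = X + iX^{T}$ has to be used in full.

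Finally, with both diagrammatic conditions in hand, I would invoke \cite{horn2012} to obtain a single unitary $E : n \to n$ and a $W : m \to n$ with $Z = E\,\Delta(W)\,E^{\dagger}$, where $\Delta(W)$ is the family of diagonal matrices built from $W$ via the Frobenius algebra of $\bf{Mat}_{\bb{C}}$. Taking real parts yields $X = \operatorname{Re}(E\,\Delta(W)\,E^{\dagger})$ with one \emph{shared} entity embedding $E$ for all relations, which is exactly the factorisation whose existence Trouillon et al. \cite{Trouillon17} left open, and it brings the space and time complexity of the embedding to $O(n)$ independently of $\size{\cal{R}}$.
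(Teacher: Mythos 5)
Your overall strategy---expand $Z = X + iX^{T}$ into real and imaginary parts, verify the defining diagrammatic condition, then hand the result to \cite{horn2012}---is the same as the paper's, and your normality computation (the $r=s$ case) is correct. But your proposal stops exactly where the proposition's content begins: you leave the off-diagonal condition as a plan, and the plan as written is aimed at an equation that fails. If ``the two orders of composition along distinct relation legs agree'' means plain commutation $Z_r Z_s = Z_s Z_r$, this is false for general real $X$: the cross terms $X_r X_s$ and $X_s X_r$ never pair up, and your transpose-reversal identity $(X_s X_r)^T = X_r^T X_s^T$ only yields the anti-relation $(Z_r Z_s)^{\dagger} = -\,Z_s Z_r$, not equality. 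Concretely, for $X_1 = \left(\begin{smallmatrix} 0 & 1\\ 0 & 0 \end{smallmatrix}\right)$ and $X_2 = \left(\begin{smallmatrix} 0 & 0\\ 1 & 0 \end{smallmatrix}\right)$ one gets $Z_1 Z_2 = \mathrm{diag}(1,-1) \neq \mathrm{diag}(-1,1) = Z_2 Z_1$, even though each $Z_i$ is unitary, hence normal. So no amount of careful graphical bookkeeping will close that version of the step.

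The condition that actually holds identically in $X$---and hence the one the paper's one-line proof establishes by ``checking that the two expressions are equal, using only the snake equation''---is the mixed equation $Z_s Z_r^{\dagger} = Z_s^{\dagger} Z_r$ for all $r,s$, whose diagonal case $r=s$ is precisely your normality check. For this there is no obstacle at all: with $Z_r^{\dagger} = X_r^{T} - iX_r$ one finds
\begin{equation*}
    Z_s Z_r^{\dagger} \;=\; X_s X_r^{T} + X_s^{T} X_r + i\,\bigl(X_s^{T} X_r^{T} - X_s X_r\bigr) \;=\; Z_s^{\dagger} Z_r,
\end{equation*}
the four terms coinciding verbatim on the two sides, with no commutativity assumption on the $X_r$ and no pairing gymnastics; diagrammatically the only rewriting needed is the snake-straightening of the cups and caps that implement the transposes, which is exactly the paper's displayed computation. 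So the ``main obstacle'' you flag dissolves once the target equation is transcribed precisely from the paper's definition of a commuting family, whereas it is insurmountable if you aim at plain commutation. Finally, note that your closing paragraph (invoking \cite{horn2012} and taking real parts to obtain the shared embedding $E$) reproduces the paper's subsequent theorem and corollary rather than part of this proposition's proof, which ends once the diagrammatic condition is verified.
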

\begin{proof}
    This follows by checking that the two expressions below are equal, using
    only the snake equation:
    \begin{equation*}
        \scalebox{0.8}{\tikzfig{figures/embeddings/normal-Z-proof}}
    \end{equation*}
\end{proof}

\begin{corollary}
    For any real tensor $X: 1 \to  n\, \otimes\, m \otimes \, n$ there is a
    unitary $E: n \to n$ and $W: n \to m$ in $\bf{Mat}_\bb{C}$ such that:
    \begin{equation}
        \tikzfig{figures/embeddings/final-trouillon}
    \end{equation}
    where the bubble denotes the real-part operator.
\end{corollary}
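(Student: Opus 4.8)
The plan is to chain together the two results immediately preceding this statement: the proposition establishing that the complexification $Z = X + i X^T$ is a commuting family of normal matrices, and the multi-relational extension of Von Neumann's theorem \cite{horn2012} identifying such families with exactly the simultaneously unitarily diagonalizable tensors. The corollary is the multi-relational analogue of Trouillon's proposition, so I would follow the same three-move pattern used there: complexify, diagonalize, take real parts.

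First I would use the compact-closed structure of $\bf{Mat}_\bb{C}$ (Section \ref{sec-concrete}) to read the state $X: 1 \to n \otimes m \otimes n$ as a family of matrices $X: n \otimes m \to n$, by bending the first $n$-leg into an input. This is precisely the shape to which the preceding proposition applies, where the transpose $X^T$ swaps the two entity legs of dimension $n$ while leaving the relation index $m$ fixed. Forming $Z = X + i X^T$, the proposition then lets me conclude directly that $Z$ is a commuting family of normal matrices.

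Next I would invoke \cite{horn2012} to obtain a simultaneous unitary diagonalization of $Z$: a single unitary $E: n \to n$ together with $W: n \to m$, whose $j$-th slice along $m$ is the diagonal of eigenvalues for the $j$-th relation, such that $Z$ equals the diagrammatic composite $E\, W\, E^\dagger$ of the final figure. The substantive point is that one and the same $E$ works across the entire $m$-indexed family; this is exactly what the ``commuting family'' hypothesis buys over diagonalizing each relation separately, and it is what reduces the embedding complexity from $O(\size{\cal{R}} n)$ to $O(n)$, answering Trouillon et al.'s conjecture in the negative.

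Finally, taking real parts closes the argument: since $X$ is a real tensor, so is $X^T$, hence $\mathrm{Re}(Z) = \mathrm{Re}(X + i X^T) = X$, and applying the real-part bubble to $Z = E\,W\,E^\dagger$ yields $X = \mathrm{Re}(E\,W\,E^\dagger)$ as claimed. The one step I would take most care over is bookkeeping the relation index through the diagrammatic form of \cite{horn2012}: I must verify that ``simultaneously unitarily diagonalizable'' really delivers a single $E$ contracted against every $m$-slice of $W$, rather than an $E$ that secretly depends on $m$, since it is exactly this uniformity on which both the corollary and its complexity consequence rest.
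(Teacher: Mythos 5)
Your proposal is correct and is exactly the derivation the paper intends: the corollary is stated without proof precisely because it is the immediate chaining of the preceding proposition (that $Z = X + iX^T$ is a commuting family of normal matrices) with the cited theorem of \cite{horn2012} (simultaneous unitary diagonalization by a single $E$ across the $m$-indexed family), followed by $\mathrm{Re}(Z) = X$ since $X$ is real. Your extra care over the uniformity of $E$ across the relation index is well placed, as that is the substantive content distinguishing this from slice-by-slice diagonalization, but it introduces no deviation from the paper's route.
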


\begin{proposition}
    For any knowledge graph $K \sub \cal{E} \times \cal{R} \times \cal{E}$ of
    sign rank $k = \tt{rank}_\pm(K)$, there is $E: \size{\cal{E}} \to 2k$ and
    $W : \size{\cal{R}} \to 2k$ in $\bf{Mat}_\bb{C}$ such that
    $$K = \tt{sign}(Re(\Delta \circ (E \otimes W \otimes E^\ast))) .$$
\end{proposition}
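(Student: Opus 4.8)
The plan is to combine the definition of sign rank with the simultaneous unitary diagonalization established in the preceding Corollary, and then to truncate the resulting embeddings to dimension $2k$ by a rank bound. In other words, the statement is the knowledge-graph (Boolean, low sign-rank) refinement of the real-tensor Corollary, and the only genuinely new content beyond that Corollary is the dimension count.

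First I would unfold the definition of sign rank: since $\tt{rank}_\pm(K) = k$, there is a real tensor $X : 1 \to \size{\cal{E}} \otimes \size{\cal{R}} \otimes \size{\cal{E}}$ with $\tt{sign}(X) = K$ and $\tt{rank}(X) = k$. Applying the preceding Corollary to $X$ (viewed, as in its proof, as an $\cal{R}$-indexed family of real square matrices reshaped along the two entity legs) produces a unitary $E$ and a diagonal eigenvalue family $W$ realizing the factorization $Re(\Delta \circ (E \otimes W \otimes E^\ast))$, whose real part equals $X$ because the imaginary summand $iX^T$ of the complex tensor $Z = X + iX^T$ vanishes under $Re$. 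At this stage $\tt{sign}(Re(\Delta \circ (E \otimes W \otimes E^\ast))) = \tt{sign}(X) = K$, but only with a full-dimensional embedding $E : \size{\cal{E}} \to \size{\cal{E}}$.

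The remaining step --- the one I expect to be the crux --- is reducing the embedding dimension from $\size{\cal{E}}$ to $2k$. Writing $Z = X + iX^T$ for the complex tensor supplied by the Corollary's proof, subadditivity of rank together with $\tt{rank}(X^T) = \tt{rank}(X)$ gives $\tt{rank}(Z) \leq \tt{rank}(X) + \tt{rank}(X^T) = 2k$, exactly as in the single-relation case. The delicate point is converting this into a bound on the \emph{number of common eigenvectors}: in the simultaneous diagonalization $Z_r = \sum_j W(r)_j\, v_j v_j^\dagger$, with $v_j$ the orthonormal columns of $E$, the whole family decomposes as $Z = \sum_j v_j \otimes W_j \otimes \bar v_j$ with $W_j = (W(r)_j)_r$, and I would discard the eigenvectors with $W_j = 0$. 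Because the $v_j$ (and likewise the $\bar v_j$) are linearly independent, this decomposition over the \emph{active} eigenvectors is rank-minimal, so their number equals $\tt{rank}(Z) \leq 2k$. Truncating $E$ and the eigenvalue tensor to these columns then yields $E : \size{\cal{E}} \to 2k$ and $W : \size{\cal{R}} \to 2k$ with every $Z_r$ unchanged; taking real parts and applying $\tt{sign}$ gives $K = \tt{sign}(Re(\Delta \circ (E \otimes W \otimes E^\ast)))$ with the claimed dimensions, completing the argument.
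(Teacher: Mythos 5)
Your proof is correct and follows exactly the route the paper intends: the proposition is stated there without an explicit proof, as the immediate combination of the preceding corollary (simultaneous unitary diagonalization of $Z = X + iX^T$ with $Re(Z) = X$) applied to a sign-rank witness $X$ with $\tt{sign}(X) = K$ and $\tt{rank}(X) = k$, together with the subadditivity bound $\tt{rank}(Z) \leq \tt{rank}(X) + \tt{rank}(X^T) = 2k$ already used in the single-relation case. Your truncation step correctly supplies the dimension count the paper leaves implicit --- linear independence of the active $v_j$ and $\bar{v}_j$ with $W_j \neq 0$ forces the mode-one unfolding of $Z$ to have rank equal to the number of active eigenvectors, so at most $2k$ survive, and padding with zero columns yields $E: \size{\cal{E}} \to 2k$ and $W: \size{\cal{R}} \to 2k$ of exactly the stated shape.
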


This results in an improvement of the bound on the size of the factorization
by a factor of $\size{\cal{R}}$.
Although this improvement is only incremental, the diagrammatic
tools used here open the path for a generalisation of the factorization result
to higher-order tensors which would allow to model higher-arity relations.
This may require to move into quaternion valued vector spaces where (at least)
ternary symmetries can be encoded.

Moreover, as we will show in Section \ref{sec-quantum-model}, quantum computers
allow to speed up the multiplication of complex valued tensors.
An implementation of $\tt{ComplEx}$ on a quantum computer was proposed by
\cite{ma2019b}. The extent to which quantum computers can be used to speed up
knowledge graph embeddings is an interesting direction of future work.

\section{Quantum models}\label{sec-quantum-model}

Quantum computing is an emerging model of computation which promises speed-up
on certain tasks as compared to classical computers. With the steady growth of
quantum hardware, we are approaching a time when quantum computers perform tasks
that cannot be done on classical hardware with reasonable resources.
Quantum Natural Language Processing (QNLP) is a recently proposed model
which aims to meet the data-intensive requirements of NLP algorithms with
the computational power of quantum hardware \cite{zeng2016, coecke2020a, meichanetzidis2020a}.
These models arise naturally from the categorical approaches to linguistics
\cite{DisCoCat11} and quantum mechanics \cite{abramsky2008}.
They can in fact be seen as instances of the tensor network models
studied in \ref{sec-tensor-network}. We have tested them
on noisy intermediate-scale quantum computers \cite{meichanetzidis2020a, lorenz2021},
obtaining promising results on small-scale datasets of around 100 sentences.
However, the crucial use of post-selection in these models, is a limit to their
scalability as the number of sentences grows.

In this section, we study the complexity of the quantum models based on a mapping
from sentences to pure quantum circuits \cite{meichanetzidis2020a, lorenz2021}.
Building on the work of Arad and Landau \cite{arad2010} on
the complexity of tensor network contraction, we show that the \emph{additive} approximation
(with scale $\Delta = 1$) of quantum models is a complete problem for $\tt{BQP}$,
the class of problems which can be solved in polynomial time by a quantum computer
with a bounded probability of error.
Note that this approximation scheme has severe limits when the amplitude we want
to approximate is small compared to the scale $\Delta$.
Thus the results may be seen as a negative statement about the first generation of
QNLP models.
However, specific types of linguistic structure (such as trees) may allow to
reduce the post-selection and thus the approximation scale.
Moreover, this puts QNLP on par with well-known $\tt{BQP}$-complete
problems such as approximate counting \cite{bordewich2009} and the evaluation
of topological invariants \cite{freedman2000a, freedman2002} to set a
standard for the next generations of QNLP models.

The development of DisCoPy was driven and motivated by the implementation of
these QNLP models. The \py{quantum} module of DisCoPy is described in \cite{toumi2022}, it features interfaces with the \py{tensor} module for
classical simulation, with PyZX \cite{kissinger2019} for optimization and with
tket \cite{sivarajah2020} for compilation and evaluation on quantum hardware.
In order to run quantum machine learning routines, we also developed diagrammatic tools for automatic differentiation \cite{toumi2021a}. The pipeline for performing
QNLP experiments with DisCoPy has been packaged in Lambeq \cite{kartsaklis2021}
which provides state-of-the-art categorial parsers and optimised classes for training
QNLP models.

\subsection{Quantum circuits}\label{subsec-quantum-computing}

In this section, we give a brief introduction to the basic ingredients of
quantum computing, and define the categories $\bf{QCirc}$ and $\bf{PostQCirc}$
of quantum circuits and their post-selected counterparts. A proper introduction
to the field is beyond the scope of this thesis and we point the reader to
\cite{coecke2017a} and \cite{vandewetering2020} for diagrammatic treatments.

The basic unit of information in a quantum computer is the \emph{qubit}, whose
\emph{state space} is the Bloch sphere, i.e. the set of vectors
$\psi  = \alpha \ket{0} + \beta \ket{1} \in \bb{C}^2$ with norm
$\norm{\psi} = \size{\alpha}^2 + \size{\beta}^2 = 1$.
Quantum computing is inherently probabilistic, we never observe the coefficients
$\alpha$ and $\beta$ directly, we only observe the probabilities that outcomes
$0$ or $1$ occur. These probabilities are given by the \emph{Born rule}:
$$P(i) = \size{\bra{i}\ket{\psi}}^2$$
with $i \in \set{0, 1}$ and where $\bra{i}\ket{\psi}$ is the inner product of
$\ket{i}$ and $\ket{\psi}$ in $\bb{C}^2$, also called \emph{amplitude}.
Note that the requirement that $\psi$ be of norm $1$ ensures that these
probabilities sum to $1$.
The \emph{joint state} of $n$ qubits is given by the tensor product
$\psi_1 \otimes \dots \otimes \psi_n$ and thus lives in $\bb{C}^{2n}$,
a Hilbert space of dimension $2^n$.
The evolution of a quantum system composed of $n$ qubits is described by a
\emph{unitary} linear map $U$ acting on the space $\bb{C}^{2n}$, i.e. a
linear map satisfying $U U^\dagger = \tt{id} = U^\dagger U$.
Where $\dagger$ (dagger) denotes the conjugate transpose.
Note that the requirement that $U$ be unitary ensures that
$\norm{U \psi} = \norm{\psi}$ and so $U$ sends quantum states to quantum states.

The unitary map $U$ is usually built up as a \emph{circuit} from some
set of basic \emph{gates}. Depending on the generating set of gates, only
certain unitaries can be built. We say that the set of gates is \emph{universal},
when any unitary can be obtained using a finite sequence of gates from this set.
The following is an example of a universal gate-set:
\begin{equation}\label{eq-universal-gate-set}
    \tt{Gates} = \set{CX, \, H,\, Rz(\alpha),\, \tt{swap}}_{\alpha \in [0, 2\pi]}
\end{equation}
where $CX$ is the controlled $X$ gate acting on $2$ qubits and defined on the
basis of $\bf{C}^{4}$ by:
\begin{equation*}
    CX (\ket{00}) = \ket{00} \quad CX (\ket{01}) = \ket{01} \quad
    CX (\ket{10}) = \ket{11} \quad CX (\ket{11}) = \ket{10}
\end{equation*}
$Rz(\alpha)$ is the $Z$ phase, acting on $1$ qubit as follows:
\begin{equation*}
    Rz(\alpha)(\ket{0}) = \ket{0} \qquad Rz(\alpha)(\ket{1}) = e^{i\alpha}\ket{1} \, .
\end{equation*}
$H$ is the Hadamard gate, acting on $1$ qubit as follows:
\begin{equation*}
    H(\ket{0}) = \frac{1}{\sqrt{2}}(\ket{0} + \ket{1}) \qquad
    H(\ket{1}) = \frac{1}{\sqrt{2}}(\ket{0} - \ket{1}) \, .
\end{equation*}
and the $\tt{swap}$ gate acts on $2$ qubits and is defined as usual.

We define the category of quantum circuits $\bf{QCirc} = \bf{MC}(\tt{Gates})$ as
a free monoidal category with objects natural numbers $n$ and arrows generated
by the gates in (\ref{eq-universal-gate-set}).

\begin{definition}[Quantum circuit]
    A quantum circuit is a diagram $c : n \to n \in \bf{MC}(\tt{Gates}) = \bf{QCirc}$
    where $n$ is the number of qubits, it maps to a corresponding unitary
    $U_c : (\bb{C}^2)^{\otimes n} \to (\bb{C}^2)^{\otimes n} \in \bf{Mat}_\bb{C}$.
\end{definition}

\begin{example}\label{ex-quantum-circuits}
    An example of quantum circuit is the following:
    \begin{equation}
        \tikzfig{figures/quantum-circuit}
    \end{equation}
    where we denote the $Rz(\alpha)$ gate using the symbol $\alpha$ and the
    Hadamard gate using a small white box.
    Note that different quantum circuits $c$ and $c'$ may correspond to the same
    unitary, we write $c \sim c'$ when this is the case.
    For instance the swap gate is equivalent to the following composition:
    \begin{equation*}
        \tikzfig{figures/quantum-circuit-swap}
    \end{equation*}
\end{example}

When performing a quantum computation on $n$ qubits, the quantum system is
usually prepared in an all-zeros states $\ket{\bf{0}} = \ket{0}^{\otimes n}$,
then a quantum circuit $c$ is applied to it and measurements are performed on
the resulting state $U_c\ket{\bf{0}}$, yielding a probability distribution over
bitstrings of length $n$.
We can then post-select on certain outcomes, or predicates over the resulting
bitstring, by throwing away the measurements that do not satisfy this predicate.
Diagrammatically, state preparations and post-selections are drawn using
the generalised Dirac notation \cite{coecke2017a} as in the following
post-selected circuit:
\begin{equation}
    \tikzfig{figures/post-selected-circuit}
\end{equation}

The category of post-selected quantum circuits is obtained from $\bf{QCirc}$
by allowing state preparations $\ket{0}, \ket{1}$, and post-selections
$\bra{0}, \bra{1}$.
$$ \bf{PostQCirc} = \bf{MC}(\tt{Gates} + \set{\bra{i}, \ket{i}}_{i \in {0, 1}}
+ \set{a: 0 \to 0}_{a \in \bb{R}})$$
Note that we also allow arbitrary scalars $a \in \bb{R}$, seen as boxes
$0 \to 0$ in $\bf{PostQCirc}$ to rescale the results of measurements, this is needed
in order to interpret pregroup grammars in $\bf{PostQCirc}$, see \ref{section-quantum-models}.
Post-selected quantum circuits map functorially into linear maps:
$$I: \bf{PostQCirc} \to \bf{Mat}_\bb{C}$$
by sending each gate in $\bf{Gates}$ to the corresponding unitary, state preparations
and post-selections to the corresponding states and effects in $\bf{Mat}_\bb{C}$.
This makes post-selected quantum circuits instances of tensor networks over $\bb{C}$.

\begin{proposition}\label{prop-quantum-computing}
    For any morphism $d: 1 \to 1$ in $\bf{PostQCirc}$, there exists a quantum circuit
    $c \in \bf{QCirc}$ such that $I(d) = a \cdot \bra{\bf{0}}U_c\ket{\bf{0}}$
    where $a = \prod_i U(a_i)$ is the product of the scalars appearing in $d$.
\end{proposition}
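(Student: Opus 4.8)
The plan is to bring $d$ into the normal form ``prepare every qubit in $\ket{\bf{0}}$, apply one unitary, post-select every qubit on $\bra{\bf{0}}$'' by three successive rewrites, each of which is either an identity in $\bf{Mat}_\bb{C}$ or a circuit equivalence $\sim$. First I would dispose of the scalar generators. Since the boxes $a_i : 0 \to 0$ are endomorphisms of the monoidal unit, they are central: by the interchange law they tensor-commute with every other generator, so monoidality of $I$ gives $I(d) = \left(\prod_i U(a_i)\right) I(d')$, where $d'$ is obtained from $d$ by deleting every scalar box and $a = \prod_i U(a_i)$ is exactly the product appearing in the statement. It therefore suffices to treat the scalar-free diagram $d'$.

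Next I would remove the ``excited'' preparations and post-selections. Since $Rz(\pi) = Z$ (the Pauli-$Z$) and $HZH = X$, the Pauli-$X$ gate $X = H \cdot Rz(\pi) \cdot H$ is already a morphism of $\bf{QCirc}$, and one has $\ket{1} = X\ket{0}$ and $\bra{1} = \bra{0} X$ in $\bf{Mat}_\bb{C}$. Replacing each $\ket{1}$ by $\ket{0}$ followed by this $X$-circuit, and each $\bra{1}$ by the $X$-circuit followed by $\bra{0}$, yields a diagram $d''$ with $I(d'') = I(d')$ in which every preparation is $\ket{0}$, every post-selection is $\bra{0}$, and all remaining generators lie in $\tt{Gates}$.

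Finally I would \emph{straighten} $d''$. Reading $d''$ as a list of layers (its premonoidal encoding, Proposition \ref{prop-combinatorial-encoding}), every wire is a segment born at some $\ket{0}$ and dying at some $\bra{0}$; because $d'' : 1 \to 1$ the number of births equals the number of deaths, say $n$. I would allocate one fresh qubit to each of the $n$ segments, so that a wire position reused in $d''$ (a $\bra{0}$ followed by a later $\ket{0}$ at the same offset) is split into two distinct qubits. Each gate of $d''$ acts on a set of live segments, hence on the corresponding fresh qubits; composing these gates in the layer order of $d''$ defines a circuit $c : n \to n$ in $\bf{QCirc}$. Deferring all preparations to the top and all post-selections to the bottom along this relabelling gives $I(d'') = \bra{\bf{0}}\, U_c\, \ket{\bf{0}}$, and combining the three steps yields $I(d) = a \cdot \bra{\bf{0}}\, U_c\, \ket{\bf{0}}$.

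The main obstacle is this last step: I must check that straightening leaves $I$ invariant and that the gates genuinely compose into a single unitary. The crux is the segment-to-fresh-qubit bijection. Once no wire position is reused, every $\ket{0}$ commutes upward past each layer not touching its segment and every $\bra{0}$ commutes downward, by the interchanger rules of the free monoidal category; and since gates on disjoint segments already commute while gates sharing a segment keep their layer order, the resulting product of gates is independent of the chosen interleaving and is unitary as a composite of unitaries. Verifying that this combinatorial relabelling is well defined — in particular that the incidence of gates on segments is preserved — is the only point needing genuine care, the remaining equalities being routine consequences of monoidal functoriality.
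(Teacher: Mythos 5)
Your proof is correct and follows essentially the same route as the paper's: strip the scalar boxes using monoidality of $I$, then defer all preparations to the top and all post-selections to the bottom of the diagram (the paper does this ``using $\tt{swap}$ if necessary'', which is exactly your segment-to-fresh-qubit straightening made explicit). Your intermediate step of rewriting $\ket{1}$ as $X\ket{0}$ and $\bra{1}$ as $\bra{0}X$ via $X = H \cdot Rz(\pi) \cdot H$ is a detail the paper's one-line proof leaves implicit, but it is genuinely needed to reach the all-zeros form $\bra{\bf{0}}U_c\ket{\bf{0}}$ claimed in the statement, so your version is the more careful one.
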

\begin{proof}
    We start by removing the scalars $a_i$ from the diagram $d$ and multiplying them into
    $a = \prod_i U(a_i)$. Then we pull all the kets to the top of the diagram and
    all the bras to the bottom, using $\tt{swap}$ if necessary.
\end{proof}

\subsection{Quantum models}\label{section-quantum-models}

Quantum models are functors from a syntactic category to the
category of post-selected quantum circuits $\bf{PostQCirc}$.
These were introduced in recent papers \cite{meichanetzidis2020, coecke2020a},
and experiments were performed on IBM quantum computers \cite{meichanetzidis2020a, lorenz2021}.

\begin{definition}[Quantum model]
    A quantum (circuit) model is a functor $F: G \to \bf{PostQCirc}$ for a grammar $G$,
    the semantics of a sentence $g \in \cal{L}(G)$ is given by a distribution over
    bitstrings $b \in  \set{0, 1}^{F(s)}$ obtained as follows:
    $$ P(b) = \size{\bra{\bf{b}}F(g)\ket{\bf{0}}}^2$$
\end{definition}

Quantum models define the following computational problem:

\begin{definition}
    \begin{problem}
      \problemtitle{$\tt{QSemantics}$}
      \probleminput{$G$ a grammar, $g: u \to s \in \bf{MC}(G)$,
      $F: G \to \bf{PostQCirc}$, $b \in  \set{0, 1}^{F(s)}$}
      \problemoutput{$\size{\bra{\bf{b}}I(F(g))\ket{\bf{0}}}^2$}
    \end{problem}
\end{definition}
\begin{remark}
    Note that quantum language models do \emph{not} assume that the semantics
    of words be a unitary matrix. Indeed a word may be interpreted as a unitary
    with some outputs post-selected. However, mapping words to unitaries is justified
    in many cases of interest in linguistics. For instance, this implies that
    there is no loss of information about ``ball'' when an adjective such as
    ``big'' is applied to it.
\end{remark}

Any of the grammars studied in Chapter 1 may be intepreted with a quantum model.
For monoidal grammars (including regular and context-free grammars), this is
simply done by interpreting every box as a post-selected quantum circuit.
For rigid grammars, we need to choose an interpretation for the cups and caps.
Since the underlying category $\bf{Mat}_\bb{C}$ is compact-closed, there is a
canonical way of interpreting them using a $CX$ gate and postselection, and
re-scaling the result by a factor of $\sqrt{2}$:
\begin{equation}
    \tikzfig{figures/quantum-circuit-cup}
\end{equation}
In order to interpret cross dependencies we can use the $\tt{swap}$ gate, see
Example \ref{ex-quantum-circuits}. Finally, in order to interpret the Frobenius
algebras in a pregroup grammar with coreference, we use the following mapping:
\begin{equation*}
    \tikzfig{figures/quantum-circuit-frobenius}
\end{equation*}
One can check the the image under $I$ of the circuit above maps to the Frobenius
algebra in $\bf{Mat}_\bb{C}$.

In practice, quantum models are learned from data by choosing a parametrized quantum
circuit in $\bf{PostQCirc}$ for each production rule and then tuning the parameters
(i.e. the phases $\alpha$ in $Rz(\alpha)$) appearing in this circuit, see
\cite{meichanetzidis2020a, lorenz2021} for details on ansatze and training.

\subsection{Additive approximations}

Since quantum computers are inherently probabilistic, there is no deterministic
way of computing a function $F: \{ 0, 1 \}^\ast \to \bb{C}$ encoded in the amplitudes
of a quantum state. Rather what we obtain is an \emph{approximation} of $F$.
In many cases, the best we can hope for is an \emph{additive} approximation,
which garantees to generate a value within the range
$[F(x) - \Delta(x) \epsilon, F(x) + \Delta(x)\epsilon]$ where
$\Delta: \set{0, 1}^\ast \to \bb{R}^+$ is an approximation scale.
This approximation scheme has been found suitable to describe the performance of
quantum algorithms for contracting tensor network \cite{arad2010},
counting approximately \cite{bordewich2009}, and computing topological invariants
\cite{freedman2000a} including the Jones polynomial \cite{freedman2002}.

\begin{definition}[Additive approximation]\cite{arad2010}
    A function $F : \set{0, 1}^\ast \to \bb{C}$ has an additive approximation $V$
    with an approximation scale $\Delta : \set{0, 1}^\ast \to \bb{R}^+$
    if there exists a probabilistic algorithm that given any instance
    $x \in \set{0, 1}^\ast$ and $\epsilon > 0$, produces a complex number
    $V(x)$ such that
    \begin{equation}\label{eq-additive-approximation}
        P(\size{ V(x) - F(x)} \geq \epsilon \Delta (x)) \leq \frac{1}{4}
    \end{equation}
    in a running time that is polynomial in $\size{x}$ and $\epsilon^{-1}$.
\end{definition}
\begin{remark}
    The error signal $\epsilon$ is usually inversely proportional to a polynomial
    in the run-time $t$ of the algorithm, i.e. we have $\epsilon = \frac{1}{\tt{poly}(t)}$.
\end{remark}

An algorithm $V$ satisfying the definiton above is a solution of the
following problem defined for any function $F: \set{0, 1}^\ast \to \bb{C}$
and approximation scale $\Delta: \set{0, 1}^\ast \to \bb{R}^+$.

\begin{definition}
    \begin{problem}
      \problemtitle{$\tt{Approx}(F, \Delta)$}
      \probleminput{$x \in \set{0, 1}^\ast$ , $\epsilon > 0$}
      \problemoutput{$v \in \bb{C}$ such that
      $P(\size{ v - F(x)} \geq \epsilon \Delta(x)) \leq \frac{1}{4}$}
    \end{problem}
\end{definition}

Compare this to the definition of a multiplicative approximation, also known
as a fully polynomial randomised approximation scheme \cite{jerrum2004}.

\begin{definition}[Multiplicative approximation]
    A function $F : \set{0, 1}^\ast \to \bb{C}$ has a multiplicative approximation $V$
    if there exists a probabilistic algorithm that given any instance $x \in \set{0, 1}^\ast$
    and $\epsilon > 0$, produces a complex number $V(x)$ such that
    \begin{equation}\label{eq-multiplicative-approximation}
        P(\size{ V(x) - F(x)} \geq \epsilon \size{F(x)}) \leq \frac{1}{4}
    \end{equation}
    in a running time that is polynomial in $\size{x}$ and $\epsilon^{-1}$.
\end{definition}

\begin{remark}
    This approximation is called multiplicative because $V(x)$ is guaranteed to be
    within a multiplicative factor $(1 \pm \epsilon)$ of the optimal value $F(x)$.
    For instance the inequality in (\ref{eq-multiplicative-approximation})
    implies that $\size{F(x)}(1 - \epsilon) \leq \size{V(x)} \leq \size{F(x)}(1 + \epsilon))$
    with probability bigger than $\frac{3}{4}$.
\end{remark}

Note that any multiplicative approximation is also additive with approximation
scale $\Delta(x) = \size{F(x)}$.
However, the converse does not hold. An additive approximation scheme allows
for the approximation scale $\Delta(x)$ to be exponentially larger than
the size of the output $\size{F(x)}$, making the approximation
(\ref{eq-additive-approximation}) quite loose since the error parameter $\epsilon$
may be bigger than the output signal $V(x)$.

\subsection{Approximating quantum models}\label{section-approx-quantum-models}

In this section, we show that the problem of additively approximating the
evaluation of sentences in a quantum model is in $\tt{BQP}$ and that it is
$\tt{BQP}$-complete in special cases of interest. The argument is based on
Arad and Landau's work \cite{arad2010} on the quantum approximation of
tensor networks. We start by reviewing their results and end by demonstrating the
consequences for quantum language models.

Consider the problem of approximating the contraction of tensor networks $T(V, E)$
using a quantum computer.
Arad and Landau \cite{arad2010} show that this problem can be solved in
polynomial time, up to an additive accuracy with a scale
$\Delta$ that is related to the norms of the swallowing operators.

\begin{proposition}[Arad and Landau \cite{arad2010}]
    Let $T(V, E)$ be a tensor network over $\bb{C}$ of dimension $q$ and maximal
    node degree $a$, let $\pi: [\size{V}] \to V$ be a contraction order for $T$ and let
    $\set{A_i}_{i \in \set{1, \dots, k}}$
    be the corresponding set of swallowing operators. Then for any
    $\epsilon > 0$ there exists a quantum algorithm that runs in $k \cdot \epsilon^{-2}
    \cdot \tt{poly}(q^a)$ time and outputs a complex number $r$ such that:
    $$P( \size{\tt{value}(T) - r} \geq \epsilon \Delta) \leq \frac{1}{4}$$
    with
    $$\Delta(T) = \prod_{i=1}^k\norm{A_i}$$
\end{proposition}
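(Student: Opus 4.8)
The plan is to follow Arad and Landau \cite{arad2010}, realising the value of the tensor network as a single amplitude of a product of unitaries and then estimating that amplitude by a Hadamard test. First I would record the algebraic identity underlying the swallowing construction: fixing the contraction order $\pi$, each step collapses one tensor into the current ``front'' of already-contracted wires, which defines a linear map $A_i : \cal{H}_{i-1} \to \cal{H}_i$ between the boundary spaces before and after step $i$, with $\cal{H}_0 = \cal{H}_k = \bb{C}$ because $T(V,E)$ is closed. Composing these gives $\tt{value}(T) = A_k \cdots A_1 \in \bb{C}$, and each $A_i$ acts nontrivially only on the wires crossing the $i$-th bubble together with the legs of the swallowed tensor, a space of dimension at most $q^{O(a)}$.

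Next I would normalise. Setting $\hat A_i = A_i / \norm{A_i}$, each $\hat A_i$ is a contraction ($\norm{\hat A_i} \leq 1$) and $\tt{value}(T) = \Delta \cdot \mu$ where $\Delta = \prod_{i=1}^k \norm{A_i}$ and $\mu = \hat A_k \cdots \hat A_1$. Since each $\hat A_i$ is a contraction on a $q^{O(a)}$-dimensional space, it admits a unitary dilation $U_i$ (a block encoding) in which $\hat A_i$ occupies a fixed block, and $U_i$ acts on $O(a \log q)$ qubits, so can be synthesised to precision $\epsilon$ in $\tt{poly}(q^a)$ gates by direct decomposition together with Solovay--Kitaev. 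The crux is to lay out an ancilla register so that the block structure \emph{composes}: arranging the dilations and ancilla initialisations as in \cite{arad2010} yields one unitary $U$ on $\tt{poly}(a, \log q)$ qubits with $\mu = \bra{\bf{0}} U \ket{\bf{0}}$ for a distinguished computational-basis state $\ket{\bf{0}}$.

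With $\mu$ exhibited as a fixed matrix entry of an efficiently implementable unitary, I would estimate it by the Hadamard test: a controlled-$U$ applied to $\ket{\bf{0}}$ with the control in the $\ket{+}$ (respectively $\ket{+i}$) state produces a single-qubit measurement whose bias is $\mathrm{Re}(\mu)$ (respectively $\mathrm{Im}(\mu)$). Averaging $O(\epsilon^{-2})$ independent runs and invoking Hoeffding's inequality gives an estimate $\hat\mu$ with $P(\size{\hat\mu - \mu} \geq \epsilon) \leq \tfrac14$. Outputting $r = \Delta \cdot \hat\mu$ then satisfies $P(\size{r - \tt{value}(T)} \geq \epsilon \Delta) \leq \tfrac14$, and the total cost is $k \cdot \epsilon^{-2} \cdot \tt{poly}(q^a)$: there are $k$ dilated unitaries, each of gate cost $\tt{poly}(q^a)$, repeated $O(\epsilon^{-2})$ times.

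The main obstacle is the composition-of-dilations step. A unitary dilation of a single contraction is standard, but the distinguished block of a product $U_k \cdots U_1$ is \emph{not} the product $\hat A_k \cdots \hat A_1$ of the individual blocks in general; the off-block ``garbage'' contaminates the entry. Making $\mu$ appear cleanly as $\bra{\bf{0}} U \ket{\bf{0}}$ requires the careful register bookkeeping of Arad and Landau, in which the swallowed wires are post-selected in $\ket{\bf{0}}$ exactly once their contribution has been absorbed, so that the ancillas return to $\ket{\bf{0}}$ precisely along the intended computational path. Verifying that this bookkeeping reproduces $\mu$ and that each step's circuit remains of size $\tt{poly}(q^a)$ is where the real work lies; everything else is the standard amplitude-estimation-by-Hadamard-test together with a Chernoff/Hoeffding bound.
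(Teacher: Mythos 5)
Your proposal is correct and follows essentially the same route as the paper's proof: realise each swallowing operator via a unitary dilation with post-selection, compose to exhibit the (rescaled) value as an amplitude $\bra{\bf{0}}U\ket{\bf{0}}$, and estimate its real and imaginary parts by a Hadamard test with $O(\epsilon^{-2})$ repetitions and a Hoeffding bound. If anything, you are more careful than the paper's sketch, which writes $\tt{value}(T) = \bra{\bf{0}}U_c\ket{\bf{0}}$ without making explicit the normalisation $\hat A_i = A_i/\norm{A_i}$ that is precisely what produces the approximation scale $\Delta(T) = \prod_{i=1}^k \norm{A_i}$, and which also elides the composition-of-dilations bookkeeping you rightly flag as the technical core of Arad and Landau's argument.
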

\begin{proof}
    Given a tensor network with a contration order $\pi$,
    the swallowing operators $A_i$ are linear maps.
    For each of them, we can construct a unitary $U_i$ acting on a larger space
    such that post-selecting on some of its outputs yields $A_i$. Composing these
    we obtain a unitary $U_c = U_1 \cdot U_2 \dots U_{\size{V}}$ represented by
    a poly-size quantum circuit $c \in \bf{QCirc}$ such that
    $\tt{value}(T) = \bra{\bf{0}}U_c\ket{\bf{0}}$. In order to compute an approximation
    of this quantity we can use an $H$-test, defined by the following circuit:
    \begin{equation}
        \tikzfig{figures/h-test}
    \end{equation}
    where the white boxes are hadamard gates and the subdiagram in the middle
    denotes the controlled unitary $U$.
    It can be shown that the probability $r$ of measuring $0$ on the
    ancillary qubit is equal to $Re(\bra{\bf{0}}U_c\ket{\bf{0}})$. A slightly
    modified version of the $H$-test computes $Im(\bra{\bf{0}}U_c\ket{\bf{0}})$.
    Arad and Landau \cite{arad2010} show that this process can be done in polynomial
    time and that the result of measuring the ancillary qubit gives an additive
    approximation of $\tt{value}(T)$ with approximation scale
    $\Delta(T) = \prod_{i=1}^k\norm{F(d_i)}$.
\end{proof}

\begin{corollary}
    The problem $\tt{Approx}(\tt{Contraction}(\bb{C}), \Delta)$ with $\Delta$
    as defined above is in $\tt{BQP}$.
\end{corollary}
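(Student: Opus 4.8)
The plan is to derive the corollary directly from the preceding Proposition of Arad and Landau, which already exhibits the required quantum algorithm; the remaining work is entirely bookkeeping to check that this algorithm fits the definition of $\tt{BQP}$. First I would unpack the two definitions involved. The problem $\tt{Approx}(\tt{Contraction}(\bb{C}), \Delta)$ asks, on input a tensor network $x = (V, E, T)$ over $\bb{C}$ together with a tolerance $\epsilon > 0$, for a complex number $v$ satisfying $P(\size{v - \tt{contract}(V, E, T)} \geq \epsilon \Delta(x)) \leq \tfrac{1}{4}$, produced in time polynomial in $\size{x}$ and $\epsilon^{-1}$. Membership in $\tt{BQP}$ means that such a $v$ can be produced by a uniform family of polynomial-size quantum circuits with bounded probability of error, and the threshold $\tfrac{1}{4}$ is exactly the $\tt{BQP}$ convention.

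Next I would invoke the Proposition directly. Fixing a contraction order $\pi$ on $(V,E)$ and the associated swallowing operators $\set{A_i}$, Arad and Landau construct a poly-size circuit $c \in \bf{QCirc}$ with $\tt{contract}(V,E,T) = \bra{\bf{0}}U_c\ket{\bf{0}}$ and run an $H$-test to estimate the real and imaginary parts of this amplitude. The output $r$ satisfies exactly the additive guarantee $P(\size{\tt{contract}(V,E,T) - r} \geq \epsilon \Delta) \leq \tfrac{1}{4}$ with $\Delta = \prod_i \norm{A_i}$, which is the $\Delta$ named in the statement. Since the algorithm is manifestly quantum (being built entirely from the $H$-test circuit) and the error bound matches the $\tt{BQP}$ convention — and may be amplified by majority vote if a smaller failure probability is wanted — the bounded-error requirement is immediately met.

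The one point requiring care, and the main obstacle, is the running time. The Proposition gives the bound $k \cdot \epsilon^{-2} \cdot \tt{poly}(q^a)$, where $k = \size{V}$, $q$ is the maximal edge dimension and $a$ the maximal node degree, and I must argue that this is genuinely polynomial in the input size $\size{x}$. The key observation is that the input presents every tensor $T_v$ explicitly, so a node of degree $a$ with incident dimensions near $q$ already contributes on the order of $q^a$ entries to $\size{x}$; hence $\tt{poly}(q^a)$ is polynomial in $\size{x}$, while $k \leq \size{x}$ and the $\epsilon^{-2}$ factor is polynomial in $\epsilon^{-1}$. Combining these three estimates shows the whole running time is polynomial in $\size{x}$ and $\epsilon^{-1}$, which together with the bounded-error $H$-test completes the verification that $\tt{Approx}(\tt{Contraction}(\bb{C}), \Delta) \in \tt{BQP}$.
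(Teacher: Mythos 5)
Your proposal is correct and follows essentially the same route as the paper, which states this corollary without further argument as an immediate consequence of the Arad--Landau proposition: the $H$-test algorithm already delivers the additive guarantee with scale $\Delta(T) = \prod_i \norm{A_i}$ and error threshold $\tfrac{1}{4}$, matching the definitions of $\tt{Approx}$ and $\tt{BQP}$. Your extra bookkeeping on the running time is in the right spirit, though note that the clause ``incident dimensions near $q$'' is doing real work there --- in a contrived network the maximal-degree node may have small incident dimensions while $q$ is attained elsewhere, so the cleaner justification is that the per-step cost is polynomial in the size of each swallowing operator, which is bounded by the explicitly given tensor data.
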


\begin{remark}
    From the Cauchy-Schwartz inequality we have that:
    $$\size{T} \leq \prod_{i=1}^k\norm{A_i} = \Delta(T) \, . $$
    In fact we have no guarantee that the approximation scale $\Delta(T)$ is not
    exponentially larger than $\size{F(d)}$.
    This is a severe limitation, since the approximations we
    get from the procedure defined above can have an error larger than the value
    we are trying to approximate.
\end{remark}

We now consider the problem of approximating the amplitude of a post-selected
quantum circuit $\size{\bra{\bf{0}}U_c\ket{\bf{0}}}^2$.
Note that this is an instance of the problem studied in the
previous paragraph, since post-selected quantum circuits are instances of tensor
networks, so that this problem belong to the class $\tt{BQP}$.
For this subclass of tensor networks the approximation scale $\Delta(c)$ can
be shown to be constant and equal to $1$, with the consequence --- shown by
Arad and Landau \cite{arad2010} --- that the additive approximation of
post-selected quantum circuits is $\tt{BQP-hard}$.

In order to show $\tt{BQP}$-hardness of a problem $F$, it is sufficient to show
that an oracle which computes $F$ can be used to perform universal quantum
computation with bounded error. More precisely, for any quantum circuit
$u \in \bf{QCirc}$ denote by $p_0$ the probability of obtaining outcome
$0$ when measuring the last qubit of $c$. To perform universal quantum
computation, it is sufficient to distinguish between the cases where
$p_0 < \frac{1}{3}$ and $p_0 > \frac{2}{3}$ for any quantum circuit $c$.
Thus a problem $F$ is $\tt{BQP}$-hard if for any
circuit $c$, there is a poly-time algorithm $V$ using $F$ as an oracle that
returns $YES$ when $p_0 < \frac{1}{3}$ with probability bigger than $\frac{3}{4}$
and $NO$ when $p_0 > \frac{2}{3}$ with probability bigger than $\frac{3}{4}$. .

\begin{proposition}\cite{arad2010}\label{prop-circuit-bqp}
    The problem of finding an additive approximation of
    $\size{\bra{\bf{0}}U_c\ket{\bf{0}}}^2$ with scale $\Delta =1$
    where $U_c$ is the unitary induced by a quantum circuit $c \in \bf{QCirc}$
    is $\tt{BQP}$-complete.
\end{proposition}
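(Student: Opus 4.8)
The plan is to establish the two directions of $\tt{BQP}$-completeness separately. Membership in $\tt{BQP}$ is essentially the preceding Arad--Landau result specialised to circuits, while hardness requires encoding an arbitrary quantum computation into an amplitude of exactly the prescribed form. I would handle membership first, since it is the easy half: a circuit $c \in \bf{QCirc}$ together with its preparation $\ket{\bf{0}}$ and effect $\bra{\bf{0}}$ is a tensor network over $\bb{C}$ built entirely from unitary gates, so every swallowing operator has norm at most $1$ and the approximation scale of the previous Proposition collapses to $\Delta = 1$. Concretely I would run the $H$-test from that proof to obtain additive approximations of $Re(\bra{\bf{0}}U_c\ket{\bf{0}})$ and $Im(\bra{\bf{0}}U_c\ket{\bf{0}})$ with scale $1$ in polynomial time, and return the sum of their squares; since both parts lie in $[-1,1]$ this yields an additive approximation of $\size{\bra{\bf{0}}U_c\ket{\bf{0}}}^2$ with scale $1$, placing the problem in $\tt{BQP}$.

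For hardness I would reduce an arbitrary $\tt{BQP}$ computation to the problem. Fix a circuit $u$ on $n$ qubits and let $p_0$ be the probability of measuring $0$ on its first qubit, under the usual promise that either $p_0 < \tfrac{1}{3}$ or $p_0 > \tfrac{2}{3}$. Writing the acceptance probability as an expectation, $p_0 = \bra{\bf{0}} U_u^\dagger (\ket{0}\bra{0} \otimes I) U_u \ket{\bf{0}}$, exhibits it as the value $I(d)$ of the post-selected circuit $d$ that runs $u$, post-selects the first qubit on $\ket{0}$, and runs $u^\dagger$. By Proposition \ref{prop-quantum-computing}, $I(d) = a \cdot \bra{\bf{0}}U_c\ket{\bf{0}}$ for a genuine circuit $c \in \bf{QCirc}$ and an explicit scalar $a$; here $a = 1$, since the only post-selection is a computational-basis projector rather than a cup, so it contributes only routing and no rescaling. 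Hence $\bra{\bf{0}}U_c\ket{\bf{0}} = p_0$ and $\size{\bra{\bf{0}}U_c\ket{\bf{0}}}^2 = p_0^2$.

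It then remains to observe that an additive approximation with scale $\Delta = 1$ and error $\epsilon < \tfrac{1}{6}$ separates the two cases: $p_0 < \tfrac{1}{3}$ forces $p_0^2 < \tfrac{1}{9}$ whereas $p_0 > \tfrac{2}{3}$ forces $p_0^2 > \tfrac{4}{9}$, and the gap $\tfrac{1}{3}$ exceeds $2\epsilon$, so the approximation decides the promise with probability at least $\tfrac{3}{4}$. Since $c$ is produced from $u$ in polynomial time, an oracle for the problem performs universal quantum computation with bounded error, which is $\tt{BQP}$-hardness; combined with membership this gives completeness.

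The main obstacle is the hardness encoding, and specifically the interaction between the modulus squared $\size{\cdot}^2$ and the sign of the amplitude. The tempting encoding $p_0 = \tfrac{1}{2}\bigl(1 + \bra{\bf{0}} U_u^\dagger Z_1 U_u \ket{\bf{0}}\bigr)$ produces an amplitude proportional to $2p_0 - 1$, whose square is symmetric about $p_0 = \tfrac{1}{2}$ and so cannot tell acceptance from rejection once we are forced to take the modulus squared. The doubling construction above is precisely what circumvents this, as it encodes $p_0$ itself rather than $2p_0 - 1$, keeping $p_0^2$ monotone on $[0,1]$ and preserving the promise gap. The one calculation I would check carefully is that the conversion of Proposition \ref{prop-quantum-computing} really introduces no spurious normalisation ($a=1$) and leaves $c$ a genuine unitary circuit, so that the scale of the reduced instance is still exactly $\Delta = 1$ and membership and hardness refer to the same scale.
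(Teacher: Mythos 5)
Your proof is correct and follows essentially the same route as the paper: membership via the Arad--Landau tensor-network approximation with unitary swallowing operators forcing $\Delta = 1$, and hardness via the standard circuit-doubling construction (run $u$, copy or swap out the measured qubit, run $u^\dagger$), which makes the all-zeros amplitude exactly $p_0$ so that an $\epsilon < \tfrac{1}{6}$ additive approximation decides the $\tt{BQP}$ promise. If anything, your handling of the modulus squared (checking $p_0^2 < \tfrac{1}{9}$ versus $p_0^2 > \tfrac{4}{9}$, and verifying $a = 1$ in the conversion of Proposition \ref{prop-quantum-computing}) is slightly more careful than the paper's proof, which works with the amplitude $\bra{\bf{0}}U_q\ket{\bf{0}}$ directly and leaves these details implicit.
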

\begin{proof}
    Membership follows by reduction to $\tt{Approx}(\tt{Contraction}(\bb{S}), \Delta)$,
    since post-selected quantum circuits are instances of tensor networks.
    To show hardenss, fix any quantum circuit $c$ on $n$ qubits and denote by
    $p_0$ the probability of obtaining outcome $0$ on the last qubit of $c$.
    We can construct the quantum circuit:
    \begin{equation}
        \tikzfig{figures/amplitude-circuit}
    \end{equation}
    it is easy to check that $\bra{\bf{0}}U_q\ket{\bf{0}} =  p_0$.
    Since $q$ is a circuit, there is a natural contraction order given by the order of
    the gates, moreover the corresponding swallowing operators $U_{q_i}$ are
    unitary and thus $\norm{U_{q_i}} = 1$, also $\norm{U(\ket{\bf{0}})} =
    \norm{U(\bra{\bf{0}})} = 1$ and therefore the approximation scale is
    constant $\Delta(q) = 1$.
    Suppose we have an oracle $V$ that computes an approximation of
    $\bra{\bf{0}}U_q\ket{\bf{0}}$ with constant scale $\Delta(q) = 1$. Then
    for any circuit $c \in \bf{QCirc}$, we construct the circuit $q$
    and apply $V$ to get a complex number $V(q)$ such that:
    $$P(\size{V(q) - p_0} \geq \epsilon) \leq \frac{1}{4}
    \implies P(p_0 - \epsilon \leq \size{V(q)} \leq p_0 + \epsilon) \geq \frac{3}{4}$$
    Setting $\epsilon < \frac{1}{6}$, we see that if $\size{V(q)} < \frac{1}{6}$
    then $p_0 < \frac{1}{3}$ with probability $\geq \frac{3}{4}$ and similarly
    $\size{V(q)} > \frac{5}{6}$ implies $p_0 > \frac{1}{3}$ with probability
    $\geq \frac{3}{4}$. Note that this would not be possible if we didn't
    know that the approximation scale $\Delta(q)$ is bounded.
    Thus $V$ can be used to distinguish between the cases where
    $p_0 < \frac{1}{3}$ and $p_0 > \frac{2}{3}$ with probability of success
    greater than $\frac{3}{4}$.
    Therefore the oracle $V$ can be used to perform universal quantum computation
    with bounded error. And thus $V$ is $\tt{BQP}$-hard.
\end{proof}

\begin{corollary}
    $\tt{Approx}(\tt{FunctorEval}(\bb{C})(I), \Delta = 1)$ where
    $I: \bf{PostQCirc} \to \bf{Mat}_\bb{C}$ is the
    functor defined in paragraph \ref{subsec-quantum-computing} is a
    $\tt{BQP}$-complete problem.
\end{corollary}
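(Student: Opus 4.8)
The plan is to establish the two inclusions separately --- membership in $\tt{BQP}$ and $\tt{BQP}$-hardness --- each obtained by transferring the corresponding statement for quantum circuits through the functor $I: \bf{PostQCirc} \to \bf{Mat}_\bb{C}$. Both directions reuse the machinery already assembled: the Arad--Landau corollary placing $\tt{Approx}(\tt{Contraction}(\bb{C}), \Delta)$ in $\tt{BQP}$, the equivalence of $\tt{FunctorEval}$ and $\tt{Contraction}$ (Proposition \ref{prop-functor-eval}), and the circuit characterisation of post-selected diagrams (Proposition \ref{prop-quantum-computing}).

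For membership, I would first observe that $I$ sends a unit-to-unit (scalar) diagram $d$ to a number $I(d) \in \bb{C}$ computed by contracting the underlying network of gates, state preparations and post-selections, so that $\tt{FunctorEval}(\bb{C})(I)$ is a special case of $\tt{Contraction}(\bb{C})$ by Proposition \ref{prop-functor-eval}. The preceding corollary then places its additive approximation in $\tt{BQP}$, and it remains only to check that on this subclass the approximation scale may be fixed to the constant $1$. Using Proposition \ref{prop-quantum-computing} I would rewrite $I(d) = a \cdot \bra{\bf{0}}U_c\ket{\bf{0}}$ for a genuine circuit $c \in \bf{QCirc}$ and a product $a$ of rescaling scalars; since every gate of $c$ is unitary and the boundary states and effects $\ket{\bf{0}}, \bra{\bf{0}}$ have unit norm, the product of swallowing-operator norms is $1$, giving $\Delta = 1$.

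For hardness, I would reduce from the $\tt{BQP}$-complete problem of Proposition \ref{prop-circuit-bqp}, namely the additive approximation with scale $\Delta = 1$ of $\size{\bra{\bf{0}}U_c\ket{\bf{0}}}^2$ for $c \in \bf{QCirc}$. The key step is to realise this quantity as $I(d)$ for a single post-selected diagram $d \in \bf{PostQCirc}$, via a doubling construction: writing $\size{\bra{\bf{0}}U_c\ket{\bf{0}}}^2 = \bra{\bf{0}}U_c\ket{\bf{0}} \cdot \overline{\bra{\bf{0}}U_c\ket{\bf{0}}}$, I would take $d$ to be the tensor product of the post-selected circuit encoding $\bra{\bf{0}}U_c\ket{\bf{0}}$ with the one encoding its conjugate. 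The conjugate circuit $\bar{c}$ is again an element of $\bf{QCirc}$ because the universal gate set (\ref{eq-universal-gate-set}) is closed under conjugation: $H$, $CX$ and $\tt{swap}$ are real and $\overline{Rz(\alpha)} = Rz(-\alpha)$, while $\ket{\bf{0}}$ is real so $\overline{\bra{\bf{0}}U_c\ket{\bf{0}}} = \bra{\bf{0}}U_{\bar{c}}\ket{\bf{0}}$. This reduction is computable in logarithmic space and, since all gates remain unitary and all boundary effects have unit norm, it preserves the scale $\Delta = 1$, so an oracle for $\tt{Approx}(\tt{FunctorEval}(\bb{C})(I), \Delta = 1)$ solves the $\tt{BQP}$-hard problem of Proposition \ref{prop-circuit-bqp}.

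The main obstacle I anticipate is the bookkeeping of the approximation scale, rather than any deep structural fact. I must verify that neither the real scalars $a$ permitted in $\bf{PostQCirc}$ nor the doubling construction inflate $\Delta$ beyond the constant $1$ that the statement fixes, since the entire force of the $\tt{BQP}$ characterisation rests on the scale being bounded --- an exponentially large $\Delta$ would make the additive guarantee (\ref{eq-additive-approximation}) vacuous and collapse the hardness argument. Absorbing the scalars into $a$ through Proposition \ref{prop-quantum-computing} and confirming that they never enter the swallowing-operator norms is where the argument needs the most care.
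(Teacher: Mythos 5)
Your proposal is correct and follows the paper's own two-step argument: membership by reduction to $\tt{Approx}(\tt{Contraction}(\bb{C}), \Delta)$ using Proposition \ref{prop-functor-eval} (post-selected circuits are tensor networks with unit-norm swallowing operators), and hardness by reducing the $\tt{BQP}$-complete problem of Proposition \ref{prop-circuit-bqp} through the inclusion $\bf{QCirc} \injects \bf{PostQCirc}$. Your conjugate-circuit doubling merely makes explicit a step the paper leaves implicit --- realising $\size{\bra{\bf{0}}U_c\ket{\bf{0}}}^2$ as $I(d)$ for a single scalar diagram $d \in \bf{PostQCirc}$ (squaring an additive approximation of the amplitude itself would also work, since $\size{\bra{\bf{0}}U_c\ket{\bf{0}}} \leq 1$) --- and the scalar bookkeeping you flag is a legitimate concern that the paper's equally terse proof also glosses over, since an unbounded rescaling scalar $a$ does enter the effective scale of the value $a \cdot \bra{\bf{0}}U_c\ket{\bf{0}}$ being approximated.
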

\begin{proof}
    Membership follows by reduction to $\tt{Approx}(\tt{Contraction}(\bb{C}), \Delta)$
    since post-selected quantum circuits are instances of tensor networks.
    Since $\bf{QCirc} \injects \bf{PostQCirc}$ the problem of Proposition
    \ref{prop-circuit-bqp} reduces to
    $\tt{Approx}(\tt{FunctorEval}(I), \Delta = 1)$, thus showing hardness.
\end{proof}

Note that the value $v= \size{\bra{\bf{0}}U_c\ket{\bf{0}}}$ may be
exponentially small in the size of the circuit $c$, so that the approximation scale
$\Delta = 1$ is still not optimal.
In this case we would need exponentially many samples from the quantum computer
approximate $v$ up to multiplicative accuracy.

We end by showing $\tt{BQP}$-completeness for the problem of approximating the
semantics of sentences in a quantum model with approximation scale $\Delta = 1$.

\begin{definition}
    \begin{problem}
      \problemtitle{$\tt{QASemantics} = \tt{Approx}(\tt{QSemantics}, \Delta = 1)$}
      \probleminput{$G$ a monoidal grammar, $g: u \to s \in \bf{MC}(G)$,
       $F: G \to \bf{PostQCirc}$, $b \in  \set{0, 1}^{F(s)}$, $\epsilon > 0$}
      \problemoutput{$v \in \bb{C}$ such that
      $P(\size{ v - \bra{\bf{b}}I(F(g))\ket{\bf{0}}} \geq \epsilon) \leq \frac{1}{4}$}
    \end{problem}
\end{definition}

\begin{proposition}\label{prop-bqp}
    There are pregroup grammars $G$, such that the problem
    $\tt{QASemantics}(G)$ is $\tt{BQP}$-complete.
\end{proposition}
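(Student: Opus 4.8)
The plan is to prove both membership and hardness, following the template of Propositions \ref{prop-tensor-semantics} and \ref{prop-qa}. Membership is immediate and holds for \emph{any} monoidal grammar: given $g \in \bf{MC}(G)$ and a quantum model $F : G \to \bf{PostQCirc}$, the image $F(g)$ is a post-selected quantum circuit, hence an instance of a tensor network over $\bb{C}$ (see \ref{subsec-quantum-computing}). Approximating $\bra{\bf{b}}I(F(g))\ket{\bf{0}}$ to additive scale $\Delta = 1$ is therefore an instance of $\tt{Approx}(\tt{FunctorEval}(\bb{C})(I), \Delta = 1)$, which lies in $\tt{BQP}$ by the corollary to Proposition \ref{prop-circuit-bqp}. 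So $\tt{QASemantics}(G) \in \tt{BQP}$ for every pregroup grammar $G$.

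For hardness I would exhibit a single fixed pregroup grammar $G$ into which every quantum circuit embeds, reducing from the $\tt{BQP}$-complete problem of Proposition \ref{prop-circuit-bqp}. First I fix an approximately universal \emph{finite} gate set, e.g. Clifford$+T$, so that these gates together with the preparations $\ket{i}$ and post-selections $\bra{i}$ determine a finite hypergraph signature $\Sigma_0$; Solovay--Kitaev ensures this costs only polynomial overhead in circuit size. Applying Proposition \ref{prop-svo-argument} to $\Sigma_0$ produces one pregroup grammar (with coreference) $G = (V, B, R, \Delta(\sigma), s)$ and a \emph{full} functor $J : \bf{Coref}(G) \to \bf{Hyp}(\Sigma_0)$ with $J(w) = J(s) = 1$. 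Since $\bf{Coref}(G)$ is free (it is not quotiented by any axioms), I may then define a quantum model $F : \bf{Coref}(G) \to \bf{PostQCirc}$ directly on generators: each lexical gate to its circuit, each Frobenius generator to the post-selected gadget of \ref{section-quantum-models}, and each cup or cap to the $\sqrt{2}$-rescaled $CX$-and-postselection gadget.

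Given an arbitrary circuit $c \in \bf{QCirc}$, I would build the amplitude circuit $q$ as in the proof of Proposition \ref{prop-circuit-bqp}, so that $\bra{\bf{0}}U_q\ket{\bf{0}} = p_0$ is the acceptance probability of $c$ and the approximation scale stays constant at $\Delta(q) = 1$. Reading $q$ (with its preparations and post-selections) as a diagram $d_q \in \bf{Hyp}(\Sigma_0)$ and invoking fullness of $J$, I pick a grammatical reduction $g : u \to s$ with $J(g) = d_q$ and set $b = \bf{0}$. The point is that $I \circ F$ realizes the tensor-network semantics: the $\sqrt{2}$ scalars make $I$ recover the honest compact-closed cups of $\bf{Mat}_\bb{C}$ on the nose, and $I$ of the Frobenius gadget is the Frobenius algebra of $\bf{Mat}_\bb{C}$, so that $\bra{\bf{b}}I(F(g))\ket{\bf{0}} = \tt{contract}(d_q) = \bra{\bf{0}}U_q\ket{\bf{0}}$. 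An oracle for $\tt{QASemantics}(G)$ therefore approximates this amplitude to additive scale $1$, which by the reasoning of Proposition \ref{prop-circuit-bqp} separates $p_0 < \tfrac{1}{3}$ from $p_0 > \tfrac{2}{3}$ and hence performs universal quantum computation with bounded error. So $\tt{QASemantics}(G)$ is $\tt{BQP}$-hard, and together with membership, $\tt{BQP}$-complete.

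The main obstacle I anticipate is keeping the grammar $G$ genuinely \emph{fixed} and independent of $c$ while still realizing arbitrary circuit connectivity: this is exactly what Proposition \ref{prop-svo-argument} buys us, but it forces the reduction through a finite gate alphabet so that $\Sigma_0$ (and hence $G$) is finite, which is the role of the Solovay--Kitaev step. A secondary subtlety requiring care is verifying that the additive scale remains exactly $\Delta = 1$ after encoding --- i.e. that $d_q$ is genuinely a post-selected circuit, for which the unit-scale bound of \ref{section-approx-quantum-models} applies, and that the cup scalars are absorbed by $I$ --- so that the reduction transports the hardness of Proposition \ref{prop-circuit-bqp} without loosening the error guarantee.
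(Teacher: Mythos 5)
Your membership argument coincides with the paper's: reduce to $\tt{Approx}(\tt{FunctorEval}(\bb{C})(I), \Delta = 1)$, noting that post-selected circuits are tensor networks over $\bb{C}$. For hardness, however, you have overlooked that the functor $F: G \to \bf{PostQCirc}$ is part of the \emph{input} of $\tt{QASemantics}$, and the paper exploits exactly this: it takes $G$ to have a single word $w$ of type $s$ and, given any unitary $U$, defines $F(s) = 0$ and $F(w) = \bra{\bf{0}}U\ket{\bf{0}}$, so that evaluating the one-word sentence is verbatim the $\tt{BQP}$-hard problem of Proposition \ref{prop-circuit-bqp} --- no cups, no coreference, no discretization of the gate set. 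Your construction instead fixes both $G$ and $F$ once and for all and encodes arbitrary circuits as sentences; that is a strictly stronger statement, and it is precisely the one the paper declares \emph{open} in the discussion following the proposition, where your strategy (words as fixed-size circuits connected by GHZ spiders through coreference, adapting Proposition \ref{prop-functor-eval}) is sketched as a possible line of attack rather than carried out.

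Treated as a proof of that stronger claim, your sketch leaves gaps exactly where the difficulty sits. First, Proposition \ref{prop-svo-argument} produces $k$-grammatical reductions $g: u \to s^k$ in a pregroup grammar \emph{with coreference}, not reductions $u \to s$ in a plain pregroup grammar, so both the problem instance and the statement as given would need adjustment. Second, the scalar and scale bookkeeping is not routine: your encoding replaces each cup by a $CX$-plus-postselection gadget with an explicit $\sqrt{2}$ scalar and each spider by a GHZ-style gadget, so $F(g)$ is not the circuit $q$ but a heavily post-selected refinement of it. The Arad--Landau approximation scale for this network is the product of the norms of its swallowing operators, and cups and GHZ states have norm strictly greater than $1$, so the constant scale $\Delta = 1$ is not obtained by running the quantum algorithm on $F(g)$ directly; you would have to argue that the snake equations can be applied classically in polynomial time to normalize $F(g)$ back to $q$, and that the accumulated $\sqrt{2}$ scalars cancel exactly against the post-selections so that the value equals $p_0$ on the nose --- otherwise either the membership guarantee or the hardness transfer at scale $\Delta = 1$ breaks. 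Third, Solovay--Kitaev gives approximate, not exact, synthesis over Clifford$+T$, so $\bra{\bf{0}}U_q\ket{\bf{0}}$ is only reproduced up to inverse-polynomial error; this is harmless for separating $p_0 < \frac{1}{3}$ from $p_0 > \frac{2}{3}$, but it must enter the error budget of (\ref{eq-additive-approximation}) explicitly. For the proposition as actually stated, the short route through the input functor is both sufficient and what the paper does.
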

\begin{proof}
    Membership follows by reduction to the problem of Proposition
    \ref{prop-circuit-bqp},
    since the semantics of a grammatical reduction $g: u \to s$ in $\bf{RC}(G)$
    is given by the evaluation of $\size{\bra{\bf{b}}I(F(g))\ket{\bf{0}}}^2$ which
    corresponds to evaluating $\size{\bra{\bf{0}}U_c\ket{\bf{0}}}^2$ where $c$ is
    defined by Proposition \ref{prop-quantum-computing}.
    To show hardness, let $G$ have only one word $w$ of type $s$, fix any unitary $U$,
    and define $F: G \to \bf{PostQCirc}$ by $F(s) = 0$ and
    $F(w) = \bra{\bf{0}}U\ket{\bf{0}}$, then
    evaluating the semantics of $w$ reduces to the problem
    of Proposition \ref{prop-circuit-bqp} and is thus $\tt{BQP-hard}$.
\end{proof}

Note that we were able to show completeness using the fact that the functor
$F$ is in the input of the problem. It is an open problem to show that
$\bf{QASemantics}(G)(F)$ is $\tt{BQP}$-complete for fixed choices of grammar $G$
and functors $F$. In order to show this, one may need to assume that $G$ is
a pregroup grammar with coreference and show that there is a functor $F$ such that
any post-selected quantum circuit can be built up using fixed size circuits (corresponding
to words) connected by GHZ states (corresponding to the spiders encoding the coreference),
adapting the argument from Proposition \ref{prop-functor-eval}.

Moreover, as discussed above, this approximation scheme is limited by the fact that the
approximation scale $\Delta$ may be too big when the output
$\bra{\bf{b}}I(F(g))\ket{\bf{0}}$ is exponentially small. This is particularly
significant at the beginning of training, when $F$ is initialised as a random
mapping to circuits, see \cite{meichanetzidis2020a}. This seems to be an
inherent problem caused by the use of post-selection in the model, although
methods to reduce the post-selection have been proposed, e.g. the snake removal
scheme from \cite{meichanetzidis2020}.

One avenue to overcome this limitation, is to consider a different typ of models,
defined as functors $G \to \bf{CPM}(\bf{QCirc})$ where $\bf{CPM}(\bf{QCirc})$ is the
category of completely positive maps induced by quantum circuits as defined in
\cite{heunen2019}. In this category, post-selection is not allowed since every
map is causal. The problem with these models however is that we cannot interpret
the cups and caps of rigid grammars. We may still be able to interpret monoidal
grammars, as well as acyclic pregroup reductions such as those induced by a
dependency grammar. Exploring the complexity of these models, and testing them
on quantum hardware, is left for future work.


\section{DisCoPy in action}\label{sec-discopy}

We now give an example of how DisCoPy can be used to solve a concrete task.
We define a relational model (\ref{sec-rel-model}) and then
learn a smaller representation of the data as a tensor model
(\ref{sec-tensor-network}). Since the sentences we will deal with are all of
the form subject-verb-object, this means we will perform a knowledge
ambedding task in the sense of \ref{sec-embedding}.
We start by defining a simple pregroup grammar with 3 nouns and 2 verbs.

\begin{python}\label{listing:discopy.language}
{\normalfont Subject-verb-object language.}

\begin{minted}{python}
from discopy.rigid import Ty, Id, Box, Diagram

n, s = Ty('n'), Ty('s')
make_word = lambda name, ty: Box(name, Ty(), ty)
nouns = [make_word(name, n) for name in ['Bruno', 'Florio', 'Shakespeare']]
verbs = [make_word(name, n.l @ s @ n.r) for name in ['met', 'read']]
grammar = Diagram.cups(n, n.l) @ Id(s) @ Diagram.cups(n.r, n)
sentences = [a @ b @ c >> grammar for a in nouns for b in verbs for c in nouns]
sentences[2].draw()
\end{minted}
\begin{center}
    \includegraphics[scale=0.40]{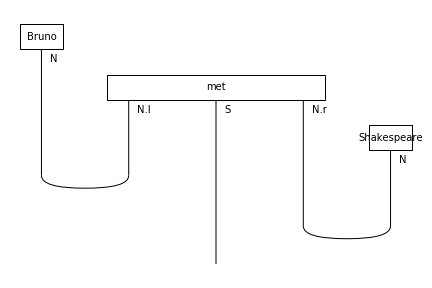}
\end{center}
\end{python}

We can now build a relational model for this language as a \py{tensor.Functor}.

\begin{python}\label{listing:discopy.relations}
{\normalfont Relational model in DisCoPy.}

\begin{minted}{python}
from discopy.tensor import Dim, Tensor, Functor, Spider
import jax.numpy as np
Tensor.np = np

ob = {n: Dim(3), s: Dim(1)}
def mapping(box):
    if box.name == 'Bruno':
        return np.array([1, 0, 0])
    if box.name == 'Florio':
        return np.array([0, 1, 0])
    if box.name == 'Shakespeare':
        return np.array([0, 0, 1])
    if box.name == 'met':
        return np.array([[1, 0, 1], [0, 1, 1], [1, 1, 1]])
    if box.name == 'read':
        return np.array([[1, 0, 0], [0, 1, 1], [0, 1, 1]])
    if box.name == 'who':
        return Spider(0, 3, Dim(3)).array
T = Functor(ob, mapping)
assert T(sentences[2]).array == [0.]
\end{minted}

We use \py{float} numbers for simplicity, but one may use \py{dtype=bool} instead.
Note the special intepretation of ``who'' as a Frobenius spider,
see \ref{section-relational-models}.
\end{python}

Relational models can be used to evaluate any conjunctive query over words.
We can generate a new pregroup reduction using \py{lambeq} \cite{kartsaklis2021}
and evaluate it in \py{T}.

\begin{python}\label{listing:discopy.query}
{\normalfont Evaluating a conjunctive query in a relational model.}

\begin{minted}{python}
from lambeq import BobcatParser
parser = BobcatParser()
diagram = parser.sentence2diagram('Bruno met Florio who read Shakespeare.')
diagram.draw()
assert T(diagram) = [1.0]
\end{minted}
\begin{center}
    \includegraphics[scale=0.40]{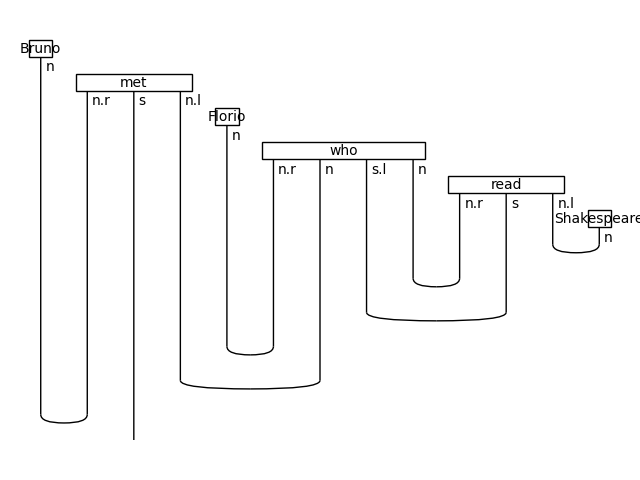}
\end{center}
\end{python}

We now show how to embed the three-dimensional data defined by \py{T}
as a two-dimensional \py{tensor.Functor} with \py{float} entries.
We start by parametrising two-dimensional tensor functors.

\begin{python}\label{listing:discopy.params}
{\normalfont Parametrising a tensor functor.}

\begin{minted}{python}
import numpy

def p_mapping(box, params):
    if box.name == 'Bruno':
        return np.array(params[0])
    if box.name == 'Florio':
        return np.array(params[1])
    if box.name == 'Shakespeare':
        return np.array(params[2])
    if box.name == 'met':
        return np.array([params[3], params[4]])
    if box.name == 'read':
        return np.array([params[5], params[6]])
    if box.name == 'who':
        return Spider(0, 3, Dim(2)).array

ob = {n: Dim(2), s: Dim(1)}
F = lambda params: Functor(ob, lambda box: p_mapping(box, params))
params0 = numpy.random.rand(6, 2)
assert F(params0)(sentences[2]).array != [0.]
\end{minted}
\end{python}

We obtain a prediction by evaluating \py{F(params)} on a sentence and taking
\py{sigmoid} to get a number between 0 and 1.
We can then define the loss of a functor as the mean squared difference
between its predictions and the true labels given by \py{T}. Of course,
other activation and loss functions may be used.

\begin{python}\label{listing:discopy.loss}
{\normalfont Defining the loss function for a knowledge embedding task.}

\begin{minted}{python}
def sigmoid(x):
    sig = 1 / (1 + np.exp(-x))
    return sig

evaluate = lambda F, sentence: sigmoid(F(sentence).array)

def mean_squared(y_true, y_pred):
    return np.mean((np.array(y_true) - np.array(y_pred)) ** 2)

loss = lambda params: mean_squared(*zip(\
    *[(T(sentence).array, evaluate(F(params), sentence)) for sentence in sentences]))
\end{minted}
\end{python}

The Jax package \cite{jax2018github} supports automatic differentiation \py{grad}
and just-in-time compilation \py{jit} for \py{jax} compatible \py{numpy} code.
Since the code for \py{Tensor} is compatible, we can directly use Jax to
compile a simple update function for the functor's parameters.
We run the loop and report the results obtained.

\begin{python}\label{listing:discopy.learning}
{\normalfont Learning functors with Jax.}

\begin{minted}{python}
from jax import grad, jit
from time import time

step_size = 0.1

@jit
def update(params):
    return params - step_size * grad(loss)(params)

epochs, iterations = 7, 30
params = numpy.random.rand(6, 2)
for epoch in range(epochs):
    start = time()
    for i in range(iterations):
        params = update(params)

    print("Epoch {} ({:.3f} milliseconds)".format(epoch, 1e3 * (time() - start)))
    print("Testing loss: {:.5f}".format(loss(params)))

y_true = [T(sentence).array for sentence in sentences]
y_pred = [0 if evaluate(F(final_params), sentence) < 0.5 else 1
          for sentence in sentences]
print(classification_report(y_true, y_pred))
\end{minted}
\begin{center}
    \includegraphics[scale=0.65]{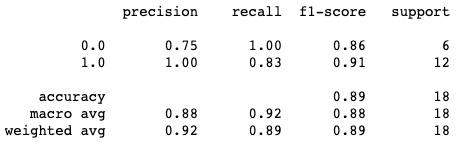}
\end{center}
\end{python}

And voil\`a! In just a few lines of code we have learnt a compressed 2D
tensor representation of the data of a 3D relational model.
We have executed a simple statistical relational learning routine \cite{wang2017},
which can in fact be seen as a version of the knowledge graph embedding model
$\tt{Rescal}$ \cite{nickel2011}, see \ref{sec-embedding}, we review the theory of knowledge graph
embeddings from a diagrammatic perspective, we show the advantages of moving
into the complex numbers for embedding relations.
In fact, we implemented and tested a similar embedding method using quantum computers \cite{meichanetzidis2020a, lorenz2021} which naturally handle complex tensors,
see \ref{sec-quantum-model}.
Large-scale diagram parsing is now possible thanks to Lambeq \cite{kartsaklis2021},
as we showed above. There remains work to do on optimization and
batching for tensor models, which would allow to scale these experiments to
real-world corpora.

\chapter{Games for Pragmatics}
\renewcommand{\chaptermark}[1]{\markboth{#1}{}}

The threefold distinction between syntax, semantics and pragmatics may be
traced back to the semiotics of Peirce an his trilogy between sign, object and
interpretant. According to Peirce \cite{peirce1865}, these three aspects
would induce three different approaches to semiotics, renewing the medieval
\emph{trivium}: formal grammar, logic and formal rhetoric.
In his studies \cite{stalnaker1970}, Stalnaker gives a finer
demarcation between pragmatics and semantics through the concept of \emph{context}:
``It is a semantic problem to specify the rules for matching up sentences
of a natural language with the propositions that they express.
In most cases, however, the rules will not match sentences directly with
propositions, but will match sentences with propositions relative to features
of the context in which the sentence is used.
Those contextual features are part of the subject matter of pragmatics.''

In his \emph{Philosophical Investigations} \cite{wittgenstein1953},
Wittgenstein introduces the concept of \emph{language-game} (\emph{Sprachspiel})
as a basis for his theory of meaning.
He never gives a general definition, and instead proceeds by enumeration of
examples: ``asking, thanking, cursing, greeting, praying''.
Thus, depending on the language-game in which it is played, the same utterance
``Water!'' can be interpreted as the answer to a question, a request to a waiter
or the chorus of a song. From the point of view of pragmatics, language-games
provide a way of capturing the notion of \emph{context}, isolating a particular
meaning/use of language within an environment defined by the game.

Since Lewis' work on conventions \cite{lewis1969}, formal \emph{game theory} has
been used to model the variability of the meaning of sentences and their dependence
on context ~\cite{franke2009, monroe2015, benz2018}. These theoretical enquiries have also
been supported by psycholinguistic experiments such as those of Frank and
Goodman \cite{frank2012}, where a Bayesian game-theoretic model is used to
predict the behaviour of listeners and speakers in matching words with their referents.

In parallel to its use in pragmatics, game theory has been proven
significant in designing machine learning tasks.
It is at the heart of multi-agent reinforcement learning \cite{tuyls2005}, where
decision-makers interact in a stochastic environment. It is also used to
improve the performance of neural network models, following the seminal
work of Goodfellow et al. \cite{goodfellow2014a}, and is
beginning to be applied to natural language processing tasks
such as dialogue generation \cite{li2016a, li2017}, knowledge graph embedding
\cite{cai2018, xiong2018} and word-sense disambiguation \cite{tripodi2019}.

The aim of this chapter is to develop formal and diagrammatic tools to
model language games and NLP tasks.
More precisely, we will employ the theory of \emph{lenses}, which have been
developed as a model for the dynamics of data-accessing programs \cite{pickering2017}
and form the basis of the recent applications of category theory to both game theory
\cite{GhaniHedges18, bolt2019} and machine learning \cite{fong2019e, cruttwell2021}.
The results are still at a preliminary stage, but the diagrammatic representation
succeeds in capturing a wide range of pragmatic scenarios.

In Section \ref{section-tasks}, we argue that probabilistic models are best
suited for analysing pragmatics and NLP tasks and show how the deterministic
models studied in Chapter 2 may be turned into probabilistic ones. In Section
\ref{section-tools}, we introduce lenses and show how they capture the dynamics
of probabilistic systems and stochastic environments. In Section \ref{section-agents},
we show how parametrization may be used to capture agency in these
environments and discuss the notions of optimum, best response and equilibrium
for multi-agent systems. Throughout, we give examples illustrating how these
concepts may be used in both pragmatics and NLP and in Section
\ref{section-examples}, we give a more in-depth analysis of three examples,
building on recent proposals for modelling language games with category theory
\cite{HedgesLewis18, defelice2020a}.

\section{Probabilistic models}\label{section-tasks}
Understanding language requires resolving large amounts of
vagueness and ambiguity as well as inferring interlocutor's intents and beliefs.
Uncertainty is a central feature of pragmatic interactions which has led
linguists to devise probabilistic models of language use, see \cite{franke2016}
for an overview.

Probabilistic modelling is also used throughout machine learning and NLP in particular.
For example, the language modelling task is usually formulated as the task
of learning the conditional probability $p(x_k \, \vert \, x_{k-1}, \dots, x_{1})$
to hear the word $x_k$ given that words $x_1, \dots, x_{k-1}$ have been heard.
We have seen in Section \ref{sec-neural-net} how neural networks induce
probabilistic models using the $\tt{softmax}$ function.
The language of probability theory is particularly suited to formulating
NLP tasks, and reasoning about them at a high level of abstraction.

Categorical approaches to probability theory and Bayesian reasoning have made
much progress in recent years. In their seminal paper \cite{cho2019}, Cho and Jacobs
identified the essential features of probabilistic resoning using string diagrams,
including marginalisation, conditionals, disintegration and Bayesian inversion.
Some of their insights derived from Fong's thesis \cite{fong2013}, where Bayesian
networks are formalised as functors into categories of Markov kernels.
Building on Cho and Jacobs, Fritz \cite{fritz2020} introduced
\emph{Markov categories} as a synthetic framework for probability theory,
generalising several results from the dominant measure-theoretic approach to
probability into this high-level diagrammatic setting. These algebraic tools
are powering interesting developments in applied category theory, including
diagrammatic approaches to causal inference \cite{jacobs2018} and Bayesian game
theory \cite{bolt2019}.

In this section, we review the basic notions of probability theory from a categorical
perspective and show how they can be used to reason about discriminative
and generative NLP models.

\subsection{Categorical probability}

We introduce the basic notions of categorical probability theory, following
\cite{cho2019} and \cite{fritz2020}. For simplicity, we work in the setting of
\emph{discrete} probabilities, although most of the results we use are formulated
diagrammatically and are likely to generalise to any Markov category.

Let $\cal{D} : \bf{Set} \to \bf{Set}$ be the discrete distribution monad defined on
objects by:
$$ \cal{D}(X) = \set{p: X \to \bb{R}^+ \, \vert \, d \text{has finite support and}
\, \sum_{x \in X} p(x) = 1 }$$
and on arrows $f: X \to Y$ by:
$$\cal{D}(f) : \cal{D}(X) \to \cal{D}(Y) : (p : X \to \bb{R}^+) \mapsto (y \in Y \mapsto
\sum_{x\in f^{-1}(y)} p(x) \in \bb{R}^+)$$
We can construct the Kleisli category for the distribution monad
$\bf{Prob} = \bf{Kl}(\cal{D})$ with objects sets and arrows discrete conditional
distributions $f: X \to \cal{D}(Y)$ we denote by $f(y \vert x) \in \bb{R}^+$ the
probability $f(x)(y)$. Composition of $f: X \to \cal{D}(Y)$ and $g: Y \to \cal{D}(Z)$
is given by:
$$ X \xto{f} \cal{D}(Y) \xto{\cal{D}(g)} \cal{D}\cal{D}(Z) \xto{\mu_Z} \cal{D}(Z) $$
where $\mu_Z : \cal{D}\cal{D}(Z) \to \cal{D}(Z)$ flattens a distribution of distributions by
taking sums. Explicitly we have:
$$f \cdot g (z \vert x) = \sum_y g(z \vert y) f(y \vert x) \in \bb{R}^+$$
We may think of the objects of $\bf{Prob}$ as \emph{random variables} and
the arrows as \emph{conditional distributions}.

The category $\bf{Prob}$ has interesting structure. First of all, it is a
symmetric monoidal category with $\times$ as monoidal product.
This is not a cartesian product since $\cal{D}(X \times Y) \neq \cal{D}(X) \times \cal{D}(Y)$.
The unit of the monoidal product $\times$ is the singleton set $1$ which is
\emph{terminal} in $\bf{Prob}$, i.e. for any set $X$ there is only one map
$\tt{del}_X: X \to \cal{D}(1) \simeq 1$ called \emph{discard}. Terminality of the monoidal
unit means that if we discard the output of a morphism we might as well have
discarded the input, a property often interpreted as \emph{causality} \cite{heunen2019}.

There is a commutative comonoid structure $\tt{copy}_X: X \to X \times X$ on each
object $X$ with counit $!_X$. Also there is a embedding of $\bf{Set}$ into
$\bf{Prob}$ which gives the deterministic maps. These are characterized by the
following property
$$ \tt{copy}_Y \circ f = (f \times f) \circ \tt{copy}_X \iff f: X \to Y \text{is deterministic}$$

Morphisms $p : 1 \to X$ in $\bf{Prob}$ are simply distributions $p \in \cal{D}(X)$.
Given a joint distribution $p : 1 \to X \times Y$ we can take \emph{marginals}
of $p$ by composing with the discard map:
\begin{equation*}
    \tikzfig{figures/marginals}
\end{equation*}

In $\bf{Prob}$ the disintegration theorem holds.
\begin{proposition}[Disintegration]\label{prop-disintegration} \cite{cho2019}
    For any joint distribution
    $p\in \cal{D}(X \times Y)$, there are channels $c: X \to \cal{D}(Y)$ and
    $c^\dagger: Y \to \cal{D}(X)$ satisfying:
    \begin{equation}\label{eq-disintegration}
        \tikzfig{figures/disintegration}
    \end{equation}
\end{proposition}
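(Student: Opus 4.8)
The plan is to construct the two channels explicitly by the usual conditional-probability formula and then verify the factorisation pointwise. This is legitimate because in the discrete setting an equality of morphisms in $\bf{Prob}$ is just an equality of the underlying conditional distributions, so the diagrammatic identity (\ref{eq-disintegration}) can be checked entry by entry once both sides are unwound through the definition of Kleisli composition given above.

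First I would define the marginal $p_X \in \cal{D}(X)$ by $p_X(x) = \sum_{y \in Y} p(x, y)$, which is precisely the composite of $p$ with $\tt{del}_Y$, i.e. the $X$-marginal drawn in the diagram. I would then set
$$c(y \mid x) = \frac{p(x, y)}{p_X(x)} \quad \text{when } p_X(x) > 0,$$
and let $c(- \mid x)$ be an arbitrary fixed distribution (a point mass, say) when $p_X(x) = 0$. A quick check shows $c(- \mid x) \in \cal{D}(Y)$ for every $x$: the entries are non-negative and, when $p_X(x) > 0$, they sum to $\sum_y p(x,y)/p_X(x) = 1$ by construction. The channel $c^\dagger: Y \to \cal{D}(X)$ is obtained by the symmetric construction from the marginal $p_Y(y) = \sum_x p(x,y)$, with $c^\dagger(x \mid y) = p(x,y)/p_Y(y)$.

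Next I would verify the defining equation. Reading the right-hand side of (\ref{eq-disintegration}) pointwise — copy $p_X$ and feed one wire into $c$ — the Kleisli composite evaluates at $(x,y)$ to $\sum_{x'} p_X(x') \, \tt{copy}_X(x')\, c(y \mid x') = p_X(x) \cdot c(y \mid x)$, so it remains to show this equals $p(x, y)$. When $p_X(x) > 0$ this is immediate from the definition of $c$. The only point needing care, and the one genuine obstacle, is the case $p_X(x) = 0$: there a vanishing non-negative sum forces $p(x, y) = 0$ for every $y$, so both sides are zero irrespective of the arbitrary choice made for $c(- \mid x)$, and the identity still holds. This null-marginal case is exactly where discreteness keeps the argument painless relative to the measure-theoretic version, where one must argue up to almost-everywhere equality. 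The same computation with $p_Y$ and $c^\dagger$ establishes the symmetric factorisation, completing the proof once one notes that this pointwise verification is nothing other than the diagrammatic identity asserting that copying a marginal and applying the conditional channel reproduces the joint state.
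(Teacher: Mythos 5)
Your proof is correct, but it takes a different route from the paper. The paper does not construct the conditionals at all: it cites \cite{cho2019}, where disintegration is proved for the full subcategory $\bf{Stoch}$ of $\bf{Prob}$ on finite sets, and then extends to the infinite discrete case by a support argument --- since any $p \in \cal{D}(X \times Y)$ has finite support by definition of $\cal{D}$, one disintegrates the restriction of $p$ to its support in $\bf{Stoch}$ and extends the resulting channels by assigning probability $0$ outside it. You instead give a self-contained construction, defining $c(y \mid x) = p(x,y)/p_X(x)$ (with an arbitrary distribution on null fibres) and verifying the factorisation pointwise through Kleisli composition; your treatment of the case $p_X(x) = 0$, where nonnegativity forces $p(x,y) = 0$ so both sides vanish regardless of the arbitrary choice, is exactly the point the citation-based proof leaves implicit. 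What each approach buys: yours is fully explicit and makes visible why discreteness renders the result painless (no almost-everywhere subtleties), essentially reproving the Cho--Jacobs computation rather than invoking it; the paper's is shorter, situates the proposition in the literature, and its finite-support reduction is the reusable device it applies again for Proposition \ref{prop-conditionals}. One small remark: your marginal-copy formula $\sum_{x'} p_X(x')\,\tt{copy}_X(x')\,c(y \mid x')$ is written loosely --- the copy map produces the pair $(x',x')$ and the sum collapses to the single term $p_X(x)\,c(y \mid x)$ --- but the computation you then perform is the right one, so this is a notational slip rather than a gap.
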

\begin{proof}
    The proof is given in \cite{cho2019} in the case of the full subcategory of
    $\bf{Prob}$ with objects finite sets, i.e. for the category $\bf{Stoch}$
    of stochastic matrices. Extending this proof to the infinite discrete case is
    simple because for any distribution $p \in \cal{D}(X \times Y)$ we may construct
    a stochastic vector over the support of $p$, which is finite by definition
    of $\cal{D}$. So we can disintegrate $p$ in $\bf{Stoch}$ and then extend it
    to $\bf{Prob}$ by assigning probability $0$ to elements outside of the support.
\end{proof}

\begin{example}
    Taking $X = \set{1, 2}$ and $Y = \set{A, B}$ an example of disintegration
    is the following:
    \begin{equation*}
        \left(
        \begin{tabular}{ |c|c| }
         \hline
         1 & $1/8$ \\
         \hline
         2 & $7/8$ \\
         \hline
        \end{tabular}
        \, , \,
        \begin{tabular}{ |c|c|c| }
         \hline
          & A & B\\
         \hline
         1 & $1$ &  $0$ \\
         \hline
         2 & $3/7$ & $4/7$\\
         \hline
        \end{tabular}
        \right)
        \leftarrow
        \begin{tabular}{ |c|c|c| }
         \hline
          & A & B\\
         \hline
         1 & $1/8$ &  $0$ \\
         \hline
         2 & $3/8$ & $1/2$\\
         \hline
        \end{tabular}
        \rightarrow
        \left(
        \begin{tabular}{ |c|c|c| }
         \hline
          & A & B\\
         \hline
         1 & $1/4$ &  $0$ \\
         \hline
         2 & $3/4$ & $1$\\
         \hline
        \end{tabular}
        \, , \,
        \begin{tabular}{ |c|c| }
         \hline
          A & B\\
         \hline
          $1/2$ &  $1/2$ \\
         \hline
        \end{tabular}
        \right)
    \end{equation*}
\end{example}

Given a \emph{prior} $p \in \cal{D}(X)$ and a channel $c: X \to \cal{D}(Y)$,
we can integrate them to get a joint distribution over $X$ and $Y$ and then
disintegrate over $Y$ to get the channel $c^\dagger: Y \to \cal{D}(X)$. This
process is known as \emph{Bayesian inversion} and $c^\dagger$ is
called a Bayesian inverse of $c$ along $p$. These satisfy the following equation,
which can be derived from \ref{eq-disintegration}.
\begin{equation}\label{eq-bayes}
    \tikzfig{figures/Bayes-law}
\end{equation}
Interpreting this diagrammatic equation in $\bf{Prob}$ we get that:
$$ c(y \vert x) p(x) = c^\dagger(x \vert y) \sum_x (c(y \vert x) p(x))$$
known as Bayes law.

The category $\bf{Prob}$ satisfies a slightly stronger notion of disintegration
which doesn't only apply to states or joint distributions but to channels directly.

\begin{proposition}[Conditionals]\cite{fritz2020}\label{prop-conditionals}
    The category $\bf{Prob}$ has conditionals,
    i.e. for any morphism $f: A \to \cal{D}(X \times Y)$ in $\bf{Prob}$ there is
    a morphism $f\vert_X : A \times X \to \cal{D}(Y)$ such that:
    \begin{equation*}
        \tikzfig{figures/conditionals}
    \end{equation*}
\end{proposition}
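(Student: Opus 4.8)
The plan is to obtain the conditional $f\vert_X$ by applying disintegration (Proposition \ref{prop-disintegration}) with a parameter, namely pointwise over the input $A$. First I would observe that the required factorisation, read in components, is exactly the assertion that for all $a \in A$, $x \in X$ and $y \in Y$ one has
\begin{equation*}
    f(x, y \mid a) = f_X(x \mid a)\, f\vert_X(y \mid a, x),
\end{equation*}
where $f_X : A \to \cal{D}(X)$ is the $X$-marginal of $f$, obtained by postcomposing $f$ with the discard map $\tt{del}_Y$ as in the definition of marginals. This component equation is precisely what the string-diagrammatic condition evaluates to, using that $\tt{copy}$ is deterministic. Thus the task reduces to producing, for the $A$-indexed family of joint distributions $\{f(a)\}_{a \in A} \sub \cal{D}(X \times Y)$, a compatible family of conditional channels.

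Next I would apply Proposition \ref{prop-disintegration} to each joint distribution $p_a := f(a) \in \cal{D}(X \times Y)$. This yields a channel $c_a : X \to \cal{D}(Y)$ with $p_a(x, y) = (p_a)_X(x)\, c_a(y \mid x)$, where $(p_a)_X = f_X(\cdot \mid a)$ is the $X$-marginal. I would then define the candidate morphism $f\vert_X : A \times X \to \cal{D}(Y)$ by $f\vert_X(y \mid a, x) = c_a(y \mid x)$. Since $c_a(\cdot \mid x)$ is a genuine element of $\cal{D}(Y)$ for every $a$ and $x$, this is a well-defined function $A \times X \to \cal{D}(Y)$, hence a morphism of $\bf{Prob}$, and substituting it into the displayed equation recovers the disintegration identity for each $a$, so the factorisation holds. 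Translating this component identity back into the full string-diagrammatic form — copying $A$, routing one copy through $f_X$ to produce $X$, copying that $X$, and feeding the second $X$-copy together with the remaining $A$-wire into $f\vert_X$ — is then a routine rewriting using the commutative comonoid axioms for $\tt{copy}$.

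The main obstacle is the non-uniqueness of disintegration: $c_a(y \mid x)$ is only pinned down where the marginal $f_X(x \mid a)$ is strictly positive, so one must make an arbitrary but valid choice of distribution on the set where $f_X(x \mid a) = 0$. I would handle this exactly as in the proof of Proposition \ref{prop-disintegration}: where $f_X(x \mid a) = 0$ I assign $f\vert_X(\cdot \mid a, x)$ any fixed distribution in $\cal{D}(Y)$, and note that this choice is invisible in the factorisation, since the corresponding term is multiplied by $f_X(x \mid a) = 0$, while $f(x, y \mid a) \leq f_X(x \mid a) = 0$ forces the left-hand side to vanish too. In the discrete setting this choice raises no measurability difficulty — the only genuine content beyond bookkeeping — which is why I would restrict to the discrete category $\bf{Prob}$ exactly as the statement does; extending to a general Markov category would instead require conditionals to be assumed as a hypothesis rather than constructed.
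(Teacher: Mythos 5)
Your proof is correct and takes essentially the same route as the paper: the paper's proof simply cites Fritz's construction of conditionals in $\bf{Stoch}$ and asserts the extension to $\bf{Prob}$ is simple, and Fritz's construction is precisely the pointwise ratio formula you obtain by disintegrating each joint $f(a)$, with an arbitrary (finitely supported) choice of distribution at points where the marginal $f_X(x \vert a)$ vanishes. In effect you have filled in the details the paper delegates to the citation — the parameterisation over $A$, the component form of the string-diagram equation, and the finite-support bookkeeping that makes the discrete extension go through.
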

\begin{proof}
    As for Proposition \ref{prop-disintegration}, this was proved in the case
    of $\bf{Stoch}$ by Fritz \cite{fritz2020} and it is simple to extend the
    proof to $\bf{Prob}$.
\end{proof}

\subsection{Discriminators}

The first kind of probabilistic systems that we consider are \emph{discriminators}.
We define them in general as probabilistic channels that take sentences in a language
$\cal{L}(G)$ and produce distributions over a set of features $Y$.
These can be seen as solutions to the general classification task of assigning
sentences in $\cal{L}(G)$ to classes in $Y$.

\begin{definition}[Discriminator]
    A discriminator for a grammar $G$ in a set of features $Y$ is a probabilistic
    channel:
    $$ c: \cal{L}(G) \to \cal{D}(Y) $$
\end{definition}
\begin{remark}
    Throughout this chapter we assume that parsing for the chosen grammar $G$
    can be done efficiently. WFor simplicity, we assume that we are given a function:
    $$\tt{parsing}: \cal{L}(G) \to \coprod_{u \in V^\ast} \bf{G}(u, s)$$
    where $\bf{G}$ is the category of derivations for the grammar $G$.
    This could also be made a probabilistic channel with minor modifications
    to the results of this section.
\end{remark}

In the previous chapters, we defined NLP models as functors $F: G \to \bf{S}$
where $G$ is a grammar and $\bf{S}$ a semantic category. The aim for this section
is to show that, in most instances, we can turn these models into probabilstic
discriminators. We will do this in two steps. Assuming that parsing can be
performed efficiently, it is easy to show that functorial models $F: G \to \bf{S}$
induce \emph{encoders}, i.e. deterministic functions
$\cal{L}(G) \to S$ which assign to every sentence in the language
$\cal{L}(G) \sub V^\ast$ a compressed semantic representation in the sentence
space $S = F(s)$.
In order to build a discriminator $c$ from an encoder $f: \cal{L}(G) \to S$
the only missing piece of structure is an \emph{activation} function
$\sigma: S \to \cal{D}(Y)$, mapping semantic states to distributions over classes.

\emph{Softmax} is a useful activation function which allows to turn real valued vectors
and tensors into probability distributions.
$$ \tt{softmax}_X: \bb{R}^X \to \cal{D}(X)$$
$$ \tt{softmax}(\vec{x})_i = \frac{e^{x_i}}{\sum_{i=1}^n e^{x_i}} $$
Thus, when the sentence space is $S = \bb{R}^Y$, $\tt{softmax}$ allows to turn encoders
$f: X \to \bb{R}^Y$ into discriminators $c = \tt{softmax}\circ f :  X \to \cal{D}(Y)$.
In fact all discriminators arise from encoders by post-composition with $\tt{softmax}$
as the following proposition shows.

\begin{proposition}\label{prop-softmax}
    Given a channel $c: X \to \cal{D}(Y)$ and a prior distribution
    $p \in \cal{D}(Y)$ there is a function $f : X \to \bb{R}^Y$ such that
    $$ X \xto{f} \bb{R}^Y \xto{\tt{softmax}} \cal{D}(Y)
    \, =\,  X \xto{c} \cal{D}(Y)$$
\end{proposition}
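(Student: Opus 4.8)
The plan is to construct $f$ explicitly by taking logits equal to the log-probabilities of $c$, and to verify the identity by a one-line computation using that each $c(\cdot \vert x)$ is normalised. First I would record the translation invariance of softmax: for any $x \in X$, any vector $\vec{v} \in \bb{R}^Y$ and any constant $g(x) \in \bb{R}$,
$$ \tt{softmax}(\vec{v} + g(x))_y = \frac{e^{v_y + g(x)}}{\sum_{y'} e^{v_{y'} + g(x)}} = \frac{e^{v_y}}{\sum_{y'} e^{v_{y'}}} = \tt{softmax}(\vec{v})_y . $$
Hence $\tt{softmax} \circ f$ is unchanged if we shift every logit of $f(x)$ by an $x$-dependent constant; this freedom is exactly what will let the prior $p$ enter the construction without appearing in the conclusion.

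Next I would set $f(x)_y = \log c(y \vert x)$ and compute directly
$$ \tt{softmax}(f(x))_y = \frac{e^{\log c(y \vert x)}}{\sum_{y'} e^{\log c(y' \vert x)}} = \frac{c(y \vert x)}{\sum_{y'} c(y' \vert x)} = c(y \vert x), $$
the last equality holding because $c(x) \in \cal{D}(Y)$ sums to one. This already gives $\tt{softmax} \circ f = c$ as channels $X \to \cal{D}(Y)$. To expose the Bayesian reading promised in \ref{sec-neural-net}, I would then view $c$ as the Bayesian inverse $\ell^\dagger$ of a likelihood channel $\ell: Y \to \cal{D}(X)$ along the prior $p$: by Bayes' law (\ref{eq-bayes}) one has $\ell(x \vert y)\, p(y) = c(y \vert x)\, Z(x)$ with $Z(x) = \sum_{y'} \ell(x \vert y')\, p(y')$, so that $f(x)_y = \log\bigl(\ell(x \vert y)\, p(y)\bigr) - \log Z(x)$ differs from $\log c(y \vert x)$ only by the $x$-constant $-\log Z(x)$. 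By the invariance above, applying softmax to this log-joint still returns the posterior $c(y \vert x)$, which identifies softmax as the operation normalising a log-joint into a Bayesian posterior, with $p$ supplying the prior.

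The main obstacle is the boundary case $c(y \vert x) = 0$: there $\log c(y \vert x) = -\infty$, and more seriously every softmax output has full support, so no finite-valued $f$ can reproduce a channel with a zero entry. I would resolve this either by stating the result for full-support channels, where $f(x)_y = \log c(y \vert x)$ is genuinely real-valued and the computation above is literal, or by allowing $f$ to take values in $\bb{R} \cup \set{-\infty}$ with the convention $e^{-\infty} = 0$, under which the same computation holds on the support of $c(\cdot \vert x)$. Here the prior $p$ is again useful: replacing $c$ by the full-support channel $(1 - \varepsilon) c + \varepsilon p$ yields a family of honest encoders whose softmax images converge to $c$, which suffices for the applications in this chapter.
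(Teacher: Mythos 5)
Your proof is correct, and its computational core is the same normalisation identity the paper uses, $\tt{softmax}(\log \vec{v})_y = v_y / \sum_{y'} v_{y'}$; but your route to $f$ is genuinely more direct. The paper constructs $f$ as the log of the joint $l(x, y) = c^\dagger(x \vert y)\, p(y)$ built from a Bayesian inverse of $c$, so that the prior does real work in exhibiting $f$ as a log-likelihood, and Bayes' law (\ref{eq-bayes}) is then needed to see that the normalised ratio collapses to $c(y \vert x)$. You instead set $f(x)_y = \log c(y \vert x)$, which makes the verification a one-liner requiring neither $p$ nor the existence of $c^\dagger$, and you recover the paper's log-joint encoder as a shift by the $x$-constant $-\log Z(x)$ via the translation invariance of softmax. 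What each buys: your version shows the prior in the statement is logically inert for the bare existence claim (it matters only for the Bayesian interpretation of $f$, which is what the surrounding discussion in \ref{section-tasks} is after), while the paper's version front-loads that interpretation at the cost of extra machinery — and is in fact slightly loose, since a Bayesian inverse of $c : X \to \cal{D}(Y)$ is taken along a prior on $X$, not the given $p \in \cal{D}(Y)$, and the displayed denominator in (\ref{bayes-likelihood}) drops a factor of $p(y')$. You also flag a genuine caveat the paper silently assumes away: if $c(y \vert x) = 0$ for some pair, no real-valued $f$ can reproduce $c$, because every softmax output has full support; so the statement as literally phrased needs a full-support hypothesis, extended-real logits with the convention $e^{-\infty} = 0$, or the smoothing $(1 - \varepsilon) c + \varepsilon p$ you propose — and the paper's own $f = \tt{log}_Y(l)$ suffers the identical defect wherever $l(x, y) = 0$. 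Your handling of that boundary case is an improvement in rigour over the published proof.
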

\begin{proof}
    In order to prove the existence of $f$, we construct the log likelihood
    function. Given $c: X \to \cal{D}(Y)$ there is a Bayesian inverse
    $c^\dagger : Y \to \cal{D}(X)$, then we can define the \emph{likelihood}
    $l: X \times Y \to \bb{R}^+$ by:
    $$ l(x, y) = c^\dagger(x \vert y) p(y)$$
    and the \emph{log likelihood} is given by
    $$ f(x)  = \tt{log}_Y(l(x, y)) \in \bb{R}^Y$$
    where $\tt{log}_Y : (\bb{R}^+)^Y \to \bb{R}^Y$ is the logarithm applied to each
    entry. Then one can prove using Bayes law that:
    \begin{equation}\label{bayes-likelihood}
        \tt{softmax}(f) = \tt{softmax}(\tt{log}(l(x, y))) = \frac{l(x, y)}{\sum_{y'} l(x, y')} = \frac{c^\dagger(x \vert y) p(y)}{\sum_{y'}c^\dagger(x \vert y')} = c(y \vert x)
    \end{equation}
\end{proof}

Note that many functions $f$ may induce the same channel $c$ in this way.
The encoder $f : X \to (\bb{R}^+)^Y$ given in the proof is the
log of the likelihood function $l: X \times Y \to \bb{R}^+$. In these
instances, the encoder is well behaved probabilistically, since it satisfies the
version \ref{bayes-likelihood} of Bayes equation. However, in most gradient-based
applications of stochastic methods, encoders $X \to (\bb{R})^Y$ such as those
built from a neural network tend to achieve a better performance.

We already saw in Section \ref{sec-neural-net}, that $\tt{softmax}$ can be used to
turn neural networks into probabilistic models. We rephrase this in the
following definition.

\begin{definition}
    Given any recursive neural network $F: G \to \bf{NN}(W)$ for a monoidal grammar
    $G$ with $F(s) = n$and $F(v) = 0$ for $v \in V \sub G_0$, and parameters
    $\theta: W \to \bb{R}$, we can build
    a discriminator $\tilde{F} : \cal{L}(G) \to \cal{D}([n])$
    as the following composition:
    $$\tilde{F} =
    \cal{L}(G) \xto{\tt{parsing}} \coprod_{u \in V^\ast} \bf{MC}(G)(u, s)
    \xto{F} \bf{NN}(W)(0, n) \xto{I_\theta} \bb{R}^n \xto{\tt{softmax}} \cal{D}([n])$$
    where $I_\theta: \bf{NN}(W) \to \bf{Set}_\bb{R}$ is the functor defined
    in\ref{sec-neural-net}.
\end{definition}

Softmax induces a \emph{function} $\cal{S}: \bf{Mat}_\bb{R} \to \bf{Prob}$
defined on objects by $n \mapsto [n]$ for $n \in \bb{N}$ and on arrows by:
$$ n \xto{M} m \quad \mapsto \quad [n] \xto{\cal{S}(M)} \cal{D}([m]) = [n] \xto{\ket{-}} \bb{R}^n \xto{M\cdot} \bb{R}^m \xto{\tt{softmax}} \cal{D}([m]) $$
Where $\ket{-}$ is the \emph{one-hot encoding} which maps each element of
$[n]$ to the corresponding basis vector in $\bb{R}^n$. Note that the mapping
$\cal{S}$ is \emph{not functorial}, but it is surjective as can readily be shown
from Proposition \ref{prop-softmax}.

\begin{definition}
    Given any tensor network model $F: G \to \bf{Mat}_\bb{R}$ for a rigid grammar $G$
    with $F(s) = n$ and $F(w) = 1$ for $w \in V \sub G_0$ we can build a
    discriminator $\tilde{F}: \cal{L}(G) \to \cal{D}([n])$ as the following composition:
    $$\tilde{F} = \cal{L}(G) \xto{\tt{parsing}}
    \coprod_{u \in V^\ast} \bf{RC}(G)(u, s) \xto{F} \bf{Mat}_\bb{R}(1, n) \xto{\cal{S}} \cal{D}([n])$$
\end{definition}

\begin{example}[Student]\label{ex-student}
      Consider a student who is faced with a question in $\cal{L}(G, q)$,
      and has to guess the answer. This can be seen as a probabilistic
      channel $\cal{L}(G, q) \to \cal{D}(A)$ where $A$ is a set of possible answers.
      We can build this channel using a tensor network model
      $F: G \to \bf{Mat}_\bb{R}$ with $F(q) = \bb{R}^A$ which induces a discriminator
      $\tilde{F}: \cal{L}(G, q) \to \cal{D}(A)$, as defined above.
      The image under $\tilde{F}$ of the grammatical question ``Who was the emperor
      of France'' in $\cal{L}(G, q)$ is given by the following diagram, representing
      a distribution in $\cal{D}(A)$:
      \begin{equation*}
          \tikzfig{figures/student}
      \end{equation*}
      where the bubble represents the unary operator on hom-sets $\cal{S}$.
\end{example}

\begin{definition}
    Given any relational model $F: G \to \bf{Rel}$ for a rigid grammar $G$
    with $F(s) = Y$ and $F(w) = 1$ for $w \in V \sub G_0$
    we can build a discriminator $\tilde{F}: \cal{L}(G) \to \cal{D}(Y)$
    as follows:
    $\tilde{F} = \cal{L}(G) \xto{\tt{parsing}} \coprod_{u \in V^\ast} \bf{RC}(G)(u, s)
    \xto{F} \bf{Rel}(1, Y) \simeq \cal{P}(Y) \xto{\tt{uniform}} \cal{D}(Y)$
    where $\tt{uniform}$ takes a subset of $Y$ to the uniform distribution over
    that subset.
\end{definition}

\begin{example}[Literal listener]\label{ex-hearer}
    Consider a listener who hears a word in $W$ and needs to choose which object
    in $R$ the word refers to. This example is based on  Bergen et al. \cite{bergen2016}.
    Suppose we have a relation (called lexicon in \cite{bergen2016})
    $\phi: W \times R \to \bb{B}$, where
    $\phi(w, r) = 1$ if $w$ could refer to $r$ and $0$ otherwise.
    We can treat $\phi$ as a \emph{likelihood} $\phi: W \times R \to \bb{R}^+$.
    and model a \emph{literal listener} as a classifier $l: W \to \cal{D}(R)$ defined by:
    $$l(w \vert r) \propto \phi(w, r)$$
    This is the same as taking $l(r) \in \cal{D}(W)$ to be the uniform distribution over
    the words that could refer to $r$ according to $\phi$.
    Thus the literal listener does not take the context into account.
    We can extend this model by allowing noun phrases to be uttered,
    instead of single words, i.e. $W = \cal{L}(G, n)$ for some rigid grammar $G$ with
    noun type $n \in G_0$. Then we can replace the relation $\phi: W \times R \to \bb{B}$
    by a \emph{relational model} $F: G \to \bf{Rel}$ with $F(n) = R$, so that
    any grammatical noun phrase $g: u \to n$ in $\bf{RC}(G)$ is mapped to a subset
    $F(g) \sub R$ of the objects it could refer to. Then the listener is modeled by:
    $$l(g \vert r) \propto \braket{F(g)}{r}$$
    where $\braket{F(g)}{r} = 1$ if $r \in F(g)$ and is zero otherwise.
    This corresponds to taking $l = \tilde{F}$ as defined in the proposition above.
    As we will see in Example \ref{ex-speaker} and Section \ref{bayesian-pragmatics},
    we can use a literal listener to define a pragmatic speaker
    which reasons about the literal interpretation of its words.
\end{example}

\subsection{Generators}

In order to solve NLP tasks, we need \emph{generators} along-side discriminators.
These are probabilistic channels which produce a sentence in the language
given some semantic information in $S$. They are used throughout NLP, and most
notably in neural language modelling \cite{bengio2003} and machine translation
\cite{bahdanau2014}.

\begin{definition}[Generator]
    A generator for $S$ in $G$ is a probabilistic channel:
    $$ c: S \to \cal{D}\cal{L}(G)$$
\end{definition}

Bayesian probability theory allows to exploit the symmetry between
encoders and decoders. Recall that Bayes law relates a conditional distribution
$c: X \to \cal{D}(Y)$ with its Bayesian inverse $c^\dagger: Y \to \cal{D}(X)$
given a prior $p: 1 \to \cal{D}(X)$:
$$c^\dagger(x \vert y) = \frac{c(y \vert x) p(x)}{\sum_{x'}c(y \vert x') p(x')}$$
Thus Bayesian inversion $\dagger$ can be used to build a decoder
$c^\dagger : Y \to \cal{D}\cal{L}(G)$ from an encoder
$c: \cal{L}(G) \to \cal{D}(Y)$ given a prior over the language $p \in \cal{D}\cal{L}(G)$,
and viceversa.

\begin{example}[Speaker]\label{ex-speaker}
    Consider a speaker who is given an object in $R$ (e.g. a black chess piece), and
    needs to produce a word in $W$ to refer to it (e.g. "bishop"). We can model
    it as a generator $s: R \to \cal{D}(W)$.
    Following from Example \ref{ex-hearer}, assume the speaker has access to a
    relation $\phi: R \times W \to \bb{B}$. Then she knows what the literal interpretation
    of her words is, i.e. she can build a channel $l: W \to \cal{D}(R)$ corresponding
    to a literal speaker. In order to perform her task $s =R \to \cal{D}(W)$,
    she can take the Bayesian inverse of the literal listener $s = l^\dagger$.
    We will see in Section \ref{bayesian-pragmatics}, that this strategy for the
    teacher yields a Nash equilibrium in a collaborative game with a listener.
\end{example}

In practice, it is often very expensive to compute the Bayesian inverse of a
channel. Expecially when one does not have a finite table of probabilities
$X \times Y \to \bb{R}^+$ but rather a log-likelihood function
$X \to \bb{R}^Y$ represented e.g. as a neural network.  Thus, in most
cases, we must resort to other tools to build generators.
Recurrent neural networks are the simplest such tool.
Indeed, given a recurrent network $g: n \to n \oplus \size{V}$ in $\bf{NN}$
we can build a generator $\bb{R}^n \to \cal{D}(V^\ast)$, by picking an initial encoder state $x \in \bb{R}^n$ and simply iterate the recurrent network by
composing it along the encoder space $n$, see Section \ref{sec-neural-net} for examples.

\begin{example}[Translator/Chatbot]\label{ex-chatbot}
    The sequence-to-sequence (Seq2Seq) model of Bahdanau et al. \cite{bahdanau2014},
    surveyed at the end of Section \ref{sec-neural-net} is composed of recurrent
    encoder and decoder networks connected by an attention mechanism, see (\ref{eq-bahdanau}).
    It was originally used to build a translator
    $V^\ast \to \cal{D}(V'^\ast)$ for two vocabularies $V$ and $V'$.
    Taking $V' = V$ we can use Seq2Seq to build a chatbot
    $V^\ast \to \cal{D}(V^\ast)$. We will use these in \ref{ex-dialogue}.
\end{example}

From a categorical perspective, we do not understand generators as well as
discriminators. If generators arise from functors, discriminators should arise
from a dual notion, i.e. a concept of cofunctor, but we were unable to work out
what these should be.
Recently, Toumi and Koziell-Pipe \cite{toumi2021} introduced
a notion of functorial language model which allows to generate missing words
in a grammatical sentence using DisCoCat models. Indeed, if we assume that
the grammatical structure is given, then we may generate sentences with that
structure using an activation function. We give an example in the case of
relational models.

\begin{example}[Teacher]\label{ex-examiner}
    Consider a teacher who knows an answer (e.g. ``Napoleon''), and needs
    to produce a question with that answer (e.g. ``Who was emperor of France?'').
    Suppose that the teacher has access to a relational model $F: G \to \bf{Rel}$
    for a grammar $G$ with a question type $q$ and a noun type $n$ with
    $F(q) = F(n)$ (i.e. answers are nouns).
    Following \cite{coecke2018c, toumi2021}, we may repackage the functor $F$
    via an embedding $E : N \to F(n)$ in $\bf{Rel}$ where
    $N = \set{w \in V \, \vert \, (w, n) \in G}$ is the set of nouns, so
    that $F(w \to n) = E\ket{w}$. Fixing a grammatical structure $g: u \to q$,
    the teacher can generate questions with that structure, by evaluating
    the structure in her model $F$ and then uniformly choosing which nouns
    to use in the question. This process is shown in the following diagram:
    \begin{equation*}
        \tikzfig{figures/teacher}
    \end{equation*}
    where the bubble indicates the $\tt{uniform}$ operator on hom-sets of $\bf{Rel}$.
    Read from bottom to top, the diagram above represents a channel $N \to \cal{D}(N \times N)$
    which induces a channel $N \to \cal{D}(\cal{L}(G, q))$. However, taking
    the $\tt{uniform}$ function is not a pragmatic choice for the teacher. Indeed,
    given ``Napoleon'' as input, the channel above is equally likely to choose
    the question ``Who was emperor of France?'' as ``Who was citizen of France?''.
    We will see in Section \ref{section-adversarial-qa},
    that the pragmatics of posing questions may be captured by an adversarial
    scenario in which the teacher aims to ask hard questions to a student who
    tries to answer them.
\end{example}

\section{Bidirectional tools}\label{section-tools}

The concept of \emph{reward} is central in both game theory and machine learning.
In the first, it is formalised in terms of a \emph{utility function} and allows
to define the optimal strategies and Nash equilibria of games. In the second, where it
is captured (pessimistically) using a \emph{loss function}, it defines an
objective that the learning system must minimise.
Reward is also a central concept in reinforcement learning
where one considers probabilistic processes which run through a state-space
while collecting rewards \cite{howard1960, gouberman2014}.
In this section we show that the information flow of rewards in a dynamic
probabilistic system can be captured in a suitable category of \emph{lenses}.
The bidirectionality of lenses allows to represent the action-reaction structure
of game-theoretic and machine learning systems.

\subsection{Lenses}\label{section-lenses}

Lenses are bidirectional data accessors which were introduced in the context of
the view-update problem in database theory \cite{bohannon2006, johnson2012},
although they have antecedents in Godel's ``Dialectica interpretation''.
They are widely used in functional programming \cite{pickering2017}
and have recently received a lot of attention in the applied category
theory community, due to their applications to machine learning
\cite{fong2019e, fong2019d} and game theory \cite{GhaniHedges18}.
There are several variants on the definition of lenses available in the literature.
These were largely unified by Riley who introduced \emph{optics} as generalisation
of lenses from $\bf{Set}$ to any symmetric monoidal category \cite{riley2018a}.
We use the term lens instead of optics for this general notion.

\begin{definition}[Lens]\cite{riley2018a}
    Let $X, Y, O, S$ be objects in a symmetric monoidal category $\bf{C}$.
    A \emph{lens} or \emph{optic} $[f, v]: (X, S) \to (Y, O)$ in $\bf{C}$ is
    given by the following data:
    \begin{itemize}
        \item an object $M \in \bf{C}$,
        \item a forward part $f : X \to M \otimes  Y$ called ``get'',
        \item a backward part $v : M \otimes O \to S$ called ``put''.
    \end{itemize}
\end{definition}

Stripped out of its set-theoretic semantics, a lens is simply seen as a pair of
morphisms arranged as in the following two equivalent diagrams.

\begin{equation}\label{diagram-optics}
    \tikzfig{figures/optics}
\end{equation}

Two lenses $[f, v], [f', v']: (X, S) \to (Y, R)$ are said to
be \emph{equivalent}, written $[f, v] \cong [f', v']$, if there is
a morphisms $h: M \to M'$ satisfying
\begin{equation}\label{eq-diagram-lens}
    \tikzfig{figures/equivalent-lenses}
\end{equation}
The quotient of the set of lenses by the equivalence relation $\cong$ can be
expressed as a coend formula \cite{riley2018}.
Here we omit the coend notation for simplicity.
This quotient is needed in order to show that composition of lenses
is associative and unital. Indeed, as shown by Riley \cite{riley2018},
equivalence classes of lenses under $\cong$ form a symmetric monoidal category
denoted $\bf{Lens}(\bf{C})$.
Sequential composition for $[f, u]: (X, S) \to (Y, O)$ and $[g, v]: (Y, O) \to (Z, Q)$
is given by $[g, v] \circ [f, u] = [(\tt{id}_{M_f} \otimes g) \circ f,
v \circ (\tt{id}_{M_f} \otimes u)]$ with $M_{g \circ f} = M_f \times M_g$,
diagrammatically we have:
\begin{equation}
    \tikzfig{figures/composition-lenses}
\end{equation}
and tensor product $[f, v]  \otimes [g, u]$ is given by:
\begin{equation}
    \tikzfig{figures/tensor-lenses}
\end{equation}
Moreover, there are cups allowing to turn a covariant wire in the contravariant
direction.
\begin{equation}\label{eq-diagram-cup}
    \tikzfig{figures/teleological}
\end{equation}
\begin{remark}
    This diagrammatic notation is formalised in~\cite{hedges2017}, where categories
    endowed with this structure are called \emph{teleological}.
    Note that $\bf{Lens}_\bf{C}$ is not compact-closed, i.e. we can only turn wires
    from the covariant to the contravariant direction.
    When we draw a vertical wire as in \ref{eq-diagram-lens}, we actually mean
    a cup as in \ref{eq-diagram-cup}, this makes the notation more compact.
\end{remark}

We interpret lenses along the lines of compositional game theory \cite{GhaniHedges18}.
A lens is a process which makes
\emph{observations} in $X$, produces \emph{actions} or moves in $Y$, then it
gets some \emph{reward} or utility in $R$ and gives back information as to its
degree of \emph{satisfaction} or coutility in $S$. Of course, this interpretation
in no way exhausts the possible points of vue on lenses. For instance in
\cite{fong2019e} and \cite{cruttwell2021}, one interprets lenses as smooth functions
turning inputs in $X$ into outputs in $Y$, and then backpropagating the \emph{error}
in $R = \Delta Y$ to an error in the input $S = \Delta X$.

We are particularly interested in lenses over the category $\bf{Prob}$ of
conditional probability distributions, which we call \emph{stochastic lenses}.
These have been characterised in the context of causal inference
\cite{jacobs2018}, where they are called \emph{combs}, as morphisms of
$\bf{Prob}$ satisfying a causality condition.

\begin{definition}[Comb]\cite{jacobs2018}\label{def-combs}
    A comb is a stochastic map $c: X \otimes R \to Y \otimes S$ satisfying for some
    $c': X \to Y$, the following equation:
    \begin{equation}\label{eq-comb}
        \tikzfig{figures/comb-equation}
    \end{equation}
\end{definition}

Combs have an intuitive diagrammatic representation from which they take their
name.
\begin{equation}
    \tikzfig{figures/combs}
\end{equation}
With this diagram in mind, the condition \ref{eq-comb} reads: ``the contribution
from input $R$ is only visible via output $S$ ''\cite{jacobs2018}.

\begin{proposition}\label{prop-lenses-combs}
    In $\bf{Prob}$, combs $X \otimes R \to Y \otimes S$ are in one-to-one correspondence with lenses $(X, S) \to (Y, R)$.
\end{proposition}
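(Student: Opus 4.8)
The plan is to exhibit the bijection explicitly in both directions and then check that it is well defined on $\cong$-equivalence classes, leaning on the two structural features of $\bf{Prob}$ recalled earlier: it is \emph{causal} (the monoidal unit $1$ is terminal, so $\tt{del}$ behaves naturally) and it has \emph{conditionals} (Proposition \ref{prop-conditionals}).

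First, from lenses to combs. Given a representative lens $[f, v] : (X, S) \to (Y, R)$ with memory $M$, get $f : X \to M \otimes Y$ and put $v : M \otimes R \to S$, I would define
\[ c_{[f,v]} = \big(X \otimes R \xto{f \otimes \tt{id}_R} M \otimes Y \otimes R \xto{\cong} Y \otimes M \otimes R \xto{\tt{id}_Y \otimes v} Y \otimes S\big), \]
that is, the comb read off diagram \ref{diagram-optics} by threading the memory wire $M$ from the get to the put. To see it lands among combs, discard $S$: since $1$ is terminal, $\tt{del}_S \circ v = \tt{del}_{M \otimes R}$, whence $\tt{del}_S \circ c_{[f,v]} = c' \circ (\tt{id}_X \otimes \tt{del}_R)$ with $c' = (\tt{del}_M \otimes \tt{id}_Y)\circ f$, which is exactly equation \ref{eq-comb}. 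Independence of the chosen representative follows by sliding the witness $h : M \to M'$ of \ref{eq-diagram-lens} along the threaded $M$-wire, so the assignment descends to $\cong$-classes.

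Second, from combs to lenses. Given a comb $c : X \otimes R \to Y \otimes S$ with witness $c' : X \to Y$, I would take memory $M = X \otimes Y$ and set
\[ f = \big(X \xto{\tt{copy}_X} X \otimes X \xto{\tt{id}_X \otimes c'} X \otimes Y \xto{\tt{id}_X \otimes \tt{copy}_Y} X \otimes Y \otimes Y\big), \]
which stores the input and the sampled output, while taking $v : M \otimes R \to S$ to be the conditional $c\vert_Y$ of $S$ given $Y$ supplied by Proposition \ref{prop-conditionals}, precomposed with the reorganisation $(X \otimes Y) \otimes R \cong (X \otimes R) \otimes Y$. Because the conditional factorisation of $c$ reads ``sample $Y$ from the $Y$-marginal, then sample $S$ from $c\vert_Y$'', and because the comb condition forces that $Y$-marginal to equal $c'$ independently of $R$, the comb rebuilt from this lens equals $c$ on the nose; so comb $\to$ lens $\to$ comb is the identity.

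Finally, I would check that lens $\to$ comb $\to$ lens is the identity on $\cong$-classes. Applying the comb-to-lens construction to $c_{[f,v]}$ yields the canonical lens with memory $X \otimes Y$, and I would exhibit the mediating morphism $M \to X \otimes Y$ (copying $X$ through $f$ and reading off $Y$) satisfying \ref{eq-diagram-lens}, so the two lenses are $\cong$-equivalent. The main obstacle is precisely this round trip: an arbitrary lens may carry a large, highly correlated memory $M$, and one must show it is interchangeable with the canonical memory $X \otimes Y$ using only copy, discard and the conditional — this is exactly where the equivalence relation on lenses (the coend quotient of \cite{riley2018}) and the existence of conditionals in $\bf{Prob}$ are indispensable. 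By contrast, the verification that $c\vert_Y$ together with $c'$ genuinely reassembles $c$ is a direct application of Proposition \ref{prop-conditionals}, so the real content lies in the coherence of the memory wire under $\cong$.
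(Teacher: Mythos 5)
Your two translations are the same as the paper's: lens to comb by threading the memory wire, comb to lens with canonical memory $X \otimes Y$ built from $\tt{copy}$, the comb witness $c'$ of Definition \ref{def-combs}, and the conditional supplied by Proposition \ref{prop-conditionals}. The paper's proof is terser — it just exhibits the translation diagrammatically and delegates the coherence issue (non-uniqueness of conditionals) to the fact that $\bf{Prob}$ is \emph{productive}, citing the more general result in \cite{dilavore2022} — so the verifications you spell out (the comb condition \ref{eq-comb} via terminality of the unit, well-definedness under a single sliding $h$, and the round trip comb $\to$ lens $\to$ comb via the conditional decomposition) are correct and are exactly what the paper leaves implicit.

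One concrete step in your final round trip would fail as written: you propose a mediating morphism $h : M \to X \otimes Y$ ``copying $X$ through $f$ and reading off $Y$''. No such morphism exists in general, because $X$ is not recoverable from the memory: take $M = 1$ with get $f : X \to Y$ and put $v : R \to S$; then $h : 1 \to X \otimes Y$ is a constant state, so $(h \otimes \tt{id}_Y) \circ f$ makes the $X$-component of the memory independent of the input and cannot equal the canonical get, which copies $x$. The witness must go the other way: take $h = f\vert_Y : X \otimes Y \to M$, the conditional of the forward part $f : X \to M \otimes Y$ given $Y$ (again Proposition \ref{prop-conditionals}); the conditional decomposition of $f$, together with the identification of its $Y$-marginal with $c'$, gives $(h \otimes \tt{id}_Y) \circ f' = f$, and $v \circ (h \otimes \tt{id}_R)$ is then \emph{a} conditional of the comb. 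But it need not be the conditional you fixed when building the canonical lens — two conditionals can disagree outside the support of the relevant marginal — so a single witness for relation \ref{eq-diagram-lens} does not close the loop; one must show that the generated equivalence (the coend quotient of \cite{riley2018}) identifies lenses whose puts differ off-support. That is precisely the productivity of $\bf{Prob}$ that the paper invokes via \cite{dilavore2022}; your closing remark gestures at the right place, but the argument needs this correction of direction and the off-support identification made explicit.
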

\begin{proof}
    The translation works as follows:
    \begin{equation}
        \tikzfig{figures/lenses-combs}
    \end{equation}
    where $c'$ is the morphism defined in \ref{def-combs} and
    $c\vert_{X \otimes Y}$ is the conditional defined by Proposition
    \ref{prop-conditionals}. Note that $c\vert_{X \otimes Y}$ is not unique in general. However any such choice yields equivalent lenses since the category
    $\bf{Prob}$ is \emph{productive}, see \cite[Theorem 7.2]{dilavore2022} for a slightly more general result. Indeed, this proposition is the base case for the inductive proof of \cite[Theorem 7.2]{dilavore2022}.
\end{proof}

Why should we use lenses $(X, S) \to (Y, R)$ instead of combs?
The difference between them is in the composition. Indeed, composing lenses
corresponds to nesting combs as in the following diagram:
\begin{equation*}
    \tikzfig{figures/composing-combs}
\end{equation*}
As we will see, the composition of lenses allows to define a notion of
feedback for probabilistic systems which correctly captures their dynamics.

\subsection{Utility functions}

We analyse a special type of composition in $\bf{Lens}(\bf{Prob})$:
between a lens and its environment, also called \emph{context} in the open games
literature \cite{bolt2019}. This is a comb in $\bf{Lens}(\bf{C})$ that first
produces an initial state in $X$, then receives a move in $Y$ and produces a
\emph{utility} or reward in $R$.

\begin{definition}[Context]\label{def-context-lenses}
    The context for lenses of type $(X, S) \to (Y, R)$ over $\bf{C}$ is a comb
    $c$ in $\bf{Lens}(\bf{C})$ of the following shape:
    \begin{equation}
        \tikzfig{figures/lenses-context}
    \end{equation}
\end{definition}

\begin{proposition}\label{prop-contexts-factor}
    Contexts $C$ for lenses of type $(X, S) \to (Y, R)$ in $\bf{Prob}$ are in
    one-to-one correspondence with pairs $[p, k]$ where $p \in \cal{D}(X)$ is a
    distribution over observations called \emph{prior} and
    $k: X \times Y \to \cal{D}(R)$ is a channel called \emph{utility function}.
\end{proposition}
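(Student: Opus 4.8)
The plan is to unfold the definition of a context as a comb in $\bf{Lens}(\bf{Prob})$ and then reduce it, via the lens--comb correspondence of Proposition \ref{prop-lenses-combs} together with the disintegration machinery of $\bf{Prob}$, to a pair consisting of a prior and a utility channel. First I would spell out the forward and backward components of the context $C$. By Definition \ref{def-context-lenses}, $C$ is a comb in $\bf{Lens}(\bf{Prob})$ whose forward pass first emits an observation in $X$ and later, after receiving a move in $Y$, emits a reward in $R$, while its backward pass threads the coutility $S$. Applying Proposition \ref{prop-lenses-combs} to present the ambient lenses as combs in $\bf{Prob}$, the whole context becomes a single stochastic map subject to the comb causality condition of Definition \ref{def-combs}: the dependence of the outputs on the incoming coutility is funnelled only through the designated backward wire.

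Next I would extract the two pieces. The opening of the comb produces a state $1 \to \cal{D}(X)$, which is by definition a distribution $p \in \cal{D}(X)$, namely the prior. To obtain the utility function I would use the conditionals of Proposition \ref{prop-conditionals}, together with disintegration (Proposition \ref{prop-disintegration}), to condition the reward-producing part of the comb on both the observation $X$ and the received move $Y$; this yields a channel $k : X \times Y \to \cal{D}(R)$. Conversely, given any pair $[p, k]$, I would reassemble a context by placing $p$ at the opening and $k$ at the closing of the comb, and then verify that the resulting diagram satisfies the comb equation \ref{eq-comb}. These two assignments $C \mapsto [p,k]$ and $[p,k] \mapsto C$ are the candidate inverse bijections.

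The heart of the argument --- and the step I expect to be the main obstacle --- is showing that this assignment is a genuine bijection once everything is taken modulo the lens equivalence of \ref{eq-diagram-lens}. Concretely, I must check that the $S$-directed coutility wires of a context carry no information beyond what $p$ and $k$ already determine, so that the backward pass is forced to be the canonical one and two contexts sharing the same $(p,k)$ are equivalent as combs-of-lenses, with no context data dropped in either direction. This is precisely where I would invoke that $\bf{Prob}$ is \emph{productive}, exactly as in the base case of Proposition \ref{prop-lenses-combs}: productivity guarantees that the conditional $k$ is unique up to equivalence, so disintegration produces a well-defined representative and the factorisation $C \cong [p,k]$ is canonical. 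With uniqueness of the conditional in hand, mutual inverseness of the two assignments follows by routine diagrammatic bookkeeping, completing the correspondence.
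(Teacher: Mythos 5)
Your proposal is correct and lands on the same pair $(p,k)$ by the same two essential ingredients as the paper --- disintegration/conditionals in $\bf{Prob}$ and the coend quotient on lenses --- but the packaging differs slightly. The paper first uses terminality of the monoidal unit in $\bf{Prob}$ to collapse the context into a pair $[p', k']$ with an \emph{explicit} memory object $M$ (a joint state $p' \in \cal{D}(M \times X)$ together with $k' : M \times Y \to \cal{D}(R)$), then disintegrates $p'$ into a prior $p \in \cal{D}(X)$ and a channel $c : X \to \cal{D}(M)$, which is slid across the lens equivalence of \ref{eq-diagram-lens} into the backward part to produce $k$. You instead flatten the whole context to a single comb in $\bf{Prob}$ via Proposition \ref{prop-lenses-combs} and condition on $X \times Y$ in one stroke via Proposition \ref{prop-conditionals}; this avoids the memory bookkeeping at the cost of invoking the heavier lens--comb correspondence. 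One attribution to fix: the triviality of the $S$-wire is \emph{not} where productivity enters --- it follows already from terminality of the unit in $\bf{Prob}$ (every morphism into $1$ is the discard, which is exactly the one-line observation opening the paper's proof); productivity is only needed, as in Proposition \ref{prop-lenses-combs}, for uniqueness of the conditional up to lens equivalence. On that last point you are in fact slightly more careful than the paper, which only exhibits the factorisation $C \cong [p,k]$ and glosses the injectivity half of the stated one-to-one correspondence.
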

\begin{proof}
    Since the unit of the tensor is terminal in $\bf{Prob}$,
    a context $C$ of the type above is given by a pair of morphisms,
    $[p', k']$ as in the following diagram:
    \begin{equation}
        \tikzfig{figures/lenses-prob-context}
    \end{equation}
    Moreover, by the disintegration theorem, the joint distribution $p'$ above
    may be factored into a prior distribution $p$ over $X$ and a channel
    $c: X \to \cal{D}(M)$. Therefore the context above is
    equivalent to to a context induced by a distribution over starting state
    $p \in \cal{D}(X)$ and a utility function $k: X \times Y \to R$
    as in the following diagram:
    \begin{equation}
        \tikzfig{figures/lenses-prob-context-2}
    \end{equation}
\end{proof}

As we will see in Section \ref{section-games}, this notion of utility captures
the notion from game theory where utility functions assign a reward in $R$
given the move of the player in $Y$ and the moves of the other players, see
also \cite{bolt2019}.

\subsection{Markov rewards}

Another interesting form of composition in $\bf{Lens}(\bf{Prob})$ arises when
considering the notion of a Markov reward process (MRP) \cite{howard1960}.
MRPs are dynamic probabilistic systems which transit through a state space $X$ while
collecting rewards in $\bb{R}$.

\begin{definition}[MRP]\label{def:MRP}
    A Markov reward process with state space $X$ is given by the following data:
    \begin{enumerate}
        \item a transition function $T: X \to \cal{D}(X)$,
        \item a payoff function $R: X \to \bb{R}$,
        \item and a discount factor $\gamma \in [0, 1)$.
    \end{enumerate}
\end{definition}

This data defines lens $[T, R_\gamma]: (X, \bb{R}) \to (X, \bb{R})$ in $\bf{Prob}$
drawn as follows:
\begin{equation}
    \tikzfig{figures/markov-reward}
\end{equation}
where $R_\gamma: X \times \bb{R} \to \cal{D}(\bb{R})$ is given by the one-step
discounted payoff:
$$ R_\gamma(x, r) = R(x) + \gamma r$$
Intuitively, the MRP observes the state $x \in X$ which it is in and collects
a reward $R(x)$, then uses the transition $T$ to move to the next state. Given
an expected future reward $r$, the MRP computes the current value given
by summing the current reward with the expected future reward discounted by
$\gamma$. This process is called \emph{Markov}, because the state at time step
$t + 1$ only depends on the state a time step $t$, i.e. $x_{t+1}$ is sampled
from the distribution $T(x_t)$.

Thus an MRP is an endomorphism $[T, R_\gamma]: (X, \bb{R}) \to (X, \bb{R})$
in the category of stochastic lenses.
We can compose $[T, R_\gamma]$ with itself $n$ times to
get a new lens $[T, R_\gamma]^n$ where the forward part is given by iteration
of the transition function and the backward part is given by the $n$-step
discounted payoff:
$${R_\gamma^n}^-(x, r) = \sum_{i =1}^{n-1} \gamma^iR(T^i(x)) + \gamma^n r$$
Since $0\leq \gamma < 1$, this expression converges in the limit as $n \to \infty$
if $R$ and $T$ are deterministic. When $R$ and $T$ are stochastic, one can show
that the expectation $\bb{E}({R_\gamma^n}^-)$ converges to the \emph{value} of the
MRP which yields a measure of the total reward expected from this process.
\begin{equation} \label{eq-discounted-sum}
  \tt{value}([T, R_\gamma])(x) = \bb{E}(\sum_{i=1}^\infty \gamma^iR(T^i(x)))
  = \lim_{n \to \infty} \bb{E}(\sum_{i=1}^n \gamma^iR(T^i(x)))
\end{equation}
We can represent this as an effect $v = \tt{value}([T, R_\gamma]):
(X, \bb{R}) \to (1, 1)$ in $\bf{Lens}(\bf{Prob})$, which simply consists in a
probabilistic channel $v : X \to \bb{R}$.
This effect $v$ satifies the following Bellman fixed point equation in
$\bf{Lens}(\bf{Prob})$, characterising it as the iteration of $[T, R_\gamma]$,
see \cite{gouberman2014}.
$$v(x) = \bb{E}(R(x) + \gamma v(T(x)))$$
which we may express diagrammatically as:
\begin{equation}\label{bellman-expectation}
    \tikzfig{figures/bellman-expectation}
\end{equation}
where $\bb{E}$ denotes the conditional expectation operator, given for any channel
$f: X \to \cal{D}(\bb{R})$ by the function $\bb{E}(f): X \to \bb{R}$ defined
by $\bb{E}(f)(x) = \sum_{y \in \bb{R}} y f(y \vert x)$
(note that there only finitely many non-zero terms in this discrete setting).
The value $v$ of the reward process is often estimated by running Monte Carlo
methods which iterate the transition function while collecting rewards.
Note that \emph{not} all stochastic lenses $(X, \bb{R}) \to (X, \bb{R})$ have
an effect $(X, \bb{R}) \to (1, 1)$ with which they satisfy Equation \ref{bellman-expectation}.
In fact it is sufficient to set $T = \tt{id}_X$, $R(x) = 1$ for all $x \in X$ and
$\gamma > 1$ in order to get a counter-example. It would be interesting to characterise
the stochastic lenses satisfying \ref{bellman-expectation} algebraically, e.g.
are they closed under composition? This would in fact be true if the conditional
expectation operator $\bb{E}$ was functorial.
Given the results of \cite{chaput2009} and \cite{adachi2018}, we expect that conditional expectation can be made a functor by restriciting the objects of $\bf{Prob}$.

\section{Cybernetics}\label{section-agents}

Parametrization is the process of representing functions
$X \to Y$ via a parameter space $\Pi$ with a map $\Pi \to (X \to Y)$.
It is a fundamental tool in both machine learning and game theory, since it
allows to define a notion of \emph{agency}, through the choice of parameters.
For example, players in a formal game are parametrized over a set of
\emph{strategies}: there is a function $X \to Y$, turning observations into
moves, for any strategy in $\Pi$. In reinforcement learning, the agent is
parametrized by a set of \emph{policies}, describing how to turn states into
actions. We show that parametrized lenses are suitable for representing these
systems and give examples relevant for NLP.

\subsection{Parametrization}

Categories allow to distinguish between two types of parametrization. Let
$\bf{S}$ be a semantic category with a forgetful functor $\bf{S} \injects \bf{Set}$

\begin{definition}[External parametrization]
    An \emph{external} parametrization $(f, \Pi)$ of morphisms $X \to Y$ in
    a category $\bf{S}$, also called a \emph{family} of morphisms indexed by $\Pi$,
    is a function $f : \Pi \to \bf{S}(X, Y)$. These form a category
    $\bf{Fam}(\bf{S})$  with composition defined for $f : \Pi_0 \to \bf{S}(X, Y)$
    and $g : \Pi_1 \to \bf{S}(Y, Z)$ by $g \circ f : \Pi_0 \times \Pi_1 \to \bf{S}(X, Z)$
    with $g \circ f(\pi_0, \pi_1) = f(\pi_0) \circ g(\pi_1)$.
\end{definition}

\begin{definition}[Internal parametrization]
    An \emph{internal} parametrization $(f, \Pi)$ of morphisms $X \to Y$ in a
    monoidal category $\bf{S}$ is a morphism $f : \Pi \otimes X \to Y$ in $\bf{S}$.
    These form a category $\bf{Para}(\bf{S})$ with composition defined for
    $f : \Pi_0 \otimes X \to Y$ and $g : \Pi_1 \otimes Y \to Z$ by
    $g \circ f : \Pi_0 \otimes \Pi_1 \otimes X \to Z$ with
    $g \circ f = f(\pi_0) \circ g(\pi_1)$.
\end{definition}

Which parametrization should we prefer, $\bf{Fam}$ or $\bf{Para}$?
It depends on context. Internal parametrization
is usually a stricter notion, because it imposes that the parametrization be a
morphism of $\bf{S}$ and not simply a function.
For example, $\bf{Para}(\bf{Smooth})$ embeds in
$\bf{Fam}(\bf{Smooth})$ but the embedding is not full, i.e. there are external
parametrizations defined by non-differentiable functions.
In fact, the $\bf{Para}$ construction was introduced in the context of
gradient-based learning \cite{cruttwell2021}, where it is very desirable that
the parametrization be differentiable.
External parametrizations are mostly used in the compositional game theory
literature \cite{GhaniHedges18}, since they are more flexible and allow to define
a notion of \emph{best response} (see \ref{def-best-response}).
However they are also inextricably linked to $\bf{Set}$, making them less
desirable from a categorical perspective, since it is harder to prove results
about $\bf{Fam}(\bf{S})$ given knowledge of $\bf{S}$.

Note that the two notions coincide for the category of sets and functions.
Indeed, since $\bf{Set}$ is cartesian closed, we have that:
$$ \Pi \to \bf{Set}(X, Y) \iff  \Pi \to (X \to Y) \iff \Pi \times X \to Y$$
and therefore $\bf{Para}(\bf{Set}) \simeq \bf{Fam}(\bf{Set})$.
Even though $\bf{Prob}$ is not cartesian closed, these notions again coincide.

\begin{proposition}
    Internal and external parametrizations coincide in
    $\bf{Prob}$, i.e. $\bf{Fam}(\bf{Prob}) \simeq \bf{Para}(\bf{Prob})$.
\end{proposition}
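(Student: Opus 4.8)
The plan is to observe that, over $\bf{Prob}$, both constructions reduce to the very same set-theoretic datum, and that the only currying involved is the honest cartesian-closed currying of $\bf{Set}$ rather than a (nonexistent) closed structure on $\bf{Prob}$ itself. Concretely, I would produce an identity-on-objects isomorphism of categories $\bf{Para}(\bf{Prob}) \cong \bf{Fam}(\bf{Prob})$ by matching morphisms with a common parameter object $\Pi$ and checking that identities and composition are preserved.

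First I would unwind the definitions. Since $\bf{Prob} = \bf{Kl}(\cal{D})$, the hom-set $\bf{Prob}(X, Y)$ is by definition the set $\bf{Set}(X, \cal{D}(Y))$ of Kleisli maps, and the monoidal product of $\bf{Prob}$ is the cartesian product of the underlying sets, with unit the singleton $1$. Hence an external parametrization $f : \Pi \to \bf{Prob}(X, Y)$ is exactly an element of $\bf{Set}(\Pi, \bf{Set}(X, \cal{D}(Y)))$, whereas an internal parametrization $g : \Pi \otimes X \to Y$ is a $\bf{Prob}$-morphism, i.e. an element of $\bf{Set}(\Pi \times X, \cal{D}(Y))$. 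Applying the currying bijection of $\bf{Set}$, $\bf{Set}(\Pi \times X, \cal{D}(Y)) \cong \bf{Set}(\Pi, \bf{Set}(X, \cal{D}(Y)))$, gives a bijection between the two classes of morphisms that keeps the parameter object $\Pi$ fixed. The point to stress is that the exponential being used is over the deterministic inputs $\Pi$ and $X$ only; at no stage is $\cal{D}(Y)$ curried, so no closed structure on $\bf{Prob}$ is required.

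Next I would verify functoriality of this bijection. For identities, the $\bf{Para}$-identity on $X$ is $\tt{id}_X : 1 \otimes X \to X$, which as a channel is $x \mapsto \delta_x$; currying over $X$ yields the constant map $1 \to \bf{Prob}(X, X)$ selecting $\tt{id}_X$, which is exactly the $\bf{Fam}$-identity. For composition, given $\hat f : \Pi_0 \times X \to \cal{D}(Y)$ and $\hat g : \Pi_1 \times Y \to \cal{D}(Z)$, I would write out the $\bf{Para}$-composite $\Pi_0 \otimes \Pi_1 \otimes X \to Z$ and use the explicit Kleisli composition $\cdot$ of $\bf{Prob}$ to obtain $(g \circ f)(z \mid \pi_0, \pi_1, x) = \sum_y \hat g(z \mid \pi_1, y)\, \hat f(y \mid \pi_0, x)$. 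Currying over $X$ for fixed $(\pi_0,\pi_1)$ returns precisely $f(\pi_0) \cdot g(\pi_1)$, which is the $\bf{Fam}$-composition formula from the definition. Since both composites agree on the nose, the bijection assembles into an isomorphism of categories.

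The main obstacle is bookkeeping rather than conceptual: I must confirm that the monoidal product $\otimes$ on parameters in $\bf{Para}(\bf{Prob})$ and the cartesian product $\times$ on parameters in $\bf{Fam}(\bf{Prob})$ are identified correctly, and that the symmetry (swap) used to define $\bf{Para}$-composition — needed to bring $X$ adjacent to $\Pi_0$ past the intervening $\Pi_1$ — is absorbed by the reindexing of the finite sum above. Because the tensor of $\bf{Prob}$ is the symmetric cartesian product, this swap is merely a reordering of tuple coordinates and leaves the composite channel unchanged, so the two composition formulas coincide. I would also remark that this argument is special to $\bf{Prob}$ (and $\bf{Set}$): it works because the monoidal product is cartesian on objects and the Kleisli hom-sets are literally $\bf{Set}$-exponentials into $\cal{D}(Y)$, which is exactly what fails for a general monoidal $\bf{S}$ where $\bf{Para}(\bf{S})$ embeds into $\bf{Fam}(\bf{S})$ only as a non-full subcategory.
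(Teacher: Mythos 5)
Your proposal is correct and follows essentially the same route as the paper: the paper's proof is precisely the currying chain $\Pi \to \bf{Prob}(X, Y) \iff \Pi \to (X \to \cal{D}(Y)) \iff \Pi \times X \to \cal{D}(Y)$ in $\bf{Set}$, exploiting $\bf{Prob} = \bf{Kl}(\cal{D})$. Your additional verification that the bijection respects identities and the two composition formulas (absorbing the symmetry via reindexing the finite sum) is a worthwhile elaboration that the paper leaves implicit, but it does not constitute a different approach.
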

\begin{proof}
    This follows by the following derivation in $\bf{Set}$.
    $$ \Pi \to \bf{Prob}(X, Y) \iff  \Pi \to (X \to \cal{D}(Y)) \iff
    \Pi \times X \to \cal{D}(Y)$$
\end{proof}

In fact, the derivation above holds in any Kleisli category for a commutative
strong monad.

\begin{proposition}
    For any commutative strong monad $M: \bf{Set} \to \bf{Set}$, the Kleisli
    category $\bf{Kl}(M)$ is monoidal and we have:
    $$\bf{Fam}(\bf{Kl}(M)) \simeq \bf{Para}(\bf{Kl}(M))$$
    i.e. the notions of internal and external parametrization for Kleisli
    categories coincide.
\end{proposition}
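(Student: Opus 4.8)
The plan is to generalise verbatim the three-step derivation that the paper uses for $\bf{Prob}$, isolating exactly where the commutative strong monad structure is needed. The backbone is the chain of bijections
$$\Pi \to \bf{Kl}(M)(X, Y) \;=\; \bf{Set}(\Pi, \bf{Set}(X, M Y)) \;\cong\; \bf{Set}(\Pi \times X, M Y) \;=\; \bf{Kl}(M)(\Pi \otimes X, Y),$$
where the first and last equalities are the definition of Kleisli hom-sets, the middle isomorphism is the currying adjunction of the cartesian closed category $\bf{Set}$, and the final step uses that the monoidal tensor $\otimes$ of $\bf{Kl}(M)$ acts on objects as the cartesian product of $\bf{Set}$. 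Since both $\bf{Fam}(\bf{Kl}(M))$ and $\bf{Para}(\bf{Kl}(M))$ have sets as objects, this already yields an identity-on-objects bijection on the underlying data of morphisms; what remains is to promote it to a functor and to check that it respects composition.

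First I would establish that $\bf{Kl}(M)$ is (symmetric) monoidal. This is a standard result, originally due to Kock: a commutative strong monad on a cartesian category induces a monoidal structure on its Kleisli category. Concretely, the tensor of $f : X \to M Y$ and $g : X' \to M Y'$ is the composite $X \times X' \xto{f \times g} M Y \times M Y' \xto{\mathrm{dst}} M(Y \times Y')$, where $\mathrm{dst}$ is the double strength built from the left and right strengths of $M$; commutativity is precisely the condition guaranteeing that the two evident orders of strengthening agree, so that $\mathrm{dst}$ — and hence the tensor on morphisms — is well defined. The monoidal unit is the terminal object $1$, and the associators, unitors and symmetry are lifted from $\bf{Set}$ along the identity-on-objects functor $\bf{Set} \to \bf{Kl}(M)$. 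I would cite this rather than reprove it.

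The substantive step is checking that the currying bijection above is compatible with the two composition laws. Unfolding the definitions, composition in $\bf{Fam}$ of $f : \Pi_0 \to \bf{Kl}(M)(X, Y)$ with $g : \Pi_1 \to \bf{Kl}(M)(Y, Z)$ is computed pointwise over $\Pi_0 \times \Pi_1$ by Kleisli composition in $\bf{S} = \bf{Kl}(M)$, whereas composition in $\bf{Para}$ of the curried maps $\hat f : \Pi_0 \otimes X \to Y$ and $\hat g : \Pi_1 \otimes Y \to Z$ is the monoidal composite that reindexes the parameters and applies $\hat f$ then $\hat g$ using the tensor of $\bf{Kl}(M)$. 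I would verify that currying sends the former to the latter by expanding both sides as morphisms $\Pi_0 \times \Pi_1 \times X \to M Z$ in $\bf{Set}$ and reducing the claim to the naturality of the strength together with the interaction of the monad multiplication $\mu$ with the double strength $\mathrm{dst}$. These are exactly the coherence equations packaged into Kock's monoidal structure, so the check is forced once that structure is in place.

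I expect this final compatibility verification to be the only real obstacle: the object-level and morphism-level bijections are immediate from cartesian closedness of $\bf{Set}$, but showing that the bijection is functorial — that Kleisli composition performed pointwise in the parameter matches $\bf{Para}$ composition after currying — is where the strength and commutativity of $M$ are genuinely used. Once that is done, the functor is bijective on objects and fully faithful, giving not merely an equivalence but an isomorphism $\bf{Fam}(\bf{Kl}(M)) \cong \bf{Para}(\bf{Kl}(M))$, which in particular recovers the stated $\bf{Fam}(\bf{Prob}) \simeq \bf{Para}(\bf{Prob})$ by taking $M = \cal{D}$.
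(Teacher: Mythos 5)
Your proof is correct and takes essentially the same route as the paper, whose entire argument for this proposition is the remark that the currying derivation $\Pi \to \bf{Kl}(M)(X,Y) \iff \Pi \times X \to M(Y)$ given for $\bf{Prob}$ generalises; you simply make explicit the two ingredients the paper leaves implicit, namely Kock's monoidal structure on $\bf{Kl}(M)$ and the compatibility of currying with the two composition laws. One small refinement: in that compatibility check the relevant tensor is $\tt{id}_{\Pi_1} \otimes \hat{f}$, whose identity factor makes the double strength collapse to a single strength, so commutativity is not genuinely used in the computation itself --- it is needed only as a precondition, to make $\bf{Kl}(M)$ monoidal (rather than merely premonoidal in the sense of Power--Robinson) so that $\bf{Para}(\bf{Kl}(M))$ is well defined.
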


\subsection{Open games}\label{section-games}

We now review the theory of open games \cite{GhaniHedges18, bolt2019}.
Starting from the definition of lenses, open games are obtained in two steps.
First an open game is a family of lenses parametrized by a set of \emph{strategies}:
for each strategy the forward part of the lens says how the agent turns observations
into moves and the backward part says how it computes a payoff given the outcome
of its actions. Second, this family of lenses is equipped with a \emph{best response}
function indicating the optimal strategies for the agent in a given context.

\begin{definition}[Family of lenses]
    A family of lenses over a symmetric monoidal category $\bf{C}$ is a morphism
    in $\bf{Fam}(\bf{Lens}(\bf{C}))$. Explicitly, a family of lenses
    $P: (X, S) \to (Y, R)$ is given by a function
    $P: \Pi \to \bf{Lens}(\bf{C})((X, S), (Y, R))$ for some set of parameters
    $\Pi$.
\end{definition}

Recall the definition of \emph{context} for lenses given in \ref{def-context-lenses}.
Let us use the notation $\bb{C}((X, S), (Y, R))$ for the set of contexts of lenses
$(X, S) \to (Y, R)$.

\begin{definition}[Best response] \label{def-best-response}
     A best response function for a family of lenses $P: (X, S) \to (Y, R)$ is
     a function of the following type:
     $$B : \bb{C}((X, S), (Y, R)) \to \cal{P}(P)$$
     Thus $B$ takes as input a context and outputs a predicate
     on the set of parameters (or strategies).
\end{definition}

\begin{definition}[Open game]\cite{bolt2019}
    An open game $\G: (X, R) \to (Y, O)$ over a symmetric monoidal category
    $\bf{C}$ is a family of lenses $\set{[\pi, v_\pi]}_{\pi \in P_\G}$ in $\bf{C}$
    equipped with a best response function $B_\G$.
\end{definition}
\begin{proposition}\cite{bolt2019}
    Open games over $\bf{C}$ form a symmetric monoidal category denoted
    $\bf{Game}(\bf{C})$.
\end{proposition}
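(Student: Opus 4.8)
The plan is to build the symmetric monoidal structure on $\bf{Game}(\bf{C})$ in layers, so that almost all of the coherence is inherited from structures established earlier. First I would recall that $\bf{Lens}(\bf{C})$ is a symmetric monoidal category (Riley \cite{riley2018a}), and that the external parametrization construction $\bf{Fam}(-)$ carries a symmetric monoidal category to another one: objects are unchanged, a morphism $(X,S)\to(Y,R)$ is a function $P \to \bf{Lens}(\bf{C})((X,S),(Y,R))$, sequential composition multiplies parameter sets and composes the selected lenses pointwise as in the definition of $\bf{Fam}$, and the tensor is defined analogously on parameters and lenses. Thus \emph{families of lenses} already form a symmetric monoidal category $\bf{Fam}(\bf{Lens}(\bf{C}))$, and the associativity, unit and interchange laws for the underlying families, together with the symmetry and its coherence, follow immediately. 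Since an open game is a family of lenses equipped with a best response function, the only genuine work is to equip the identities, composites, tensors and symmetries with best response functions and to check that the resulting assignments are themselves associative, unital and coherent.

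Second I would define the best response functions. For an identity game I take the one-element parameter set together with the best response function returning the whole (singleton) strategy set on every context; the symmetry game is the symmetry lens of $\bf{Lens}(\bf{C})$, again parametrized trivially with the total best response. The essential device for composition is that a context $C \in \bb{C}((X,R),(Z,Q))$ for a composite $\G' \circ \G$, together with a choice of strategy for one factor, determines a context for the other: plugging the selected lens $\G'(\sigma)$ into $C$ (using comb nesting and the teleological cups of \ref{eq-diagram-cup}) yields a context in $\bb{C}((X,R),(Y,O))$ for $\G$, and symmetrically a choice $\pi$ for $\G$ yields a context for $\G'$. I then declare $(\pi,\sigma) \in B_{\G' \circ \G}(C)$ iff $\pi$ is a best response for $\G$ in the context induced by $(\sigma, C)$ \emph{and} $\sigma$ is a best response for $\G'$ in the context induced by $(\pi, C)$; this is the mutual best response (Nash) condition. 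The tensor best response is defined analogously, a strategy for $\G \otimes \G'$ being a best response iff each component is a best response in the context obtained by marginalizing out the other, again using the monoidal and comb structure to form the two sub-contexts.

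Third I would verify the categorical and monoidal axioms for these best response functions. The family-of-lenses layer is routine, so attention focuses on $B$. Associativity requires that the nested best response conditions for $(\G'' \circ \G') \circ \G$ and $\G'' \circ (\G' \circ \G)$ coincide, which reduces to compatibility of the context-formation operation with composition; unitality is the statement that plugging an identity lens into a context leaves the induced context unchanged; and the symmetric monoidal coherence reduces to compatibilities between the tensor and the sub-context construction.

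The hard part will be exactly this last set of compatibility checks for the best response functions: unlike the family-of-lenses layer, they are not formal consequences of an existing symmetric monoidal category but must be verified by manipulating the comb and teleological calculus of $\bf{Lens}(\bf{C})$. The crucial lemmas are that comb nesting is associative and that the cups of \ref{eq-diagram-cup} interact coherently with both composition and tensor, so that the two ways of forming a sub-context for any factor always agree. Once these diagrammatic identities are in place, associativity, unitality and monoidal coherence of $B$ follow, and together with the inherited structure on $\bf{Fam}(\bf{Lens}(\bf{C}))$ they exhibit $\bf{Game}(\bf{C})$ as a symmetric monoidal category.
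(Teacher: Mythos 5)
Your proposal is correct and takes essentially the same route as the paper, which itself defers the proof to \cite{bolt2019}: inherit the symmetric monoidal structure from (families of) lenses, then compose best responses by the mutual best response condition, with sub-contexts formed by plugging one factor's chosen lens into the ambient context via the comb/teleological calculus. The only remark worth making is that your explicit strategy-dependent sub-contexts are the correct reading of the paper's looser product formula $B_{\G \cdot \cal{H}}([p, k]) = B_{\G}([p, \cal{H}\cdot k]) \times B_{\cal{H}}([p \cdot \G, k])$, whose notation suppresses the dependence of each factor's context on the other factor's strategy.
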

\begin{proof}
    See \cite{bolt2019} for details on the proof and the composition of best
    responses in this general case.
\end{proof}

The category $\bf{Game}(\bf{C})$ admits a graphical calculus developed
in~\cite{hedges2017} which is the same as the one for lenses described in
Section \ref{section-lenses}.. Each morphism is represented as a box with
covariant wires for observations in $X$ and moves in $Y$, and contravariant
wires for rewards in $R$ (called utilities in \cite{GhaniHedges18})
and satisfaction in $S$ (called coutilities in \cite{GhaniHedges18}).
\begin{equation}
    \tikzfig{figures/generic-og}
\end{equation}
These boxes can be composed in parallel,
indicating that the players make a move simultaneously, or in sequence,
indicating that the second player can observe the first player's move.
\begin{equation}
    \tikzfig{figures/compose-og}
\end{equation}

Of particular interest to us is the category of open games over $\bf{Prob}$. Since
we focus on this category for the rest of the chapter we will use the
notation $ \bf{Game} := \bf{Game}(\bf{Prob})$.
In this category the best response function is easier to specify, since contexts
factor as in Proposition \ref{prop-contexts-factor}.

\begin{proposition}
    A context for an open game in $\bf{Game}(\bf{Prob})$
    $(X, S) \to (Y, R)$ is given by a stochastic lens $[p, k]: (1, X) \to (Y, R)$
    or explicitly by a pair of channels $p: 1 \to \cal{D}(M \times X)$ and
    $k: M \times Y \to \cal{D}(R)$.
\end{proposition}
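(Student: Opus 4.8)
The plan is to unfold the notion of context entirely into concrete data in $\bf{Prob}$, reusing the argument already carried out for lenses in Proposition \ref{prop-contexts-factor} but retaining the memory object $M$ rather than disintegrating it away. First I would observe that the context of an open game does not see its parametrization or best-response structure: by the discussion preceding Definition \ref{def-best-response}, the set of contexts of an open game $(X, S) \to (Y, R)$ is by definition $\bb{C}((X, S), (Y, R))$, the set of contexts of the underlying family of lenses. Hence, by Definition \ref{def-context-lenses}, such a context is a comb in $\bf{Lens}(\bf{Prob})$ of the shape displayed there, wrapping the open game in its single hole.

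Next I would specialize this comb to $\bf{Prob}$. The opening move is identical to the proof of Proposition \ref{prop-contexts-factor}: since the monoidal unit $1$ is terminal in $\bf{Prob}$, the backward wire carrying the coutility (satisfaction) $S$ from the game back into the context carries no information and is discarded. What survives is a forward part producing the observation on the covariant boundary, together with a backward part that turns the game's move into a utility. Writing the forward part as a state with an explicit memory object $M$ gives $p : 1 \to \cal{D}(M \times X)$, and the backward part is the continuation $k : M \times Y \to \cal{D}(R)$.

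The final step is to recognize the pair $(p, k)$ as the data of a stochastic lens. By the comb--lens correspondence of Proposition \ref{prop-lenses-combs}, a comb in $\bf{Prob}$ whose observation output is emitted before any dependence on the incoming move, and whose utility output depends on that move, is exactly a lens $[p, k] : (1, X) \to (Y, R)$ with memory $M$; the causality condition of Definition \ref{def-combs} is precisely what guarantees that $X$ is produced first and $R$ only afterwards, licensing the split into $p$ and $k$. Conversely, any pair $(p, k)$ of the stated types reassembles into such a comb, so the correspondence is a bijection. The difference from Proposition \ref{prop-contexts-factor} is only cosmetic: there one additionally applies the disintegration of Proposition \ref{prop-disintegration} to factor $p$ into a prior over $X$ and a channel into $M$, eliminating the memory, whereas here we keep $M$ so as to read $(p, k)$ directly as a lens.

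I expect the main obstacle to be bookkeeping of the variances rather than any deep argument. One must track carefully which of the four boundary wires $X, S, Y, R$ are covariant and which are contravariant, check that it is the $S$-wire (and not, say, the $X$-wire) that terminality of $1$ annihilates, and confirm that the comb causality condition yields exactly the ``observe, then reward'' temporal structure. Reconciling the wire ordering implicit in the notation $[p, k] : (1, X) \to (Y, R)$ with the explicit channel types $p : 1 \to \cal{D}(M \times X)$ and $k : M \times Y \to \cal{D}(R)$ is the one point requiring genuine care.
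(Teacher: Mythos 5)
Your proposal is correct and follows the paper's own route: the paper's proof is the one-line citation ``This follows from Proposition \ref{prop-contexts-factor}'', and your unfolding reconstructs exactly the first half of that proposition's argument --- terminality of the monoidal unit in $\bf{Prob}$ kills the coutility wire, yielding the pair $(p, k)$ --- while correctly stopping short of the disintegration step that eliminates the memory object $M$. Your closing remark about reconciling the variances in the notation $[p,k] : (1, X) \to (Y, R)$ with the stated channel types is well placed, since under Definition \ref{def-combs} and Proposition \ref{prop-lenses-combs} the context comb strictly corresponds to a lens of type $(1, R) \to (X, Y)$, and the paper's typing here is indeed loose.
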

\begin{proof}
    This follows from Proposition \ref{prop-contexts-factor}.
\end{proof}

With this simpler notation for contexts as pairs $[p, k]$ we can define
the composition of best responses for open games.

\begin{definition}\cite[Definition 3.18]{bolt2019}
    Given open games $(X, S) \xto{\G} (Y, R) \xto{\cal{H}} (Z, O)$ the composition
    of their best responses is given by the cartesian product:
    $$B_{\G \cdot \cal{H}}([p, k]) = B_{\G}([p, \cal{H}\cdot k])
    \times B_{\cal{H}}([p \cdot \G,k]) \sub \Pi_\G \times \Pi_\cal{H}$$
    where $\Pi_\G$ and $\Pi_\cal{H}$ are the corresponding sets of strategies.
\end{definition}

We can now define a notion of utility-maximising agent

\begin{definition}[Utility-maximising agent]
    An (expected) utility maximising agent is a morphism $c: (X, 1) \to (Y, \bb{R})$
    in $\bf{Game}$ given by the following data:
    \begin{enumerate}
        \item $\pi \in \Pi= X \to \cal{D}(Y)$ is the set of strategies.
        \item $f_\pi^+: X \to \cal{D}(A) : x \mapsto \pi(x)$
        \item $f_\pi^-: \bb{R} \to 1$ is the discard map.
        \item $B: ([p: 1 \to M \times X,\, k: M \times Y \to \bb{R}]) =
                \tt{argmax}_{\pi \in P}(\bb{E}(p ; \tt{id}_M \otimes f_\pi ; k))$
    \end{enumerate}
    where $\tt{argmax}: \bb{R}^\Pi \to \cal{P}(\Pi)$.
\end{definition}

Note that it is sufficient to specify the input type $X$ and output type $Y$ in
order to define a utility-maximising agent $(X, 1) \to (Y, \bb{R})$ in
$\bf{Game}$.

\begin{example}[Classifier utility]\label{ex-classifier-utility}
    We model a classifier $X \to \cal{D}(Y)$ as a utility-maximising agent
    $c: (X, 1) \to (Y, \bb{R})$. Assume that $Y$ has a metric $d: Y \times Y \to \bb{R}$.
    Given a dataset of pairs, i.e. a distribution $K \in \cal{D}(X \times Y)$,
    we can define a context $[K, k]$ for $c$ by setting $k = -d$,
    as shown in the following pair of equivalent diagrams:
    \begin{equation*}
        \tikzfig{figures/classifier-utility}
    \end{equation*}
    where $(p, K\vert_X)$ is the disintegration of $K$ along $X$.
    This yields a distribution over the real numbers $\bb{R}$ corresponding to
    the utility that classifier $c$ gets in its approximation of $K\vert_X$.
    The best response function for $c$ in this context is given by:
    $$B([K, k]) = \tt{argmax}_{c}\bb{E}(- K ; \tt{id}_Y \otimes c ; d)$$
    Maximising this expectation corresponds to minimising the average distance
    between the true label $y \in Y$ and the predicted label $c(x)$ for $(x, y)$
    distributed as in the dataset $K \in \cal{D}(X \times Y)$.
\end{example}

\begin{example}[Generator/Discriminator]
    Fix a grammar $G$, a feature set $Y$ with a distance function $d: Y \times Y \to \bb{R}$
    and a disribution $p \in \cal{D}(Y)$.
    We model a generator $G: Y \to \cal{D}\cal{L}(G)$ and a discriminator
    $D: \cal{L}(G) \to \cal{D}(Y)$ as utility-maximising agents and compose
    them as in the following diagram:
    \begin{equation*}
        \tikzfig{figures/generator-discriminator}
    \end{equation*}
    Where $- : \bb{R} \to \bb{R}$ is multiplication by $-1$. This yields an
    adversarial (or zero-sum) game between the generator and the discriminator.
    Assuming $G$ and $D$ are utility-maximising agents, the best response function
    for this closed game is given by:
    $$\tt{argmax}_{G}(\bb{E}_{y \in p}(d(y, D(G(y)))) \times \tt{argmax}_{D}(- \bb{E}_{y \in p}(d(y, D(G(y)))))$$
    where we use the notation $\bb{E}_{y \in p}$ to indicate the expectation
    (over the real numbers) where $y$ is distributed according to $p \in \cal{D}(Y)$.
    Thus we can see that the game reaches an equilibrium when the discriminator
    $D$ can always invert the generator $G$, and the generator cannot choose
    any strategy to change this.
    Implementing $G$ and $D$ as neural networks yields the generative adversarial
    architecture of Goodfellow et al. \cite{goodfellow2014a} which is used to solve
    a wide range of NLP tasks, see \cite{wang2019} for a survey.
    In Section \ref{section-adversarial-qa}, we will see how this architecture
    can be used for question answering.
\end{example}

In the remainder of this chapter, we will work interchangeably in the category
of families of stochastic lenses $\bf{Fam}(\bf{Lens}(\bf{Prob}))$ or in
$\bf{Game}(\bf{Prob})$. The only difference between these categories is that
the latter has morphisms equipped with a best response function and a
predefined way of composing them. As we will see, this definition of best response
is not suited to formalising repeated games and the category of families of
stochastic lenses gives a more flexible approach.

\subsection{Markov decisions}

We now study Markov decision processes (MDP) \cite{howard1960} as parametrized
stochastic lenses. These are central in reinforcement learning, they model
a situation in which a single agent makes decisions in a stochastic environment
with the aim of maximising its expected long-term reward. An MDP is simply an
MRP \ref{def:MRP} parametrised by a set of actions.

\begin{definition}[MDP]
    A Markov decision process with states in $X$ and actions in $A$ is given
    by the following data:
    \begin{enumerate}
        \item a transition $T: A \times X \to \cal{D}(X)$
        \item a reward function $R: A \times X \to \cal{D}(\bb{R})$
        \item a discount factor $\gamma \in [0, 1)$
    \end{enumerate}
\end{definition}

A \emph{policy} is a function $\pi : X \to \cal{D}(Y)$ which represents a strategy
for the agent: it yields a distribution over the actions $\pi(x) \in \cal{D}(Y)$ for each
state $x \in X$.
Given a policy $\pi \in \Pi = X \to \cal{D}(Y)$, the MDP induces an MRP which runs
through the state space $X$ by choosing actions
according to $\pi$ and collecting rewards.
Thus, we can formalise an MDP as a family of MRPs
parametrized by the policies $\Pi$, i.e. a morphism
$P_\pi : (X, \bb{R}) \to (X, \bb{R})$ in $\bf{Fam}(\bf{Lens}(\bf{Prob}))$,
given by the following composition:
\begin{equation}
    \tikzfig{figures/markov-decision}
\end{equation}

The aim of the agent is to maximise its expected discounted reward.
Given a policy $\pi$, the MRP $P_\pi$ induces a value function
$v_\pi = \tt{value}(P_\pi) : X \to \bb{R}$, as defined in Section \ref{section-tools}.
Again, we have that the \emph{Bellman expectation equation} holds:
\begin{equation}\label{eq-bellman-expectation}
    \tikzfig{figures/bellman-equation}
\end{equation}
This fixed-point equation gives a way of approximating the value function by
iterating the transitions and rewards under a given policy.
The aim is then to find the \emph{optimal policy} $\pi^\ast$ at starting state
$x_0$ by maximising the corresponding value function $v_\pi$:
$$ \pi^\ast =  \tt{argmax}_{\pi \in P}(v_\pi(x_0)).$$
Note that the this optimal policy is not captured by the open games formalism.
If we took the MDP $P_\pi$ to be a morphism in $\bf{Game}$, i.e. if we equipped
it with a best response $B$ as defined in \ref{def-best-response}, then the
sequential composition $P_\pi \cdot P_\pi$ would have a best response function
with target $\Pi \times \Pi$, i.e. the strategy for an MDP with two stages
would be a pair of policies for the first and second stage, contradicting
the idea that the MDP tries to find the best policy to use at all stages of the
game. Working in the more general setting of parametrized stochastic lenses
we can still reason about optimal strategies, although it would be interesting
to have a compositional characterization of these.

\begin{example}[Dialogue agents]\label{ex-dialogue}
    In \cite{li2016a}, Li et al. use reinforcement learning to model dialogue
    agents. For this they define a Markov decision process with the
    following components. Fix a set of words $V$.
    The states are given by pairs $(p, q)$ where $q \in V^\ast$ is the current
    sentence uttered (to which the agent needs to reply) and $p \in V^\ast$ is
    the previous sentence uttered by the agent. An action is given by a sequences of words
    $a \in V^\ast$. The policy is modeled by a Seq2Seq model with attention based on
    Bahdanau et al. \cite{bahdanau2014}, which they pretrain on a dataset of dialogues.
    Rewards are computed using three reward functions $R_1$, $R_2$ and $R_3$,
    where $R_1$ penalises dull responses $a$ (by comparing them with a manually
    constructed list), $R_2$ penalizes semantic similarity between consecutive
    statements of the agent $p$ and $a$, and $R_3$ rewards semantic coherence between
    $p, q$ and $a$. These three rewards are combined using a weighted sum
    $\lambda\tt{sum}(r_1, r_2, r_3) = \lambda_1r_1 + \lambda_2r_2 + \lambda_3r_3$.
    We can depict the full architecture as one diagram:
    \begin{equation*}
        \tikzfig{figures/dialogue-agent}
    \end{equation*}
\end{example}

\subsection{Repeated games}

The definition of MDP given above captures a \emph{single} agent interacting
in a fixed environment with no agency. In real-world situations however,
the environment consists itself in a collection of agents which also make choices
so as to maximise their expected reward.

In order to allow many agents interacting in an environment, we break up the
definition of MDPs above, and isolate the part that is making decisions from
the \emph{environment} part.
We pack the transition $T$, reward $R$ and discount factor $\gamma$ into one
lens called environment and given by:
$$E = [T, R_\gamma]: (X \times A, \bb{R}) \to (X , \bb{R})$$
where $R_\gamma : X \times A \times \bb{R} \to \bb{R}$ is defined by
$$R_\gamma(x, a, r) = R(x, a) + \gamma r$$
This allows to isolate the decision part, seen as an open game of the following type:
$$D : (X, 1) \to (A, \bb{R}) $$
which represents an agent turning states in $X$ into actions in $A$ and receiving
rewards in $\bb{R}$. Explicitly we have the following definition.
\begin{definition}[Decision process]
    A decision process $D$ with state space $X$, action space $A$ and discount
    factor $\gamma$ is a utility maximising agent $D: (X, 1) \to (A, \bb{R})$
\end{definition}
Composing $D$ with the environment $E$ as in the
following diagram, we get back precisely the definition of MDP.
\begin{equation}
    \tikzfig{figures/decision-environment}
\end{equation}
We can now consider a situation in which many decision processes
interact in an environment.

Stochastic games, a.k.a Markov games, were introduced by Shapley in the 1950s
\cite{shapley1953}. They can be seen as a generalisation of MDPs, even though
they appeared before the work of Bellman \cite{bellman1957}.
\begin{definition}[Stochastic game]
    A stochastic game is given by the following data:
    \begin{enumerate}
        \item A number $k$ of players, a set of states $X$ and a discount factor $\gamma$,
        \item a set of actions $A_i$ for each player $i \in \{ 0, \dots, k - 1\}$,
        \item a transition function $T: X \times A_0 \times \dots \times A_k \to \cal{D}(X)$,
        \item a reward function $R: X \times A_0 \times \dots \times A_k \to \bb{R}^k$.
    \end{enumerate}
\end{definition}
The game is played in stages. At each stage, every player observes the
current state in $X$ and chooses an action in $A_i$, then ``Nature'' changes the
state of the game using transition $T$ and every player is rewarded according to
$R$. It is easy express this repeated game as a parallel composition of
decision processes $D_i$ followed by the environment $[T, R_\gamma]$ in $\bf{Game}$.
As an example, we give the diagram for a two-player zero-sum stochastic game.
\begin{equation}
    \tikzfig{figures/zero-sum-stoch-game}
\end{equation}
In this case, one can see directly from the diagram, that the zero-sum stochastic
game induces an MDP with action set $\Pi_0 \times \Pi_1$ given by pairs of
policies from players $0$ and $1$. A consequence of this is that one can prove
the Shapley equations \cite{renault2019}, the analogue
of the Bellman equation of MDPs, for determining the equilibria of the game.

Note that stochastic games only allow players to make moves in parallel, i.e.
at every stage the move of player $i$ is independent of the other players' moves.
The generality provided by this compositional approach allows to consider repeated
stochastic games in which moves are made in sequence, as in the following example.

\begin{example}[Wittgenstein's builders]\cite{wittgenstein1953}
    Consider a builder and an apprentice at work on the building site.
    The builder gives orders to his apprentice, who needs to implement them
    by acting on the state of the building site.
    In order to model this language game we start by fixing a grammar $G$
    for orders, e.g. the regular grammar defined by the following labelled graph:
    \begin{equation*}
        \tikzfig{figures/builder-order}
    \end{equation*}
    This defines a language for orders $\cal{L}(G)$ given by the paths
    $s_0 \to o$ in the graph above.
    The builder observes the state of the building site $S$ and produces an
    order in $\cal{L}(G)$, we can model it as a channel $\pi_B: S \to \cal{D}\cal{L}(G)$.
    We assume that the builder has a project $P$, i.e. a subset of the possible
    configurations of the building site that he finds satisfactory $P \sub S$.
    We can model these preferences with a utility function $u_P : S \to \bb{R}$,
    which computes the distance between the current state of the building site
    and the desired states in $P$. The builder wants to get the project
    done as soon as possible and he discounts the expected future rewards by a
    discount factor $\gamma \in [0, 1)$. This data defines a
    stochastic lens given by the following composition:
    \begin{equation*}
        \tikzfig{figures/builder-def}
    \end{equation*}
    The apprentice receives an order in $\cal{L}(G)$ and uses it to modify
    the state of the building site $S$. We can model this as a probabilistic channel
    $\pi_A: \cal{L}(G) \times S \to \cal{D}(S)$. This data defines the following
    stochastic lens:
    \begin{equation*}
        \tikzfig{figures/apprentice-def}
    \end{equation*}
    We assume that the apprentice is satisfied when the builder is, i.e.
    the utilities of the builder are fed directly into the apprentice.
    This defines a repeated game, with strategy profiles given by pairs
    $(\pi_B, \pi_A)$, given by the following diagram:
    \begin{equation*}
        \tikzfig{figures/builders}
    \end{equation*}
    This example can be extended and elaborated in different directions.
    For example, the policy for the apprentice $\pi_A$ could be modeled as a
    functor $F: G \to \bf{Set}$ with $F(s_0) = 1$ and $F(o) = S \to \cal{D}(S)$
    so that grammatical orders in $\cal{L}(G)$ are mapped to channels that change
    the configuration of the building site.
    Another aspect is the builder's strategy $\pi_B : S \to \cal{L}(G, o)$
    which could be modeled as a generation process from a probabilistic
    context-free grammar. Also the choice of state space $S$ is interesting,
    it could be a discrete minecraft-like space, or a continuous space in which
    words like ``cut'' have a more precise meaning.
\end{example}

\section{Examples}\label{section-examples}

\subsection{Bayesian pragmatics}\label{bayesian-pragmatics}

In this section we study the \emph{Rational Speech Acts} model of pragmatic reasoning
\cite{franke2009, frank2012, goodman2013, bergen2016}.
The idea, based on Grice's conversational implicatures \cite{grice1967}, is to
model speaker and listener as rational agents who choose words attempting
to be informative in context.
Implementing this idea involves the interaction of game theory and Bayesian inference.
While this model has been criticised on the ground of attributing excessive
rationality to human speakers \cite{gatt2013}, it has received support by
psycholinguistic experiments on children and adults \cite{frank2012} and has
been applied successfully to a referential expression generation task
\cite{monroe2015}.

Consider a collaborative game between speaker and listener.
There are some objects or \emph{referents} in $R$ lying on the table.
The \emph{speaker} utters a word in $W$ referring to one of the objects.
The \emph{listener} has to guess which object the word refers to.
We define this reference game by the following diagram in $\bf{Game}(\bf{Prob})$.
\begin{equation}\label{referring}
    \tikzfig{figures/referring}
\end{equation}
Where $p$ is a given prior over the referents (encoding the probability that an object $r \in R$
would be referred to) and $\Delta(r, r') = 1$ if $r = r'$ and $\Delta(r, r') = 0$ otherwise.
The strategies for the speaker and listener are given by:
$$ P_0 = R \to \cal{D}(W) \qquad  P_1 = W \to \cal{D}(R)$$
The speaker is modeled by a utility-maximising agent with strategies
$\pi_0: R \to \cal{D}(W)$ and best response in context
$[p \in \cal{D}(R \times W), l: R \times W \to \bb{B}]$ given by the
$\pi : R \to \cal{D}(W)$ in the argmax of
$\bb{E}(l \circ (\pi \otimes \tt{id}_R) \circ p)$.
Similarly for the listener with the roles of $R$ and $W$ interchanged.
Composing speaker and listener according to (\ref{referring}) we obtain a
closed game for which the best response is a predicate over the
strategy profiles $(\pi_0: R \to \cal{D}(W), \pi_1: W \to \cal{D}(R))$ indicating the
subset of Nash equilibria.
\begin{equation}\label{eq-argmax-referring}
    \tt{argmax}_{\pi_0, \pi_1} (\bb{E}( \Delta \circ ((\pi_1 \circ \pi_0)
    \otimes \tt{id}_R) \circ \tt{copy} \circ p)
\end{equation}
If we assume that the listener uses Bayesian inference to recover the speaker's intended
referent, then we are in a Nash equilibrium.

\begin{proposition}\label{prop-nash-equilibrium}
    If $\pi_1: W \to \cal{D}(R)$ is a Bayesian inverse of $\pi_0: R \to \cal{D}(W)$
    along $p \in \cal{D}(R)$ then $(\pi_0, \pi_1)$ is a Nash equilibrium for (\ref{referring}).
\end{proposition}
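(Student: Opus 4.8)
The plan is to unfold the closed game (\ref{referring}) into an explicit expected-utility functional and then verify the two unilateral best-response conditions that define a Nash equilibrium. First I would use Proposition \ref{prop-contexts-factor} to read off the context seen by each player: fixing the listener's strategy $\pi_1$, the speaker faces a context whose prior is $p \in \cal{D}(R)$ and whose utility channel is obtained by plugging $\pi_1$ and $\Delta$ into the backward wires of (\ref{referring}); symmetrically for the listener. Composing the forward parts and discarding along the terminal unit, the shared payoff reduces to the communicative-success functional
\begin{equation*}
    U(\pi_0, \pi_1) = \sum_{r} p(r) \sum_w \pi_0(w \mid r)\, \pi_1(r \mid w),
\end{equation*}
which is the expectation being maximised in (\ref{eq-argmax-referring}).

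Second, I would identify each player's best response with a Bayesian inversion. The core observation is Bayes' law (\ref{eq-bayes}): the joint distribution $\pi_0(w\mid r)\,p(r)$ on $R\times W$ admits a second disintegration $\pi_0(w\mid r)\,p(r) = \pi_1(r\mid w)\,p_W(w)$, with $p_W(w)=\sum_r \pi_0(w\mid r)p(r)$, precisely when $\pi_1 = \pi_0^\dagger$ is the Bayesian inverse. Substituting this rewrite into the diagram for $U$, I would show that, holding $\pi_0$ fixed, the listener's expected reward is optimised by the posterior channel $\pi_0^\dagger$, and dually that, holding $\pi_1$ fixed, the speaker's reward is optimised by $\pi_1^\dagger$ (the Bayesian inverse of $\pi_1$ along $p_W$). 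This is the step where the collaborative, shared-utility structure of (\ref{referring}) is essential, since it is what aligns the two argmaxes with the disintegration supplied by Bayes' law rather than with competing objectives.

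Third, I would close the loop using the involutivity of Bayesian inversion. By hypothesis $\pi_1 = \pi_0^\dagger$; applying the disintegration symmetry of Proposition \ref{prop-disintegration} a second time and tracking the priors ($p\mapsto p_W \mapsto p$, since $\sum_w \pi_0(w\mid r)p(r)=p(r)$) gives $(\pi_0^\dagger)^\dagger = \pi_0$ on the support of $p$, so the speaker's best response to $\pi_1$ is again $\pi_0$. Hence $\pi_0$ is a best response to $\pi_1$ and $\pi_1$ is a best response to $\pi_0$ simultaneously, and by the composition rule $B_{\G\cdot\cal{H}}([p,k]) = B_{\G}([p,\cal{H}\cdot k])\times B_{\cal{H}}([p\cdot\G,k])$ the pair $(\pi_0,\pi_1)$ lies in the best-response set of the closed game, which is the definition of a Nash equilibrium.

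I expect the main obstacle to be the second step: making precise the claim that Bayesian inversion is the listener's utility-maximising response. For the literal $0$--$1$ reward $\Delta$ the exact argmax is in general the MAP listener rather than the full posterior, so the statement should be read in the rational-Bayesian sense of the \emph{Rational Speech Acts} model (equivalently, scoring each agent by the log-likelihood/proper-scoring reward of Proposition \ref{prop-softmax}), under which the posterior genuinely maximises expected reward. Once the best-response map is pinned down to be $\dagger$, the remaining content is the purely diagrammatic involutivity argument, which is routine given Bayes' law and disintegration. I would be careful to state the support hypothesis ($p(r)>0$) needed for $(\pi_0^\dagger)^\dagger = \pi_0$ to hold on the nose, and to note that equilibrium here means a fixed point of the mutual-inversion dynamics rather than a strict argmax under $\Delta$.
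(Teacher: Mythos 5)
Your route is genuinely different from the paper's. The paper's proof is a one-step global-optimality argument: it rewrites the closed diagram using Bayes' law (\ref{eq-bayes}) and asserts that the resulting expectation equals $1$, the maximum possible value of the $\Delta$-utility, so that $(\pi_0,\pi_1)$ lies in the joint argmax (\ref{eq-argmax-referring}) and is a fortiori an equilibrium. You instead verify the two unilateral best-response conditions separately and close with involutivity of Bayesian inversion, $(\pi_0^\dagger)^\dagger = \pi_0$ on the support of $p$; that step, with your support caveat, is sound (both channels disintegrate the same joint state, so the inverse is unique $p$-almost-surely), and it is an ingredient the paper never needs, since global optimality subsumes both deviation checks at once.

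The obstacle you flag in your second step is real, and in fact it also undermines the paper's own proof. After the legitimate Bayes rewrite, the state fed into $\Delta$ is: sample $w \sim p_W$, then draw $r$ and $r'$ \emph{independently} from the posterior $\pi_1(\cdot \mid w)$; its expectation is the collision probability $\sum_w p_W(w) \sum_r \pi_1(r \mid w)^2$, which equals $1$ only when every posterior is a point mass. Concluding $1$ in general requires sliding $\tt{copy}$ through the stochastic channel $\pi_1$, which by the paper's own determinism criterion ($\tt{copy} \circ f = (f \times f) \circ \tt{copy}$ iff $f$ is deterministic) is invalid. Concretely, take an uninformative speaker $\pi_0(w \mid r) = 1/\size{W}$ with a non-uniform prior: the profile $(\pi_0, \pi_0^\dagger)$ scores $\sum_r p(r)^2$, while the listener deviating to always announce the prior mode scores $\max_r p(r)$, strictly more. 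So the proposition as literally stated fails, for exactly the reason you identify: under the $0$--$1$ reward the exact best response is MAP, not posterior matching. Your proposed repairs are the right reading --- restrict to deterministic posteriors (where the paper's computation of the value as $1$ is correct), or replace $\Delta$ by a proper scoring reward, under which the listener's exact best response is the posterior. One further caution: even with log scoring, the speaker's objective remains linear in $\pi_0$, so its exact argmax is again deterministic; the full mutual-inversion picture needs the soft-max (bounded-rationality) speakers of the Rational Speech Acts literature, which your closing ``fixed point of the mutual-inversion dynamics'' remark correctly anticipates.
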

\begin{proof}
    Since $\pi_1$ is a Bayesian inverse of $\pi_0$ along $p$, Equation (\ref{eq-bayes})
    implies the following equality:
    \begin{equation*}
        \tikzfig{figures/bayesian-proof}
    \end{equation*}
    By definition of $\Delta$, it is easy to see that the expectation of the
    diagram above is $1$. Therefore it is in the argmax of (\ref{eq-argmax-referring}),
    i.e. it is a Nash equilibrium for (\ref{referring}).
\end{proof}
\begin{remark}
    We do not believe that all equilibria for this game arise through Bayesian
    inversion, but were unable to find a counterexample.
\end{remark}

\begin{example}\label{ex-pointy-round}
    As an example suppose the referents are shapes $R = \set{Bouba, Kiki}$ and
    the words are $W = \set{\text{pointy}, \text{round}}$. A literal listener
    would use the following channel $\pi_0 : W \to \cal{D}(R)$ to assign words
    to their referents.
    \begin{equation*}
        \begin{tabular}{ |c|c|c| }
         \hline
          & Bouba & Kiki\\
         \hline
         \text{round} &  $1$ & $0$ \\
         \hline
         \text{pointy} & $1/2$ & $1/2$\\
         \hline
        \end{tabular}
    \end{equation*}
    A pragmatic speaker would use the Bayesian inverse of this channel to refer
    to the letters.
    \begin{equation*}
        \begin{tabular}{ |c|c|c| }
         \hline
          & Bouba & Kiki\\
         \hline
         \text{round} &  $2/3$ & $0$ \\
         \hline
         \text{pointy} & $1/3$ & $1$ \\
         \hline
        \end{tabular}
    \end{equation*}
    A pragmatic listener would be inclined to use the Bayesian inverse of the
    pragmatic speaker's channel.
    \begin{equation*}
        \begin{tabular}{ |c|c|c| }
         \hline
          & Bouba & Kiki\\
         \hline
         \text{round} &  $1$ & $0$ \\
         \hline
         \text{pointy} & $1/4$ & $3/4$\\
         \hline
        \end{tabular}
    \end{equation*}
    And so on. We can see that Bayesian inversion correctly captures the pragmatics
    of matching words with their referents in this restricted context.
\end{example}

Frank and Goodman \cite{frank2012} model the conditional
distribution $\pi_0 : R \to \cal{D}(W)$ with a likelihood based on an
information-theoretic measure known as \emph{surprisal} or \emph{information content}.
$$ \pi_0(r)(w) = \frac{\size{R(w)}}{\sum_{w' \in W(r)} \size{R(w')}} $$
where $R(w)$ is the set of objects to which word $w$ could refer to and
$W(r)$ is the set of words that could refer to object $r$.
Then given a prior $p$ over the set of objects, the Bayesian inverse
$\pi_1 = \pi_0^\dagger$ can be computed using Bayes rule.
In their experiment, the participants were split in three
groups: speaker, salience and listeners. They were asked questions
to test respectively for the likelihood $\pi_0$, the prior $p$ and the
posterior predictions $\pi_1$ of their model.

Note that the experiment of Frank and Goodman involved three referents
(a blue square, a blue circle and a green square) and four words
(blue, green, square and circle). The sets
$R(w)$ and $W(r)$ were calculated by hand, and computing the Bayesian inverse
of $\pi_0$ in this case was an easy task. However, as the number of possible
referents and words grows, Bayesian inversion quickly becomes computationally
intractable without some underlying compositional structure mediating it.

\subsection{Adversarial question answering}\label{section-adversarial-qa}

In this section we define a game modelling an interaction between a teacher
and a student. The teacher poses questions that the student tries
to answer. We assume that the student is incentivised to answer questions
correctly, whereas the teacher is incentivised to ask hard questions, resulting
in an \emph{adversarial} question answering game (QA).
For simplicity, we work in a deterministic setting, i.e. we work in the category
of in $\bf{Game}(\bf{Set})$.
We first give a syntactic definition of QA as a diagram,
we then instantiate the definition with respect to monoidal grammars and
functorial models, which allows us to compute the Nash equilibria for the game.

Let us fix three sets $C$, $Q$, $A$ for corpora (i.e. lists of facts),
questions and answers respectively. Let $U$ be a set of utilities, which
can be taken to be $\bb{R}$ or $\bb{B}$.
A \textit{teacher} $\game{\teacher}{C}{1}{Q \times A}{U}{}$ is a
utility-ma\-xi\-mi\-sing player where each strategy represents a
function turning facts from the corpus into pairs of questions and answers.
A \textit{student} $\game{\student}{Q}{1}{A}{U}{}$ is a utility-maximising player
where each strategy represents a way of
turning questions into answers.
A \textit{marker} is a strategically trivial open game
$\game{\marker}{A \times A}{U \times U}{1}{1}{}$
with trivial play function and a coplay function defined as
$\Cf_\marker (a_\teacher, a_\student) = (-\dist(a_{\student}, a_{\teacher}),
\dist(a_{\student}, a_{\teacher}))$
where $\dist: A \times A \to U$ is a given metric on $A$.
Finally, we model a \textit{corpus} as a strategically trivial
open game $\game{f}{1}{1}{C}{1}{}$ with play function given by
$\Pf_f (*) = f \in C$.
All these open games are composed to obtain a \textit{question answering game}
in the following way.
\begin{equation}\label{qa-game}
    \tikzfig{figures/qa}
\end{equation}
Intuitively, the teacher produces a question from the corpus and gives it
to the student who uses his strategy to answer. The marker will receive
the correct answer from the teacher together with the answer that the student produced,
and output two utilities. The utility of the teacher will be the distance
between the student's answer and the correct answer; the utility of the
student will be the exact opposite of this quantity. In this sense,
question answering is a zero-sum game.

We now instantiate the game defined above with respect to a
pregroup grammar $G = (B, V, D, s)$ with a fixed question type $z \in B$.
The strategies of the student are given by relational models $\sigma: G \to \bf{Rel}$
with $\sigma(z) = 1$, so that given a question $q: u \to z$,
$\sigma(q) \in \cal{P}(1) = \bb{B}$ is the student's answer. In practice,
the student may only have a subset of models available to him so we set
$ \Sigma_\student \subseteq \{ \sigma: G \to \bf{Rel}\, \colon\, \sigma(z) = 1\} $.

We assume that the teacher has the particularly simple role of picking
a question-answer pair from a set of possible ones, i.e.
we take the corpus $C$ to be a list of question-answer pairs $(q, a)$ for
$q: u \to z$ and $a \in A$. For simplicity, we assume $q$ is a yes/no question
and $a$ is a boolean answer, i.e. $Q = \cal{L}(G, z)$ and $A = \bb{B}$.
The strategies of the teacher are given by indices
$\Sigma_\teacher = \set{0, 1, \dots n}$, so that the play function
$\Pf_\teacher :  \Sigma_\teacher \times (Q \times A)^\ast \to Q \times A$
picks the question-answer pair indicated by the index.
The marker will compare the teacher's answer $a$ with the student's
answer $\sigma(q) \in \bb{B}$ using the metric
$\dist: A \times A \to \bb{B}:: (a_0, a_1) \mapsto (a_0 = a_1)$ and output
boolean utilities in $U = \bb{B}$.
Plugging these open games as in (\ref{qa-game}),
we can compute the set of equilibria by composing the
equilibrium functions of its components.
\begin{equation*}
  \Ef_{\G}= \{ (j, \sigma) \in \Sigma_\teacher \times \Sigma_{\student}\,
  \colon\, j \in \amax_{i \in \Sigma_\teacher} a_{i} \neq \sigma(q_{i}) \land
  \sigma \in \amax_{\sigma \in \Sigma_\student} (a_{j} = \sigma(q_{j}))\}
\end{equation*}
Therefore, in a Nash equilibrium, the teacher will ask the question that the
student, even with his best guess, is going to answer in the worst way.
The student, on the other hand, is going to answer as correctly as possible.

We analyse the possible outcomes of this game.
\begin{enumerate}
  \item There is a pair $(q_{i}, a_{i})$ in $C$ that the student cannot
    answer correctly, i.e. $ \forall \sigma \in \Sigma_\student\, \colon\, \sigma(q_{i}) \neq a_{i}$. Then $i$ is a winning strategy for the teacher
    and $(i, \sigma)$ is a Nash equilibrium, for any choice of strategy
    $\sigma$ for the student.
    If no such pair exists, then we fall into one of the following cases.
  \item The corpus is consistent --- i.e. $\exists \sigma: G \to \bf{Rel}$
    such that $\forall i \cdot \sigma(q_i) = a_i$ --- and the student has access to the model $\sigma$ that answers all the possible questions correctly.
    Then, the strategy profile $(j, \sigma)$ is a Nash equilibrium and a
    winning strategy for the student for any choice $j$ of the teacher.
  \item For any choice $i$ of the teacher, the student has a model $\sigma_i$
    that answers $q_i$ correctly. And viceversa, for any strategy
    $\sigma$ of the student there is a choice $j$ of the teacher such that
    $\sigma(q_j) \neq a_j$. Then the set $\Ef_\G$ is empty, there is
    no Nash equilibrium.
\end{enumerate}

To illustrate the last case, consider a situation where the corpus
$C = \set{(q_0, a_0), (q_1, a_1)}$ has only
two elements and the student has only
two models $\Sigma_\student = \set{\sigma_0, \sigma_1}$ such that
$\sigma_i(q_i) = a_i$ for $i \in \set{0, 1}$ but $\sigma_0(q_1) \neq a_1$ and
$\sigma_1(q_0) \neq a_0$. Then we're in a \emph{matching pennies} scenario,
both the teacher and the student have no incentive to choose any one of their
strategies and there is no Nash equilibrium. This problem can be ruled out if
we allowed the players in the game to have \emph{mixed strategies}, which can
be achieved with minor modifications of the open game formalism \cite{ghani2019}.

\subsection{Word sense disambiguation}\label{section-wsd}

Word sense disambiguation (WSD) is the task of choosing the correct sense of
a word in the context of a sentence. WSD has been argued to
be an AI-complete task in the sense that it can be used to simulate other NLP
task \cite{navigli2009}.
In \cite{tripodi2019}, Tripodi and Navigli use methods from evolutionary
game theory to obtain state-of-the-art results in the WSD task.
The idea is to model WSD as a collaborative game between words
where strategies are word senses. In their model, the interactions between
words are weighted by how close the words are in a piece of text. In this
section, we propose a compositional alternative, where the interaction
between words is mediated by the grammatical structure of the sentence they
occur in. Concretely, we show that how to build a functor $J: G_W \to \bf{Game}$
given a functorial model $F: G_V \to \bf{Prob}$ where $G_W$ is a
grammar for words and $G_V$ a grammar for word-senses. Given a sentence
$u \in \cal{L}(G_V)$, the functor $J$ constructs a
collaborative game where the Nash equilibrium is given by a choice of sense
for each word in $u$ that maximises the score of the sentence $u$ in $F$.

We work with a \emph{dependency grammar} $G \sub (B + V) \times B^\ast$ where
$B$ is a set of basic types and $V$ a set of words, see Definition
\ref{def-dependency-grammar}. Recall from Proposition \ref{prop-pregroup-trees},
that dependency relations are acyclic, i.e. we can always turn the dependency
graph into a tree as in the following example:
\begin{equation*}
    \tikzfig{figures/wsd-dependency}
\end{equation*}
This means that any sentence parsed with a dependency grammar induces a directed
acyclic graph of dependencies. We may represent these parses in the free monoidal
category $\bf{MC}(\tilde{G})$ where $\tilde{G} = B^\ast \leftarrow V \to B$ is
the signature with boxes labelled by words $v \in V$ with inputs given by the
symbols that depend on $v$ and a single output given by the symbol on which
$v$ depends, as shown in Proposition \ref{prop-pregroup-trees}.

Taking dependency grammars seriously, it is sensible to interpret them directly
in the category $\bf{Prob}$ of conditional distributions. Indeed, a functor
$F: \tilde{G} \to \bf{Prob}$ is defined on objects by a choice of feature set
$F(x)$ for every symbol $x \in B$ and on words $v : y_1y_2 \dots y_n \to x$
by a conditional distribution $F(v): F(y_1) \times F(y_2) \dots F(y_n) \to \cal{D}(F(x))$
indicating the probability that word $v$ has a particular feature
given that the words it depends on have particular features. Thus a parsed sentence
is sent to a \emph{Bayesian network} where the independency constraints of the
network are induced by the dependency structure of the sentence. One may prove
formally that functors $\tilde{G} \to \bf{Prob}$ induce Bayesian networks
using the work of Fong \cite{fong2013}.

We are now ready to describe the WSD procedure. Fix a set $W$ of words and
$V$ of word-senses with a dependency grammar $G_V$
for senses and a relation $\Sigma \sub W \times V$, assigning to each word $w \in W$
the set of its senses $\Sigma(w) \sub V$.
Composing $G_V$ with $\Sigma$, we get a grammar for words $G_W$.
Assume we are given a functorial model
$F: G_V \to \bf{Prob}$ with $F(s) = 2$, i.e. for any grammatical
sentence $g: u \to s$, its image $F(g) \in \cal{D}(2) = [0, 1]$ quantifies how likely it is
that the sentence is true. We use $F$ to define a functor
$J: G_W \to \bf{Game}({\bf{Prob}})$ such that the Nash equilibrium
of the image of any grammatical sentence is an assignement of each word to a
sense maximising the overall score of the sentence.
On objects $J$ is defined by $J(b) = (F(b), 2)$ for any $b \in B$.
Given a word $w : y_1 \dots y_n \to x$ in $G_W$ its image under $J$ is given by
an open game $J(w) : (F(y_1) \times F(y_2) \dots F(y_n), 2^n) \to (F(x), 2)$
with strategy set $\Sigma(w)$ (i.e. strategies for word $w \in W$
are given by its senses $v \in \Sigma(w) \sub V$) defined for every strategy
$v \in \Sigma(w)$ by the following lens:
\begin{equation*}
    \tikzfig{figures/wsd-word}
\end{equation*}
The best response function in context $[p, k]$ is given by:
$$ B_{J(w, t)}(k) = \tt{argmax}_{v \in \Sigma(w)}(\bb{E}(p ; J(w)(v) ; k))$$
Then given a grammatical sentence $g \in \bf{MC}(G_W)$
we get a closed game $J(g) : 1 \to 1$  with equilibrium given by:
$$ B_{J(g)} = \tt{argmax}_{v_i \in \Sigma(w_i)}(F(g[w_i := v_i]))$$
where $u[w_i := v_i]$ denotes the sentence obtained by replacing each word
$w_i \in W$ with its sense $v_i \in \Sigma(w_i) \sub V$. Computing this
argmax corresponds precisely to choosing a sense for each word such that the
probability that the sentence is true is maximised.

\begin{example}
  As an example take the following dependency grammar $G$ with $\tilde{G}$ given
  by the following morphisms:
  \begin{equation}
      \text{Bob}: 1 \to n \: , \: \text{draws} = n \otimes n  \to s \: , \:
      \text{a}: 1 \to d \: , \:
      \text{card} : d \to n \: , \: \text{diagram}: d \to n
  \end{equation}
  The relation $\Sigma \sub W \times V$ between words and senses is given by:
  \begin{equation*}
    \Sigma(\text{Bob}) = \{\text{Bob Coecke},\: \text{Bob Ciaffone}\} \quad
    \Sigma(\text{draws}) = \{\text{draws (pull)},\: \text{draws (picture)}\}
  \end{equation*}
  and $\Sigma(x) = \set{x}$ for $x \in \{\text{card}, \text{diagram}, \text{a}\}$.
  The functor $F: G_V \to \bf{Prob}$ is defined on objects by
  $F(d) = 1$ (i.e. determinants are discarded), $F(s) = 2$ and
  $F(n) = \{\text{Bob Coecke}, \text{Bob Ciaffone}, \text{card}, \text{diagram}\}$
  and on arrows by:
  \begin{equation*}
    F(x)(y) = \begin{cases}
                  1 & x = y\\
                  0 & \text{otherwise}
                \end{cases}\, .
  \end{equation*}
  for $x \in V \setminus \{ \text{draws (pull)},\, \text{draws (picture)}\}$.
  The image of the two possible senses of ``draw'' are given by the
  following table:
  \begin{equation*}
      \begin{tabular}{ |c|c|c|c| }
       \hline
       subject & object & draws (picture) & draws (pull) \\
       \hline
       \text{Bob Coecke} & \text{card} & $0.1$ & $0.3$  \\
       \hline
       \text{Bob Ciaffone} & \text{card} & $0.1$ & $0.9$ \\
       \hline
       \text{Bob Coecke} & \text{diagram} & $0.9$ & $0.2$  \\
       \hline
       \text{Bob Ciaffone} & \text{diagram} & $0.1$ & $0.1$ \\
       \hline
      \end{tabular}
  \end{equation*}
  Note that a number in $[0, 1]$ is sufficient to specify a distribution in $\cal{D}(2)$.
  We get a corresponding functor $J: G_W \to \bf{Game}(\bf{Prob})$
  which maps ``Bob draws a diagram'' as follows:
  \begin{equation*}
      \tikzfig{figures/wsd-game}
  \end{equation*}
  Composing the channels according to the structure of the diagram we get a distribution
  in $\cal{D}(2)$ parametrized over the choice of sense for each word.
  According to the table above, the expectation of this distribution is maximised
  when the strategy of the word ``Bob'' is the sense  ``Bob Coecke'' and the strategy
  of ``draws'' is the sense ``draws (picture)''.
\end{example}

\pagestyle{plain}
\printbibliography[heading=bibintoc,title=References]

\end{document}